\crefname{proposition}{Prop.}{Props.}
\newcommand{\myref}[2]{\hyperref[#2]{#1}}
\pgfplotsset{compat=1.17}
\def\identityfn(#1){#1}
\def\sigmoidfn(#1){1/(1 + exp(-#1))}
\def\tanhfn(#1){(exp(#1) - exp(-#1))/(exp(#1) + exp(-#1))}
\def\relufn(#1){max(0, #1)}
\def\leakyrelufn(#1){max(0.01 * #1, #1)}
\def\gelufn(#1){#1 * \sigmoidfn(1.702 * #1)}
\NewDocumentCommand{\scaletikzfig}{ O{1} O{1} m }{
  \begin{center}
     \tikzsetnextfilename{#3}
     \scalefont{#2}\scalebox{#1}{\tikzfig{#3}}
   \end{center}
}
\NewDocumentCommand{\scaletikzfignocenter}{ O{1} O{1} m }{
    \tikzsetnextfilename{#3}
    \scalefont{#2}\scalebox{#1}{\tikzfig{#3}}
}
\newcounter{todocounter}
\newcommand{\forlater}[2][]{
}
\tikzstyle{Medium box}=[fill=white, draw=black, shape=rectangle, tikzit shape=rectangle, minimum width=1.5cm, minimum height=1.5cm]
\tikzstyle{Large box}=[fill=white, draw=black, shape=rectangle, minimum width=5cm, minimum height=5cm]
\tikzstyle{small}=[fill=white, draw=black, shape=rectangle, minimum width=1cm, minimum height=0.5]
\tikzstyle{loss}=[fill=white, draw=black, shape=rectangle, minimum width=1cm, minimum height=3.5cm]
\tikzstyle{Medium}=[fill=white, draw=green, shape=rectangle, minimum width=1.5cm, minimum height=1.5cm, line width=0.8]
\tikzstyle{blue}=[fill=white, draw=blue, shape=rectangle, minimum width=1.5cm, minimum height=1.5cm]
\tikzstyle{red}=[fill=white, draw=red, shape=rectangle, minimum width=1.5cm, minimum height=1.5cm]
\tikzstyle{Long}=[fill=white, draw=black, shape=rectangle, minimum height=4.5cm, minimum width=1.5cm]
\tikzstyle{blackcircle}=[fill=black, draw=black, shape=circle, minimum width=0.2cm, inner sep=0pt]
\tikzstyle{Small box}=[fill=white, draw=black, shape=rectangle, minimum height=0.5cm, minimum width=0.5cm]
\tikzstyle{elongated}=[fill=white, draw=black, shape=rectangle, minimum height=1cm, minimum width=3cm]
\tikzstyle{circ}=[fill=white, draw=black, shape=circle]
\tikzstyle{blank}=[fill=white, draw=white, shape=circle]
\tikzstyle{none}=[fill=none, draw=none]
\tikzstyle{copy}=[fill=white, draw=black, shape=circle, minimum height=0.2cm, inner sep=0]
\tikzstyle{varCopy}=[fill=black, draw=black, shape=circle, minimum height=0.2cm, inner sep=0]
\tikzstyle{copy2}=[fill=black, draw=black, shape=circle, minimum height=0.2cm, inner sep=0]
\tikzstyle{1morph1}=[fill=white, draw=black, shape=rectangle, minimum width=1cm, minimum height=1cm]
\tikzstyle{1morph}=[fill=white, draw=black, shape=rectangle, minimum width=0.75cm, minimum height=0.75cm, inner sep=0.1cm]
\tikzstyle{2morph2}=[fill=white, draw=black, shape=rectangle, minimum width=1cm, minimum height=2cm]
\tikzstyle{2morph}=[fill=white, draw=black, shape=rectangle, minimum width=1cm, minimum height=1.25cm, inner sep=0.1cm]
\tikzstyle{nmorph}=[fill=white, draw=black, shape=rectangle, minimum height=6cm, minimum width=1cm, inner sep=0.1cm]
\tikzstyle{1state}=[fill=white, draw=black, regular polygon, regular polygon sides=3, minimum height=0.5cm, regular polygon rotate=-30]
\tikzstyle{dbox}=[fill=white, draw=black, dashed, shape=rectangle, minimum width=2cm, minimum height=1cm, inner sep=0.1cm]
\tikzstyle{vdbox}=[fill=white, draw=black, dashed, shape=rectangle, minimum width=2cm, minimum height=1.5cm, inner sep=0.1cm]
\tikzstyle{bigbox}=[fill=white, draw=black, dashed, shape=rectangle, minimum width=2cm, minimum height=4cm, inner sep=0.1cm]
\tikzstyle{2state}=[inner sep=0.05cm, fill=white, draw=black, isosceles triangle, minimum width=1.25cm, isosceles triangle apex angle=90, shape border rotate=180]
\tikzstyle{var2state}=[inner sep=0.05cm, fill=white, draw=black, isosceles triangle, minimum width=1.25cm, isosceles triangle apex angle=60, shape border rotate=180]
\tikzstyle{g2state}=[inner sep=0.05cm, fill=white, draw=black, isosceles triangle, minimum width=6cm, isosceles triangle apex angle=110, shape border rotate=180]
\tikzstyle{bigstate}=[inner sep=0.05cm, fill=white, draw=black, isosceles triangle, minimum width=3cm, isosceles triangle apex angle=110, shape border rotate=180]
\tikzstyle{bigeffect}=[inner sep=0.05cm, fill=white, draw=black, isosceles triangle, minimum width=3cm, isosceles triangle apex angle=110]
\tikzstyle{g2effect}=[inner sep=0.05cm, fill=white, draw=black, isosceles triangle, minimum width=6cm, isosceles triangle apex angle=110]
\tikzstyle{2effect}=[inner sep=0.05cm, fill=white, draw=black, isosceles triangle, minimum width=1.25cm, isosceles triangle apex angle=90]
\tikzstyle{b2effect}=[inner sep=0.05cm, fill=white, draw=black, isosceles triangle, minimum width=2cm, isosceles triangle apex angle=90]
\tikzstyle{node}=[fill=black, draw=black, shape=circle, scale=0.5]
\tikzstyle{GroundLeft}=[fill=white, draw=black, shape=tlground, rotate=-90]
\tikzstyle{GroundRight}=[fill=white, draw=black, shape=tlground, rotate=90]
\tikzstyle{GroundDown}=[fill=white, draw=black, shape=tlground]
\tikzstyle{GroundUp}=[fill=white, draw=black, shape=tlground, rotate=180]
\tikzstyle{Black arrow}=[->]
\tikzstyle{Red line}=[-, draw=red, line width=0.7]
\tikzstyle{Red arrow}=[draw=red, ->]
\tikzstyle{Gray line}=[-, draw={rgb,255: red,191; green,191; blue,191}, line width=0.8]
\tikzstyle{Gray arrow}=[->, draw={rgb,255: red,191; green,191; blue,191}]
\tikzstyle{Blue line}=[-, draw=blue]
\tikzstyle{Blue arrow}=[->, draw=blue]
\tikzstyle{bluearrow}=[->, fill=none, draw={rgb,255: red,29; green,206; blue,255}, thick]
\tikzstyle{midArrow}=[-, decoration={{markings,mark=at position .5 with {\arrow{>}}}}, postaction=decorate]
\tikzstyle{arrow}=[->]
\tikzstyle{pointy}=[->]
\tikzstyle{lightnone}=[-, draw={rgb,255: red,191; green,191; blue,191}]
\tikzstyle{Filled edge shape}=[-, fill=white, draw=black]
\tikzstyle{Thin Grey Arrow}=[->, draw={rgb,255: red,191; green,191; blue,191}, line width=0.3]
\tikzstyle{Thin Grey Line}=[-, draw={rgb,255: red,191; green,191; blue,191}, line width=0.3]
\tikzstyle{Black arrow crossing over}=[->, decoration={{crossing over}}]
\providecommand\newthought[1]{%
   \addvspace{1.0\baselineskip plus 0.5ex minus 0.2ex}%
   \noindent\textsc{#1} %
}
\newtcolorbox[auto counter,number within=section,crefname={Box}{Boxes}]{mybox}[3][]
{
  fonttitle=\bfseries\selectfont, %
  colframe = #2!40!black,
  colback  = #2!10,
  coltitle = white,  
  title    = {#3 \hfill Box~\thetcbcounter},
  #1,
}
\theoremstyle{definition}
\newtheorem{definition}[equation]{Definition}
\newtheorem{remark}[equation]{Remark}
\newtheorem{example}[equation]{Example}
\newtheorem{notation}[equation]{Notation}
\theoremstyle{plain}
\newtheorem{corollary}[equation]{Corollary}
\newtheorem{lemma}[equation]{Lemma}
\newtheorem{proposition}[equation]{Proposition}
\newtheorem{theorem}[equation]{Theorem}
\newtheorem{conjecture}[equation]{Conjecture}
\newtheorem*{contributions}{Contributions}
\newtheorem*{epistemicstatus}{Epistemic status}
\newtheorem*{cthelp}{Avenues for exploration}
\newenvironment{tightcenter}{%
  \setlength\topsep{0pt}
  \setlength\parskip{0pt}
  \begin{center}
}{%
  \end{center}
}
\NewDocumentEnvironment{tikzcddiag}{O{}}{
  \tikzexternaldisable
  \begin{tikzcd}[#1]
  }{
  \end{tikzcd}
  \tikzexternalenable
}
\newcommand{\newdef}[1]{\textbf{#1}}
\newcommand{\Z}{\mathbb{Z}}
\newcommand{\R}{\mathbb{R}}
\newcommand{\N}{\mathbb{N}}
\newcommand{\dsh}{\textbf{-}}
\newcommand{\newcommandLink}[3]{%
  \expandafter\newcommand\csname #1\endcsname[1][true]{%
    \ifstrequal{##1}{true}{%
      \myref{#2}{#3}
    }{%
      #2
    }%
  }%
}
\newcommand{\NamedCat}[1]{\mathbf{#1}}
\newcommand{\NamedFunctor}[1]{\mathsf{#1}}
\newcommand{\Iso}{\NamedFunctor{Iso}}
\newcommand{\Epi}{\NamedFunctor{Epi}}
\newcommand{\Core}{\NamedFunctor{Core}}
\newcommand{\Gr}{\underset{\circlearrowright}{\NamedCat{Gr}}}
\newcommand{\GrC}{\underset{\circlearrowleft}{\NamedCat{Gr}}}
\newcommand{\MonCat}{\NamedCat{MonCat}}
\newcommand{\TwoCat}{\textbf{2}\NamedCat{Cat}}
\newcommand{\LxDbl}{\NamedCat{LxDbl}}
\newcommand{\CatInit}{\NamedCat{0}}
\newcommand{\CMon}{\NamedCat{CMon}}
\newcommand{\Learn}{\NamedCat{Learn}}
\newcommand{\Chart}{\NamedCat{Chart}}
\newcommand{\Cont}{\NamedCat{Cont}}
\newcommand{\DChart}{\NamedCat{DChart}}
\newcommand{\DChartA}{\NamedCat{DChart}_A}
\newcommand{\TwoOptic}{\NamedCat{2Optic}}
\newcommand{\Alg}{\NamedCat{Alg}}
\newcommand{\coAlg}{\NamedCat{Coalg}}
\newcommand{\Act}{\NamedCat{Act}}
\newcommand{\asc}[1]{\myref{\CMon(}{def:additive_maps}#1\myref{)}{def:additive_maps}} %
\newcommand{\enrichedbasechange}[1]{{#1}\myref{_*}{def:enriched_base_change}}
\newcommand{\CoparaWn}[1]{\NamedCat{coPara}^{#1}}
\newcommand{\CoparaW}[2]{\myref{\CoparaWn{#1}_{#2}}{def:coparaw}}
\newcommand{\compPara}[1]{\, {#1}^{\myref{\mathfrak{p}}{subsec:notation}}\,}
\newcommand{\compCopara}[1]{\, {#1}_{\myref{\mathfrak{p}}{subsec:notation}}\,}
\newcommand{\name}[1]{\myref{\lceil}{subsec:notation} #1 \myref{\rceil}{subsec:notation}}
\newcommand{\Hom}{\mathsf{Hom}}
\newcommand{\id}{\text{id}}
\newcommand{\comp}{\fatsemi}
\newcommand{\const}[1]{\mathsf{const}({#1})}
\newcommand{\List}[1]{\mathsf{List}_{#1}}
\newcommand{\BTree}[1]{\mathsf{BTree}_{#1}}
\newcommand{\RC}{\mathbf{R}_{\cC}}
\newcommand{\gda}{\mathsf{gda}}
\newcommand{\GAN}{\mathsf{GAN}}
\newcommand{\XOR}{\mathsf{XOR}}
\newcommand{\loss}{\mathsf{loss}}
\newcommand{\predicted}{y_p}
\newcommand{\groundtruth}{y_t}
\newcommand{\mil}{x} %
\newcommand{\mol}{y} %
\newcommand{\mi}{X} %
\newcommand{\mo}{Y} %
\newcommand{\lo}{L} %
\newcommand{\update}{\mathsf{update}}
\newcommand{\internalHom}[3]{\underbar{$#1$}(#2, #3)}
\DeclareDocumentCommand \internalBracket { o m m } {
  \IfNoValueTF {#1} {
    [#2, #3]
  }{
    [#2, #3]_{#1}
  }
}
\newcommand{\co}{\mathsf{co}}
\newcommand{\swap}{\mathsf{swap}}
\newcommand{\limm}{\mathsf{lim}} %
\newcommand{\colim}{\mathsf{colim}}
\newcommand{\B}{\mathbf{B}}
\newcommand{\pibc}{{\enrichedbasechange{\conncomp}}}
\newcommand{\act}{\bullet}
\newcommand{\actt}{\boxtimes}
\newcommand{\actttt}{\raisebox{0.1ex}{\scalebox{0.9}{$\scriptscriptstyle \blacksquare$}}}
\newcommand{\acttttt}{\circledast}
\newcommand{\actfw}{\act} %
\newcommand{\actbw}{\actttt} %
\newcommand{\actc}{\boxtimes} %
\newcommand{\actlast}{\acttttt}
\newcommand{\diset}[2]{\left({{#1} \atop {#2}}\right)}
\newcommand{\pr}[2]{\begin{matrix}{#1} \\ {#2} \end{matrix}} 
\newcommand{\OpticHom}[4]{\Bigg(\pr{#1}{#2} \, , \pr{#3}{#4}\Bigg)}
\newcommand{\Po}{\Para_{\actlast}(\Optic^W_{\diset{\actfw}{\actbw}})}
\newcommand{\Pon}{\Para(\Optic)}
\newcommand{\key}{\mathsf{key}}
\newcommand{\seq}{\mathsf{seq}}
\newcommand{\val}{\mathsf{val}}
\newcommand{\relu}{\mathsf{ReLU}}
\newcommand{\leakyrelu}{\mathsf{LeakyReLU}}
\newcommand{\gelu}{\mathsf{GELU}}
\newcommand{\TypeOne}{\textbf{Type 1 }}
\newcommand{\TypeTwo}{\textbf{Type 2 }}
\title{Fundamental Components\\
  of Deep Learning\\
  \Large{A category-theoretic approach} \\ PhD Thesis}
\author{Bruno Gavranovi\'c
  \\ \small Mathematically Structured Programming Group\\[-0.8ex]
  \small Computer and Information Sciences\\[-0.8ex]
  \small University of Strathclyde, Glasgow
}
\begin{document}

  \frontmatter

  \maketitle
  
\topskip0pt
\vspace*{\fill}
\noindent
\begin{quote}
	\centering
	This thesis is the result of the author's original research. It has been composed by the author and has not been previously submitted for examination which has led to the award of a degree. \\[5pt]
	The copyright of this thesis belongs to the author under the terms of the United Kingdom Copyright Acts as qualified by University of Strathclyde Regulation 3.50. Due acknowledgement must always be made of the use of any material contained in, or derived from, this thesis. \\[5pt]
\end{quote}
\vspace*{\fill}

  \chapter*{Abstract}

  Deep learning, despite its remarkable achievements, is still a young field.
  Like the early stages of many scientific disciplines, it is marked by the discovery of new phenomena, ad-hoc design decisions, and the lack of a uniform and compositional mathematical foundation.
  From the intricacies of the implementation of backpropagation, through a growing zoo of neural network architectures, to the new and poorly understood phenomena such as double descent, scaling laws or in-context learning, there are few unifying principles in deep learning.

  This thesis develops a novel mathematical foundation for deep learning based on the language of category theory.
  We develop a new framework that is a) end-to-end, b) unform, and c) not merely descriptive, but prescriptive, meaning it is amenable to direct implementation in programming languages with sufficient features.
  We also systematise many existing approaches, placing many existing constructions and concepts from the literature under the same umbrella.

  In Part I, the theory, we identify and model two main properties of deep learning systems: they are \emph{parametric} and \emph{bidirectional}.
  We expand on the previously defined construction of actegories and $\Para[false]$ to study the former, and define \emph{weighted optics} to study the latter.
  Combining them yields \emph{parametric weighted optics}, a categorical model of artificial neural networks, and more: constructions in Part I have close ties to many other kinds of bidirectional processes such as bayesian updating, value iteraton, and game theory.
  
  Part II justifies the abstractions from Part I, applying them to model backpropagation, architectures, and supervised learning.
  We provide a lens-theoretic axiomatisation of differentiation, covering not just smooth spaces, but discrete settings of boolean circuits as well.
  We survey existing, and develop new categorical models of neural network architectures.
  We formalise the notion of optimisers and lastly, combine all the existing concepts together, providing a uniform and compositional framework for supervised learning.

  \tableofcontents

  \chapter*{Acknowledgements}

I'm incredibly grateful to people around me for the last four years.
Despite not being trained as a mathematician, I had the privilege of engaging with so many people who think in such crystal clear ways, ones which were not even on my radar of being possible.
And these people --- the Mathematically Structured Programming group (MSP) where I've done my PhD, and the Applied Category Theory commmunity in general --- are also such a welcoming and inclusive community, without whose help and encouragement I wouldn't have been able to be where I am today.

Specifically, I want to thank my advisor Neil Ghani whose guidance, lectures on category theory (and lectures on writing!), as well as the provided space to explore my own ideas were central in allowing me to learn so much.
Thanks to Jules Hedges who in many ways was not only my second, informal supervisor, but also a mentor, and more importantly, a friend whom I've had the pleasure to explore abstract nonsense with.
Thanks to J\'er\'emy Ledent for patience in our conversations in which he often helped me understand what was it that I actually \emph{meant} to say mathematically.
I thank Conor McBride for imparting priceless wisdom on how to \emph{think with types}. The conversations stayed with me in a way that is hard to summarise.
In general it is hard to overstate how positive my exprience was at MSP, and I can't thank enough all the people with whom I've had the pleasure to share a chat and a pint with --- Clemens, Bob, Fredrik, Dylan, Riu, Joe, Georgi, Andr\'e, Alasdair, Joe, Eigil, Guillaume, Malin, Zanzi and Ezra, to name just some.
Thanks to Joe for fun whiteboard chats and meanders through math which I've especially enjoyed!
There are countless other collaborators, mentors and friends and everything in between: Igor Bakovi\'c, Fabio Zanasi, Geoffrey Cruttwell, Bob Coecke, Dan Shiebler, David Jaz Myers, Vincent Wang-Ma\'scianica, Fabrizio Genovese, Brandon Shapiro, Owen Lynch, Toby Smithe, Tali Beynon, Nima Motamed, Mattia Villani.
I thoroughly appreciate all of the things I learned from you, and the fun times we've had.
Thanks especially to Brendan Fong for originally introducing me to Neil, and getting me started with category theory.
Thanks to people who gave me feedback on my thesis: Neil, Jérémy, Paul Lessard, and my thesis examiners Radu Mardare and Sam Staton.

I thank my family for support, and sacrifices they've made so I could get where I am. \emph{Hvala vam.} And lastly, Ieva. 
I've grown so much with you over the last four years.
Thank you for your love, support, and immeasurable kindness.
You've been my partner, my friend, and I'm proud to have you by my side.
You hold a special place for me, and I hope you know it.

  \mainmatter
  \chapter{Introduction}
\label{ch:introduction}

\epigraph{The task of the academic is not to scale great intellectual mountains but to flatten them.}{Conor McBride}

\newthought{Throughout the last few decades,} many behaviours thought to have been unique to humans have systematically been replicated by computer programs.

Computer programs have been exhibiting increasingly better performance on variety of benchmarks aimed at measuring high-level reasoning, perception, and planning. %
These include generation and classification of images (\cite{ramesh_hierarchical_2022,rombach_high-resolution_2022,dosovitskiy_image_2021}), text (\cite{openai_introducing_2022}), audio (\cite{chen_neural_2018, dong_speech-transformer_2018}), few-shot learning (\cite{brown_language_2020,finn_model-agnostic_2017,hospedales_meta-learning_2022}), game-playing --- notably Jeopardy (\cite{ferrucci_introduction_2012}), Atari games (\cite{mnih_human-level_2015}), and Starcraft II (\cite{vinyals_grandmaster_2019}), but also traditional games.
Today, the best chess and Go players in the world are not human (\cite{silver_general_2018}). The chess engine Stockfish is estimated to hold an ELO rating of 3542 (\cite{Team_ccrl_2023}), while the best human player, Magnus Carlsen, has an ELO of 2882.\footnote{ELO is a rating system for calculating the relative skill of players in zero-sum games such as chess.}
In Go, the system AlphaGo beat Lee Sedol --- the one of the best players in the world --- in 2016, overturning centuries of best Go practices, and making players around the world rethink how they approach the game (\cite{shin_superhuman_2023}).

Many of these achievements were once thought to be impossible, or even distinguishing factors between humans and machines.
In mid 20th century, it was not uncommon to believe that making computers play chess well would be a significant milestone towards understanding intelligence.
Likewise, until very recently it was not uncommon to believe that producing and responding to queries in natural language is something that's uniquely correlated with intelligent behaviour.

\textcolor{Black}{But these are not widespread beliefs anymore.}
With the defeat of the reigning world chess champion Gary Kasparov by a computer system called Deep Blue in 1997, it became clear that the kind of analytical thinking needed for chess is only one aspect of what we deem to be intelligence. 
And with the advent of large language models (LLMs) (\cite{openai_introducing_2022}) it became clear that natural language understanding is also only one aspect of what we might call intelligence, as LLMs often produce sensible sounding, but logically incorrect natural language phrases.

These and many other developments gave rise to the field of Artificial Intelligence (AI), which has since experienced an explosive rise, having permeated almost every realm of technology.
It is receving continued investment from both industry, venture capital and governments, increasing the rate at which these systems are being implemented.

Despite this, the consensus on what constitutes ``Artificial`` and ``Intelligence'' has not been well-established.
The goalposts for these definitions have been consistently shifted throughout history.
They have been shifted so many times that John McCarthy, one of the pioneers of AI, has given this phenomenon a name: the ``AI effect'' (\cite{haenlein_brief_2019}), describing the tendency to redefine the concept of Artificial Intelligence to exclude whatever has been recently accomplished, under the justification that this recent accomplisment never required intelligence to begin with.

The AI effect highlights the challenges with predicting such developments, and understanding their implications.
Indeed - where will this continued pace of developments lead us?
How far will we move the goalposts 30, or 50 years into the future?
Even when it comes to fundamental questions such as ``what distinguishes humans from machines'', history has shown us that our common sense has repeatedly been wrong about these matters. 
``Obvious'' answers such as \emph{only humans can play chess}, or \emph{only humans can communicate in natural language} seemed plausible before, but they are not anymore (\cref{fig:only_a_human}).
Even the revered Turing test (\cite{turing_icomputing_1950}) is not helpful here.
It leaves out the matter of defining artificial intelligence to the discretion of a panel of human judges without any guarantees that this panel is working with a good definition to begin with.

\begin{figure}[H]
  \centering
  \includegraphics[width=0.65\textwidth]{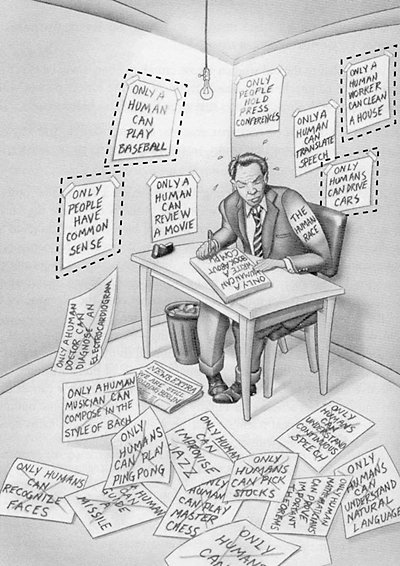}
  \caption{The number of behaviours that can still be exhibited \emph{only} by humans has steadily been shrinking. Graphic taken from \cite{kurzweil_age_1999}.}
  \label{fig:only_a_human}
\end{figure}

In this thesis, we take a step back, and search for a conceptual framework that will stand the test of time.
We study the machinery behind these systems --- artificial neural networks, and the underlying field of deep learning --- from a very specific vantage point.
We examine them through the lens of a new, and radically different way of understanding mathematics, and science as a whole: Category Theory.
We use this language as a means of flattening the mountain of almost insourmountable deep learning research (\cite{olah_research_2017}) created in recent years, and aim to lay the mathematical foundation for deep learning informed by 21st century mathematics.

\section{From alchemy to chemistry}

To understand where we are going, it is good to take a step back.

As a scientific field, where is deep learning currently?
It is not hard to see that it has achieved exceptional successes in recent years.
But it is still a young field.
Compared to millenia-old disciplines like mathematics or physics, deep learning is roughly $50$-year-old. 
It is notoriously ad-hoc, and only beginning to find its footing.
However, this initial \emph{ad-hoc} phase is not unique to deep learning --- this is how scientific fields start.

For example, people have been classifying animals and plants for thousands of years, but our conceptual understanding of the field of taxonomy changed drastically with the discovery of the concept of evolution.
In programming, researchers started tinkering with vacuum tubes, but most of what we call programming today --- pointers, recursion, map-reduce or monads --- is heavily abstracted from that.
People have been stacking rocks on top of each other since the dawn of time, but only in the last hundred years or so we have discovered materials science, finite-element analysis and general math and physics that enabled us to build skyscrapers once considered impossible.%

And many parallels can be drawn between deep learning and chemistry.
Throughout the centuries, there have been many practical advances in the field of chemistry.
These include the development of metallurgy, glass making processes, lab equipment, as well as many systematic studies of chemical substances and their transformations.
However, most of these advances were made by early researchers with no knowledge of the periodic table of elements, protons, neutrons, nor electrons: the basic building blocks of what they were studying.
Researchers were documenting many new and unexpected phenomena, and most of their theories interpreting the experiments did not survive the test of time.
Many researchers of that era are today called \emph{alchemists}.
Among them were ones who believed they could transmute base metals into gold, find the elixir of life, or create panacea --- a universal remedy able to cure all diseases.

But science progressed, and our understanding deepened.
We discovered the atom, and are now aware of the many phenomena of quantum physics.
Modern chemistry provided us with theories that allowed us to predict new elements and advance medicine, enabling us to build and scale today's sprawling healthcare systems.
Today, we have the ability to cure many diseases alchemists tried and failed to cure.
However, we are doing it based on completely different principles.

As argued by many, deep learning today is still in the alchemy stage (\cite{preserve_knowledge_nips_2018,hutson_ai_2018}).
Just like alchemists, deep learning researchers are running experiments and making practical advancements.
They are creating recipes for countless neural network architectures and optimisation procedures, while also documenting a number of new and poorly understood phenomena.
These include vanishing and exploding gradients (\cite{pascanu_difficulty_2013}), double descent and grokking (\cite{nakkiran_deep_2021, heckel_early_2021, liu_towards_2022}), lottery ticket hypothesis (\cite{frankle_lottery_2019, frankle_linear_2020}), scaling laws (\cite{kaplan_scaling_2020, bahri_explaining_2021}), and in-context learning (\cite{von_oswald_transformers_2023}), to name a few.\footnote{Unlike alchemists, we understand \emph{exactly} what happens at the low levels of abstraction.
We understand fundamentally everything about bits are moving through logic gates, or compositions of linear layers with activation functions.
What we don't have is a good understanding of the modular building blocks of neural networks \emph{at the intermediate scale}.
This is because \emph{more is different} (\cite{anderson_more_1972}), and knowledge of the low-level abstractions does not allow us to easily predict emergent behaviour.}

However, all of these phenomena, architectures and new practical advancements are made without any consensus on the overarching theory of deep learning.
Architectures and advancements are guided by heuristics of what works well in practice, often involve using ad-hoc design decisions and conflicting ways of thinking about the field as a whole.
Existing theories are usually highly specialised to a particular subfield of deep learning, while implementations are often brittle. 
There are examples of the performance machine learning models dropping drastically after internals of frameworks they used changed the default way of rounding numbers (\cite{preserve_knowledge_nips_2018}), or with different initial seeds (\cite{irpan_deep_2018}), a problem arising particularly in reinforcement learning.

Unlike with chemistry, we are not at a stage where we can confidently predict new phenomena, or easily map out a space of neural network architectures.

\begin{center}
  \emph{There is no periodic table of neural network architectures.}
\end{center}

Finally, just like alchemists were yearning for a remedy that could cure any disease, it is not uncommon for deep learning researchers to be motivated by the discovery of a general neural network architecture that can solve \emph{any} task.

\subsection*{From chemistry to nuclear physics}

Despite our lack of understanding of deep learning systems, we are deploying them at scale.
Deep learning systems drive cars, recommend products, amplify thoughts on social media, screen job applicants, and have generally permeated our daily lives.
The rate at which they are being implemented is increasing.
This is because of continued investment by major tech companies, venture capital firms and governments, with the biggest experiments in deep learning crossing the price point of \$100 million (\cite{knight_openais_2023}).

While undeniably producing immense value and benefit to our society, deep learning systems also exhibit numerous issues: lack of explainability (\cite{xu_explainable_2019}), amplification of biases (\cite{lloyd_bias_2018}), and adversarial attacks (\cite{vassilev_adversarial_2024}), to name a few.
With the increase of integration of these systems into our society there is a growing need for sensible regulatory paradigms and safety assurances. 
Unlike the regulation of nuclear weapons, where the dangers are clear and mitigation is generally a matter of execution, in the case of AI both the dangers and their mitigation are not clear.

As deep learning systems are generally \emph{black boxes} it is hard to prevent unwanted behaviour. 
When it comes to regulation, the prevalent practice is mostly \emph{retroactive}, meaning the focus is on explaining decisions a deep learning model has already made.
Instead, if we wanted to express the kind of reasoning we would \emph{proactively} uphold a model to, we'd need a formal language that is both general enough to cover a wide-array of possible use cases, and also precise enough to give rise to enforceable legislation.
But there is no consensus about what might such a language look like.

The deep learning community is aware of these issues.
Researchers have called for a deep learning ``Langlands programme'' (\cite{rieck_machine_2020}) and a deep learning ``Erlangen programme'' (\cite{bronstein_geometric_2021}), both originally inspired by far-reaching programs aimed at unifying disparate parts of mathematics.
Recently, the first PhD thesis on the safety of artificial general intelligence has been published (\cite{everitt_towards_2019}).

There is agreement in the deep learning community about the fact that we have a pressing need for a principled, rigorous and verifiable theory of deep learning.
Nonetheless, there does not seem to be widespread consensus about how exactly to reach that goal.
However, science will advance.
It is not unreasonable to expect that we will have solved many of these problems in the future.
The only question is, how will our understanding have changed?
Looking at deep learning in 30, or 50 years, how many theories will have stood the test of time?

\section{Category Theory}

While the mountain of deep learning research has steadily been growing, another field has gradually begun the arduous task of dismantling the ``alchemical'' theories of deep learning, and assembling a more rigorous, transparent and systematic foundation.
The field of \emph{Category Theory} has quietly been flourishing, unbeknownst to most deep learning researchers and the general scientific population.

Category theory (CT) is a language, a way of thinking and of structuring knowledge, markedly different from others.
Originating in abstract mathematics CT has since proliferated, and been used to express ideas from numerous different fields (\cref{box:ct_fields}) in a uniform way.
This consequently revealed commonalities between these fields which were previously unknown, resulting in a battle-tested system of interfaces which can reliably be applied across scientific fields.
It also resulted in a wide array of well-developed tools and theories (geometry, algebra, programming, combinatorics, and topology, to name a few), as well as a wide talent pool of researchers. 
The number of CT researchers is steadily growing, and it is relatively easy to onboard them to other fields, as they share the same conceptual knowledge base.

Category theory has evolved jointly with the consolidation of a community of researchers interested in \emph{Applied Category Theory} (ACT), a field aiming to implement ideas from category theory in other fields.
Unlike conferences in other disciplines, where researchers coalesce to discuss different techniques for thinking about the same problem, ACT conferences can be thought of as assembling researchers from different fields interested in applying the same techniques to different problems.\footnote{I have first seen CT described as a ``battle-tested system of interfaces'' with a deep talent pool of researchers in \cite{lynch_category_2023}, while the way of presenting ACT conferences is something I first heard from Jamie Vicary.}

So, what is category theory, precisely?
We delay a more detailed introduction to \cref{sec:categories_and_systems}, and instead here present the following quote from \cite{leinster_basic_2014}, accompanied by a graphic from \cite{rieck_machine_2020}:

\begin{center}
``Category theory takes a bird’s eye view of mathematics. From high in the sky, details become invisible, but we can spot patterns that were impossible to detect from ground level.''
\end{center}

\begin{figure}[H]
  \centering
  \includegraphics[width=0.71\textwidth]{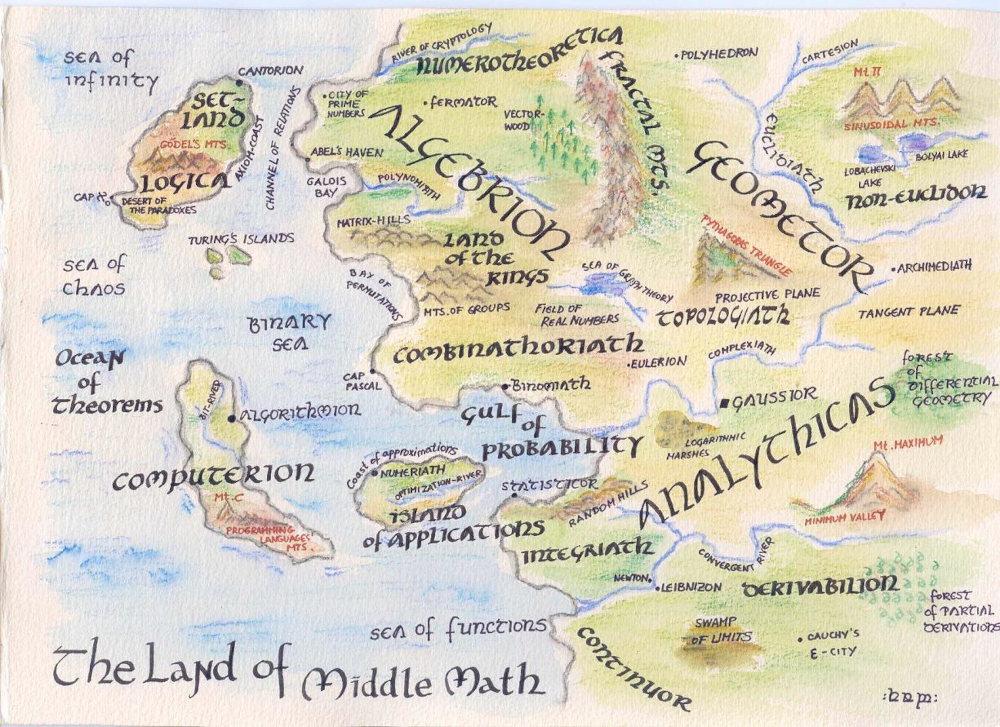}
  \label{fig:middle_math}
\end{figure}

Personally, I found it astounding the kinds of things that are possible to express in this language.
It changed the way I approach and think about problems, in a way that is hard to express.\footnote{I often say that category theory takes time to learn not because it's complex, but because it's so unintuitively simple.}
Hopefully a part of the spark that guides me towards category theory is transmitted throughout this thesis.

\begin{mybox}[label=box:ct_fields]{Cerulean}{Which fields has Category Theory permeated?}
  Other than modern mathematics which it thoroughly permeates, category theory has been used to describe:\footnote{We give no pretense of thoroughness, and note that many of these fields overlap. We list only one or two references but in most cases there are dozens, sometimes hundreds more.}
  \vspace{-1.5em}
  \begin{multicols}{2}
  \begin{itemize}
  \item Quantum physics (\cite{heunen_categories_2019, coecke_picturing_2017}) %
  \item Chemistry (\cite{baez_applied_2022, baez_compositional_2017, genovese_categorical_2020}) %
  \item Systems theory (\cite{capucci_towards_2022,  myers_categorical_2022}) %
  \item Functional programming (\cite{milewski_category_2019}) %
  \item Recursion schemes (\cite{hinze_unifying_2013, kabanov_recursion_2006})
  \item Database theory (\cite{spivak_functorial_2023, meyers_fast_2022})
  \item Game theory (\cite{ghani_compositional_2018, capucci_translating_2022} %
  \item Information theory (\cite{leinster_entropy_2021, bradley_entropy_2021})
  \item Control theory (\cite{hanks_compositional_2023, baez_categories_2015}) %
  \item Probability (\cite{sturtz_categorical_2015, heunen_convenient_2017, perrone_markov_2022})
  \item Cryptography (\cite{broadbent_categorical_2022, pavlovic_chasing_2014, barcau_composing_2023})
  \item Automata theory (\cite{boccali_bicategories_2023, arbib_categorists_1975})
  \item Trading protocols (\cite{genovese_escrows_2021})
  \item Ergodic theory (\cite{moss_category-theoretic_2023, delvenne_category_2019})
  \item Thermostatics \cite{baez_compositional_2023}
  \item Reinforcement learning (\cite{hedges_value_2023})
  \item Electrical circuits (\cite{baez_compositional_2018, boisseau_string_2022})
  \item Version control (\cite{mimram_categorical_2013})
  \end{itemize}
\end{multicols}

\end{mybox}

Lastly, to understand how category theory compares to other mathematical theories, we present the following remark.

\begin{remark}[Differences between category, set, and graph theory.]
  One might rightfully point out that set and graph theory, for instance, are all a part of the above mentioned fields.
  This raises the question: why is category theory singled out?
  Essentially, category theory tracks more information.
  While we can study everything with only sets and functions, we effectively start doing category theory as soon as we start tracking properties of these sets and functions in a consistent manner.
  If we're only interested in sets with some structure and functions that preserve that structure --- closed under composition --- we get a category. 
  When it comes to graph theory, the story is similar.
  Graphs and graph homomorphisms, when organised into collections, indeed form a category as well --- not just a graph.
  This is symptomatic of many constructions in various aspects of mathematics: they all assemble into categories.
  The extra data that category theory \emph{coherently} tracks ends up being an indespensible tool for describing numerous scientific and mathematical phenomena in a systematic manner that would not be possible without it. 
\end{remark}

\subsection{Category Theory $\cap$ Machine Learning}

In the recent years, a number of authors have began to study machine learning concepts through the lens of category theory.

\begin{figure}[H]
  \centering
  \includegraphics[width=1\textwidth]{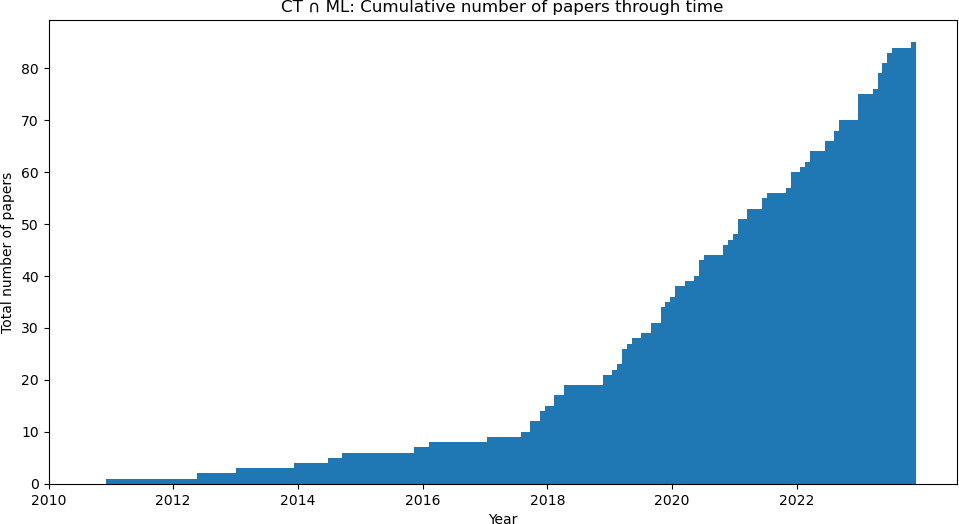}
  \caption{Category Theory $\cap$ Machine Learning: cumulative number of papers through time. Data and figure taken from \cite{gavranovic_category_2020}.}
  \label{fig:ct_ml_papers}
\end{figure}

\Cref{fig:ct_ml_papers} shows the total number of such papers so far, including papers in deep learning, bayesian and causal inference, general probability, differentiable programming, topological data analysis, and more.
While the total quantity of research is still relatively low, there is a steady rise.
The current stage of development of the machine learning in the context of category theory is best understood through the vantage point emphasized by the following prescient quote from \cite{baez_compositional_2018}\footnote{Unfortunately, this quote can only be found in version 1 of the paper on ArXiv.}:

\begin{displayquote}
  ``While diagrams of networks have been independently introduced in many disciplines, we do not expect formalizing these diagrams to immediately help the practitioners of these disciplines.
  At first the flow of information will mainly go in the other direction: by translating ideas from these disciplines into the language of modern mathematics, we can provide mathematicians with food for thought and interesting new problems to solve.
  We hope that in the long run mathematicians can return the favor by bringing new insights to the table.'' 
\end{displayquote}

This quote indirectly suggests partitioning applied category theory fields into two groups.
We call these \TypeOne and \TypeTwo fields of ACT.
In a \TypeOne field, the flow of information still only goes from the application to the theory, and all compositionality that the category theory describes already exists in the application domain.
A \TypeTwo field genuinely carves out new compositionality in the domain, bringing \emph{new insights} to the table that were not known before.

While merely a rough guideline, it helps give us a sense where Categorical Deep Learning is: currently a borderline Type 2 field.
It can describe \emph{a lot} of existing compositionality, but hasn't quite justified itself with many new insights.
In this thesis, we hope to provide a stable foundation that can enable these insights to happen.

\section*{Goals and Challenges}

The goal of this thesis is the provision of a mathematical foundation for artificial neural networks that is
\begin{itemize}
  \item \textbf{End-to-end.} We aim to deal with the entirety of neural network theory and practice --- from implementation details of backpropagation, over the algebra of parametric composition of neural networks, to specific architectures, optimisers, and a complete description of supervised learning. 
  \item \textbf{Uniform.} We take special care ensuring all the described pieces fit in a consistent way, through a uniform and consistent language.%
  \item \textbf{Prescriptive, not merely descriptive.} The aim is to provide a prescriptive framework for \emph{implementing} all of the above, one which can eventually become a part of future deep learning frameworks in programming languages with sufficiently expressive features.
\end{itemize}

To achieve this, we utilise the aforementioned language of category theory to its full extent.
We bring many constructs and insights from this abstract field of mathematics into deep learning, developing an entire foundation for deep learning in this language.

The thesis structure is as follows.

\paragraph*{Part I: Theory}

Part I covers two main components of deep learning: parameterisation and bidirectionality.
Its main contribution is a formal mathematical framework for studying the essential structure behind these concepts.
It is written with more than just deep learning in mind: the level of abstraction covers many other kinds of bidirectional processes found in machine learning such as bayesian updating and value iteration, and also having close ties to game theory.
As such, despite my best attempts, the level of abstraction here is intended to a seasoned category theorist.

In \cref{ch:para}, I build upon the $\Para$ construction in its full bicategorical nature, unpacking all the underlying structure in detail.
I define the conditions under which it can be strictified, equipped with a monoidal structure, and trivialisation conditions, as well as relate it to existing definitions in the literature.
In \cref{ch:bidirectionality}, I define the category of \emph{weighted optics}, a new foundation for bidirectional systems that allows modelling backpropagation not just denotationally correct, but also one which operationally captures distinctions relevant in practical implementations.
I define cartesian actegories and show how they are useful in studying particular kinds of weighted optics: lenses.
\Cref{ch:para_optic} combines the previous two chapters, defining a bicategory in which morphisms are \emph{parametric weighted optics}, the central construction of this thesis.
This is a versatile construction that models neural networks, loss functions, and whose 2-cells also model optimisers.
It also a category in which scalars (morphisms from the monoidal unit to the monoidal unit) model a parameter update step of an entire supervised learning system.

\paragraph*{Part II: Applications}

Part II justifies the abstractions from Part I, instantiating and using them to describe settings relevant for deep learning: backpropagation, neural network architectures, and supervised learning.
In \cref{ch:backpropagation} I provide a lens-theoretic axiomatisation of differentiation.
It gives an abstract view of differentiation covering more than just the standard smooth spaces, but also discrete settings of boolean circuits, while using the same general framework for bidirectionality.
Specifically, this includes the definition of a particular kind of an additively closed category, definition of $\LensA$ as an endofunctor and the definition of a generalised cartesian reverse derivative category as its coalgebra.
I also identify differences between checkpointed and non-checkpointed reverse-mode automatic differentiation solely in the language of category theory.
\Cref{ch:architectures} describes how the framework of parametric weighted optics can be used to provide a solid foundation for defining a general theory of neural network architectures.
I develop the a categorical formalism for weight tying, generative adversarial networks and graph convolutional neural networks, to name a few, as well as survey existing categorical work on recurrent and recursive neural networks.
Lastly, \cref{ch:supervised_learning} extends the previous frameworks of supervised learning in \cite{cruttwell_categorical_2022}, and puts all the pieces together.
I describe how learning rates and optimisers can be modelled and composed within this framework, eventually in conjunction with a model and a loss function forming a morphism that compositionally captures the notion of supervised learning of parameters.
This is instantiated in a variety of settings, even those that are not usually considered as supervised learning, such as generative adversarial networks, shedding light on their semantics as supervised learners.
I also describe how this framework captures supervised learning of \emph{inputs}, a method usually called \emph{deep dreaming} in the literature.

\paragraph*{Future work.}
While great care has been taken to separate out as many conceptual moving parts as possible, some things are still stuck together.
The last and the shortest chapter of this thesis lists a number of open research questions, branching out in multiple directions.
From establishing bridges between differential geometry, tangent categories and practical concerns of automatic differentiation (such as gradient checkpointing), over the design of new architectures and connection of Geometric Deep Learning (\cite{bronstein_geometric_2021}) to structural recursion, to ultimatively, \emph{taking dependent types seriously}.

\newpage
\subsection{Publications and preprints}

The ``\textbf{Contributions}'' environment at the end of the introduction of each chapter in Part I and II summarises the novel research done in that chapter, alongside with the relevant publications and preprints I (co)authored.
The complete list of these preprints and publications is below.

\vspace{1cm}

\begin{minipage}[t]{0.13\textwidth}
    \cite{cruttwell_categorical_2022}\\
    \\
    \cite{capucci_actegories_2023}\\
    \\
    \cite{shiebler_category_2021}\\
    \\
    \cite{capucci_actegories_2023}\\
    \\
    \cite{braithwaite_fibre_2021}\\
    \\
    \cite{gavranovic_space-time_2022}\\
    \\
    \cite{gavranovic_graph_2022}\\
    \\
    \cite{ghani_compositional_2018}\\
\end{minipage}
\begin{minipage}[t]{0.95\textwidth}
    \textbf{Categorical Foundations of Gradient-Based Learning}\\
    \\
    \textbf{Towards Foundations of Categorical Cybernetics}\\
    \\
    \textbf{Category theory in machine learning}\\
    \\
    \textbf{Actegories for the working amthematician}\\
    \\
    \textbf{Fibre optics}\\
    \\
    \textbf{Space-time tradeoffs of lenses and optics via higher category theory}\\
    \\
    \textbf{Graph Convolutional Neural Networks as Parametric CoKleisli morphisms}\\
    \\
    \textbf{Compositional game theory, compositionally}\\
\end{minipage}

  \part{Theoretical Foundations}
  \label{part:theory}
  \chapter{Setting the Stage}
\label{ch:settting_the_stage}

\epigraph{To deal with hyper-planes in a 14-dimensional space, visualize a 3-D space and say 'fourteen' to yourself very loudly. Everyone does it.}{Geoffrey Hinton}

\newthought{What is Deep Learning,} more closely? What about Category Theory? In this chapter we provide the necessary background material.

\section{Deep Learning: a bird's eye view}

Deep Learning is a subfield of the field of machine learning that studies the theory and practice of \emph{artificial neural networks}.
Loosely inspired by neural networks in the brains of humans and animals, the goal of deep learning is the construction of networks of interconnected artificial neurons that have the ability to learn from data by seeing lots of examples.

Consider one of the most common deep learning scenarios: supervised learning with a neural network.
This technique trains the model towards a certain task, i.e.\ the recognition of visual patterns in an image data set (\cref{fig:learning_sketch}).
There are several diferent ways of implementing this scenario.
Typically, at their core, there is a gradient update algorithm (often called the “optimiser”), depending on a given loss function, which updates in steps the parameters of the network, based on some learning rate controlling the scaling of the update.
All of these components can vary independently in a supervised learning algorithm and a number of choices is available for loss maps (quadratic error, Softmax
cross entropy, dot product, etc.) and optimisers (Adagrad \cite{duchi_adaptive_2011}, Momentum \cite{polyak_methods_1964}, and Adam \cite{kingma_adam_2015}, etc.).
Furthemore, the setting in which we perform the learning can vary as well: in addition to euclidean spaces, we can differentiate functions between hyperbolic spaces (\cite{peng_hyperbolic_2022}), between complex numbers (\cite{bassey_survey_2021}), and even between discrete settings of boolean circuits \cite{hubara_binarized_2016, qin_binary_2020, wilson_reverse_2021}.

\begin{figure*}[h]
  \centering
	\includegraphics[width=0.8\textwidth]{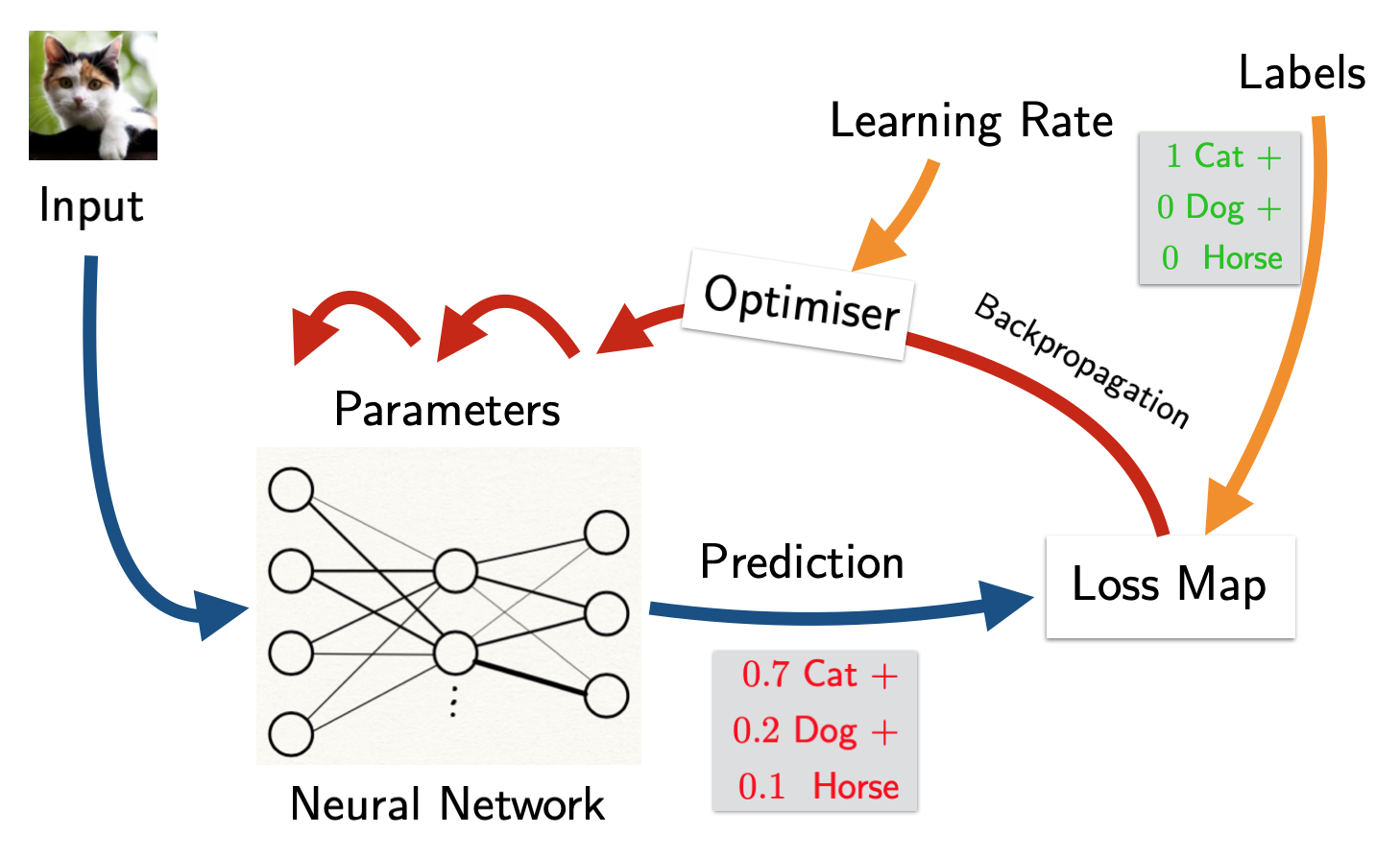} \caption{An
    informal illustration of gradient-based learning. This neural network is
    trained to distinguish different kinds of animals in the input image. Given
    an input $X$, the network predicts an output $Y$, which is compared by a
    \emph{loss function} with what would be the correct answer (\emph{label}). The loss function returns a real value expressing the error of the prediction; this information, together with the \emph{learning rate} (a weight controlling how much the model should be changed in response to error) is used by an \emph{optimiser}, which computes by gradient-descent the update of the parameters of the network, with the aim of improving its accuracy. The neural network, the loss map, the optimiser and the learning rate are all components of a supervised learning system, and can vary independently of one another.}
  \label{fig:learning_sketch}
\end{figure*}

Thinking of neural networks as processes, we can delineate their two main properties.
\begin{enumerate}
\item They are \textbf{parametric.} Each network comes equipped with a choice of ``weights'' (or parameters) on which the output depends on.
In the simplest case, we are given a function $f : X \times P \to Y$ and are tasked with finding a parameter $p : P$ such that $f(-, p) : X \to Y$ is the best function according to some criteria.
As we compose neural networks, the manner by which the weight spaces are composed too is non-trivial, and the algebra of such composition is often overlooked.
This is the topic of \cref{ch:para}.
\item They are \textbf{bidirectional.} These processes have a forward and a backward direction.
In the forward direction the computation turns inputs via a sequence of layers into predicted outputs, and then a loss value.
In the reverse direction it backpropagates changes through the layers, and then turns them into parameter updates.
This too has a non-trivial algebra of composition, with a) different modes of composition giving rise to different performance profiles of these bidirectional processes, and b) details and properties of both the forward and backward part being difficult to formally specify in a uniform and consistent manner.
This is the topic of \cref{ch:bidirectionality}.
\end{enumerate}

This gives us the high-level story of deep learning.
While deep learning is our main motivation, we keep in mind adjacent machine learning fields such as bayesian learning and reinforcement learning.
Throughout Part I of the thesis a lot of the abstract machinery for bidirectionality will inadvertently end up capturing structure that is present in Bayes' law, or value iteration.
As such, Part I is developed at a level of generality not specifically needed for neural networks.
But we will see that this level of generality gives us a fresh and useful perspective on neural networks themselves.

The main concern of deep learning is however, in many ways, much more specific.
Most of deep learning research studies the manner by which we structure the backward pass from the forward one (i.e.\ backpropagation), the manner by which we structure the parametric morphisms (i.e.\ the \emph{architecture} of the forward passes), and the manner by which all of these components fit into a coherent story of \emph{supervised learning}.
These three components define the contents of the three chapters in Part II of the thesis.

\begin{enumerate}
\item \textbf{Differentiation.} The forward and the backward pass of a bidirectional process are related: the backward pass is automatically constructed from the forward one by the process of differentiation. 
We are interested in providing formal models of each that capture all the relevant examples, but can also be integrated in a coherent whole.
This means that we should systematically be able to change the kind of space we're differentiating, or the kind of parameters we're considering, in and still be able to perform supervised learning.
Likewise, here we too want to be operationally aware, and distinguish between different performacne profiles of different modes of composition.
\item \textbf{Architectures.} In recent years the number of architectures of neural networks has proliferated. There exist countless architectures, combinations theoref, and entire theories describing how each class of architectures operates, and their expressive power. 
As such, there is a large need for a principled and formal specification language for these architectures, one which would obviate the need for informal diagrams that are used by most of today's deep learning literature.
\item \textbf{Supervised learning.} The most pervasive mode of deep learning is that by \emph{supervision}, where a dataset of input-output examples is used to specify an optimisation procedure which produces a function mapping each input to the corresponding output as closely as possible, while ensuring it generalises well.
This supervised learning system involves the interaction of a neural network, an optimiser, a loss function, and their corresponding backward passes, and forms a good litmus test for any end-to-end framework: how well can it explicitly describe the interaction of all of these components?
\end{enumerate}

Having given a high-level overview of deep learning, we now turn to category theory.

\section{Categories and Systems}
\label{sec:categories_and_systems}

\begin{center}
  \emph{You shall know a word by the company it keeps.}
\end{center}

This prescient quote from the linguist John Rupert Firth could be seen as a backbone of category theory.\footnote{Similar phrases have been uttered by various people. ``The meaning of words lies in their use.'' is one by Wittgenstein.}
It tells us that the string of characters a word is composed out of --- its syntactic content --- is in many ways meaningless.
Instead, what gives a word much of its meaning is the way we use it in relation to all the other words in the language.

Describing things \emph{exclusively} by how they can be related and interacted with from the outside, without looking at their insides, is one way to frame what category theory is about.
From its humble beginnings in 1940s, the field of category theory has experienced immense growth, now becoming what is called ``the mathematics of mathematics``.
It grew into an entire language in which people have described all sorts of scientific phenomena in, coming with its own markedly distinct way of reasoning and thinking.

One of the foundational principles that sets category theory apart is its rigorous approach to \emph{encapsulation}.
Just as in software we aim for a modular design of components, so in category theory we aim for a modular design of \emph{concepts}.
Ideas and concepts are abstracted away and separated from each other in a principed way, whose interaction is mediated by precise and mutually compatible interfaces.

Take the example of \emph{enriched} category theory.
In it we can turn a dial that chooses a category and, poetically, selects different branches of mathematics. Setting it to $\{ \text{false}, \text{true} \}$ selects order theory.
Turning it to $[0, \infty]$ selects metric geometry.
Turning it to $\Set$ selects category theory itself.
Turning it to $\Cat$ selects 2-category theory.
And so on.\footnote{Paraphrased from \cite{leinster_magnitude_2011}.}
In the study of bidirectional systems --- there is a dial that selects the kind of bidirectional systems we study.
One position selects deterministic updates, another one probabilistic, another on updates with computational effects, and so on.
Enriched category --- and category theory as a whole --- is extremely general, and most importantly --- reliable.

If something requires a category with some structure $X$ to work, we are assured that \emph{every} category with structure $X$ works; it will never be the case that the construction breaks because of an unforeseen consequence. 
This is because we are working through a well-specified interface, shielded from the specifics of a particular domain.
And we have to learn a particular interface only once, as opposed to repeatedly relearning ad-hoc interfaces for each new domain.
The mutual compatibility of these interfaces in turn allows us to scale up our systems more easily, and manage their difficulty of dealing with them as we add moving parts (\cref{fig:traditional_vs_structural_method}).

\begin{figure}[h]
  \centering
  \includegraphics[width=.7\textwidth]{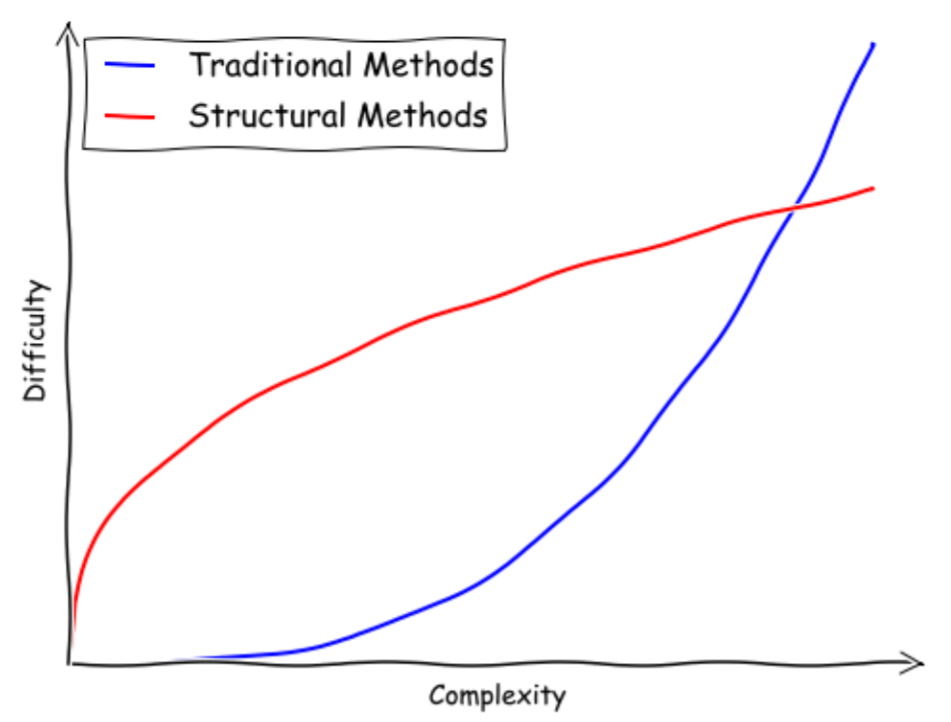}
  \caption{In traditional methods it is easy to start building a system. But as the system grows, it becomes difficult to understand how all the moving pieces inside interact.
    This slows down development, and introduces bugs and side effects.
    In structural methods, the situation is the opposite.
    It takes longer to start, but as special care is taken in accounting for all the moving pieces, it becomes easier to manage their complexity, and scale these systems up.
    (Figure taken from \cite{breiner_workshop_2018})}
  \label{fig:traditional_vs_structural_method}
\end{figure}

On the other hand, using category theory requires a large buy-in.
If you're coming from computer science, physics, or other adjacent fields it takes much more time than you're probably accustomed to pick up and apply these concepts.
The learning curve is steep, and from my humble experience category theory was just \emph{very surprising}; my meta-model of how my thinking would change by using it was completely off.
I did not expect \emph{this} level of change.
Category theory cannot easily be picked up in a month, and the sheer number of concepts and ideas in it takes a lifetime to understand.

For the reasons above I will not provide a comprehensive introduction to category theory in this thesis.
If you already know category theory, you will not need it, and if you don't, you will not learn it from here.
I will take \emph{some} time to give a sense of some of the main concepts appearing throughout the thesis with no pretense of thoroughness.
The thesis does quickly ramps up in complexity, and someone new to category theory might struggle with the terminology and the way of thinking in this thesis.
I refer the reader to the list of pedagogical resources for learning category theory in \cite{gavranovic_category_2023}.

Here are some of the main categories used throughout this thesis.

\begin{definition}[$\Set$]
  \label{def:set}
  One of the central categories in category theory is $\Set$, the category whose objects are sets and morphisms are functions.
\end{definition}

\begin{definition}[{$\Smooth$, \cite[Example 2.3]{cockett_reverse_2020}}]
  \label{def:smooth}
  We write $\Smooth$ for the category whose objects are real-valued vector spaces of the form $\R^n$ (i.e.\ objects are effectively a choice of a natural number $n : \N$) and morphisms are smooth functions $\R^n \to \R^m$.
\end{definition}

\begin{definition}[{$\Poly_R$, \cite[Example 2.2]{cockett_reverse_2020}}]
  \label{def:poly}
  Let $R$ be a commutative rig.\footnote{Also known as a commutative semiring.}
  We use $\Poly_R$ to denote the category of polynomials with coefficients in $R$.
  Its objects are natural numbers $n : \N$, and a morphism $p : n \to m$ is an $m$-tuple of polynomials $\langle p_1(x), p_2(x), \dots, p_m(x) \rangle$ where $p_i(x)$ is an element of $R[x_1, \dots, x_n]$, the polynomial ring over $R$ in $n$ variables.\footnote{Some readers of this thesis might be familiar with the category of polynomial functors (briefly mentioned in the literature review in \cref{sec:bidirectionality_literature}) which is often denoted by $\NamedCat{Poly}$ too. This is a different construction!}
\end{definition}

The above definition is dauntingly formal, but is best understood with an example: a morphism $2 \to 1$ in $\PolyR$ is the polynomial $p(x_1, x_2) = 5x_1^2 + 3x_1x_2 + 2x_2^2$.
This is a morphism with codomain $1$; if it was instead $n$, we'd have $n$ polynomials at our disposal.
Examples of commutative rigs relevant for this thesis are $\R$ and $\Z_2$.

The categories $\Smooth$ and $\PolyR$ will be the main examples of categories in which we model supervised learning (\cref{ch:supervised_learning}).
The category $\PolyZ$ will be used to model supervised learning on boolean circuits (\cref{ex:boolean_circuits_learning}).

\begin{definition}[{$\FVect_F$}]
  \label{def:fvect}
  Let $F$ be a field.
  The category of finite-dimensional vector spaces $\FVect_F$ has as objects vector spaces over $F$ and as morphisms linear maps.
\end{definition}

In this thesis we will focus most on the base field $\R$, i.e.\ $\FVectR$.
\begin{definition}[{$\Mark$ (compare \cite[Ex. 2.6]{fritz_synthetic_2020} and \cite[Ex. 3]{hedges_value_2023})}]
  \label{def:markov_kernel}
  Let $\Mark$ be a category whose objects are finite sets, and where a morphism $X \to Y$ is a \newdef{Markov kernel}, i.e.\ a $Y \times X$ matrix with non-negative real entries, whose columns sum to $1$.
  Composition is given by matrix multiplication, and identities are given by the identity matrices.
\end{definition}

Morphisms in $\Mark$ are best thought of as probabilistic functions: to each element of the domain, they do not assign \emph{one} element of the codomain, but instead a probability distribution over all elements.
Markov kernels will be used to model bidirectional process whose forward passes are probabilistic (\cref{ex:mark_ker_expectation,ex:mark_ker_expectation_closed})

\begin{definition}[Natural numbers]
  \label{def:natural_numbers}
  The set of natural numbers $\N$ can be thought of as a category where a morphism $n \to m$ exists iff $n \geq m$.\footnote{This is an example of a \emph{thin} category, i.e.\ one where there is at most one morphism between any two objects.}
\end{definition}

All the categories above have infinitely many objects and morphisms.
Not all categories do --- some some have just a few.

\begin{definition}[Edge cases]
  \label{def:edge_cases}
  There is a category with no objects or morphisms, this is often called \textbf{the empty category}, and denoted by $\CatInit$.
  There is a category with only one object, and only the identity morphism on it, often called \textbf{the terminal category}, and denoted by $\CatTerm$.
  Every set $A$ can canonically be turned into a \textbf{discrete category}, one whose objects are elements of $A$, and morphisms are only identities.
\end{definition}

\begin{definition}[Product of categories]
  \label{def:product_caegories}
  Let $\cC$ and $\cD$ be two categories.
  Then we can form the \newdef{product category} $\cC \times \cD$ whose objects are pairs of objects $(C, D)$ where $C : \cC$ and $D : \cD$, and a morphism $(C, D) \to (C', D')$ is a pair $(f, g)$ where $f : C \to C'$ in $\cC$ and $g : D \to D'$ in $\cD$.
\end{definition}

There are numerous more, and countless combinations thereof.
We mention one last example --- the category of categories.

\begin{definition}[$\Cat$]
  \label{def:cat}
  In $\Cat$ the objects are categories themselves, and functors are morphisms between categories.\footnote{We're ignoring size issues in this thesis --- the usual techniques dealing with them apply here.}
\end{definition}

These are only some examples of concepts appearing in category theory.
We do not mention monads, algebras, limits, fibrations, adjunctions nor bicategories here, instead referring the reader to the aforementioned list of pedagogical resources for learning those (\cite{gavranovic_category_2023}).

In the appendix we mention some propositions related to coends (\cref{prop:weight_colimit_as_coend}), profunctors \cref{lemma:profunctor_representable}, and the (co)Yoneda lemma (\cref{prop:yoneda,prop:coyoneda}) which will be used in this thesis.
We do not introduce these concepts here, instead referring the reader to the invaluable resource for these concepts, the book ``Coend calculus'' (\cite{loregian_coend_2021}) whose wisdom I've consulted time and time again.\footnote{This reference does assume a fair bit of category theory. For a more pedagogical introduction to profunctors see \cite[Def.\ 4.8]{fong_invitation_2019}.}

\subsection{Monoidal categories}
\label{subsec:monoidal_categories}

Monoidal categories are a powerful tool in the applied category theory toolbox.
Analogous to categories which capture the algebra of sequential composition of processes, monoidal categories additionally capture the algebra of their parallel composition.
They have been used in quantum theory (\cite{coecke_picturing_2017}), digital and electric circuit theory (\cite{ghica_diagrammatic_2017, boisseau_string_2022}), game theory (\cite{ghani_compositional_2018}), bayesian learning (\cite{braithwaite_compositional_2023}), control theory (\cite{baez_categories_2015, bonchi_calculus_2017}), linear algebra (\cite{paixao_high-level_2022}), and many more fields.

Their widespread use arose because they come equipped with a systematic 2-dimensional language of string diagrams (\cite{street_monoidal_2012, selinger_survey_2011}).
Unlike in many other fields where pictures are merely informal supplements to proofs, here instead pictures \emph{are} formal proofs.
This is made possible by strict rules about what counts as a picture, and how they can be manipulated, giving us a fully formal graphical method of reasoning.

A monoidal structure\footnote{For the full definition, see \cref{appendix:monoidal_categories}. For a pedagogical introduction, see \cite[Sec. 4.4.3]{fong_invitation_2019}, or \cite{wang-mascianica_distilling_2023}.} on a category $\cC$ consists of a functor $\otimes : \cC \times \cC \to \cC$ (which we can think of as multiplication), and an object $I : \cC$ (which we can think of as the neutral element of the said multiplication), as well as other structure morphisms: the associator $\alpha$, and the left and the right unitor $\lambda$ and $\rho$, together with properties they have to satisfy fully described in appendix (\cref{def:monoidal_category}.
Functoriality of $\otimes$ gives us one of the above mentioned strict rules: the \emph{interchange law}.

\begin{proposition}[Interchange law]
  \label{prop:interchange_law}
  Let $(\cC, \otimes, I)$ be a monoidal category.
  Let 
  \[
    (f, h) : (A, D) \to (B, E) \quad \text{and} \quad (g, i) : (B, E) \to (C, F)
  \]
  be morphisms in $\cC \times \cC$.
  The interchange law tells us that the following equation holds
  \begin{equation}
    \label{eq:interchange}
    (f \otimes h) \comp (g \otimes i) = (f \comp g) \otimes (h \comp i)
  \end{equation}
  describing that we get the same result if we first compose the morphisms in parallel, and then in sequence, or in sequence and then in parallel.
\end{proposition}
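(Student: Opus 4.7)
The plan is to derive the interchange law as a direct consequence of two facts already in scope: (i) $\otimes$ is a functor $\cC \times \cC \to \cC$, and (ii) composition in the product category $\cC \times \cC$ is defined componentwise, as given in \cref{def:product_caegories}. Essentially, the interchange law is not extra data imposed on a monoidal category; it is simply the functoriality axiom for $\otimes$ once one unpacks what ``functor out of a product'' means.

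Concretely, I would proceed as follows. First, observe that $(f, h)$ and $(g, i)$ are composable morphisms in $\cC \times \cC$, and by the definition of composition in a product category one has
\begin{equation*}
  (f, h) \comp (g, i) \;=\; (f \comp g,\, h \comp i).
\end{equation*}
Second, apply the functor $\otimes$ to both sides. On the right-hand side, by definition of $\otimes$ on morphisms, one obtains $(f \comp g) \otimes (h \comp i)$. On the left-hand side, functoriality of $\otimes$ with respect to composition yields
\begin{equation*}
  \otimes\bigl((f, h) \comp (g, i)\bigr) \;=\; \otimes(f, h) \comp \otimes(g, i) \;=\; (f \otimes h) \comp (g \otimes i).
\end{equation*}
Equating these two expressions gives \cref{eq:interchange}.

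There is no real obstacle here; the work is entirely in the setup rather than in the argument. The only subtle point worth stating explicitly, and the one I would take care to flag, is that the two sides of the equation are \emph{a priori} distinct composites in $\cC$, and it is the functoriality of $\otimes$ together with componentwise composition in $\cC \times \cC$ that identifies them. In other words, what looks like a nontrivial coherence between sequential and parallel composition is, at this level of abstraction, already built into the data of a monoidal category via the single requirement that $\otimes$ be a functor of two variables.
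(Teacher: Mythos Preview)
Your proof is correct and matches the paper's approach exactly: the paper does not give a separate proof but simply remarks, immediately before stating the proposition, that ``Functoriality of $\otimes$ gives us one of the above mentioned strict rules: the \emph{interchange law}.'' Your argument is precisely the unpacking of that remark, deriving \cref{eq:interchange} from the functoriality of $\otimes : \cC \times \cC \to \cC$ together with componentwise composition in the product category.
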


When dealing with string diagrams, we do not have to explicitly think about equations such as \cref{eq:interchange}.
These symbolic equations are built-in to the geometry of the plane, alleviating us from dealing with the bureaucracy of equational reasoning.
This can best be seen in the example below.
\begin{equation}
  \label{eq:morphism_example_monoidal}
  (X \otimes (Y \otimes I)) \otimes Z \xrightarrow{(X \otimes \rho) \otimes Z} (X \otimes Y) \otimes Z \xrightarrow{f \otimes Z} (X' \otimes Y') \otimes Z \xrightarrow{\alpha_{X', Y', Z}} X' \otimes (Y' \otimes Z) \xrightarrow{X' \otimes g} X' \otimes T \xrightarrow{h} W
\end{equation}

This is a composite morphism which has many components, many of which are mere bookkeeping.
These are mostly rebracketing of terms, and introduction/elimination of the monoidal unit $I$.
The string diagram representation of the composite morphism above is depicted in \cref{fig:diagram_example}, where this bookkeeping is completely invisible: the unit $I$, associator $\alpha$, and the unitor $\rho$ are absorbed into the geometry of the plane.
This graphical representation aids intuition, and helps us see the high-level structure.

\begin{figure}[H]
  \scaletikzfig[0.7]{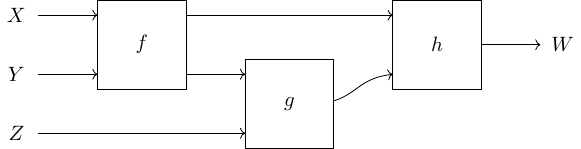}
  \caption{String diagram representation of the morphism \cref{eq:morphism_example_monoidal}.
    Objects of $\cC$ are denoted as wires, and morphisms as boxes. 
    Notably, unit $I$, the associator $\alpha$ and the unitor $\rho$ are completely invisible in the graphical representation.
 }
  \label{fig:diagram_example}
\end{figure}

All of the categories mentioned above ($\Set$, $\Smooth$, $\Poly_R$, $\FVect$, $\Mark$, $\Cat$, \dots) are monoidal categories, often in more than one way.
\begin{example}
The category $\Set$ is monoidal with its cartesian product $\times$ and the singleton set $1$, but also with the disjoint union $\sqcup$ and the empty set $\emptyset$.
This means that a morphism $X \otimes Y \to Z$ in $(\Set, \times, 1)$ describes a process which produces an element of $Z$ by consuming $X$ \emph{and} $Y$, while a morphism of the same type in $(\Set, \sqcup, \emptyset)$ describes a process which produces an element of $Z$ by consuming $X$ \emph{or} $Y$.
\end{example}

\begin{example}
  \label{ex:monoidal_cats_2}
The categories $\Smooth$ and $\Poly_R$ are monoidal with their cartesian product inherited from $\Set$, while $\Mark$ is monoidal with the cartesian product of sets on objects, and tensor product of matrices on morphisms (see \cite[Eq. 2.9]{fritz_synthetic_2020}).
The monoidal products of $\FVect$ are studied in detail in \cref{peculiarities_fvect}.
\end{example}

When reasoning about morphisms in monoidal categories, we often talk about \emph{open}  and \emph{closed} systems.
The morphism above is an example of an open system: roughly, it is one which exposes external ports other morphisms can be plugged in to.
A closed system does not.
Many systems can be partially closed in different ways, and we unpack the details of this in \cref{box:scs_monoidal}.
Such boxes will be recurrent characters in this thesis, and we will see how open and partially closed systems look like in as we vary the monoidal category.

\begin{mybox}[label=box:scs_monoidal]{Cerulean}{States, costates, and scalars in a monoidal category}
  The three kinds of (partially) closed systems in any monoidal category are called states, scalars, and costates.
  They are systems closed from respectively, the right, both sides, or left side.
  Graphically we represent them as below, drawing the partially closed side with a triangle, emphasising there are no exposed ports from that side. 
  Systems that are closed from both the left and the right are often called ``scalars'', following the intuition from monoidal categories like $(\FVectR, \otimes, \R)$ where maps of type $\R \multimap \R$ are in 1-1 correspondence with elements of $\R$.%
    \newline
    \begin{minipage}[t]{0.32\textwidth}
      \begin{tightcenter}
        States
      \end{tightcenter}
      \scaletikzfig[0.8]{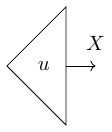}
      \begin{align*}
        \cC(I, X)
      \end{align*}
    \end{minipage}
    \hfill
    \begin{minipage}[t]{0.32\textwidth}
      \begin{tightcenter}
        Scalars
      \end{tightcenter}
      \scaletikzfig[0.8]{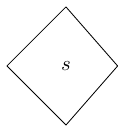}
      \begin{align*}
        \cC(I, I)
      \end{align*}
    \end{minipage}
    \hfill
    \begin{minipage}[t]{0.32\textwidth}
      \begin{tightcenter}
        Costates
      \end{tightcenter}
      \scaletikzfig[0.8]{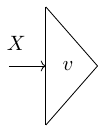}
      \begin{align*}
        \cC(X, I)
      \end{align*}
    \end{minipage}
  \end{mybox}

  Monoidal categories can come equipped with additional structure called \emph{braiding}.
  A braided monoidal category is one where wires can be crossed (\cref{fig:braiding}).
  \begin{figure}[h]
    \scaletikzfig[0.7]{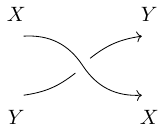}
    \caption{String diagram representation of a morphism $\beta_{X, Y} : X \otimes Y \to Y \otimes X$ in a braided monoidal category. Note how one wire is explicitly drawn on top of another. This is to emphasize that $\beta$ may not be equal to $\beta^{-1}$.}
    \label{fig:braiding}
  \end{figure}

  Formally this means that we additionally have a natural family of morphisms $\beta_{X, Y} : X \otimes Y \to Y \otimes X$ i.e.\ a natural isomorphism 
  \begin{tikzcddiag}
	{\cM \times \cM} & \cM
	\arrow[""{name=0, anchor=center, inner sep=0}, "\otimes", curve={height=-12pt}, from=1-1, to=1-2]
	\arrow[""{name=1, anchor=center, inner sep=0}, "{\otimes^{\rev}}"', curve={height=12pt}, from=1-1, to=1-2]
	\arrow["\beta", shorten <=3pt, shorten >=3pt, Rightarrow, 2tail reversed, from=0, to=1]
  \end{tikzcddiag}
  subject to axioms defined in the \cref{def:braided_monoidal_category}) in the appendix, where $\otimes^{\rev}$ is the reversed monoidal product on $\cM$ (\cref{def:reversed_monoidal_category}).
  Braiding often satisfies a particular property: one telling us braiding twice is the same as not doing it at all (\cref{fig:symmetric}).
  In this case, we call this category a \emph{symmetric} monoidal category.
  For a full formal definition \cref{def:symmetric_monoidal_category} in the appendix.
  
  \begin{figure}[h]
    \scaletikzfig[0.7]{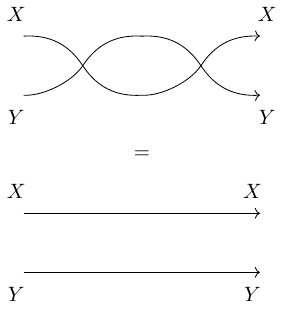}
    \caption{In a symmetric monoidal category braiding twice is \emph{equal} to the identity morphism. This means that $\beta = \beta^{-1}$, and in this case we do not distinguish between under- and over-crossings.}
    \label{fig:symmetric}
  \end{figure}

  Lastly, we mention that functors between monoidal categories can be monoidal in a few ways (lax, strong and strict) (\cref{def:lax_monoidal_functor}), and if the monoidal categories are braided (resp.\ symmetric), then the monoidal functor structure can additionally be braided (resp.\ symmetric) (\cref{def:braided_monoidal_functor}).

  \subsection{Cartesian monoidal categories}

  Cartesian monoidal categories are monoidal categories in which we think of processes as being \emph{deterministic}.
  They are often considered to live in the ``classical'' world (as opposed to quantum, or a probabilistic), as they allow unlimited copying and deleting of information.
  Cartesian monoidal categories are often referred to as ``cartesian categories'' where monoidality is implied.

  Formally, this means that the monoidal product has a particular universal property: it is \emph{a categorical product}.
  This means is that for any two objects $X, Y$ in $\cC$ there exist two maps out of $X \otimes Y$: $\pi_X : X \otimes Y \to X$ and $\pi_Y : X \otimes Y \to Y$ (called \emph{projections}), such that the triple $(X \otimes Y, \pi_X, \pi_Y)$ is the ``best cartesian product''.
  Being ``best'' means that any other candidate we might want to call a ``cartesian product'' (other such triples of objects $A :  \cC$ that come equipped with candidate projections $p_X : A \to X$ and $p_Y : A \to Y$) can be factored through $(X \otimes Y, \pi_X, \pi_Y)$.
  Formally, this means there exists a map $\langle p_X, p_Y \rangle : A \to X \otimes Y$ making the diagram (\cref{eq:product_univ_property}) commute.
  Moreover, this map has to be \emph{unique}, meaning there's no other maps satisfying this condition.
  
  \begin{equation}
    \label{eq:product_univ_property}
    \begin{tikzcddiag}[ampersand replacement=\&]
        \& A \\
        \\
        \& {X \otimes Y} \\
        X \&\& Y
        \arrow["{\pi_X}"', from=3-2, to=4-1]
        \arrow["{\pi_Y}", from=3-2, to=4-3]
        \arrow["{p_X}"', curve={height=12pt}, from=1-2, to=4-1]
        \arrow["{p_Y}", curve={height=-12pt}, from=1-2, to=4-3]
        \arrow["{\langle p_X, p_Y \rangle }"{description}, dotted, from=1-2, to=3-2]
      \end{tikzcddiag}
  \end{equation}

  In a cartesian category we often use $1$ to denote the monoidal unit (inspired by the monoidal unit of $\Set$ which is a one-element set), and $\times$ to denote their monoidal product (inspired by the cartesian product of sets).

  There are two equivalent approaches of showing a category is cartesian: a) by exhibiting the following (natural)  isomorphism
  \begin{equation}
    \label{eq:product}
      \cC(A, X \times Y) \cong \cC(A, X) \times \cC(A, Y)
  \end{equation}
  or b) by equipping every object $A : \cC$ with a unique comonoid structure $(A, \delta_A, \epsilon_A)$ and showing that all morphisms preserve comonoids (\cite{fox_coalgebras_1976}).

  The latter approach provides us with a formal way of augmenting the graphical language of the underlying monoidal structure of this cartesian category.
  That is, the comonoid maps $\delta_X$  and $\epsilon_X$ have special interpretations as the \emph{copy} $\scaletikzfignocenter[0.45]{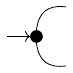}$ and $\emph{delete} \scaletikzfignocenter[0.5]{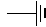}$ maps, drawn in a suggestive manner.
  The preservation of comonoid structure then unpacks to the following  equations:
  
\begin{figure}[h]
  \begin{subfigure}[c]{0.42\textwidth}
    \scaletikzfig[0.7]{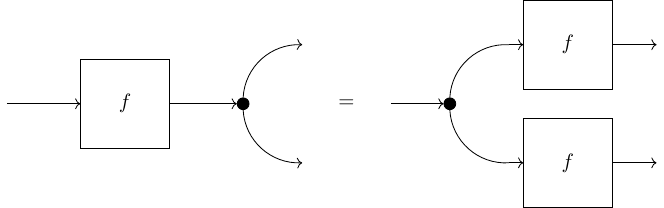}
    \label{fig:deterministic_morphism_copy}
  \end{subfigure}
  \hfill
  \begin{subfigure}[c]{0.42\textwidth}
    \scaletikzfig[0.7]{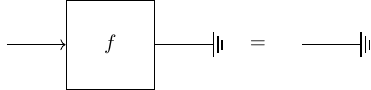}
    \label{fig:deterministic_morphism_delete}
  \end{subfigure}
  \caption{A morphism $f$ preserves comonoids if the above graphical equations are satisfied.}
  \label{fig:deterministic_morphism}
\end{figure}
describing (on the left) that applying the map $f$ and copying the result is the same as copying the result and applying $f$ individually to each copy.
This is not the case in probabilistic settings: rolling a dice and copying the output is not the same as rolling two die.
Likewise, the right equation describes that applying a map to an input, and then discarding the output is the same as just discarding the input.
In other words, there should be only one way to delete something.

  \begin{mybox}[label=box:scs_cartesian]{Cerulean}{States, costates, and scalars in a cartesian monoidal category}
    States, costates and scalars for cartesian monoidal categories follow those for monoidal categories.
    Additionally, maps into $1$ now trivialise.
    
    \begin{minipage}[t]{0.32\textwidth}
      \begin{tightcenter}
        States
      \end{tightcenter}
      \scaletikzfig[0.8]{state}
      \begin{align*}
             &\cC(1, X)\\
      \end{align*}
    \end{minipage}
    \hfill
    \begin{minipage}[t]{0.32\textwidth}
      \begin{tightcenter}
        Scalars
      \end{tightcenter}
      \scaletikzfig[0.8]{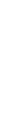}
      \begin{align*}
        &\cC(1, 1)\\
   \cong&1
      \end{align*}
    \end{minipage}
    \hfill
    \begin{minipage}[t]{0.32\textwidth}
      \begin{tightcenter}
        Costates
      \end{tightcenter}
      \scaletikzfig[0.8]{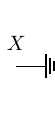}
      \begin{align*}
        &\cC(X, 1)\\
   \cong&1
      \end{align*}
    \end{minipage}
    \newline
    States do not reduce in general. Though, if $\cC$ is $\Set$, then $\Set(1, X) \cong X$.
    Scalars are drawn as an empty diagram, because there is only one possible map of type $1 \to 1$: the identity map.
    Costates also trivialise, because all morphisms have to be comonoid homomorphisms.
    And the counit preservation law of such a homomorphism implies that here can only be one morphism of type $X \to 1$: usually interpeted as the ``delete'' map described above.
  \end{mybox}

Most of the above mentioned monoidal categories are cartesian.
The notable exception is $(\Mark, \otimes, 1)$, which is not a cartesian category as its morphisms are probabilistic maps.

Given any cartesian category $\cC$ and any object $X$ therein we will always be able to form a particular category called $\CoKl(- \times X)$ (see \cref{subsec:local_vs_global}) that will allow us to describe properties of morphism that hold \emph{in one variable} (see \cref{def:additive_second_variable}).

\subsection{Monoids on every object = ``left-additive''}
\label{subsec:monoids_left_additive}

In addition to a category having the cartesian structure, we will often find that it comes equipped with a particular "additive" structure on each object.
For instance, in $\Smooth$ given any object $\R^n$ we can always define the operation $+ : \R^n \times \R^n \to \R^n$ pointwise via the addition of $\R$, and its netural element as a map $0 : \R^0 \to \R^n$ picking out zeroes.
Categorically this is captured by every object $A : \cC$ being equipped with a monoid structure in a manner that interacts coherently with the existing cartesian structure.
Such categories are often called \emph{cartesian left-additive} categories.

\begin{definition}[{\cite[Proposition 1.2.2.]{blute_cartesian_2009}}]
  \label{def:clac}
  A cartesian category $\cC$ is said to be cartesian \newdef{left-additive} (CLA) if every object $A : \cC$ is equipped with a commutative monoid $(+_A, 0_A)$ compatible\footnote{In particular, this compatibility means that the monoid maps $+_A : A \times A \to A$ and $0_A : 1 \to A$ are comonoid homomorphisms, giving a bimonoid structure to each object.} with the cartesian structure, i.e.\ such that for all objects $A, B : \cC$ the following diagrams commute:

  \begin{minipage}{.65\textwidth}
    \centering
    \[\begin{tikzcddiag}
    	{(A \times B) \times (A \times B)} && {A \times B} \\
    	\\
    	{(A \times A) \times (B \times B)}
    	\arrow["{+_{A \times B}}", from=1-1, to=1-3]
    	\arrow["{\interchanger_{A, B, A, B}}"', from=1-1, to=3-1]
    	\arrow["{+_A \times +_B}"', from=3-1, to=1-3]
    \end{tikzcddiag}\]
  \end{minipage}%
  \begin{minipage}{0.3\textwidth}
    \centering
    \[\begin{tikzcddiag}[ampersand replacement=\&]
        1 \&\& {A \times B} \\
        \\
        {1 \times 1}
        \arrow["{0_{A \times B}}", from=1-1, to=1-3]
        \arrow["{0_B \times 0_B}"', from=3-1, to=1-3]
        \arrow["\cong"', from=1-1, to=3-1]
      \end{tikzcddiag}\]
  \end{minipage}
\end{definition}

where here $\interchanger_{A, B, A, B}$ is the interchanger (\cref{def:interchanger}).

The categories $\Smooth$ and $\Poly_R$ are examples of cartesian left-additive categories.
Specifically for $\R$, the category $\PolyR$ is a cartesian left-additive \emph{subcategory} of $\Smooth$.
The categories $\Set$, $\Cat$ and $\N$ are not examples of cartesian left-additive categories.

\begin{remark}[Why ``left-additive''?]
  \label{rem:why_left_additive}
  A cartesian left-additive category is called so because for every $A, B : \cC$ we can always define a monoid struture on the hom-set $\cC(A, B)$.\footnote{Sometimes this monoid structure is also called \emph{convolution} (\cite[Sec. 4.1]{genovese_escrows_2021}).} 
  That is, we can define a function $+ : \cC(A, B) \times \cC(A, B) \to \cC(A, B)$ mapping $(f, g)$ to $f + g$ where 
  \[
    f + g \coloneqq \boxed{A \xrightarrow{\Delta_A} A \times A \xrightarrow{f \times g} B \times B\xrightarrow{+_B} B}
  \]
  and a function $0 : 1 \to \cC(A, B)$ mapping $\bullet : 1$ to $\boxed{A \xrightarrow{\terminal_A} 1 \xrightarrow{0_B} B}$.
  The adjective \emph{left}-additive is there because this monoid is not ``fully'' natural.
  Only composition from the left (pre-composition) fully preserves the additive structure (see \cite[Def.\ 1]{cockett_reverse_2020}).
  In what follows we will see that additionally preserving the hom-set monoid by composition from the right yields an \emph{additive category}.
\end{remark}

Categories that have monoids on every object come equipped with a graphical way of interpreting these monoids, analogously to the graphical language of categories with commutative \emph{comonoids} on every object (such as cartesian categories defined above).
That is, the monoid maps $+_X$  and $0_X$ have special interpretations as the \emph{sum} $\scaletikzfignocenter[0.45]{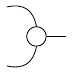}$ and $\emph{zero} \scaletikzfignocenter[0.5]{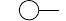}$ maps, drawn in a suggestive manner with string diagrams.

An arbitrary morphism in a cartesian left-additive category $\cC$ is not required to preserve its monoid structure.
If it does, we call it an \emph{additive} morphism.

\begin{definition}[{Additive morphism, \cite[Definition 1.1.1]{blute_cartesian_2009}}]
  \label{def:additive_morphism}
A morphism $f : A \to B$  in a cartesian left-additive category is \newdef{additive} if it
preserves the monoid structure, i.e.\ the following diagrams commute:

\begin{minipage}{.5\textwidth}
  \centering
  \[\begin{tikzcddiag}[ampersand replacement=\&]
      {A \times A} \&\&\& A \\
      \\
      {B \times B} \&\&\& B
      \arrow["{+_A}", from=1-1, to=1-4]
      \arrow["{f \times f}"', from=1-1, to=3-1]
      \arrow["f", from=1-4, to=3-4]
      \arrow["{+_B}"', from=3-1, to=3-4]
    \end{tikzcddiag}\]
\end{minipage}%
\begin{minipage}{0.5\textwidth}
  \centering
  \[\begin{tikzcddiag}[ampersand replacement=\&]
      1 \&\&\& A \\
      \\
      \&\&\& B
      \arrow["{0_A}", from=1-1, to=1-4]
      \arrow["f", from=1-4, to=3-4]
      \arrow["{0_B}"', from=1-1, to=3-4]
    \end{tikzcddiag}\]
\end{minipage}

\end{definition}

In equations, these diagrams unpack to ${f(a +_A a') = f(a) +_B f(a')}$ and $0_B = f(0_A)$.
Additive morphisms form a subcategory of $\cC$.

\begin{definition}
  \label{def:additive_maps}
  Let $\cC$ be a cartesian left-additive category.
  We use $\asc{\cC}$ to denote its subcategory of additive morphisms i.e.\ the category of commutative monoids and commutative monoid homomorphisms in $\cC$.
\end{definition}

Analogously to how a category where all maps preserve the underlying comonoid structure is cartesian (i.e.\ has products), one where they preserve the underlying monoid strucure is \emph{cocartesian} (i.e.\ has coproducts).
In this case, products and coproducts coincide, making this is biproduct category.

\begin{proposition}[$\asc{\cC}$ is a biproduct category]
  \label{prop:linsc_biproduct}
See \cite[Example 2.1]{cockett_reverse_2020}.
\end{proposition}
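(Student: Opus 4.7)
The plan is to show that the cartesian product of $\cC$ restricted to $\asc{\cC}$ serves simultaneously as a product and a coproduct. The object part is forced: biproducts must have the same underlying object as products, so for $A, B : \asc{\cC}$ the only candidate biproduct is $A \times B$ taken from $\cC$. The work lies in exhibiting both universal properties inside $\asc{\cC}$.

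First I would verify that $A \times B$ is still a product in $\asc{\cC}$. This amounts to checking that (i) the projections $\pi_A, \pi_B$ are additive morphisms, and (ii) given additive $f : Z \to A$ and $g : Z \to B$, the pairing $\langle f, g \rangle : Z \to A \times B$ in $\cC$ is also additive. Both follow directly from the two compatibility squares in \cref{def:clac}: the square involving $\interchanger_{A,B,A,B}$ is exactly what says the interchange of sums commutes with the product structure, so that pointwise addition on $A \times B$ agrees with componentwise addition, which is precisely additivity of $\pi_A, \pi_B$ and of $\langle f, g \rangle$.

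Next I would construct coproduct injections from the monoid structure. Define $\iota_A \coloneqq \langle \id_A, 0_B \comp \terminal_A \rangle$ and $\iota_B \coloneqq \langle 0_A \comp \terminal_B, \id_B \rangle$; these are additive because $\id$, $\terminal$, and $0$ are. Given additive $f : A \to C$ and $g : B \to C$, the candidate copairing is $[f,g] \coloneqq {+_C} \comp (f \times g) : A \times B \to C$, which is additive as a composite of additive maps (using the CLA compatibility again to see that $f \times g$ is additive and that $+_C$ is by definition). The computations $[f,g] \comp \iota_A = f$ and $[f,g] \comp \iota_B = g$ reduce to the left unit law of the monoid $(+_C, 0_C)$ after $f$ absorbs the zero via the additivity equation $f \comp 0_A = 0_C$.

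The main obstacle, and the step I would focus the most care on, is uniqueness of the copairing — equivalently, the ``Fox-style'' observation that additivity forces every morphism out of a product to factor as a copairing. Concretely, for any additive $h : A \times B \to C$ one shows $h = [h \comp \iota_A, h \comp \iota_B]$ by writing $\id_{A \times B} = {+_{A \times B}} \comp (\iota_A \times \iota_B) \comp \Delta_{A \times B}$ (the ``Mac Lane-style'' decomposition) and then using additivity of $h$ to push $h$ past $+_{A \times B}$, followed by the CLA compatibility squares to rearrange the remaining product and interchanger. Once this is established, products and coproducts coincide with the requisite biproduct equations $\pi_A \comp \iota_A = \id_A$, $\pi_A \comp \iota_B = 0$, etc., all of which fall out of the definitions of $\iota_A, \iota_B$ and the monoid axioms, completing the proof that $\asc{\cC}$ is a biproduct category.
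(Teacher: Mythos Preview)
The paper does not actually supply a proof of this proposition: it defers entirely to \cite[Example 2.1]{cockett_reverse_2020}. So there is no in-paper argument to compare against, and your direct verification is more detailed than what the thesis itself provides. Your overall strategy is the standard one and is sound: verify that the cartesian structure restricts to $\asc{\cC}$, build injections from the monoid units, define the copairing via $+_C$, and deduce uniqueness from additivity.

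There is one small slip worth fixing. Your decomposition of the identity,
\[
\id_{A \times B} = {+_{A \times B}} \comp (\iota_A \times \iota_B) \comp \Delta_{A \times B},
\]
does not typecheck: $\iota_A \times \iota_B$ already has domain $A \times B$, so prepending $\Delta_{A \times B}$ is wrong (and if you keep $\Delta$, you need projections $\pi_A, \pi_B$ in between). The correct and simpler statement is just
\[
\id_{A \times B} = (\iota_A \times \iota_B) \comp {+_{A \times B}},
\]
since $(a,b) \mapsto ((a,0),(0,b)) \mapsto (a,b)$. With this in hand, uniqueness is a one-liner: for additive $h : A \times B \to C$,
\[
h = (\iota_A \times \iota_B) \comp {+_{A \times B}} \comp h = (\iota_A \times \iota_B) \comp (h \times h) \comp {+_C} = [\iota_A \comp h,\, \iota_B \comp h],
\]
using additivity of $h$ in the middle step. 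No interchanger juggling is needed at this point. Everything else in your outline is correct.
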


The categories $\PolyR$ and $\Smooth$ share the subcategory of additive maps, i.e.\ $\asc{\PolyR} = \asc{\Smooth}$, and this subcategory is equivalent to the category $\FVectR$ of linear maps between finite-dimensional vector spaces (\cite[Ex. 2.7]{ikonicoff_cartesian_2023}).

So far we have talked about additivity of a morphism with respect to the entirety of its domain.
If we want to reason about additivity (or any other property) holding in \emph{one variable} only, we can can make use of the previously mentioned category $\CoKl(- \times X)$.

\begin{definition}[{Additive in second component, (compare \cite[Lemma 1.2.3]{blute_cartesian_2009})}]
  \label{def:additive_second_variable}
A morphism $f : X \times A \to B$  is additive in the variable $A$ if it is an additive morphism of type $A \to B$ in the cartesian left-additive category $\CoKl(X \times -)$.\footnote{See \cref{subsec:local_vs_global} for the definition of $\CoKl(X \times -)$.}
\end{definition}

It can routinely be shown by unpacking the condition of an additive morphism that the condition of additivity here ends up pertaining only to the component $A$.

\subsubsection{Peculiarities of the category $\FVectR$}
\label{peculiarities_fvect}

The category $\FVectR$ is full of intricate structure.
The one most relevant to us are its multple monoidal structures.
Products in $\FVectR$ (the ones defined as in $\Smooth$) here also become coproducts, making $\FVectR$ a biproduct category.
Often this biproduct is denoted with $(\oplus, \R^0)$ since it can computed by taking the coproduct of the basis sets.
Taking the \emph{product} of the basis sets gets us the other important monoidal structure on $\FVectR$: the tensor product $(\otimes, \R)$.
This is a monoidal structure that is neither cartesian nor cocartesian, but has a separate universal property.%

This universal property states that $U \otimes V$ --- an object of $\FVectR$ --- comes equipped with a bilinear map $U \times V \to U \otimes V$ (this map is often also denoted by $\otimes$ since it takes two vectors and computes their tensor product) such that for any other object $X$ and any other bilinear map $U \times V \to X$ there is a unique \emph{linear} map $U \otimes V \to X$ making the diagram commute.\footnote{This diagram, perhaps surprisingly, does not live in $\FVectR$, but rather in $\Set$, or more precisely in the multicategory of finite dimensional vector spaces and multilinear maps, which we only mention for completeness.}
\begin{equation}
  \label{eq:universal_prop_tensor_product}
  \begin{tikzcddiag}[ampersand replacement=\&]
    {U \times V} \&\& {U \otimes V} \\
    \&\& X
    \arrow["{\text{bilinear}}", from=1-1, to=1-3]
    \arrow["{\text{bilinear}}"', from=1-1, to=2-3]
    \arrow["{\text{linear}}", dashed, from=1-3, to=2-3]
  \end{tikzcddiag}
\end{equation}
The category $\FVectR$ is closed with respect to this monoidal structure, where $\internalHom{\FVectR}{\R^n}{\R^m} = \R^{n \times m}$.
There is an identity-on-objects faithful functor $\iota : \FVectR \to \Smooth$, making $\FVectR$ a subcategory of $\Smooth$.
As $\Smooth$ is cartesian, and $\FVectR$ has two monoidal structures, it becomes natural to ask whether $\laxsmoothiota$ is a monoidal functor.
This is indeed the case: this functor becomes a monoidal functor in two different ways depending on the monoidal structure of the domain.
With $(\oplus, \R^0)$ this monoidal functor becomes a product-preserving functor, meaning it is strong monoidal.
Interestingly, with the tensor product on the domain $\iota$ is only lax monoidal.
\begin{lemma}
  \label{ex:iota_lax_monoidal}
  The functor $\laxsmoothiota : (\FVectR, \otimes, \R) \to (\Smooth, \times, \R^0)$ is lax monoidal, and its data is defined as follows.
  \begin{itemize}
  \item The laxator $\phi_{\R^n, \R^m}$, as $\laxsmoothiota$ is identity-on-objects, unpacks to a map of type $\R^n \times \R^m \to \R^n \otimes \R^m$. It is given by the bilinear structure map in \cref{eq:universal_prop_tensor_product} defining the tensor product of vector spaces.
  That is, the laxator $\phi_{\R^n, \R^m}(u, v) = uv^\top$ computes the outer product of $u$ and $v$.
  \item The unitor unpacks to a map of type $\R^0 \to \R$, and is given by the constant map at the multiplicative identity $1 : \R$.
  \end{itemize}
\end{lemma}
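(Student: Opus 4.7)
The plan is to exhibit the data $(\phi, \eta)$ as proposed and then verify the three axioms of a lax monoidal functor: naturality of the laxator, associativity coherence, and the two unit coherences. Since $\laxsmoothiota$ is identity-on-objects, I first fix an identification $\R^n \otimes \R^m \cong \R^{nm}$ coming from the standard tensor basis $e_i \otimes e_j \mapsto e_{(i-1)m + j}$; under this identification the bilinear map $(u,v) \mapsto uv^\top$ from the universal property in \cref{eq:universal_prop_tensor_product} becomes a map $\R^n \times \R^m \to \R^{nm}$ whose components are the products $u_i v_j$. This is polynomial in the coordinates, hence smooth, so $\phi_{\R^n, \R^m}$ is a legitimate morphism of $\Smooth$. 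The unitor $\eta : \R^0 \to \R$ picking out $1$ is vacuously smooth.

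Next I would verify naturality of $\phi$. For linear maps $f : \R^n \to \R^{n'}$ and $g : \R^m \to \R^{m'}$, applying $\laxsmoothiota$ to $f \otimes g$ followed by $\phi$ agrees with first pairing along $f \times g$ and then applying $\phi$, because both compute $(u,v) \mapsto (fu)(gv)^\top = f \cdot uv^\top \cdot g^\top$; this is exactly the defining property of $f \otimes g$ on rank-one tensors together with bilinearity. Naturality of $\eta$ is trivial since $\R^0$ is terminal in $\Smooth$.

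For the associativity hexagon, I would chase a generic triple $(u, v, w) \in \R^n \times \R^m \times \R^k$ around both sides. The path through $(A \otimes B) \otimes C$ produces the trilinear element $(u \otimes v) \otimes w$, while the path through $A \otimes (B \otimes C)$ produces $u \otimes (v \otimes w)$; the associator $\iota \alpha$ on the right identifies these by the standard triple-tensor isomorphism, and on the left the cartesian associator is the identity up to rebracketing, so commutativity follows from the universal property of the triple tensor product (equivalently, associativity of the outer product viewed as a trilinear map). The two unit triangles reduce to the statement that $\lambda_V : \R \otimes V \to V$ and $\rho_V : V \otimes \R \to V$ send $1 \otimes v$ and $v \otimes 1$ respectively to $v$; composing with $\phi \circ (\eta \times \id)$ and $\phi \circ (\id \times \eta)$ then gives the cartesian unitors $\lambda^\times$ and $\rho^\times$ on the nose.

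The main obstacle, as I see it, is not any individual computation but the consistent bookkeeping between two different kinds of structure: we are comparing the cartesian structure of $\Smooth$ (which is strict up to the canonical reassociations of tuples) with the tensor structure of $\FVectR$ (whose associator is a genuine linear isomorphism implementing $(u \otimes v) \otimes w \mapsto u \otimes (v \otimes w)$). Because $\laxsmoothiota$ is identity-on-objects, this asymmetry is hidden by the chosen basis identification $\R^n \otimes \R^m \cong \R^{nm}$, and one must check that the reindexing $e_{(i-1)m + j} \leftrightarrow e_i \otimes e_j$ is compatible with the three-fold nested version used in the hexagon. Once this identification is spelled out, each coherence diagram collapses to a routine verification that the outer product is natural, associative, and unital in the expected sense, and strongness fails only because $\phi$ is not surjective on $\R^{nm}$ (its image is the rank-one locus), which is precisely why the functor is \emph{lax} rather than strong.
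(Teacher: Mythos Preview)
Your proof is correct and in fact more thorough than what the paper provides: the paper states the lemma with its data (laxator as outer product, unitor as the constant $1$) but gives no explicit verification of the coherence axioms. Your verification of naturality via $(fu)(gv)^\top = f(uv^\top)g^\top$, the associativity hexagon via the triple outer product, and the unit triangles via $1 \otimes v \mapsto v$ is exactly the routine check the paper leaves implicit, and your closing observation that $\phi$ is not invertible because its image is the rank-one locus is a nice explanation of why the functor is genuinely lax rather than strong.
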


\subsection{Notation}
\label{subsec:notation}

We use uppercase letters $A, B, \dots$ to denote objects of categories and lowercase letters $f, g, \dots$ to denote their morphisms.
The identity morphism on an object $A$ is denoted by $\id_A$ or simply just $A$, trusting that is clear from the context that $A$ refers to a morphism, and not the object.
When defining composite morphisms, we often use the blueprint
\[
f \coloneqq \boxed{A \xrightarrow{f_1} B \xrightarrow{f_2} C \xrightarrow{f_3} D}
\]
to define $f : A \to D$ as the composite of $f_1$, $f_2$ and $f_3$.
Otherwise, we use the diagrammatic notation, i.e.\ $f = f_1 \comp f_2 \comp f_3$.

We use the boldface font for named categories ($\Set, \FVect_F, \Cat, \dots$), otherwise we use caligraphic letters ($\cC, \cD, \dots$).
In a 2-category, we use use $\to$ to denote 1-morphisms, and $\Rightarrow$ to denote 2-morphisms.
In $\FVect_F$ we often use $\multimap$ instead of $\to$ as arrows to emphasize the property of morphisms being linear.

In categories with products we label projections using $\pi$, subscripted with either indices, or names of objects.
For instance, the projection $A \times B \times C \to C$ might be written as $\pi_3$ or $\pi_C$.
For a composite $g \comp \pi_C$ we sometimes write $f_3$ to save space.
For categories with coproducts we label injections using $i$ analogously, i.e.\ the injection $B \to A + B + C$ might be labelled either as $i_{2}$ or $i_B$.
We sometimes write pairs as $\diset{A}{A'}$ to emphasize their usage in a bidirectional setting: the top object $A$ is the forward object, while the bottom object $A'$ is the backward one.
We always additionally mark the object of the backward pass with a superscript $'$.
For composition of parametric morphisms we often use the superscript $\compPara{}$, while for coparametric ones use the subscript $\compCopara{}$.
This applies to both sequential ($\compPara{\comp}, \compCopara{\comp}$) and parallel ($\compPara{\boxtimes}, \compCopara{\boxtimes}$) composition.\footnote{A good way to remember this is via their graphical languages: reparameterisations in $\Para$ are composed from the top, while reparameterisations in $\Copara$ from the bottom.}

We call the structure morphisms of both lax monoidal functors and lax functors ``unitors'' (when it comes to identities) and ``laxators'' (when it comes to composition).
We use $\cC / A$ to denote the slice category over $A : \cC$. %
When referring to internal hom objects of some category $\cC$, we write $\internalHom{\cC}{A}{B}$.
We write $1$ for the set $\{ \bullet \}$ with one element, $\CatTerm$ for the category with one object and one morphism, and, in a category with a terminal object, $\terminal_A$ for the unique morphism from some object $A$ to the terminal one.
Given an element of a set $a : A$, we use $\name{a} : 1 \to A$ for the ``name'' of $a$: the function picking out that element in $A$.

	\chapter{Parameterisation}
\label{ch:para}

\epigraph{The purpose of abstraction is not to be vague, but to create a new semantic level in which one can be absolutely precise.}{Edsger W. Dijkstra}

\newthought{Thought of as processes,} the forward passes of neural networks have a few distinguishing characteristics.

Firstly, the space of neural network weights i.e.\ their parameter space is an additional type of inputs to this process, playing a special role.
For instance, no matter whether we compose neural networks in sequence, or in parallel, the parameter spaces are always composed in parallel.
Likewise, it is also possible to reparameterise neural networks, for instance by tying weights together, or by reindexing them to a more convenient space.
All of these compositions and reparameterisations satisfy particular properties. For instance, reparameterising two morphisms individually and then composing the results is equal to composing the original morphisms and applying the product of original reparameterisations.
Or, reparameterising a morphism twice with different morphisms is equal to reparameterising it once with their composite.
But are there others?
Is there a compact way to describe this algebraic structure, and all the underlying invariants it possesses?

Second, the way neural networks are specified in the literature is often needlesly specific, or unnecessarily vague.
For example, the description of forward passes often relies on euclidean spaces, despite neural networks being well-defined in complex-valued spaces (\cite{bassey_survey_2021}), graphs (\cite{wu_comprehensive_2021}), simplicial and cellular complexes (\cite{papillon_architectures_2023}), and even boolean circuits (\cite{wilson_reverse_2021}).
What minimal structure do we need to talk about proceses with parameters, and their algebra of composition?

And lastly, in many different fields there are analogous concepts of ``hidden'' inputs:

\begin{itemize}
\item They manifest as weights in neural networks (\cite{fong_backprop_2021, cruttwell_categorical_2022});
\item They represent strategies of players in game theory (\cite{capucci_towards_2022, capucci_diegetic_2023});
\item They form closures of programs (\cite{new_closure_2017});
\item They serve as sources of randomness in probabilistic programming (\cite{heunen_convenient_2017});
\end{itemize}

In all of them we can reason about notions of reparameterisation, where for instance weight tying constraints two players to play the same strategy, or in probabilistic programming we can bind two random variables together.

Is there a uniform formalism that captures all of these?
Is there a formalism that is stable across changes of architectures, and one that will later permit dealing with the flow of gradients going backwards?
Likewise, analogous to the graphical language of string diagrams for processes in monoidal categories, is there an analogous graphical language for parametric processes?

In this thesis we give a positive answer to all of these, using the construction of \emph{actegories} (\cref{sec:actegories}) and the $\Para$ construction (\cref{sec:para}).

\begin{remark}
  I've made the pedagogical decision not to throw all the actegory and $\Para$ constructions and the definitions at the reader in this chapter.
  Instead, they appear throughout the next chapter as well, thoroughly motivated.
\end{remark}

\begin{contributions}
  Large fragments of this chapter contain novel research.
  While the definition of actegories and the definition of $\Para$ are not novel, a lot of the surrounding research around it is.
  This includes notions of reparameterisation of actegories (\cref{not:reparameterisation}), strictification and quotients of $\Para$ (\cref{box:frombicatto2cat,box:from2cattocat,box:frommon2cattomoncat}), trivialisation condition (\cref{thm:para_trivialises_when_unit_initial}), correspondence between local and global contexts (\cref{subsec:local_vs_global}), monoidal structure of actegories and $\Para$ (\cref{sec:monodial_actegories,sec:monoidal_para}).
  Parts of this chapter previously appeared in my publications \cite{capucci_actegories_2023,capucci_towards_2022}.
\end{contributions}

\begin{epistemicstatus}
  The fact that the formulation of $\Para$ yields a relatively elegant construction makes me confident that the research here is on the right track, especially when it comes to the emphasis on its 2-dimensional stucture as something that should be explicitly tracked, instead of quotiented out.
  On the other hand, I expect its formulation via actegories to be cleaned up.
  In  \cref{sec:act_para_literature} I describe two independent generalisations of actegories: locally graded categories and dependent actegories, both of which I believe remove many artifical identifications that are made in the actegorical framework. 
  Likewise, while the 2-dimensional graphical language of $\Para$ is freely used, no formal rules for this particular kind of a graphical language have been established in the literature, nor has its connection to the graphical language of double categories been studied.
  There is therefore some uncertainty about the actual boundaries of this language, despite which its use in the literature has only increased.
\end{epistemicstatus}

\section{Actegories and the $\Para[false]$ construction}
\label{sec:actegories}

The goal of this section is to establish the categorical structure necessary to reason about parametric processes.

In essence, we aim to construct a category where a morphism $A \rightarrow B$ encapsulates both a parameter space $P$ and the ``implementation'' map of type $f : A \otimes P \rightarrow B$.
Since the domain is a product, this necessitates the provision of a underlying monoidal category $C$.
For instance, this would allow us to reason about parametric maps $f : A \otimes (P \otimes Q \otimes R) \to B$ where $A$ and $B$ are classical inputs and outputs, respectively, while $P \otimes Q \otimes R$ is the parameter space.
However, this does not capture the full extent of what we can do with parameters: it falls short in scenarios where the types of parameters and the types of standard inputs differ. 

This calls for an additional layer of refinement --- one provided by ``actegories'', a term we use to describe actions of monoidal categories (\cite{capucci_actegories_2023,janelidze_note_2001}).
Actegories are not a new construction. They have previously appeared in numerous incarnations in the literature (see \cref{sec:act_para_literature}).
Actegories will serve as our formal framework to distinguish the types of parameters from the types of usual inputs.
We will see how properties of actegories translate to properties of various parametric morphisms, and many of the theorems here will also shed light on our subsequent discussions on bidirectionality (\cref{ch:bidirectionality}).
We give their description below, after which we describe how they can be used to talk about parametric morphisms.

\begin{definition}[{Actegory (\cite{capucci_actegories_2023, janelidze_note_2001})}]
  \label{def:actegory}
  Let $(\cM, \otimes, I)$ be a monoidal category.
  A $(\cM, \otimes, I)$-actegory $\cC$ consists of a category $\cC$ equipped with a functor
\[
  \act : \cC \times \cM \to \cC
\]
and two natural isomorphisms called \newdef{the unitor} and \newdef{the multiplicator}

\begin{equation}
  \label{eq:actegory_coherence}
\begin{tikzcddiag}[ampersand replacement=\&,sep=scriptsize]
	\cC \&\&\&\& {\cC \times \cM \times \cM} \&\& {\cC \times \cM} \\
	\\
	{\cC \times \cM} \&\& \cC \&\& {\cC \times \cM} \&\& \cC
	\arrow["\act", from=1-7, to=3-7]
	\arrow["\act"', from=3-5, to=3-7]
	\arrow["{\cC \times \otimes}"', from=1-5, to=3-5]
	\arrow["{\act \times \cM}", from=1-5, to=1-7]
	\arrow["\mu"', shorten <=15pt, shorten >=15pt, Rightarrow, 2tail reversed, from=1-7, to=3-5]
	\arrow[""{name=0, anchor=center, inner sep=0}, Rightarrow, no head, from=1-1, to=3-3]
	\arrow["\act"', from=3-1, to=3-3]
	\arrow["{\langle \cC , I \rangle}"', from=1-1, to=3-1]
	\arrow["\eta"', shorten <=6pt, shorten >=6pt, Rightarrow, 2tail reversed, from=0, to=3-1]
\end{tikzcddiag}
\end{equation}
whose components at each $C : \cC$ and $M, N : \cM$ are explicitly the maps
\begin{equation}
  \label{eq:actegory_coherence_components}
  \eta_c : C \xrightarrow{\cong} C \act I  \quad\quad \text{and} \quad\quad \mu_{C, M, N} : C \act (M \otimes N) \xrightarrow{\cong} (C \act M) \act N
\end{equation}

satisfying the coherence laws defined in \cref{def:act_coherence}.
\end{definition}

\begin{notation}
  We often refer to the above actegory as an ``$\cM$-actegory $(\cC, \act)$'', leaving $\otimes$, $I$, $\eta$ and $\mu$ implicit.
  When multiple actegories and their natural isomorphisms are in scope, we might refer to the natural isomorphisms as $\eta^\act$ and $\mu^\act$, superscripting the relevant action functor.
\end{notation}

This is often called a \emph{right} $\cM$-actegory, because we think of $\cM$ as acting on $\cC$ from the right side.
That is, acting with $M \otimes N$ on $C$ from the right side multiplies $C$ first by $M$, and then by $N$ (giving us $(C \act M) \act N$).
One can analogously define a \emph{left} $\cM$-actegory $\cC$, where the action happens from the left side, i.e.\ where acting with $M \otimes N$ on $C$ would multiply $C$ first by $N$, and then by $M$ (giving us $M \act (N \act C)$).
Categorically, the side from which ``an action happens'' is defined by the orientation of the monoidal product on $\cM$: in addition to $(\cM, \otimes, I)$ we can always form \emph{the reversed monoidal product} (\cref{def:reversed_monoidal_category}).
Then a right actegory defined on the monoidal category $(\cM, \otimes^{\rev}, I)$ is in the literature defined as a left $(\cM, \otimes, I)$-actegory. (\cite[Remark 3.1.3]{capucci_actegories_2023})

Even though we don't need any extra structure to \emph{define} right actegories in terms of left ones, in order to \emph{turn} a specific right actegory into a left one we will additionally need $\cM$ to be braided monoidal.

\begin{restatable}{lemma}{BraidedRightToLeft}
  \label{lemma:braided_right_to_left}
  If $(\cM, \otimes, I)$ is a braided monoidal category, then any right $(\cM, \otimes, I)$-actegory can be turned into a left $(\cM, \otimes^{\rev}, I)$-actegory, and vice-versa.
\end{restatable}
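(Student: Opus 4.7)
The plan is to construct the left action by pre-composing the right action with the swap functor. Given a right $(\cM, \otimes, I)$-actegory $(\cC, \act, \eta, \mu)$, I define a candidate left action functor
\[
\act' : \cM \times \cC \to \cC, \qquad \act' \coloneqq \left(\cM \times \cC \xrightarrow{\sigma} \cC \times \cM \xrightarrow{\act} \cC\right),
\]
where $\sigma$ is the symmetry of $\Cat$. On objects this gives $M \act' C = C \act M$, which is automatically a functor since it is the composite of two functors.

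Next I equip $\act'$ with the structure isomorphisms required of a left $(\cM, \otimes^{\rev}, I)$-actegory. For the left unitor $\eta'_C : C \xrightarrow{\cong} I \act' C$, observe that $I \act' C = C \act I$, so $\eta_C$ itself serves as $\eta'_C$. For the multiplicator $\mu'_{M, N, C} : (M \otimes^{\rev} N) \act' C \xrightarrow{\cong} M \act' (N \act' C)$, unpacking the definitions yields an isomorphism $C \act (N \otimes M) \xrightarrow{\cong} (C \act N) \act M$, which is precisely $\mu_{C, N, M}$ from the right actegory structure. The resulting data satisfies the pentagon and triangle axioms for a left $(\cM, \otimes^{\rev}, I)$-actegory because those diagrams are the corresponding diagrams for the right $(\cM, \otimes, I)$-actegory under the renaming $M \otimes^{\rev} N \leftrightarrow N \otimes M$, and the swap functor is an involution so no genuine new coherence appears.

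The braiding $\beta$ of $\cM$ enters because it provides a strong monoidal equivalence $(\cM, \otimes, I) \simeq (\cM, \otimes^{\rev}, I)$, and hence allows us to transport actegory structures along this equivalence if we wish to land in an actegory over the original monoidal structure rather than its reverse. For the ``vice versa'' direction I would apply the same swap construction to a left $(\cM, \otimes^{\rev}, I)$-actegory, verifying that the two operations are mutually inverse, which is routine because $\sigma$ is self-inverse. I expect the main obstacle to be purely bookkeeping: ensuring that the triangle involving $\eta$ and the monoidal unitors of $\cM$, as well as the pentagon chained with the hexagon when transporting through $\beta$, match on the nose rather than up to further coherence. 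No conceptual step is deep here; the content is in choosing the right convention and unpacking $\otimes^{\rev}$ correctly.
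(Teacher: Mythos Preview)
Your swap construction is correct for the statement as literally written, and you rightly note that no braiding is needed there: a right $(\cM,\otimes,I)$-actegory and a left $(\cM,\otimes^{\rev},I)$-actegory are the same data up to reordering arguments, which is essentially the paper's own definitional convention that a left $\cM$-actegory \emph{is} a right $\cM^{\rev}$-actegory. The paper's proof, however, targets something different: it constructs a right $(\cM,\otimes^{\rev},I)$-actegory, equivalently a left $(\cM,\otimes,I)$-actegory (the lemma statement appears to contain a typo in this regard). To do this the paper keeps the action functor $\act:\cC\times\cM\to\cC$ and the unitor $\eta$ unchanged, and redefines only the multiplicator using the braiding:
\[
\mu^{\rev}_{C,M,N}\;\coloneqq\;\Bigl(C\act(M\otimes^{\rev}N)\xrightarrow{\,C\act\beta\,}C\act(M\otimes N)\xrightarrow{\,\mu_{C,M,N}\,}(C\act M)\act N\Bigr).
\]
Your remark that braiding enters via the strong monoidal equivalence $(\cM,\otimes)\simeq(\cM,\otimes^{\rev})$ is the correct global picture, and composing your swap with that transport lands in the same place. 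The paper's route has the advantage of isolating exactly where $\beta$ is used---solely in the multiplicator---and of making the coherence check concrete: the pentagon for $\mu^{\rev}$ follows from the pentagon for $\mu$ together with naturality of $\beta$, rather than being hidden inside the phrase ``transport along a monoidal equivalence''.
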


\begin{proof}
  We leave the full proof to the appendix (\cref{appendix:actegories}), here only mentioning that the braiding is required only when defining the multiplicator of this left-actegory, as it is the only component of the actegory interacting with the monoidal structure of $\cM$.
\end{proof}

In braided settings we will often omit any notational distinction between the two actions.

\begin{remark}
  \label{rem:act_left_vs_right}
  Even though actegories can be defined without braiding on the base, we will see that in order to state most theorems about them braiding will be necessary, and --- as it turns out --- existent in all our applications.
  Despite the equivalence of left and right actions under braiding, we will often make a notational distinction between them in the context of $\Para$ and $\Copara$ constructions (\cref{sec:para} and \cref{sec:copara}).
  This is inspired by \cref{sec:act_para_literature} where we will briefly descibe a generalisation of actegories called \emph{lax actegories} where such equivalence is nonexistent, and also by the general principle of being cognisant of the arrow of time: things which appear first in time will appear first in notation, reading left to right.
\end{remark}

\begin{example}
  \label{ex:self_action}
  If $\cC$ is a monoidal category, then we can consider it as acting on itself from either left or right.
  From the right side, the action is defined by $\otimes$, the unitor $\rho^{-1}$ and the multiplicator $\alpha^{-1}$.
  From the left side, the action is defined by $\otimes^{\rev}$, the unitor $\lambda^{-1}$ annd the multiplicator $\alpha$.
\end{example}

Examples abound.
For us the most relevant will be $\Set$, $\FVect$, and $\Smooth$, both of which have at least one monoidal structure.
Actegories allow us to additionally restrict the type of parameters involved in the action, something that is not possible within the framework of just monoidal categories.

\begin{example}
  \label{ex:monoidal_subcategory}
  Let $\cC$ be a monoidal category, and $\cB$ a monoidal category which comes equipped with a strong monoidal functor $E : \cB \to \cC$.
  Then we can form a $\cB$-actegory $(\cC, \actfw)$ with $\actfw$ defined as the composite $\cC \times \cB \xrightarrow{\cC \times E} \cC \times \cC \xrightarrow{\otimes} \cC$, where $\otimes$ is the monoidal product of $\cC$.
\end{example}
The above construction is a good example of a \emph{reparameterisation of actegories} which works for general action, and not just a self-action.

\begin{definition}[{Reparameterisation of actegories, compare \cite[Prop. 3.6.1]{capucci_actegories_2023}}]
  \label{def:reparameterisation}
  Let $(\cC, \actfw)$ be a $\cM$-actegory, and $E : \cN \to \cM$ a strong monoidal functor.
  Then we can define a $\cN$-actegory $(\cC, \actfw^E)$ where
  \[
    \actfw^E \coloneqq \boxed{\cC \times \cN \xrightarrow{\cC \times E} \cC \times \cM \xrightarrow{\actfw} \cC}
  \]
  If the functor $E$ was merely lax monoidal, the actegory would only be lax (\cref{sec:act_para_literature}) too.
\end{definition}

\begin{notation}
  \label{not:reparameterisation}
  We highlight an important piece of notation: reparameterisations will be written with a superscript, i.e.\ as $\actfw^E = \cM(-, E(-))$.
  This notation will make continued appearance in other kinds of reparameterisations that we will encounter, especially as 2-morphisms in forthcoming bicategories like $\Para$ (\cref{def:para}).
\end{notation}

\begin{example}[Markov kernels and expectations]
  \label{ex:mark_kernel_reparameterisation}
  Let $\Mark$ be the monoidal category of Markov kernels (\cref{def:markov_kernel,ex:monoidal_cats_2}), and $\Conv$ be the monoidal category of convex sets\footnote{A convex set is a set $X$ with an abstract expectation operator, and a morphism of convex sets is given by an algebra morphism of the finite support probability monad. For more details see \cite[Ex. 4]{hedges_value_2023}}, where $\Delta : \Mark \to \Conv$ is a strong monoidal functor with respect to these monoidal structures (\cite[Ex. 4]{hedges_value_2023}).
  Here $\Delta$ is an example of a reparameterisation of actegories, turning the self-action of $\Conv$ (labelled with $\otimes'$) into an action of $\Mark$ defined as ${\otimes'}^\Delta : \Conv \times \Mark \to \Conv$.
\end{example}

Lastly, we mention the trivial action that can be defined for any category, and note that many more examples can be found in \cite[Section 3.2]{capucci_actegories_2023}.

\begin{example}[{Trivial action, \cite[Ex. 3.2.1]{capucci_actegories_2023}}]
  \label{ex:trivial_action}
 Every category $\cC$ has a trivial action of the terminal category $\CatTerm$ on it, and this action $\actfw : \cC \times \CatTerm \to \cC$ is an isomorphism.
\end{example}

In the next chapter we will see that monoidal, cartesian and closed structures can be all generalised and stated in the actegorical setting.

\subsection{The $\Para[false]$ construction}
\label{sec:para}

Having described actegories, we will now see how to use them to construct a category whose morphisms are parametric in the way that the actegory describes.
This will be done with the $\Para$ construction which takes in an actegory and produces a \emph{bicategory}, where the 2-morphisms will model reparameteristaions, and whose bicategory laws completely capture all the necessary invariants we might ask of these parametric maps.
Para is also not a new construction, and has appeared in numerous incarnations in the literature (see \cref{sec:act_para_literature}).

\begin{definition}[{Parametric maps, compare \cite[Def. 2]{capucci_towards_2022}}]
  \label{def:para}
  Let $(\cM, \otimes, I)$ be a monoidal category, and let $(\cC, \act)$ be a $\cM$-actegory.
  We define the bicategory $\Para_{\act}(\cC)$ with the following data:
\begin{itemize}
\item \textbf{Objects} are those of $\cC$;
\item \textbf{Morphisms} are often referred to as \emph{parametric morphisms}. A parametric morphism of type $A \to B$ is a pair $(P, f)$ where we think of $P : \cM$ as its \emph{parameter space} and $f : A \act P \to B$ in $\cC$ as its \emph{implementation}.
  Every parametric morphism has a horizontal, but also a vertical component, emphasised by its string diagram representation (\cref{fig:para_morphism}).
  \forlater{we believe we know how to give a formal language}
  \begin{figure}[H]
    \scaletikzfig[0.8]{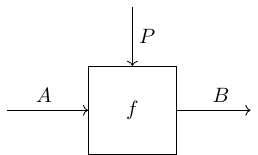}
    \caption{String diagram representation of a parametric morphism. We often draw the parameter wire on the \emph{vertical} axis to signify that the parameter object is part of the data of the morphism.}
    \label{fig:para_morphism}
  \end{figure}
\item \textbf{2-morphisms} are called \emph{reparameterisations}.
  A reparameterisation from $(P, f) \Rightarrow (P', f')$ is a morphism $r : P'
  \to P$ in $\cM$ such that the diagram in \cref{eq:para_triangle_reparam} commutes\footnote{Observe that the $r$ goes in the opposite direction of the 2-cell. See the variance of $\cM$ in \cref{prop:para_local_cat_elements}.} in $\cC$.
  Following \cref{not:reparameterisation} we will often write $f^r$ for the reparameterisation of $f$ with $r$.
  
  \begin{minipage}{0.55\textwidth}
    \begin{figure}[H]
      \scaletikzfig[0.8]{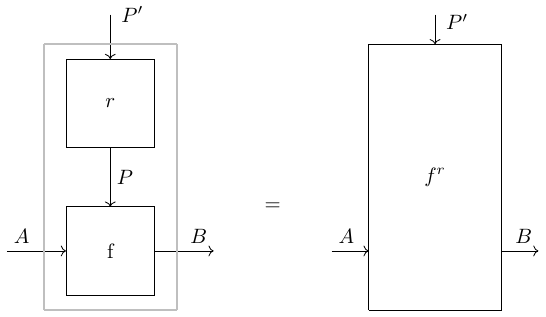}
      \caption{String diagram of reparameterisation. The reparameterisation map $r$ is drawn vertically.}
      \label{fig:para_reparam}
    \end{figure}
  \end{minipage}
  \begin{minipage}{0.45\textwidth}
    \begin{equation}
      \label{eq:para_triangle_reparam}
      \begin{tikzcddiag}[ampersand replacement=\&]
        {A \act P'} \& B \\
        {A \act P}
        \arrow["{A \act r}"', from=1-1, to=2-1]
        \arrow["f"', from=2-1, to=1-2]
        \arrow["{f'}", from=1-1, to=1-2]
      \end{tikzcddiag}
    \end{equation}
  \end{minipage}
  
\item \textbf{Identity morphism.} For every object $A$ there is an identity parametric map $(I, {\eta_A}^{-1})$ where $I : \cM$ and $\eta_A : A \act I \to A$ is the unitor of the underlying actegory.
\item \textbf{Morphism composition.} The composition of parametric morphisms
  $$
  A \xrightarrow{\quad (P, f)\quad} B \xrightarrow{\quad (Q, g)\quad} C
  $$
  i.e.\ of
  \begin{equation*}
\begin{aligned}
    &P : \cM\\
    &A \act P \xrightarrow{f} B \quad \text{in} \quad \cC
\end{aligned}
\qquad\qquad\text{and}\qquad\qquad
\begin{aligned}
    &Q : \cM\\
    &B \act Q \xrightarrow{g} C \quad \text{in} \quad \cC
\end{aligned}  
\end{equation*}
is the pair $(P \otimes Q, f \compPara{\comp} g)$ where
\begin{align*}
    &P \otimes Q : \cM&\\
    &f \compPara{\comp} g \coloneqq \boxed{A \act (P \otimes Q) \xrightarrow{\mu_{A, P, Q}} (A \act P) \act Q \xrightarrow{f \act Q} B \act Q \xrightarrow{g} C} &\text{in} \quad \cC
\end{align*}
\begin{figure}[H]
  \scaletikzfig[0.8]{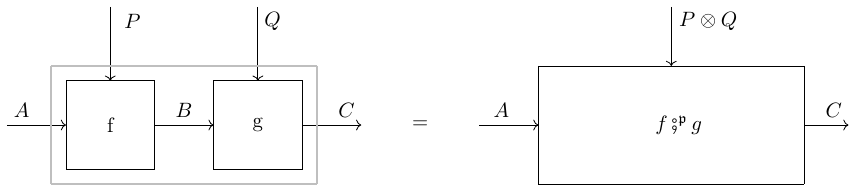}
  \caption{String diagram representation of the composition of parametric morphisms. By treating parameters on a separate axis we obtain an elegant graphical depiction of their composition.}
  \label{fig:para_composition}
\end{figure}

\item \textbf{Horizontal and vertical composition of 2-morphisms} is given by parallel and sequential product of $\cM$, respectively.
\end{itemize}

It is routine to verify that the unitor and associator data --- defined in the only possible way (and remarked upon in \cref{box:frombicatto2cat}) --- satisfy the unity and the pentagon axioms (\cite[Def.\ 2.1.3]{johnson_2-dimensional_2021}).
This concludes the definition of the bicategory $\Para_{\act}(\cC)$.
\end{definition}

\begin{notation}
  For a self-action of a monoidal category $\cC$ we'll sometimes omit the subscript of the action, writing just $\Para(\cC)$ and trusting the monoidal structure is clear from the context.
\end{notation}

\begin{example}[$\Para(\cC)$ for monoidal $\cC$]
  Any monoidal category $\cC$ gives rise to $\Para(\cC)$.
  In this case a parametric morphism $A \to B$ is a pair $(P, f)$ where $f : P \otimes A \to B$.
  If $\cC$ is cartesian, then $f : P \times A \to B$ consumes both the input and the parameter.
  If $\cC$ is cocartesian, then $f : P + A \to B$ consumes \emph{either} the input or the parameter.
\end{example}

Monoidal categories abound, such as $\Set$, $\Smooth$, $\FVect$. The construction $\Para$ can be instantiated in any of those with any of their monoidal products.
In \cref{ch:architectures} (specifically \cref{def:linear_layer,def:bias_layer,subsec:layers_no_parameters}), we will show how to model standard neural network layers as morphisms in $\Para(\Smooth)$.

\begin{remark}
The graphical language we used for depicting parametric morphims is slightly different than the graphical language of classical string diagrams.
Consider a parametric map $f : A \otimes P \to B$.
With classical string diagrams we might draw them as on the left.
However this notation does not emphasise the special role played by the parameter $P$: it is part of the data of \emph{the morphism}.
By separating them on two different axes (as on the right) we obtain the graphical language which more closely mirrors their semantics.

\begin{minipage}{0.55\textwidth}
   \scaletikzfig[0.7]{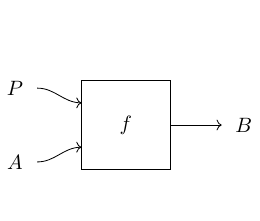}
\end{minipage}
\begin{minipage}{0.45\textwidth}
\scaletikzfig[0.7]{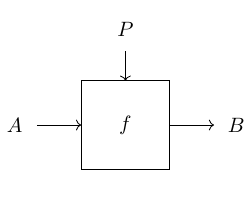}
\end{minipage}
  
\end{remark}

An important class of parametric morphisms arises out of subcategory actions (\cref{ex:monoidal_subcategory}).

\begin{example}
  \label{ex:para_monoidal_subcategory}
  Let $(\cC, \act)$  be a $\cM$-actegory and let $E : \cN \to \cM$ be a strong monoidal functor.
  Then we can form $\Para_{\actfw^E}(\cC)$.
  Here a parametric morphism $A \to B$ is a pair $(P, f)$ where $P$ lives in $\cN$ (not $\cM$!) and the implementation $f$ is of type $E(P) \otimes A \to B$.
  Reparameterisations, like parameters, live strictly in $\cN$.
\end{example}

\begin{mybox}[label=box:frombicatto2cat]{Gray}{From bicategories to 2-categories}
  As defined, $\Para_{\actfw}(\cC)$ is a bicategory, i.e.\ a weak 2-category.
  This is easy to see: take any parametric morphism $(P, f) : \Para_{\act}(\cC)(A, B)$ and postcompose it with the identity on $B$, i.e.\ $(I, \eta^{-1}_B) : \Para_{\actfw}(\cC)(B, B)$.
  The result of this is a map with the parameter object $P \otimes I$.
  This is different from $P$, as the unitor law of a 2-category would require.\footnote{Similar story arises with associativity; composing three parametric morphisms yields either $(P \otimes Q) \otimes R$ or $P \otimes (Q \otimes R)$ as the parameter object, depending on the order of composition.}
  This is because an arbitrary monoidal category isn't necessarily \emph{strict}, i.e.\ objects $P \otimes I$ and $P$ are not necessarily equal, but only isomorphic.
  This leads us to ponder --- is strictness of $\cM$ all we need to turn $\Para_{\actfw}(\cC)$ into a 2-category?
  Or for example, do the unitor $\eta$ and the multiplicator $\mu$ of the actegory also need to be strict?
  As it turns out, strictness of $\cM$ is all we need.

\begin{definition}[{Strict actegory (compare \cite[Subsection 3.4.]{capucci_actegories_2023})}]
  \label{def:strict_actegory}
  A $\cM$-actegory $(\cC, \act)$ is called \newdef{strict} if $\cM$ is strict monoidal.
\end{definition}

\begin{example}[Strict self-action]
  \label{ex:strict_self_action}
  Any strict monoidal category give rise to a strict self-action from both the left and the right side.
\end{example}

Strict actegories allow us to state the following proposition (proved in \cref{app:para}) reducing a bicategory to a 2-category.

\begin{restatable}{proposition}{ParaBiCatToParaTwoCat}
  \label{prop:act_strc_para_2cat}
  If the $\cM$-actegory $(\cC, \actfw)$ is strict, then $\Para_{\act}(\cC)$ is a 2-category.
\end{restatable}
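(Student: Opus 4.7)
The plan is to verify that, under the hypothesis that $\cM$ is strict monoidal, the structural 2-cells (left/right unitors and associator) of the bicategory $\Para_{\act}(\cC)$ all become identity reparameterisations, which is precisely the condition for a bicategory to qualify as a 2-category.

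First, I would dispose of the question of parameters matching on the nose. The parameter object of a horizontal composite in $\Para_{\act}(\cC)$ is the $\otimes$-product of the constituent parameters, and the identity parametric morphism has parameter $I$. So strictness of $(\cM, \otimes, I)$ immediately gives the equalities $P \otimes I = P$, $I \otimes P = P$, and $(P \otimes Q) \otimes R = P \otimes (Q \otimes R)$. Consequently the domain and codomain parameter objects of each coherence 2-cell coincide, so the candidate reparameterisation is a morphism from $P$ to itself (respectively from $P \otimes Q \otimes R$ to itself), and we only need to show it is $\id$.

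Second, I would verify the unit and associativity laws at the level of implementations. For the right unitor of $(P,f)$ postcomposed with $(I, \eta_B^{-1})$, unpacking the Para composition yields
\[
  A \act P = A \act (P \otimes I) \xrightarrow{\mu_{A,P,I}} (A \act P) \act I \xrightarrow{f \act I} B \act I \xrightarrow{\eta_B^{-1}} B.
\]
The triangle axiom of the actegory (the coherence between $\mu$, $\eta$, and $\rho_\cM$, see \cref{def:act_coherence}) reduces, with $\rho_\cM = \id$, to $(\eta_{A \act P}^{-1}) \comp \mu_{A,P,I} = \id_{A \act P}$... well, more precisely it says $\mu_{A,P,I}\comp(\eta_{C\act P}^{-1})$ collapses appropriately, after which naturality of $\eta$ in $f$ rewrites the composite as $f$. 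A symmetric argument using the left triangle handles the left unitor of $(I, \eta_A^{-1})$ precomposed with $(P,f)$. For the associator, the two composites of three parametric morphisms produce implementations differing only in the placement of instances of $\mu$, and the pentagon axiom for $(\cC, \act)$ exhibits them as equal. In each case the candidate reparameterisation commuting the triangle of \cref{eq:para_triangle_reparam} is forced to be $\id$.

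The main obstacle is purely notational bookkeeping: carefully tracking which instances of $\mu$ and $\eta$ appear, and in what order, so that the triangle and pentagon axioms of \cref{def:act_coherence} can be invoked verbatim. Once this bureaucracy is done, the conclusion follows formally: every structural 2-cell of $\Para_{\act}(\cC)$ is an identity in $\cM$, so it is an identity 2-morphism, and the bicategory is a 2-category.
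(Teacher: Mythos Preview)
Your approach is essentially the same as the paper's: verify that strictness of $\cM$ makes the candidate reparameterisations identity morphisms on parameter objects, and then reduce the reparameterisation conditions to the actegory triangle and pentagon axioms. The paper does the left unitor explicitly (you do the right), and proceeds identically for the associator. One small point: for the associator, the two composite implementations do not differ \emph{only} in placements of $\mu$ --- they also interleave $f \act Q$, $g \act R$, etc.\ --- so in addition to the pentagonator you need naturality of $\mu$ to slide these past; the paper makes this explicit by splitting the verification into a top square (pentagonator) and a bottom square (naturality of $\mu$).
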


\end{mybox}

On a similar note, if $\cM$ is a discrete monoidal category (i.e.\ a monoid), then $\Para_{\act}(\cC)$ is a category.
This follows because 2-morphisms in $\Para$ only exist when morphisms in $\cM$ do, meaning that, if $\cM$  has only identity morphisms, then $\Para_{\act}(\cC)$ has only identity 2-morphisms.

A good way to understand this brings us to another important property of $\Para_{\act}(\cC)$: its hom-categories are categories of elements.
This proposition will be instrumental in \cref{ch:bidirectionality}.

\begin{proposition}
  \label{prop:para_local_cat_elements}
  For given objects $A$ and $B$, the hom-category $\Para_{\act}(\cC)(A, B)$ can be computed as a category of elements (\cref{def:cat_of_elem}) of the functor $\cC(A \act -, B) : \cM^{\op} \to \Set$.
  That is,
  \[
    \Para_{\act}(\cC)(A, B) = \El(\cC(A \act - , B)) = \sum_{P : \cM^{\op}}\cC(A \act P, B)
  \]
\end{proposition}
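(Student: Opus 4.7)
The plan is to prove this by unpacking both sides and matching data directly; the only substantive work is bookkeeping the variance so that the direction of reparameterisations in $\Para_{\act}(\cC)(A,B)$ lines up with the action of the contravariant functor $\cC(A \act -, B)$ on morphisms of $\cM$.

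First I would describe the objects on both sides. On the $\Para$ side, a 1-morphism $A \to B$ is by definition a pair $(P, f)$ with $P : \cM$ and $f : A \act P \to B$ in $\cC$; an element of the hom-category as an object is therefore such a pair. On the category-of-elements side, unfolding $\El(\cC(A \act -, B))$ over $\cM^{\op}$ gives exactly the disjoint union $\sum_{P : \cM} \cC(A \act P, B)$, whose elements are also pairs $(P, f)$ with $f \in \cC(A \act P, B)$. So objects are literally the same data.

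Next I would match the morphisms. A 2-cell $(P, f) \Rightarrow (P', f')$ in $\Para_{\act}(\cC)$ is a morphism $r : P' \to P$ in $\cM$ such that $f \comp (A \act r) = f'$, by \cref{eq:para_triangle_reparam}. On the other side, since the functor $F \coloneqq \cC(A \act -, B)$ is contravariant in $\cM$, viewing it as $F : \cM^{\op} \to \Set$, a morphism $r : P' \to P$ in $\cM$ corresponds to a morphism $P \to P'$ in $\cM^{\op}$, and the induced function $F(r) : \cC(A \act P, B) \to \cC(A \act P', B)$ is precomposition with $A \act r$. A morphism in $\El(F)$ from $(P, f)$ to $(P', f')$ is exactly such an $r$ satisfying $F(r)(f) = f'$, i.e.\ $f \comp (A \act r) = f'$. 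This agrees on the nose, provided one uses the convention for $\El$ on a contravariant functor in which the arrow in $\El$ goes the same way as the arrow in $\cM$ (the footnote attached to the statement flags precisely this variance).

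Finally I would verify that composition and identities are preserved, which is immediate: composition of reparameterisations in $\Para$ is composition in $\cM$, which is the same as composition of morphisms in $\El(F)$, and the identity reparameterisation on $(P, f)$ is $\id_P$ on both sides. The verification that these satisfy the required equations is automatic because functoriality of $F$ is what guarantees that composing reparameterisations yields a reparameterisation in $\Para$. Thus the two hom-categories are equal as categories, yielding the stated identity. There is no real obstacle beyond pinning down the variance convention; I expect the most error-prone step to be ensuring the direction of $r$ matches between the two descriptions, which is why spelling out $F(r)$ as precomposition is the key move.
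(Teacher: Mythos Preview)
Your proposal is correct and is exactly the routine unpacking one would carry out; the paper in fact states this proposition without an explicit proof, treating it as immediate from the definitions of $\Para_{\act}(\cC)$ and $\El$, so your write-up simply fills in that omitted verification.
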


We can see that, if $\cM$ is discrete, then any category of elements formed over it must be discrete too.
Though, not all categories that go under the name $\Para$ in the literature arise from a discrete action.

\begin{mybox}[label=box:from2cattocat]{Gray}{From 2-categories to categories}
  Throughout its brief history, $\Para_{\act}(\cC)$ wasn't always thought of as a 2-category.\footnote{Here we assume a strict actegory (\cref{box:frombicatto2cat}).} 
  Starting humbly as a category, it was defined a number of times, each with its own distinct flavour of morphisms.
  We list all of them below, and show how they arise as a shadow of the 2-categorical perspective, i.e.\ as an image of the enriched base change $\enrichedbasechange{F} : \TwoCat \to \Cat$ for a given lax monoidal functor $F : (\Cat, \times, 1) \to (\Set, \times, 1)$. Each choice of such a functor describes how we turn the hom-category of parametric morphisms $A \to B$ into a hom-\emph{set}.

  \begin{example}[Discretisation, $\Ob : \Cat \to \Set$]
    Simply ignores the 2-cells.
    This results in a set of parametric morphisms that are only related by strict equality in $\Para_{\act}(\cC)$.
  \end{example}
  This prohibits us from modelling the parameter update step of neural networks with gradient descent, or other gradient updates (\cref{sec:optimisers}). 
  \begin{example}[Connected components, $\conncomp : \Cat \to \Set$]
    Identifies any two morphisms connected by a (potentially non-invertible) reparameterisation.

    \end{example}

This is a strong condition.
Via \cref{prop:colim_conn_comp_of_el} this is equivalent to computing the local colimit of the functor $\cC(A \act - , B)$, instead of its category of elements (\cref{prop:para_local_cat_elements}).
It's used in \cite{dalrymple_dioptics_2019} where the colimit is expressed as the coend mute in one variable.
In \cref{thm:para_trivialises_when_unit_initial} we describe conditions under which this quotient trivialises the parametric category.

  \begin{example}[Isomorphisms, $\Cat \xrightarrow{\Core} \Cat \xrightarrow{\conncomp} \Set$]
    Identifies two morphisms if they are connected by an invertible reparameterisation.
  \end{example}
  
  Used in \cite{fong_backprop_2021, heunen_tensor-restriction_2021, gavranovic_compositional_2019}.
  This is one of the better behaved quotients.
  
\begin{example}[Epimorphisms, $\Cat \xrightarrow{\Epi} \Cat \xrightarrow{\conncomp} \Set$]
  Identifies two morphisms if they are connected by a reparameterisation which is an epimorphism.
\end{example}
Used in \cite{spivak_learners_2022} without justification.
\end{mybox}

The examples above suggest that attempts to simplify the 2-categorical structure of $\Para$ into a category generally ends up losing important information relevant to modelling deep learning.
$\Para_{\act}(\cC)$ is truly a higher-dimensional beast!

Information is tangibly lost under the connected components quotient if the monoidal unit of $\cM$ is initial, which happens for self-actions of cocartesian categories.
Quotienting $\Para_{\act}(\cC)$ here trivialises the entire structure.

\begin{theorem}
  \label{thm:para_trivialises_when_unit_initial}
  Let $(\cC, \act)$ be a $\cM$-actegory, where the monoidal unit of $\cM$ is initial.
  Then
  \[
    \pibc(\Para_{\act}(\cC)) \cong \cC
  \]
\end{theorem}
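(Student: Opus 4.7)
The plan is to reduce the theorem to a colimit computation that is trivialised by the hypothesis on $I$. By \cref{prop:para_local_cat_elements}, each hom-category of $\Para_{\act}(\cC)$ is the category of elements of the $\Set$-valued presheaf $\cC(A \act -, B) : \cM^{\op} \to \Set$. Applying the base-change $\pibc$ and invoking the identification $\conncomp \circ \El \cong \colim$ (used in \cref{box:from2cattocat}) yields
\[
\pibc(\Para_{\act}(\cC))(A, B) \;\cong\; \colim_{P \in \cM^{\op}} \cC(A \act P, B).
\]

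Here the hypothesis enters decisively: initiality of $I$ in $\cM$ means that $I$ is \emph{terminal} in $\cM^{\op}$, and whenever the source of a $\Set$-valued functor has a terminal object $T$ the canonical map $F(T) \to \colim F$ is an isomorphism, since the unique arrows $X \to T$ force every cocone to be determined by its component at $T$ and automatic compatibility means no further identifications arise. Specialising to $T = I$ collapses the colimit to $\cC(A \act I, B)$, which the actegory unitor iso $A \cong A \act I$ identifies with $\cC(A, B)$. At the level of representatives, the resulting bijection sends the class of $(P, f)$ to the composite $A \xrightarrow{\cong} A \act I \xrightarrow{A \act !_P} A \act P \xrightarrow{f} B$, where $!_P : I \to P$ is the unique arrow from the initial object.

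Promoting this hom-wise bijection into a functor isomorphism is the second stage. I would define $\Phi : \pibc(\Para_{\act}(\cC)) \to \cC$ to be the identity on objects and use the composite above on morphisms. Well-definedness on $\pibc$-classes is automatic: any reparameterisation $r : P' \to P$ satisfies $r \comp !_{P'} = !_P$ by initiality, which is exactly the equation needed for the two representatives to map to the same arrow in $\cC$, and preservation of identities reduces to $!_I = \id_I$. The main obstacle, and the only nontrivial checkpoint, is preservation of composition: applied to $(P \otimes Q, f \compPara{\comp} g)$, the formula must reproduce $\Phi(P, f) \comp \Phi(Q, g)$. The key coherence fact is that initiality forces $!_{P \otimes Q}$ to factor as $I \xrightarrow{\cong} I \otimes I \xrightarrow{!_P \otimes !_Q} P \otimes Q$; combining this with the pentagon axiom for the multiplicator $\mu$ (\cref{eq:actegory_coherence}) and the naturality of the unitor unwinds the $\Para$-composite into the expected product. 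A two-sided inverse $\Psi : \cC \to \pibc(\Para_{\act}(\cC))$ is supplied by sending $f : A \to B$ to the class of the parametric map with parameter $I$ and implementation $A \act I \xrightarrow{\cong} A \xrightarrow{f} B$; that $\Psi \comp \Phi = \id$ uses the reparameterisation $!_P$ itself to connect every $(P, f)$ back to the canonical representative in its class, completing the proof.
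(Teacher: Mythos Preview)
Your proof is correct and follows essentially the same approach as the paper: both reduce the hom-set computation to $\colim_{P \in \cM^{\op}} \cC(A \act P, B)$ via \cref{prop:para_local_cat_elements} and \cref{prop:colim_conn_comp_of_el}, then use that $I$ is terminal in $\cM^{\op}$ to collapse the colimit to $\cC(A \act I, B) \cong \cC(A, B)$. The only difference is that you spell out the functoriality check (the explicit $\Phi$, $\Psi$, and the coherence argument for composition) that the paper dismisses as ``routine to show.''
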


\begin{proof}
  As they both share objects, all we have to prove is that there is a one-to-one correspondence between their hom-sets that respects composition and identities.
  The hom-set isomorphism follows from the fact that $\cM$ has an initial object (which is terminal in $\cM^{\op}$ allowing us to reduce the colimit) and the fact that this initial object is the monoidal unit (allowing us to further simplify the term using the actegory unitor).\footnote{The converse doesn't necessarily hold, as the converse of \cref{prop:colim_terminal} doesn't necessarily hold.}
  \begin{alignat*}{2}
    & && \pibc(\Para_{\act}(\cC)(A, B))\\
    & \text{(Def.)} &=& \conncomp(\El(\cC(A \act -, B)))\\
    & \text{(\cref{prop:colim_conn_comp_of_el})} &\cong& \colim(\cC(A \act -, B))\\
    & \text{(\cref{prop:colim_terminal})} &\cong& \cC(A \act 0, B)\\
    & \text{(\cref{eq:actegory_coherence_components}, left)} &\cong& \cC(A, B)
  \end{alignat*}
  It is routine to show that this respects composition and identities.
\end{proof}

Intuitively, this means that any parametric morphism $f : A \act M \to B$ in $\Para_{\act}(\cC)$ can be precomposed with the initial object $0_M : 0 \to M$ yielding $f^{0_M} : A \act 0 \to B$.
If $0$ is the unit of the monoidal product, then $f^{0_M}$ and $f$ are equivalent in $\pibc(\Para_{\act}(\cC))$.
This theorem has an important corollary in terms of self-actions of a cocartesian category.%

\begin{corollary}
  \label{corr:cocart_trivialises}
Let $\cC$ be a cocartesian category. Then $\pibc(\Para(\cC)) \cong \cC$.
\end{corollary}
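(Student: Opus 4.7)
The plan is to observe that \cref{corr:cocart_trivialises} is essentially a direct specialization of \cref{thm:para_trivialises_when_unit_initial} once we identify the relevant actegorical data. A cocartesian category $\cC$ is monoidal with its coproduct $(+, 0)$, and by \cref{ex:self_action} this gives rise to a canonical self-action $\actfw : \cC \times \cC \to \cC$. The key observation, which unlocks the theorem, is that the monoidal unit of the acting category (here $\cC$ itself) is precisely the initial object $0$ --- this is the defining property of a cocartesian monoidal structure.

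First I would unpack $\Para(\cC)$ to mean $\Para_{\actfw}(\cC)$ for this self-action. Then, since the acting monoidal category is $\cC$ itself and its monoidal unit $0$ is initial, the hypothesis of \cref{thm:para_trivialises_when_unit_initial} is immediately satisfied. Applying the theorem verbatim yields
\[
\pibc(\Para(\cC)) = \pibc(\Para_{\actfw}(\cC)) \cong \cC,
\]
which is the desired conclusion.

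I do not expect any genuine obstacle here: the corollary is a one-line consequence, and the only thing worth being explicit about is the identification of the monoidal unit of the self-acting cocartesian category with its initial object. There is no need to reprove the chain of isomorphisms (coyoneda-style reduction via \cref{prop:colim_conn_comp_of_el}, \cref{prop:colim_terminal}, and the actegory unitor) since all of it is already packaged inside \cref{thm:para_trivialises_when_unit_initial}. The only minor care point is confirming that the identification respects composition and identities, but this too is inherited from the corresponding statement in the theorem.
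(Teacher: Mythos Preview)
Your proposal is correct and matches the paper's intended argument exactly: the corollary is stated immediately after \cref{thm:para_trivialises_when_unit_initial} as a direct specialisation to the self-action of a cocartesian category, where the monoidal unit is the initial object $0$. No further work is needed.
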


Note that this doesn't necessarily hold for a cartesian category.
Consider the category $\N$ (\cref{def:natural_numbers}).
This category is cartesian where $m \times n = \max(m, n)$ and where the terminal object is $0$.
A parametric morphism $a \to b$ in $\Para(\N)$ consists of a number $p : \N$ and a morphism $f : \max(a, p) \to b$, i.e.\ an inequality $\max(a, p) \geq b$.
Therefore, even though $\N(0, 1) = \emptyset$ , $\conncomp(\Para(\N)(0, 1)) \neq \emptyset$ as it is inhabited, for example by $42 : \N$ and an inequality $\max(0, 42) \geq 1$.

\subsection{Local vs. global contexts}
\label{subsec:local_vs_global}

An instructive way to understand $\Para_{\act}(\cC)$ is as a model of processes which have access to a \emph{local} context.
Effectively, this means that every morphism can choose its own parameter space, and that composition of morphisms tensors the parameters together.
This is in contrast to categories which model a \emph{global} context, which we now describe, and compare to $\Para$.

A family of such categories can be formed by considering the coKleisli categories over comonads formed on a base cartesian category $\cC$.
For each $X : \cC$ there is a one such comonad in the literature called the coreader comonad.\footnote{The coreader comonad is also sometimes referred to as the ``the writer comonad'' (though this is often confused with the writer \emph{monad} which additionally requires $X$ to have a monoid structure), ``the reader comonad'' (because of the adjunction $- \times X \dashv \internalHom{\cC}{X}{-}$ when $\cC$ is cartesian closed), ``product comonad'' or ``environment comonad''.}
It is given by the functor $- \times X : \cC \to \cC$ and corresponding counit and comultiplication natural transformations defined in \cite[Def.\ 4]{gavranovic_graph_2022}.
Its coKleisli category has the following form.

\begin{definition}[Category $\CoKl(- \times X)$]
  \label{def:cokl}
  The category $\CoKl(- \times X)$ has the following data.
  \begin{itemize}
  \item \textbf{Objects} are those of $\cC$;
  \item \textbf{Morphisms} are $X$-parametric morphisms; a map $A \to B$ in $\CoKl(- \times X)$ consists of a map $f : A \times X \to B$;
  \item \textbf{Morphism composition.} The composition of morphisms
    $$
    A \xrightarrow{\quad f \quad} B \xrightarrow{\quad g \quad} C
    $$
    i.e.\ of
    \begin{equation*}
      \begin{aligned}
        &A \times X \xrightarrow{f} B \quad \text{in} \quad \cC
      \end{aligned}
      \qquad\text{and}\qquad
      \begin{aligned}
        &B \times X \xrightarrow{g} C \quad \text{in} \quad \cC
      \end{aligned}  
    \end{equation*}
    is map $(f \compPara{\comp} g)^{\Delta_X}$ defined using the notion of reparameterisation (\cref{fig:para_reparam}), i.e.\
    \begin{align*}
      &(f \compPara{\comp} g)^{\Delta_X} = \boxed{A \times X \xrightarrow{A \times \Delta_X} A \times (X \times X) \xrightarrow{\cong} (A \times X) \times X \xrightarrow{f \times X} B \times X \xrightarrow{g} C} &\text{in} \quad \cC
    \end{align*}
  \item \textbf{Identity.} Identity on $A$ is the projection $\pi_A : A \times X \to A$.
  \end{itemize}
\end{definition}

We see that every morphism in $\CoKl(- \times X)$ has access to an additional \emph{global} parameter $X$, and \emph{only} this parameter.
A composition of multiple $X$-parameterised morphisms does not take the product of individual $X$ objects, instead it is still parameterised by only one $X$.
What this composition then does is relay the input $X$ to its constituents by copying (\cref{fig:cokl_composition}), which is where the requirement that the base category $\cC$ is cartesian arises.
Compare this to $\Para$ where each morphism can choose its type of parameter.
And when the morphisms are composed, the total parameter space gets bigger.
For more details on this comparison, we invite the reader to \cite[Sec. 3.1.1]{gavranovic_graph_2022}.
The coKleisli construction will be relevant in studying weight tying (\cref{def:weight_tying}) and graph convolutional neural networks (\cref{subsec:graph_neural_networks}).

\begin{figure}[h]
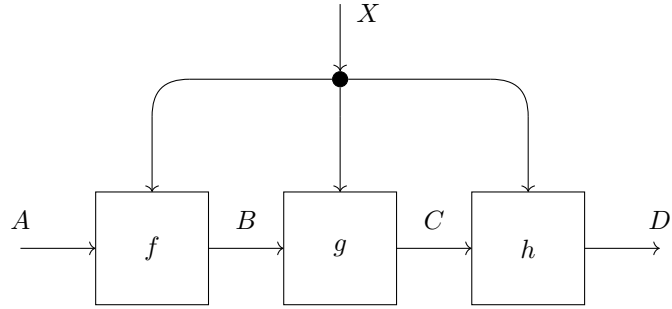

  \scaletikzfig{cokl_composition}
  \caption{Unlike $\Para$ composition which allows each morphism to be
    parameterised by an arbitrary object, the $\CoKl(- \times X)$ construction
    fixes a specific object $X$ as the parameter for all of them.}
  \label{fig:cokl_composition}
\end{figure}

Since $\CoKl(- \times X)$ models parametric processes, we would expect that this category is related to $\Para(\cC)$ in some way.
That is indeed the case: $\CoKl(- \times X)$ can be embedded in $\Para(\cC)$, but as there is this extra reparameterisation step we need to perform in composition and identities, this embedding is only \emph{oplax}.

\begin{lemma}
  \label{lemma:cokl_to_para}
  Let $\cC$ be a cartesian category.
  Then for every object $X : \cC$ there is an identity-on-objects oplax functor
  \[
    \CoKl(- \times X) \to \Para(\cC)
  \]
  mapping every morphism $f : \CoKl(- \times X)(A, B)$ to $(X, f) : \Para(\cC)(A, B)$.
  The opunitor and oplaxator are respectively the counit $\epsilon_X : X \to 1$ and the comultiplication $\Delta_X : X \to X \times X$ of the unique comonoid on $\cC$ arising from the cartesian structure.
\end{lemma}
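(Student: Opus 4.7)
The plan is to specify the data of the putative oplax functor $F \colon \CoKl(-\times X) \to \Para(\cC)$, verify that the proposed structural 2-cells are well-typed reparameterisations, and finally reduce the oplax coherence axioms to the familiar comonoid laws for $(X, \Delta_X, \epsilon_X)$.

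First, I would spell out the data. On objects, set $F(A) = A$. On morphisms, send $f \colon A \times X \to B$ in $\CoKl(-\times X)$ to the parametric morphism $(X, f) \colon A \to B$ in $\Para(\cC)$. (Since $\CoKl(-\times X)$ has only identity 2-cells, the assignment on 2-cells is forced.) The opunitor at $A$ is the 2-cell $F(\id_A) = (X, \pi_A) \Rightarrow (1, \rho_A^{-1}) = \id_{F(A)}$ whose underlying reparameterisation in $\cC$ is $\epsilon_X \colon X \to 1$. The oplaxator at $(f,g)$ is the 2-cell
\[
F(g \comp f) \;=\; (X,\ (f \compPara{\comp} g)^{\Delta_X}) \;\Rightarrow\; (X \times X,\ f \compPara{\comp} g) \;=\; F(g) \comp F(f)
\]
whose underlying reparameterisation is $\Delta_X \colon X \to X \times X$.

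Next, I would check that these really are 2-cells of $\Para(\cC)$, i.e.\ that the triangle in \cref{eq:para_triangle_reparam} commutes in each case. For the opunitor we must verify $\rho_A^{-1} \comp (A \times \epsilon_X) = \pi_A \colon A \times X \to A$, which is immediate from the universal property of the terminal object. For the oplaxator the triangle is exactly the definition of coKleisli composition recorded in \cref{def:cokl}, so it commutes by construction.

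Third, I would verify the three oplax coherence axioms (left unit, right unit, associativity). After unpacking the bicategorical unitors and associators of $\Para(\cC)$ --- all of which are induced by the self-action structural isomorphisms $\eta$ and $\mu$, and hence by the cartesian structure of $\cC$ --- the unit axioms collapse to the counit laws
\[
(X \times \epsilon_X) \comp \Delta_X = \rho_X^{-1} \qquad \text{and} \qquad (\epsilon_X \times X) \comp \Delta_X = \lambda_X^{-1},
\]
and the pentagon reduces to the coassociativity of $\Delta_X$. All three hold because, by \cite{fox_coalgebras_1976}, every object of a cartesian category carries a canonical comonoid structure, and $(X, \Delta_X, \epsilon_X)$ is precisely that comonoid.

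The main obstacle is the bookkeeping in the last step: one must carefully track how the non-trivial unitors and associators of the bicategory $\Para(\cC)$ thread through the pasting diagrams before the comonoid axioms can be invoked. This is routine rather than deep, since those coherence cells are themselves assembled out of the cartesian structure of $\cC$; no calculation beyond the three commutative diagrams for a cartesian comonoid is required.
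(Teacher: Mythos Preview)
The paper states this lemma without proof, so there is no argument in the paper to compare yours against. Your approach is the right one and the substance is correct: once the data is in place, the oplax coherence axioms reduce exactly to the counit and coassociativity laws for the comonoid $(X,\Delta_X,\epsilon_X)$, which hold in any cartesian category.

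One point of care is the variance of 2-cells. In the paper's convention (\cref{def:para}) a 2-cell $(P,f)\Rightarrow(P',f')$ is a morphism $r:P'\to P$ in $\cM$, so the underlying map points \emph{against} the 2-cell. With that convention, $\epsilon_X:X\to 1$ underlies a 2-cell $(1,\eta_A^{-1})\Rightarrow(X,\pi_A)$, i.e.\ $\id_{F(A)}\Rightarrow F(\id_A)$, and $\Delta_X:X\to X\times X$ underlies $F(g)\compPara{\comp}F(f)\Rightarrow F(g\comp f)$ --- the \emph{lax} direction, not the oplax one you wrote. Your write-up silently treats the 2-cells as covariant in the parameter, which makes the word ``oplax'' fit but clashes with the paper's explicit definition of reparameterisations. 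The mathematics is identical either way (the same comonoid identities do all the work), but you should flag the convention you are using rather than leave the reader to reconcile the apparent mismatch between the statement and \cref{eq:para_triangle_reparam}. A minor related point: your verification of the opunitor triangle uses applicative composition order, while the paper's $\comp$ is diagrammatic; this is harmless but worth aligning.
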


This shows that $\Para(\cC)$ can also model processes with a global context, and that special care has to be taken when composing them.

For completeness, we mention the variant with \emph{no} context: this is simply an ordinary category $\cC$.
Unlike the previous case which gives us an oplax functor into $\Para(\cC)$, this one gives us a pseudofunctor.
We state it in a form not specific to just self-actions, but arbitrary ones.

\begin{lemma}[$\cC$ embeds into $\Para_{\act}(\cC)$]
  \label{lemma:c_embeds_into_para}
  Let $(\cC, \act)$ be a $\cM$ actegory.
  Then there is an identity-on-objects pseudofunctor
  \[
    \cC \to \Para_{\act}(\cC)
  \]
  mapping a morphism $f : A \to B$ to the composite $A \act I \xrightarrow{\rho_A} A \xrightarrow{f} B$.\footnote{If the actegory is strict, then this becomes a mere 1-functor.}
  Since all the parameter object is always $I$, it is easy to check that this is indeed a pseudofunctor.
\end{lemma}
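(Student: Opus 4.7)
The plan is to define the pseudofunctor explicitly and then verify the data satisfies the coherence axioms of a pseudofunctor. On objects we take $F$ to be the identity. On a morphism $f : A \to B$ in $\cC$, we set
\[
F(f) \coloneqq (I,\, \eta_A^{-1} \comp f) : A \to B \quad\text{in } \Para_{\act}(\cC),
\]
i.e.\ the parametric morphism with parameter object $I$ and implementation obtained by first collapsing the trivial $I$-action via the inverse unitor and then applying $f$. On 2-cells, well, $\cC$ has none, so this data is trivial.

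Next I would provide the compositor and unitor. The unitor is immediate: $F(\id_A) = (I, \eta_A^{-1})$ is, by \cref{def:para}, precisely the identity 1-cell on $A$ in $\Para_{\act}(\cC)$, so the unitor 2-cell can be taken to be the identity. For the compositor, given composable $f : A \to B$ and $g : B \to C$ in $\cC$, observe that $F(f) \compPara{\comp} F(g)$ has parameter object $I \otimes I$, while $F(f \comp g)$ has parameter object $I$. I would take the compositor 2-cell to be the reparameterisation induced by the inverse left unitor $\lambda_I^{-1} : I \to I \otimes I$ of $\cM$ (an isomorphism, hence invertible as a 2-cell). To show this is a valid reparameterisation, one must check that the triangle in \cref{eq:para_triangle_reparam} commutes; this reduces, after unfolding definitions, to the triangle coherence axiom relating $\eta$, $\mu$ and the monoidal unitor of $\cM$ that is part of the actegory axioms (\cref{def:act_coherence}).

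Finally I would verify the pseudofunctor coherence: associativity of the compositor and its compatibility with the unitor. Both reduce, via naturality of $\eta$ and $\mu$ and by pushing $\lambda^{-1}$-type isos around, to the pentagon and triangle coherence axioms of the $\cM$-actegory together with Mac Lane coherence for $(\cM, \otimes, I)$. When $(\cC, \act)$ is strict (\cref{def:strict_actegory}) the $I\otimes I \cong I$ iso is an equality, the compositor becomes the identity, and $F$ degenerates to an ordinary 1-functor, confirming the footnote.

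The main obstacle is essentially bookkeeping: writing out the associator coherence for $F$ amounts to a diagram chase involving $\mu_{A,I,I}$, $\eta^{-1}$, and $\lambda^{-1}_I$ in two different orders, and seeing that both sides agree after applying the actegory pentagon. This is routine but notationally heavy, so in practice I would argue it by appeal to the coherence theorem for monoidal actions (so that any two formally parallel composites of structure isos are equal) rather than a brute-force chase.
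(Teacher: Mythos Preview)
Your proposal is correct and is precisely the natural expansion of what the paper leaves implicit: the paper gives no proof beyond the sentence ``Since all the parameter object is always $I$, it is easy to check that this is indeed a pseudofunctor,'' and your construction of the unitor (identity) and compositor (the reparameterisation by $\lambda_I^{-1} : I \to I \otimes I$), together with the reduction of the coherence conditions to the actegory triangle/pentagon axioms, is exactly the routine verification the paper is gesturing at. One cosmetic remark: the statement in the paper writes the implementation as $\rho_A \comp f$ while you write $\eta_A^{-1} \comp f$; these denote the same map (the inverse of the actegory unitor $\eta_A : A \xrightarrow{\cong} A \act I$), and your notation is in fact the more consistent one with the rest of the paper.
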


Furthermore, if $\cM$ is strict, then this reduces to a 2-functor.

\begin{figure}[h]
  \centering
  \includegraphics[width=0.7\textwidth]{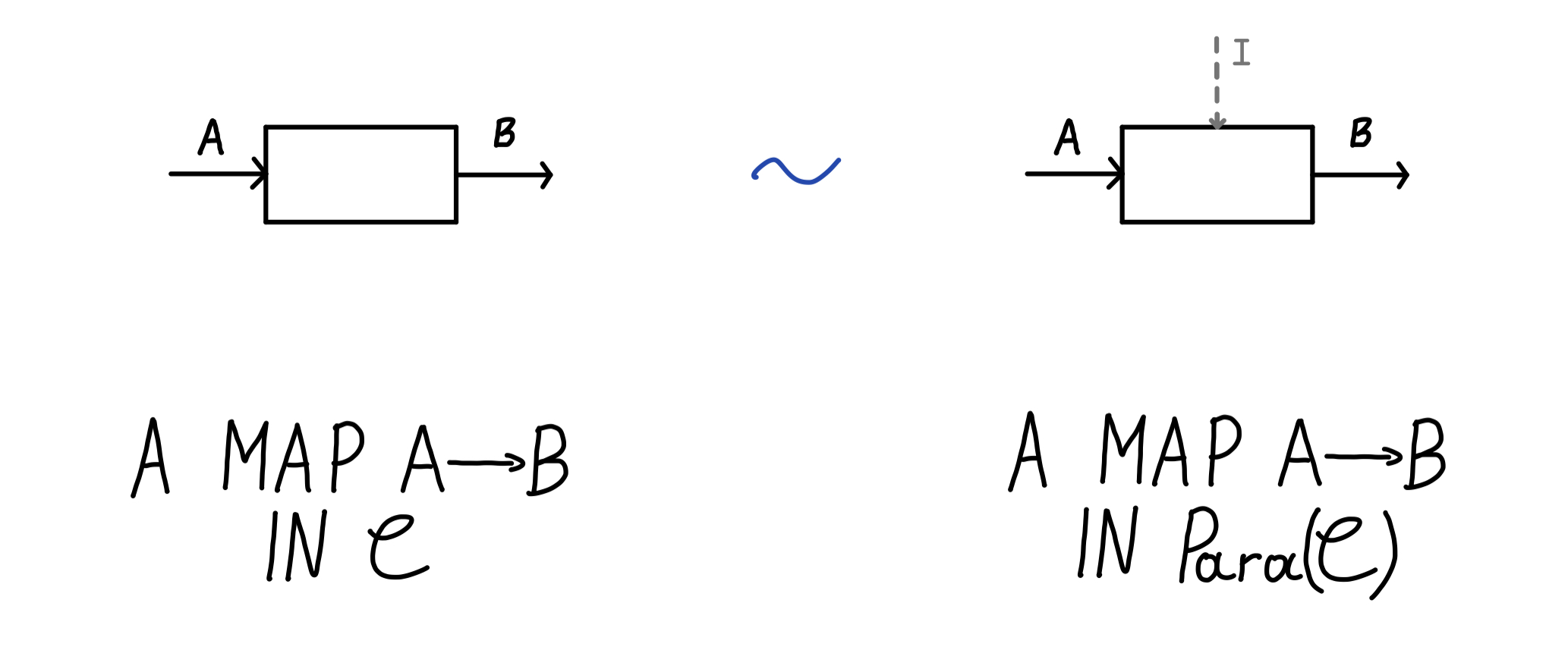}
  \caption{Every morphism in $\cC$ is an $I$-parametric morphism in $\Para_{\act}(\cC)$. }
  \label{fig:para_unit}
\end{figure}

\section{Monoidal actegories and monoidal $\Para[false]$}
\label{sec:monodial_actegories}

Categories describe processes that can be composed sequentially, and parametric categories describe parametric morphisms that can be composed sequentially.
Just like making categories monoidal permits us to compose processes in parallel, we can ponder: are there \emph{monoidal actegories} that allow us to compose parametric systems in parallel?
In this section, we will see that the answer is yes.
We begin by defining monoidal actegories and note that the acting monoidal category $\cM$ is now required to be braided monoidal.\footnote{Despite symmetry of monoidal categories being widespread in our applications, we do not assume it for two reasons: the theory does not require it, and the notable exception of a monoidal category of used to form \emph{affine traversals} does not possess it (\cref{rem:affine_traversals}).}

\begin{definition}[{Monoidal actegory, compare \cite[Def.\ 5.1.1.]{capucci_actegories_2023}}]
  \label{def:monoidal_actegory}
  Let $(\cM, \otimes, I)$ be a braided monoidal category.
  A $\cM$-actegory $(\cC, \act)$ is called a \newdef{monoidal $\cM$-actegory} if the
  underlying category $\cC$ has a monoidal structure $(\boxtimes, J)$ and the action respects that structure.
  This means that
  \begin{itemize}
  \item The underlying functor $\act : \cC \times \cM \to \cC$ is
    strong monoidal;
  \item The underlying natural transformations $\eta$ and $\mu$ are monoidal.
  \end{itemize}
  If $\cM$ is a strict monoidal category, then we call $(\cC, \act)$ a \newdef{strict} monoidal $\cM$-actegory.
  If the functor is only (op)lax monoidal, then we call the actegory an \newdef{(op)lax} monoidal actegory.
\end{definition}

Explicitly, the oplaxator of $\act$ is a map
\begin{equation}
  \label{def:monact_oplaxator}
  \interchangeract_{A, A', M, M'} : (A \boxtimes A') \act (M \otimes M') \cong (A \act M) \boxtimes (A' \act M')
\end{equation}
which we call \emph{the mixed interchanger}.\footnote{Given an actegory, it is the only additional piece of data that needs to be naturally defined in order to make it into a monoidal actegory. See \cite[Remark 5.1.2.]{capucci_actegories_2023}}

\begin{remark}[{Why does $\cM$ need to be braided?}]
  \label{rem:m_brided_in_definition}
  To state that $\mu$ is  monoidal natural transformation, both the functors it is spanned between ($(\act \times \cC) \comp \act$ and $(\cC \times \otimes) \comp \act$) need to be monoidal.
  The latter is monoidal only if $\otimes : \cM \times \cM \to \cM$ itself is a monoidal functor, which happens only when $\cM$ is braided.
  See \cref{appendix:monoidal_categories} for more details.
\end{remark}

\begin{example}[Self-action of a braided monoidal category]
  \label{ex:self_action_braided_monoidal}
  Any braided monoidal category $\cC$ is also a monoidal $\cC$-actegory with the interchanger defined using braiding.
  In this case the mixed interchanger reduces to the braided monoidal category interchanger (\cref{def:interchanger}), effectively swapping the order of middle two components.
\end{example}

Examples of this are $\Set$, $\Smooth$ and $\FVectR$, when instantiated with their cartesian product as the self-action.
A large class of examples of monoidal actegories arises by reparameterisations induced by a braided monoidal functor.

\begin{definition}[Reparameterisation of monoidal actegories]
  \label{def:reparam_monoidality}
  In \cref{def:reparameterisation} we have described how to reparameterise a $\cM$-actegory $(\cC, \actfw)$ with a strong monoidal functor $E : \cN \to \cM$.
  If the starting actegory is monoidal --- implying $\cM$ is braided --- then the resulting actegory will be too if $E$ is additionally braided (\cref{def:braided_monoidal_functor}).
\end{definition}

\begin{example}[Markov kernels and expectation]
  \cref{ex:mark_kernel_reparameterisation} yields a monoidal actegory since $\Mark$ is monoidal (\cref{ex:monoidal_cats_2}), and the reparameterisation $\Delta : \Conv \to \Mark$ is a strong braided monoidal functor.
\end{example}

\subsection{Monoidal $\Para[false]$}
\label{sec:monoidal_para}

Having discussed the monoidal structure of actegories, we are now ready to discuss the monoidal structure of the bicategory $\Para_{\act}(\cC)$.

\begin{restatable}[{Monoidal structure of $\Para$ (compare \cite[Section 2.1.]{capucci_towards_2022})}]{proposition}{ParaMonoidal}
  \label{prop:para_monoidal}
  Let $(\cC, \act)$ be monoidal $\cM$-actegory with the underlying product $(\boxtimes, J)$.
  Then $\Para_{\act}(\cC)$ becomes a monoidal bicategory with the following data.
  \begin{itemize}
  \item The monoidal product of two objects $X, Y$ is given by the monoidal product $X \boxtimes Y$ of $(\cC, \boxtimes, J)$;
  \item The monoidal unit is the monoidal unit $J$ of $(\cC, \boxtimes, J)$; 
  \item The monoidal product of two parametric morphisms
    \[
      (P, f : A \act P \to B) \quad \text{and} \quad (Q, g : C \act Q \to D)
    \]
    is given by $(P \otimes Q, f \compPara{\boxtimes} g)$ where
    \[
      f \compPara{\boxtimes} g \coloneqq \boxed{(A \boxtimes C) \act (P \otimes Q) \xrightarrow{\interchangeract_{A, C, P, Q}} (A \act P) \boxtimes (B \act Q) \xrightarrow{f \boxtimes g} B \boxtimes D}
    \]
    \begin{figure}[H]
      \scaletikzfig[0.8]{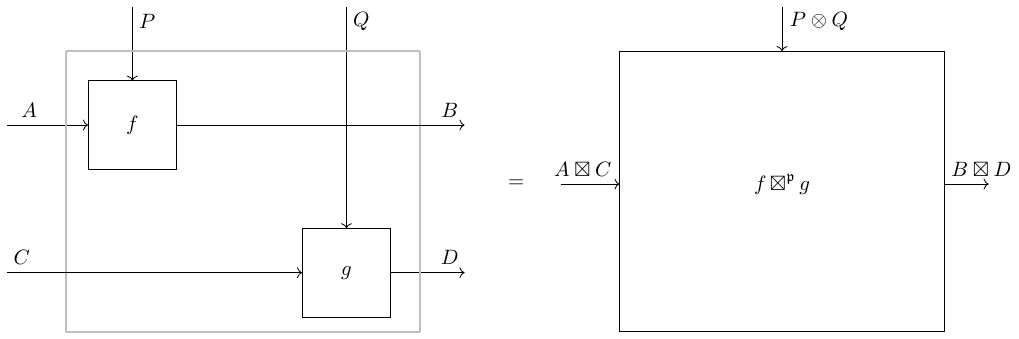}
      \caption{String diagram of the monoidal product of two parametric morphisms}
      \label{fig:para_tensor}
    \end{figure}
  \item The associators and unitors are those of $(\cC, \boxtimes, J)$.
  \end{itemize}
\end{restatable}

Examples of the monoidal bicategory $\Para$ arise out of any action that is monoidal, such as cartesian and cocartesian self-actions, but also their subactions (\cref{ex:monoidal_subcategory}), and any reparameterisations along braided monoidal functors.

When reasoning about this monoidal structure, we rely on a series of simplifications.

\begin{mybox}[label=box:frommonbicattomon2cat]{Gray}{From monoidal bicategories to monoidal 2-categories}
  In \cref{box:frombicatto2cat} we have seen that the bicategory $\Para_{\act}(\cC)$ becomes a 2-category under an additional condition of the underlying actegory: strictness (of the acting category $\cM$).
  But if $\Para_{\act}(\cC)$ is a \emph{monoidal bicategory}, strictness condition is not sufficient for turning it into a \emph{monoidal 2-category}.
  We additionally need commutativity.
  \begin{restatable}{proposition}{ParaMonoidalTwoCat}
    \label{prop:para_monoidal_twocat}
    Let $(\cC, \act)$ be a strict monoidal $\cM$-actegory.
    Then $\Para_{\act}(\cC)$ is a monoidal 2-category if and only if $\cM$ is commutative monoidal.
  \end{restatable}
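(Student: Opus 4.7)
The plan is to leverage the $2$-categorical structure of $\Para_{\act}(\cC)$ from \cref{prop:act_strc_para_2cat} and the monoidal structure from \cref{prop:para_monoidal}, then isolate commutativity of $\cM$ as precisely the condition needed for the middle-four interchange law to hold strictly. Concretely, a monoidal $2$-category requires that for composable $1$-cells one has $(f_1 \compPara{\boxtimes} g_1) \compPara{\comp} (f_2 \compPara{\boxtimes} g_2) = (f_1 \compPara{\comp} f_2) \compPara{\boxtimes} (g_1 \compPara{\comp} g_2)$ on the nose, so the proof reduces to analysing this single equation.

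The heart of the argument is a parameter-object computation. Given parametric morphisms with parameter objects $P_1, Q_1, P_2, Q_2$, the left-hand side has total parameter $(P_1 \otimes Q_1) \otimes (P_2 \otimes Q_2)$, while the right-hand side has $(P_1 \otimes P_2) \otimes (Q_1 \otimes Q_2)$. Since $\cM$ is strict, brackets disappear and the equality becomes $Q_1 \otimes P_2 = P_2 \otimes Q_1$ for all $Q_1, P_2 : \cM$, which is exactly commutativity of $\otimes$. For the ``only if'' direction I would make this formal by picking witness objects $M, N : \cM$ with $M \otimes N \neq N \otimes M$ and producing four parametric morphisms (for instance with identity implementations on appropriately chosen base objects, using the embedding of \cref{lemma:c_embeds_into_para} to lift identities) whose interchange equation forces the forbidden equality on parameter objects.

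For the converse, assuming $\cM$ is commutative, the parameter sides already agree, so what remains is to check agreement of implementations. Unpacking $\compPara{\comp}$ from \cref{def:para} and $\compPara{\boxtimes}$ from \cref{prop:para_monoidal}, each side is a composite in $\cC$ built out of the actegory multiplicator $\mu$ and the mixed interchanger $\interchanger$ of \cref{def:monact_oplaxator}. I would verify equality by a string-diagram calculation in $(\cC, \boxtimes, J)$, using naturality of $\interchanger$ together with its coherence with $\mu$ (both part of the definition of a monoidal actegory) to reduce both composites to a common normal form. The main obstacle is the bookkeeping in this step: one has to slide $\interchanger$ past $\mu$ several times, and the crucial simplification is that commutativity of $\cM$ kills the would-be braiding $\beta_{Q_1, P_2}$ which in the non-commutative case would obstruct turning a coherence isomorphism into a strict equality.
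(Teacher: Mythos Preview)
Your proposal is correct and follows essentially the same approach as the paper: both identify the middle-four interchange (the laxator $\mu^{\compPara{\boxtimes}}$ of the tensor pseudofunctor) as the only obstruction, observe that its underlying reparameterisation is the interchanger $\interchanger_{P,R,Q,S}$ in $\cM$, and conclude that this is the identity precisely when $\cM$ is commutative. The one economy the paper gains is that, having already verified in the proof of \cref{prop:para_monoidal} that $\interchanger$ satisfies the reparameterisation triangle, the implementation-level equality you plan to check by a string-diagram calculation comes for free once the reparameterisation collapses to the identity---so your final step is sound but redundant.
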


  \begin{proof}
    \cref{app:para}.
  \end{proof}
\end{mybox}

In the previous section, assuming $\cM$ is strict monoidal
was a mild assumption.
Many such categories are found in nature, and moreover, every monoidal category is monoidally equivalent to a strict one.
On the other hand, assuming a category is \emph{commutative} monoidal is extremely prohibitive.
Commutative monoidal categories are seldom found in nature, and they are not monoidally equivalent to strict monoidal ones.
Assuming only strictness of $\cM$, can we somehow simplify $\Para_{\act}(\cC)$ and its monoidal structure?
In the following box we describe how to apply a quotient (as we did in \cref{box:from2cattocat}), and obtain a monoidal \emph{category}.

\begin{mybox}[label=box:frommon2cattomoncat]{Gray}{From monoidal bicategories to monoidal categories}
  In \cref{box:from2cattocat} we described how change of base functors $F : \Cat \to \Set$ allow us to obtain a category $\enrichedbasechange{F}(\Para_{\act}(\cC))$ from a 2-category $\Para_{\act}(\cC)$.
  If $\Para_{\act}(\cC)$ additionally has monoidal structure as a bicategory, which of these functors transfer this monoidal structure to $\enrichedbasechange{F}(\Para_{\act}(\cC))$?

  We provide an answer for functors in \cref{box:from2cattocat}.
  Its proof can be found in \cref{app:para}.

  \begin{restatable}{theorem}{ParaMonCat}
    \label{thm:para_moncat}
    Let $(\cC, \act)$ be a monoidal $\cM$-actegory.
    Then the following three categories are monoidal.
    \[
      \pibc(\Para_{\act}(\cC)) \quad, \quad \enrichedbasechange{\Iso}(\Para_{\act}(\cC)) \quad \text{and} \quad \Epi_*(\Para_{\act}(\cC))
    \]
    On the other hand, $\enrichedbasechange{\Ob}(\Para_{\act}(\cC))$ is not a monoidal category.
  \end{restatable}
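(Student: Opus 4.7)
The plan is to exploit the fact that all three functors $\conncomp$, $\Iso$, and $\Epi$ identify any pair of parametric morphisms related by an invertible reparameterisation, while $\Ob$ discards all 2-cells outright. First I would make the data of the monoidal bicategory $\Para_{\act}(\cC)$ from \cref{prop:para_monoidal} explicit in the non-strict case: the associator of $\compPara{\boxtimes}$ at parameter objects $P, Q, R$ is a reparameterisation built from the associator $\alpha^\cM_{P,Q,R}$ of $\cM$ together with actegory coherence, the unitors are built from the unitors of $\cM$ and the actegory, and the interchanger is built from $\interchangeract$ together with the braiding of $\cM$. Crucially, all of these bicategorical coherence 2-cells are \emph{invertible} reparameterisations.

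Next, for each $F \in \{\conncomp, \Iso, \Epi\}$, I would construct the monoidal structure on the quotient. Objects, unit, and monoidal product on objects descend trivially, while the monoidal product on morphisms descends because horizontal and vertical composition of reparameterisations preserves membership in $\Iso$ and $\Epi$ (and obviously preserves connectedness). The key point is that the invertible bicategory coherence 2-cells just listed become identities in the quotient hom-sets, so the pentagon and triangle axioms of the would-be monoidal category follow directly from the coherence axioms of the monoidal bicategory $\Para_{\act}(\cC)$.

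For the negative statement, I would produce a counterexample in a non-strict monoidal $\cM$, for instance a suitable free (braided) monoidal category where $(P \otimes Q) \otimes R$ and $P \otimes (Q \otimes R)$ are genuinely distinct objects. Picking three parametric morphisms with parameter objects $P$, $Q$, $R$, the two parenthesisations under $\compPara{\boxtimes}$ yield morphisms whose parameter objects are not equal on the nose. Since $\enrichedbasechange{\Ob}$ discards all 2-cells and only retains strict equalities of morphisms, these two parametric morphisms are distinct elements of $\enrichedbasechange{\Ob}(\Para_{\act}(\cC))$, so associativity cannot hold strictly and no monoidal-category associator can be installed.

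The main obstacle I anticipate is the bookkeeping needed to pin down the bicategory coherence 2-cells of $\Para_{\act}(\cC)$ precisely as reparameterisations assembled from the coherence data of $\cM$ together with the mixed interchanger $\interchangeract$ of the monoidal actegory, and to verify they are invertible; this is where the monoidal actegory axioms (\cref{def:monoidal_actegory}) do the real work. Once this is in hand, descending to a monoidal category in each of the three positive cases reduces to applying $F$ to the already established bicategorical coherence, and the $\Ob$-counterexample is essentially immediate.
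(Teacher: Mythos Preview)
Your positive argument is essentially the paper's approach, though the paper is more focused: rather than checking all the bicategorical coherence 2-cells, it leverages the analysis in the proof of \cref{prop:para_monoidal_twocat} to observe that the associators, unitors, pentagonator and 2-unitors of the monoidal bicategory $\Para_{\act}(\cC)$ are already built from trivially-parameterised data (images of $\cC$'s coherence under $\cC \hookrightarrow \Para_{\act}(\cC)$), and the \emph{single} non-trivial obstruction is the laxator $\mu^{\compPara{\boxtimes}}$ of the tensor pseudofunctor, i.e.\ the interchange law. That 2-cell is the reparameterisation given by the interchanger $\interchanger_{P,R,Q,S}$, built from the braiding $\beta_{Q,R}$ of $\cM$; since this is an isomorphism (hence an epimorphism, hence a connecting zig-zag), all three quotients $\conncomp$, $\Iso$, $\Epi$ identify the two sides, and the monoidal structure descends.

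Your negative argument, however, targets the wrong axiom. You compare the two parenthesisations $(f \compPara{\boxtimes} g) \compPara{\boxtimes} h$ and $f \compPara{\boxtimes} (g \compPara{\boxtimes} h)$ and note their parameter objects $(P \otimes Q) \otimes R$ and $P \otimes (Q \otimes R)$ differ. But these two morphisms have different domains and codomains in $\Para_{\act}(\cC)$ (namely $(A \boxtimes C) \boxtimes E$ versus $A \boxtimes (C \boxtimes E)$, etc.), so a monoidal category never requires them to be equal; the relevant axiom is \emph{naturality} of an associator, not strict associativity. The paper's counterexample instead uses the \emph{interchange law}
\[
  (f \compPara{\boxtimes} g) \compPara{\comp} (h \compPara{\boxtimes} i) \;=\; (f \compPara{\comp} h) \compPara{\boxtimes} (g \compPara{\comp} i),
\]
which is a genuine on-the-nose equality requirement (it expresses that $\compPara{\boxtimes}$ is a functor, not merely a pseudofunctor). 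The two sides have parameter objects $(P \otimes Q) \otimes (R \otimes S)$ and $(P \otimes R) \otimes (Q \otimes S)$, which are distinct in $\cM$ even when $\cM$ is strict, so long as it is not commutative. Thus the $\Ob$-quotient fails to be monoidal even in cases where your associativity-based objection evaporates. Your argument could be patched into a naturality-of-associator counterexample in the non-strict case, but the interchange obstruction is the sharper and more robust one.
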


  If in addition to $\cM$ the category $\cC$ is braided (resp.\ symmetric), then the three monoidal categories become braided monoidal (resp.\ symmetric).
  For example, the symmetric monoidal category $\Para$ as defined in \cite[Def.\ III.1]{fong_backprop_2021} is the symmetric monoidal category $\enrichedbasechange{\Iso}(\Para_{\times}(\Smooth))$ in this thesis.
\end{mybox}

Lastly, as we have equipped $\Para(\cC)$ with a monoidal structure we can now reason about states, costates and scalars in this setting.

\begin{mybox}[label=box:scs_para]{Cerulean}{States, costates, and scalars in $\Para[false](\cC)$}
  For a self-action of a monoidal category $\cC$ the states, scalars, and costates of $\Para(\cC)$ have an interesting form.
  \newline
  
  \begin{minipage}[t]{0.32\textwidth}
    \begin{tightcenter}
      States
    \end{tightcenter}
    \scaletikzfig[0.7]{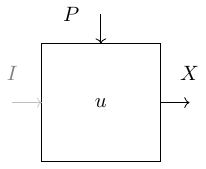}
    \begin{align*}
      &\Para(\cC)(I, X)\\
      \cong & {(\cC/X)}^{\op}
    \end{align*}
  \end{minipage}
  \hfill
  \begin{minipage}[t]{0.32\textwidth}
    \begin{tightcenter}
      Scalars
    \end{tightcenter}
    \scaletikzfig[0.7]{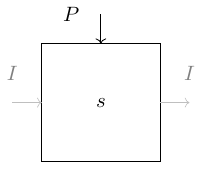}
    \begin{align*}
      &\Para(\cC)(I, I)\\
      \cong &\sum_{P : \cC}\cC(P , I)
    \end{align*}
  \end{minipage}
  \hfill
  \begin{minipage}[t]{0.32\textwidth}
    \begin{tightcenter}
      Costates
    \end{tightcenter}
    \scaletikzfig[0.7]{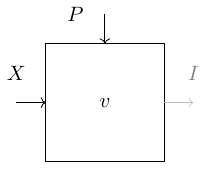}
    \begin{align*}
      &\Para(\cC)(X, I)\\
      \cong &\sum_{P : \cC}\cC(X \otimes P , I)
    \end{align*}
  \end{minipage}

  Here we see that states of $\Para(\cC)$ correspond to morphisms in $\cC$, and scalars correspond to costates of $\cC$.
  This is going to be especially relevant when $\cC$ is a more complex category, such as the category of optics (\cref{ch:para_optic}).
  Just like for monoidal categories, if the category is cartesian, then scalars and costates trivialise.
\end{mybox}

\subsection{Braided, symmetric monoidal actegories?}
\label{subsec:braided_symmetric_monoidal}

Monoidal categories can additionally have braided structure which might or might not satisfy the property of symmetry (\cref{subsec:monoidal_categories}).
The same story is true for monoidal \emph{actegories}.
We do not explicitly restate their definitions here, and instead point them out in the literature: braided monoidal actegories are defined in \cite[Def.\ 5.4.1]{capucci_actegories_2023}, and symmetric monoidal actegories in \cite[Remark 5.4.3]{capucci_actegories_2023}.
In both cases they require the acting category $\cM$ to be symmetric.

These actegories are important because they provide an avenue for equipping $\Para_{\act}(\cC)$ with braided and symmetric monoidal structure.
As not just proving, but merely defining braided and symmetric monoidal categories is an extremely laborious task (see \cite[Sec. 12.1]{johnson_2-dimensional_2021}), in this subsection we only conjecture the following.

\begin{conjecture}[Braiding and symmetry of $\Para_{\act}(\cC)$]
  \label{conj:braided_actegory}
  If $(\cC, \act)$ is a braided (resp.\ symmetric) monoidal actegory, then $\Para_{\act}(\cC)$ is a braided (resp.\ symmetric) monoidal bicategory.\footnote{Following the periodic table of elements higher categories (\cite[Sec. 2.1]{baez_lectures_2010}), we conjecture that a braided monoidal actegory doesn't make $\Para$ merely braided, but additionally \emph{sylleptic} monoidal.}
\end{conjecture}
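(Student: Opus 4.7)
The plan is to bootstrap the braided (resp.\ symmetric) monoidal bicategory structure on $\Para_{\act}(\cC)$ from the already established monoidal bicategory structure (\cref{prop:para_monoidal}), inheriting the braiding from that of the underlying category $\cC$ and the mediating coherence 2-cells from the structure of the monoidal actegory. Concretely, for objects $A, B : \cC$ define the braiding 1-morphism $\beta^{\Para}_{A,B} : A \boxtimes B \to B \boxtimes A$ to be the parametric morphism $(I, \beta^{\cC}_{A,B} \circ \eta^{-1}_{A \boxtimes B})$, i.e.\ take the braiding of $\cC$ with trivial parameter object $I$. The inverse (or, in the symmetric case, the same data with reversed ordering) is defined analogously. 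Because the embedding $\cC \hookrightarrow \Para_{\act}(\cC)$ of \cref{lemma:c_embeds_into_para} is a pseudofunctor that preserves the monoidal structure strongly, candidate braiding data in $\Para_{\act}(\cC)$ can be obtained by transporting the braiding data of $\cC$ along this embedding.

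Next I would construct the pseudonaturality 2-cells of $\beta^{\Para}$. For a pair of parametric morphisms $(P, f) : A \to A'$ and $(Q, g) : B \to B'$, naturality of $\beta^{\cC}$ and the mixed interchanger $\interchanger$ of the monoidal actegory (\cref{def:monact_oplaxator}) together produce a reparameterisation filling the square comparing $\beta^{\Para}_{A, B} \compPara{\comp} (g \compPara{\boxtimes} f)$ with $(f \compPara{\boxtimes} g) \compPara{\comp} \beta^{\Para}_{A', B'}$; this reparameterisation is built from the braiding $\beta^{\cM}_{P, Q}$ of the acting category $\cM$ (which by hypothesis is symmetric) together with the coherence isomorphisms of the monoidal actegory. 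The two hexagonator modifications are then obtained from the hexagonators of $\cC$ together with the associators of $\cM$; each is invertible because all the constituent 2-cells are. In the symmetric case, I would define the syllepsis modification by inheriting the syllepsis of $\cC$ (with reparameterisation component the symmetry of $\cM$), and verify that $\beta^{\Para}$ is its own inverse up to this 2-cell.

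The main obstacle, and the reason this is stated as a conjecture, is the verification of the coherence axioms of a braided (resp.\ symmetric) monoidal bicategory --- the list in \cite[Sec. 12.1]{johnson_2-dimensional_2021} runs to many diagrams, each involving a delicate interplay between the multiplicator $\mu$, the unitor $\eta$, the mixed interchanger $\interchanger$, the braiding $\beta^{\cC}$, and the braiding $\beta^{\cM}$. I would attempt to reduce this bureaucracy by packaging the argument structurally: show that $\Para_{\act}(-)$ extends to a tricategorical (or at least sufficiently strict) construction that sends braided monoidal actegories to braided monoidal bicategories, so that the axioms need only be verified once for a generic actegory and then transferred. A promising route is to factor the proof through the self-action case --- where by \cref{ex:self_action_braided_monoidal} the mixed interchanger is itself built from $\beta^{\cC}$ --- and then transport along the reparameterisation of \cref{def:reparam_monoidality} using the assumption that the reparameterising functor is braided monoidal.

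If the direct coherence-by-coherence approach proves intractable, an alternative plan would be to strictify: by \cref{prop:para_monoidal_twocat}, when $\cM$ is strict symmetric monoidal we obtain a monoidal 2-category, for which the braided/symmetric monoidal coherence data simplifies dramatically. One can then invoke strictification theorems for (braided) monoidal bicategories to deduce the general case. Either way, the conjectural status reflects the expectation that the result is true for structural reasons but that a complete diagrammatic verification is a substantial undertaking rather than a subtle open problem.
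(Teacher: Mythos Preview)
There is no proof in the paper to compare against: the statement is explicitly left as a conjecture, with the author remarking that ``not just proving, but merely defining braided and symmetric monoidal [bi]categories is an extremely laborious task'' and declining to carry it out. Your proposal is therefore not competing with a paper proof but rather sketching how one might attack the open conjecture; as such it is a reasonable outline of the expected structure, and indeed you correctly identify that the substantive content is the coherence verification rather than the construction of the braiding 1-cells and 2-cells themselves.

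One concrete slip in your strictification fallback: \cref{prop:para_monoidal_twocat} does \emph{not} say that strictness (or strict symmetry) of $\cM$ yields a monoidal 2-category; it says this happens if and only if $\cM$ is \emph{commutative} monoidal, which is a far stronger hypothesis than strict symmetric. So the ``strictify $\cM$ and work in a monoidal 2-category'' route does not go through as you describe --- the braiding of $\cM$ is precisely what obstructs $\mu^{\compPara{\boxtimes}}$ from being the identity. Any strictification argument would have to operate at the level of monoidal bicategories rather than by first reducing $\Para_{\act}(\cC)$ to a monoidal 2-category.
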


\section{Actegories and $\Para[false]$ in the literature}
\label{sec:act_para_literature}

Actegories are not a new construction.
They've appeared throughout the literature in many forms (see \cite[Def.\ 5.1]{fujii_2-categorical_2019} and \cite{janelidze_note_2001} for two examples), and we refer the reader to their only existing survey \cite{capucci_actegories_2023} for more information.
The $\Para$ construction is also not new.
Its earliest appearance known to us is in \cite[Sec. 2.2]{hermida_monoidal_2012} where it appeared under the name of a \emph{monoidal category with indeterminates}, as a 1-category.\footnote{Section 2.1 of the aforementioned paper also remarks about the difference between local and global contexts (\cref{subsec:local_vs_global}) in the language of terms and indeterminates.}
Under the name $\Para$ it was originally introduced in \cite{fong_backprop_2021}, albeit in a slightly different form.
We leave a nuanced analysis to \cref{sec:supervised_learning_in_the_literature}).

We review the work related to actegories and $\Para$ through the concept of graded monads, as that gives an enlightening view of $\Para$.

\begin{definition}[{Graded monad, compare \cite[Def.\ 2.1]{fujii_2-categorical_2019}}]
  Let $(\cM, \otimes, I)$ be a strict monoidal category.
  Let \cC be a category.
  A $\cM$-graded monad on $\cC$ is a lax monoidal functor

  \[
    (T, \eta, \mu) : (\cM, \otimes, I) \to (\internalHom{\Cat}{\cC}{\cC}, \circ, 1_{\cC})
  \]
\end{definition}

Unpacking the definition, it can easily be seen that a graded monad (resp. comonad) is similar to a strict actegory, except the natural isomorphisms $\eta$ and $\mu$ are weakened into mere natural transformations.
Such actegories are in \cite[Def.\ 5.2]{fujii_2-categorical_2019} called \emph{lax} (resp. \emph{oplax}) actegories, and in fact, they're equivalent to graded (co)monads (see \cite[Sec. 5.1]{fujii_2-categorical_2019}).
This correspondence allows us to then restate various kinds of results for graded (co)monads to those of (op)lax actegories.
For instance, it allows us to see ordinary ($\mathbf{1}$-graded) (co)monads as (op)lax $\CatTerm$-actegories.
It also allows us to study the analogue of the (co)Kleisli construction for (co)monads.
As it turns out --- the coKleisli construction of a graded comonad is precisely $\Para$!
Following the intuitions about equivalences between coKleisli objects, oplax colimits and the Grothendieck construction, this allows us to restate $\Para$ as a kind of a bicategorical Grothendieck construction (\cite{bakovic_grothendieck_2009}).

\begin{equation}
  \label{eq:triple_equivalence}
  \let\scriptstyle\textstyle
  \begin{tikzcddiag}[ampersand replacement=\&]
    \&\& {\substack{\text{coKleisli} \\ \text{object}}} \\
    \\
    {\substack{\text{Oplax} \\ \text{colimit}}} \&\&\&\& {\substack{\text{Grothendieck} \\ \text{construction}}}
    \arrow["\simeq"', color={rgb,255:red,52;green,153;blue,254}, curve={height=12pt}, tail reversed, from=1-3, to=3-1]
    \arrow["\simeq", color={rgb,255:red,52;green,153;blue,254}, curve={height=-12pt}, tail reversed, from=1-3, to=3-5]
    \arrow["\simeq"', color={rgb,255:red,52;green,153;blue,254}, curve={height=12pt}, tail reversed, from=3-1, to=3-5]
  \end{tikzcddiag}
\end{equation}

\begin{proposition}
  \label{prop:para_bicategorical_grothendieck}
  Let $(\cC, \act)$ be a $\cM$-actegory.
  Then the bicategorical Grothendieck construction of the composite pseudofunctor
  \[
    \B\cM \xrightarrow{\B\act} \B\internalHom{\Cat}{\cC}{\cC} \hookrightarrow \Cat
  \]
  is equivalent to the bicategory $\Para_{\act}(\cC)$, where $\B$ denotes the delooping of monoidal categories and monoidal functors.
\end{proposition}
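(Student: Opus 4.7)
The plan is to unpack both sides of the claimed equivalence and exhibit an identity-on-objects pseudofunctor between them that is an isomorphism on hom-categories.

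First I would identify the pseudofunctor $F \coloneqq (\B\internalHom{\Cat}{\cC}{\cC} \hookrightarrow \Cat) \circ \B\act$ concretely. Since $\B\cM$ has a single object, $F$ picks out the category $F(*) = \cC$. A 1-cell $M : * \to *$ in $\B\cM$ (an object of $\cM$) is sent to the endofunctor $- \act M : \cC \to \cC$, and a 2-cell $r : M \to M'$ (a morphism of $\cM$) to the natural transformation $- \act r$. The pseudofunctor coherence of $F$ --- its unitor $F(I) \cong \id_{\cC}$ and laxator $F(M \otimes N) \cong F(N) \circ F(M)$, noting that composition of 1-cells in $\B\cM$ is $\otimes$ --- is precisely the actegory's $\eta$ and $\mu$, so the pseudofunctor coherence axioms reduce to those of \cref{def:actegory}.

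Next I would apply the bicategorical Grothendieck construction of \cite{bakovic_grothendieck_2009} to $F$ and read off the resulting bicategory $\int F$. Its objects are pairs $(*, A)$ with $A$ in $F(*)$, hence just objects of $\cC$. A 1-cell $A \to B$ is a pair $(M, f)$ with $M$ a 1-cell of $\B\cM$ and $f : F(M)(A) \to B$ in $\cC$, i.e.\ $f : A \act M \to B$ --- exactly a parametric morphism in the sense of \cref{def:para}. A 2-cell $(P, f) \Rightarrow (P', f')$ is, with the variance of \cref{prop:para_local_cat_elements}, a morphism $r : P' \to P$ in $\cM$ such that $f \circ (A \act r) = f'$, which is exactly a reparameterisation. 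The identity 1-cell on $A$ is built from the identity 1-cell $I$ of $\B\cM$ together with the inverse unitor, giving $(I, \eta_A^{-1})$; composition of $(M, f)$ with $(N, g)$ uses the laxator, producing
\[
    (M \otimes N,\; g \circ (f \act N) \circ \mu_{A, M, N}),
\]
which is $\compPara{\comp}$ of \cref{def:para}.

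The main obstacle is variance bookkeeping: the bicategorical Grothendieck construction admits several conventions differing by $\op$ and co-operations on source and target, and \cref{prop:para_local_cat_elements} shows that the paper's 2-cells correspond to a category of elements of a presheaf $\cM^{\op} \to \Set$, forcing the choice to Bakovi\'c's variant whose 2-cells point opposite to those of $\B\cM$. Once this is pinned down, the remaining verifications --- that horizontal and vertical 2-cell composition agree and that the associator, unitor, pentagon and triangle of $\int F$ coincide with the structure on $\Para_{\act}(\cC)$ described in \cref{def:para} --- are routine, since on both sides every coherence traces back through the same pseudofunctor axioms for $F$, which are the actegory coherence axioms in disguise.
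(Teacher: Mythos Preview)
Your unpacking is correct and, in fact, considerably more detailed than what the paper offers. The paper does not supply a proof of \cref{prop:para_bicategorical_grothendieck}: it states the proposition in the course of a literature discussion, invoking the folklore equivalence between coKleisli objects, oplax colimits, and the Grothendieck construction (\cref{eq:triple_equivalence}), and pointing to \cite{bakovic_grothendieck_2009} and \cite{orchard_unifying_2020} for the relevant constructions. Your approach --- identify the pseudofunctor $F$ explicitly, read off the data of $\int F$ from Bakovi\'c, and match it componentwise against \cref{def:para} --- is exactly the verification one would carry out, and your identifications of objects, 1-cells, 2-cells, identity, and composition all agree with the paper's definitions.

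Your flagging of the variance issue is well-placed: the direction of 2-cells in $\Para_{\act}(\cC)$ (with $r : P' \to P$ underlying a 2-cell $(P, f) \Rightarrow (P', f')$, cf.\ \cref{eq:para_triangle_reparam} and \cref{prop:para_local_cat_elements}) does force a specific choice among the op/co variants of the bicategorical Grothendieck construction, and this is the only place where care is genuinely needed. Once fixed, the coherence checks are, as you say, routine --- the pseudofunctor axioms for $F$ are literally the actegory coherence conditions of \cref{def:act_coherence}, and both sides inherit their associators and unitors from the same source.
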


This is especially relevant in light of (\cite{orchard_unifying_2020}) which serves as a good reference for these constructions, outlining a number of further appearances in theory and practice.

Another generalisation of actegories are \emph{dependent actegories} \cite{myers_para_2022}, a topic of ongoing research.
It stems from the observation that the domain of the actegory functor $\act : \cC \times \cM \to \cC$ is a product of categories, i.e.\ a special version of the Grothendieck construction for a constant functor $\cC^{\op} \to \Cat$ at $\cM$.
Dependent actegories allow an arbitrary functor of this type here, and model the notion that the type of a parameter can depend on the input type.

We do not study this in detail, but merely give some more intuition in the case of $\Set$.
In the non-dependent case the domain of a parametric morphism $A \to B$ for the self-action of the cartesian product on $\Set$ is a product i.e.\ the set $A \times P$.
This is the constant version of the \emph{dependent pair type}.
If we want to make this dependent, we can replace the constant self-action of $\Set$ by the (non-constant) representable functor $\internalHom{\Cat}{-}{\Set} : \Set^{\op} \to \Cat$ mapping $A$ to the functor category $\internalHom{\Cat}{A}{\Set}.$\footnote{We are ignoring universe levels, and omitting the inclusion $\Set \to \Cat$.}
Appropriately defining dependent $\Para$ here (see \cite{myers_para_2022} for details) we obtain a bicategory where a morphism $A \to B$ consists of a indexed parameter function $P : A \to \Set$ and the implementation function $f : \sum_{a : A}P(a) \to B$.
The former describes for each $a : A$ the set of possible parameters, and the latter consumes an $a : A$, a $p : P(a)$ and produces an element of $B$.

Yet another way to generalise $\Para$ stems from the observation that bicategories are a special kind of double categories (\cite[Sec. 12.3]{johnson_2-dimensional_2021}) with only \emph{loose} 1-morphisms.
That is, the only kinds of 1-cells here are the parametric ones.
As it turns out, $\Para_{\act}(\cC)$ can be turned into a double category into a coherent way, where the non-parametric morphisms play the role of \emph{tight} 1-cells.
The constructions in \cite{myers_para_2022} defining dependent $\Para$ automatically takes care of this too, producing a double category as just described.

Lastly, during the writing of this thesis I learned\footnote{From Dylan Braithwaite.} that there is a promising generalisation of actegories to \emph{locally graded categories} (\cref{sec:locally_graded_categories}): categories enriched in $\internalHom{\Cat}{\cM^{\op}}{\Set}$.
Locally graded categories unify parameterisation that is \emph{internal} (the one defined in this chapter), allowing us to see actegories as locally graded categories with copowers by representables (see \cref{prop:copower_representable}), but also parameterisation that is \emph{external} --- something we will explore in \cref{sec:optics_backward_closed_actegory}.
Locally graded categories also have a coherent interpretation with respect to global parameterisation (\cref{subsec:local_vs_global}), explored in (\cref{sec:locally_indexed_categories}).
See \cref{sec:locally_graded_categories} for some preliminary work exploring the relation of locally graded categores to parameterisation.

	\chapter{Bidirectionality}
\label{ch:bidirectionality}

\epigraph{You're stealing all the terminology from physics.}{Ieva Čepaitė}

\newthought{What is there in common} between neural networks, bayesian learners, reinforcement learners, and game-theoretic agents?

At their core, all of these systems are bidirectional.
They involve two separate processes, happening sequentially one after the other, connected by a shared internal state.
Neural networks propagate values forward, and valuations on gradients backward. Bayesian learners propagate evidence forward, and belief updates backward.
In both value iteration and game theory we propagate values forward, while we compute policy improvements and payoffs, respectively, going backward.

Each of these has its own distinct character, involving functions that are differentiable or probabilistic, for example.
Each of these cases is an entire mathematical field in itself with its own notation, conventions, theorems and ideas.
However, these fields are mostly pursued and developed independently, and there is no formal and unifying mathematical framework that would allow us to systematically translate ideas, techniques, and theorems from any one of these fields to the other.
In this chapter we set out to provide such a framework.

\begin{contributions}
  Large fragments of this chapter are a novel contribution.
  Most notably, the definition of weighted optics (\cref{def:weighted_optic}), the isomorphism of their hom-objects to weighted colimits (\cref{prop:copara_weighted_colimit}), the definition of a cartesian actegory (\cref{ex:cartesian_cat_act}), the definition of a cartesian optic (\cref{def:cartesian_optic}), factorisation theorem for $\Copara$ (\cref{thm:copara_cart_reduction}).
  Lastly, novel contribution is the provision of formal desiderata for dependent optics, survey of existing work done on this topic (\cref{subsec:dependent_optics}).
\end{contributions}

\begin{epistemicstatus}
  I have found that the prevalent perspective on categorical models of bidirectionality in the literature is that of dependently-typed \emph{denotational} models, such as those given by dependent lenses or tangent categories.
  This is why in this thesis --- motivated by the discovery of the operational interpretation of optic composition --- I instead focus on the categorical semantics which can model the \emph{operational} aspects of bidirectionality, which I felt was an unjustifiably underexplored area.
  This direction took me to 2-category theory, lax functors, emergent effects, and many more interesting concepts, but also, at the moment, unfortunately a relatively complex theory.
  As such I believe many constructions in this chapter  will eventually be subsumed by advances in locally graded categories and dependent optics, leading to a cleaner formulation.
  Nonetheless, many constructions --- such as weighted $\Copara$, and new variants of weighted mixed optics --- are those I find to be useful abstractions nonetheless, deserving of further exploration.
\end{epistemicstatus}

\begin{remark}[A note on terminology.]
  Originating in functional programming, bidirectional processes of a particular kind emerged under the name of \emph{lenses}.
  Later on these were generalised to other settings where they were called \emph{prisms}, \emph{grates}, and \emph{glasses}, for instance.
  A unifying framework was thereafter proposed unifying them called under the name of \emph{optics} and later generalised to \emph{mixed} optics).
  Despite the fact that this terminology that has no semblance whatsoever to types of cut glass, we continue using it.
\end{remark}

\section{High-level intuition}

Following the intuition of describing processes using monoidal categories from \cref{sec:categories_and_systems}, let's first describe informally what we mean by a process being \emph{bidirectional}.
Fundamentally it means that we have structured information about the underlying category it is built upon.
If the domain of a process (i.e.\ an object $\mi$ of some category $\cC$) specifies the type of inputs consumed by said process, then in the bidirectional setting each object specifies a type of both an input and an output consumed by a bidirectional process --- an input going forward, and an output going backward.
We will write these objects as $\diset{\mi}{\mi '}$ where the top symbol denotes the type of inputs consumed by the forward part of the bidirectional process, and the bottom symbol $\mi '$ denotes the type of outputs produced by its backward part.
Sometimes we will have to refer to these types outside of the stacked representation $\diset{\mi}{\mi '}$ in which case we will always rely on the fact that forward parts do not come with a superscript $'$, but backward parts do.
We will draw such processes as in \cref{fig:process_vs_bidirectional}.\footnote{We will see later that this is formal graphical notation for bidirectional processes called \emph{optics} (\cite{boisseau_string_2020}).}

\begin{figure}[H]
  \scaletikzfig[0.8]{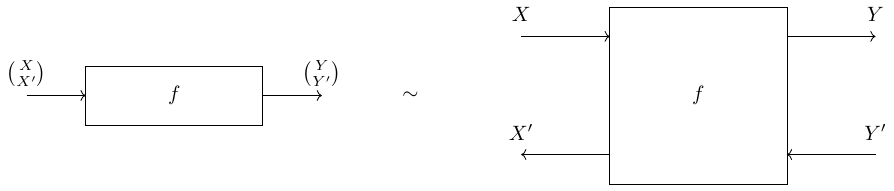}
  \caption{A bidirectional process drawn as a string diagram.}
  \label{fig:process_vs_bidirectional}
\end{figure}

This describes the objects.
What about the morphisms?
Here the situation is more complex.
Looking at the above box of type $\diset{X}{X'} \to \diset{Y}{Y'}$, in many ways we want to abstract away what happens \emph{inside} of it, and instead provide a general formulation of these processes by only describing what they do from the outside.
One thing we might want to do is say that given any map of type $Y \to Y'$ (which we think of as the environment producing a $Y$ and responding with $Y'$), the bidirectional process $\diset{X}{X'} \to \diset{Y}{Y'}$ needs to turn that into a map of type $X \to X'$.
This will prove useful later in the \cref{sec:weighted_optics}.

In other ways, we might want to be more concrete.
What are the types of these forward and backward processes?
A naive approach might attempt to mirror the structure of objects, and take bidirectional processes to be morphisms in $\cC \times \cC^{\op}$, giving their forward and backward maps types $X \to Y$ and $Y' \to X'$, respectively.
But the problem with this is that the backward part needs to be able to depend on the forward one.
For instance, when we perform backpropagation, the type of the gradient we send backwards depends on the original input we received.

We could simply add this as an argument.
That is, a bidirectional process could be considered to have types $X \to Y$ on the forward part and $X \times Y' \to X'$ on the backward one.
This is a construction that is often called \emph{a lens}, and we will thoroughly unpack it in \cref{sec:optics_for_cartesian_act_fw}.
But this has problems as well.
Specifying the backward type as $X \times Y' \to X'$ prevents us from a) modelling cases where, say, depending on the input $X$ we conditionally might choose not to interact with the environment at all (see ``prisms'' in \cref{subsec:prisms}), and b) reusing (in the backward pass) useful values of a type different than $X$ computed in the forward pass (see \cref{subsec:operational}).

But we can alleviate these problems as well.
We can allow the bidirectional process to choose the type of information it communicates from the forward to the backward pass.
This gives it something like an existential quantifier in the type, and at the same time allows us to abstract away from the categorical product $\times$ into a more general monoidal one:
\[
  \exists M . (X \to M \otimes Y) \times (M \otimes Y' \to X')
\]
Roughly, this is a construction that is called \emph{an optic} (\cref{fig:optic_residual}), and its particular generalisation will be the topic of this chapter.

\begin{figure}[h]
  \centering
  \includegraphics[width=.6\textwidth]{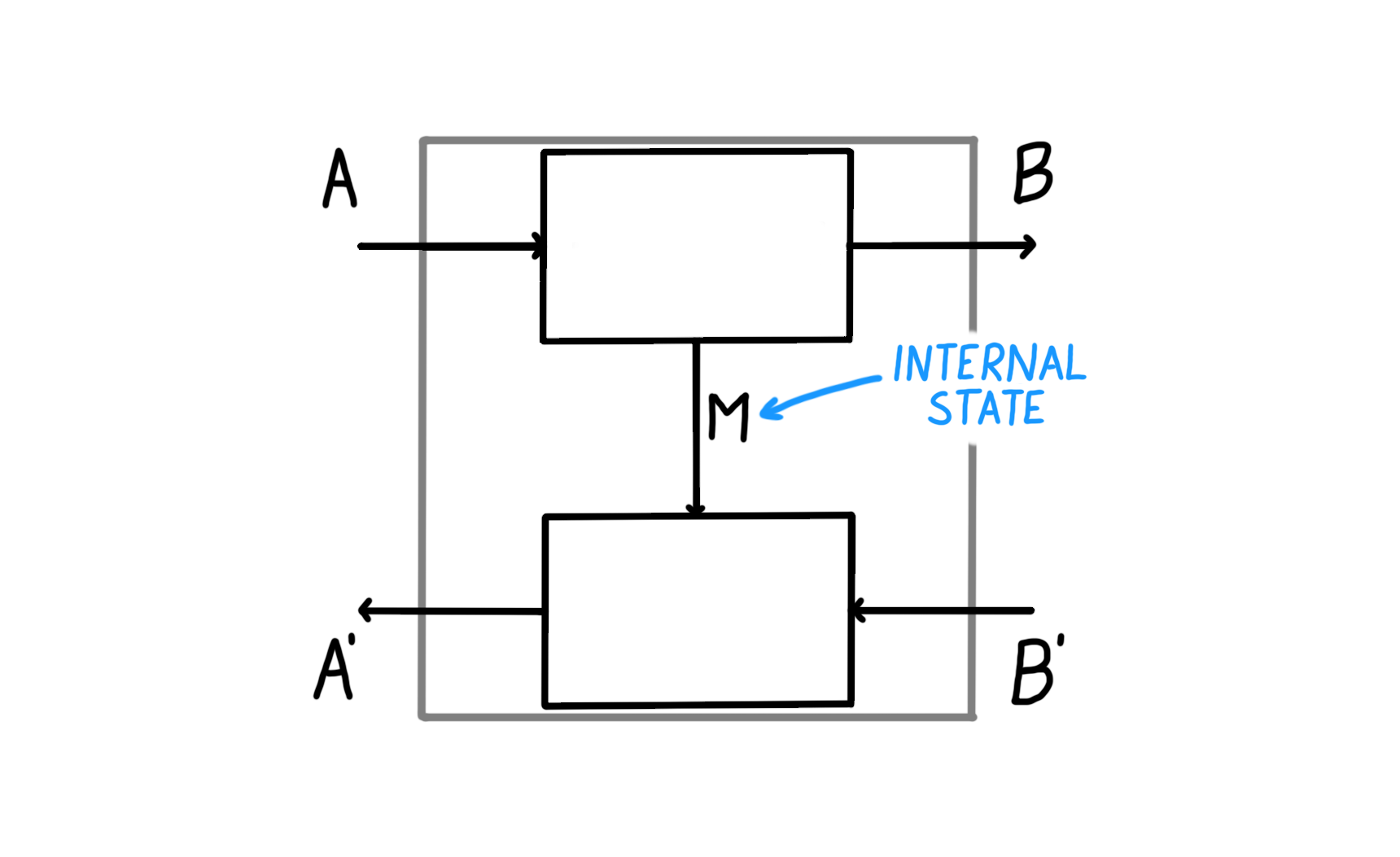}
  \caption{Schematic of a bidirectional process with an internal state.}
  \label{fig:optic_residual}
\end{figure}

But this story has problems as well.
What if the type of the backward pass is different than the one of the forward one?
In backpropagation, for instance, while the forward pass can in principle be any differentiable map, its reverse derivative at some point $x : X$ will always be a \emph{linear map}.
The above framework does not allow us to accurately capture these types, as the forward and the backward maps live in the same category.

But this can be solved as well.
Using actegories defined in the previous chapter (\cref{def:actegory}), we can separate out the type of the forward category $\cC$, the backward category $\cD$ and the monoidal category $\cM$ that serves as the type of intermediate data.
Its type (now more precisely specified) is

\[
  \exists (M : \cM) . \cC(X, M \actfw Y) \times \cD(M \actbw Y', X')
\]

These are called \emph{mixed optics} (see \cref{box:mixed_optics_are_weighted_optics}), and represent the penultimate step of our abstraction.
There is one last problem.
This particular construction has the same monoidal category $\cM$ acting on both the forward and the backward part.
In many of the cases of interest, this is not the case.
In backpropagation, for instance, while the action of the forward part uses the categorical product of a suitable category of differentiable maps (such as $(\Smooth, \times, 1)$), this categorical product is not something we want for the backwards category of linear maps.
In $\FVectR$ it would necessitate that the reverse derivative is linear with respect to the product of the type of the point $X$ we're differentiating at, and the type of the gradient $Y'$.
But this is not true, because the derivative is linear only with respect to the latter.
The monoidal product we want here is the \emph{tensor product}, which is not a categorical product.

But this kind of a complex gadget has not appeared in the literature so far.
It models an intricate interaction between the forward and backward categories, as well as the forward and backward \emph{monoidal categories} of intermediate information.
As a matter of fact, we will see that the data needed to form these kind of optics arises as an \emph{action} of, roughly, the product of monoidal categories of intermediate information \emph{on} the product of forward and backward categories.
We will also see that the category of these optics --- that we will name \emph{weighted} optics --- arises as the dual of the aforementioned $\Para$ construction on this actegory.
We proceed in making this intuition precise and mathematically formal --- leading us to the definition of the construction $\Copara$.

\section{The $\Copara[false]$ construction}
\label{sec:copara}

The $\Copara$ construction arises as a systematic dualisation of the bicategory $\Para$.
Because of its central role in describing bidirectional processes, we take special care in unpacking its details.

\begin{definition}[{Coparametric maps (compare \cite[Rem. 4]{capucci_towards_2022})}]
  \label{def:copara}
  Let $(\cM, \otimes, I)$ be a monoidal category, and $(\cC, \act)$ a left $\cM$-actegory.
  We define the bicategory $\Copara_{\act}(\cC)$ with the following data:
\begin{itemize}
\item \textbf{Objects} are those of $\cC$;
\item \textbf{Morphisms} are often referred to as \emph{coparametric morphisms} (\cref{fig:copara_morphism}). A coparametric morphism $A \to B$ is a pair $(P, f)$ where
  $P : \cM$ and $f : A \to P \act B$ is a morphism in $\cC$.
  As parametric morphisms, coparametric ones also have a horizontal and a vertical component (\cref{fig:copara_morphism}).
  \begin{figure}[H]
    \scaletikzfig[0.8]{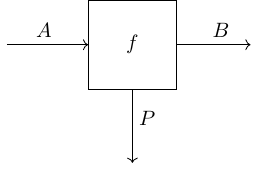}
    \caption{String diagram representation of a coparametric morphism. As with parametric maps (\cref{fig:para_morphism}), here we also draw the coparameter wire vertically, but \emph{below}.}
    \label{fig:copara_morphism}
  \end{figure}
\item \textbf{2-morphisms} are called \emph{reparameterisations}.
  A 2-morphism from $(P, f) \Rightarrow (P', f')$ is a morphism $r : P \to P'$ in $\cM$ such that the following diagram commutes in $\cC$.
  As with parametric morphisms, we will often write $f_r$ for the reparameterisation of $f$ with $r$, where $r$ now appears as a subscript instead of a superscript.
  
  \begin{minipage}{0.55\textwidth}
    \begin{figure}[H]
      \scaletikzfig[0.8]{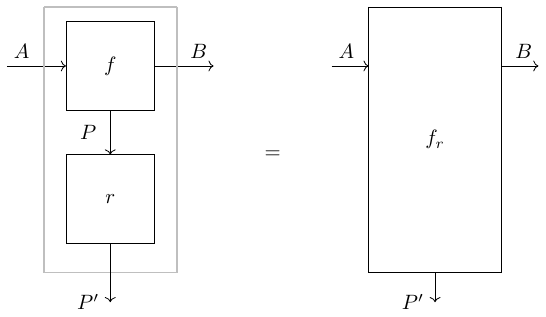}
      \caption{String diagram of reparameterisation. As with \cref{fig:para_reparam}, we exploit the second dimension. Note that here 2-cells point in a different direction than with $\Para$!}
      \label{fig:copara_reparam}
    \end{figure}
  \end{minipage}
  \begin{minipage}{0.45\textwidth}
    \begin{equation}
      \label{eq:copara_triangle_reparam}
      \begin{tikzcddiag}[ampersand replacement=\&]
        A \& {P \act B} \\
        \& {P' \act B}
        \arrow["{r \act B}", from=1-2, to=2-2]
        \arrow["{f'}"', from=1-1, to=2-2]
        \arrow["f", from=1-1, to=1-2]
      \end{tikzcddiag}
    \end{equation}
  \end{minipage}
\item \textbf{Identity morphism.} For every object $A$ there is an identity coparametric morphism $(I, \eta_A)$ where $I : \cM$ and $\eta_A : I \act A \to A$ is the unitor of the underlying actegory;
\item \textbf{Morphism composition.} The composition of coparametric morphisms
  $$
  A \xrightarrow{\quad (P, f)\quad} B \xrightarrow{\quad (Q, g)\quad} C
  $$
  i.e.\ of
  \begin{equation*}
\begin{aligned}
    &P : \cM\\
    &A \xrightarrow{f} P \act B
\end{aligned}
\qquad\qquad\text{and}\qquad\qquad
\begin{aligned}
    &Q : \cM\\
    &B \xrightarrow{g} Q \act C
\end{aligned}  
\end{equation*}
is given by $(P \otimes Q, f \compCopara{\comp} g)$ where
\begin{align*}
  &P \otimes Q : \cM&\\
  &f \compCopara{\comp} g \coloneqq \boxed{A \xrightarrow{f} P \act B \xrightarrow{P \act g} P \act (Q \act C) \xrightarrow{\mu^{-1}_{P, Q, C}}  (P \otimes Q) \act C} &\text{in} \quad \cC
\end{align*}
\begin{figure}[H]
  \scaletikzfig[0.8]{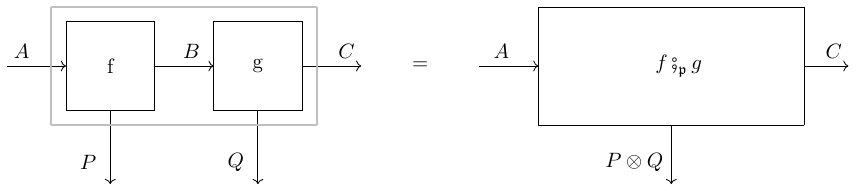}
  \caption{String diagram representation of the composition of coparametric morphisms, following the same organisational principles as \cref{fig:para_composition}.}
  \label{fig:copara_composition}
\end{figure}
\end{itemize}
This concludes the definition of the bicategory $\Copara_{\act}(\cC)$.
\end{definition}

Unlike with $\Para$, here we require $(\cC, \actfw)$ to be a \emph{left} actegory, inspired by the idea that, despite the equivalence of left and right actegories when $\cM$ is braided (\cref{lemma:braided_right_to_left}) (which is the  assumption peremeating this thesis) it is good \emph{mental hygiene} to distinguish between the two (\cref{rem:act_left_vs_right}).
In the same manner that left and right actegories become equivalent under braiding, so do $\Para$ and $\Copara$ constructions thereof.

\begin{proposition}[Relation between $\Para$ and $\Copara$]
  \label{prop:relation_para_copara}
  Let $(\cC, \act)$ be a left $\cM$-actegory where $\cM$ is braided.
  Then
  \[
    \Para_{\act}(\cC^{\op}) \cong \Copara_{\act}(\cC)^{\co\op} \quad \text{and}\quad \Copara_{\act}(\cC^{\op}) \cong \Para_{\act}(\cC)^{\co\op}
  \]
  where we're overloading $\act$ to also include the action from the right given by (\cref{lemma:braided_right_to_left}).
\end{proposition}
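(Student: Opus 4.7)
The plan is to prove the first isomorphism directly by constructing an explicit iso of bicategories, then deduce the second by formal substitution and dualisation. Throughout, the braiding on $\cM$ is what lets us translate between the left/right actegory conventions used by $\Copara$ and $\Para$ respectively (\cref{lemma:braided_right_to_left}), and also between the opposing tensor orders that arise when composition is reversed. I expect the isomorphism to hold on the nose at the level of objects, 1-cells and 2-cells (after unpacking the identifications), with braiding entering only in matching up the composition laws.

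First I would unpack the hom-data. On objects all four bicategories share the objects of $\cC$. A 1-cell $A \to B$ in $\Para_{\act}(\cC^{\op})$ is a pair $(P, f)$ with $f : A \act P \to B$ in $\cC^{\op}$, i.e.\ $f : B \to P \act A$ in $\cC$ once we use braiding to translate the right action $A \act P$ into the left action $P \act A$. A 1-cell $A \to B$ in $\Copara_{\act}(\cC)^{\co\op}$ is, by the definition of $\co\op$, a 1-cell $B \to A$ in $\Copara_{\act}(\cC)$, i.e.\ a pair $(P, g)$ with $g : B \to P \act A$. The bijection on 1-cells is then the identity $(P, f) \mapsto (P, f)$. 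For 2-cells, a reparameterisation in $\Para_{\act}(\cC^{\op})$ is an arrow $r : P' \to P$ of $\cM$ whose compatibility triangle lives in $\cC^{\op}$, and thus reverses in $\cC$; a 2-cell in $\Copara_{\act}(\cC)^{\co\op}$ is, by $\co\op$, a 2-cell $(P', f') \Rightarrow (P, f)$ in $\Copara_{\act}(\cC)$, which is again a map $r : P' \to P$ in $\cM$. A direct diagram check shows that the triangle in $\Para_{\act}(\cC^{\op})$ reverses in $\cC$ to exactly the triangle of $\Copara_{\act}(\cC)$.

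Next I would check identities and composition. The identity 1-cell at $A$ is $(I, \eta_A^{-1})$ on the $\Para$ side and $(I, \eta_A)$ on the $\Copara$ side, and these agree after reversing arrows in $\cC^{\op}$. For composition, given composable $(P, f)$ and $(Q, g)$, composition in $\Para_{\act}(\cC^{\op})$ produces parameter $P \otimes Q$ via the $\Para$ multiplicator $\mu$, whereas composing the same data in $\Copara_{\act}(\cC)^{\co\op}$ reverses the order of composition in $\Copara$ and so produces parameter $Q \otimes P$ via the $\Copara$ multiplicator $\mu^{-1}$. The reconciliation is given by the braiding $\beta_{Q, P} : Q \otimes P \xrightarrow{\cong} P \otimes Q$ in $\cM$, which is precisely the data invoked in \cref{lemma:braided_right_to_left} when translating a left action over $\otimes$ into a right action over $\otimes$. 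Verifying that the two composite implementations agree modulo this braiding reduces to the coherence diagram already established in the proof of \cref{lemma:braided_right_to_left}, together with the naturality of $\eta$ and $\mu$.

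The hard part is this last coherence check: tracking precisely how $\mu$ on the $\Para$ side and $\mu^{-1}$ on the $\Copara$ side interact with $\beta$, and confirming that the bicategorical associator and unitor data on both sides coincide under these identifications. Once composition is settled, the pseudofunctor coherence for associator and unitors follows automatically because both sides inherit their coherence from the single underlying actegory data of $(\cC, \act)$ and the braiding axioms of $\cM$. Finally, the second isomorphism is obtained for free by substituting $\cC \mapsto \cC^{\op}$ in the first to get $\Para_{\act}(\cC) \cong \Copara_{\act}(\cC^{\op})^{\co\op}$ and then applying $(-)^{\co\op}$ to both sides, using that $\co\op$ is an involution on bicategories.
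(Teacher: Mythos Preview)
Your proposal is correct and follows essentially the same approach as the paper: identify the hom-categories by unwinding the definitions, observe that the $^{\co}$ accounts for the opposite direction of 2-cells in $\Para$ versus $\Copara$, and note that braiding reconciles the mismatch $P \otimes Q$ versus $Q \otimes P$ arising in composition. The paper's own argument is terser (it presents both isomorphisms as parallel hom-set calculations and remarks that braiding enters only in composition), and it proves the two isomorphisms side by side rather than deducing the second from the first, but the substance is the same.
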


\begin{proof}
  We prove both claims side by side.
  
  \begin{minipage}[t]{0.45\linewidth}
    \begin{equation*}
      \begin{aligned}
      & \Para_{\act}(\cC^{\op})(A, B)\\
      =& \sum_{P : \cM}\cC^{\op}(A \act P, B)\\
      =& \sum_{P : \cM}\cC(B, A \act P)\\
      =& \Copara_{\act}(\cC)^\co(B, A)\\
      =& \Copara_{\act}(\cC)^{\co\op}(A, B)
      \end{aligned}
    \end{equation*}
  \end{minipage}
  \hfill
  \begin{minipage}[t]{0.45\linewidth}
    \begin{equation*}
      \begin{aligned}
      & \Copara_{\act}(\cC^{\op})(A, B)\\
      =& \sum_{M : \cM}\cC^{\op}(A, M \act B)\\
      =& \sum_{P : \cM}\cC(M \act B, A)\\
      =& \Para_{\act}(\cC)^\co(B, A)\\
      =& \Para_{\act}(\cC)^{\co\op}(A, B)
      \end{aligned}
    \end{equation*}
  \end{minipage}
  \\

  The reason $^{\co}$ appears is because 2-cells in $\Para$ and $\Copara$ point in different directions.
  Where braiding comes in play is in morphism composition -- when we need to change the order in which the composite $M \otimes N$ acts on an object of $\cC$.
\end{proof}

The above proposition allows us to translate all the theorems and propositions about $\Para$ to theorems about $\Copara$.
This includes (\cref{prop:para_local_cat_elements}) which, translated to $\Copara$ in \cref{prop:copara_local_cat_elements}, tells us that locally the hom-category of $\Copara$ can also be computed as a category of elements, albeit now of a functor whose domain is $\cM$ and not $\cM^{\op}$.

\begin{proposition}
  \label{prop:copara_local_cat_elements}
  Let $(\cC, \act)$ be a $\cM$-actegory.
  Let $A, B : \cC$. Then
  \[
    \Copara_{\act}(\cC)(A, B) = \El(\cC(A, B \act - )) = \sum_{P : \cM}\cC(A, B \act P)
  \]
\end{proposition}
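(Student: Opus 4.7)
The plan is to prove this by directly unpacking both sides to reveal an identity of categories, mirroring the proof strategy for the analogous $\Para$ statement (\cref{prop:para_local_cat_elements}). This is essentially a definition chase: I would verify that the data and equations defining objects and 2-cells of $\Copara_{\act}(\cC)(A, B)$ coincide, on the nose, with the data and equations defining objects and morphisms of $\El(\cC(A, B \act -))$.

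On objects, \cref{def:copara} tells us an object of $\Copara_{\act}(\cC)(A, B)$ is a pair $(P, f)$ with $P : \cM$ and $f$ a morphism $A \to B \act P$ in $\cC$, adopting the convention --- stated earlier in the chapter --- that in braided settings we suppress the distinction between the left and right action. The category of elements $\El(\cC(A, B \act -))$ of the copresheaf $\cC(A, B \act -) : \cM \to \Set$ has as objects pairs $(P, f)$ with $P : \cM$ and $f \in \cC(A, B \act P)$, which matches exactly. On morphisms, a 2-cell $(P, f) \Rightarrow (P', f')$ in $\Copara_{\act}(\cC)(A, B)$ is a map $r : P \to P'$ in $\cM$ making \eqref{eq:copara_triangle_reparam} commute, i.e.\ satisfying $(B \act r) \comp f = f'$, while a morphism in $\El(\cC(A, B \act -))$ is a map $r : P \to P'$ such that the functorial action $\cC(A, B \act r)$ sends $f$ to $f'$ --- the very same equation. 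Composition and identities in both categories reduce to composition and identities in $\cM$, so the bijection is strictly functorial.

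The main thing to watch for is notational rather than mathematical: $\Copara$ is defined for \emph{left} $\cM$-actegories, whereas the formula $\cC(A, B \act P)$ reads most naturally as a right action. This is reconciled either by invoking \cref{lemma:braided_right_to_left} when $\cM$ is braided, or simply by following the author's stated convention of omitting the notational distinction between the two actions. An alternative, more abstract route is available via \cref{prop:relation_para_copara}, transporting \cref{prop:para_local_cat_elements} through the duality $\Copara_{\act}(\cC) \cong \Para_{\act}(\cC^{\op})^{\co\op}$; however, this detour requires braiding as a hypothesis, so I would favour the direct definition chase above, which needs no such assumption.
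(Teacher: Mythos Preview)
Your proposal is correct. The paper does not give an explicit proof of this proposition; instead, in the sentence immediately preceding it, the paper simply declares that \cref{prop:relation_para_copara} ``allows us to translate all the theorems and propositions about $\Para$ to theorems about $\Copara$'', and presents \cref{prop:copara_local_cat_elements} as the translation of \cref{prop:para_local_cat_elements}. This is exactly the alternative route you identify at the end of your proposal. Your preferred direct definition-chase is a genuinely different argument: it is more elementary and, as you note, does not rely on the braiding hypothesis that the duality $\Copara_{\act}(\cC) \cong \Para_{\act}(\cC^{\op})^{\co\op}$ requires. The paper's route is shorter given what has already been established, but your direct approach actually justifies the statement at the generality in which it is written (no braiding assumed). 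Your observation about the $P \act B$ versus $B \act P$ notational mismatch is also apt; the paper is indeed leaning on its earlier convention of suppressing the left/right distinction.
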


This means that the trivialisation theorem (\cref{thm:para_trivialises_when_unit_initial}) now holds for the case of $\cM$ with a monoidal unit which is \emph{terminal}, instead of initial.\footnote{Intuitively, notice that any coparametric map $f : A \to M \act B$ can be reparameterised with $\terminal_M : M \to 1$ yielding $f \comp (\terminal_M \act B)$ which under connected components becomes equivalent to $f$.}
This yields the following theorem, and corollary.

\begin{theorem}
  \label{thm:copara_trivialises_when_unit_initial}
  Let $(\cC, \act)$ be a $\cM$-actegory where the monoidal unit of $\cM$ is terminal.
  Then
  \[
    \pibc(\Copara_{\act}(\cC)) \cong \cC
  \]
\end{theorem}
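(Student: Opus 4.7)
The plan is to mirror the proof of \cref{thm:para_trivialises_when_unit_initial} almost verbatim, exploiting the structural duality between $\Para$ and $\Copara$ encoded in \cref{prop:copara_local_cat_elements}. Since $\pibc(\Copara_{\act}(\cC))$ and $\cC$ share objects on the nose, it suffices to exhibit a natural bijection on hom-sets compatible with identities and composition.

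On the level of hom-sets, I would first unfold $\pibc(\Copara_{\act}(\cC))(A, B)$ as $\conncomp(\El(\cC(A, B \act -)))$ using \cref{prop:copara_local_cat_elements}, then recognise this as the colimit $\colim_{\cM} \cC(A, B \act -)$ via \cref{prop:colim_conn_comp_of_el}. The essential difference from the $\Para$ side is that the indexing category is now $\cM$ itself rather than $\cM^{\op}$, so the hypothesis required to collapse the colimit to a single value is that $\cM$ admits a terminal object --- which is exactly what the theorem assumes. Applying \cref{prop:colim_terminal} then yields $\colim_{\cM} \cC(A, B \act -) \cong \cC(A, B \act I)$, and the actegory unitor $\eta_B$ (used up to the braided overloading of left and right actions already implicit in the statement of \cref{prop:copara_local_cat_elements}) collapses this to $\cC(A, B)$. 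As in \cref{thm:para_trivialises_when_unit_initial}, I do not expect the converse to hold in general, since \cref{prop:colim_terminal} is one-directional.

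The main obstacle, as in the $\Para$ case, is purely bureaucratic: checking that this pointwise bijection assembles into a functor, i.e.\ that identities are sent to identities and composites to composites. The identity coparametric morphism on $A$ is $(I, \eta_A)$, which maps directly to $\id_A$ under the unitor reduction. For composition, the multiplicator $\mu^{-1}_{P,Q,C}$ appearing in $f \compCopara{\comp} g$ must be shown to be absorbed when both $P$ and $Q$ collapse, via their unique reparameterisations to the terminal $I$, through the coherence axioms of \cref{def:actegory}. I do not anticipate any real difficulty --- it is the same diagram chase one performs on the $\Para$ side, read upside-down.
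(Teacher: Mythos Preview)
Your proposal is correct and is essentially the same approach as the paper's: the paper does not write out an explicit proof of \cref{thm:copara_trivialises_when_unit_initial} at all, instead deriving it from \cref{thm:para_trivialises_when_unit_initial} via the duality encoded in \cref{prop:relation_para_copara} and \cref{prop:copara_local_cat_elements}, with only an intuitive footnote about reparameterising along $\terminal_M : M \to 1$. Your write-up is precisely the dualised version of the \cref{thm:para_trivialises_when_unit_initial} proof that the paper leaves implicit, using the same chain (\cref{prop:copara_local_cat_elements}, \cref{prop:colim_conn_comp_of_el}, \cref{prop:colim_terminal}, actegory unitor) with the indexing category switched from $\cM^{\op}$ to $\cM$.
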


\begin{corollary}
  \label{corr:cart_trivialises}
  Let $\cC$ be a cartesian category. Then $\pibc(\Copara(\cC)) \cong \cC$.
\end{corollary}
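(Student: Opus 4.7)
The plan is to recognise this as an immediate specialisation of Theorem~\ref{thm:copara_trivialises_when_unit_initial}. In the notation of that theorem, we take the acting monoidal category $\cM$ to be $\cC$ itself, equipped with its cartesian monoidal structure $(\times, 1)$, and the action $\act : \cC \times \cC \to \cC$ to be the self-action from the left described in Example~\ref{ex:self_action}. By hypothesis $\cC$ is cartesian, so its monoidal unit is the terminal object $1$; in particular, the unit is terminal in the sense required by the hypothesis of Theorem~\ref{thm:copara_trivialises_when_unit_initial}.

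With this identification, the conclusion $\pibc(\Copara_{\act}(\cC)) \cong \cC$ of the theorem is precisely the statement $\pibc(\Copara(\cC)) \cong \cC$ of the corollary, since we adopt the convention that $\Copara(\cC)$ without subscript denotes the $\Copara$ bicategory of the self-action (dual to the convention for $\Para$ noted just after Definition~\ref{def:para}).

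The only nontrivial point, and the sole place any verification is needed, is to confirm that the self-action of a cartesian $\cC$ really does fall within the scope of the theorem, which reduces to the one-line observation that the cartesian monoidal unit $1$ is terminal, together with the fact that the dual trivialisation Theorem~\ref{thm:copara_trivialises_when_unit_initial} is already established. I expect no obstacle: the corollary is essentially a repackaging, analogous to how Corollary~\ref{corr:cocart_trivialises} was obtained from Theorem~\ref{thm:para_trivialises_when_unit_initial} in the cocartesian case.
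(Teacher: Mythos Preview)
Your proposal is correct and matches the paper's approach exactly: the corollary is stated immediately after Theorem~\ref{thm:copara_trivialises_when_unit_initial} with no separate proof, so the intended argument is precisely the specialisation to the cartesian self-action that you describe, dual to how Corollary~\ref{corr:cocart_trivialises} follows from Theorem~\ref{thm:para_trivialises_when_unit_initial}.
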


As we will see in \cref{rem:cart_act_dont_trivialise}, $\Copara(\cC)$ will not trivialise for a generalisation of a cartesian category to the setting of actegories.

\subsection{Forget, then act: a piece of notation}
\label{subsec:a_piece_of_notation}

We are almost ready to define the category of bidirectional processes.
We just need to introduce one important piece of notation.
This notation will capture the idea that we are mostly interested in the $\Copara$ construction on actegories of a particular form.\footnote{The entire story holds for $\Para$ as well, but is less relevant for us.}

Consider a $\cE$-actegory $(\cC, \act)$ for some monoidal category $\cE$.
Think about the role of an object $E$ in a coparametric morphism $(E : \cE, f : X \to E \act Y$).
Even though it appears in the codomain of $f$, it really is a \emph{part of the data of the coparametric morphism}.
And it is a part of the data which determines \emph{the type} of the implementation map $f$.
This determination can happen in many different ways, and the type of implementation maps may end up being the same for different coparameters $E$ and $E'$.
For instance, this is the case when the actions of $E$ and $E'$ on $B$ are equal (i.e.\ $E \act Y = E' \act Y$), in which case we have forgotten something about $E$ and $E'$ in the action.
This suggests that often, our action should really be of the form $\pi(E) \act Y$, where $\pi : \cE \to \cB$ is some kind of a forgetful functor, and $\bullet$ is of type $\cB \times \cC \to \cC$.
When is this the case?

This happens under two conditions.
The first one is that the category $\cE$ arises as the category of elements $\El(W)$ (\cref{def:cat_of_elem}) of some functor $W : \cB^{\op} \to \Set$, where $\El(W)$ is how we will refer to $\cE$ from now on.
This substantiates the idea that an object $E : \El(W)$ consists of two parts --- a part $B : \cB$ which acts on an object of $\cC$, and a part $B_W : W(B)$ which only serves as an inert piece of additional data.
In this case we have a canonical functor $\El(W) \xrightarrow{\pi_W} \cB^{\op}$ which forgets these additional pieces of data\footnote{This functor is a \emph{fibration}, a concept we mention for completeness but do not touch upon in this thesis.}, allowing us to form the functor $\act^{\pi_W^{\op}}$ below (where we're using the notation for reparameterisation of actegories from \cref{def:reparameterisation}).

\begin{equation}
    \label{eq:actegory_weight_projection}
    \act^{\pi_W^{\op}} = \boxed{\El(W)^{\op} \times \cC \xrightarrow{\pi^{\op}_W \times \cC} \cB \times \cC \xrightarrow{\act} \cC}
\end{equation}

This brings us to the second condition.
For this to be a valid reparameterisation of actegories (\cref{def:reparameterisation}), $\pi_W^{\op}$ needs to be a strong monoidal functor.
And that indeed happens if $W$ is lax monoidal (\cref{prop:monoidal_structure_category_of_elements}).

With these two conditions we get a a $\El(W)^{\op}$-actegory $\cC$ which explicitly acknowledges these two components in the acting category.
More specifically, by instantiating $\Copara_{\act^{\pi_W^{\op}}}(\cC)$ we can see that a coparametric morphism $X \to Y$ now consists of a parameter object $(B, B_W)$ (where $B : \cB$ and $B_W : W(B)$), and an implementation map $f : X \to B \actfw X$ which is independent of the choice of $B_W$.
Additionally, we will often be interested in the coparametric \emph{category}, and one that arises out of the connected components quotient.
This is why we introduce the following piece of notation.

\begin{equation}
  \label{def:coparaw}
  \CoparaW{W}{\act}(\cC) \coloneqq \pibc(\Copara_{\act^{\pi_W^{\op}}}(\cC))
\end{equation}

This piece of notation will be central in our definition of optics.
It describes a \emph{weighted} coparametric category of an action $\act$ and weight $W$, where in the presence of the superscript $W$ on $\Copara$ the subscript action $\act$ is meant to be reparameterised by $\pi_W$.
The reason we use the adjective ``weighted'' is because, analogously to \cref{prop:para_local_cat_elements} which tells us that the connected components quotient locally computes a colimit of the functor $\cC(X, - \act Y)$, here we will see that the same quotient ends up computing a colimit of $\cC(X, - \act Y)$ \emph{weighted} by $W$ (\cref{prop:copara_weighted_colimit}).

\begin{proposition}
  \label{prop:copara_weighted_colimit}
  Let $X, Y : \CoparaW{W}{\act}(\cC)$.
  Then
  \[
    \CoparaW{W}{\act}(\cC)(X, Y) \cong \colim^W\cC(X, - \act Y)
  \]
\end{proposition}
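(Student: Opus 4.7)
The plan is to trace through the definitions and combine three ingredients: the local-hom description of $\Copara$, the standard reduction of connected components to colimits, and the coend presentation of weighted colimits. By \cref{def:coparaw}, $\CoparaW{W}{\act}(\cC)(X, Y)$ is $\pibc$ of the hom-category $\Copara_{\act^{\pi_W^{\op}}}(\cC)(X, Y)$. Applying \cref{prop:copara_local_cat_elements} to the reparameterised actegory identifies this hom-category with $\El\bigl(\cC(X, -\act Y)\comp\pi_W^{\op}\bigr)$, the category of elements of the functor $\El(W)^{\op} \to \Set$ sending $(B, w) \mapsto \cC(X, B \act Y)$ (independently of $w$) and a reparameterisation $h : B \to B'$ to post-composition with $h \act Y$.

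Next, I would apply \cref{prop:colim_conn_comp_of_el} to pass from connected components to an ordinary colimit in $\Set$:
\[
    \pibc \El\bigl(\cC(X, -\act Y)\comp\pi_W^{\op}\bigr) \cong \colim_{\El(W)^{\op}} \bigl(\cC(X, -\act Y)\comp\pi_W^{\op}\bigr).
\]
The remaining step is to identify the right-hand side with the weighted colimit $\colim^W \cC(X, -\act Y)$. Here I would invoke the coend presentation from \cref{prop:weight_colimit_as_coend}, which writes the weighted colimit as $\int^{B : \cB} W(B) \times \cC(X, B \act Y)$. Both expressions are quotients of $\sum_{(B, w) : \El(W)} \cC(X, B \act Y)$; the coend's defining relation identifies $(B, W(f)(w'), g)$ with $\bigl(B', w', (f \act Y) \comp g\bigr)$ for every $f : B \to B'$ in $\cB$, which is exactly the generator produced by a morphism of $\El(W)^{\op}$ in our setup.

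The main obstacle, and the only nontrivial bookkeeping, is tracking variance across the paper's convention $\pi_W : \El(W) \to \cB^{\op}$ rather than the more usual $\El(W) \to \cB$: one must carefully verify the direction of the generators through $\pi_W^{\op}$ and $\El(W)^{\op}$ so that the two quotient relations line up on the nose. Once this matching is established, the natural transformations gluing the three isomorphisms together are transparent from functoriality of each intermediate step, yielding naturality in both $X$ and $Y$.
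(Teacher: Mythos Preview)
Your proposal is correct and follows essentially the same route as the paper: unfold the definition, apply \cref{prop:copara_local_cat_elements}, apply \cref{prop:colim_conn_comp_of_el}, and then identify the resulting colimit over $\El(W)^{\op}$ with the $W$-weighted colimit. The only difference is in the final step: the paper invokes \cref{prop:from_set_weighted_colimits_to_colimits} directly, which packages exactly the identification $\colim^W F \cong \colim(\El(W)^{\op} \xrightarrow{\pi_W^{\op}} \cB \xrightarrow{F} \Set)$ you need, whereas you re-derive this by passing through the coend presentation (\cref{prop:weight_colimit_as_coend}) and matching quotient relations by hand; both are valid, but citing the ready-made proposition is cleaner.
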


\begin{proof}
  The proof below relies crucially on identification the connected components of the category of the elements with the colimit, and identification of $\Set$-colimits of a particular form with weighted colimits (\cref{prop:from_set_weighted_colimits_to_colimits}).
  \begin{alignat*}{2}
    & && \CoparaW{W}{\act}(\cC)(X, Y)\\
    &\text{(Def.)}   &=& \conncomp(\Copara_{\act^{\pi_W}}(\cC)(X, Y))\\
    &\text{(\cref{prop:copara_local_cat_elements})}   &\cong& \pibc(\El(\El(W)^{\op} \xrightarrow{\pi^{\op}_W} \cB \xrightarrow{\cC(X, - \act Y)} \Set) \\
    &\text{(\cref{prop:colim_conn_comp_of_el})} &\cong& \colim(\El(W)^{\op} \xrightarrow{\pi^{\op}_W} \cB \xrightarrow{\cC(X, - \act Y)} \Set) \\
    &\text{(\cref{prop:from_set_weighted_colimits_to_colimits})}   &\cong& \colim^W \cC(X, - \act Y) \tag*{\qedhere}
  \end{alignat*}
\end{proof}

\begin{remark}
  We've defined $\CoparaW{W}{\act}$ using the projection from the category of elements of $W$.
  This was only possible because the weight is a functor into $\Set$, i.e.\ because we are dealing with $\Set$-enriched categories.
  The fact that in $\Set$ this construction ends up being isomorphic to a colimit weighted by $W$ (\cref{prop:copara_weighted_colimit}) suggests an avenue for generalisation to a setting enriched in an arbitrary monoidal category $\cV$: by taking the aforementioned proposition as a definition.
\end{remark}

\section{Weighted optics}
\label{sec:weighted_optics}

We are now in position to understand one of the central definitions of this thesis: weighted optics.
The first step is to fix a ``forward'' left $\cM$-actegory $(\cC, \actfw)$ and a ``backward'' left $\cN$-actegory $(\cD, \actbw)$, and form their product.

\begin{proposition}[Product of actegories, {\cite[Prop. 4.2.1.]{capucci_actegories_2023}}]
  \label{prop:product_of_actegories}
  Fix the following.
  \begin{center}
    A $\cM$-actegory $(\cC, \actfw)$\\
    A $\cM'$-actegory $(\cD, \actbw)$
  \end{center}
  Their cartesian product is a $\cM \times \cM'$-actegory $(\cC \times \cD, \diset{\actfw}{\actbw})$ where the action is defined as

  \begin{equation}
    \label{eq:parallel_action}
    \diset{\actfw}{\actbw} \coloneqq \boxed{\cM \times \cM' \times \cC \times \cD \xrightarrow{\cM \times \swap \times \cD} \cM \times \cC \times \cM' \times \cD \xrightarrow{\actfw \times \actbw} \cC \times \cD}
  \end{equation}
\end{proposition}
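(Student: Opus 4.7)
The plan is to recognise this as an instance of the general fact that an algebraic structure specified by equational data is closed under products, and to verify the three pieces of actegory data pointwise. The work reduces to showing that (i) $\diset{\actfw}{\actbw}$ is a functor, (ii) appropriate unitor and multiplicator natural isomorphisms can be assembled from those of the factors, and (iii) the coherence laws from \cref{def:actegory} split componentwise.

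First I would note that $\diset{\actfw}{\actbw}$ is manifestly a functor, being the composite of the swap isomorphism with the product functor $\actfw \times \actbw$; the role of the swap is purely bookkeeping, to interleave the acting objects with the acted-upon ones so that the pair $(M, M') : \cM \times \cM'$ acts on $(C, D) : \cC \times \cD$ componentwise as $(M \actfw C, M' \actbw D)$. Next, since the monoidal unit of $\cM \times \cM'$ is $(I, I')$ and its monoidal product is componentwise $(M, M') \otimes (N, N') = (M \otimes N, M' \otimes N')$, the required unitor at $(C, D)$ is an isomorphism $(C, D) \cong (I \actfw C, I' \actbw D)$, which I take to be the pair $(\eta^{\actfw}_C, \eta^{\actbw}_D)$; analogously, the multiplicator at $((M, M'), (N, N'), (C, D))$ is the pair $(\mu^{\actfw}_{M, N, C}, \mu^{\actbw}_{M', N', D})$. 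Both are natural isomorphisms since naturality and invertibility in a product category hold iff they hold componentwise.

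Finally, I would check the actegory coherence axioms (the pentagon- and triangle-style diagrams referred to in \cref{eq:actegory_coherence}). Each such axiom is a commuting diagram whose vertices are objects of $\cC \times \cD$ and whose edges are built out of $\eta$, $\mu$, and morphisms in $\cC \times \cD$; since every edge is defined componentwise, the diagram commutes in $\cC \times \cD$ iff its first coordinate commutes in $\cC$ and its second in $\cD$. Both hold by the hypothesis that $(\cC, \actfw)$ and $(\cD, \actbw)$ are actegories.

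The only potential obstacle is purely notational: keeping the swap in the right place so that the monoidal structure on $\cM \times \cM'$ interacts correctly with the componentwise action. No genuine categorical difficulty arises, which is exactly what one expects --- the proposition is a ``products of models are models'' statement, made slightly more visible by the reindexing via $\swap$.
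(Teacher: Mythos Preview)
Your proposal is correct. The paper does not supply its own proof of this proposition; it merely states the result and defers to \cite[Prop.~4.2.1]{capucci_actegories_2023}. The componentwise verification you outline --- defining the unitor and multiplicator as the pairs $(\eta^{\actfw}, \eta^{\actbw})$ and $(\mu^{\actfw}, \mu^{\actbw})$, then observing that each coherence diagram in $\cC \times \cD$ splits into one diagram per factor --- is exactly the standard ``products of models are models'' argument and is what one would expect the cited reference to contain.
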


A sensible attempt at defining these bidirectional processes is to compute the $\Copara$ construction of this product actegory.\footnote{Recall that a $\cM$-actegory $\act : \cM \times \cD \to \cD$ gives rise to a $\cM^{\op}$ actegory $\cD^{\op}$ given by $\cM^{\op} \times \cD^{\op} \to \cD^{\op}$.}
That is, we can dualise the second actegory first, and then take the product --- forming a $\cM \times \cM'^{\op}$-actegory $(\cC \times \cD^{\op}, \diset{\actfw}{\actbw^{\op}})$.
However, simply taking $\Copara$ of this actegory will not suffice.
Let's see why that is the case.

A coparametric morphism in $\Copara_{\diset{\actfw}{\actbw^{\op}}}(\cC \times \cD^{\op})$ consists of two choices of coparameters $M : \cM$ and $M' : \cM'$ and two maps $f : \mi \to M \actfw \mo$ and $f' : \mo ' \actbw M' \to \mi '$, depicted in \cref{fig:missing_internal_state}.

\begin{figure}[h]
  \scaletikzfig[0.8]{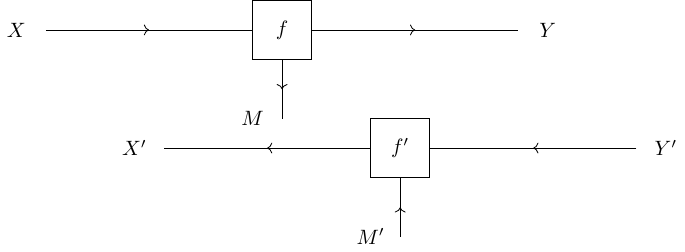}
  \caption{Graphic of this coparametric category showing its missing internal state.}
  \label{fig:missing_internal_state}
\end{figure}

But this is problematic --- as there is nothing connecting $M$ and $M'$.
That is, given any way of turning a $Y$ into a $Y'$, using $f$ and $f'$ we can't turn $X$ into an $X'$ because we are missing this piece of connecting tissue.
This extra piece of data, we can also see, would not impact what the action $\diset{\actfw}{\actbw}$ does --- it would simply be a part of the data of the morphism that would enable us to make sense of bidirectionality this way.
How do we formally describe this?
As hinted by \cref{subsec:a_piece_of_notation}: we can do that with weighted Copara.

\begin{figure}[h]
  \scaletikzfig[0.8]{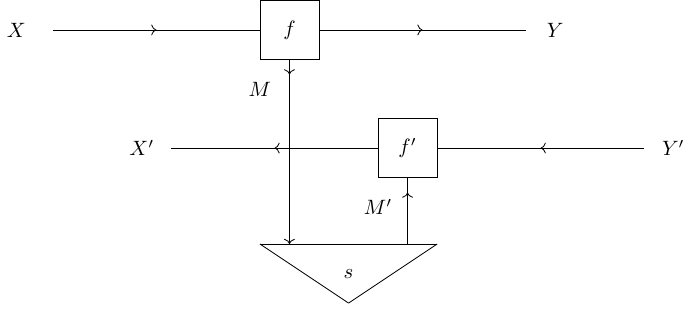}
  \caption{Graphic of the putative correct coparametric category (optic, not a category) augmented with the data of a ``weight''.}
  \label{fig:putative_correct_internal_state}
\end{figure}

Since we need this for any possible residual, this can be included as an additional piece of data in the definition of a bidirectional process: a functor $W : \cM \times \cM' \to \Set$ which, given any two objects $(M, M')$ gives us a set of ``connecting tissues'' between $M$ and $M'$, where $s$ in \cref{fig:putative_correct_internal_state} above is an element of this set.
This results in a description of this bidirectional process which is ``closed off'' in the vertical dimension by $W$ which we think of as the costate.
As we will see below, in order for everything to be well-defined we will need $W$ to be a lax monoidal functor with respect to the induced monoidal structure on the domain, and the product monoidal structure of $\Set$ on the codomain.

This brings us to the central definition of the paper.

\begin{definition}[Category of weighted optics]
  \label{def:weighted_optic}
  Fix the following data.
  \begin{center}
    A $\cM$-actegory $(\cC, \actfw)$\\
    A $\cM'$-actegory $(\cD, \actbw)$\\
    A lax monoidal functor $(W, \phi, \epsilon) : \cM^{\op} \times \cM' \to \Set$
  \end{center}
  We define the \textbf{category of weighted optics} $\Optic^W_{\diset{\actfw}{\actbw}}$ as a weighted coparametric category
  \[
    \CoparaW{W}{\diset{\actfw}{\actbw^{\op}}}(\cC \times \cD^{\op})
  \]
\end{definition}

This construction packs a lot of punch, and we will now take the time to unpack its contents.
An object in $\Optic^W_{\diset{\actfw}{\actbw}}$ is a pair $\diset{A}{A'}$, where $A : \cC$ and $A' : \cD^{\op}$.
Following \cref{prop:copara_weighted_colimit}, the set of morphisms $\diset{A}{A'} \to \diset{B}{B'}$ is given by the weighted colimit, where we'll often use the following explicit coend representation.
\begin{align}
  \label{eq:weighted_optic_coend_form}
  \Optic^W_{\diset{\actfw}{\actbw}}\OpticHom{A}{A'}{B}{B'} =& \colim^{W}_{\diset{M}{M'}}\cC(A, M \act B) \times \cD(M' \actbw B', A') \nonumber \\
  \cong & \text{(Weighted colimit as coend (\cref{prop:weight_colimit_as_coend}))}\\
        & \int^{M : \cM, M' : \cM'} \cC(A, M \act B) \times W(M, M') \times \cD(M' \actbw B', A') \nonumber
\end{align}

Weighted optics are elements of this set, and they consist of the following three pieces of data:
\begin{itemize}
\item A coparameter $\diset{M}{M'}$ consisting of a forward and a backward residual, where $M : \cM$ and $M' : \cM'$;
\item A map $s : W(M, M')$ which we think of as the connecting tissue between the forward and the backward residuals;
\item A coparametric map $\diset{f}{f'} : (\cC \times \cD^{\op})\OpticHom{A}{A'}{M \actfw B}{M' \actbw B'}$, i.e.\ a pair of maps $f : A \to M \actfw B$ in $\cC$ and $f' : M' \actbw B' \to A'$ in $\cD$ (\cref{fig:putative_correct_internal_state}).
\end{itemize}
  
  This triple is quotiented by an equivalence relation which identifies it with any other $(\diset{N}{N'}, t, \diset{g}{g'})$ if there exists a pair
  of maps $r : M \to N$ in $\cM$ and $r' : N' \to M'$ in $\cM'$ such that
  $W(r, r')(t) = s$ and the following diagrams commute:
  \begin{equation}
    \label{eq:copara_optic_diagram}
    \begin{tikzcddiag}[ampersand replacement=\&]
        A \&\& {M \actfw B} \&\& {M' \actbw B'} \&\& {A'} \\
        \\
        \&\& {N \actfw B} \&\& {N' \actbw B'}
        \arrow["f", from=1-1, to=1-3]
        \arrow["{r \actfw B}", from=1-3, to=3-3]
        \arrow["g"', from=1-1, to=3-3]
        \arrow["{f'}", from=1-5, to=1-7]
        \arrow["{g'}"', from=3-5, to=1-7]
        \arrow["{r' \actbw B'}", from=3-5, to=1-5]
      \end{tikzcddiag}
    \end{equation}
  Identity and composition of weighted optics follow by those of $\Copara$.
  
  \textbf{Identity.} The identity optic $\diset{A}{A'} \to \diset{A}{A'}$ is the one whose coparameters are $\diset{I}{I'}$, the map connecting them is one given by the opunitor $\epsilon(\bullet) : W(I, I')$ of $W$ on the unique element $\bullet : 1$, and implementation $\diset{\eta^{\actfw}_A}{{\eta^{\actbw}}^{-1}_{A'}}$ is given by the unitors of the two actegories (\cref{fig:optic_identity}).
  \begin{figure}[H]
    \scaletikzfig[0.65]{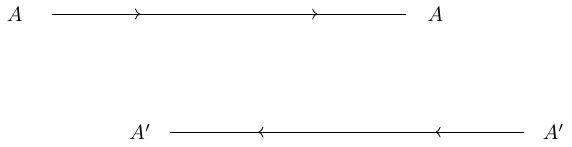}
    \caption{The identity weighted optic.}
    \label{fig:optic_identity}
  \end{figure}
  
  \textbf{Composition.} Composition of
  \[
    \diset{A}{A'} \xrightarrow{\quad (\diset{M}{M'}, s, \diset{f}{f'})\quad} \diset{B}{B'} \xrightarrow{\quad (\diset{N}{N'}, t, \diset{g}{g'})\quad} \diset{C}{C'}
  \]
  i.e.\ of
  \begin{equation*}
    \begin{aligned}
      &\diset{M}{M'} : \diset{\cM}{\cM'} \\
      &s : W(M, M') \\
      &\diset{A}{A'} \xrightarrow{\diset{f}{f'}} \diset{M \actfw B}{M' \actbw B'}
    \end{aligned}
    \qquad\text{and}\qquad
    \begin{aligned}
      &\diset{N}{N'} : \diset{\cM}{\cM'} \\
      &t : W(N, N') \\
      &\diset{B}{B'} \xrightarrow{\diset{g}{g'}} \diset{N \actfw C}{N' \actbw C'}
    \end{aligned}  
  \end{equation*}
  is a triple $(\diset{M \otimes N}{M' \otimes N'}, \phi^W((s, t)), \diset{f \compCopara{\comp} g}{g' \compPara{\comp} f'})$ where
  \begin{align*}
    &\diset{M \otimes N}{M' \otimes N'} : \diset{\cM}{\cM'} \\
    &\phi^W((s, t)) : W(M \otimes N, M' \otimes' N')
  \end{align*}
  and $f \compCopara{\comp} g : A \to (M \otimes N) \actfw C$ and $g' \compPara{\comp} f' : (M' \otimes' N') \actbw C \to A$ are compositions of coparametric and parametric maps, respectively (\cref{fig:optic_composition}).
  \begin{figure}[H]
    \scaletikzfig[0.65][1.1]{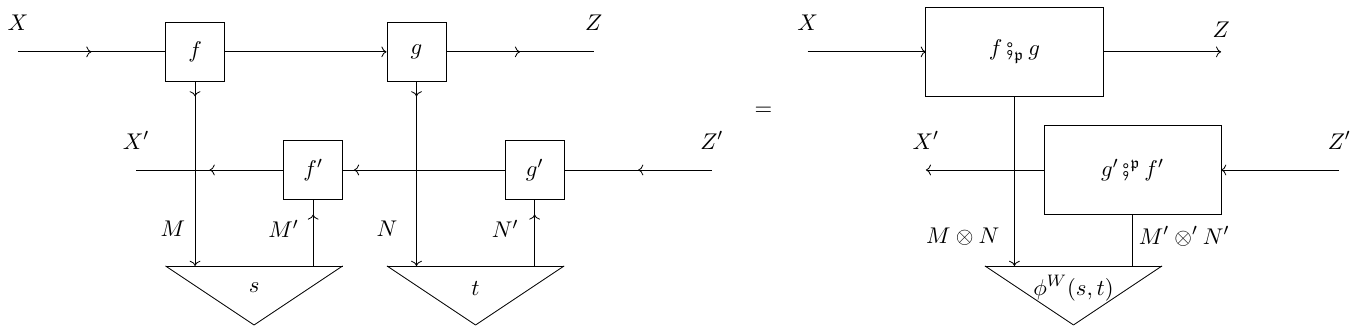}
    \caption{Composition of weighted optics.}
    \label{fig:optic_composition}
  \end{figure}
  \begin{notation}
    \label{not:optic}
    Like we did with $\Para$ and $\Copara$, we will simplify the notation for weighted optics in special cases.
    We will omit the superscript $W$ if the weight is a hom-functor (see \cref{box:mixed_optics_are_weighted_optics}).%
    If optics are given by a self-action of some monoidal category $\cC$ we will write $\Optic(\cC)$ (see \cref{ex:optics_for_products,ex:optics_for_coproducts}), trusting that it is clear from the context what the underlying monoidal structure is.
    These reductions agree with the notation in \cite{clarke_profunctor_2022, riley_categories_2018}.
  \end{notation}

  While this weighted colimit definition might seem daunting, there are a number of benefits of thinking about bidirectional processes this way.
  First of all, it's conceptual clarity.
  Weighted optics allow us to separate the formal structure of the acting categories (describing how residuals compose) from the formal structure of the forward and backward categories (describing the kinds of processes on the forward and the backward pass), from the intermediate process happening in between the forward and the backward pass.
  This conceptual clarity becomes useful in implementation of these systems on computers (\cref{ch:backpropagation}), where the posession of mathematical tools to talk about their practical aspects such as the memory footprint of their composition improves our ability to implement them correctly and efficiently.
  Second, the reduction of the definition of weighted optics to that of the $\Copara$ construction provides us with general means of proving many theorems about optics (\cref{sec:optic_monoidal,sec:optics_for_cartesian_act_fw,sec:optics_backward_closed_actegory}).
  Third, this definition suggests a generalisation to \emph{dependent optics}, an important unification of dependent lenses and optics (\cref{subsec:dependent_optics}).

  We now proceed to reduce this abstract definition into many of its concrete forms.
  Many special cases of optics were only originally defined as these reductions, each of them with their own composition rule.

  \begin{proposition}
    \label{prop:weighted_to_mixed_representable}
    Consider an optic $\Optic^W_{\diset{\actfw}{\actbw}}$  with a weight represented by a strong monoidal functor $i : \cM \to \cM'$ (resp.\ corepresented by a strong monoidal functor $j : \cM' \to \cM$).
    Then there is the following isomorphism of categories:
    \[
      \Optic^W_{\diset{\actfw}{\actbw}} \cong \Optic_{\diset{\actfw}{\actbw^i}} \quad \text{(resp.)} \quad \Optic^W_{\diset{\actfw}{\actbw}} \cong \Optic_{\diset{\actfw^j}{\actbw}}
    \]
  \end{proposition}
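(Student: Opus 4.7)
The plan is to apply the coYoneda lemma (\cref{prop:coyoneda}) to the coend presentation of the hom-object of $\Optic^W_{\diset{\actfw}{\actbw}}$ given in \cref{eq:weighted_optic_coend_form}. First I would handle the representable case where $W(M, M') = \cM'(i(M), M')$ for a strong monoidal $i : \cM \to \cM'$. Plugging this into the coend formula produces
\begin{equation*}
\int^{M : \cM, M' : \cM'} \cC(A, M \actfw B) \times \cM'(i(M), M') \times \cD(M' \actbw B', A').
\end{equation*}
I would then use Fubini to push the $M'$-integration inwards and apply coYoneda in the variable $M'$, collapsing the pair $\cM'(i(M), M') \times \cD(M' \actbw B', A')$ to $\cD(i(M) \actbw B', A') = \cD(M \actbw^{i} B', A')$. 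By the definition of the reparameterised action $\actbw^{i}$ (\cref{def:reparameterisation}), which is well-formed precisely because $i$ is strong monoidal, the result is exactly the coend presentation of $\Optic_{\diset{\actfw}{\actbw^{i}}}\OpticHom{A}{A'}{B}{B'}$.

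Next I would verify that this family of hom-set bijections assembles into a functor preserving identities and composition. The identity weighted optic is built from the opunitor $\epsilon^W(\bullet) : W(I, I') = \cM'(i(I), I')$, which under the strong monoidal structure of $i$ is the inverse of the unitor $i(I) \xrightarrow{\cong} I'$; after the coYoneda substitution this maps to the identity optic of $\Optic_{\diset{\actfw}{\actbw^{i}}}$ whose backward leg is the unitor of $\actbw^i$. For composition, the laxator $\phi^W$ of $W$ is induced by the laxator of $i$: given $f : i(M_1) \to M_1'$ and $g : i(M_2) \to M_2'$, the composite is $i(M_1 \otimes M_2) \xrightarrow{\phi^i} i(M_1) \otimes i(M_2) \xrightarrow{f \otimes g} M_1' \otimes M_2'$. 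After collapsing via coYoneda, this is precisely what is needed for the multiplicator of $\actbw^i$ to line up with the multiplicator of $\actbw$ in the composition rule of \cref{fig:optic_composition}.

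The corepresented case is formally dual. Writing $W(M, M') = \cM(M, j(M'))$ for strong monoidal $j : \cM' \to \cM$ and applying coYoneda in the variable $M$ this time collapses $\cC(A, M \actfw B) \times \cM(M, j(M'))$ to $\cC(A, j(M') \actfw B) = \cC(A, M' \actfw^{j} B)$, yielding the coend presentation of $\Optic_{\diset{\actfw^{j}}{\actbw}}$. Functoriality is established by the same argument, with the roles of forward and backward swapped and $i$ replaced by $j$.

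I expect the main obstacle to be bookkeeping rather than conceptual: ensuring that after the coYoneda reduction the resulting identities and composites in $\Optic_{\diset{\actfw}{\actbw^{i}}}$ agree \emph{on the nose} with the images of the identity and composition laws of $\Optic^W_{\diset{\actfw}{\actbw}}$, which requires tracking how the monoidal coherence data of $i$ (and the actegory coherence $\mu, \eta$ of $\actbw$) transport through the coend. The strongness of $i$ is essential here: if $i$ were only lax, the induced $\actbw^{i}$ would be a lax actegory (\cref{sec:act_para_literature}) and the resulting comparison would only be an oplax functor, not an isomorphism.
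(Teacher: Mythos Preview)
Your proposal is correct and follows essentially the same route as the paper: start from the coend presentation \cref{eq:weighted_optic_coend_form}, substitute $W(M,M') = \cM'(i(M),M')$, and apply coYoneda in $M'$ to collapse the middle factor, noting that strongness of $i$ is exactly what makes $\actbw^i$ a well-defined actegory. The paper's proof is in fact terser than yours—it displays the four-line coend calculation and dismisses the functoriality check as ``follows by routine'', whereas you spell out how the unitor and laxator of $i$ transport through the coYoneda reduction to match identities and composites; your extra detail is sound and addresses precisely the bookkeeping the paper skips.
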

  \begin{proof}
    Note that $i$ (resp.\ $j$) needs to be strong in order for $\actbw^i$ (resp.\ $\actfw^j$) to be a well-defined reparameterisation of actegories.
    We prove only the left isomorphism by exhibiting an isomorphism of hom-sets and noting that naturality with respect to identities and composition follows by routine.
    \pagebreak
    \begin{alignat*}{2}
      & && \Optic^W_{\diset{\actfw}{\actbw}}\OpticHom{A}{A'}{B}{B'}\\
      &\text{(Def.)}                           &=& \int^{M, M'} \cC(A, M \act B) \times W(M, M') \times \cD(M' \actbw B', A')\\
      & \text{($W$ is representable)}      &\cong& \int^{M, M'} \cC(A, M \act B) \times \cM'(i(M), M') \times \cD(M' \actbw B', A') \\
      & \text{(coYoneda)}      &\cong& \int^{M} \cC(A, M \act B) \times \cD(i(M) \actbw B', A')\\
      &\text{(Def.)}  &=& \Optic_{\diset{\actfw}{\actbw^i}}\OpticHom{A}{A'}{B}{B'} \tag*{\qedhere}
    \end{alignat*}
  \end{proof}
  This tells us the following.
  \begin{mybox}[label=box:mixed_optics_are_weighted_optics]{teal}{Mixed optics are weighted optics}
    A variant of optics called \emph{mixed optics} (\cite{clarke_profunctor_2022}) is a special case of this definition.
    It arises when $\cM = \cM'$, and when the weight is given by the Hom-functor of $\cM$.\footnote{This functor needs to be lax monoidal in order to be a well-defined weight, but that is indeed always the case if $\cM$ itself is monoidal. Intrestingly it is only lax monoidal, and not strong monoidal.}
    \begin{align*}
      \Optic^{\Hom}_{\diset{\actfw}{\actbw}}\OpticHom{A}{A'}{B}{B'} =& \colim^{\Hom}_{\diset{M}{M'}}\cC(A, M \act B) \times \cD(M' \actbw B', A')\\
      \cong & \text{(Colimit weighted by hom is coend)}\\
            & \int^{M} \cC(A, M \act B) \times \cD(M \actbw B', A')
       \end{align*}

    This yields the string diagrams of optics as found in \cite{capucci_towards_2022, boisseau_string_2020}.
    \scaletikzfig[0.72]{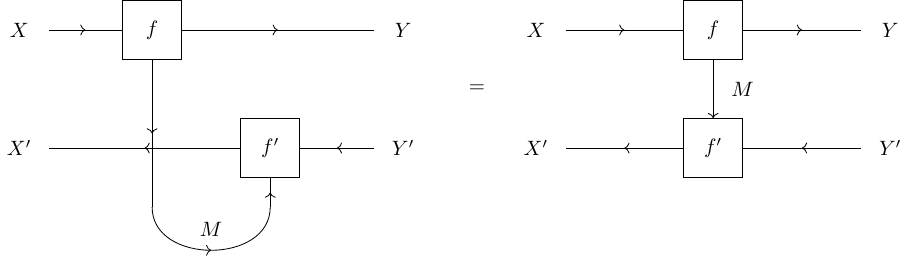}
    \captionof{figure}{Optics weighted by the hom-functor can absorb the weight into into either the forward or the backward map.}
  \end{mybox}

  This gives us a straightforward way to see that there are classes of weighted optics not captured by mixed ones: they're the ones where the representing functor is merely lax monoidal instead of strong.
  We now proceed to study examples of weighted optics, and then give general classes of theorems allowing us to state when optics are monoidal, cartesian or closed.

  \subsection{Examples of weighted optics}
  \label{subsec:examples_weighted_optics}
  We now aim to give the flavour of the kind of generality this definition possesses.
  They capture a disparate array of phenomena, each of a radically different nature: deterministic, conditional, and probabilistic, for example.
  The simplest kind of bidirectional systems are those whose forward and backward processes do not communicate.
  \begin{example}[{Adapter, \cite[Def.\ 3.38]{clarke_profunctor_2022}}]
    \label{ex:adapters}
    Consider the trivial action $\actfw$ on some category $\cC$ (\cref{ex:trivial_action}).
    Then a morphism $\diset{A}{A'} \to \diset{B}{B'}$ in the category $\Optic_{\diset{\actfw}{\actfw}}$ is often called \newdef{an adapter} and it consists of two separate maps in $\cC$ of type $A \to B$ and $B' \to A'$.
  \end{example}
  The category in the above example is isomorphic to the product category $\cC \times \cC^{\op}$, something which clearly is not the case for all optics.
  The two main and non-trivial examples of these are optics for a products (often called \emph{lenses}) and optics for a coproducts (often called \emph{prisms}).
  \begin{example}[Optics for products]
    \label{ex:optics_for_products}
    Consider any cartesian category $\cC$.
    Then morphisms in the category $\Optic(\cC)$ for self-action of $\cC$ (following the reduction from \cref{box:mixed_optics_are_weighted_optics}) of type $\diset{A}{A'} \to \diset{B}{B'}$ involve a choice of a residual $M : \cC$, a forward map $A \to B \times M$ and a backward map $M \times B' \to A'$.
    We interpret this optic as a system which irrespective of the input $A$, always performs some internal computation (via $M$) \emph{and} also interacts with the environment (via the query which produces a $B$ and consumes a $B'$).\footnote{For more detail on this operational perspective see \cite{gavranovic_lenses_2022}.}
    Often called \newdef{lenses}; for full disambiguaton see \cref{sec:optics_for_cartesian_act_fw}.
  \end{example}
  Examples of this are the usual suspects of $\Set$, $\Smooth$, and so on.
  But we can also describe ``conditional`` bidirectional systems.
  \begin{example}[Optics for coproducts]
    \label{ex:optics_for_coproducts}
    Consider any cocartesian category $\cC$.
    A morphism in $\Optic(\cC)$ of type $\diset{A}{A'} \to \diset{B}{B'}$ involves a choice of a residual $M : \cC$, a forward map $A \to B + M$ and a backward map $M + B' \to A'$.
    We interpret this optic as a system that consumes an input $A$ and provides the response $A'$ by \emph{either} a) querying the environment (i.e.\ producing a $B$ while saving no intermediate computation, and turning the $B'$ received by the environment into an $A'$) or b) not querying the environment, instead turning that $A$ directly into some $A'$ locally.
    Often called \newdef{prisms}; for full disambiguation see \cref{subsec:prisms}.
  \end{example}

  So far, the forward and the backward categories equal, and acting categories either trivial, or equal to forward/backward categories.
  We now study examples where that's not the case.
 
  \begin{example}[Markov kernels and expectations]
    \label{ex:mark_ker_expectation}
    Recall the setting of Markov kernels and expectations from \cref{ex:mark_kernel_reparameterisation} where we defined an action
    \[
      {\otimes'}^\Delta : \Mark \times \Conv \to \Conv
    \]
    By using this as the backward action, and the self-action of $\Mark$ as the forward action, we can form the category of mixed optics $\Optic_{\diset{\otimes}{{\otimes'}^\Delta}}$ of Markov kernels and expectations (\cite{hedges_value_2023}).
    Unlike previous examples, the forward part of an optic $\diset{A}{A'} \to \diset{B}{B'}$ here consists of a forward map $f : \Mark(A, M \times B)$ which in probabilistic manner interacts with its internal state, not producing a residual $m : M$, but instead a joint distribution over the space of residuals and outputs.
  \end{example}

  While we don't directly study probability theory in this thesis, the above example --- useful in reinforcement learning \cite{sutton_reinforcement_2018}, where value iteration can be expressed in terms of optic composition \cite{hedges_value_2023} --- illustrates the breadth of examples covered by the definition of weighted optics.

  This example too does not require the full breadth of weighted optics --- this is merely a mixed optic.
  But we note that weighted optics here provide an equivalent definition which has a more principled compartmentalisation of its components.
  Via \cref{prop:weighted_to_mixed_representable} we can show that
  the above category is isomorphic to
  \[
    \Optic_{\diset{\otimes}{{\otimes'}^\Delta}} \cong \Optic^{\Conv(\Delta(-), -)}_{\diset{\otimes}{\otimes'}}
  \]
  where on the right side we have pulled out the weight from the action, leaving only the monoidal product $\otimes'$ in the backward pass.
  This gives us an interpretation of mixed optics used in value iteration as corepresentably weighted optics with self-actions of two monoidal categories.

  The reason why we were able to do that is because $\Delta$ is a \emph{strong} monoidal functor.
  While the previous example is captured in \cite{hedges_value_2023} with merely mixed optics, the following example is a construction that requires the full generality of weighted optics because it contains an analogue of $\Delta$ which is not strong, but only lax.

  \begin{example}[Smooth maps and linear maps]
    \label{ex:smooth_and_linear_weighted_optics}
    Recall the setting of the cartesian category $\Smooth$ (\cref{def:smooth}), the monoidal category $\FVectR$ (\cref{def:fvect}), and their relationship via the lax monoidal functor $\laxsmoothiota$ (\cref{ex:iota_lax_monoidal}).
    Then by setting $\Smooth$ as forward self-action (denoted by $\times$), and $\FVectR$ as the backward self-action (denoted by $\otimes'$) we can define the category
    \[
      \Optic^{\Smooth(-, \laxsmoothiota(-))}_{\diset{\times}{\otimes}}
    \]
    whose forward passes are smooth functions, and backward maps are bilinear maps.
    Explicitly, a morphism $\diset{A}{A'} \to \diset{B}{B'}$ here consists of the following three pieces of data, quotiented out by the equivalence relation in \cref{eq:copara_optic_diagram}:
    \begin{itemize}
    \item A coparameter $\diset{M}{M'}$, where $M : \Smooth$ and $M' : \FVectR$;
    \item A map $s : \Smooth(M, \laxsmoothiota(M'))$;
    \item A coparametric map $\diset{f}{f'} : \Smooth \times \FVectR^{\op}\OpticHom{A}{A'}{M \times B}{M' \otimes B'}$, i.e.\ a pair of maps $f : \Smooth(A, M \times B)$ and $f' : \FVectR(M' \otimes B', A')$.
    \end{itemize}
    This will be instrumental in thinking about neural networks.
    Intuitively, the map $f$ is the forward map of our neural network.
    In addition to computing the output $B$ it also explicitly models the intermediate result $M$ that will be needed for the backward pass.
    The object $M$ is most often $A$. If we set $M' = [B', A']$ as the space of linear maps  $B' \multimap A'$, then the $s$ could be thought of as the Jacobian of $A \xrightarrow{f} M \times B \xrightarrow{\pi_B} B$, and $f'$ as the evaluation map $\eval_{B', A'}$.
    We describe all of this in detail in \cref{ch:backpropagation}.
  \end{example}
  Because $\laxsmoothiota$ here is merely lax, and not strong monoidal, we cannot reduce this to mixed optics.\footnote{If we tried reducing it, we'd find that the forward actegory isn't strong, but lax. On the other hand, theere is a natural place for a lax component in the definition of weighted optics: in the weight.}
  
\subsection{When are optics monoidal?}
\label{sec:optic_monoidal}

As the category of optics is defined as a $\Copara$ construction, we might wonder whether existing theorems for monoidality of $\Para$ (\cref{prop:para_monoidal} and \cref{thm:para_moncat}) can be appropriated for this coparametric setting.
That is indeed the case.
To turn $\Copara$ monoidal the requirements on the base actegory are equivalent requirements to those for turning $\Para$ monoidal.\footnote{A difference between these arises when we study weakenings of actegories. Requirements for $\Copara$ can be weakened to a lax monoidal actegory, and requirements for $\Para$ to an \emph{oplax} monoidal actegory.}

\begin{proposition}[{Monoidal structure of $\Copara$, \cite[p. 238]{capucci_towards_2022}}]
  \label{prop:copara_monoidal}
  Let $(\cC, \act)$ be a monoidal $\cM$-actegory with the underlying product $(\boxtimes, J)$.
  Then $\Copara_{\act}(\cC)$ becomes a monoidal bicategory, defined analogous to \cref{prop:para_monoidal}, where the monoidal product of two coparametric morphisms
  \[
    (M, f : A \to M \act B) \quad \text{and} \quad  (N, g : C \to N \act D)
  \]
  is the pair $(M \otimes N, f \compCopara{\boxtimes} g)$ where
  \[
    f \compCopara{\boxtimes} g \coloneqq \boxed{A \boxtimes C \xrightarrow{f \boxtimes g} (M \act B) \boxtimes (N \act D) \xrightarrow{\interchanger_{M, B, N, D}^{-1}} (M \otimes N) \act (B \boxtimes D)}
  \]
\end{proposition}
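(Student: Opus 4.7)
My plan is to mirror the proof of Proposition \ref{prop:para_monoidal} for $\Para$, exploiting the structural duality. In fact, the cleanest route is to appeal to Proposition \ref{prop:relation_para_copara}: since $(\cC, \act)$ being a monoidal $\cM$-actegory forces $\cM$ to be braided (see Remark \ref{rem:m_brided_in_definition}), the duality $\Copara_{\act}(\cC) \cong \Para_{\act}(\cC^{\op})^{\co\op}$ holds, and the construction $(-)^{\co\op}$ preserves monoidal bicategory structure (it reverses 2-cells but keeps 1-cells and objects). So it suffices to endow $(\cC^{\op}, \act)$ with a compatible monoidal actegory structure and transport Proposition \ref{prop:para_monoidal} through this isomorphism. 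I would then check that the transported formula for the monoidal product agrees, up to the direction of the mixed interchanger, with the explicit $f \compCopara{\boxtimes} g$ given in the statement.

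Alternatively, proceeding directly, I would build the data as follows. Objects and monoidal unit are inherited from $(\cC, \boxtimes, J)$. For 1-cells, take $(M, f) \compCopara{\boxtimes} (N, g)$ as stated, which uses the inverse mixed interchanger $\interchanger^{-1}_{M,B,N,D} : (M \actfw B) \boxtimes (N \actfw D) \to (M \otimes N) \actfw (B \boxtimes D)$ from \ref{def:monact_oplaxator}; this is well-defined precisely because $\interchanger$ is an isomorphism in the monoidal (as opposed to lax monoidal) setting. For 2-cells, given reparameterisations $r : M \to M'$ and $s : N \to N'$, I would verify that $r \otimes s : M \otimes N \to M' \otimes N'$ is a valid reparameterisation of the $\compCopara{\boxtimes}$-product, which reduces to naturality of $\interchanger$. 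The associator and unitors of $(\cC, \boxtimes, J)$ lift to $\Copara_{\act}(\cC)$ as coparametric morphisms with coparameter $I$ (via the embedding $\cC \hookrightarrow \Copara_{\act}(\cC)$ dual to Lemma \ref{lemma:c_embeds_into_para}), and pentagon/triangle follow immediately from those of $\cC$.

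The main obstacle, as with the $\Para$ case, is verifying functoriality of $\compCopara{\boxtimes}$ on 1-cells, i.e.\ the interchange law asserting that parallel and sequential composition commute. Unpacking both sides yields a diagram of actegory-action maps whose commutativity hinges on three ingredients working in concert: (i) naturality of the mixed interchanger $\interchanger$ in all four arguments, (ii) monoidality of the multiplicator $\mu$ (required by Definition \ref{def:monoidal_actegory}), which controls how the associativity isomorphisms of $(\cC, \boxtimes)$ and $(\cM, \otimes)$ interact with $\interchanger$, and (iii) the bifunctoriality of $\boxtimes$ itself. This is essentially the dual of the corresponding step for $\Para$, but the direction of $\interchanger$ is reversed, so one must be careful that the inverse isomorphism behaves coherently under the relevant dual diagrams.

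Finally, I would remark that unit coherence requires the monoidality of the unitor $\eta$ from Definition \ref{def:monoidal_actegory}: specifically, that $\eta$ interacts correctly with $\interchanger$ at the monoidal unit. This ensures the identity coparametric morphism $(I, \eta_A)$ tensor-composed with $(I, \eta_C)$ equals the identity on $A \boxtimes C$ up to the action $(I \otimes I) \actfw (A \boxtimes C) \cong I \actfw (A \boxtimes C) \cong A \boxtimes C$. With all three conditions of a monoidal actegory thus accounted for, the bicategorical coherences of a monoidal bicategory follow by direct appeal to those of $(\cC, \boxtimes, J)$ together with strictness of the associativity of $\otimes$ at the level of parameter composition, just as in the proof of Proposition \ref{prop:para_monoidal}.
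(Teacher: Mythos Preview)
Your proposal is correct and essentially matches the paper. The paper does not give a separate proof of this proposition; it simply declares it ``analogous to'' \cref{prop:para_monoidal}, whose detailed construction lives in \cref{app:para}. Your Route~2 is precisely that analogue, and you correctly identify that the inverse mixed interchanger is what is needed (the paper's footnote to \cref{prop:copara_monoidal} even notes that $\Copara$ only requires the lax direction). Your Route~1 via \cref{prop:relation_para_copara} is a pleasant alternative the paper does not explicitly invoke here, though it sets up exactly the duality you use.

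One small slip: you describe $(-)^{\co\op}$ as ``reverses 2-cells but keeps 1-cells and objects,'' but $(-)^{\co\op}$ reverses \emph{both} 1-cells and 2-cells (the $\op$ flips 1-cells, the $\co$ flips 2-cells). This does not damage the argument, since each of $(-)^{\op}$ and $(-)^{\co}$ separately preserves monoidal bicategory structure, but you should fix the parenthetical. Also, when transporting the monoidal actegory structure to $\cC^{\op}$, be explicit that the action becomes a $\cM^{\op}$-actegory, which you then regard as an $\cM$-actegory via braiding; this is where the hypothesis of \cref{rem:m_brided_in_definition} is actually consumed in Route~1.
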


Let's unpack what this means for optics.
To form optics as a coparametric category, we need a product of two actegories, and reparameterisation thereof.
It is straightforward to see that the product of two actegories is monoidal if each individually is.
\begin{proposition}[Product of monoidal actegories is a monoidal actegory]
  \label{prop:product_monoidal_act}
  Fix a monoidal $\cM$-actegory $(\cC, \act)$ and a monoidal $\cN$-actegory $(\cD, \actbw)$.
  Then their product (as defined in \cref{prop:product_of_actegories}) is a monoidal $\cM \times \cN$-actegory.
\end{proposition}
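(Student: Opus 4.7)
The plan is to assemble all the required data component-wise from the two factor actegories, and then verify that each piece of structure on the product actegory satisfies the axioms of a monoidal actegory by reducing the check to the axioms already satisfied by $(\cC, \actfw)$ and $(\cD, \actbw)$ individually.

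First I would record the ambient monoidal data. The acting category $\cM \times \cN$ is braided monoidal with the pointwise tensor $(M, N) \otimes (M', N') \coloneqq (M \otimes M', N \otimes N')$, unit $(I_\cM, I_\cN)$, and braiding given pointwise by the braidings of $\cM$ and $\cN$; this uses that both $\cM$ and $\cN$ are braided, as required by \cref{rem:m_brided_in_definition} in \cref{def:monoidal_actegory}. Dually, $\cC \times \cD$ carries the pointwise monoidal structure $(\boxtimes_\cC \times \boxtimes_\cD, (J_\cC, J_\cD))$. The action $\diset{\actfw}{\actbw}$ from \cref{prop:product_of_actegories} is the composite of a swap with the product functor $\actfw \times \actbw$.

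Next I would exhibit the three pieces of extra data needed to make this actegory monoidal in the sense of \cref{def:monoidal_actegory}. The strong monoidal structure on $\diset{\actfw}{\actbw}$ is built from the strong monoidal structures on $\actfw$ and $\actbw$: the structural isomorphism at a typical object is the pair of the mixed interchangers of the two factors, conjugated by the swap isomorphism on $\cM \times \cN$. Concretely, the mixed interchanger on the product is
\[
\interchanger^{\diset{\actfw}{\actbw}}_{(A,C),(A',C'),(M,N),(M',N')} \;\coloneqq\; \bigl(\interchanger^{\actfw}_{A,A',M,M'},\; \interchanger^{\actbw}_{C,C',N,N'}\bigr),
\]
up to the evident reshuffling of factors in $\cM \times \cN \times \cC \times \cD$. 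The unitor $\eta$ and multiplicator $\mu$ of $(\cC \times \cD, \diset{\actfw}{\actbw})$ are just the pairs $\eta = (\eta^{\actfw}, \eta^{\actbw})$ and $\mu = (\mu^{\actfw}, \mu^{\actbw})$.

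Then I would verify the three things demanded by \cref{def:monoidal_actegory}: (i) $\diset{\actfw}{\actbw}$ is strong monoidal, (ii) $\eta$ is a monoidal natural isomorphism, and (iii) $\mu$ is a monoidal natural isomorphism. Each of these is a diagram chase in $\cC \times \cD$; since limits, the monoidal structure, and the action are all computed componentwise, every coherence diagram splits as a pair of diagrams in $\cC$ and $\cD$ respectively, and each component commutes by the hypothesis that $(\cC, \actfw)$ and $(\cD, \actbw)$ are individually monoidal actegories. The only non-componentwise ingredient is the swap on $\cM \times \cN$ required to match the signatures, but this is a structural symmetry built from the cartesian product of categories, and its coherences are automatic.

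The main obstacle I anticipate is bookkeeping: the domain $\cM \times \cN \times \cC \times \cD$ forces several applications of the symmetry on $\Cat$ to line up the $\cM$- and $\cN$-arguments with their respective $\cC$- and $\cD$-arguments, and one needs to check that the mixed interchanger for the product really is monoidal as a natural transformation between the two resulting composite functors. This reduces, however, to the naturality of the swap together with the already-established monoidality of each $\interchanger^{\actfw}$ and $\interchanger^{\actbw}$, so no genuinely new coherence appears; the whole verification is mechanical once the componentwise decomposition is made explicit.
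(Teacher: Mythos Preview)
Your proof plan is correct: everything in the product actegory is defined componentwise, so every coherence diagram splits into a pair of diagrams in $\cC$ and $\cD$, each of which commutes by hypothesis. The paper itself gives no proof for this proposition --- it is stated and immediately used, with the componentwise verification left implicit as routine --- so your proposal is more detailed than what appears in the text, but takes exactly the approach the author has in mind.
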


When it comes to the reparameterisation, we have to ensure that the projection from the category of elements of the weight (\cref{eq:actegory_weight_projection}) is a braided monoidal functor (\cref{def:reparam_monoidality}), which is the case when the underlying weight is braided monoidal (\cref{prop:monoidal_cat_hom_functor}).
If the weight is the hom-functor of a braided monoidal category, then this always holds (\cref{corr:hom_functor_projection_braided}).

\begin{theorem}[{Monoidal optics (compare \cite[Thm. 2.0.12.]{riley_categories_2018})}]
  \label{thm:optic_monoidal}
  Consider the category of weighted optics $\Optic^W_{\diset{\actfw}{\actbw}}$.
  This category is monoidal if the underlying actegories are, and the weight $W$ is a braided monoidal functor (\cref{def:braided_monoidal_functor}).
\end{theorem}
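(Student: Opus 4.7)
The plan is to unfold the definition $\Optic^W_{\diset{\actfw}{\actbw}} = \pibc(\Copara_{\diset{\actfw}{\actbw^{\op}}^{\pi_W^{\op}}}(\cC \times \cD^{\op}))$ and bootstrap off of the monoidality results already established for $\Copara$. First I would observe that dualising a monoidal $\cN$-actegory $(\cD, \actbw)$ yields a monoidal $\cN^{\op}$-actegory $(\cD^{\op}, \actbw^{\op})$, and then invoke \cref{prop:product_monoidal_act} to conclude that the product $(\cC \times \cD^{\op}, \diset{\actfw}{\actbw^{\op}})$ is a monoidal $(\cM \times \cN^{\op})$-actegory, with mixed interchanger assembled componentwise from those of the two factors.

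Next, the hypothesis that $W : \cM^{\op} \times \cN \to \Set$ is a braided monoidal functor enters: under this assumption the category of elements $\El(W)$ carries a canonical braided monoidal structure, and the projection $\pi_W^{\op} : \El(W)^{\op} \to \cM \times \cN^{\op}$ is a strong braided monoidal functor (an upgrade of the strong-monoidality result already invoked in \cref{subsec:a_piece_of_notation}). Invoking \cref{def:reparam_monoidality}, the reparameterised actegory $\diset{\actfw}{\actbw^{\op}}^{\pi_W^{\op}}$ is therefore still a monoidal actegory, now acting on $\El(W)^{\op}$.

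A direct application of \cref{prop:copara_monoidal} then equips the bicategory $\Copara_{\diset{\actfw}{\actbw^{\op}}^{\pi_W^{\op}}}(\cC \times \cD^{\op})$ with monoidal bicategory structure. On morphisms, the tensor absorbs the two residual coparameters into a single one via the product on $\El(W)^{\op}$, and the laxator of $W$ is exactly what is needed to combine the two connecting elements into a single connecting element for the tensored residual; string-diagrammatically this is simply stacking two optic boxes side by side.

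The final and most delicate step, which I expect to be the main obstacle, is to show that taking $\pibc$ preserves this monoidal bicategorical structure, yielding an honest monoidal $1$-category. This is the $\Copara$-dual of \cref{thm:para_moncat} and follows the same template: reparameterisations are natural with respect to both sequential and parallel coparametric composition, so the tensor descends to connected components; once in the quotient the associator and unitor $2$-isomorphisms of the monoidal bicategory become strict equalities. Chaining these four reductions yields the desired monoidal structure on $\Optic^W_{\diset{\actfw}{\actbw}}$, with the additional braiding/symmetry of $\cC$ and $\cD$ (when present) being transported along the same chain to upgrade monoidality to braided monoidality or symmetry.
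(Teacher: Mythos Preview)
Your proposal is correct and follows essentially the same route as the paper. The paper's own proof is a terse one-liner, citing only \cref{prop:copara_monoidal} and \cref{prop:product_monoidal_act}; you have unpacked exactly the intermediate steps (dualising the backward actegory, showing $\pi_W^{\op}$ is strong braided monoidal so that reparameterisation preserves monoidality via \cref{def:reparam_monoidality}, and finally descending through $\pibc$ via the $\Copara$-dual of \cref{thm:para_moncat}) that the paper leaves to the surrounding discussion and to the reader.
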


\begin{proof}
Direct consequence of \cref{prop:copara_monoidal}, and \cref{prop:product_monoidal_act}.
\end{proof}

Let's see what this means more concretely.
Consider the category of weighted optics $\Optic^W_{\diset{\actfw}{\actbw}}$ where the monoidal product of the forward actegory is $(\cC, \boxtimes, J)$ and that of the backward actegory is $(\cD, \boxtimes', J')$.
Then the monoidal product of two weighted optics
\[
  \diset{A}{A'} \xrightarrow{(\diset{M}{M'}, s, \diset{f}{f'})} \diset{B}{B'} \quad \text{and} \quad \diset{C}{C'} \xrightarrow{(\diset{N}{N'}, s, \diset{g}{g'})} \diset{D}{D'}
\]
is a weighted optic of type $
  \diset{A \boxtimes C}{A' \boxtimes' C'} \xrightarrow{} \diset{B \boxtimes D}{B' \boxtimes' D'} $
whose residual is the pair $\diset{M \otimes N}{M' \otimes' N'}$, the map connecting them is $\phi^W_{M, N}(s, t)$, and the implementation maps are $\diset{f \compCopara{\boxtimes} g}{g' \compPara{\boxtimes} f'}$.

\begin{example}[Self actions]
  \label{ex:self_actions_monoidal}
  Given two braided monoidal categories $(\cM, \otimes, I)$ and $(\cM', \otimes', I')$ and a braided monoidal functor $W : \cM^{\op} \times \cM' \to \Set$ the category $\Optic^W_{\diset{\otimes}{\otimes'}}$ is monoidal.
  If $\cM = \cM'$ then this reduces to the monoidal category of non-mixed optics $\Optic(\cM)$.
\end{example}
We can go further, and say when the category of optics is braided, or symmetric monoidal.
Following arguments in \cref{subsec:braided_symmetric_monoidal}, both of these require the acting category $\cM$ to be symmetric.
\begin{proposition}
  \label{prop:optics_braid_symm}
  If the monoidal $\cM$-actegories $(\cC, \actfw)$ and $(\cD, \actbw)$ are braided (resp. symmetric), then $\Optic_{\diset{\actfw}{\actbw}}$ is braided (resp. symmetric) monoidal.
\end{proposition}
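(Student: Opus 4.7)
The plan is to build on \cref{thm:optic_monoidal}, which already supplies the monoidal structure of $\Optic_{\diset{\actfw}{\actbw}}$, and then exhibit the braiding directly as a family of adapters (\cref{ex:adapters}) whose forward and backward components are the braidings of $(\cC, \boxtimes, J)$ and $(\cD, \boxtimes', J')$ respectively. Recall that a braided (resp.\ symmetric) monoidal actegory in the sense of \cite[Def.\ 5.4.1]{capucci_actegories_2023} requires the base $\cM$ to be symmetric, and the underlying monoidal category $(\cC, \boxtimes, J)$ to be braided (resp.\ symmetric) with coherence between these braidings and the mixed interchanger $\interchangeract$.

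First, for any $\diset{A}{A'}, \diset{B}{B'} : \Optic_{\diset{\actfw}{\actbw}}$, I would define the braiding optic
\[
  \beta^{\Optic}_{\diset{A}{A'}, \diset{B}{B'}} : \diset{A \boxtimes B}{A' \boxtimes' B'} \to \diset{B \boxtimes A}{B' \boxtimes' A'}
\]
as the adapter with trivial residual $\diset{I}{I'}$, connecting tissue $\epsilon(\bullet) : W(I, I')$ from the opunitor of the weight, and component maps the $\cC$-braiding $\beta^{\cC}_{A, B} : A \boxtimes B \to B \boxtimes A$ and the inverse $\cD$-braiding $(\beta^{\cD}_{A', B'})^{-1} : B' \boxtimes' A' \to A' \boxtimes' B'$ (precomposed with the actegory unitors to land in the residual-bearing types). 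Invertibility of $\beta^{\Optic}$ follows from invertibility of each component.

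Second, I would verify naturality and the two hexagon axioms. Since optic composition applies $\Copara$-composition component-wise in the forward and backward directions (\cref{fig:optic_composition}), and the braiding is built out of adapters with trivial residuals (so composition with it reduces, up to the equivalence relation \cref{eq:copara_optic_diagram}, to whiskering the forward map in $\cC$ and the backward map in $\cD$ by $\beta^{\cC}$ and $(\beta^{\cD})^{-1}$ respectively), the naturality squares and hexagons in $\Optic_{\diset{\actfw}{\actbw}}$ reduce to two copies of the naturality squares and hexagons in $(\cC, \boxtimes)$ and $(\cD, \boxtimes')$. These hold by assumption. For the symmetric case, $\beta^{\Optic} \comp \beta^{\Optic}$ reduces componentwise to $\beta^{\cC} \comp \beta^{\cC}$ and $(\beta^{\cD})^{-1} \comp (\beta^{\cD})^{-1}$, both of which are identities when $\cC$ and $\cD$ are symmetric.

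The main obstacle will be bookkeeping around the mixed interchangers $\interchangeract^{\actfw}$ and $\interchangeract^{\actbw}$ of the two monoidal actegories, which appear in the definition of the monoidal product of optics (via \cref{prop:copara_monoidal}) and must be shown to interact coherently with the braidings defined above. Concretely, when composing $\beta^{\Optic}$ with a monoidal product of two optics $(\diset{f}{f'}) \compPara{\boxtimes} (\diset{g}{g'})$ to verify naturality, one must pass the braiding past the interchanger; this is precisely the coherence axiom built into \cite[Def.\ 5.4.1]{capucci_actegories_2023} for a braided monoidal actegory, so it holds by assumption. All other steps are routine diagram chases using the $\Copara$ composition rules and the equivalence \cref{eq:copara_optic_diagram}.
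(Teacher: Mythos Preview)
The paper does not actually prove this proposition. It is stated without proof, preceded only by the remark ``Following arguments in \cref{subsec:braided_symmetric_monoidal}'' --- but that subsection contains no proof either, only \cref{conj:braided_actegory}, a \emph{conjecture} that braided/symmetric monoidal actegories yield braided/symmetric monoidal $\Para$ \emph{bicategories}. The paper's implicit route would be: lift that conjecture to $\Copara$, take the product actegory, and then pass to connected components to obtain the $1$-categorical braiding on $\Optic_{\diset{\actfw}{\actbw}}$. None of these steps are carried out.

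Your approach is therefore strictly more explicit than anything in the paper, and it is sound. Defining the braiding as an adapter (trivial residual, $\beta^{\cC}$ forward, $(\beta^{\cD})^{-1}$ backward) is exactly the right move: because $\Optic_{\diset{\actfw}{\actbw}}$ is a $1$-category after the $\conncomp$ quotient, you avoid the ``extremely laborious'' bicategorical coherence the paper balks at, and need only check the ordinary hexagon axioms and naturality. Your observation that composing with an adapter reduces (modulo \cref{eq:copara_optic_diagram}) to whiskering the forward and backward parts separately is the key simplification, and the coherence between the mixed interchanger and the braidings is indeed precisely what the braided-monoidal-actegory axioms of \cite[Def.\ 5.4.1]{capucci_actegories_2023} supply. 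In short: the paper punts, and your direct construction fills the gap correctly.
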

This is only a sufficient condition, because optics involve a notion of a weight coupling the actegories together, allowing them to interact in non-trivial ways.
We will explore this in \cref{ch:backpropagation}.
  \begin{mybox}[label=box:scs_optic]{Cerulean}{States, costates, and scalars in $\Optic[false](\cC)$}
    We unpacked states, scalars, and costates in a general monoidal category $\cC$ (\cref{box:scs_monoidal}) and in $\Para(\cC)$ (\cref{box:scs_para}).
    Now we unpack them for $\Optic(\cC)$.

    \begin{minipage}[t]{0.29\textwidth}
      \begin{tightcenter}
        States
      \end{tightcenter}
      \scaletikzfig[0.6]{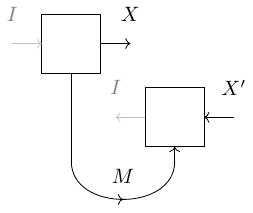}
      \begin{align*}
        &\Optic(\cC)\OpticHom{I}{I}{X}{X'}\\
        \cong & \int^M \cC(I, M \otimes X) \times \cC(M \otimes X', I)
      \end{align*}
    \end{minipage}
    \hfill
    \begin{minipage}[t]{0.28\textwidth}
      \begin{tightcenter}
        Scalars
      \end{tightcenter}
      \scaletikzfig[0.6]{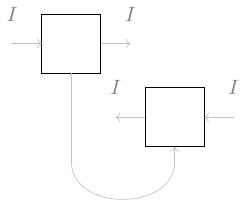}
      \begin{align*}
        &\Optic(\cC)\OpticHom{I}{I}{I}{I}\\
        \cong &\cC(I, I)
      \end{align*}
    \end{minipage}
    \hfill
    \begin{minipage}[t]{0.28\textwidth}
      \begin{tightcenter}
        Costates
      \end{tightcenter}
      \scaletikzfig[0.6]{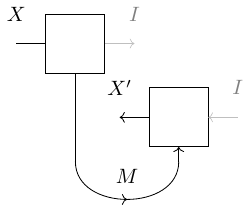}
      \begin{align*}
        & \enskip \Optic(\cC)\OpticHom{X}{X'}{I}{I}\\
        \cong & \enskip \cC(X, X')
      \end{align*}
    \end{minipage}
    
    Scalars and costates (but not states) have a particularly interesting form.
    Costates become morphisms in the base category and scalars become endomorphisms of the monoidal unit.
    Additionally, if $\cC$ is cartesian scalars become trivial, as we will see in \cref{box:scs_lens}.
  \end{mybox}

\section{Lenses, prisms, and closed optics}
\label{sec:lenses_prisms_closed_optics}

We now proceed to study weighted optics defined on actegories with particular structure.

\subsection{Lenses; from cartesian actegories}
\label{sec:optics_for_cartesian_act_fw}

We turn our focus to an important class of optics.
These are optics with a forward actegory that's of a particular kind to be defined in this section: a cartesian actegory, and their particular reduced form --- \emph{lenses} --- is originally what sparked a lot of research on bidirectionality.
As we will see in \cref{subsec:operational}, lenses give a denotationally equivalent, but an operationally distinct perspective on such processes.
They are immensely useful in practice and, in addition to modelling deep learning, they have been used to model bayesian learning (\cite{braithwaite_compositional_2023, smithe_mathematical_2022}), database theory (\cite{spivak_functorial_2023}), dynamical systems (\cite{myers_categorical_2022}), game theory \cite{ghani_compositional_2018, capucci_diegetic_2023}, data accessors (\cite{pickering_profunctor_2017}), trading protocols (\cite{genovese_escrows_2021}), server operations (\cite{videla_lenses_2022}) and more (\cite{gavranovic_theory_2023}).
We start by giving their formal definition in the most abstract form, based on the concept of a \emph{cartesian actegory}.\footnote{Idea for this concept arose in a conversation with Matteo Capucci.}

\begin{definition}[Cartesian actegory]
  \label{def:cartesian_actegory}
  A monoidal $\cM$-actegory $\cC$ is a \newdef{cartesian actegory} if the monoidal product of the base $\cC$ is given by a cartesian one.
  Dually, when the monoidal product is cocartesian, the resulting actegory is too.
\end{definition}

\begin{remark}
Note that there is no requirement of $\cM$ being cartesian, nor any requirement about the interaction of the cartesian structure of $\cC$ and the monoidal one of $(\cM, \otimes, I)$ and $\act$, other than the one already imposed by the definition of a monoidal actegory.
\end{remark}

If the actegory is cartesian, we can formulate the actegorical analogue of the classical isomorphism in a category with products.
It is a property reminiscent of the \emph{universal product} in a cartesian category, but one which only admits the isomorphism representation (\cref{eq:product}), and not one defined by the universal morphism (\cref{eq:product_univ_property}).

\begin{proposition}
  \label{prop:act_univ}
  Let $(\cC, \actfw)$ be a cartesian $\cM$-actegory. Then we have the natural isomorphism:\footnote{While the codomain of the right element could be further reduced to $B$, the same can't be said for the left one --- there is no way to reduce $\cC(X, M \act 1)$ to $\cC(X, M)$ in general since $M$ isn't an object of $\cC$.}
  \[
  \cC(X, M \act B) \cong \cC(X, M \act 1) \times \cC(X, I \act B)
  \]
\end{proposition}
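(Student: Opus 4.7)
The plan is to establish the claimed isomorphism first at the level of objects of $\cC$, and then deduce the claimed isomorphism of hom-sets by applying $\cC(X, -)$ together with the universal property of the cartesian product in $\cC$. The key ingredient is the mixed interchanger (\cref{def:monact_oplaxator}) that comes with any monoidal actegory, specialised here to the cartesian case.

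Concretely, since $(\cC, \actfw)$ is a monoidal $\cM$-actegory whose monoidal product $\boxtimes$ is cartesian ($\boxtimes = \times$, $J = 1$), the mixed interchanger gives, for every $M, M' : \cM$ and $A, A' : \cC$, a natural isomorphism
\[
  \interchangeract_{M, M', A, A'} : (M \otimes M') \act (A \times A') \xrightarrow{\;\cong\;} (M \act A) \times (M' \act A').
\]
First I would instantiate this with $M' = I$, $A = 1$, $A' = B$, and then precompose with the unitors $M \cong M \otimes I$ on the $\cM$-side and $B \cong 1 \times B$ on the $\cC$-side, using functoriality of $\act$. The composite reads
\[
  M \act B \;\cong\; (M \otimes I) \act (1 \times B) \;\xrightarrow{\interchangeract}\; (M \act 1) \times (I \act B).
\]

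Next, applying the representable functor $\cC(X, -)$ and invoking the universal property of the cartesian product in $\cC$ yields
\[
  \cC(X, M \act B) \;\cong\; \cC\bigl(X, (M \act 1) \times (I \act B)\bigr) \;\cong\; \cC(X, M \act 1) \times \cC(X, I \act B),
\]
which is the desired isomorphism. Naturality in $X$, $M$, $B$ follows because every intermediate isomorphism (the two unitors, the mixed interchanger, and the product decomposition of the hom-functor) is itself natural.

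I do not anticipate a serious obstacle here: the whole argument is a bookkeeping exercise around the defining coherence data of a monoidal actegory. The only point that needs care is ensuring that the decomposition $M \act B \cong (M \otimes I) \act (1 \times B)$ is the right shape to feed into $\interchangeract$, and that the coherences from \cref{def:monoidal_actegory} guarantee the resulting composite is genuinely natural rather than merely pointwise; but both of these are built into the very definition of a monoidal $\cM$-actegory, so no additional hypothesis on the interaction of $\otimes$ with $\times$ is needed beyond cartesianness of $\boxtimes$.
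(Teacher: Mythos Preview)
Your proof is correct and follows essentially the same route as the paper: expand $M \act B$ via the unitors to $(M \otimes I) \act (1 \times B)$, apply the mixed interchanger to obtain $(M \act 1) \times (I \act B)$, and then use the universal property of the product to split the hom-set. The only cosmetic difference is that you establish the object-level isomorphism first and then apply $\cC(X,-)$, whereas the paper carries $\cC(X,-)$ through each line.
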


\begin{proof}
  Exhibited below.
  \begin{align*}
          &\cC(X, M \act B) &\\
    \cong &\cC(X, (M \otimes I) \act (1 \times B)) & \text{(Interchanger)}\\
    \cong &\cC(X, (M \act 1) \times (I \act B)) & \text{(Universal property of the product)}\\
    \cong &\cC(X, M \act 1) \times \cC(X, I \act B) \tag*{\qedhere}
  \end{align*}
\end{proof}

\begin{example}
  \label{ex:cartesian_cat_act}
  Every cartesian category $\cC$ canonically becomes a cartesian $\cC$-actegory by virtue of being a monoidal $\cC$-actegory (\cref{ex:self_action_braided_monoidal}).
\end{example}

\begin{remark}
  \label{rem:cart_act_dont_trivialise}
  Given a self-action of a cartesian category we have established that connected components of the $\Copara$ construction on it trivialise (\cref{corr:cart_trivialises}).
  Cartesian actegories allow us to circumvent this problem, as the necessary condition for trivialisation --- the monoidal unit of $\cM$ being terminal --- is not something that holds for a general cartesian actegory.
\end{remark}

The example of the above follows.
In it, despite the lack of cartesian structure of $\cB$, the braided monoidal functor produces a cartesian actegory via reparameteristaion.

\begin{example}
  \label{ex:monoidal_subcategory_of_cartesian_one}
  Let $(\cC, \times, 1)$ be a cartesian category, $\cB$ a braided monoidal category, and let $E : \cB \to \cC$ be a strong braided monoidal functor (\cref{def:braided_monoidal_functor}).
  Then $\times^E : \cC \times \cB \to \cC$ is a cartesian $\cB$-actegory.
\end{example}

We are now ready to study cartesian optics.

\begin{definition}[Cartesian optic]
  \label{def:cartesian_optic}
  Consider optics $\Optic^W_{\diset{\actfw}{\actbw}}$ where the forward actegory is a cartesian actegory.
  We call morphisms in this category \newdef{cartesian optics}\footnote{Not to be confused with the property of optics being a cartesian category, which holds in an even more specialised case.}.
\end{definition}

As mentioned, these have been in the literature defined under the name of ``lens'', though the actual definition often varies.
There seems to be a general ``folkloric'' definition of what a lens is, but many different mathematical formulations, and no clear hierarchy between them.
In this thesis, we claim that a lens $\diset{A}{A'} \to \diset{B}{B'}$ is a kind of a bidirectional process that allows us to ``extract'' a morphism of type $A \to B$, and deterministically separate it from the backward one.
To the best of our knowledge the above definition of a cartesian optic captures this phenomenon in the most general setting.
Formally, this means the following.

\begin{proposition}[{Compare \cite[Proposition 2.0.4.]{riley_categories_2018}}]
  \label{prop:optic_cartesian}
  In any cartesian weighted optic we can factor out the forward pass:
  \[
    \Optic^W_{\diset{\actfw}{\actbw}}\OpticHom{A}{A'}{B}{B'} \cong \cC(A, B) \times \Optic^W_{\diset{\actfw}{\actbw}}\OpticHom{A}{A'}{1}{B'}
  \]
\end{proposition}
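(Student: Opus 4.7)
The plan is to apply the coend representation of weighted optics from \cref{eq:weighted_optic_coend_form} together with the decomposition of \cref{prop:act_univ}, then exploit the fact that one of the resulting factors is independent of the coend variables and can be pulled out.

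In more detail, I would start from
\[
  \Optic^W_{\diset{\actfw}{\actbw}}\OpticHom{A}{A'}{B}{B'} \cong \int^{M,M'} \cC(A, M \actfw B) \times W(M, M') \times \cD(M' \actbw B', A')
\]
and apply \cref{prop:act_univ} to the first factor of the integrand to obtain
\[
  \cC(A, M \actfw B) \cong \cC(A, M \actfw 1) \times \cC(A, I \actfw B).
\]
The second factor on the right is independent of $M$ and $M'$, and the actegory unitor $\eta$ gives $\cC(A, I \actfw B) \cong \cC(A, B)$. Since this term is constant in the coend variables, standard coend calculus lets me pull it out of the integral, leaving
\[
  \cC(A, B) \times \int^{M,M'} \cC(A, M \actfw 1) \times W(M, M') \times \cD(M' \actbw B', A'),
\]
and the remaining coend is by definition $\Optic^W_{\diset{\actfw}{\actbw}}\OpticHom{A}{A'}{1}{B'}$.

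The main potential obstacle is bookkeeping around naturality: pulling a factor out of a coend is valid provided that factor is constant in the coend variables, and using \cref{prop:act_univ} inside a coend requires the isomorphism there to be natural in $M$ (so that it respects the dinatural structure on both $M$-arguments). Both of these are reassuring rather than hard: the isomorphism in \cref{prop:act_univ} is assembled from the interchanger of the monoidal actegory and the universal property of the cartesian product in $\cC$, both of which are natural in all arguments. Thus the composite isomorphism is dinatural in $(M, M')$, and the constant factor $\cC(A, B)$ commutes past the coend by the Fubini-style rule for coends. I would conclude by noting that the final isomorphism is natural in $A, A', B, B'$, so it assembles into the claimed isomorphism of hom-sets.
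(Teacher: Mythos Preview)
Your proposal is correct and follows essentially the same route as the paper's proof: start from the coend form of the hom-set, apply \cref{prop:act_univ} to split $\cC(A, M \actfw B)$, reduce $\cC(A, I \actfw B)$ to $\cC(A, B)$ via the unitor, pull this constant factor out of the coend, and recognise the remainder as the optic hom with $1$ in place of $B$. Your extra remarks on naturality and dinaturality are a welcome bit of care that the paper leaves implicit.
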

\begin{proof}
  \label{proof:optic_adjoint_lens}
  \begin{alignat*}{2}
    & && \Optic^W_{\diset{\actfw}{\actbw}}\OpticHom{A}{A'}{B}{B'}\\
    &\text{(Def.)}           &=& \int^{M, M'} \cC(A, M \actfw B) \times W(M, M') \times \cD(M' \actbw B', A')\\
    & \text{(\cref{prop:act_univ})}  &\cong& \int^{M} \cC(A, M \actfw 1) \times \cC(A, I \actfw B) \times W(M, M') \times \cD(M' \actbw B', A')\\
    &   &\cong& \cC(A, B) \times \int^{M, M'} \cC(A, M \act 1) \times W(M, M') \times \cD(M' \actbw B', A')\\
    &\text{(Def.)}   &=& \cC(A, B) \times \Optic^W_{\diset{\actfw}{\actbw}}\OpticHom{A}{A'}{1}{B'} \tag*{\qedhere}
  \end{alignat*}
\end{proof}

This is how lenses historically originated, by being described as gadgets with a separate forward and backward passes.
This kind of formalism that separates out the backward and forward passes necessitates a different composition rule, which as we'll see is denotationally equivalent, but operationally quite different than the one given directly by optics (\cref{def:weighted_optic}).

All of the definitions of lenses in the literature are special cases of \emph{mixed optics} (\cref{box:mixed_optics_are_weighted_optics}), as weighted optics are only introduced in this thesis.
In \cite{clarke_profunctor_2022} lenses are defined as mixed optics whose forward actegory is given by the self-action of a cartesian category $\cC$ (thus necessitating that the acting category of both the forward and the backward pass is $\cC$).
In such a setting \cref{prop:optic_cartesian} further reduces to
\[
  \Optic_{\diset{\times}{\actbw}}\OpticHom{A}{A'}{B}{B'} \cong \cC(A, B) \times  \cD(A \actbw B', A')
\]

In this setting, we take the liberty of providing a definition of composition and identity.
\begin{definition}[Lens composition and identity]
  \label{def:lens_composition_identity}
  Given
  \[
    \diset{A}{A'} \xrightarrow{\diset{f}{f'}} \diset{B}{B'} \quad \text{and} \quad \diset{B}{B'} \xrightarrow{\diset{g}{g'}} \diset{C}{C'}
  \]
  where $f : \cC(A, B)$ and $f' : \cD(A \actbw B', A')$ (analogously for $\diset{g}{g'}$) their composite is defined as a lens whose forward part is $f \comp g$ and the backward part is $ A \actbw C' \xrightarrow{{(g' \compPara{\comp} f')}^{\graph(f)}} A'$ (where $\graph(f)$ is defined in \cref{def:graph}).
  To specify the backward pass here we have used parametric compositon and reparameterisation thereof (\cref{def:para}).
  Identity $\diset{A}{A'} \to \diset{A}{A'}$ is defined as $\id_A : A \to A$ on the forward pass and reparameterisation of $\eta_A^{\actbw} : 1 \actbw A' \to A'$ with $\terminal_A : A \to 1$ on the backward one.
  It is routine to check that composition and identity are strictly preserved going between the abstract and the concrete representation.
\end{definition}

Many others (\cite{riley_categories_2018}) specialise lenses further.
They define them in a non-mixed setting, where the forward and the backward category are the same, and are given by the cartesian product of $\cC$.
In this setting we can further reduce the above to

\begin{equation}
  \label{eq:lens_representation}
  \Optic(\cC)\OpticHom{A}{A'}{B}{B'} \cong \cC(A, B) \times  \cC(A \times B', A')
\end{equation}

which yields the more familiar representation of a lens in the literature.
We will often denote this category as in \cref{eq:lens}.
Its string diagram representation is in \cref{fig:lens}.

\begin{equation}
  \label{eq:lens}
  \Lens(\cC) \coloneqq \Optic(\cC)
\end{equation}

\begin{figure}[H]
  \scaletikzfig[0.8]{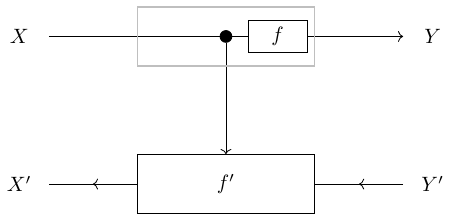}
  \caption{String diagram representation of a lens. The area in the gray box $(A, \graph(f)$ is the forward part: a morphism in $\Para(\cC)(X, Y)$. Notably, the input $X$ is explicitly copied.}
  \label{fig:lens}
\end{figure}

These sometimes go under the name of \emph{bimorphic} (\cite{hedges_limits_2019}) lenses, as they allow different choices of objects on the forward and backward pass.
Some authors specialise this construction even more and require that objects are those given by the diagonal $\diset{A}{A}$ in which case they are referred to as \emph{monomorphic}, or \emph{simple} lenses.

\begin{mybox}[label=box:scs_lens]{Cerulean}{States, costates, and scalars in $\Lens[false](\cC)$}
  The descriptions of states, costates and scalars from \cref{box:scs_optic,box:scs_cartesian} can be specialised to lenses.
  
  \begin{minipage}[t]{0.32\textwidth}
    \begin{tightcenter}
      States
    \end{tightcenter}
    \scaletikzfig[0.7]{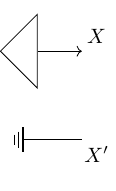}
    \begin{align*}
      & \Lens(\cC)\OpticHom{1}{1}{X}{X'}\\
      \cong & \cC(1, X)
    \end{align*}
  \end{minipage}
  \hfill
  \begin{minipage}[t]{0.32\textwidth}
    \begin{tightcenter}
      Scalars
    \end{tightcenter}
    \scaletikzfig[0.7]{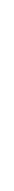}
    \begin{align*}
      &\Lens(\cC)\OpticHom{1}{1}{1}{1}\\
      \cong & 1
    \end{align*}
  \end{minipage}
  \hfill
  \begin{minipage}[t]{0.32\textwidth}
    \begin{tightcenter}
      Costates
    \end{tightcenter}
    \scaletikzfig[0.7]{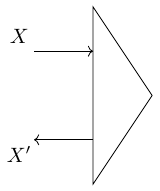}
    \begin{align*}
      &\Lens(\cC)\OpticHom{X}{X'}{1}{1}\\
      \cong &\cC(X, X')
    \end{align*}
  \end{minipage}
\end{mybox}

\subsection{Prisms; from cocartesian actegories}
\label{subsec:prisms}

Dual to the setting of cartesian actegory on the forward pass is the setting of a cocartesian actegory on the backward pass. Analously to the setting above, this is in the literature known under the name of \emph{prism} (\cite[Def.\ 3.16]{clarke_profunctor_2022}).
These yield a generalisation of the definition of optics for coproducts (\cref{ex:optics_for_coproducts}) where we allow the forward category to be potentially different than the backward one.

\begin{definition}[Cocartesian optics]
  \label{def:prism}
  Consider the category of optics $\Optic^W_{\diset{\actfw}{\actbw}}$ for a backwards cocartesian category.
  We call this category the category of the category of \newdef{cocartesian optics}, and often refer to them as prisms.
\end{definition}

In such a case, we can exhibit a dual reduction which extracts out the backward pass:

\begin{proposition}[{Compare \cite[Sec. 4.2]{riley_categories_2018}}]
  \label{prop:optic_cocartesian}
  In any cocartesian weighted optic we can factor out the backward pass:
  \[
    \Optic^W_{\diset{\actfw}{\actbw}}\OpticHom{A}{A'}{B}{B'} \cong  \Optic^W_{\diset{\actfw}{\actbw}}\OpticHom{A}{A'}{B}{0} \times \cD(B', A')
  \]
\end{proposition}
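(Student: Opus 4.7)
The plan is to mirror the proof of \cref{prop:optic_cartesian}, but apply the analogue of \cref{prop:act_univ} to the backward factor of the coend rather than the forward one. The key observation is that, exactly as \cref{prop:act_univ} followed from the monoidal-actegory interchanger together with the universal property of the cartesian product in $\cC$, a cocartesian backward actegory $(\cD,\actbw)$ with cocartesian structure $(+,0)$ yields a dual isomorphism
\[
\cD(M' \actbw B', A') \cong \cD(M' \actbw 0, A') \times \cD(I' \actbw B', A') \cong \cD(M' \actbw 0, A') \times \cD(B', A'),
\]
where the first isomorphism uses the mixed interchanger $\interchanger_{M', I', 0, B'} : (M' \otimes I') \actbw (0 + B') \cong (M' \actbw 0) + (I' \actbw B')$ composed with the universal property of the coproduct in $\cD$, and the second uses the actegory unitor $\eta^{\actbw}_{B'}$.

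With this duality in hand, I would plug the decomposition into the coend presentation of \cref{eq:weighted_optic_coend_form} and carry out the analogous bookkeeping to the proof of \cref{prop:optic_cartesian}:
\begin{align*}
& \Optic^W_{\diset{\actfw}{\actbw}}\OpticHom{A}{A'}{B}{B'}\\
=\,& \int^{M, M'} \cC(A, M \actfw B) \times W(M, M') \times \cD(M' \actbw B', A')\\
\cong\,& \int^{M, M'} \cC(A, M \actfw B) \times W(M, M') \times \cD(M' \actbw 0, A') \times \cD(B', A')\\
\cong\,& \cD(B', A') \times \int^{M, M'} \cC(A, M \actfw B) \times W(M, M') \times \cD(M' \actbw 0, A')\\
=\,& \cD(B', A') \times \Optic^W_{\diset{\actfw}{\actbw}}\OpticHom{A}{A'}{B}{0},
\end{align*}
where the third step pulls the constant factor $\cD(B',A')$ out of the coend (coends commute with products by constants, since $\Set$-colimits do), and the final equality is just the definition of the weighted-optic hom-set with $B'=0$.

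The routine but load-bearing step is the first one: establishing the cocartesian dual of \cref{prop:act_univ}. The core of the argument is symmetric to the cartesian case, but one must be careful that the interchanger of a \emph{cocartesian} monoidal actegory really yields the decomposition $(M' \otimes I') \actbw (0 + B') \cong (M' \actbw 0) + (I' \actbw B')$, and that this component of $\interchanger$ is invertible (as required by \cref{def:monoidal_actegory}). I expect the main subtlety, rather than a true obstacle, to be keeping track of the variance: the backward hom $\cD(M' \actbw B', A')$ is contravariant in $M' \actbw B'$, so the universal property of the \emph{coproduct} (not the product) in $\cD$ is what produces the splitting into a product of hom-sets, exactly dual to how the universal property of the product in $\cC$ entered the proof of \cref{prop:optic_cartesian}. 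Everything else is bookkeeping with the coend.
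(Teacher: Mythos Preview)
Your proposal is correct and matches the paper's approach exactly: the paper simply states that the proof follows analogously to that of \cref{prop:optic_cartesian}, and your dualisation—replacing \cref{prop:act_univ} by its cocartesian analogue on the backward factor and pulling the constant $\cD(B',A')$ out of the coend—is precisely that analogue spelled out. Your attention to variance (coproduct in $\cD$ yielding a product of hom-sets via contravariance) is the right point of care.
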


The proof follows analogously to the one of \cref{prop:optic_cartesian}.

While we do not directly study prisms in this thesis, we mention them for completeness, and to bring to attention an important disambiguation thereof.

\begin{remark}[Two kinds of prisms]
  \label{rem:two_kinds_of_prisms}
In the literature there are two non-isomorphic constructions referred to as prisms: one arising out of coproducts or generally cocartesian actegories (\cite{riley_categories_2018, clarke_profunctor_2022}) and another one arising as lenses in the opposite of a cartesian category (\cite[Ex. 2.4]{spivak_generalized_2022}).
Only the former has 1) a clear operational interpretation and 2) a form that is compositional i.e.\ allowing us to compose prisms with lenses, obtaining affine traversals (\cite[Sec. 3.2.1]{clarke_profunctor_2022}).
For more detail see \cite{gavranovic_two_2023}.
\end{remark}

\subsection{A lens and a prism at the same time}

Cartesian actegories allow us to make a particularly interesting reduction for $\Copara$.

\begin{theorem}
  \label{thm:copara_cart_reduction}
  Let $(\cC, \actfw)$ be a cartesian $\cM$-actegory.
  Then we have the following factorisation:
  \[
    \Copara_{\act}(\cC)(X, Y) \cong \cC(X, Y) \times \Copara_{\act}(\cC)(X, 1)
  \]
\end{theorem}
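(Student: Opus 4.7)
The plan is to mirror the proof strategy of \cref{prop:optic_cartesian}, but working with the simpler $\Copara$-hom directly instead of the full optic coend. The key tool is \cref{prop:act_univ}, which provides the actegorical analogue of the universal property of the cartesian product for a cartesian actegory.

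First, I would apply \cref{prop:copara_local_cat_elements} to rewrite
\[
  \Copara_{\act}(\cC)(X, Y) \;\cong\; \sum_{M : \cM} \cC(X, M \actfw Y).
\]
Then, since $(\cC, \actfw)$ is cartesian, \cref{prop:act_univ} gives a natural isomorphism
\[
  \cC(X, M \actfw Y) \;\cong\; \cC(X, M \actfw 1) \times \cC(X, I \actfw Y),
\]
and a further application of the actegory unitor reduces $\cC(X, I \actfw Y)$ to $\cC(X, Y)$. Substituting back and pulling the $M$-independent factor $\cC(X, Y)$ out of the sum yields
\[
  \sum_{M : \cM} \cC(X, M \actfw 1) \times \cC(X, Y) \;\cong\; \cC(X, Y) \times \sum_{M : \cM} \cC(X, M \actfw 1) \;=\; \cC(X, Y) \times \Copara_{\act}(\cC)(X, 1),
\]
which is the desired factorisation on the nose.

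The subtle step — and where I expect the only real obstacle — is to confirm that the isomorphism is not merely one of underlying sets but is compatible with reparameterisations, so that it works at the level of hom-categories (or, equivalently, descends to connected components). This amounts to checking that the $\cC(X, Y)$ component is invariant under reparameterisation $r : M \to N$, while the $\cC(X, M \actfw 1)$ component transforms along $r \actfw 1$ exactly as a $\Copara$-reparameterisation should. Both facts follow from the naturality in $M$ of the decomposition in \cref{prop:act_univ}: the projection onto $\cC(X, Y)$ is obtained by composing with $M \actfw \terminal_Y$ followed by the interchanger and the projection to $I \actfw Y$, and this route absorbs any postcomposition with $r \actfw Y$ before reaching the $\cC(X, Y)$ coordinate. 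With this naturality in hand, the four-step chain above upgrades to a bona fide isomorphism of hom-categories, preserving identities and composition by routine bookkeeping, completing the proof.
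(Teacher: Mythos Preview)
Your proposal is correct and follows essentially the same route as the paper: unpack the hom-category via \cref{prop:copara_local_cat_elements}, apply \cref{prop:act_univ}, reduce $\cC(X, I \actfw Y)$ to $\cC(X, Y)$ via the unitor, and pull the $M$-independent factor out of the sum. Your additional paragraph checking compatibility with reparameterisations is more careful than the paper, which simply presents the four-line chain of isomorphisms without further comment.
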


\begin{proof}
  \begin{alignat*}{2}
    & && \Copara_{\act}(\cC)(X, Y)\\
    &\text{(Def.)} &=& \sum_{M : \cM}\cC(X, M \actfw Y)\\
    &\text{(\cref{prop:act_univ})}     &\cong& \sum_{M : \cM}\cC(X, M \actfw 1) \times \cC(X, I \actfw Y)\\
    &  &\cong& \cC(X, Y) \times \sum_{M : \cM}\cC(X, M \actfw 1)\\
    &\text{(Def.)}  &=& \cC(X, Y) \times \Copara_{\act}(\cC)(X, 1) \tag*{\qedhere}
  \end{alignat*}
\end{proof}

Intuitivelly, this tells us that any coparametric map $f : X \to M \act Y$ of a cartesian actegory can be decomposed into the horizontal part with a trivial vertical component (a map $X \to Y$) and a vertical part with a trivial horizontal component (a map $X \to M \act 1$).
In a way, we have seen this happening already in \cref{prop:optic_cartesian}.
We can ponder what this means for optics.
Recall that they are defined as an action on the product of $\cC$ and $\cD^{\op}$

 \begin{proposition}[Product of cartesian actegories is a cartesian actegory]
   Consider a cartesian $\cM$-actegory $(\cC, \act)$ and a cartesian $\cN$-actegory $(\cD, \actbw)$.
   Then their product (as defined in Prop. \cref{prop:product_of_actegories}) is a cartesian $\cM \times \cN$-actegory.
 \end{proposition}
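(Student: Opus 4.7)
The plan is to reduce the statement to two observations: (i) the product of monoidal actegories is already known to be a monoidal actegory (the immediately preceding proposition, which we invoke directly), and (ii) the cartesian product of two cartesian categories is cartesian in the componentwise sense. Once both are in hand, the conclusion follows by unwinding the definition of \emph{cartesian actegory} (\cref{def:cartesian_actegory}), which requires nothing more than that the monoidal product on the base category is a categorical product.

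First, I would invoke \cref{prop:product_monoidal_act} to obtain that $(\cC \times \cD, \diset{\actfw}{\actbw})$ is a monoidal $\cM \times \cN$-actegory, with base monoidal structure $(\boxtimes \times \boxtimes', (J, J'))$ given componentwise by the monoidal products $(\boxtimes, J)$ on $\cC$ and $(\boxtimes', J')$ on $\cD$. Then I would show that if $\boxtimes$ and $\boxtimes'$ are each cartesian, the componentwise monoidal product $\boxtimes \times \boxtimes'$ on $\cC \times \cD$ is also cartesian, with terminal object $(J, J') = (1_{\cC}, 1_{\cD})$ and projections given by pairs of projections $(\pi_A, \pi_{A'}) : (A, A') \times (B, B') \to (A, A')$. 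This is a standard fact: a limit in a product category is computed componentwise, so universal products in $\cC$ and $\cD$ assemble into universal products in $\cC \times \cD$.

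With these two facts in place, the definition of cartesian actegory is satisfied for $(\cC \times \cD, \diset{\actfw}{\actbw})$ by construction: it is a monoidal $\cM \times \cN$-actegory whose base monoidal structure happens to be a categorical product.

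The only potential obstacle, which I would verify carefully, is that the monoidal structure on $\cC \times \cD$ produced by \cref{prop:product_monoidal_act} (through the monoidal-actegory interchanger data) genuinely coincides with the standard componentwise cartesian product on $\cC \times \cD$, rather than being some twisted variant. Since the interchanger $\interchanger$ and structure isomorphisms of a product monoidal actegory are built pointwise from those of its factors, and cartesianness is a property (not extra structure) of a monoidal category, this check is routine. No new coherence diagram needs to be verified beyond what \cref{prop:product_monoidal_act} already provides.
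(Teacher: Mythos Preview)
Your proof is correct and follows exactly the approach one would expect: invoke \cref{prop:product_monoidal_act} for the monoidal part, then observe that products in $\cC \times \cD$ are computed componentwise so the base monoidal structure is cartesian. The paper in fact states this proposition without proof, treating it as immediate from the definition; your sketch spells out precisely the routine verification the paper leaves implicit.
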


 As our backward actegory is always dualised --- this means that for optics we need a \emph{cocartesian} $\cM'$-actegory $(\cD, \actbw)$.
 In such a case, following \cref{thm:copara_cart_reduction} we get the following isomorphism:
\begin{alignat*}{2}
  &\Optic_{\diset{\actfw}{\actbw}}\OpticHom{A}{A'}{B}{B'} &&\cong \cC(A, B) \times \cD(B', A') \times \Optic_{\diset{\actfw}{\actbw}}\OpticHom{A}{A'}{1}{0}
\end{alignat*}

This is a reduction that is both lens-like, and prism-like at the same time!
It can be seen as a generalisation of optics constructed on a biproduct category.

\subsection{Closed optics; from closed actegories}
\label{sec:optics_backward_closed_actegory}

So far we've described actegories which parameterise systems \emph{internally}.
This means that any parametric map $f : P \act A \to B$ is a morphism of the base category.
For instance, if our base category is $\Smooth$, then $f$ is a smooth map.
However, sometimes we're interested in a map $f$ which is smooth only in one variable, and, say, linear in the other.
In this case we're interested in \emph{external} parameterisation.
This is the main idea behind \emph{closed} actegories, which generalise the setting of a monoidal closed category.
Recall that monoidal closure (\cref{def:monoidal_closed}) means that for every $D : \cD$ the functor $- \otimes D$ has a right adjoint denoted by $\internalHom{\cD}{D}{-}$ which forms the internal hom functor $\internalHom{\cD}{-}{-} : \cD^{\op} \times \cD \to \cD$ such that there is an isomorphism
\[
  \cD(X \otimes Y, Z) \cong \cD(X, \internalHom{\cD}{Y}{Z})
\]
natural in all bound variables.
A closed actegory generalises this setting into that with two categories, while preserving the essence of the idea.

\begin{definition}[{Closed actegory (compare \cite[Example 3.2.6.]{capucci_actegories_2023})}]
  Let $(\cD, \actbw)$ be a $\cM$-actegory.
  If for every $D : \cD$ the functor $- \actbw D : \cM \to \cD$ has a right adjoint, we call $(\cD, \actbw)$ a \newdef{closed} actegory.\footnote{If $\cM$ is also monoidal closed, then this is equivalent to a $\cM$-enriched category $\cD$ with copowers. (\cite[Example 3.2.6]{capucci_actegories_2023})}
\end{definition}

We often denote the right adjoint as $\internalHom{\cD}{D}{-}$.
In this setting we can generalise the standard tensor-hom isomorphism to
\begin{equation}
  \label{eq:closed_actegory}
  \cD(M \actbw X, Y) \cong \cM(M, \internalHom{\cD}{X}{Y})
\end{equation}

leading us to the following proposition.

\begin{proposition}[{External parameterisation (compare \cite[Def.\ 3.2.8.]{smithe_mathematical_2022})}]
  When the actegory is closed, we have the following isomorphism $\Para_{\actbw}(\cD)(X, Y) \cong \cM/\internalHom{\cD}{X}{Y}$.\footnote{Note that one can in principle define external parameterisation when $\cD$ is merely enriched in $\cM$, without any actions. Giving a general formulation of parameterisation that encompasses both of these settings is something we hope to explore in future work based on locally graded categories (see end of \cref{sec:act_para_literature} and \cref{sec:locally_graded_categories}).}
\end{proposition}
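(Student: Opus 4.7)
The plan is to use the closure adjunction $- \actbw X \dashv \internalHom{\cD}{X}{-}$ to translate parametric morphisms into maps into the internal hom, which are precisely objects of the claimed slice category.

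First, I would unpack both sides as categories and exhibit a bijection on objects. An object of $\Para_{\actbw}(\cD)(X,Y)$ is a pair $(P, f)$ where $P : \cM$ and $f$ is a morphism $P \actbw X \to Y$ in $\cD$ (using the left-action variant of \cref{def:para}, which is well-defined by the left/right duality discussed in \cref{rem:act_left_vs_right}). The closure isomorphism \cref{eq:closed_actegory} provides a natural bijection $\cD(P \actbw X, Y) \cong \cM(P, \internalHom{\cD}{X}{Y})$, sending $f$ to its transpose $\hat f : P \to \internalHom{\cD}{X}{Y}$. But an arrow of that type in $\cM$ is exactly the data of an object in the slice $\cM/\internalHom{\cD}{X}{Y}$. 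So the assignment $(P, f) \mapsto (P, \hat f)$ gives a bijection on objects.

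Second, I would promote this to a bijection on morphisms. A $2$-cell $(P, f) \Rightarrow (P', f')$ in $\Para_{\actbw}(\cD)(X, Y)$ is a morphism $r : P' \to P$ in $\cM$ such that $f \comp (r \actbw X) = f'$. By naturality of the adjunction in its first argument, this equation is equivalent to $\hat f \comp r = \hat{f'}$, which is exactly the commuting triangle defining a morphism $(P', \hat{f'}) \to (P, \hat f)$ in $\cM/\internalHom{\cD}{X}{Y}$. This identification is manifestly strict with respect to vertical composition (since both kinds of $2$-cell compose by composition of the underlying $r$'s in $\cM$) and with respect to identities, so functoriality follows by routine checking.

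The genuine subtlety, and the main obstacle, is variance bookkeeping. In $\Para$ the $2$-cell $(P, f) \Rightarrow (P', f')$ has underlying reparameterisation $r : P' \to P$ (as already reflected in $\cM^{\op}$ appearing in \cref{prop:para_local_cat_elements}), while a slice morphism conventionally goes $(P, g) \to (P', g')$ via $P \to P'$. Thus one must decide whether the isomorphism being asserted lands in $\cM/\internalHom{\cD}{X}{Y}$ itself or in its opposite; once a consistent convention is chosen (matching the convention in \cite{smithe_mathematical_2022}), everything else is a direct application of the closure adjunction and its naturality, with no further non-trivial content.
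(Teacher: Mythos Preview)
Your proposal is correct and is essentially the only sensible route: apply the closure adjunction \cref{eq:closed_actegory} to identify parametric morphisms with maps into $\internalHom{\cD}{X}{Y}$, then use naturality to match reparameterisations with slice morphisms. The paper itself does not spell out a proof --- it states the proposition immediately after \cref{eq:closed_actegory}, treating it as an evident consequence --- so your argument is simply the explicit unpacking of what the paper leaves implicit.

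Your remark about variance is well taken and worth making. Comparing with \cref{box:scs_para}, where the analogous special case reads $\Para(\cC)(I, X) \cong (\cC/X)^{\op}$, one sees that an ${}^{\op}$ is indeed expected on the slice; the proposition as stated in the paper omits it, so the discrepancy you flag is real rather than a gap in your reasoning.
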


\begin{example}[Monoidal closed category $\cC$]
  \label{ex:monoidal_closed_category}
  Any monoidal closed category $\cC$ is a closed actegory whose action is given by the self-action of the monoidal product.
\end{example}

An especially relevant example is the category $\FVectR$ with monoidal structure $(\otimes, \R)$, which can be used to model linearity of the backward pass in backpropagation.
We can state what this means abstractly for weighted optics whose backward actegory is closed, and weight is corepresentable with a lax monoidal functor.

\begin{definition}[Closed optics]
  \label{def:closed_optic}
  Consider the category $\Optic^W_{\diset{\actfw}\actbw}$ whose backward actegory is closed and weight is corepresented by a lax monoidal functor $j : \cM' \to \cM$.
  We call this the category of \newdef{closed}\footnote{Sometimes also called ``linear'' optics.} \newdef{optics}.
\end{definition}

\begin{remark}
  Unlike in \cref{prop:weighted_to_mixed_representable} we do not require $j$ to be strong monoidal because it does not need to be a part of a reparameterisation of actegories.
\end{remark}

Closed optics are important is because they allow us to make the following reduction to \emph{closed lenses}, also often called \emph{linear lenses} in the literature (\cite[Sec. 4.8]{riley_categories_2018}, \cite[Def.\ 3.14]{clarke_profunctor_2022}).
Despite the string ``lens'' in their name, they do not require the forward actegory to be cartesian, though in (\cref{corr:optics_forward_cart_backward_closed_concrete}) we will study what happens in that case.

\begin{proposition}
  \label{thm:optic_backward_act_closed}
  Consider the category of closed optics $\Optic^W_{\diset{\actfw}{\actbw}}$.
  Then we can exhibit the following isomorphism
  \[
    \Optic^W_{\diset{\actfw}{\actbw}}\OpticHom{A}{A'}{B}{B'} \cong \cC(A, j(\cD(B', A')) \act B)
  \]
\end{proposition}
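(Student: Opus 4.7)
The plan is to directly manipulate the coend representation of the weighted optic, using one coYoneda reduction induced by corepresentability of $W$ and a second one induced by closedness of the backward actegory.

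First I would expand the hom-set of the weighted optic using its coend form (\cref{eq:weighted_optic_coend_form}). Since $W$ is corepresented by $j$, we may substitute $W(M, M') \cong \cM(M, j(M'))$. Applying coYoneda in the variable $M$ --- valid because $M \mapsto \cC(A, M \actfw B)$ is covariant --- eliminates the $\cM$-coend by the substitution $M \mapsto j(M')$, yielding
\[
\int^{M' : \cM'} \cC(A, j(M') \actfw B) \times \cD(M' \actbw B', A').
\]
Note that at this stage we are only invoking the hom-set-level collapse behind \cref{prop:weighted_to_mixed_representable}, so we do not need $j$ to be strong monoidal (merely lax suffices, since $j$ is used here only as an ordinary functor).

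Next I would apply the defining adjunction of the closed $\cM'$-actegory $(\cD, \actbw)$, namely $\cD(M' \actbw B', A') \cong \cM'(M', \internalHom{\cD}{B'}{A'})$, naturally in $M'$. This turns the backward factor into a representable:
\[
\int^{M' : \cM'} \cC(A, j(M') \actfw B) \times \cM'(M', \internalHom{\cD}{B'}{A'}).
\]
A second application of coYoneda --- again valid because $M' \mapsto \cC(A, j(M') \actfw B)$ is the composite of the covariant functors $j$, $(-) \actfw B$, and $\cC(A, -)$ --- eliminates the remaining coend by substituting $M' \mapsto \internalHom{\cD}{B'}{A'}$, delivering $\cC(A, j(\internalHom{\cD}{B'}{A'}) \actfw B)$ as required.

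The only subtlety is bookkeeping the variance of each factor so that both coYoneda reductions match the standard statement (\cref{prop:coyoneda}): the backward homs $\cD(M' \actbw B', A')$ and $\cM'(M', \internalHom{\cD}{B'}{A'})$ are both contravariant in $M'$, while the forward factor is covariant, so the integrand legitimately assembles into a profunctor $\cM'^{\op} \times \cM' \to \Set$ whose diagonal coend is well-defined. I do not anticipate any conceptual obstacle; once the corepresentation of $W$ has contracted the $\cM$-coend, the remainder is a textbook use of a closed structure together with coYoneda.
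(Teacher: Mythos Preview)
Your proposal is correct and uses essentially the same approach as the paper: two coYoneda reductions, one powered by the corepresentability of $W$ and one by the closedness of the backward actegory. The only difference is the order --- the paper first applies the tensor-hom adjunction to eliminate $M'$ and then uses corepresentability to eliminate $M$, whereas you do the reverse; this is a harmless reordering and your variance bookkeeping is correct.
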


\begin{proof}
  \begin{alignat*}{2}
    & && \Optic^W_{\diset{\actfw}{\actbw}}\OpticHom{A}{A'}{B}{B'}\\
    &\text{(Def.)} &&=\int^{M, M'} \cC(A, M \actfw B) \times W(M, M') \times \cD(M' \actfw B', A')\\
    &\text{(Tensor-hom (\cref{eq:closed_actegory}))} &&\cong \int^{M, M'} \cC(A, M \actfw B) \times W(M, M') \times \cM(M', \cD(B', A'))\\
    &\text{(coYoneda)} &&\cong \int^{M} \cC(A, M \actfw B) \times W(M, \cD(B', A')) \\
    &\text{(\cref{prop:weighted_to_mixed_representable})} &&\cong \int^{M} \cC(A, M \actfw B) \times \cM(M, j(\cD(B', A')))\\
    &\text{(coYoneda)} &&\cong \cC(A, j(\cD(B', A')) \act B) \tag*{\qedhere}
  \end{alignat*}
\end{proof}

Analogously to cartesian optics, we can deal with concrete closed lenses directly, and provide a composition rule and identity in terms of their explicit representation.

\begin{definition}[Closed lens composition and identity]
  \label{def:linear_lens_composition_identity}
Given
\[
  \diset{A}{A'} \xrightarrow{f} \diset{B}{B'} \quad \text{and} \quad \diset{B}{B'} \xrightarrow{g} \diset{C}{C'}
\]
where
\[
  f : A \to j(\cD(B', A')) \actfw B \quad \text{and} \quad g : B \to j(\cD(C', B')) \actfw C
\]
the composite closed lens is a map of type $A \to j(\cD(C', A')) \actfw C$ that can compactly be written as $(f \compCopara{\comp} g)^{\mu^j \comp j(\comp)}$.
Intuitively, here we first composed $f$ and $g$ as coparametric maps, yielding
\[
  f \compCopara{\comp} g : A \to (j(\cD(B', A')) \otimes j(\cD(C', B'))) \actfw C
\]
and then reparameterised the result along
\[
  j(\cD(B', A')) \otimes j(\cD(C', B')) \xrightarrow{\mu^j} j(\cD(B', A') \otimes' \cD(C', B')) \xrightarrow{j(\comp)} j(\cD(C', A'))
\]

where $\mu^j$ is the laxator of $j$ and $j(\comp)$ composition inside the category $\cD$ under $j$.

The identity closed lens $\diset{A}{A'} \to \diset{A}{A'}$ is given by the unitor $\eta_A^\actfw : A \to I \actfw A$ reparameterised by $I \xrightarrow{\epsilon^j} j(I') \xrightarrow{j(\id_{A'})} j(\cD(A', A'))$.
It is routine to show that composition and identity is strictly preserved going between the abstract and the concrete closed optic representation.
\end{definition}

\begin{example}[Markov kernels and expectation]
  \label{ex:mark_ker_expectation_closed}
  The category of mixed optics of Markov kernels and expectations (\cref{ex:mark_ker_expectation}) is a category of closed optics, as $\Conv$ is monoidal closed (\cite[Ex. 4]{hedges_value_2023}).
\end{example}

\begin{example}[Smooth maps and linear maps]
  \label{ex:smooth_linear_weighted_optic}
  The category of weighted optics of smooth and linear maps (\cref{ex:smooth_and_linear_weighted_optics}) is a category of closed optics, as $\FVectR$ is monoidal closed.
\end{example}

Specifically, the composition of linear lenses $f : A \to B \times (B' \multimap A')$ and $g : B \to C \times (C' \multimap B')$ in $\Smooth$ is the following composite in $\cC$:
\begin{equation}
  \label{eq:linear_lens_smooth}
  A \xrightarrow{f} B \times (B' \multimap A') \xrightarrow{g \times (B' \multimap A')} C \times (C' \multimap B') \times (B' \multimap A') \xrightarrow{C \times \multimap_{\comp}} C \times (C' \multimap A')
\end{equation}

And lastly, if the forward actegory is additionally cartesian then, like before, we can separate out the forward pass from the backward one.

\begin{corollary}[{Optics with cartesian on the forward and closed actegories on the backward pass}]
  \label{corr:optics_forward_cart_backward_closed_concrete}
  Let the category $\Optic^W_{\diset{\actfw}{\actbw}}$ be the category of optics where the actegory $\actfw$ on the forward pass is cartesian, and the actegory $\actbw$ on the backward pass is a closed.
  Then we have
  \[
    \Optic^W_{\diset{\actfw}{\actbw}}\OpticHom{A}{A'}{B}{B'} \cong \cC(A, B) \times \cC(A, j(\cD(B', A')) \actfw 1)
  \]
  If the forward actegory is given by the self-action of a cartesian actegory, then we can reduce the above further to
  \[
    \Optic^W_{\diset{\times}{\actbw}}\OpticHom{A}{A'}{B}{B'} \cong \cC(A, B) \times \cC(A, j(\cD(B', A')))
  \]
\end{corollary}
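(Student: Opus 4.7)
The plan is to prove Corollary \ref{corr:optics_forward_cart_backward_closed_concrete} by chaining together the two reductions that have already been established: the cartesian factorisation of the forward pass (\cref{prop:optic_cartesian}) and the closed reduction of the backward pass (\cref{thm:optic_backward_act_closed}). Since the hypotheses of both results are satisfied simultaneously — the forward actegory is cartesian and the weight is corepresentable by a lax monoidal functor $j$ with closed backward actegory — the two isomorphisms can be applied one after the other.

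First I would apply \cref{prop:optic_cartesian} to factor the hom-object as
\[
\Optic^W_{\diset{\actfw}{\actbw}}\OpticHom{A}{A'}{B}{B'} \cong \cC(A, B) \times \Optic^W_{\diset{\actfw}{\actbw}}\OpticHom{A}{A'}{1}{B'}.
\]
Then I would apply \cref{thm:optic_backward_act_closed} to the remaining weighted optic hom-object, specialised at codomain forward-object $1$, obtaining
\[
\Optic^W_{\diset{\actfw}{\actbw}}\OpticHom{A}{A'}{1}{B'} \cong \cC(A, j(\cD(B', A')) \actfw 1).
\]
Combining these two isomorphisms gives the first claimed formula. Both steps are compatible because the manipulations in \cref{prop:optic_cartesian} only act on the coend factor involving $\cC$, while those in \cref{thm:optic_backward_act_closed} only act on the factors involving $W$ and $\cD$; they are therefore orthogonal.

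For the second part, I would specialise the forward actegory to the self-action of a cartesian category, so that $\actfw = \times$ and the action unit $I$ is the terminal object $1$. Then the expression $j(\cD(B', A')) \actfw 1 = j(\cD(B', A')) \times 1$ simplifies via the right unitor of the cartesian product to $j(\cD(B', A'))$, yielding
\[
\Optic^W_{\diset{\times}{\actbw}}\OpticHom{A}{A'}{B}{B'} \cong \cC(A, B) \times \cC(A, j(\cD(B', A'))).
\]

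I expect no serious obstacle here, since this corollary is essentially a bookkeeping result that records the simultaneous reduction of two orthogonal aspects of the optic. The only subtlety worth being careful about is making sure the naturality of the isomorphism in \cref{prop:act_univ} (used inside the proof of \cref{prop:optic_cartesian}) plays nicely with the tensor-hom adjunction used in \cref{thm:optic_backward_act_closed}; but since both arise from applying coYoneda and adjoint transposition to independent coend variables, the two reductions commute and the composite isomorphism is well-defined.
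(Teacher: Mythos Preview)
Your proposal is correct and follows exactly the approach the paper intends: the corollary is stated without proof in the paper, immediately after \cref{thm:optic_backward_act_closed}, precisely because it is the straightforward combination of \cref{prop:optic_cartesian} (factoring out the forward pass) and \cref{thm:optic_backward_act_closed} (the closed reduction), applied in sequence. Your handling of the second part via the right unitor $j(\cD(B',A')) \times 1 \cong j(\cD(B',A'))$ is also the intended simplification.
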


Out of the two examples above, only smooth and linear maps are cartesian, as $\Mark$ does not have products.

\section{Bidirectionality in the literature}
\label{sec:bidirectionality_literature}

A lot of research on optics took place because they arose as a generalisation of the category of lenses in a cartesian category, and lenses are a widely studied construction (\cref{sec:bidirectionality_literature}).
But optics aren't the \emph{only} generalisation of lenses.
The category of \emph{dependent lenses} (\cite{braithwaite_fibre_2021}) is another generalisation that was rediscovered numerous times under the names such as ``the category of polynomial functors'' (\cite{gambino_polynomial_2013, spivak_generalized_2022, spivak_poly_2020}) (often denoted\footnote{Not to be confused with the category $\Poly_R$ defined in \cite{cockett_reverse_2020,blute_cartesian_2009} and this thesis.} by $\PolySpivak$), or ``the category of containers'' (\cite{abbott_categories_2003, goos_derivatives_2003, hutchison_higher-order_2010}) (often denoted by $\Cont$).\footnote{Despite being the same category, the first category is named after its morphisms, and the second one after its objects.}

What is the category of dependent lenses?
It is one where the type of the backward part --- both on object and morphism level --- depends on the value of the forward one.
Instead of having independent choices of $A$ and $A'$ in $\cC$ forming $\diset{A}{A'}$, now the possible choices for the backward object depend on the forward one.
In $\Set$ a possible notation for these kinds of objects would be $\diset{a : A}{A'(a)}$ where $A'$ is now a function of type $A \to \Set$.
Similar idea applies to morphisms.
The forward part $f : A \to B$ stays the same, and the backward part $f'$ is now of type $\sum_{a : A}B'(f(a)) \to A'(a)$.
It differs from the non-dependent backward type $A \times B' \to A'$ in two ways: 1) the pair type $A \times B'$ is replaced with the \emph{dependent pair} type $\sum_{a : A}B'(f(a))$, and 2) the function type $A \times B' \to A'$ is replaced with the \emph{dependent function type}.
The definition of dependent lenses works in any base category $\cC$ with pullbacks, where it is denoted by $\DLens(\cC)$ (analogously, $\PolySpivak(\cC)$ or $\Cont(\cC)$) and defined as the Grothendieck construction of the pointwise opposite of the contravariant slice functor $\cC / - : \cC^{\op} \to \Cat$ (\cref{ex:slice_charts_lenses}).

When the base category is $\Set$, this category is often referred to as simply $\DLens$, $\Poly$, and $\Cont$, respectively.
It is an important construction that has found uses in type theory (\cite{von_glehn_polynomials_2015,abbott_categories_2003,goos_derivatives_2003}), mathematical theory of interaction (\cite{niu_polynomial_2023}), dynamical systems (\cite{myers_categorical_2022}), and more.
It is rich in structure, admitting all small limits and colimits, exponentials, initial algebras, final coalgebras, and more than a dozen interesting monoidal products (\cite{spivak_reference_2023,niu_polynomial_2023}).

\begin{mybox}[label=box:why_not_poly]{RedOrange}{Why not $\PolySpivak[false]$?}
  As $\PolySpivak$ is a versatile construction rich in structure, one might wonder why we are not using it as a foundation of bidirectionality in this thesis.
  There are a few reasons.
  \begin{itemize}
    \item \textbf{Baked-in determinism.} As formation of dependent lenses over $\cC$ requires $\cC$ to have products, this necessitates that the interaction between the forward and the backward pass is \emph{deterministic}.
    That is, to make composition in $\DLens(\cC)$ is associative we need to assume that copying the input $a: A$ and applying $f : A \to B$ to each instance is the same as applying $f$ and copying $A$ (see \cref{fig:deterministic_morphism} or \cite{gavranovic_space-time_2022}).
    This is not an assumption that holds in bayesian settings (\cite{braithwaite_compositional_2023,bolt_bayesian_2023,smithe_mathematical_2022}) or general effectful settings.
    \item \textbf{Blindness to performance concerns.} We believe studying how systems behave operationally can give us new insights about their denotational models.
    This is because observing them only from the outside squashes out a lot of information which is useful in building models of their internals.
    And the category $\PolySpivak$ is a denotational, or extensional model of interaction, describing interaction as observed from the \emph{outside} of these systems.
    It is not suited for operational, or intensional software oriented approaches where we are not merely observing these systems from the outside, but building them with their internal setups in mind.
    This is especially important in deep learning, where different implementations of dependent lenses yield radically different performance profiles.
    See "Memoisation" in \cref{ch:backpropagation}, \cref{subsec:operational}, and \cite{gavranovic_space-time_2022}.
    \item \textbf{Inadequate enforcement of typing discipline.} The forward pass, the residual, and the backward pass of bidirectional systems often have different semantics, and live in different categories.
    Such typing discipline is not possible to enforce in $\PolySpivak$: both the forward and the backward pass are drawn from the same base category $\cC$, while the notion of residual is not reified at all.

    \item \textbf{Additivity fundamentally changes the category.} When it comes to deep learning specifically, the assumption of additivity of the second component of the backward pass yields the category $\PolySpivakA$, a fundamentally different category than $\PolySpivak$.
    For instance, the monoidal product on $\PolySpivak$ given by products both on the forward and the backward types\footnote{In literature called the \emph{Dirichlet}, or the \emph{Hancock} tensor product.} becomes a categorical product in on $\PolySpivakA$.
    And the embedding $\PolySpivakA \to \PolySpivak$ with these products is not a monoidal functor in any sense: it's not strict, strong, lax nor oplax.
    This in turn impacts the study of closure with respect to this product, for instance, which give us higher-order functions, an important feature of modern automatic differentiation frameworks (\cref{ch:backpropagation}).
  \end{itemize}

\end{mybox}

All of the above questions are something we deal with in the next subsection on \emph{dependent optics}, where we tackle the question: how to generalise the optics to a dependently typed setting so we get the best of both worlds --- of optics, but also of $\PolySpivak$?

Before this, we mention that lenses and optics are not a new construction: bidirectional tranformations in general have a long history.
They include many cousins of optics in lenses that we do not tackle in this thesis: Delta lenses \cite{clarke_delta_2021, clarke_internal_2020, diskin_multiple_2019, diskin_general_2020} and Dialectica categories (\cite{gray_dialectica_1989}), for example.
For the history of lenses we point to \cite{hedges_lenses_2018}, and for a general literature list related to this topic to \cite{gavranovic_theory_2023}.

Most closely related to weighted optics are the definitions of non-mixed optics (\cite{riley_categories_2018}), and those of mixed optics (\cite{clarke_profunctor_2022}).
We unpack the definitions of their hom-sets below.

\begin{definition}
  \label{def:homset_optic}
  Let $\cC$ be a monoidal category.
  The set of \newdef{(non-mixed) optics} $\diset{A}{A'} \to \diset{B}{B'}$ is defined as the following coend
  \[
    \Optic(\cC)\OpticHom{A}{A'}{B}{B'} \coloneqq \int^{M : \cC} \cC(A, M \otimes B)
    \times \cC(M \otimes B', A')
  \]

  Its elements are equivalence classes of triples $(M, f, f')$, where $M
  : \cC$, $f : A \to M \otimes B$ and $f' : M \otimes B' \to A'$. They're
  quotiented out by the equivalence relation where $(M, f, f') \sim (N, g, g')$
  if there is a residual morphism $r : M \to N$ in $\cC$ such that the following diagrams commute:
\begin{equation}
  \label{eq:optic_equiv_diagrams}
  \begin{tikzcddiag}[ampersand replacement=\&, row sep=3ex, column sep=4ex]
      A \&\& {M \otimes B} \&\& {M \otimes B'} \&\& {A'} \\
      \\
      \&\& {N \otimes B} \&\& {N \otimes B'}
      \arrow["f", from=1-1, to=1-3]
      \arrow["{r \otimes B}", from=1-3, to=3-3]
      \arrow["g"', from=1-1, to=3-3]
      \arrow["{f'}", from=1-5, to=1-7]
      \arrow["{g'}"', from=3-5, to=1-7]
      \arrow["{r \otimes B'}"', from=1-5, to=3-5]
    \end{tikzcddiag}
\end{equation}
\end{definition}

Note the difference between \cref{eq:optic_equiv_diagrams} and \cref{eq:copara_optic_diagram}.
It is a good exercise to check that these are indeed different ways of stating the same thing.\footnote{Hint: there is a canonical way to factorise any morphism in $\tw(\cC)$.}
Mixed optics, on the other hand, conceptually separate the forward category, the acting category, and the backward category.

\begin{definition}
  \label{def:homset_mixed_optic}
  Let $(\cC, \actfw)$ and $(\cD, \actbw)$ be $\cM$-actegories.
  The set of \newdef{mixed optics} $\diset{A}{A'} \to \diset{B}{B'}$ is defined as the following coend, quotiented out in the same way as \cref{eq:optic_equiv_diagrams}.
  \[
    \Optic_{\diset{\actfw}{\actbw}}\OpticHom{A}{A'}{B}{B'} \coloneqq \int^{M : \cM} \cC(A, M \actfw B) \times \cD(M \actbw B', A')
  \]
\end{definition}

\subsection{Dependent optics}
  \label{subsec:dependent_optics}
    As dependent lenses are a distinct and important generalisation of lenses on a category with products to the one with \emph{pullbacks}\footnote{A category with all pullbacks doesn't necessarily have all products: it only does if it also has a terminal object. This fact seems to be the main source of tension of what's to come.} an important question surfaced:
    \begin{center}
      \emph{Can optics be made dependent in the same way?}
    \end{center}
    This is an infamous question that has attracted a lot of attention and spawned a myriad of proposed solutions (\cite{braithwaite_fibre_2021, vertechi_dependent_2023, winskel_making_2023, milewski_dependent_2021, capucci_seeing_2022}) in the span of approximately two years (which were in the middle of my PhD).
    Additionally, during those two years different generalisations of optics were defined in parallel (\cite{milewski_compound_2022, milewski_polylens_2021}) which lead to (at the time of the writing of this thesis) a situation with many competing generalisations of optics, and their unclear hierarchy.\footnote{The names of these generalisations appear to be obtainable by prefixing the noun ``optic'' any of the adjectives such as ``indexed'', ``fibre'', ``dependent'', or ``presheaf''.}
    The main reason for this diverse jungle of solutions is that there are many different things the phrase above ``\emph{in the same way}'' could mean, and that there is a lack of agreement by the applied category theory community of what the above phrase \emph{should} mean.

    There are at least four different characterisations of dependent optics.
    We first provide a short description, and then elaborate each of them.
    \begin{enumerate}
      \item \textbf{Actions.} The category $\Lens(\cC)$ is an example of an optic for the self-action of a cartesian category (\cref{eq:lens_representation}). But the category $\DLens(\cC)$ is not, because optics do not allow the type of the backward pass to be dependent on the values of the forward one. Is there a generalisation of optics such that instantiating it with appropriate actions yields the category of dependent lenses?
      \item \textbf{Pushout.} There exist canonical embeddings of non-dependent lenses into both optics and dependent lenses.
      This suggests a question: is there a generalisation of optics arising as the pushout of these embeddings (\cref{eq:dependent_optics_pushout})?
    \begin{equation}
      \label{eq:dependent_optics_pushout}
      \begin{tikzcddiag}[ampersand replacement=\&]
          {\Lens(\cC)} \&\& {\Optic(\cC)} \\
          \\
          {\DLens(\cC)} \&\& {?}
          \arrow[from=1-1, to=3-1]
          \arrow[from=1-1, to=1-3]
          \arrow[from=1-3, to=3-3]
          \arrow[from=3-1, to=3-3]
          \arrow["\lrcorner"{anchor=center, pos=0.125, rotate=180}, draw=none, from=3-3, to=1-1]
      \end{tikzcddiag}
  \end{equation}
      \item \textbf{Non-determinism.} The requirement that morphisms in our base are deterministic (\cref{fig:deterministic_morphism}) is a crucial component in ensuring lenses form a category.
      It arises in defining lens composition, where we need comonoids, and in proving this composition is associative, where we need maps to preserve these comonoids.
      Operationally, this means that we need to be able to copy the input to the lens, and slide forward maps through it without any emergent effects.
      As elaborated in \cref{ch:backpropagation} ("Memoisation" bullet point), \cref{subsec:operational}, and \cite{gavranovic_space-time_2022}, this is one way of composing bidirectional systems, via \emph{checkpointing}.

      The framework of optics explicitly shows us something well-known in the deep learning community: this is not the only way of composing bidirectional systems.
      We can instead store intermediate results of the forward pass in memory, relinquishing us from the need to recompute them again.
      This in turn removes the requirement that maps preserve comonoids, i.e.\ that they are deterministic, allowing us to model bidirectional systems where the forward map is probabilistic (\cref{ex:mark_ker_expectation}).
      Can the same kind of reasoning be applied to the setting of dependent lenses, and what does it mean for something to be dependent on the output of a non-deterministic map?

            \item \textbf{Coproducts.} Even if the base category $\cC$ has coproducts\footnote{In a way that interacts well with existing pullbacks, making it an \emph{extensive} category.}, $\Lens(\cC)$ does not have all coproducts. It has \emph{some} coproducts, for instance those of the form $\diset{X}{Z} + \diset{X}{Z}$ which share the backward type.
      On the other hand, the category $\DLens(\cC)$ does have all coproducts.
      If $\DLens(\cC)$ arises as the coproduct completion\footnote{Not to be confused with \emph{the free coproduct completion} which does not preserve any existing coproducts.} of $\Lens(\cC)$, can we define dependent optics as the coproduct completion of $\Optic(\cC)$?
    \end{enumerate}

    Let's unpack all of them.
    The characterisation with actions is the most common one --- appearing as a proposed solution in \cite{braithwaite_fibre_2021, vertechi_dependent_2023,milewski_dependent_2021, capucci_seeing_2022}.
    For instance, the work of \cite{vertechi_dependent_2023}, given a bicategory $\cB$ and two $\cB$-indexed bicategories $L, R$ defines dependent optics as a particular kind of a coend construction.
    They show that the category of dependent lenses and the category of optics both arise as particular (but different!)  instantiations of $\cB, L, R$.
    Therefore, this satisfies the first characterisation of dependent optics as ones arising from actions.
    But it doesn't satisfy the pushout characterisation.
    This is simply because this work does not provide a definition of a \emph{single} category which both dependent lenses and optics include into, but rather a mechanism for producing them separately.
    Likewise, the resulting categories of optics and dependent lenses have shared components that do not align with their semantics: the forward pass of dependent lenses is constructed in a different manner than the forward pass of optics.%

    This brings us to the pushout characterisation, first posed in \cite{braithwaite_fibre_2021}.
    Here, $\cC$ is a category with all finite limits\footnote{i.e.\ a category with all equalisers and all binary products i.e.\ a category with all pullbacks and a terminal object.}, $\Optic(\cC)$ is the category of (non-mixed) optics defined by the cartesian monoidal structure of the products in $\cC$, and $\DLens(\cC)$ is the category of dependent lenses over $\cC$.
    The embedding of lenses into optics is given by \cref{eq:lens}, while the embedding of lenses into dependent lenses is the embedding defined in \cite[Sec. 2.3.3]{niu_polynomial_2023}.
    This characterisation ensures that dependent optics have a shared subcategory of non-dependent lenses, consistent with embeddings arising either through dependent lenses or optics.

    While this a well-defined characterisation, the simplicity of its solution suggests that a more refined approach is needed.
    Namely, the embedding from lenses into optics is an isomorphism, and pushouts along isomorphisms are isomorphisms.
    Via this characterisation, then, dependent optics are simply dependent lenses!
    This is unsatisfactory for two reasons.
    Firstly, it does not tell us how to generalise dependent optics.
    In the non-dependent setting by moving from lenses to optics we lost some of our chains: we discovered that we can forget about the cartesian structure, because we need only the monoidal one to form optics.\footnote{Of course, we've seen that we can generalise far beyond monoidal too!}
    But here it is not clear that there's any structure we can remove, and it's not clear how to generalise this solution.

    The second problem is that this definition does not inform implementation.
    Like any other definition by universal properties, this definition is a specification only up to isomorphism.
    And the particular isomorphism here is blind to the performance profile of the resulting implementation.
    But performance in the context of deep learning is of utmost importance.
    Ignoring this issue we'd be forced to go outside of our compositional framework to achieve a performant implementation, when in fact it should be the other way around: performance should come because of compositionality, not at its expense.

    This leads us to search for a pushout in a more refined ambient setting than just $\Cat$.
    As noted in \cite{gavranovic_space-time_2022}, one can study the weakening of $\Optic(\cC)$ to a bicategory $\TwoOptic(\cC)$, in which case the embedding $\Lens(\cC) \to \Optic(\cC)$ becomes a \emph{lax} functor,\footnote{With $\kappa$ still being a strict 1-functor.} which now detects the different composition rule of lenses and optics.
    2-categories and lax functors form a category, and this category is the simplest setting that has the needed level of fidelity to distinguish operational issues outlined above, and prevent the pushout from being an isomorphism.
    Alternatively, the pushout might still need to be weakened in order to exist, though it's not clear to what extent.
    This is because this lax functor is not a part of an isomorphism anymore, but rather an adjunction.
    The simplest setting where this adjunction can live is the 2-category of 2-categories, lax functors and \emph{icons} (\cite{lack_icons_2010,gavranovic_space-time_2022}).
    Another option is $\LxDbl$, the 2-category of double categories, lax double functors, and lax transformations (\cite[p. 148]{grandis_higher_2019}).
    In such cases we might want to ask for a 2-pushout, a lax pushout or a quasi-pushout.
    It's not clear which one, and in the context of optics, none of these options have been acknowledged in literature.

    A different path to dependent optics was explored through coproduct completions.
    Even if the base category $\cC$ has coproducts\footnote{In a way that interacts well with existing pullbacks, making it an \emph{extensive} category.}, $\Lens(\cC)$ does not have all coproducts.
    It has \emph{some} coproducts, for instance those of the form $\diset{X}{Z} + \diset{X}{Z}$ which share the backward type.
    On the other hand $\DLens(\cC)$ has all coproducts, suggesting that a coproduct completion\footnote{Not to be confused with \emph{the free coproduct completion} which does not preserve any existing coproducts.} of $\Optic(\cC)$ could yield dependent optics.
    This was the idea behind solutions in \cite{braithwaite_fibre_2021} and \cite{vertechi_dependent_2023} which define categories called \emph{indexed} and \emph{dependent} optics, respectively, and show that they have all coproducts.
    It is not clear to me what the relationship between these definitions is, nor their relation to the pushout definition (\cref{eq:dependent_optics_pushout}).
    As cherry on top, the situation is made more complicated by the fact that during the writing of this thesis it was discovered that while $\DLens(\Set)$ has all coproducts, the embedding from $\Lens(\Set)$ does not preserve the ones that already exist!\footnote{I learned this from Jules Hedges and Eigil Rischel.}
    For example, the coproduct of $\diset{1}{0}$ and $\diset{2}{0}$ exists in $\Lens(\Set)$, but its image under the embedding into $\DLens(\Set)$ is not a coproduct of the embeddings of $\diset{1}{0}$ and $\diset{2}{0}$.%

    Lastly, it is not abundantly clear whether these four characterisations are even seeing the whole picture.
    Optics for coproducts (\cref{ex:optics_for_coproducts}) form prisms, and comparatively little research was devoted to the question of existence of dependent prisms, actions they might be formed by, the pushout of the analogous diagram, or their operational interpretation.
    In \cite{vertechi_dependent_2023} dependent prisms are defined, albeit for \emph{one} of the two different definitions of prisms in the literature (see \cref{subsec:prisms} for disambiguation): the one that does not have the needed compositional properties.
    Do dependent prisms for the other definition exist, do they satisfy the pushout diagram analogous to \cref{eq:dependent_optics_pushout}, and what can be said about their relation to dependent lenses?
    It is not clear.

    This concludes the problem statement of dependent optics --- it's rather lengthy!
    Hopefully this will help us understand the direction we need to move in order to solve it better, and we believe the definition of optics using weighted $\Copara$ provides a good avenue for doing so.\footnote{Our work-in-progress Idris 2 code repository of dependent optics is at \url{https://github.com/bgavran/DependentOpticsIdris2}.}

	\chapter{Putting the pieces together}
\label{ch:para_optic}

\epigraph{Computer science is no more about computers than astronomy is about telescopes.}{Edsger W. Dijkstra}

\newthought{Having described} parameterisation and bidirectionality separately, in this short chapter we will take advantage of the compositional nature of category theory and put these two concepts together.
This will yield \emph{parametric weighted optics}.

\begin{figure}[H]
	\scaletikzfig[0.75]{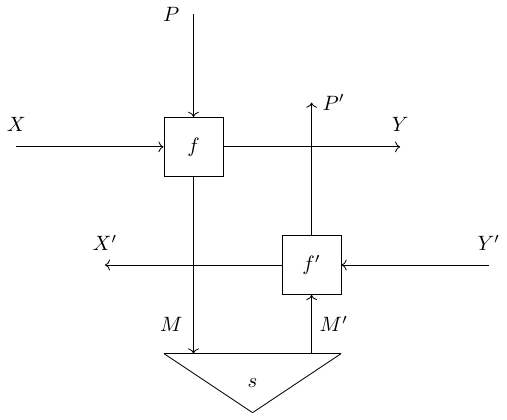}
  \caption{A parametric weighted optic, with its ``three-legged'' shape.
    This is a versatile construction can be used to model both neural networks in deep learning and players in game theory.
  }
  \label{fig:para_woptic}
\end{figure}

The diagram to have in mind is the one above (\cref{fig:para_woptic}).
Here we see the three-legged shape of parametric weighted optics\footnote{We'll often refer to them as just ``parametric optics''.}, where each leg has both inputs flowing in and outputs flowing out.
Horizontal direction (one containing $\diset{X}{X'}$ and $\diset{Y}{Y'}$) is the one where information flows that is in many ways \emph{publicly available} --- other layers of neural networks or players in game theory can plug in this interface.
On the other hand, vertical direction (one containing $\diset{X}{X'}$ and $\diset{Y}{Y'}$) where information flows that is \emph{private}.
These are the parameters of a neural networks or strategies of players in game theory (on the top), or, intermediate results used for backprop or intermediate results used to calculate payoffs in game theory (on the bottom).

We think of $X$ as the input to the neural network, for instance an encoding of an image to be classified (i.e.\ $\R^{w \times h \times 3}$ where $w$ is the width of the image, $h$ is its height, and $3$ denotes that the image is encoded in RGB).
The vertical input $P$ here denotes the space of possible weights for the network, and $Y$ denotes the output of this neural network, which could be yet another hidden layer to which other networks will connect, or something that is ready to be consumed by the loss function (for instance, an element of $\R$ denoting the classification of this image as a cat vs. dog).
The part of this diagram that has primes $'$ in it denotes everything that is related to gradients and derivatives.
We think of $Y'$ as feedback the network received as a response to its produced output.
Say, if the network produced an output $y : Y$, then elements of $Y'$ denote the gradient of the loss with respect to that output $y$.
In conjunction with the information from the forward pass, this is turned into a gradient with respect to a) the parameter $p : P$ and b) the original input $x : X$.

\begin{figure}[H]
	\scaletikzfig[0.65]{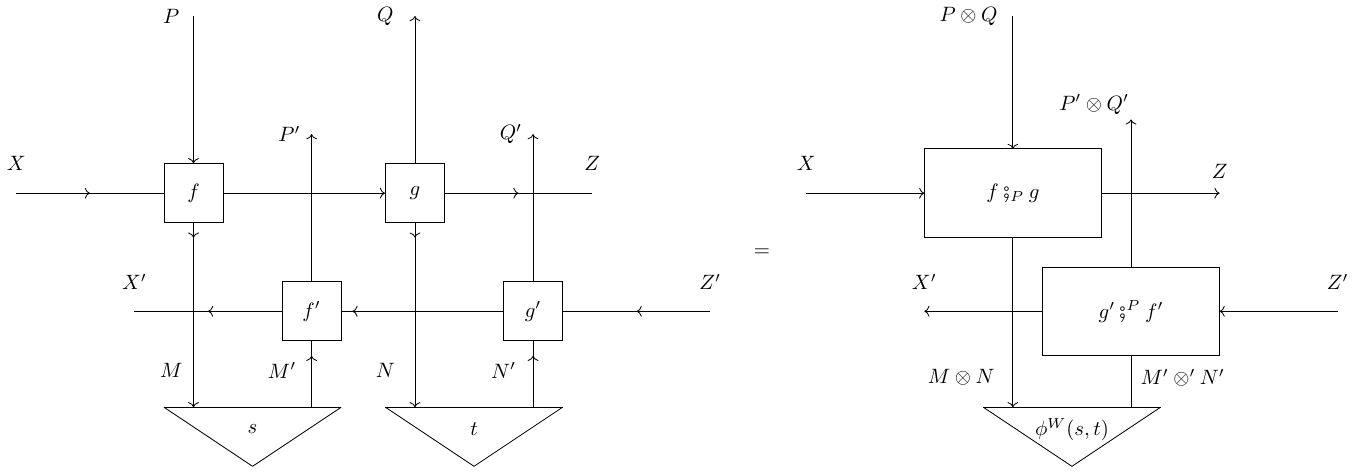}
  \caption{Composition of parametric optics}
  \label{fig:para_optic_composition}
\end{figure}

Composition is again natural in this formulation.
Given another box with input-output wires $\diset{Y}{Y'}$ on the right and another one with input-output wires $\diset{Y}{Y'}$ on the left, we can plug them together as in \cref{fig:para_optic_composition}.
The algebra of composition of $\Para$ and $\Optic$ automatically takes care of backpropagating the relevant information to correct ports, both on parameter level (vertical) and on the input level (horizontal).

Reparameterisation in this setting also takes on a useful form.
Depicted in \cref{fig:para_optic_reparameterisation}, it allows us to model neural network optimisers, described thoroughly in \cref{sec:optimisers}.
They modify the parameters before passing it on to actual layers of the forward pass of a neural network (a central feature of Nesterov momentum (\ref{ex:nesterov_momentum})), but also perform the gradient update step on the backward one.

\begin{figure}[H]
	\scaletikzfig[0.8]{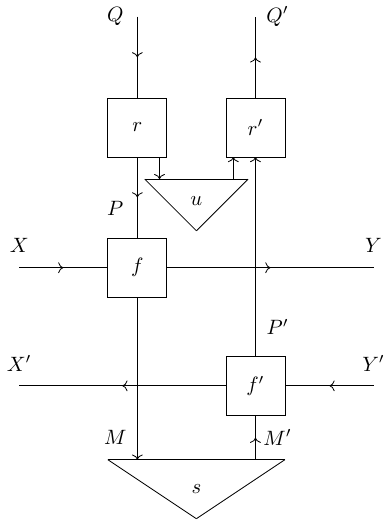}
  \caption{Reparameterisation of parametric optics models optimisers of neural networks.}
  \label{fig:para_optic_reparameterisation}
\end{figure}

Lastly, we also see that states, costates, and scalars of parametric optics all provide useful semantics for supervised learning (\cref{box:scs_para_optic}).
We continue to put all of these informal stories on a concrete mathematical footing.

\begin{contributions}
  The entire chapter is novel contribution.
  This includes formulation of parametric weighted optics, morphisms of actegories, and the base change of $\Para$.
  Fragments of this work appeared in (\cite{cruttwell_categorical_2022, capucci_towards_2022, capucci_actegories_2023}).
\end{contributions}

\begin{epistemicstatus}
  The epistemic status of this chapter mirrors that of the two previous ones, as it mainly involves putting these chapters together.
  I believe the general simplification of two previous ones compounds here, yielding a definition of parametric optics that might have an even more pragmatic compartmentalisation of all the moving pieces.
\end{epistemicstatus}

\section{Parametric optics and how to construct them}
\label{sec:parametric_optics}

Recall the data we need to form weighted optics (\cref{def:weighted_optic}).
It includes two monoidal categories $(\cM, \otimes, I)$ and $(\cM', \otimes', I')$, the weight $W : \cM^{\op} \times \cM' \to \Set$, and actions of $\cM$ and $\cM'$.
To form parametric weighted optics, we first need to define an action of some monoidal category on $\Optic^W_{\diset{\actfw}{\actbw}}$.
Which monoidal category is it?

This is the category of weighted optics $\Optic^W_{\diset{\otimes}{\otimes'}}$ arising from self actions of $(\cM, \otimes, I)$ and $(\cM', \otimes', I')$.
If $\cM$ and $\cM'$ are additionally braided, then the category $\Optic^W_{\diset{\otimes}{\otimes'}}$ is monoidal, and braided (following \cref{ex:self_actions_monoidal,prop:optics_braid_symm}), meaning we can act with it on $\Optic^W_{\diset{\actfw}{\actbw}}$.

\begin{proposition}[{\cite[Prop. 10]{capucci_towards_2022}}]
  \label{prop:optics_acted_on}
  Fix the data required to form weighted optics:
  \begin{center}
    A $\cM$-actegory $(\cC, \actfw)$\\
    A $\cM'$-actegory $(\cD, \actbw)$\\
    A lax monoidal functor $(W, \phi, \epsilon) : \cM^{\op} \times \cM' \to \Set$
  \end{center}
  where here the monoidal categories $(\cM, \otimes, I)$ and $(\cM', \otimes', I')$ are additionally braided.
  Then we can form an action of $\Optic^W_{\diset{\otimes}{\otimes'}}$ on $\Optic^W_{\diset{\actfw}{\actbw}}$ defined on objects as:
  \[
    \diset{M}{M'} \actlast \diset{X}{X'} \coloneqq \diset{M \actfw X}{M' \actbw X'}
  \]
\end{proposition}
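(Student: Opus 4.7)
The plan is to unpack the definition of an actegory (\cref{def:actegory}) and construct each piece of the data required to make $\actlast$ into an action. The object-level action is given; what remains is to extend $\actlast$ to morphisms, to exhibit the unitor $\eta^\actlast$ and multiplicator $\mu^\actlast$, and to verify the coherence laws.

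First I would define $\actlast$ on morphisms. Given optics
\[
\alpha = (\diset{N}{N'}, t, \diset{g : P \to N \otimes Q}{g' : N' \otimes' Q' \to P'}) \text{ in } \Optic^W_{\diset{\otimes}{\otimes'}}
\]
and
\[
\beta = (\diset{M}{M'}, s, \diset{f : X \to M \actfw Y}{f' : M' \actbw Y' \to X'}) \text{ in } \Optic^W_{\diset{\actfw}{\actbw}},
\]
I would declare $\alpha \actlast \beta$ to be the optic with residual $\diset{N \otimes M}{N' \otimes' M'}$, connecting tissue $\phi^W_{(N,M),(N',M')}(t, s) \in W(N \otimes M, N' \otimes' M')$ obtained from the laxator of $W$, and forward implementation
\[
P \actfw X \xrightarrow{g \actfw f} (N \otimes Q) \actfw (M \actfw Y) \xrightarrow{\cong} (N \otimes M) \actfw (Q \actfw Y),
\]
where the final isomorphism is built from the multiplicator $\mu$ of the $\cM$-actegory $(\cC, \actfw)$ together with the braiding of $\cM$ used to swap $Q$ and $M$. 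The backward implementation is dual, using $\mu^{-1}$ for $(\cD, \actbw)$ and the braiding of $\cM'$. Well-definedness on equivalence classes of optics follows from naturality of $\mu$ and the braiding together with the optic quotient relation (\cref{eq:copara_optic_diagram}); functoriality with respect to optic composition reduces to an interchange-like computation between the multiplicators and braidings.

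Next I would construct $\eta^\actlast$ and $\mu^\actlast$. The unitor component $\diset{X}{X'} \to \diset{I \actfw X}{I' \actbw X'}$ is the optic with trivial residual $\diset{I}{I'}$, connecting tissue the image of $\bullet : 1$ under the opunitor $\epsilon^W$ of $W$, and implementations given by $\eta^\actfw_X$ and $(\eta^\actbw_{X'})^{-1}$. The multiplicator component
\[
(\diset{M}{M'} \otimes^{\Optic} \diset{N}{N'}) \actlast \diset{X}{X'} \xrightarrow{\cong} \diset{M}{M'} \actlast (\diset{N}{N'} \actlast \diset{X}{X'})
\]
similarly promotes $\mu^\actfw$ and $(\mu^\actbw)^{-1}$ to optics, using that the monoidal product on objects of $\Optic^W_{\diset{\otimes}{\otimes'}}$ (from \cref{ex:self_actions_monoidal}) is componentwise $\otimes$ and $\otimes'$. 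Naturality of both is inherited from naturality of the corresponding data in $(\cC, \actfw)$ and $(\cD, \actbw)$, since the constructions above are uniform in $X, X'$.

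The main obstacle will be verifying the coherence laws of \cref{def:act_coherence} for $(\eta^\actlast, \mu^\actlast)$. These unfold to pentagon and triangle identities whose forward components reduce, after absorbing braidings, to the pentagon/triangle for $(\cC, \actfw)$ plus a hexagon for the braiding of $\cM$ (used to permute three tensor factors); the backward components reduce dually. The appearance of the hexagon is precisely what forces the braiding hypothesis on $\cM$ and $\cM'$ in the statement. A more conceptual route that I would sketch as an alternative is to observe that $\actlast$ is obtained by lifting the two defining pseudofunctors $\cM \to [\cC, \cC]$ and $\cM' \to [\cD, \cD]$ of the underlying actegories through the weighted coend defining optic hom-sets, packaging the verification via \cref{prop:para_bicategorical_grothendieck}; this organises the coherence check so that each axiom follows directly from the actegory axioms for $(\cC, \actfw)$ and $(\cD, \actbw)$ together with the lax monoidality of $W$, at the cost of setting up more abstract machinery.
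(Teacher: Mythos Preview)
The paper does not include a proof of this proposition: it states the result with a citation to \cite{capucci_towards_2022} and moves directly on to the definition of parametric weighted optics. The only contextual hint is the preceding paragraph, which records the prerequisite that $\Optic^W_{\diset{\otimes}{\otimes'}}$ is braided monoidal (via \cref{ex:self_actions_monoidal} and \cref{prop:optics_braid_symm}) so that it makes sense as an acting monoidal category. Your explicit construction --- defining $\actlast$ on morphisms using the multiplicators and braidings, lifting $\eta^{\actfw}, \mu^{\actfw}$ and their $\actbw$-duals to optics for the unitor and multiplicator, and reducing the actegory coherence laws to those of $(\cC, \actfw)$, $(\cD, \actbw)$ plus the hexagon for the braidings of $\cM, \cM'$ --- is correct and is the natural direct verification one would expect the cited reference to contain. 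Your diagnosis that the braiding hypothesis enters exactly at the point of swapping $Q$ past $M$ in the residual is the right one.

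One small caution: your proposed ``conceptual alternative'' via \cref{prop:para_bicategorical_grothendieck} does not quite land. That proposition characterises $\Para$ as a bicategorical Grothendieck construction, not $\Optic$, and it is not clear how it would package the present coherence check. If you want a more abstract route, the relevant observation is that $\actlast$ is induced by the product of the two underlying actions $\cM \times \cC \to \cC$ and $\cM' \times \cD \to \cD$ passed through the $\CoparaW{W}{}$ construction, but making that precise requires a functoriality statement for weighted optics in the actegory data that the paper does not develop. The hands-on verification you outline is both correct and, in this context, the appropriate level of detail.
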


This allows us to form \emph{parametric weighted optics} by taking its $\Para$ construction.

\begin{definition}[{Parametric weighted optics (compare \cite[Sec. 4]{capucci_towards_2022})}]
  Fix the data required to form weighted optics:
  \begin{center}
    A $\cM$-actegory $(\cC, \actfw)$\\
    A $\cM'$-actegory $(\cD, \actbw)$\\
    A lax monoidal functor $(W, \phi, \epsilon) : \cM^{\op} \times \cM' \to \Set$
  \end{center}
  where here the monoidal categories $(\cM, \otimes, I)$ and $(\cM', \otimes', I')$ are additionally braided.
  Then we call $\Po$ the bicategory of parametric weighted optics.
\end{definition}

The definition of parametric optics packs a lot of punch, since it is composed out of already complex definitions of  $\Para$ (\cref{def:para}) and $\Optic^W_{\diset{\actfw}{\actbw}}$ (\cref{def:weighted_optic}).
In its full generality a parametric optic $\diset{A}{A'} \to \diset{B}{B'}$ (depicted in \cref{fig:para_woptic}) consists of a choice of parameters $\diset{P}{P'}$ where $P : \cM$ and $P' : \cM'$; and a weighted optic $\diset{A \actfw P}{A' \actbw P'} \to \diset{B}{B'}$ which itself consists of a choice of residuals $\diset{M}{M'}$, a map $s : W(M, M')$, and a pair of maps$\diset{f}{f'} : \diset{A \actfw P}{A' \actbw P'} \to \diset{M \actfw B}{M' \actbw B'}$.
Despite this, in many cases, especially those that pertain to neural networks, parametric optics will take on a much simpler form.
We proceed to describe their states, costates and scalars, and then a general recipe for constructing parametric optics from only their forward passes.

  \begin{mybox}[label=box:scs_para_optic]{Cerulean}{States, costates, and scalars in $\Para[false](\Optic[false](\cC))$}
    Here we focus on parametric optics for the self-action of a monoidal category $\cC$.

    \begin{minipage}[t]{0.29\textwidth}
      \begin{tightcenter}
        States
      \end{tightcenter}
      \scaletikzfig[0.6]{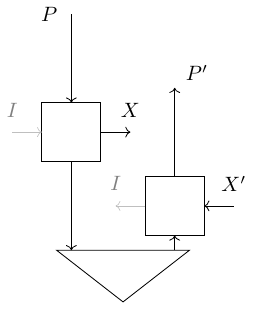}
      \begin{align*}
        &\Pon\OpticHom{I}{I}{X}{X'}\\
        \cong & \sum_{\diset{P}{P'} : \Optic}\Optic\OpticHom{P}{P'}{X}{X'}
      \end{align*}
    \end{minipage}
    \hfill
    \begin{minipage}[t]{0.28\textwidth}
      \begin{tightcenter}
        Scalars
      \end{tightcenter}
      \scaletikzfig[0.6]{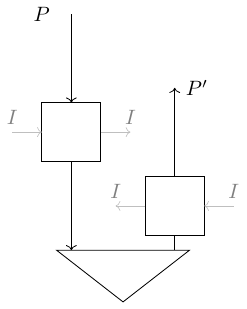}
      \begin{align*}
        &\Pon\OpticHom{I}{I}{I}{I}\\
        \cong & \sum_{\diset{P}{P'} : \Optic}\Optic\OpticHom{P}{P'}{I}{I}\\
        \cong & \sum_{\diset{P}{P'} : \Optic}\cC(P, P')
      \end{align*}
    \end{minipage}
    \hfill
    \begin{minipage}[t]{0.28\textwidth}
      \begin{tightcenter}
        Costates
      \end{tightcenter}
      \scaletikzfig[0.6]{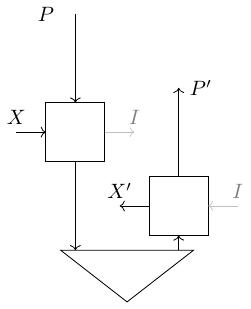}
      \begin{align*}
        & \Pon\OpticHom{X}{X'}{I}{I}\\
        \cong & \sum_{\diset{P}{P'} : \Optic}\cC(P \otimes X, P' \otimes X')
      \end{align*}
    \end{minipage}

    We see that states correspond simply to optics $\diset{P}{P'} \to \diset{X}{X'}$ for some $\diset{P}{P'}$.
    Scalars correspond to maps $P \to P'$ in $\cC$, and costates to maps $P \otimes X \to P' \otimes X'$ in $\cC$.
    Scalars will especially be important in \cref{sec:supervised_learning} where they will model a step of supervised learning. 
  \end{mybox}

\subsection{Constructing parametric optics}

In most cases we are intersted in constructing parametric weighted optics from the data of only its forward pass --- which is parametric as well.
This is often done by first constructing \emph{weighted optics} from their forward pass, and then showing the construction coherently translates to the parametric setting.

\subsection{Para is functorial in the base}

In \cref{ch:para} we have shown how to construct the bicategory $\Para_{\act}(\cC)$ given a base $\cM$-actegory $(\cC, \act)$. 
We now study what happens if we have a morphism of actegories, and show that $\Para$ then induces a morphism of the corresponding bicategories.
We start by describing a special kind of a morphism of actegories: one which does not change the acting category.

\begin{definition}[{$\cM$-linear morphism, compare \cite[Def.\ 3.3.2]{capucci_actegories_2023}}]
  \label{def:m-linear_morphism}
  Let $(\cM, \otimes, I)$ be a monoidal category, and $(\cC, \act)$ and $(\cD, \actt)$ be $\cM$-actegories.
  Then a \newdef{$\cM$-linear functor} $(\cC, \act) \to (\cD, \actt)$ is a functor $R^{\sharp} : \cC \to \cD$ together with a natural isomorphism
  \begin{equation}
    \label{eq:m-linear_morphism}
    \begin{tikzcddiag}
    	{\cC \times \cM} && {\cD \times \cM} \\
    	\\
    	\cC && \cD
    	\arrow["\act"', from=1-1, to=3-1]
    	\arrow["\actt", from=1-3, to=3-3]
    	\arrow["{R^{\sharp}}"', from=3-1, to=3-3]
    	\arrow["{R^\sharp \times \cM}", from=1-1, to=1-3]
    	\arrow["\ell"', shorten <=14pt, shorten >=14pt, Rightarrow, from=1-3, to=3-1]
    \end{tikzcddiag}
  \end{equation}
  whose component at each $C : \cC$ and $M : \cM$ is explicitly the map $\ell_{C, M} : R^\sharp(C) \actt M \to R^\sharp(C \act M)$ making the following diagrams commute for all $C : \cC$ and $M, N : \cM$:
  \begin{equation*}
\begin{tikzcddiag}[ampersand replacement=\&]
	{R^{\sharp}(C) \actt I} \&\&\& {(R^{\sharp}(C) \actt M) \actt N} \& {R^{\sharp}(C \act N) \actt N} \& {R^{\sharp}((C \act M) \act N)} \\
	\\
	{R^{\sharp}(C \act I)} \&\& {R^{\sharp}(C)} \& {R^{\sharp}(C) \actt (M \otimes N)} \&\& {R^{\sharp}(C \act (M \otimes N))}
	\arrow["{\ell_{C, M} \actt N}", from=1-4, to=1-5]
	\arrow["{\ell_{C \act M, N}}", from=1-5, to=1-6]
	\arrow["{{{\mu^\actt}^{-1}}_{R^{\sharp}(C), M, N}}"', from=1-4, to=3-4]
	\arrow["{\ell_{C, M \otimes N}}"', from=3-4, to=3-6]
	\arrow["{R^{\sharp}({\mu^{\act}}^{-1}_{C, M, N})}", from=1-6, to=3-6]
	\arrow["{\eta^{-1}_{R^{\sharp}(C)}}", from=1-1, to=3-3]
	\arrow["{R^{\sharp}(\eta^{-1}_C)}"', from=3-1, to=3-3]
	\arrow["{\ell_{C, I}}"', from=1-1, to=3-1]
\end{tikzcddiag}
  \end{equation*}

  Without any other qualifications, we also refer to $(R^{\sharp}, \ell)$ as a \newdef{lax} $\cM$-linear functor.
  If $\ell$ is invertible, we call it a \newdef{strong}, and when it is identity a \newdef{strict} $\cM$-linear functor.
\end{definition}

This definition is a generalisation of the notion of a tensorial strengh (\cref{def:strong_endofunctor}).

\begin{example}[{compare \cite[Ex. 3.3.5]{capucci_actegories_2023}}]
  \label{ex:strong_endofunctor_morphism_of_actegories}
  Every strong endofunctor (\cref{def:strong_endofunctor}) is a morphism of canonical right self-actions. 
\end{example}

As we will see, a general morphism of actegories will also capture strong monoidal functors.

\begin{definition}[{Morphism of actegories, compare \cite[Prop. 3.6.5]{capucci_actegories_2023}}]
  \label{def:morphism_of_actegories}
  Let $(\cC, \act)$ be a $\cM$-actegory, and $(\cD, \actt)$ be a $\cN$-actegory.
  A morphism of actegories $(\cC, \cM, \act) \to (\cD, \cN, \actt)$ is a pair of a strong monoidal functor $R : \cM \to \cN$ and a $\cM$-linear functor $(R^{\sharp}, \ell) : (\cC, \act, \cM) \to (\cD, \cM, (\cD \times R) \comp \actt)$.
  It can be depicted by the data of the following 2-cell:
    \begin{equation}
      \label{eq:actegory_morphism}
      \begin{tikzcddiag}[ampersand replacement=\&]
        {\cC \times \cM} \&\& {\cD \times \cN} \\
        \\
        \cC \&\& \cD
        \arrow["\act"', from=1-1, to=3-1]
        \arrow["\actt", from=1-3, to=3-3]
        \arrow["{R^\sharp}"', from=3-1, to=3-3]
        \arrow["{R^\sharp \times R}", from=1-1, to=1-3]
        \arrow["\ell"', shorten <=14pt, shorten >=14pt, Rightarrow, from=1-3, to=3-1]
      \end{tikzcddiag}
    \end{equation}
    which on components defines a map $\ell_{C, M} : R^\sharp(C) \actc R(M) \to R^\sharp(C \act M)$ satisfying conditions of a lax $\cM$-linear functor (\cref{def:m-linear_morphism}).
    The naming conventions of laxness, strength and strictness from \cref{def:m-linear_morphism} apply here too.

    \end{definition}

\begin{example}
Every strong monoidal functor (\cref{def:lax_monoidal_functor}) is a morphism of canonical right self-actions. 
\end{example}

In the next chapter we will see an example of such a functor -- the functor $\cC \to \LensA(\cC)$ that performs differentiation on a cartesian left-additive category.
Interestingly, a lax monoidal functor is \emph{not} an example of a morphism of actegories.\footnote{Though, it is an example of a morphism of \emph{lax} actegories whose detailed unpacking we leave for future work.}
More examples of morphisms of actegories can be found in \cite[Sec. 3.3]{capucci_actegories_2023}.

With the definition of morphism of actegories in hand, we are now ready to see how $\Para$ acts on this, yielding a pseudofunctor between the corresponding $\Para$ bicategories.

\begin{figure}[h]
  \scaletikzfig[0.75]{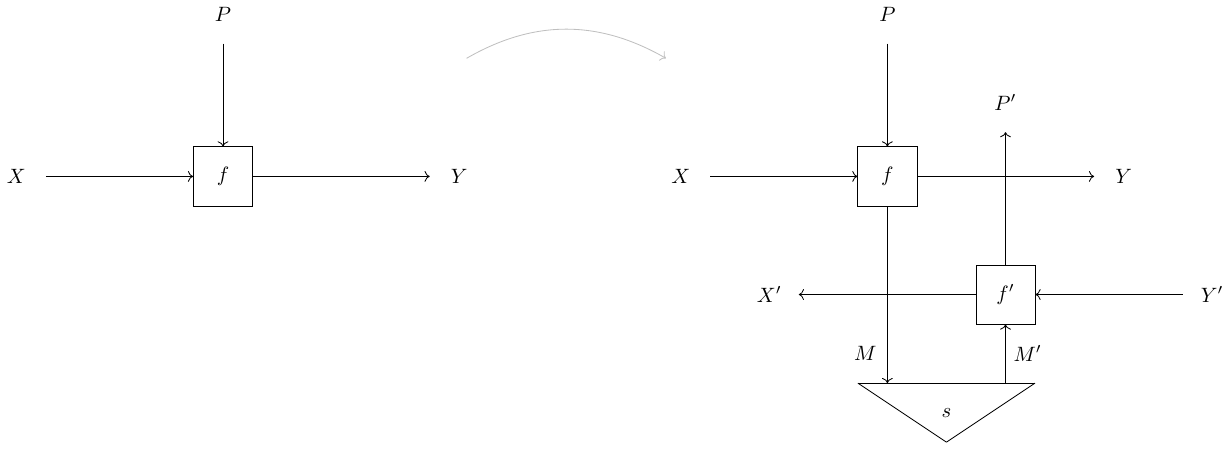}
  \caption{From a parametric map to a parametric weighted optic}
  \label{fig:para_to_parawoptic}
\end{figure}

\begin{restatable}[A morphism of actegories induces a morphism of parametric bicategories]{proposition}{ParaFunctorial}
  \label{prop:para_morphism}
  Let $(\cC, \cM, \act)$ and $(\cD, \cN, \actt)$ be two actegories.
  Let $(R, R^\sharp, \ell)$ be a morphism of actegories $(\cC, \cM, \act) \to  (\cD, \cN,
  \actt)$.

  Then $\Para$ respects that structure, inducing a pseudofunctor
  \[
    \Para((R, R^\sharp, \ell)) : \Para_{\act}(\cC) \to \Para_{\actt}(\cD)
  \]
  with the following data.
  \begin{itemize}
  \item On \textbf{objects} it borrows the action of $R^\sharp$;
  \item On \textbf{morphisms} at acts as below, and graphically depicted in \cref{fig:para_to_parawoptic}:
\[\begin{tikzcddiag}[ampersand replacement=\&]
	{\Para_{\act}(\cC)} \&\& {\Para_{\actc}(\cD)} \\
	A \&\& {R^\sharp(A)} \\
	\\
	B \&\& {R^\sharp(B)}
	\arrow[from=1-1, to=1-3]
	\arrow[""{name=0, anchor=center, inner sep=0}, "{(P, f)}"', from=2-1, to=4-1]
	\arrow[""{name=1, anchor=center, inner sep=0}, "{(R(P), R_{\ell}^\sharp(f))}", from=2-3, to=4-3]
	\arrow[maps to, from=2-1, to=2-3]
	\arrow[maps to, from=4-1, to=4-3]
	\arrow[shorten <=20pt, shorten >=20pt, maps to, from=0, to=1]
\end{tikzcddiag}\]
   where $R_{\ell}^\sharp(f)$ is shorthand for the composite
  \[
    R^\sharp(A) \actc R(P) \xrightarrow{\ell_{A, P}} R^\sharp(A \act P) \xrightarrow{R^\sharp(f)} R^\sharp(B)
  \]
\item On \textbf{2-morphisms} it borrows the action of $R$.
\end{itemize}
\end{restatable}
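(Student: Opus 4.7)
The plan is to proceed in three stages: first verify well-definedness of the proposed assignments, then construct the two families of invertible structure 2-cells (unitor and laxator) that make the assignment into a pseudofunctor, and finally check the pseudofunctor coherence axioms.

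For well-definedness on 1-cells, I would observe that the composite $R^\sharp_\ell(f) = R^\sharp(A) \actt R(P) \xrightarrow{\ell_{A,P}} R^\sharp(A \act P) \xrightarrow{R^\sharp(f)} R^\sharp(B)$ is a morphism in $\cD$ with domain $R^\sharp(A) \actt R(P)$, so $(R(P), R^\sharp_\ell(f))$ is a valid parametric morphism in $\Para_{\actt}(\cD)$. On 2-cells, given a reparameterisation $r : P' \to P$ with $f \circ (A \act r) = f'$, I would apply $R^\sharp$ and precompose with $\ell$, then use naturality of $\ell$ in the $\cM$-component of \cref{eq:actegory_morphism} to rearrange $R^\sharp(A \act r) \circ \ell_{A, P'} = \ell_{A, P} \circ (R^\sharp(A) \actt R(r))$, yielding the required identity $R^\sharp_\ell(f) \circ (R^\sharp(A) \actt R(r)) = R^\sharp_\ell(f')$.

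Next, I would construct the structure 2-cells. The unit isomorphism of the strong monoidal functor $R$ provides an iso $R(I) \cong I'$ in $\cN$; I claim this is a valid reparameterisation from the identity at $R^\sharp(A)$ in $\Para_{\actt}(\cD)$ to $\Para((R,R^\sharp,\ell))(I, \eta_A^{-1})$. Verifying the triangle \cref{eq:para_triangle_reparam} commutes uses the unit axiom of $\ell$ (the left square in \cref{def:m-linear_morphism}) together with the unit coherence of $R$. The laxator component at $(P,f), (Q,g)$ is the monoidal multiplicator iso $R(P) \otimes' R(Q) \cong R(P \otimes Q)$; verifying this is a reparameterisation between the composite of images and the image of the composite reduces to a diagram combining the multiplicator axiom of $\ell$ (the right square in \cref{def:m-linear_morphism}) with the coherence of $R$ as strong monoidal. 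Naturality of both families in 2-cells follows from applying $R$ to the relevant commuting diagrams.

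Finally, I would check the two coherence axioms of a pseudofunctor (the left/right unit axioms and the associativity pentagon for the laxator). Each reduces to a commuting diagram in $\cN$ in which the monoidal coherence of $R$ and the actegorical coherence of $\ell$ enter simultaneously; I would decompose each large diagram into naturality squares for $\ell$ interleaved with the monoidal coherence squares for $R$, and then observe that their composite collapses to an identity. The main obstacle I anticipate is notational and bookkeeping rather than conceptual: keeping the interaction between $R$'s monoidal coherence in $\cN$ and $\ell$'s actegory coherence spanning $\cC \to \cD$ organised, and in particular ensuring the instances of the multiplicator $\mu^\actt$ of $(\cD, \actt)$ introduced during parametric composition in the codomain match the transported instances of $\mu^\act$ of $(\cC, \act)$ under $R^\sharp$. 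Given that these axioms on $\ell$ are specifically engineered to track exactly this interaction, no extra structure beyond what is assumed should be required.
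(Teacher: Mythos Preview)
Your proposal is correct and follows essentially the same approach as the paper's proof: both identify the pseudofunctor's laxator and unitor with the strong monoidal structure isomorphisms of $R$, and both rely on the $\cM$-linear axioms of $\ell$ (\cref{def:m-linear_morphism}) to verify the reparameterisation conditions. Your plan is in fact more thorough than the paper's version, which explicitly exhibits the two parameter objects $R(P)\otimes' R(Q)$ and $R(P\otimes Q)$ and the connecting iso $\mu^{-1}_{P,Q}$, then declares the remaining coherence checks ``routine'' without unpacking the well-definedness on 2-cells or the pentagon/unit axioms as you propose to do.
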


\begin{proof}
  \Cref{app:para}.
\end{proof}

\begin{proposition}[When is $\Para((R, R^\sharp, \ell))$ a 2-functor?]
  \label{ex:paraf_2functor}
  If the morphism of actegories $(R, R^\sharp, \ell)$ is a \emph{strict} morphism, then so is the induced pseudofunctor $\Para((R, R^\sharp, \ell))$, i.e.\ it becomes a 2-functor.
\end{proposition}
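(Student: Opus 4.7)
The plan is to unpack the pseudofunctor $\Para((R, R^\sharp, \ell))$ constructed in \cref{prop:para_morphism} and verify that, under strictness, its unitor and compositor become identities on the nose rather than mere isomorphisms. Recall that in a pseudofunctor these data witness that $F(\id_A) \cong \id_{F(A)}$ and $F(g) \circ F(f) \cong F(f \comp g)$; becoming a 2-functor amounts to these being equalities. I interpret ``strict morphism of actegories'' to mean that $R$ is strict monoidal (so $R(I) = I'$ and $R(P \otimes Q) = R(P) \otimes' R(Q)$) and that $\ell$ is the identity (so $R^\sharp(C \act M) = R^\sharp(C) \actc R(M)$ strictly), the combined strictness conditions of \cref{def:morphism_of_actegories}.

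First I would handle identities. By construction, $\Para((R, R^\sharp, \ell))$ sends $\id_A = (I, \eta_A^{-1})$ to $(R(I), R^\sharp_\ell(\eta_A^{-1}))$, where $R^\sharp_\ell(\eta_A^{-1})$ is the composite $R^\sharp(A) \actc R(I) \xrightarrow{\ell_{A,I}} R^\sharp(A \act I) \xrightarrow{R^\sharp(\eta_A^{-1})} R^\sharp(A)$. With $R$ strict monoidal the parameter $R(I)$ equals $I'$ on the nose, and with $\ell$ strict the first leg disappears, leaving $R^\sharp(\eta_A^{-1})$. The unit coherence square of \cref{def:m-linear_morphism}, once $\ell$ is taken to be the identity, collapses to the equality $R^\sharp(\eta_A^{-1}) = \eta^{\actc \, -1}_{R^\sharp(A)}$. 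Hence the image coincides with the identity $(I', \eta^{\actc \, -1}_{R^\sharp(A)})$ in $\Para_{\actc}(\cD)$, so the unitor is an identity.

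Next I would tackle composition. Given $(P, f): A \to B$ and $(Q, g): B \to C$, their $\Para$-composite $f \compPara{\comp} g$ unfolds as $A \act (P \otimes Q) \xrightarrow{\mu_{A,P,Q}} (A \act P) \act Q \xrightarrow{f \act Q} B \act Q \xrightarrow{g} C$. Applying $R^\sharp_\ell$ and expanding yields a composite in $\cD$ involving $\ell_{A, P \otimes Q}$, $R^\sharp(\mu_{A,P,Q})$, $R^\sharp(f \act Q)$, and $R^\sharp(g)$. On the other side, the composite of images $R^\sharp_\ell(f) \compPara{\comp'} R^\sharp_\ell(g)$ has parameter $R(P) \otimes' R(Q)$ and an implementation built from $\mu^{\actc}_{R^\sharp(A), R(P), R(Q)}$, $\ell_{A,P} \actc R(Q)$, $\ell_{B, Q}$, $R^\sharp(f) \actc R(Q)$ and $R^\sharp(g)$. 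Strict monoidality of $R$ equates the two parameter objects; strictness of $\ell$ collapses every occurrence of $\ell$ to an identity; and the multiplication coherence hexagon of \cref{def:m-linear_morphism} degenerates (with $\ell = \id$) to $R^\sharp(\mu_{A,P,Q}) = \mu^{\actc}_{R^\sharp(A), R(P), R(Q)}$ together with $R^\sharp(f \act Q) = R^\sharp(f) \actc R(Q)$. Combining these with functoriality of $R^\sharp$ shows both composites are literally equal, so the compositor is also an identity.

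The main obstacle is purely bookkeeping: tracking how the strong monoidal structure of $R$ and the natural isomorphism $\ell$ enter the various composites defining the pseudofunctor's coherence data, and then verifying that once both are taken to be identities the coherence diagrams of \cref{def:m-linear_morphism,def:morphism_of_actegories} degenerate to exactly the equalities needed. Since these collapsed diagrams are nothing more than the strict versions of the unit and multiplication axioms of an $\cM$-linear functor, no new content is required beyond careful inspection. On 2-cells $\Para((R, R^\sharp, \ell))$ already acts by $R$, and strict monoidality of $R$ preserves both horizontal and vertical composition strictly, so nothing further needs checking there.
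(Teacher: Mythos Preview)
Your proof is correct and aligns with the paper's (implicit) approach. The paper does not give a standalone proof of this proposition; it relies on the appendix proof of \cref{prop:para_morphism}, where the compositor and unitor of the pseudofunctor are identified as (reparameterisations given by) the oplaxator $\mu^{-1}_{P,Q}$ and opunitor of $R$, so strictness of the morphism makes these reparameterisations into identities. Your argument spells out the same mechanism in greater detail, additionally verifying that the underlying implementation maps agree on the nose once $\ell$ is taken to be the identity and the coherence diagrams of \cref{def:m-linear_morphism} collapse.
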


This induced functor will be a central component of neural networks -- as elaborated in \cite[Sec. 3.1]{cruttwell_categorical_2022}, and the second part of this thesis.
Its strict 2-functoriality will tell us that starting with two composable neural networks, we get the same result if we a) augment each neural network separately with its backward pass and then compose the result, or b) compose the networks and then augment the result with its backwards pass (\cref{fig:model_loss_commuting}).

\begin{figure}[H]
  \scaletikzfig[0.8]{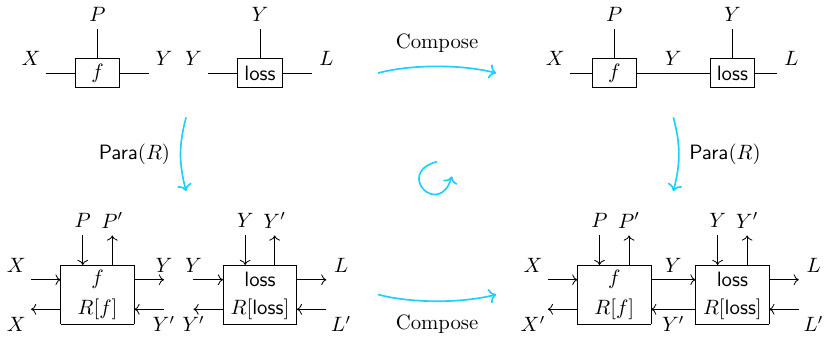}
  \caption{``Parametric chain rule'': starting with a neural network and a loss function, we can either differentiate them individually and compose the results, or first compose them and differentiate the result. These yield the same result, enabling us to break down very large composites into manageable pieces.}
  \label{fig:model_loss_commuting}
\end{figure}

Furthemore, the generalisation of this pseudofunctor to \emph{a lax functor} is explored in \cite{capucci_diegetic_2023} where applications to settings such as that of game theory are studied.\footnote{Here laxness is crucial, providing an argument that a good abstraction for these settings is that of a \emph{lax} actegory.}
Building on top of this, we could spend a lot of time unpacking further categorical structure --- morphisms of monoidal actegories, and consequently morphisms of monoidal paras; morphisms of cartesian actegories; morphisms of cartesian paras, and so on.
We don't do that in this thesis, and instead move on to applications: backpropagation, neural network architectures, and supervised learning.

We will see how optics can be specialised to a particular category $\LensA(\cC)$ of lenses additive in the backward pass, and how $\Para$ can be applied to a symmetric monoidal functor $\cC \to \LensA(\cC)$ giving us a way to augment parametric maps with their derivative.

  \part{Applications}
  \label{part:applications}
	\chapter{Backpropagation}
\label{ch:backpropagation}

\epigraph{Automatic differentiation is symbolic differentiation performed by a compiler.}{Conal Elliott \cite{elliott_video_2018}}

\newthought{When we think about derivatives,} we often think about differentiable functions between euclidean spaces.
This is a setting in which derivatives are easy to formalise in, and one which been thoroughly studied.
Unfortunately, this setting does not capture numerous things done with derivatives in practice, and in theory.

Firstly, not all differentiable functions are functions between \emph{euclidean} spaces.
Deep learning has been studied on complex numbers (\cite{bassey_survey_2021}), hyperbolic spaces (\cite{peng_hyperbolic_2022}), finite-state automata (\cite{hannun_differentiable_2020}), graphs (\cite{wu_comprehensive_2021}), hypergraphs, simplicial complexes, cellular complexes (\cite{papillon_architectures_2023}), and even boolean circuits (\cite{wilson_reverse_2021}) which --- surprisingly --- also admit a coherent notion of a derivative.
Is there a precise notion of derivative that coherently captures all of these?

Secondly, euclidean spaces are not cartesian closed, don't have all coproducts, nor do they host dependent types.
This means modelling derivatives in this category prevents us from reasoning about derivatives of higher-order functions\footnote{Not to be confused with higher-order derivatives of functions.}, differentiation with branching, and differentiation on manifolds.
This is especially important with the advent of numerous frameworks heavily used in practice which often provide branching or higher-order differentiation such as PyTorch (\cite{paszke_pytorch_2019}) or JAX (\cite{bradbury_jax_2018}), but whose categorical semantics are unknown.
Indeed: what are their categorical semantics? 
Can we perform branching in hyperbolic spaces, or boolean circuits?
What about derivatives of higher-order functions? %

Lastly, we are interested in not just studying neural networks, but \emph{implementing} them.
The process of computing derivatives in a programming language is often referred to as \emph{automatic differentiation} (AD) (\cite{baydin_automatic_2017}) where --- in addition to correctness --- this is done with an additional constraint of \emph{efficiency}.
There are two main efficiency concerns.
\begin{itemize}
  \item \textbf{Order of multiplication of Jacobian matrices.} Given many Jacobian matrices to be multiplied together, different orders of their composition induce a different computational cost.
  All left-associated composition is often referred to as ``forward-mode AD'' while all-right one is called ``reverse-mode AD''.
  If $n$ and $m$ are respectively the dimensionality of the domain and the codomain of the composite function we are differentiating, forward-mode AD tends to be more efficient $n \ll m$, and reverse-mode AD tends to be more efficient when $m \ll n$.
  \item \textbf{Memoisation.} An important efficiency aspect is whether to save intermediate results of derivative computation or to recompute them from scratch.
  The former is often the standard in automatic differentiation, while the latter is called \emph{gradient checkpointing} (\cite{chen_training_2016, griewank_algorithm_2000, dauvergne_data-flow_2006}).
  Gradient checkpointing, i.e.\ always recomputing the derivatives from scratch is useful when dealing with large models where saving intermediate results would quickly consume the available GPU memory.
  On the other hand, if we do have enough memory, saving intermediate results is faster.
\end{itemize}

When it comes to the multiplication of Jacobians, implementing forward-mode AD in a language with complex features is well understood, admitting an elegant proof of correctness (\cite{huot_correctness_2020}).
But this is not the case for reverse-mode AD.
Until very recently, reverse-mode AD was only well-understood only as a compile-time source-code transformation on programming languages with limited features, such as only first order functions (\cite{elliott_simple_2018}).
Implementations of reverse-mode AD in more expressive languages often rely on their interpreted features, where a \emph{computational graph} is built during runtime in a manner where higher-order functions and branching are evaluated, leaving only a simple first-order program.
This approach suffers the main drawback that it presents a separate language (happening at runtime) outside of the reach of the compiler and any existing compiler optimisations.
As such, proofs of correctness have often introduced mutable state (\cite{pearlmutter_reverse-mode_2008}), increasing the difficulty of understanding and verifying their correctness.
Recent work tackled these problems (\cite{vakar_chad_2022,vakar_reverse_2021,alvarez-picallo_functorial_2021}).
The first two provide structured translation for specific languages that host higher-order types.
They also provide explicit categorical semantics in terms of structure preserving functors, though explicit relationship to existing categorical frameworks such as tangent categories are not established (\cite{cockett_differential_2014,cruttwell_reverse_2023,garner_embedding_2018}).\footnote{Especially with respect to the axioms of the \emph{vertical lift} or the \emph{canonical flip}.}
The latter work provides a purely functional and formal interpretation of  (\cite{pearlmutter_reverse-mode_2008}), for the first time proving its soundness.

On the other hand, when it comes to memoisation, the distinction between gradient-checkpointed AD and non-checkpointed AD is mostly ignored.
Indeed, none of the above mentioned literature acknowledge it.
The only exception is \cite{gavranovic_space-time_2022} which provides a 2-categorical framework distinguishing between reverse mode AD that performs gradient checkpointing versus one that does not.

The points above motivate a need for a characterisation of backpropagation that has a number of properties.
It needs to a) cover exotic spaces; b) host sufficient structure for higher-order functions and branching, and c) be operationally aware, providing a well defined procedure of computing these derivatives efficiently in terms of order of multiplication of Jacobians and memoisation.
Is there general notion of derivative encompassing all these?
Can we have a framework that is both descriptive --- denotationally describing all the relevant aspects --- but also \emph{prescriptive} --- equipped with a recipe for implementing it that is cognizant of the various efficiency concerns?
Moreover, motivated by our constructions in \cref{ch:bidirectionality} we ask: does this framework arise as a special case of a general construction that covers a variety of other kinds of bidirectional processes, such as bayesian learning and value iteration?

In this chapter, we provide a resounding ``yes'' to the last question, and a partial answer to the previous ones.
We define \emph{additively closed cartesian left-additive categories} (\cref{def:additively_closed_cla}), serving as our foundation for differentiation.
In \cref{eq:concrete_additive_optic2} we describe how weighted optics defined on this base model backpropagation, and how they can be reduced to their more concrete formulation $\LensA(\cC)$ of lenses whose backward passes are additive in the second component.
In \cref{sec:backprop_as_functor} we formally study $\LensA$, defining it as a functor, and consequently introducing \emph{generalised reverse derivative categories} as coalgebras of this functor.
This gives us a framework for categorically modelling differentiation that we use in the rest of this thesis, capturing semantics of differentiation solely in in the language of lenses.
This framework covers euclidean spaces and boolean circuits (among others), but not hyperbolic spaces, or simplicial complexes, for instance.
It is closely related to \emph{reverse derivative categories} (\cite{cockett_reverse_2020}) for whose axioms it serves as a justification.
Likewise, while we don't directly tackle the question of higher-order differentiation\footnote{Meaning derivatives of higher-order functions as opposed to higher-order derivatives of functions.}, we believe the general categorical constructions in this thesis can provide a way forward in understanding those.

\begin{contributions}
  In this chapter, the novel contribution is the lens- and optic-theoretic axiomatisation of differentiation.
  Specifically, this includes the definition of an additively closed cartesian left-additive category (\cref{def:additively_closed_cla}), definition of $\LensA$ as an endofunctor (\cref{thm:lens_functor_clac}) and a generalised cartesian reverse differential category as its coalgebra (\cref{thm:coalgebra_rdc}).
  What is also novel is the identification of checkpointed and non-checkpointed automatic differentiation as lens and optic composition, respectively (\cref{subsec:operational}), originally appearing in my preprint \cite{gavranovic_space-time_2022}.
  Seeds of the idea of axiomatisation of differential structure as lenses appeared in \cite{cruttwell_categorical_2022}, though this work still used the framework of reverse derivative categories.
\end{contributions}

\begin{epistemicstatus}
  Follows that of the chapter on bidirectionality.
  Operational aspects were chosen in favour of dependent types, with the hope that in the long run the operational and intensional view will shed more light on these systems than a mere extensional and denotational one could.
  The state of the research leaves much to be desired: I want to establish formal connections to (reverse) tangent categories, provide a deeper characterisation of operational aspects, and simplify the definition of ACCLA by making it clearer what is the interaction between the (non-cartesian) monoidal closed structure of the backward pass and the cartesian structure of the forward pass.
  Likewise, I do not expect left-additive structure to be to be necessary in the long-run either, as we can study the category of additive maps in any category that's cartesian, and even merely just monoidal.
\end{epistemicstatus}

\section{What is differentiation?}
\label{sec:what_is_differentiation}

We often interpret the derivative of a function as its slope at a particular point, measuring how fast the function changes at that point.
For instance, starting with a function $f(x) = x^2$ on real numbers, the derivative of $f$ is usually written as $\frac{df}{dx} = 2x$ and evaluated to a real number representing the coefficient of the slope.
At the point $x=3$ the derivative of $f$ is $6$, telling us that for each step we move to the right in the graph of the function, we move $6$ steps up (\cref{fig:derivative_graph}).

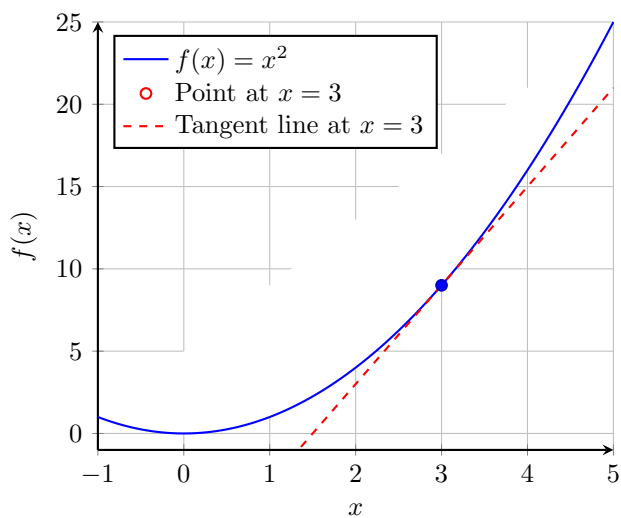
\begin{figure}[h]
  \centering
  \begin{tikzpicture}
    \begin{axis}[
      axis lines=left,
      xlabel=$x$, ylabel=$f(x)$,
      xmin=-1, xmax=5, ymin=-1, ymax=25,
      legend pos=north west,
      legend cell align={left},
      legend style={draw},
      samples=100,
      domain=-1:5,
      thick,
      grid=major,
      ]
      \addplot[blue, domain=-1:5] {x^2}; \addlegendentry{$f(x) = x^2$}
      \addplot[scatter, only marks, mark size=2pt, scatter src=y, red] coordinates {(3, 9)}; \addlegendentry{Point at $x=3$}
      \addplot[dashed, red, domain=1:5] {2*3*(x-3) + 9}; \addlegendentry{Tangent line at $x=3$}
    \end{axis}
  \end{tikzpicture}
  \caption{Function $f(x)=x^2$ and its tangent at the point $x=3$.}
  \label{fig:derivative_graph}
\end{figure}

Of course, we see that the further out we go, the less accurate the derivative is.
This is because the derivative is only a linear approximation, local at a point.
And the fact that we are able to represent it as a number is merely a consequence of the isomorphism between the vector space of real numbers $\R$ and the space of linear maps $\R \multimap \R$ (\cref{box:scs_monoidal}).
In a multivariate case we will have to represent it as a matrix instead.

Take the example of a smooth function $f : \R^3 \to \R^2$ defined as
\[
  f(x_1, x_2, x_3) = (x_1^2  + x_1x_2x_3, x_3^3 + 3x_1)
\]
Its derivative is a linear function of the same type $\R^3 \multimap \R^2$ that best approximates $f$ at a particular point.
It can be represented as a $2 \times 3$ matrix whose elements contain partial derivatives of $f$ with respect to each component of the input.
That is, at the point $(x_1, x_2, x_3) : \R^3$ the derivative of $f$ is

\begin{equation*}
  \begin{bmatrix}
    2x_1 + x_2x_3 & x_1x_3 & x_1x_2 \\
    3 & 0 & 3x_3^2 
  \end{bmatrix}
\end{equation*}

and it describes how each of the two outputs (rows) change as we change any one of the inputs (columns).
This leads us to the well-known definition of \emph{the Jacobian matrix} in the category $\Smooth$ (\cref{def:smooth}) which we recall has a monoidal closed subcategory $\FVectR$ (\cref{def:fvect}) of linear maps.

\begin{proposition}[compare \cite{cockett_reverse_2020}, Example 5.3, Example 14.2]
  Let $f : \R^n \to \R^m$ be a map in $\Smooth$.
  Then the \textbf{Jacobian} of $f$ is a map of type
  $$
  J[f] : \R^n \to (\R^n \multimap \R^m)
  $$
  defined as a matrix of partial derivatives at every point $x : \R^n$:
  \begin{equation*}
    J[f](x) =
    \begin{bmatrix}
      \frac{\partial f_1}{\partial x_1} & \dots & \frac{\partial f_m}{\partial x_1}\\
      \vdots & \ddots & \vdots\\
      \frac{\partial f_1}{\partial x_n} & \dots & \frac{\partial f_m}{\partial x_n}
    \end{bmatrix}
  \end{equation*}
\end{proposition}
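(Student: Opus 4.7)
The statement is essentially a packaging of standard multivariable calculus into the categorical notation set up earlier in the chapter, so the proof plan is one of verifying well-definedness rather than establishing a nontrivial fact. My plan is to proceed in three steps: first justify the existence and smoothness of each entry of the matrix, then identify the matrix with a point of $\R^n \multimap \R^m$, and finally show that the overall assignment is a morphism of $\Smooth$.

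First, I would recall the definition of $\Smooth$ from \cref{def:smooth}: a morphism $f : \R^n \to \R^m$ is smooth in the classical sense, i.e.\ all higher partial derivatives $\partial^k f_i / \partial x_{j_1} \cdots \partial x_{j_k}$ exist and are continuous. In particular, each first partial derivative $\partial f_i / \partial x_j : \R^n \to \R$ exists at every point $x : \R^n$ and is itself a smooth function. This guarantees that each of the $nm$ entries of the displayed matrix is a well-defined element of $\R$ at each $x$, and that each entry depends smoothly on $x$.

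Next, I would invoke the identification $\internalHom{\FVectR}{\R^n}{\R^m} = \R^{n \times m}$ noted in \cref{peculiarities_fvect}, which identifies the internal hom in $\FVectR$ with the space of $n \times m$ real matrices. Under this identification, for each fixed $x$ the displayed matrix is a point of $\R^n \multimap \R^m$, namely the unique linear map whose coordinate representation in the standard bases is given by the partial derivatives of $f$ at $x$. (The transposed layout relative to the usual Jacobian convention is harmless: it is merely the choice that makes $J[f](x)$ act on tangent vectors from the appropriate side, as will be convenient for the reverse-mode story developed later in this chapter.) Combining this with the previous step shows that $J[f]$ is a set-theoretic function $\R^n \to (\R^n \multimap \R^m)$.

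Finally, I would verify that $J[f]$ is itself a morphism of $\Smooth$: under the identification $\R^n \multimap \R^m \cong \R^{nm}$, the function $J[f] : \R^n \to \R^{nm}$ has components $\partial f_i / \partial x_j$, each of which is smooth by the first step, so $J[f]$ is smooth as a map into a euclidean space. There is no real obstacle anywhere in this argument; the only conceptual care required is the bookkeeping of the matrix-to-linear-map isomorphism and its direction, which is fixed by the convention established in \cref{peculiarities_fvect}. In subsequent sections, the content of this proposition will be repackaged as part of a lens $\R^n \to \R^n$ whose backward component is precisely the currying of $J[f]$ along the evaluation map $\eval : (\R^n \multimap \R^m) \otimes \R^n \to \R^m$.
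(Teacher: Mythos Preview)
Your verification is correct, but the paper does not actually supply a proof for this proposition: it is stated as a definitional recall of standard multivariable calculus (hence the ``compare \cite{cockett_reverse_2020}'' tag) and the text simply moves on to discussing the transpose and the chain rule. Your three-step check of well-definedness is sound and more explicit than anything the paper offers, so there is nothing to compare against; if anything, your write-up could serve as the missing justification.
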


The Jacobian tells us how at every point $x : \R^n$ a change in the input causes a change in the output.
However, often we're interested in percolating this information backwards --- computing how a change in the output affects the input.
This can be done by transposing the Jacobian at $x$, obtaining $J[f](x)^\dagger : (\R^m \multimap \R^n)$.
Not to be confused with computing the \emph{inverse} of the Jacobian at $x$, which is something related to Netwon's method, and computationally more expensive.\footnote{This too forms a lens, and was discovered together with Mario Alvarez-Picallo.}
The transposed Jacobian is often called ``the reverse derivative'', while the non-transposed version is called ``the forward derivative'', matching closely the intuition behind reverse-mode and forward-mode AD.

\begin{notation}
  \label{rem:not_forward_reverse_deriv}
  Every Jacobian $J[f] : \mi \to (\mi \multimap \mo)$ and its transposed version --- which we sometimes denote by $J[f]^\dagger : \mi \to (\mo \multimap \mi)$ --- can be additionally be uncurried, and treated as maps linear in their second argument.
  We introduce the following notation for this.\footnote{We also note that $R[\dots]$ notation is also used in \cref{def:poly} for polynomial rings, but it will always be clear from the context and variable names which one is being used.}
  \begin{align*}
   D[f] &: \mi \times \mi \to \mo \quad  \quad &R[f] &: \mi \times \mo \to \mi\\
   D[f]&(x, v) = J[f](x)(v) \quad \quad &R[f]&(x, v) = J[f]^{\dagger}(x)(v)
  \end{align*}
\end{notation}

\section{Jacobians and weighted optics}
\label{sec:jacobians_weighted_optics}

Jacobians are an important concept in differential calculus, and will be the central component of this chapter.
In fact, the direction where we are heading might have already become apparent once the type of $J[f]^\dagger$ and $R[f]$ were specified.
These types precisely match the type of the backward part of lenses, with $f$ matching their forward one!

It might seem like a spurious coincidence, but it turns out that the similarities run deep.
For instance, let's take a look at an important property of Jacobians that relates them to sequential composition of maps in $\Smooth$: the chain rule.
\begin{proposition}[Chain Rule]
  \label{prop:chain_rule}
  Let $f : A \to B$ and $g : B \to C$ be two maps in $\Smooth$.
  Then \newdef{the chain rule} is a property of Jacobians: it describes how Jacobians of $f$ and $g$ are related to the Jacobian of their sequential composite.
  It can be expressed as the commutativity of the diagram below:
  \[\begin{tikzcddiag}[ampersand replacement=\&]
      A \&\&\& {A \multimap C} \\
      \\
      {A \times B} \&\&\& {(A \multimap B) \times (B \multimap C)}
      \arrow["{J[f \comp g]}", from=1-1, to=1-4]
      \arrow["{\multimap_{\comp}}"', from=3-4, to=1-4]
      \arrow["{\graph(f)}"', from=1-1, to=3-1]
      \arrow["{J[f] \times J[g]}"', from=3-1, to=3-4]
    \end{tikzcddiag}\]
  Equationally, it tells us that $J[f \comp g](x) = J[f](x) \multimap_{\comp} J[g](f(x))$.\footnote{ If we consider functions of type $\R \to \R$, then the composition of linear maps reduces to multiplication of their coefficients.
  This yields the familiar representation of chain rule $(f \comp g)'(x) = g'(f(x)) \cdot f'(x)$, where the superscript $'$ here denotes the Jacobian, i.e.\ $f' = J[f]$.}
\end{proposition}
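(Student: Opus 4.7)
The plan is to reduce the diagrammatic statement to the pointwise equation $J[f \comp g](x) = J[f](x) \multimap_{\comp} J[g](f(x))$ and then prove it by appealing to the characterisation of the Jacobian as the unique linear map providing the best first-order approximation. Concretely, one reads off that the top-right path sends $x \in A$ to $J[f \comp g](x) \in (A \multimap C)$, while the bottom-left path sends $x$ first to $(x, f(x))$ via $\graph(f)$, then to the pair $(J[f](x), J[g](f(x)))$, and finally composes these two linear maps in diagrammatic order to obtain $J[g](f(x)) \circ J[f](x)$; so commutativity of the square amounts exactly to the equation displayed in the statement.

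First I would fix a point $x \in A$ and recall the defining property of the Jacobian in $\Smooth$: for any smooth $h : U \to V$ and any $y \in U$, $J[h](y)$ is the unique linear map $U \multimap V$ such that $h(y + v) = h(y) + J[h](y)(v) + r_h(y, v)$ with $\|r_h(y, v)\| / \|v\| \to 0$ as $v \to 0$. This uniqueness is the lever that will let me finish the proof once I produce any linear map with the right remainder behaviour for $f \comp g$ at $x$.

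Next I would compute $(f \comp g)(x + v) - (f \comp g)(x)$ by applying the first-order expansion of $f$ at $x$ to obtain an intermediate point $f(x) + w$ with $w = J[f](x)(v) + r_f(x, v)$, then applying the expansion of $g$ at $f(x)$ to $w$. Collecting terms gives $(f \comp g)(x+v) = (f \comp g)(x) + J[g](f(x))(J[f](x)(v)) + E(v)$, where $E(v)$ is a sum of three error contributions: $J[g](f(x))(r_f(x, v))$, $r_g(f(x), w)$, and the effect of the non-linear part of $g$ evaluated at a displacement involving $r_f$. The key step will be to bound $\|E(v)\| = o(\|v\|)$: the first term is controlled because $J[g](f(x))$ is a fixed bounded linear map, while the remaining terms are controlled using $\|w\| \le C \|v\|$ for small $v$ (since $J[f](x)$ is bounded and $r_f$ is sublinear) combined with $\|r_g(f(x), w)\| = o(\|w\|)$.

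The main obstacle, such as it is, will be managing the composed remainder estimate $\|E(v)\| = o(\|v\|)$ cleanly — not because it is deep, but because the composition of two ``small'' error terms needs the Lipschitz-type bound on $J[f](x)$ to prevent them from interacting badly. Once that estimate is in hand, uniqueness of the best linear approximation forces $J[f \comp g](x) = J[g](f(x)) \circ J[f](x)$, which is precisely the equation asserted by the proposition and hence commutativity of the square. I would finish by noting that this is completely standard in multivariable calculus (e.g.\ the Fréchet-derivative chain rule), so in the thesis itself one would likely just state the identity and refer to a standard source, while the diagrammatic reformulation above is what makes it the ``correct'' statement to lift into the lens-based axiomatisation developed in the remainder of the chapter.
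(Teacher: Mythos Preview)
Your proposal is correct, and indeed you anticipated the paper's treatment precisely: the thesis does not prove this proposition at all, presenting it simply as the familiar chain rule from multivariable calculus with no accompanying argument. Your standard Fr\'echet-derivative proof is exactly the justification one would supply if pressed, and your closing remark that ``in the thesis itself one would likely just state the identity'' is spot on.
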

The chain rule bears a striking similarity to the composition rule for closed lenses \cref{def:linear_lens_composition_identity}.
As a matter of the fact, they are exactly the same!

Another similarity arises when we look at the way we assign Jacobians to smooth maps.
Note that the Jacobian is an assignment that is made to \emph{every} morphism in $\Smooth$.
This suggests that it might be a functor.
Which functor is it, and what is its codomain?
The problem is that the type of the Jacobian does not have a straightforward interpretation as a morphism in a category.
But this is only a problem because we assumed the original $f$ has to be forgotten.
Instead, if we keep the forward map $f$ and bundle it together with the Jacobian as the pair $(f, J[f]^\dagger)$, we can see that $f : A \to B$ and $J[f]^\dagger : A \to (B \multimap A)$ resemble the types of forward and backward maps in a closed lens.
This is precisely what is done in automatic differentiation: the forward map $f$ is never forgotten --- instead it is paired with the Jacobian since it is needed to compute the input of $J[g]^\dagger : B \to (C \multimap B)$ when expressing the chain rule.

And lastly, we observe that in most implementations of automatic differentiation\footnote{At least, the ones aiming to minimise redundant computation.} the above formulation of chain rule \emph{is not} the one explicitly used.
This is because the definition of $J[f \comp g]$ explicitly recomputes something that we can memoise: the output of $f$.
Such memoisation will require not just bundling $f$ and $J[f]$ together into a pair of maps, but treating them as a singular unit\footnote{This is something we can do because their domains agree.}: one which given an input $a : A$ it produces two things: the output $f(b) : B$ and also the linear map $J[f]^\dagger(a) : B \multimap A$.
A careful reader might have noticed that this resembles the definition of a closed lens (\cref{def:closed_optic}).
And as we will see, a more efficient formulation of the chain rule is precisely the one that arises from the definition of closed lens composition (\cref{eq:linear_lens_smooth}).\footnote{I've first seen this observed in \cite[Sec. 3.1]{elliott_simple_2018}, though the terminology of lenses was not used.}

The above points motivate the usage of weighted optics as a framework for studying differentiation, as they give us not just a way of modelling differentiation denotationally, but also operationally.
More precisely, a coherent way of assigning Jacobians to every morphism will be modelled a part of the data of the functor
\[
  \cC \to \Optic^{\cC(-, \iotaaccla(-))}_{\diset{\times}{\otimes}}
\]
for a particular kind of a category $\cC$: an \emph{additively closed cartesian left-additive category}, a novel construction we define in this thesis.

\begin{definition}[Additively closed cartesian left-additive category]
  \label{def:additively_closed_cla}
  Let $\cC$ be a cartesian left-additive category.
  We call it an \newdef{additively closed cartesian left-additive category (ACCLA)} if
  \begin{itemize}
  \item Its subcategory of additive maps $\asc{\cC}$ has a monoidal product $(\asc{\cC}, \otimes, I)$ with respect to which it is closed;
  \item The embedding $\iotaaccla : \asc{\cC} \to \cC$ is lax monoidal with respect to the monoidal structure $(\otimes, I)$ of $\asc{\cC}$ and $(\times, 1)$ of $\cC$.
  \end{itemize}
\end{definition}

This is a construction that takes advantage of the fact that in all the settings of interest to us ($\Smooth$ and $\Poly_R$) additive and linear maps coincide.\footnote{In general, linear maps are special cases of additive maps. For an example where they do not coincide, see \cite[p. 541]{blute_cartesian_2009}. It is not clear to me what the implications of this counterexample are.}
This is why it can also reasonably be called a \textbf{linearly closed cartesian left-linear category}.\footnote{Despite the unfortunate fact that LCCLA is a less punchy acronym.}
It enables us to compartmentalise many moving pieces of derivatives, as well as provide us with an operational perspective on their composition.

\begin{remark}
  \label{rem:iota_strong_monoidal}
  The above definition implies a few things. 
  The category $\asc{\cC}$ is always a cartesian left-additive subcategory of $\cC$.
  The existence of lax monoidal structure of $\iotaaccla$ and self-enrichment of $\asc{\cC}$ via the enriched base change (\cref{def:enriched_base_change}) implies enrichment of $\asc{\cC}$ in $\cC$.
  Additionally, the functor $\iotaaccla$ is --- in addition to being lax monoidal with respect to $\otimes$ --- always strong monoidal with respect to the product structure of $\asc{\cC}$.
\end{remark}

In this kind of a category, we can exhibit a generalised kind of a tensor-hom adjunction which captures the idea of a property of a morphism being valid in only one variable (see \cref{sec:optics_backward_closed_actegory} for more details and intuition).

\begin{restatable}{proposition}{GeneralisedTensorHom}
  \label{prop:generalised_tensor_hom}
  Let $\cC$ be an $\accla$.
  Then for every $X, Y', X' : \cC$ the following isomorphism holds:
  \[
    \cC(X, \iotaaccla(\internalHom{\asc{\cC}}{Y'}{X'})) \cong \asc{\CoKl(X \times -)}(Y', X')
  \]
\end{restatable}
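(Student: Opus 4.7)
The plan is to construct an explicit bijection by exhibiting maps in both directions and verifying they are mutually inverse. Abbreviating $\internalHom{\asc{\cC}}{Y'}{X'}$ as $[Y', X']$, I would first define the forward map $\Phi : \cC(X, \iotaaccla([Y', X'])) \to \asc{\CoKl(X \times -)}(Y', X')$ sending $f$ to the composite
\[
  X \times Y' \xrightarrow{f \times Y'} \iotaaccla([Y', X']) \times Y' \xrightarrow{\phi} \iotaaccla([Y', X'] \otimes Y') \xrightarrow{\iotaaccla(\eval)} X'
\]
where $\phi$ is the laxator of $\iotaaccla$ and $\eval$ is the evaluation of the internal hom in $\asc{\cC}$.

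To verify $\Phi(f)$ is additive in $Y'$, I would first establish the auxiliary claim that the laxator $\phi_{A,B}$ is additive in each argument separately. For the second argument: since $\asc{\cC}$ is monoidal closed, $A \otimes -$ is a left adjoint and therefore preserves biproducts (which exist in $\asc{\cC}$ by \cref{prop:linsc_biproduct}), so under the iso $A \otimes (B \oplus B) \cong (A \otimes B) \oplus (A \otimes B)$ the morphism $A \otimes \nabla_B$, where $\nabla_B : B \oplus B \to B$ is the codiagonal, corresponds to $\nabla_{A \otimes B}$. Naturality of $\phi$ along $\nabla_B$ --- a morphism in $\asc{\cC}$ --- combined with the strong monoidality of $\iotaaccla$ with respect to products (\cref{rem:iota_strong_monoidal}), which identifies $\iotaaccla(\nabla_B)$ with the sum $+_{\iotaaccla(B)}$, then yields the required additivity equation. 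Since $\iotaaccla(\eval)$ is additive as the image of a morphism in $\asc{\cC}$, the composite $\Phi(f)$ inherits additivity in $Y'$.

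For the inverse $\Psi$, given $g : X \times Y' \to X'$ additive in $Y'$, I would construct $\tilde g : X \to \iotaaccla([Y', X'])$ by a representability argument: setting $u := \Phi(\id_{\iotaaccla([Y', X'])})$, the map $u : \iotaaccla([Y', X']) \times Y' \to X'$ is the putative universal additive-in-$Y'$ map, and I would show every such $g$ factors uniquely as $u \comp (\tilde g \times Y')$. The hard part will be this representability step. Unlike in $\Smooth / \FVectR$ where partial currying comes essentially for free, in a general $\accla$ we must extract $\tilde g$ using only the tensor-hom adjunction inside $\asc{\cC}$ and the laxator of $\iotaaccla$ --- the axioms do not give direct partial currying when one variable lives outside $\asc{\cC}$. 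My expectation is that the cleanest path is to reinterpret both sides as $\cC$-enriched hom-sets: via change of base along $\iotaaccla$, the category $\asc{\cC}$ becomes $\cC$-enriched with hom-objects $\iotaaccla([-, -])$, and identifying $\asc{\CoKl(X \times -)}(Y', X')$ with the corresponding enriched hom (exploiting the bi-additivity of $\phi$ above) forces the bijection. Checking $\Phi \comp \Psi = \id$ and $\Psi \comp \Phi = \id$ would then follow from the triangle identities of the tensor-hom adjunction in $\asc{\cC}$ together with naturality of $\phi$.
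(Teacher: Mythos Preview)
The paper does not actually supply a proof here---it asserts only that the result ``follows by routine, but formally tedious calculation.'' Your plan is therefore already more explicit than what the paper offers, and your forward map $\Phi$ together with the additivity argument (biproduct preservation by $A \otimes -$, naturality of the laxator, and strong monoidality of $\iotaaccla$ with respect to products per \cref{rem:iota_strong_monoidal}) is the right construction.

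The inverse direction is, as you say, the crux, but your proposed resolution via enriched categories is dangerously close to circular. You write that recognising both sides as $\cC$-enriched hom-sets ``forces the bijection''; yet the $\cC$-enriched hom-object of $\asc{\cC}$ (obtained by base change along $\iotaaccla$) is precisely $\iotaaccla([Y',X'])$, and the claim you are trying to establish is exactly that $\cC(X, \iotaaccla([Y',X']))$ classifies the additive-in-$Y'$ maps out of $X \times Y'$. Invoking the enrichment does not construct $\tilde g$ for you---it restates the target isomorphism in different language. To close the gap you need an honest construction of $\Psi(g)$ from the ACCLA axioms alone: lax monoidality of $\iotaaccla$ and $\otimes$-closure of $\asc{\cC}$. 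It is worth checking carefully whether those axioms genuinely suffice, or whether the proof implicitly leans on extra structure present in the motivating examples $\Smooth$ and $\Poly_R$ (where objects are free modules and partial currying is available on the nose). If the latter, the definition of ACCLA may need an additional clause to make the proposition hold in the stated generality.
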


This is a proof that follows by routine, but formally tedious calculation.

\begin{example}
The category $\Smooth$ is an additively closed cartesian left-additive category: its subcategory of additive maps $\asc{\Smooth}$ coincides with $\FVectR$ which is monoidal closed, and the embedding is a lax monoidal functor (\cref{ex:iota_lax_monoidal}).
\end{example}

\begin{example}
The category $\Poly_R$ is additively closed, as its subcategory of additive maps also coincides with $\FVect_R$ which is monoidal closed.
\end{example}

Now let's unpack the contents of the category of weighted optics $\Optic^{\cC(-, \iotaaccla(-))}_{\diset{\times}{\otimes}}$.
Its forward action is the self-action of the cartesian structure of $\cC$, backward action is the self-action of the monoidal closed structure of $\asc{\cC}$, and the weight is the composite
\[
  \cC^{\op} \times \asc{\cC} \xrightarrow{\cC^{\op} \times \iotaaccla} \cC^{\op} \times \cC \xrightarrow{\Hom_{\cC}} \Set
\]
Concretely, a weighted optic $\diset{\mi}{\mi '} \to \diset{\mo}{\mo '}$ here consists of

\begin{itemize}
\item A coparameter $\diset{M}{M}$ where $M : \cC$ and $M' : \asc{\cC}$;
\item A map $s : \cC(M, \iotaaccla(M'))$;
\item A forward map $f : \mi \to M \times \mo$ in $\cC$;
\item A backward map $f' : M' \otimes \mo ' \to \mi '$ in $\asc{\cC}$
\end{itemize}
quotiented out by diagrams in \cref{eq:copara_optic_diagram}.
Despite its modest presentation, this construction captures many of the things we care about in differentiation, including its operational characteristics.

The reductions described in \cref{sec:optics_for_cartesian_act_fw} can be applied here, and we can unpack them explicitly by chooosing appropriate types of residuals.
Set $M  = \mi$ and $M' = \internalHom{\asc{\cC}}{\mo '}{\mi '}$.
This reduces the type of the underlying maps to $\cC(\mi, \mi \times \mo)$, $\cC(\mi, \iotaaccla(\internalHom{\asc{\cC}}{\mo '}{\mi '}))$ and $\asc{\cC}(\internalHom{\asc{\cC}}{\mo '}{\mi '} \otimes \mo ', \mi')$.
These are exactly the types of $\graph(f)$, $J[f]$ and $\eval$ (\cref{fig:optic_jacobian})! 

\begin{figure}[h]
  \scaletikzfig[0.8]{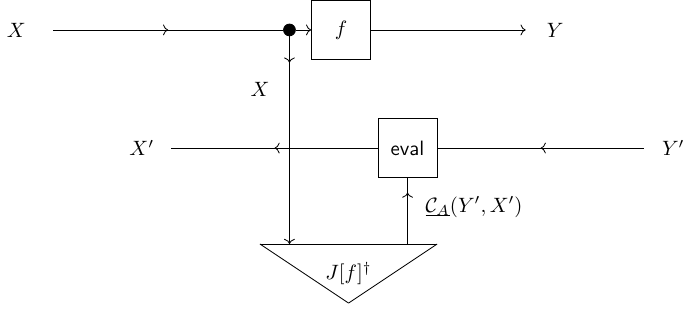}
  \caption{An optic implementing reverse mode AD.}
  \label{fig:optic_jacobian}
\end{figure}

What do these maps do?
The forward pass computes the output of $f$ while at the same time saving the input $\mi$ to be used on the backward pass.
Via the Jacobian this $\mi$ is transformed into a linear map $\mo ' \multimap \mi '$ which is in conjuction with a $\mo '$ evaluated into an $\mi '$.
This looks like an implementation of reverse mode AD, and we will prove in \cref{sec:backprop_as_functor} that this is exactly the case.

Alternatively, we can abstract away the $\eval$ and look at a different concrete representative given by \cref{corr:optics_forward_cart_backward_closed_concrete}:
\begin{equation}
  \label{eq:concrete_additive_optic}
  \Optic^{\cC(-, \iotaaccla(-))}_{\diset{\times}{\otimes}}\OpticHom{\mi}{\mi '}{\mo}{\mo '} \cong \cC(\mi, \mo) \times \cC(\mi, \iotaaccla(\internalHom{\asc{\cC}}{\mo '}{\mi '}))
\end{equation}

yielding the types of $f$ and $J[f]^\dagger$.
By applying \cref{prop:generalised_tensor_hom} we can provide another representation

\begin{equation}
  \label{eq:concrete_additive_optic2}
  \Optic^{\cC(-, \iotaaccla(-))}_{\diset{\times}{\otimes}}\OpticHom{\mi}{\mi '}{\mo}{\mo '} \cong \cC(\mi, \mo) \times \asc{\CoKl(\mi \times -)}(\mo ', \mi ')
\end{equation}

which has concrete morphisms of the form $f : \cC(X, Y)$ and $f' : \cC(X \times Y', X')$ with the additional property that $f'$ is additive in the second component (\cref{def:additive_second_variable}).
This is precisely the uncurrying of $J[f]^\dagger$ described in \cref{rem:not_forward_reverse_deriv}.
Similarities do not stop here.
While the lens representation defines the implementation of backpropagation that performs gradient checkpointing\footnote{Gradient checkpointing being a method of computing derivatives mentioned in the introduction of this chapter.}, it can be shown that their composition in the optic representation computes it \emph{without} gradient checkpointing.
This suggest that optics do not merely describe reverse mode AD, but give us a prescriptive recipe for implementing it.\footnote{See \cite{gavranovic_space-time_2022} for more information.}

In general, additivity as we have defined it is crucial, as it enables optics to have products.

\begin{proposition}
  \label{prop:lens_a_cartesian}
  The category $\Optic^{\cC(-, \iotaaccla(-))}_{\diset{\times}{\otimes}}$ has products defined as
  \[
    \diset{A}{A'} \times \diset{B}{B'} = \diset{A \times B}{A' \oplus B'}
  \]
  where $\oplus$ is here suggestive notation for the biproduct structure of $\asc{\cC}$ (\cref{prop:linsc_biproduct}).
\end{proposition}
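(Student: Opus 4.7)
The plan is to verify the universal property of the product directly, working in the concrete lens representation of \cref{eq:concrete_additive_optic2}:
\[
  \Optic^{\cC(-, \iotaaccla(-))}_{\diset{\times}{\otimes}}\OpticHom{X}{X'}{Y}{Y'} \cong \cC(X, Y) \times \asc{\CoKl(X \times -)}(Y', X').
\]
Working in this already-quotiented form means I do not have to track the coend equivalence relation on optics by hand; the backward datum is uniquely represented. I would exhibit the projections and pairing on this concrete form and then show hom-set bijectivity factor-by-factor.

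First I would define the projection $\pi_1 : \diset{A \times B}{A' \oplus B'} \to \diset{A}{A'}$ to have forward part the cartesian projection $\pi_A : A \times B \to A$ in $\cC$, and backward part the map $(A \times B) \times A' \to A' \oplus B'$ given by $i_1 \comp \pi_{A'}$, where $i_1$ is the biproduct injection in $\asc{\cC}$ (\cref{prop:linsc_biproduct}); this is additive in $A'$ because $i_1$ is additive and precomposition with $\pi_{A'}$ preserves additivity in the second variable. The projection $\pi_2$ is defined symmetrically. Given $\diset{f}{f'} : \diset{X}{X'} \to \diset{A}{A'}$ and $\diset{g}{g'} : \diset{X}{X'} \to \diset{B}{B'}$, the candidate pairing has forward part $\langle f, g \rangle : X \to A \times B$, furnished by the cartesian product of $\cC$, and backward part the map $X \times (A' \oplus B') \to X'$ sending $(x,(a',b'))$ to $f'(x,a') +_{X'} g'(x,b')$. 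Additivity of this composite in its second variable follows because $f'$ and $g'$ are additive in their second variables and the biproduct projections $\pi_{A'}, \pi_{B'}$ together with $+_{X'}$ are additive.

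Next I would establish uniqueness factor-by-factor. On the forward side this is simply the universal property of $A \times B$ in $\cC$. On the backward side, the core claim is that $A' \oplus B'$ remains a coproduct when passing from $\asc{\cC}$ to $\asc{\CoKl(X \times -)}$, i.e.
\[
  \asc{\CoKl(X \times -)}(A' \oplus B', X') \;\cong\; \asc{\CoKl(X \times -)}(A', X') \times \asc{\CoKl(X \times -)}(B', X').
\]
This follows because $\asc{\CoKl(X \times -)}$ is itself a biproduct category: by the argument of \cref{prop:linsc_biproduct} applied to the CLA structure of $\CoKl(X \times -)$, any additive map $h : X \times (A' \oplus B') \to X'$ decomposes uniquely as $h(x,v) = h(x, i_1 \pi_1 v) +_{X'} h(x, i_2 \pi_2 v)$, giving inverse bijections between $h$ and the pair $(h \comp (X \times i_1), h \comp (X \times i_2))$. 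Combining the two factor isomorphisms, natural in $\diset{X}{X'}$, yields the required natural isomorphism of hom-sets, and by construction the identity optic on $\diset{A \times B}{A' \oplus B'}$ maps to the pair of projections exhibited above.

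The main obstacle, in my view, is purely bureaucratic: checking naturality of the backward bijection with respect to both pre- and post-composition in $\Optic$, which requires unfolding lens composition (\cref{def:lens_composition_identity}) and using that the biproduct injections/projections commute with reparameterisation by the forward component $\graph(\langle f, g\rangle)$. A secondary subtlety is confirming that the additivity-in-the-second-variable condition is preserved under the composition rule, but this follows routinely since additivity is stable under precomposition by any map in $\cC$ and under the biproduct structure maps.
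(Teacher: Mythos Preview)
Your proposal is correct, but the paper takes a shorter and slightly different route. Rather than working in the uncurried representation \cref{eq:concrete_additive_optic2} and explicitly constructing projections and pairing, the paper works in the \emph{closed} representation \cref{eq:concrete_additive_optic} and establishes the hom-set isomorphism (\cref{eq:product}) as a chain of natural bijections: it uses the universal property of the product in $\cC$ on the forward factor, then on the backward factor uses that $\iotaaccla$ is strong monoidal with respect to the product structure (\cref{rem:iota_strong_monoidal}) together with the fact that the internal hom $\internalHom{\asc{\cC}}{-}{X'}$ sends the coproduct $A' \oplus B'$ to a product. No explicit projections, pairing, or verification of the lens composition rule appear.

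The trade-off is this: the paper's argument is terser and avoids all the bureaucracy you identify as the ``main obstacle'' (checking naturality under lens composition, stability of additivity, etc.), but it leans on the full ACCLA structure --- the monoidal closure of $\asc{\cC}$ and the strong monoidality of $\iotaaccla$. Your argument is more hands-on but uses only the cartesian left-additive structure (as the paper itself notes, the right-hand side of \cref{eq:concrete_additive_optic2} does not require closure), so it transparently shows that the result already holds for $\LensA(\cC)$ over any CLA category, not just an ACCLA one. The underlying categorical content is the same --- both routes ultimately rest on $\asc{\cC}$ (respectively $\asc{\CoKl(X \times -)}$) being a biproduct category --- but the paper packages it via the internal hom while you unpack it elementwise.
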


\begin{proof}
  We prove the universal property of the product (\cref{eq:product}).
  \begin{align*}
    &\Optic^{\cC(-, \iotaaccla(-))}_{\diset{\times}{\otimes}}\OpticHom{X}{X'}{A}{A'} \times \Optic^{\cC(-, \iotaaccla(-))}_{\diset{\times}{\otimes}}\OpticHom{X}{X'}{B}{B'}\\
    \text{(Def.)} \enskip =& \cC(X, A) \times \cC(X, \iotaaccla(\internalHom{\asc{\cC}}{A'}{X'})) \times \cC(X, B) \times \cC(X, \iotaaccla(\internalHom{\asc{\cC}}{B'}{X'}))\\
    \text{(Univ. prop. of product)} \enskip \cong & \cC(X, A \times B) \times \cC(X, \iotaaccla(\internalHom{\asc{\cC}}{A'}{X'}) \times \iotaaccla(\internalHom{\asc{\cC}}{B'}{X'})) \\
    \text{($\iotaaccla$ is strong monoidal)} \enskip \cong & \cC(X, A \times B) \times \cC(X, \iotaaccla(\internalHom{\asc{\cC}}{A'}{X'} \oplus \internalHom{\asc{\cC}}{B'}{X'}))\\
    \text{(Univ. prop. of coproduct)} \enskip \cong & \cC(X, A \times B) \times \cC(X, \iotaaccla(\internalHom{\asc{\cC}}{A' \oplus B'}{X'}))\\
    \text{(Def.)}  \enskip =&\Optic^{\cC(-, \iotaaccla(-))}_{\diset{\times}{\otimes}}\OpticHom{X}{X'}{A \times B}{A' \oplus B'} \tag*{\qedhere}
  \end{align*}
\end{proof}

These similarities between derivatives and optics lead us to stating the following conjecture:
\begin{center}
  \textbf{Weighted optics provide good denotational and operational semantics for differentiation.}  
\end{center}

We say ``conjecture'' because this is not something we fully prove in this thesis.
This is because we deal exlusively with a specific representation of optics that prohibits modelling non-checkpointed automatic differentiation.
In other words, to \emph{formally prove}, given the time constraints in writing this thesis, a correspondence between differentiation and optics, we use the concrete representation of optics in \cref{eq:concrete_additive_optic2} as lenses with backward pass \emph{additive in the second component}.
This means that, instead of using optic composition and identities, we restricted to use the defined lens composition and identities in \cref{def:lens_composition_identity,def:linear_lens_composition_identity}.
The reason we do this is because this allows us to reduce the underlying structure to just a cartesian left-additive category (note that the right-hand side of \cref{eq:concrete_additive_optic2} does not use any closed structure of $\asc{\cC}$), and thus greatly simplify the proofs in the rest of this chapter.\footnote{This is a peculiar thing, because it implies that non-checkpointed reverse mode AD necessitates the equipment of $\cC$ with an additively closed subcategory. This is something we hope to explore in future work.}

Despite its reduced expresiveness, this is still a fully formal statement which can be made more general in a systematic way.
We introduce the shorthand notation for this construction:

\begin{equation}
  \label{eq:lensa}
  \LensA(\cC) \coloneqq \Optic^{\cC(-, \iotaaccla(-))}_{\diset{\times}{\otimes}}
\end{equation}

In light of this, the formal statement of the main theorems of this chapter is:

\begin{center}
  The construction $\LensA$ is a functor (\cref{thm:lens_functor_clac}), and generalised reverse differential structure is its coalgebra (\cref{def:gcrdc}).
\end{center}

As we will see in \cref{thm:coalgebra_rdc}, this compact definition of a coalgebra will specify concepts such as the chain rule, additivity of the backward pass, behaviour with respect to products and summing, and so on.

\section{Backpropagation as a $\LensA$-coalgebra}
\label{sec:backprop_as_functor}

We first aim to prove the following.

\begin{theorem}
  \label{thm:lens_functor_clac}
  Lenses with backward passes additive in the second component form a functor
  \[
    \LensA : \CLACat \to \CLACat
  \]
\end{theorem}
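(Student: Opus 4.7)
The plan is to verify the three standard requirements: that $\LensA$ sends CLA categories to CLA categories, that it lifts CLA-preserving functors, and that it strictly preserves identities and composition.

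First I would show that $\LensA(\cC)$ is a CLA category whenever $\cC$ is. For the cartesian structure, the product $\diset{A}{A'} \times \diset{B}{B'} = \diset{A \times B}{A' \times B'}$ is constructed essentially as in \cref{prop:lens_a_cartesian}, where crucially $\asc{\cC}$ has biproducts by \cref{prop:linsc_biproduct} so $A' \oplus B' \cong A' \times B'$; the projection lenses have forward part $\pi_A : A \times B \to A$ and backward part given by the injection $A \times B \times A' \to A' \times B'$, and the universal property of the product in $\LensA(\cC)$ follows from that in $\cC$ together with additivity of the backward component. For the left-additive structure, I equip each $\diset{A}{A'}$ with a commutative monoid: the sum lens has forward part $+_A : A \times A \to A$ and backward part the duplication $A \times A \times A' \to A' \times A'$, $(a_1, a_2, v) \mapsto (v, v)$, which is manifestly additive in $v$; the zero lens has forward part $0_A$ and the unique additive map in the backward direction. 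Routine checks (best done diagrammatically) verify commutativity, associativity, unitality, and compatibility with the product structure as demanded by \cref{def:clac}.

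Next I would define the action on morphisms. A morphism in $\CLACat$ is a cartesian left-additive functor $F : \cC \to \cD$, i.e., a product-preserving functor that preserves the chosen monoid structures (equivalently, sends additive morphisms to additive morphisms, so it restricts to a functor $\asc{F} : \asc{\cC} \to \asc{\cD}$). Define $\LensA(F)$ by $\diset{A}{A'} \mapsto \diset{F(A)}{F(A')}$ on objects and $(f, f') \mapsto (F(f), F(f'))$ on morphisms. Well-definedness reduces to showing that $F(f')$ is additive in its second component whenever $f'$ is: using \cref{def:additive_second_variable}, $f'$ being additive in the second variable means it is an additive morphism in $\CoKl(A \times -)$, and since $F$ preserves products and additive morphisms it induces a CLA functor between the relevant coKleisli categories that preserves additivity.

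Then I would check that $\LensA(F)$ preserves lens identities and composition as in \cref{def:lens_composition_identity}: both preservations are immediate on forward parts (since $F$ is a functor) and on backward parts (since $F$ preserves products and reparameterisations by $\graph(f)$, as $\graph(F(f)) = \langle \id, F(f) \rangle = F(\langle \id, f \rangle)$). Preservation of the CLA structure on $\LensA$ follows because the sum and zero lenses on $\LensA(\cC)$ are built from the sum/zero maps and duplication/terminal maps of $\cC$, all of which $F$ preserves strictly.

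Finally, functoriality of $\LensA$ itself is immediate: $\LensA(\id_\cC)$ is the identity on $\LensA(\cC)$ componentwise, and $\LensA(G \comp F) = \LensA(G) \comp \LensA(F)$ componentwise. The main obstacle I expect is not any single deep step but rather the careful verification of the left-additive axioms for $\LensA(\cC)$, in particular showing that the duplication-based backward component of the sum lens interacts coherently with the biproduct-based product structure so that the compatibility square in \cref{def:clac} commutes; this is where the interchange between cartesian structure of $\cC$ and biproduct structure of $\asc{\cC}$ is used essentially.
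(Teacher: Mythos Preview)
Your proposal is correct and follows essentially the same approach as the paper: you establish that $\LensA(\cC)$ is cartesian (via \cref{prop:lens_a_cartesian}) and left-additive (your sum/zero lenses are exactly those of \cref{prop:lens_cart_left_additive}), define $\LensA(F)$ componentwise, verify it is a CLA functor, and conclude with routine functoriality of $\LensA$ itself. The only cosmetic difference is that the paper writes the backward component of $\LensA(F)(f,f')$ as $\overline{f'} \coloneqq F(A) \times F(B') \cong F(A \times B') \xrightarrow{F(f')} F(A')$, making the product-preservation isomorphism explicit, whereas you absorb it into the notation $F(f')$; your additivity-in-the-second-variable argument via the induced CLA functor on coKleisli categories is exactly the content of the paper's \cref{lemma:cla_preserves_additive}.
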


In order to do that, we need to precisely specify the category of cartesian left-additive categores. 

\begin{definition}
  \label{def:clacat}
  We call $\CLACat$ the category whose objects are cartesian left-additive categories and whose morphisms are cartesian left-additive functors (functors which preserve products and commutative monoid structure on objects (\cite[Definition 1.3.1]{blute_cartesian_2009})).
\end{definition}

Now we can prove $\LensA$ is a functor.
Recall that it takes in a cartesian left-additive category $\cC$ and produces the category $\LensA(\cC)$.
We have shown it is cartesian, and we now proceed to show it is left-additive as well.

\begin{proposition}
  \label{prop:lens_cart_left_additive}
  The category $\LensA(\cC)$ is cartesian left-additive.
\end{proposition}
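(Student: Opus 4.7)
The plan is to equip each object $\diset{A}{A'}$ of $\LensA(\cC)$ with a commutative monoid structure compatible with the product from \cref{prop:lens_a_cartesian}, and verify the axioms of \cref{def:clac}. Recall that the product is $\diset{A}{A'} \times \diset{A}{A'} = \diset{A \times A}{A' \oplus A'}$, where $\oplus$ is the biproduct of $\asc{\cC}$ (\cref{prop:linsc_biproduct}). The monoidal unit is $\diset{1}{0}$ with $1$ terminal in $\cC$ and $0$ the zero object of $\asc{\cC}$.

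First I would define, for each $\diset{A}{A'}$, the sum lens $+_{\diset{A}{A'}} : \diset{A \times A}{A' \oplus A'} \to \diset{A}{A'}$ whose forward component is the monoid sum $+_A : A \times A \to A$ inherited from $\cC$, and whose backward component is the copairing $[\id_{A'}, \id_{A'}] : A' \oplus A' \to A'$, which is an additive (indeed, biproduct) morphism and hence lives in $\asc{\cC}$ as required. The zero lens $0_{\diset{A}{A'}} : \diset{1}{0} \to \diset{A}{A'}$ has forward component $0_A : 1 \to A$ from $\cC$ and backward component the unique map $0 \to A'$ from the initial/zero object. Both are additive in the required variable because they arise from additive ($\asc{\cC}$) data on the backward pass.

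Next I would check commutativity, associativity and unitality of this monoid, which reduces on the forward pass to the corresponding laws for $(+_A, 0_A)$ in $\cC$ (holding by assumption), and on the backward pass to the fact that $\oplus$ is a biproduct, so that copairings of identities and the zero map automatically satisfy the commutative monoid laws. Concretely, the backward codiagonal $[\id,\id]$ inherits commutativity from the symmetry of $\oplus$ and unitality from the zero object; these are routine diagrammatic computations in $\asc{\cC}$.

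Finally I would verify the compatibility diagrams of \cref{def:clac}, namely that $+_{\diset{A}{A'} \times \diset{B}{B'}}$ agrees with the interchanger followed by $+_{\diset{A}{A'}} \times +_{\diset{B}{B'}}$, and likewise for zeros. On the forward leg this is exactly the left-additivity interchange law in $\cC$. On the backward leg it is the corresponding interchange for the biproduct $\oplus$, which holds because in any biproduct category the codiagonal, swap and zero maps satisfy precisely these compatibilities. The main obstacle is bookkeeping: one must carefully unfold lens composition (\cref{def:lens_composition_identity}) to see that the forward and backward legs of the required equations \emph{separately} reduce to the two known compatibility squares, one in $\cC$ and one in $\asc{\cC}$. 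Once this splitting is established, each half is immediate, giving the cartesian left-additive structure on $\LensA(\cC)$.
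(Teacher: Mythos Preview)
Your proposal has the right overall shape, but your backward maps are typed in the wrong direction. Recall that a morphism $\diset{X}{X'} \to \diset{Y}{Y'}$ in $\LensA(\cC)$ (in the representation of \cref{eq:concrete_additive_optic2}) has backward component of type $X \times Y' \to X'$, additive in the $Y'$-argument. For the multiplication lens $+_{\diset{A}{A'}} : \diset{A\times A}{A'\oplus A'} \to \diset{A}{A'}$ this means the backward map must have type $(A\times A)\times A' \to A'\oplus A'$, not $A'\oplus A'\to A'$. Your copairing $[\id_{A'},\id_{A'}]$ goes the wrong way. The correct backward map is the one the paper uses: $\pi_2 \comp \Delta_{A'}$, i.e.\ discard the $A\times A$ context and \emph{copy} the incoming $A'$. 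Similarly, for $0_{\diset{A}{A'}}:\diset{1}{0}\to\diset{A}{A'}$ the backward component must be the delete map $1\times A' \to 0$, not ``the unique map $0\to A'$''.

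This is not just a cosmetic slip: with your typing, none of your subsequent verifications make sense, because you are checking coassociativity/counitality of a codiagonal where you should be checking associativity/unitality of a diagonal. (In a biproduct category the underlying objects coincide, so it is easy to confuse them; but the \emph{direction} matters for the lens type.) Once you swap codiagonal for diagonal and initial-map for terminal-map, your outline is essentially the paper's: define the monoid as $(+_A,\text{copy})$ and $(0_A,\text{delete})$, and then the monoid laws and the compatibility diagrams of \cref{def:clac} reduce on the forward leg to left-additivity of $\cC$ and on the backward leg to the comonoid laws of the cartesian structure---all routine.
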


\begin{proof}
  We need to equip $\LensA(\cC)$ with a commutative monoid on every object in a way that's compatible with the cartesian structure.\footnote{We don't need to show that this monoid is unique, just that it exists and can be canonically defined.}
  That is, for every object $\diset{A}{A'}$ we need to provide two morphisms:
\begin{itemize}
\item \textbf{Unit } $0_{\diset{A}{A'}} : \diset{1}{1} \to \diset{A}{A'}$. This is a lens whose forward map we set as the zero map $0_A$ and the backward map as the delete $!_{1 \times A'}$.
\item \textbf{Multiplication} $+_{\diset{A}{A'}} : \diset{A \times A}{A' \times A'} \to \diset{A}{A'}$. This is a lens whose forward map we set as sum $+_A$ and the backward map as copy, i.e.\ $\pi_2 \comp \Delta_{A'}$.
\end{itemize}
Additionally, these morphisms need to obey the monoid laws.
This can be verified by routine.
\end{proof}

This defines the action of $\LensA$ on objects of $\CLACat$.
Action on morphisms is defined below.

\begin{restatable}{proposition}{LensFunctorCartLeftAdd}
  \label{prop:lens_functor_cart_left_add}
  Let $F : \cC \to \cD$ be a cartesian left-additive functor.
  This induces a cartesian left-additive functor $\LensA(F)$ between the corresponding categories of lenses: 
  
  \begin{equation}
    \label{eq:lensa_action_on_morphisms}
    \begin{tikzcddiag}[ampersand replacement=\&]
        {\LensA(\cC)} \&\&\& {\LensA(\cD)} \\
        {\diset{A}{A'}} \&\&\& {\diset{F(A)}{F(A')}} \\
        \\
        {\diset{B}{B'}} \&\&\& {\diset{F(B)}{F(B')}}
        \arrow["{ \LensA(F)}", from=1-1, to=1-4]
        \arrow["{\diset{f}{f'}}"', from=2-1, to=4-1]
        \arrow["{\diset{F(f)}{\overline{f'}}}", from=2-4, to=4-4]
        \arrow[maps to, from=2-1, to=2-4]
        \arrow[maps to, from=4-1, to=4-4]
      \end{tikzcddiag}
  \end{equation}
  where $\overline{f'} \coloneqq F(A') \times F(B') \cong F(A' \times B') \xrightarrow{F(f')} F(A')$. 
\end{restatable}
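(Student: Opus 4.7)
The plan is to verify the four properties that constitute ``$\LensA(F)$ is a cartesian left-additive functor'': (i) well-definedness on morphisms (the backward component has the correct type and retains additivity in its second variable), (ii) functoriality (preservation of identities and composition), (iii) preservation of products, and (iv) preservation of the canonical commutative monoid structure on each object (as defined in \cref{prop:lens_cart_left_additive}).

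For (i), the type of $\overline{f'}$ works out on the nose because $F$ preserves products, so $F(A') \times F(B') \cong F(A' \times B')$ gives a canonical map into $F(A')$ after post-composing with $F(f')$. To see that $\overline{f'}$ is additive in its second component, I would unpack \cref{def:additive_second_variable}: the assumption that $f' : A \times B' \to A'$ is additive in $B'$ is the statement that $f'$ is a commutative monoid homomorphism when viewed in $\CoKl(A \times -)$. Since $F$ is cartesian left-additive, applying $F$ preserves both the cartesian structure indexing the coKleisli category and the commutative monoid structure on objects, so $F(f')$ transports to a morphism additive in $F(B')$ in $\CoKl(F(A) \times -)$; conjugating by the canonical iso $F(A') \times F(B') \cong F(A' \times B')$ preserves additivity because this iso is itself an additive isomorphism.

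For (ii), the identity on $\diset{A}{A'}$ has forward part $\id_A$ and backward part $\pi_2 : A \times A' \to A'$; both are preserved on the nose by $F$ (identity by functoriality, $\pi_2$ because $F$ preserves products). Composition requires more care: the backward part of the composite $\diset{f}{f'} \comp \diset{g}{g'}$ is $(g' \compPara{\comp} f')^{\graph(f)}$ (\cref{def:lens_composition_identity}). The two operations that appear are the graph $\graph(f) = \langle \id, f\rangle$, which is preserved by $F$ (again by cartesianness), and parametric composition in $\Para(\cC)$, which is preserved by any product-preserving functor via the pseudofunctoriality of $\Para$ in the base (\cref{prop:para_morphism} applied to the cartesian $\cC$-actegory on $\cC$ and the induced action on $\cD$). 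Combining these two yields the equality $\LensA(F)(\diset{f}{f'} \comp \diset{g}{g'}) = \LensA(F)(\diset{f}{f'}) \comp \LensA(F)(\diset{g}{g'})$.

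For (iii), recall from \cref{prop:lens_a_cartesian} that products in $\LensA(\cC)$ are $\diset{A \times B}{A' \oplus B'}$. Since $F$ preserves products in $\cC$ and, being cartesian left-additive, sends biproducts in $\asc{\cC}$ to biproducts in $\asc{\cD}$, we get $\LensA(F)(\diset{A}{A'} \times \diset{B}{B'}) = \diset{F(A \times B)}{F(A' \oplus B')} \cong \diset{F(A) \times F(B)}{F(A') \oplus F(B')} = \LensA(F)(\diset{A}{A'}) \times \LensA(F)(\diset{B}{B'})$, and this iso is coherent with the structural projections (which are themselves built out of projections and zero maps, both preserved by $F$). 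For (iv), the zero lens $0_{\diset{A}{A'}}$ has forward $0_A$ and backward $\terminal_{1 \times A'}$; left-additivity of $F$ gives $F(0_A) = 0_{F(A)}$, and preservation of terminal objects gives the backward part. The sum lens $+_{\diset{A}{A'}}$ has forward $+_A$ and backward $\pi_2 \comp \Delta_{A'}$, both of which are preserved for the same reasons.

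The main obstacle is bookkeeping in (ii): the composition formula in $\LensA$ nests a parametric composition inside a reparameterisation along the graph map, and one must track how the comparison iso $F(A' \times B') \cong F(A') \times F(B')$ propagates through both operations. However, once one sets up the argument via the pseudofunctoriality statement $\Para(F) \circ \compPara{\comp} = \compPara{\comp} \circ (\Para(F) \times \Para(F))$ and uses that $F$ commutes with $\graph$, the verification is a routine diagram chase rather than a substantive calculation.
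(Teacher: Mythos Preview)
Your proposal is correct and follows essentially the same decomposition as the paper's proof: well-definedness (additivity in the second variable, which the paper derives from \cref{lemma:cla_preserves_additive}), functoriality, cartesianness, and left-additivity. The only organisational difference is that for cartesianness the paper verifies preservation of the comonoid data $(!_{\diset{A}{A'}}, \Delta_{\diset{A}{A'}})$ directly (via the Fox characterisation) rather than checking the product formula of \cref{prop:lens_a_cartesian} as you do, and for composition the paper simply declares the check ``routine'' whereas you route it through the pseudofunctoriality of $\Para$ in \cref{prop:para_morphism}; both variants amount to the same bookkeeping.
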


\begin{proof}
  The above proposition defines the action on objects and morphisms.
  What remains to prove is that identities and composition is preserved, as well as that this functor is cartesian left-additive.
  We defer this to the Appendix (\cref{appendix:cart_left_additive}).
\end{proof}

What remains to show is that $\LensA$ preserves identities and composition, which follows routinely, concluding the proof of \cref{thm:lens_functor_clac}.

This functor has additional structure --- it is copointed.\footnote{Despite this the functor $\LensA$ does not have the comonad structure, for similar reasons that tangent categories do not.}

\begin{proposition}[{Copointed structure of $\LensA$}]
  \label{prop:lensa_copointed}
  There is a natural transformation
  \[
    \begin{tikzcddiag}
        \CLACat && \CLACat
        \arrow[""{name=0, anchor=center, inner sep=0}, "{\LensA}", curve={height=-12pt}, from=1-1, to=1-3]
        \arrow[""{name=1, anchor=center, inner sep=0}, "\id"', curve={height=12pt}, from=1-1, to=1-3]
        \arrow["\epsilon", shorten <=3pt, shorten >=3pt, Rightarrow, from=0, to=1]
      \end{tikzcddiag}
  \]
    assigning to cartesian left-additive category $\cC$ a cartesian left-additive functor $\epsilon_{\cC} : \LensA(\cC) \to \cC$ which forgets the backward part of the lens.
\end{proposition}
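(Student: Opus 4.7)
The plan is to first specify each component functor $\epsilon_{\cC}$ explicitly, then verify functoriality, cartesian left-additive structure preservation, and finally naturality in $\cC$. On objects, define $\epsilon_{\cC}(\diset{A}{A'}) := A$, and on a lens $(f,f') : \diset{A}{A'} \to \diset{B}{B'}$ define $\epsilon_{\cC}(f,f') := f$. In other words, $\epsilon_{\cC}$ is the ``pick out the forward pass'' assignment implicit in the factorisation of Proposition~\ref{prop:optic_cartesian}.

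Next I would check that $\epsilon_{\cC}$ is a (strict) functor. Identities are immediate: by Definition~\ref{def:lens_composition_identity}, the identity lens on $\diset{A}{A'}$ has forward part $\id_A$. For composition, recall that the forward part of $(f,f') \comp (g,g')$ is defined to be $f \comp g$, so $\epsilon_{\cC}((f,f') \comp (g,g')) = f \comp g = \epsilon_{\cC}(f,f') \comp \epsilon_{\cC}(g,g')$.

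Then I would verify that $\epsilon_{\cC}$ is cartesian left-additive. For the cartesian part, Proposition~\ref{prop:lens_a_cartesian} gives $\diset{A}{A'} \times \diset{B}{B'} = \diset{A \times B}{A' \oplus B'}$, whose image under $\epsilon_{\cC}$ is precisely $A \times B$; the projections likewise forget to the $\cC$-projections. For the commutative monoid structure on each object $\diset{A}{A'}$ as described in Proposition~\ref{prop:lens_cart_left_additive}, the forward parts of unit $0_{\diset{A}{A'}}$ and multiplication $+_{\diset{A}{A'}}$ were defined to be exactly $0_A$ and $+_A$, so $\epsilon_{\cC}$ visibly preserves them.

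Finally, naturality amounts to showing that for every cartesian left-additive $F : \cC \to \cD$ the square
\[\begin{tikzcd}
\LensA(\cC) \ar[r, "\LensA(F)"] \ar[d, "\epsilon_{\cC}"'] & \LensA(\cD) \ar[d, "\epsilon_{\cD}"] \\
\cC \ar[r, "F"'] & \cD
\end{tikzcd}\]
commutes. On objects both paths send $\diset{A}{A'}$ to $F(A)$. On a morphism $(f,f')$, Proposition~\ref{prop:lens_functor_cart_left_add} tells us that $\LensA(F)(f,f') = (F(f), \overline{f'})$, whose forward part is $F(f)$; the other path also gives $F(f)$. This is a strict equality, not merely up to isomorphism, since we chose $\LensA(F)$ to act as $F$ on forward parts. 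No real obstacle arises; the only mildly subtle point is making sure the chosen representatives of the lens equivalence classes commute with $F$, but this is immediate because $F$ is already strict on objects and all data defining $\epsilon$ lies in the ``forward'' projection, which $\LensA(F)$ preserves on the nose.
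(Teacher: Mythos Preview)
Your proposal is correct and is exactly the approach the paper takes. The paper in fact does not supply a proof for this proposition at all: it simply states that $\epsilon_{\cC}$ ``forgets the backward part of the lens'' and moves on, treating the verification as routine. Your write-up fills in precisely the expected details (functoriality via \cref{def:lens_composition_identity}, preservation of the cartesian and left-additive structure via \cref{prop:lens_a_cartesian} and \cref{prop:lens_cart_left_additive}, and naturality via the explicit description of $\LensA(F)$ in \cref{prop:lens_functor_cart_left_add}), so there is nothing to add.
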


This brings us to the main definition.
Its name is inspired by the existing categorical framework for differentiation called \emph{reverse derivative categories} (RDCs) (\cite{cockett_reverse_2020}).
We remark more on it in \cref{thm:coalgebra_rdc}, \cref{rem:crdc_independent} and the literature review (\cref{sec:backprop_literature}).

\begin{definition}
  \label{def:gcrdc}
  A \newdef{generalised cartesian reverse derivative category} is a coalgebra of the copointed $\LensA$ endofunctor.
\end{definition}

This definition packs a lot of punch.
Explicitly, it consists of a choice of a cartesian left-additive category $\cC$ and a cartesian left-additive functor $\RC : \cC \to \LensA(\cC)$ such that the following diagram commutes

\begin{equation}
  \label{eq:projection}
  \begin{tikzcddiag}
	{\LensA(\cC)} && \cC \\
	  \\
	  \cC
	  \arrow["{\epsilon_{\cC}}", from=1-1, to=1-3]
	  \arrow[Rightarrow, no head, from=3-1, to=1-3]
	  \arrow["\RC", from=3-1, to=1-1]
  \end{tikzcddiag}
\end{equation}

A good way to understand it is through the aforementioned framework of reverse derivative categories, for whose axioms it serves as a justification.

\begin{theorem}
  \label{thm:coalgebra_rdc}
  Every cartesian reverse derivative category (\cref{def:crdc}) is a \emph{generalised} cartesian reverse derivative category.
\end{theorem}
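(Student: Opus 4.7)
The plan is to produce the coalgebra structure map $\RC : \cC \to \LensA(\cC)$ directly from the reverse derivative combinator $R$ of a CRDC, and then to read off the coalgebra axiom and the functoriality/structure-preservation conditions as precisely the standard RDC axioms (RD.1--RD.7).

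First I would define the candidate structure map $\RC$ on objects by $\RC(A) \coloneqq \diset{A}{A}$, and on a morphism $f : A \to B$ by
\[
  \RC(f) \coloneqq \diset{f}{R[f]} \; : \; \diset{A}{A} \longrightarrow \diset{B}{B}
\]
using the concrete representation of $\LensA(\cC)$ from \cref{eq:concrete_additive_optic2}, where $R[f] : A \times B \to A$ is the reverse derivative and is additive in its second component by axiom RD.2. The coalgebra equation $\epsilon_{\cC} \circ \RC = \id_{\cC}$ in \cref{eq:projection} is immediate because $\epsilon_{\cC}$ forgets the backward component of a lens, so $\epsilon_{\cC}(\RC(f)) = f$.

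Next I would check that $\RC$ is a functor. Preservation of identities, $\RC(\id_A) = \id_{\diset{A}{A}}$, reduces to the fact that $R[\id_A]$ equals the second projection $\pi_2 : A \times A \to A$, which is (a specialisation of) axiom RD.1 together with the definition of identity in $\LensA(\cC)$ (\cref{def:lens_composition_identity}). Preservation of composition is the heart of the argument: unpacking the lens composition from \cref{def:lens_composition_identity}, the backward part of $\RC(f) \comp \RC(g)$ is obtained by pairing the forward pass of $f$ with $R[f]$ and $R[g]$ in the standard chain-rule shape, and the equality with $R[f \comp g]$ is exactly the RDC chain rule axiom (RD.5 in \cite{cockett_reverse_2020}).

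Then I would verify that $\RC$ is a morphism in $\CLACat$, i.e.\ that it is cartesian and left-additive. For the cartesian part, I would check preservation of the terminal object (trivial, since $\RC(1) = \diset{1}{1}$ is terminal in $\LensA(\cC)$) and preservation of binary products; the latter amounts to showing that $R$ behaves correctly on pairings and projections, which follows from axioms RD.3 and RD.4. For the left-additive part, one checks that $\RC(0_A) = 0_{\diset{A}{A}}$ and $\RC(f + g) = \RC(f) + \RC(g)$, which match axioms RD.2 and RD.3 (additivity of $R$ in its first component of input). In each case, the monoid structure on $\LensA(\cC)$ constructed in \cref{prop:lens_cart_left_additive} is built precisely out of sums and zeros in $\cC$ on the forward pass and copy/delete on the backward pass, which is the combinatorial shape that the RDC additivity axioms assert for $R$.

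The main obstacle I expect is purely bookkeeping: one has to carefully match the particular shape of composition in $\LensA(\cC)$ (with its reparameterisation by $\graph(f)$ in \cref{def:lens_composition_identity}) against the chain rule as stated in the RDC axiomatisation, and similarly for the interaction with products, where the RDC axiom RD.4 is phrased in terms of projections rather than in terms of the biproduct structure on $\asc{\cC}$ that appears implicitly in \cref{prop:lens_a_cartesian}. No new mathematical idea is needed beyond translating each RDC axiom into the corresponding functoriality or structure-preservation equation for $\RC$; this is a routine but lengthy verification, and essentially recapitulates (in the optic language) the correspondence already established in \cite{cruttwell_categorical_2022}.
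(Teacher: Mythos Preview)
Your approach is essentially identical to the paper's: define $\RC(A)=\diset{A}{A}$, $\RC(f)=\diset{f}{R[f]}$, and translate each CRDC axiom into the corresponding coalgebra/functoriality/structure-preservation condition for $\RC$.

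One point deserves correction, though. You write that the coalgebra conditions are ``precisely the standard RDC axioms (RD.1--RD.7)''. In fact only RD.1--RD.5 are used: RD.2 gives well-definedness of the backward map in $\LensA$, RD.3 gives preservation of identities and projections, RD.4 gives preservation of pairings (hence cartesian structure), RD.5 gives preservation of composition, and RD.1 gives left-additivity. Axioms RD.6 and RD.7 are \emph{not} recovered by the coalgebra formulation, as the paper notes explicitly in \cref{rem:crdc_independent}; this is why the notion is called a \emph{generalised} CRDC and why the theorem is a one-way implication. Your axiom attributions are also slightly scrambled in places (you cite RD.1 for identity preservation where RD.3 is meant, and RD.2--RD.3 for left-additivity where RD.1 is meant), but these are labelling slips rather than gaps in the argument.
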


\begin{proof}
  As we're already starting with a cartesian left-additive category $\cC$, what remains to prove is that there is reverse derivative combinator (\cite[Definition 13]{cockett_reverse_2020}) for which the first five axioms of RDC's are satisfied.
  The reverse derivative combinator is precisely the data of the functor $\RC : \cC \to \LensA(\cC)$, where \cref{eq:projection} necessitates that the only choice involved in this functor is one of the backward pass. 
  When it comes to the axioms, we show how they arise one by one:
\begin{enumerate}
\item \textbf{Additivity of reverse differentiation.} This is recovered by $\RC$ preserving left-additive structure.
\item \textbf{Additivity of reverse derivative in the second component.} This is recovered by definition of $\LensA$ --- the backward maps are additive in the 2\textsuperscript{nd} component.
\item \textbf{Coherence with identities and projections.} Coherence with
  identities is recovered by preservation of identities of the functor $\RC$, where for every $X : \cC$, $\RC(\id_X) = \id_{\RC(X)} = (\id_X, \pi_2 : X \times X \to X)$. Coherence with projections is recovered by $\RC$ preserving cartesian structure.
\item \textbf{Coherence with pairings.} Recovered by $\RC$ preserving cartesian structure.
\item \textbf{Reverse chain rule.} This is recovered by functoriality of $\RC$.
\end{enumerate}
\end{proof}

\begin{remark}
  \label{rem:crdc_independent}
  A cartesian reverse derivative category has two additional axioms --- the 6th and 7th axiom --- which are independent from the others, and not captured with our coalgebraic construction \cite[Sec. 2.3]{cockett_differential_2014}.
  As we will see in the rest of this thesis, we will not need these two axioms to compositionally model supervised learning.\footnote{Though they might become important in future work where we hope to prove properties of these learning systems.}
\end{remark}

\begin{example}[{$\Smooth$, compare \cite[Ex. 14.2]{cockett_reverse_2020}}]
  \label{ex:smooth_grdc}
The cartesian left-additive category $\Smooth$ is an example of a generalised cartesian reverse derivative category, with $\RC(f) \coloneqq \diset{f}{R[f]}$.
\end{example}

\begin{example}[{$\Poly_R$, compare \cite[Ex. 14.1]{cockett_reverse_2020}}]
  \label{ex:polyr_grdc}
  The cartesian left-additive category $\Poly_R$ is an example of a generalised cartesian reverse derivative category, with $\RC(f) \coloneqq \diset{f}{R[f]}$.
\end{example}

\begin{mybox}[label=box:backpropagate_bits]{ForestGreen}{What does it mean to backpropagate bits?}
  In categories like $\Smooth$ or $\PolyR$, given a map $f : X \to Y$ the input $y' : Y$ to its reverse derivative $R[f](x) : Y' \to X'$ at a point $x : X$ has a straightforward interpretation: we think of it as a vector which denotes the direction of steepest descent at $f(x)$.\footnote{Even more preciely, it consumes a \emph{covector}. See \cref{sec:backprop_literature}.} 
  The map $R[f](x)$ then backpropagates this information to $X'$, computing the direction of steepest descent at $x$.
  But how does this work in $\PolyZ$?
  That is, given some polynomial $f : \Z^n_2 \to \Z^m_2$, what is the ``steepest-descent'' interpretation of the input of $R[f][x] : \Z^m_Z \to \Z^n_2$ for some input $x$? The answer is as follows.

  Here the gradient vector $y' : \Z^m_2$ is \emph{bit-valued}, and the information it communicates is whether flipping each one of bits $f(x)_i$ would have yielded a better outcome.
  The map $R[f](x)$ backpropagates this information to $\Z^n_2$, computing whether each of the bits $x_i$ should be flipped.
\end{mybox}

\begin{example}[{coKleisli category (compare \cite[Theorem 2]{gavranovic_graph_2022})}]
  \label{ex:cokleisli_of_grdc}
  If $\cC$ is a generalised cartesian reverse derivative category, then $\CoKl(A \times -)$ is too, for every $A : \cC$.
\end{example}

This concludes the main aspect of this section --- showing us how to categorically model differentiation using only the framework of lenses.
We will see later how this functor will prove important for differentiating parametric morphisms in a compositional way. 

\subsection{Operational concerns}
\label{subsec:operational}

In this short subsection we mention the distinction between denotational and operational semantics of optics.
Denotational ones we have already described --- they are the ones arising our the 1-categorical framework.
In this setting cartesian optics are isomorphic to lenses (\cref{eq:lens_representation}).
This allows us to move between their representatives as we wish, as they all compute the same output.
However, implementing these in code necessitates a choice of representation.
And different choices induce programs with different performance profiles.

This is one of the issues of implementing the chain rule naively: thinking of it as a dynamic programming task, it is one with \emph{overlapping subproblems}.
In other words, with sufficient memory, computing the derivative of a composite of functions can take advantage of already computed solutions.
There is a systematic way to interpret optics as a bicategory where instead of computing a quotient and obtaining hom-sets we instead think of the hom-objects in their natural form as categories.
In such a setting lenses embody a \emph{different} composition rule than optics, one with a different \emph{space-time tradeoff}.
This tradeoff precisely captures distinction between code that is gradient checkpointed versus one that is not.
We hypothesise that such a 2-categorical framework can provide formal grounds for informing various rewrite rules and compiler optimisations.
See \cite{gavranovic_space-time_2022} for more details.

\section{Backpropagation via category theory in the literature}
\label{sec:backprop_literature}

Differentiation comes in many shapes and forms, and so do its categorical frameworks.

One of the biggest clusters is centered around differential categories (CDCs) (\cite{blute_cartesian_2009}), a construction admitting numerous generalisations and extensions (\cite{cockett_reverse_2020,cockett_faa_2011,cruttwell_monoidal_2022,cockett_differential_2014}).
We note that the second reference from this list formulates CDCs as a coalgebra of a particular functor, much like we do in this thesis.
Unlike us, they capture all 7 axioms of CDC! Contrary to us, they do not establish any connections reverse derivatives, nor the optical literature, something we deem is crucial if one wants to obtain a refined operational specification of these constructions, and generalise them to a wide array of bidirectional processes, from bayesian learning to value iteration.
Reverse derivative categories have also been generalised to the purely monoidal setting in \cite{cruttwell_monoidal_2022}.
We conjecture this construction fits into the general story with monoidal weighted optics, as optics do not assume any cartesianness.
We also believe the optical framework in general serves as a powerful justification the axioms of reverse derivative categories.
It removes the need for a monolithic list of axioms, instead compartmentalising them into more conceptual pieces (see the proof of \cref{thm:coalgebra_rdc}).

Differential categories are examples of tangent categories (\cite{cockett_differential_2014,garner_embedding_2018}), a framework that additionally captures manifolds and various other dependently-typed constructs, outside of reach of differential categories which are non-dependent.
Tangent categories spawned a lot of interesting research, especially related to exponentials and curve objects (\cite{cockett_differential_2021}), concepts useful for studying differential equations in tangent categories.
We conjecture an analogous characterisation of tangent categories can be done in the language of dependent lenses, where $\LensA$ would be replaced by $\DLensA$, the category of dependent lenses additive in the second component of their backward pass.
Understanding operational aspects of these dependent lenses through their optical characterisation is part of ongoing research on dependent optics (\cref{subsec:dependent_optics}).

On the other hand, there is a lot of work in many ways disjoint from cartesian differential and tangent categories that studies more practical aspects of differentiation.
As briefly mentioned in the introduction, the work of (\cite{alvarez-picallo_functorial_2021}) proves the soundness of the original description of AD (\cite{pearlmutter_reverse-mode_2008}) using the semantic category of reverse derivative categories.
It's worthy to mention that they also use a graphical language, but not the one arising from the graphical language of lenses and optics themselves.
Unifying these two graphical languages is something we hope to explore in the future.
The work \cite{elliott_simple_2018} presents an elegant, purely functional formulation of both forward and reverse mode automatic differentiation that is homomorphic with respect to a collection of standard categorical abstractions --- like those described in \cref{sec:backprop_as_functor}, but applicable to a language with only limited features, such as first-order functions.
Combinatory Homomorphic Automatic Differentiation (CHAD) (\cite{vakar_chad_2022}, and also its predecessor \cite{vakar_reverse_2021}) takes this work further and introduced a compositional, type-respecting method that performs source code translation and generates purely functional code, extending it to languages with higher-order functions.
We believe this bears a close resemblance to the framework presented here, especially with respect to usage of the linear subcategories of smooth maps.
We note that like us, they also do not distinguish between linear and additive maps.
While they provide \emph{a} categorical framework, they do not axiomatise it, or show how it is related to existing axiomatic frameworks such as tangent categories.

A more precise way to characterise backpropagation of gradient vectors is by framing it as the pullback of gradient \emph{covectors}.
Covectors of some vector space $V$ are vectors in the dual vector space of linear functionals, i.e.\ \emph{valuations} on vectors, represented as linear maps into the base field.
This appears amenable for capturing with the putative folkoric idea of \emph{reverse categories}, yet to be defined.\footnote{They have been defined after this thesis was submitted for examination; see \cite{cruttwell_reverse_2023}.}
Fragments of this idea can already be seen in \cite{elliott_simple_2018} where it is noted that backpropagation can be studied as the image of the representable functor $\FVectR(-, \R) : \FVectR^{\op} \to \FVectR$ which flips the direction from forward-mode AD to backward-mode AD.
Despite this, the underlying covectors as still treated as mere vectors as a result of the isomorphism $V \cong \internalHom{\FVectR}{V}{\R}$ in the finite-dimensional case. 
Covectors are more explicitly acknowledged in \cite{dalrymple_dioptics_2019}.
There the author formulates reverse-mode AD as a functor from trivialisable diffeological spaces (\cite{laubinger_diffeological_2006}) into optics whose objects are of the form $\diset{X}{\internalHom{\FVectR}{X'}{\R}}$.
This is a functor that is symmetric monoidal, and the author conjectures it can be made lax, additionally capturing counterfactual reasoning in game theory.
This is subsequently built upon by \cite{capucci_diegetic_2023} where it is shown that gradient-based learning can be thought of as "infinitesimal counterfactual reasoning" in a way that almost completely overlaps with feedback propagation in open games \cite{ghani_compositional_2018}.\footnote{On the other hand, this assumption completely breaks down compositionality of players! See \cite[Remark 2.1]{capucci_diegetic_2023}. Understanding the implications of this fact is something we hope to explore in the future.}
This suggests that a unified framework of a general notion of differentiation can capture a variety of settings, including even those of bayesian reasoners.

Interesting work has been done related to formalising differentiation in a setting of incremental computation under the name of \emph{change actions} \cite{alvarez-picallo_change_2020}.
For completeness we also mention that automatic differentiation has been characterised also as a fold \cite{nguyen_folding_2022}.
And lastly, the $\LensA$-coalgebra framework can be seen as an example of a dynamical systems doctrine (\cite{myers_categorical_2022}), establishing connections to general systems theories.

	\chapter{Backprop through Structure}
\label{ch:architectures}

\epigraph{When the limestone of imperative programming has worn away, the granite of functional programming will be revealed underneath.}{Simon Peyton Jones}

\newthought{Throughout the last few decades} the number of neural network architectures --- structured ways of constructing their forward passes --- has proliferated.
Today neural networks can be ``feedforward'', ``recurrent'' (\cite{schmidt_recurrent_2019}), ``convolutional'' (\cite{bengio_convolutional_1995}), ``residual'' (\cite{he_deep_2016}),  ``graph'' (\cite{wu_comprehensive_2021}), ``topological'' (\cite{papillon_architectures_2023}), ``generative-adversarial'' (\cite{goodfellow_generative_2014}), ``autoregressive'' (\cite{gregor_deep_2014}), ``autoencoding'' (\cite{bank_autoencoders_2021}), and even hybrid, i.e.\ composed out of more than one type of architecture at the same time (\cite{rombach_high-resolution_2022,ramesh_hierarchical_2022,openai_introducing_2022}).
The number of architectures is increasing, and many of them are becoming respective subfields of deep learning with sizeable communities of researchers.

Despite this explosive growth, this zoo of architectures is tied together by very few unifying principles.
Most of these architectures come with their own theoretical underpinning, best practices, evaluation measures and often different ways of thinking about the field as a whole.
They require different types of underlying datasets, possess different expressive power, and exhibit a wide range of inductive biases.
Although many of these architectures are special cases of each other, the current fast-paced nature of deep learning research made it hard to translate results and theorems across them.
This is problematic for both beginners trying to learn the sheer volume of the underlying concepts, but also for experts trying to get a sense of the research landscape.
Many ideas repeat across subfields, and it is becoming easier than ever to reinvent concepts.
Even if some individual components of neural network architectures are well-understood, their combinations and composition are not.

Lastly, all neural networks architectures are eventually implemented as programs.
But despite this, the research on construction and design of architectures has been largely disjoint from the research on construction and design of programs. %
The advances from type theory and functional programming such as dependent (\cite{mckinna_why_2006}) and quantitative (\cite{atkey_syntax_2018}) types, constructive reasoning and type-driven development (\cite{brady_type-driven_2016}) have generally not percolated to mainstream deep learning.
While we think about and denote programs using a rich and expressive typed language, when it comes to neural networks --- despite the advent of architectures which deal with structure --- the main mode of reasoning still involves thinking about arrays of numbers.
One consequence of this is that there is no type theory for neural network architectures.
Architectures are instead specified using a mixture of mathematical notation and natural language, often using ambiguous notation (\cite{chiang_named_2021}), and no clear specification of underlying types.
Unlike with typed programs --- which can be verified to be well-typed before running them --- there are no such guarantees when it comes to architectures.
There is no formal way to verify an implementation of architecture is correct --- because there are no formal specifications of most architectures.

The deep learning community is aware of many of these issues.
Some researchers have begun to organise workshops focused on compositionality of deep learning components (\cite{turek_context_2019, marcus_challenge_2022}).
Others have called for a deep learning ``Langlands programme'' (\cite{rieck_machine_2020}) or a deep learning ``Erlangen programme'' (\cite{bronstein_geometric_2021}), both inspired by far-reaching mathematical programs aimed at unifying seemingly disparate parts of mathematics, 
A completely different approach was suggested by \cite{olah_neural_2015}, emphasising the relationship between many kinds of neural network architectures and concepts in functional programming such as folds, unfolds, and recursion schemes. %

This resulted in some progress, most notably \emph{Geometric Deep Learning} (\cite{bronstein_geometric_2021}), an approach focusing on geometric structures of neural networks and symmetries within, using graph neural networks as a foundation.
Another one is \emph{Topological Deep Learning} (\cite{papillon_architectures_2023}), an approach focused on generalising the pairwise relational structures of graphs to those of higher arity.
Geometric and topological deep learning made tremendous progress in unifying neural networks based on graphs, and represent the state of the art in this regard.
But they answer only some of the concerns above.
Their thorough reliance on the algebraic structure of \emph{groups} comes built in with an assumption of invertibility which is an assumption that does not hold in general for programs.

When it comes to unification between the way we think about neural networks and the way we think about programs --- in terms of types, data structures, and often recursion --- the search for the underlying granite foundation of deep learning architectures is still in its infancy.

Acknowledged in \cite{bronstein_geometric_2021} and \cite{olah_neural_2015}, category theory is a natural continuation of these efforts.
As seen throughout this thesis, it is a \emph{language of structure} that has already described various phenomena throughout the sciences.
As we will see in the rest of this chapter, in recent years category theory has started to be used to describe neural network architectures.\footnote{Despite the increase in research, when compared for the size of the field of deep learning the total volume is negligible.}

Category theory is also foundation for many structural components of programming languages we use to \emph{implement} the neural networks we are trying to understand.
Functional features such as \emph{reduce} in Python is an example of a concept of a \emph{fold}, i.e.\ of a catamorphism in category theory, while others, such as \emph{map} can often be seen as examples of actions of functors on morphisms. 
This aligns with our goal to be \emph{operationally aware}, and gives credence to hypotheses in \cite{olah_neural_2015}, suggesting implementation of many architectures in practice can potentially be conceptually simplified.

Nonetheless, in this chapter we provide a comprehensive survey of existing developments, and provide a useful vantage point: we recast all of them through the unified framework of parametric lenses.

 document the existing developments, define a number of architectures, and provide a vantage point that we believe will be useful in the long-term.
In such a fast paced field, aim this to be a foundation that will not be outdated in a few years, but will continue to provide a stable base upon which we can constrwe hope to write content that will not be outdated 

As opposed to doing a literature review at the end of this chapter, we opt for discussing avenues for explorations at the end of each section.

\begin{contributions}
  The novel research in this chapter consists of the formalisation of a) weight tying (\cref{sec:networks_with_shared_context}), b) graph convolutional neural networks (\cref{def:prim_gcnn_layer}), c) encoding and decoding neural networks (\cref{fig:para_encoding_decoding}), and d) generative adversarial networks (\cref{def:gan}), e) and e) Transformers (\cref{sec:transformers}).
  Loss functions (\cref{sec:loss_functions}) are also formalised, though the novelty here boils down to an assignment of types to their inputs and outputs.
  Other than concrete definitions, the novel contribution of this chapter is a survey and a unified framing of numerous category-theoretic papers on neural network architectures scattered around the literature.
\end{contributions}

\begin{epistemicstatus}
  This chapter is the one I am the most confident in when it comes to utility of category theory and its constructions.
  This is because there currently exists a schism between the theory of architectures of deep learning and the classical theory of computer science, and almost no work combining algebraic constructs in traditional computer science theory such as automata, (co)recursion or dynamic programming to the fuzzy, differentiable landscape of deep learning.
  Many of the contents of this chapter (and especially \cref{subsec:recursive_encoding_decoding}) will be useful for defining a generalised theory of equivariance that will allow us to provide formal guarantees of the behaviour of neural networks with respect to arbitrary algebraic operations, and in general many algorithms found throughout computer science.
  I envision this in turn having a downstream effect on regulation (see \cref{ch:to_boldly_go}), and therefore issues of AI safety and bias.
\end{epistemicstatus}

\section{Layers and how to compose them}

Before describing some of the well-known neural network layers, we start humbly with some of the simplest ones: those without any parameters.

\subsection{Layers with no parameters}
\label{subsec:layers_no_parameters}

The layers without parameters described in this section are pervasively used throughout practice and applications.
At the same time, they mostly go unnoticed and unacknowledged in any formal descriptions.
These are various gadgets that copy, sum, delete or create information.
We will see throughout the rest of this chapter how they are central components of weight tying (\cref{def:weight_tying}), or generative adversarial networks (\cref{def:gan}), for instance.
We start by acknowledging that most of these have one additional property: they have trivial residuals, i.e.\ they are adapters (\ref{fig:residual_trivial}).

\begin{figure}[H]
  \scaletikzfig[0.8]{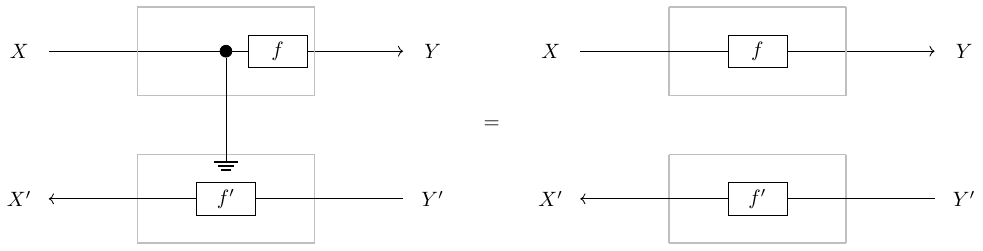}
  \caption{The backward part (the gray box at the bottom) of many lenses can be factored through the delete map. This means that each of these has an equivalent optic representation whose residual is the terminal object $1$.
  In turn, this means that these lenses are adapters (\cref{ex:adapters}).}
  \label{fig:residual_trivial}
\end{figure}

\begin{minipage}{0.4\textwidth}
  \begin{example}[Identity]
    \label{ex:identity_derivative}
    There is a trivial neural network that doesn't change the input: it is the identity map $\id_X : X \to X$.
    Its reverse derivative is the projection $R[\id_X](x, \alpha) = \alpha$.
  \end{example}
\end{minipage}
\begin{minipage}{0.6\textwidth}
  \begin{figure}[H]
    \scaletikzfig[0.8]{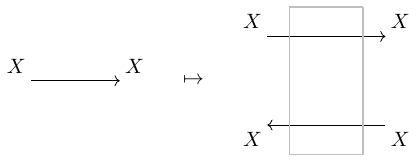}
    \caption{Identity and its derivative.}
    \label{fig:identity_derivative}
  \end{figure}
\end{minipage}

\begin{minipage}{0.4\textwidth}
  \begin{example}[Copy]
    \label{ex:copy_derivative}
    We can copy information via the comonoid $\Delta_X : X \to X \times X$.
    Its reverse derivative is $R[\Delta_X](x, (\alpha, \beta)) = \alpha + \beta$.
  \end{example}
\end{minipage}
\begin{minipage}{0.6\textwidth}
  \begin{figure}[H]
    \scaletikzfig[0.8]{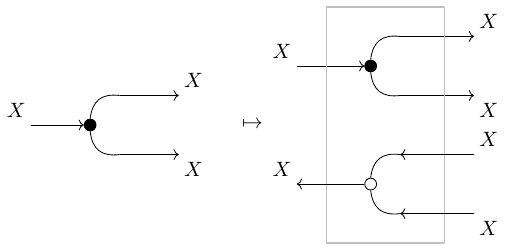}
    \caption{Copy and its derivative.}
    \label{fig:copy_derivative}
  \end{figure}
\end{minipage}

\begin{minipage}{0.4\textwidth}
  \begin{example}[Sum]
    \label{ex:sum_derivative}
    We can sum information using the monoid structure $+_X : X \times X \to X$.
    Its reverse derivative is copy $R[+_X](x, \alpha) = (\alpha, \alpha)$.
  \end{example}
\end{minipage}
\begin{minipage}{0.6\textwidth}
  \begin{figure}[H]
    \scaletikzfig[0.8]{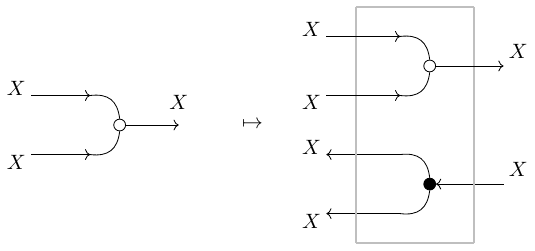}
    \caption{Sum and its derivative.}
    \label{fig:sum_derivative}
  \end{figure}
\end{minipage}

\begin{minipage}{0.4\textwidth}
  \begin{example}[Delete]
    \label{ex:delete_derivative}
    The morphism $\terminal_X : X \to 1$ deletes all information.
    Its reverse derivative is the zero gradient map, i.e.\ $R[\terminal_X](x, \alpha) = 0$.
  \end{example}
\end{minipage}
\begin{minipage}{0.6\textwidth}
  \begin{figure}[H]
    \scaletikzfig[0.8]{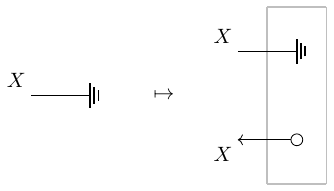}
    \caption{Delete and its derivative.}
    \label{fig:delete_derivative}
  \end{figure}
\end{minipage}

\begin{minipage}{0.4\textwidth}
  \begin{example}[State]
    \label{ex:state_derivative}
    Given any object $X$, all morphisms $s : 1 \to X$ have the same reverse derivative map $R[s] = \terminal_{1 \times X}$.
  \end{example}
\end{minipage}
\begin{minipage}{0.6\textwidth}
  \begin{figure}[H]
    \scaletikzfig[0.8]{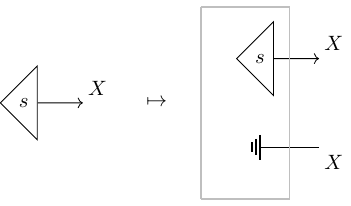}
    \caption{State and its derivative.}
    \label{fig:state_derivative}
  \end{figure}
\end{minipage}

\vspace*{0.5cm}
Lastly, we mention one such lens which is not an adapter: a lens which performs multiplication on the forward, and a derivative thereof on the backward pass.

\begin{minipage}{0.4\textwidth}
  \begin{example}[Multiplication]
    \label{ex:mul_derivative}
    The reverse derivative of the multiplication map\footnote{Formalisable in any cartesian distributive category (\cite[Def.\ 4]{wilson_categories_2022}).} $(x, y) \mapsto xy$ is the map: $R[\cdot]((x, y), \alpha) = (\alpha y, \alpha x)$.
  \end{example}
\end{minipage}
\begin{minipage}{0.6\textwidth}
  \begin{figure}[H]
    \scaletikzfig[0.8]{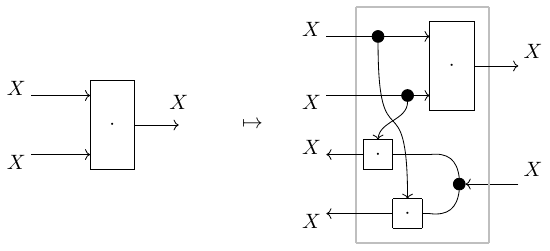}
    \caption{Multiplication and its derivative.}
    \label{fig:mul_derivative}
  \end{figure}
\end{minipage}

\vspace{2em}

\subsection{Feedforward neural networks}

Feedforward neural networks are one of the central building blocks of deep learning.
Such networks often consist of a sequence of alternating linear and non-linear layers.
Below we focus on $\Smooth$, but many of these layers can be defined in more general settings.

\begin{definition}[Linear layer]
  \label{def:linear_layer}
  Fix input and output types $x, y : \N$.
  Then a linear layer $x \to y$ is a morphism in $\Para(\Smooth)(\R^x, \R^y)$ whose parameter space is  $\R^{x \times y}$ and the implementation is 
  \begin{align*}
    \eval & : \R^x \times \R^{x \times y} \to \R^y\\
    \eval & (X, W) \mapsto W^\top X
  \end{align*}
  It is called $\eval$ is because it evaluates a linear map (represented as a matrix $W$) at a point $X$ (represented as a vector).\footnote{In category theory parlance, it is the counit of the tensor-hom adjunction in the subcategory $\FVectR$ of $\Smooth$.}
  Its reverse derivative is $R[\eval](X, W, Y') = (Y'W, Y'^\top X)$
\end{definition}

In practice, the term linear layer is often used interchangeably with an \emph{affine} layer, which is a composition of a linear layer and a bias layer defined below.

\begin{definition}[Bias layer]
  \label{def:bias_layer}
Given a number of neurons $y : \N$, the bias layer is the endomorphism in $\Para(\Smooth)(\R^y, \R^y)$ whose parameter space is $\R^y$, and implementation is pointwise addition:
\begin{align*}
  + & : \R^y \times \R^y \to \R^y\\
  + & (Y, B) \mapsto Y + B
\end{align*}
Its reverse derivative is given by the reverse derivative of $+$, i.e.\ copy: $R[+](Y, B, \alpha) = (\alpha, \alpha)$.\footnote{Note the similarities with \cref{fig:sum_derivative}. In the bias layer, one of the summands lives on the vertical axis.}
\end{definition}

\begin{example}[Activation function]
  \label{ex:activation_functions}
  An activation function is usually a function $a : \R \to \R$ in $\Smooth$ applied pointwise (i.e.\ a function $a^n : \R^n \to \R^n$)
  Notable examples are below, and graphed in \cref{fig:plots_activation}.
  
  \begin{align*}
    \id(x) &= x & \text{Identity}\\
    \sigma(x) &= \frac{\exp(x)}{\exp(x) + 1} & \text{Sigmoid (or logistic) function}\\
    \tanh(x) &= \frac{e^{\exp(2x)} - 1}{e^{\exp(2x)} + 1}& \text{Hyperbolic tangent}\\
    \relu(x) &= \max(0, x) & \text{Rectified linear unit (\cite{fukushima_cognitron_1975})}\\
    \leakyrelu(x) &= \max(\alpha x, x) & \text{``Leaky'' ReLU (\cite{maas_rectifier_2013})\footnote{ReLU precomposed with a linear layer $\R \to \R$}}\\
    \gelu(x) &= x\sigma(1.702x) & \text{Gaussian  Error Linear Unit (\cite{hendrycks_gaussian_2020})}
  \end{align*}
  Though, not all examples are like this.
  The commonly used $\Softargmax$ function\footnote{Commonly known as ``softmax'' though this is a misleading name. See \cite[Sec. 6.2.2.3]{goodfellow_deep_2016}} is of type  $\R^n \to \R^n$, and can't be factored as a product of $n$ functions $\R \to \R$.
  For every output $i : N$ it's defined as a function of type $\R^n \to \R$ whose implementation is 
  \begin{align*}
    \Softargmax(x)_i &= \frac{\exp(x_i)}{\sum_{j=1}^{n}\exp(x_j)}
  \end{align*}
\end{example}

\begin{figure}[h]
  \centering
  \begin{tikzpicture}
    \begin{axis}[
      axis lines=left,
      xmin=-2, xmax=2, ymin=-2, ymax=2,
      xlabel=$x$, ylabel=$f(x)$,
      legend pos=south east,
      legend cell align={right},
      legend style={draw},
      samples=100,
      domain=-2:2,
      thick,
      grid=major,
      width=1.5*\axisdefaultwidth,
      height=\axisdefaultheight,
      ]
      \addplot[blue] ({x}, {\identityfn(x)}); \addlegendentry{Identity}
      \addplot[red] ({x}, {\sigmoidfn(x)}); \addlegendentry{Sigmoid}
      \addplot[green] ({x}, {\tanhfn(x)}); \addlegendentry{Tanh}
      \addplot[orange] ({x}, {\relufn(x)}); \addlegendentry{ReLU}
      \addplot[purple] ({x}, {\leakyrelufn(x)}); \addlegendentry{Leaky ReLU}
      \addplot[brown] ({x}, {\gelufn(x)}); \addlegendentry{GELU}
    \end{axis}
  \end{tikzpicture}
  \caption{Graphs of activation functions: Identity, Sigmoid, Tanh, ReLU, Leaky ReLU, and GELU.}
  \label{fig:plots_activation}
\end{figure}
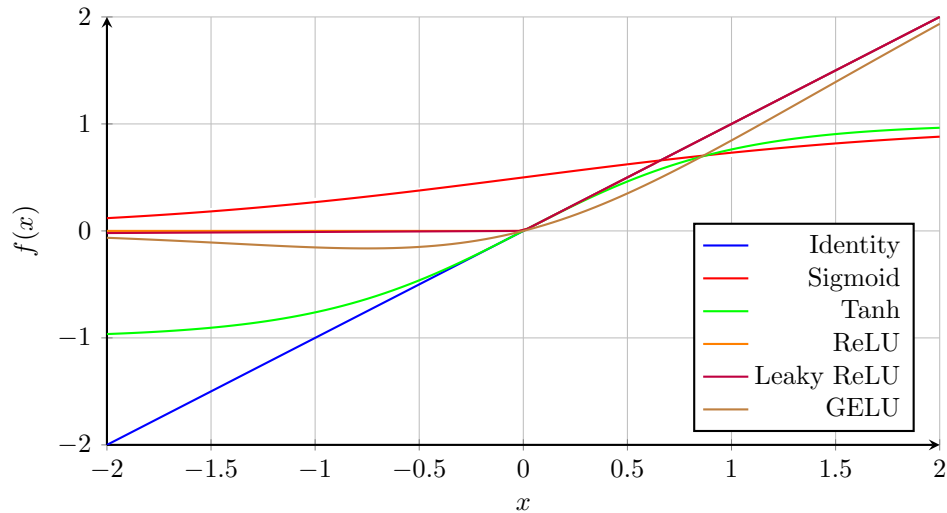

\begin{remark}
  Softargmax is a particularly interesting function, having numerous uses in probability theory, statistical mechanics and reinforcement learning.
  In the case when $n=2$, Softargmax recovers the logistic function.
  Since its output is always a vector whose entires sum up to $1$ it can also be interpreted as a probability distribution.
  This suggests probability monads as a useful abstraction in providing more refined characterisation of this map.
\end{remark}

Given all the components defined above, a standard ``fully-connected'' layer usually consists of a sequential composition of a linear layer, a bias layer and an activation function. (\cref{fig:linear_layer})

\begin{figure}[H]
  \scaletikzfig[0.8]{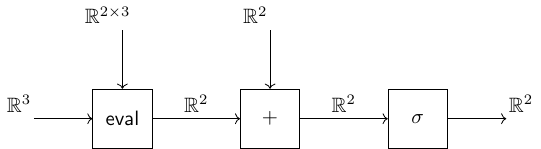}
  \caption{String diagram representation of a linear layer with some activation function $\sigma$.}
  \label{fig:linear_layer}
\end{figure}

As we will see in the next section, most of the interesting neural networks are composed out of something else than just a sequence of standard layers.

\subsection{Weight tying}
\label{sec:networks_with_shared_context}

The concept of \emph{encapsulation} is one of the most fundamental aspects of computer science.
It allows us to abstract away the details of a system, and reuse functionality by simply calling the appropriate method.
This reduces the amount of bugs and allows us to write more understandable and composable code.
Encapsulation is in neural networks achieved by \emph{weight tying}, the method by which we use multiple copies of a neuron in the same place.
We thus only have to learn a functionality once, and can reuse it in different places.
This consequently makes learning easier, as the search space is reduced, and can be seen as the core idea behind the recent success of deep learning.\footnote{In many ways, weight tying is a neural network equivalent of higher-order functions, and is something we hope to explore in future work.}

While most often considered in $\Smooth$, the essential idea behind weight tying works in any cartesian category.

\begin{definition}[Weight tying]
  \label{def:weight_tying}
  Fix a cartesian category $\cC$.\footnote{This statement can be generalised to any distributive algebroidal actegory (\cite[Def.\ 5.2.4]{capucci_actegories_2023}).}
  Given any two objects $\mi, \mo$ and a parametric map $f : \Para(\cC)(\mi, \mo)$ whose parameter space is $P \times P$ (for any $P : \cC$) the \textbf{weight tying} of $f$ is the parametric morphism
  \[
    (P, f^{\Delta_P}) : \Para(\cC)(\mi, \mo) \quad \text{where} \quad f^{\Delta_P} = \boxed{X \times P \xrightarrow{X \times \Delta_P} X \times (P \times P) \xrightarrow{f} X}
  \]
  obtained by reparameterising $f$ with the copy map $\Delta_P : P \to P \times P$. (the notation $f^{\Delta_P}$ follows the convention outlined in \cref{fig:para_reparam}.)\footnote{A galaxy-brain definition of weight tying is that it is precisely \emph{the functor} described in \cref{lemma:cokl_to_para}.}
\end{definition}

This seemingly simple definition (whose string diagram representation is shown in \cref{fig:weight_tying}) we believe is the cornerstone of modern deep learning.
It takes a parametric morphism with two independent parameters and couples them together, removing a degree of freedom.
This makes the ``neuron'' that the parameter is implementing be used in two different places, and reduces the search space as whatever is learned can now be reused.

\begin{figure}[H]
  \scaletikzfig[0.8]{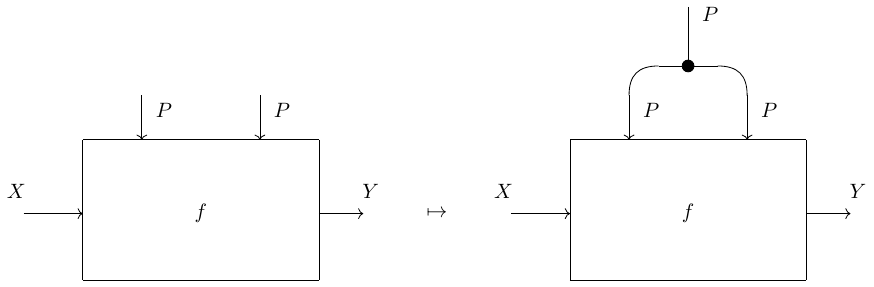}
  \caption{String diagram representation of weight tying.}
  \label{fig:weight_tying}
\end{figure}

Weight tying is most often applied in settings of sequential (\cref{def:para}) and parallel (\cref{prop:para_monoidal}) composition of parametric morphisms.
In sequential composition (\cref{fig:weight_tying_both} (left)), starting with two composable parametric morphisms that share the parameter, i.e.\ 
\[
 (P, f)  : \Para(\cC)(\mi, \mo) \quad \text{and} \quad (P, g) : \Para(\cC)(\mo, Z)
\]
we can form the composite morphism $(f \compPara{\comp} g)^{\Delta_P}$ which now has coupled parameters.
This is often used in recurrent neural networks (\cref{subsec:backpropagation_through_time}), where we want to ensure that the same action is implemented at every time step.
For parallel composition (\cref{fig:weight_tying_both} (right)) the idea is the same.
Starting with two parametric maps that have the same parameter type (here $P$):
\[
  (P, f) : \Para(\cC)(\mi, \mo) \quad \text{and} \quad (P, g) : \Para(\cC)(Z, W)
\]
the weight tying of $f$ and $g$ is the morphism $(f \compPara{\times} g)^{\Delta_P}$ with coupled parameters, now used in parallel.
This is used in generative adversarial networks (\cref{def:gan}), for instance, where the weight tying constrains the discriminator to use the same parameter while processing samples from the generator and those from the dataset.

\begin{figure}[h]
  \begin{subfigure}[c]{0.45\textwidth}
    \scaletikzfig[0.7]{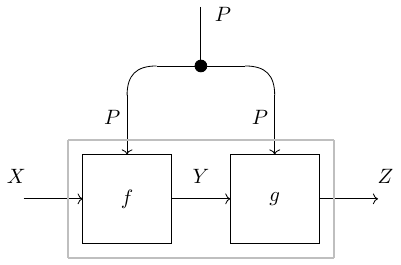}
    \label{fig:weight_tying_sequential}
  \end{subfigure}\hfill
  \begin{subfigure}[c]{0.45\textwidth}
    \scaletikzfig[0.7]{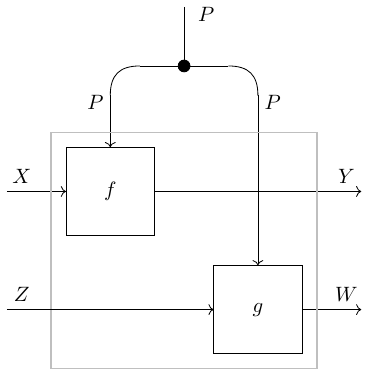}
    \label{fig:weight_tying_parallel}
  \end{subfigure}
  \caption{Weight tying a a) sequential and b) parallel composition of morphisms.}
  \label{fig:weight_tying_both}
\end{figure}

Weight tying can also be used to model \emph{batching}, a process by which from a neural network we obtain a new one that can process multiple inputs at the same time.

\begin{definition}[Batching as weight tying]
 \label{def:batching}
 Let $(P, f) : \Para(\cC)(A, B)$ be a morphism. Then the \newdef{$n$-fold batching} of $f$ is the morphism $(P, f^{\Delta^n_P}) : \Para(\cC)(A^n, B^n)$, where $\Delta_P^n : P \to P^n$ is the $n$-fold copy map.
\end{definition}

In practice we can't merely copy neurons arbitrarily: we usually have to exploit some structure in our data, and perform weight tying in a principled way.
In the next sections we will see a number of patterns of weight tying, describing recurrent, recursive, generative-adversarial, and many other neural network architectures.

\begin{cthelp}
  Can we formulate the notion of a linear layer internal to any generalised cartesian reverse derivative category?
  What are the categorical semantics of activation functions?
  Can we think of them as (co)effects applied to linear maps, much like those that arise from (co)Kleisli categories of comonads?
  Is there a formal connection between weight-tying and higher order functions, as suggested in \cite{olah_neural_2015}?
\end{cthelp}

\section{Backpropagation through time: recurrent neural networks}
\label{subsec:backpropagation_through_time}

So far we did not explicitly acknowledge the structure of inputs of neural networks.
Often, the input to a neural network is an entire sequence of characters, frames, tokens or various other temporal inputs.
Recurrent neural networks (\cite{hochreiter_long_1997, schmidt_recurrent_2019}, \cite[Sec. 10.2]{goodfellow_deep_2016}), or just RNNs, are those that consume an input of this form, and backpropagation in this setting is often referred to as \emph{backpropagation through time}.
Thinking of a RNN as a morphism, it is a \emph{stateful morphism}: it consumes a sequence of inputs $X^n$ and produces another sequence $Y^n$, where here $n$ is the length of the sequence, and each $j : n$ of the output depends on all $i : n$  of the input for $i \leq j$.
Most often, this is done by repeating a single recurrent neural network \emph{cell} across every time step.

Recurrent neural networks have thoroughly been characterised in \cite{sprunger_differentiable_2021} in terms of cartesian differential categories (\cite{blute_cartesian_2009}) and the concept of the \emph{delayed trace} introduced in the same paper.
For more details we refer the interested reader to the aforementioned paper, and here we instead outline its key details.
The paper is based on the definition of $\St(\cC)$: the category of stateful morphisms.

\begin{definition}[{Causal extension of a cartesian category (compare \cite[Def.\ 11]{sprunger_differentiable_2021})}]
  \label{def:stc}
  Let $\cC$ be a cartesian category.
  Its \newdef{causal extension} is a category denoted by $\St(\cC)$ whose objects are $\N$-indexed families of objects, and a morphisms are stateful morphism sequences \footnote{That is, \emph{equivalence classes} of stateful morphism sequences} (see \cite[Fig. 1]{sprunger_differentiable_2021})
  Likewise, $\St_0(\cC)$ is a subcategory of $\St(\cC)$ whose family of objects are all the same, and morphisms given by iterating $f$. (See \cite[Def.\ 15]{sprunger_differentiable_2021} for a precise specification).
\end{definition}

There are two important considerations related to the framework of this thesis.
Is there a parametric version of $\St(\cC)$, and can it be equipped with the coalgebra structure of $\LensA$?

As \cite{sprunger_differentiable_2021} does not mention $\Para$, the first question is left unanswered.
Nonetheless, as $\St(\cC)$ is a cartesian category, it is possible to form $\Para$ over it.
On the other hand, $\Para(\St(\cC))$ has too many degrees of freedom: in recurrent neural networks it is important that each recurrent cell inside the layer shares the same parameter (\cref{fig:recurrent_twolayer} (right)).
We hypothesize there is an parametric category $\Para_{\Delta}(\St(\cC))$ capturing this.

\begin{figure}[H]
  \scaletikzfig[0.7]{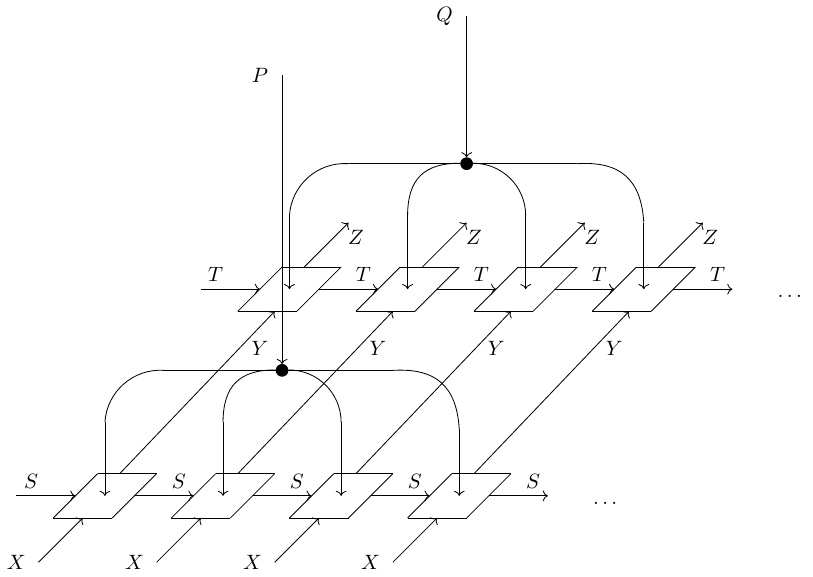}
  \caption{Two-layer recurrent neural network. Note that parameters are shared throughout time steps, but not throughout layers. This two-dimensional representation reinforces the idea that \emph{parametric double categories} are a suitable framework for studying recurrent neural networks.}
  \label{fig:recurrent_twolayer}
\end{figure}

Pertaining to the second question, the paper shows that $\St(\cC)$ is a cartesian differential category when $\cC$ is.
It does not show $\St(\cC)$ is a \emph{reverse} derivative category\footnote{Because \cite{sprunger_differentiable_2021} originally appeared before \cite{cockett_reverse_2020}, the reverse derivative categories paper}, though there do not seem to be any major obstacles to doing so.

Additionally, \cite{sprunger_differentiable_2021} also introduces \emph{the delayed trace}, an operator generalising the categorical operation of trace with an implicit guardedness guarantee.
This operator has potential in being applicable to more than just recurrent neural networks, as the authors hypothesize it could be useful for describing parameter update and meta-learning (\cite[Sec. VI]{sprunger_differentiable_2021}).

The most popular kind of recurrent neural networks are Long Short-Term Memory (LSTM) networks (\cite{hochreiter_long_1997}) which introduce various gating mechanisms aimed at reducing the effect of vanishing and exploding gradients (\cite{pascanu_difficulty_2013}).
They appear in countless variants in practice (\cite{yu_review_2019}), none which appear to have been studied through the lens of category theory.

\begin{cthelp}
  The work of \cite{sprunger_differentiable_2021} provides a comprehensive characterisation of recurrent neural networks through double categories.
  As mentioned in this section, avenues for exploration include establishing an action on $\St(\cC)$ which yields the appropriate bicategory $\Para_{\Delta}(\St(\cC))$, and formulation of $\St(\cC)$ as a generalised reverse derivative category.
  Additionally, to the best of our knowledge there is no work characterising recurrent neural networks to any of the recursion schemes or generalised kinds of (un)folds such as catamorphisms or anamorphisms.
  In other words, the structurally recursive component of $\St_0(\cC)$ --- being given as a repeat of just a single recurrent cell throughout the layer --- is not captured by any of the usual categorical tools for doing so.
  We touch on this in the next chapter.
\end{cthelp}

\section{Recursive neural networks, and more.}
\label{subsec:recursive_encoding_decoding}

There is an interesting generalisation of recurrent neural networks called \emph{recursive} neural networks (\cite{socher_recursive_2013, goller_learning_1996}).\footnote{From the latter reference comes the term ``Backprop through structure''.}
Instead of consuming input data in the form of a list, recursive neural networks consume data in the form of a tree.
This makes them amenable for use in a variety of settings, most notably that of parse trees in natural language processing.
This was the original motivation for their introduction in \cite{socher_recursive_2013} and seemingly the motivation for their only existing categorical model (\cite{lewis_compositionality_2019}).
While this model establishes the relationship of recursive neural networks to the categorical compositional vector space semantics (\cite{coecke_mathematical_2010}), it does not establish any relationship of recursive neural networks to recurrent ones through the language of category theory as suggested by \cite{olah_neural_2015}.
Inspired by \cite{olah_neural_2015} and unpublished research we have done on this topic, we provide hypotheses about category-theoretic formulation of recursive neural networks.

It is well-known in the category theory literature that lists and trees are examples of inductive data types whose categorical semantics are those of initial algebras of endofunctors.\footnote{A more refined semantics of inductive data types and structural recursion in general is given by recursive coalgebras (\cite{capretta_recursive_2006}).}
More precisely, the set $\List{X}$ of lists of elements of type $X$ is the initial object of the category $(1 + X \times (-))\dsh\Alg$, and the set $\BTree{X}$ of finite binary trees with $X$-labelled leaves is the initial object of the category $(X + (-)^2)\dsh\Alg$.
As it happens, these endofunctors (and many more of interest) are all strong, enabling us to see them as actegory morphisms (\cref{ex:strong_endofunctor_morphism_of_actegories}).
Consequently, via \cref{prop:para_morphism} this morphism induces an endomorphism $\Para(F)$ on $\Para(\Set)$.
And since every $\Para(\cC)$ holds a copy of $\cC$ inside it (\cref{lemma:c_embeds_into_para}) we might expect that the category of $\Para(F)$-algebras holds a copy of $F$-algebras inside of it.

This is indeed the case, and part of our forthcoming work on this topic.
We believe the claims of \cite{olah_neural_2015} neural networks can be substantiated for encoding, decoding and recursive neural networks.
Intuitively, if $\BTree{X} : \Set$ arises as the initial algebra of the strong endofunctor $X + (-)^2 : \Set \to \Set$, then it can be shown that this is an appropriate notion of a higher-dimensional inital algebra of $\Para(X + (-)^2) : \Para(\Set) \to \Para(\Set)$.\footnote{We believe that ``quasi initial algebra`` is the appropraite notion here.}
In such a case, given any other algebra for $(Y, f)$ for $\Para(X + (-)^2)$ there is a catamorphism unfolding the tree and performing weight sharing, roughly as in \cref{fig:recursive}.

\begin{figure}[H]
  \scaletikzfig[0.85]{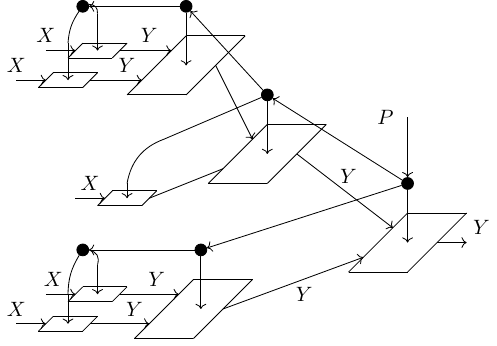}
  \caption{String diagram representation of a recursive neural network as a morphism in the category $\Para(X + (-)^2)\dsh\Alg$. The same parameter, and the same components of the underlying map $f : P \times (X + Y^2) \to Y$ are used throughout the network.}
  \label{fig:recursive}
\end{figure}

The generality of this initial algebra formulation suggests plenty of other examples.
The initial algebra of $1 + X \times -$ is $\List{X}$, and its parametric form given some algebra $(S, f)$ comes equipped with universal maps which describe \emph{encoding neural networks}, as suggested by \cite{olah_neural_2015}.
We depict an example thereof in \cref{fig:para_encoding_decoding} (left) where we see a network consuming a list of inputs and folding over them with the map $f$ in order to produce an accumulated value of type $S$.
Dually, we can also consider \emph{coalgebras} of $Y \times -$ where the terminal coalgebra is that of \emph{streams} (\cref{fig:para_encoding_decoding} (right)) which generate an infinite sequence of outputs of type $Y$ from a single starting state. 

\begin{figure}[h]
  \begin{subfigure}[c]{0.45\textwidth}
    \scaletikzfig[0.8]{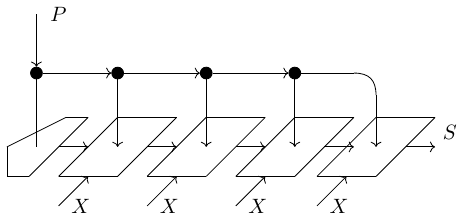}
    \label{subfig:para_encoding}
  \end{subfigure}\hfill
  \begin{subfigure}[c]{0.45\textwidth}
    \scaletikzfig[0.8]{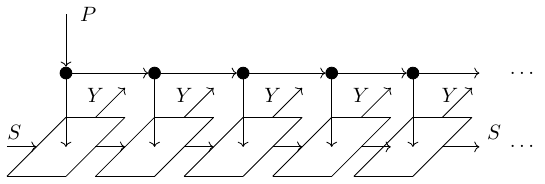}
    \label{subfig:para_decoding}
  \end{subfigure}
  \caption{Encoding and decoding neural networks as morphisms in $\Para(1 + X \times (-))\dsh\Alg$ and $\Para(Y \times (-))\dsh\coAlg$, respectively. As in \cref{fig:recursive}, the same parameter value and the same type of implementation map is used througout.}
  \label{fig:para_encoding_decoding}
\end{figure}

\begin{cthelp}
This section suggests a general categorical framework for studying structurally recursive neural networks: as that given by a category of algebras for a given endofunctor, and opens up a wide array of questions: what other (co)algebras can be studied in this way?
It is known that Moore and Mealy machines can be seen as coalgebras for endofunctors \cite{rutten_universal_2000}.
Are these suitable candidates for arhitectures? What other, potentially branching or non-deterministic architectures exist?
\end{cthelp}

\section{Graph neural networks}
\label{subsec:graph_neural_networks}

In previous section we have described a categorical framework modelling neural networks that process a list, or a tree of inputs.
Graphs are a more general data structure that can represent molecules, social neworks and transporation networks. %
In recent years graph neural networks became a sizeable subfield of deep learning, spawning a myriad of research and practical applications (\cite{wu_comprehensive_2021, dudzik_graph_2022, velickovic_message_2022, mirhoseini_graph_2021}).
\forlater{conceptually, what kind of extra power do graphs give us? what kinds of abstract structure do they contain that trees and lists don't? cycles? }

Intuitively, consuming a graph as an input allows us to process each of its nodes by taking into account all of those around it.
Most crudely, this can be described as a global underlying context (\cref{subsec:local_vs_global}) the neural network layers have access to.
These layers then compute updates for each node based on the connectivity pattern encoded by the graph.
This was the category-theoretic treatment of \emph{graph convolutional neural networks} (GCNNs) in \cite{gavranovic_graph_2022} whose definition of a GCNN layer we present in \cref{def:prim_gcnn_layer}.

\begin{definition}[GCNN Layer]
  \label{def:prim_gcnn_layer}
  Fix the number of nodes $n : \N$ in a graph, input and output types $\mil, \mol : \N$, and an activation function $\sigma : \R^\mol \to \R^\mol$.
  A graph convolutional neural network layer $\mil \to \mol$ is a morphism in $\Para(\CoKl(- \times \R^{n \times n}))(\R^{\mil \times n}, \R^{\mol \times n})$ whose parameter space is $\R^{\mil \times \mol + \mol}$
  and the implementation is
  \begin{align*}
    f : \R^{\mil \times n} \times \R^{\mil \times \mol + \mol} \times \R^{n \times n}&\to \R^{\mol \times n}\\
    (\mi, (W, B), A) &\mapsto \sigma(W^\top \mi A + B)
  \end{align*}
\end{definition}

\begin{figure}[H]
  \centering
  \includegraphics[width=\textwidth]{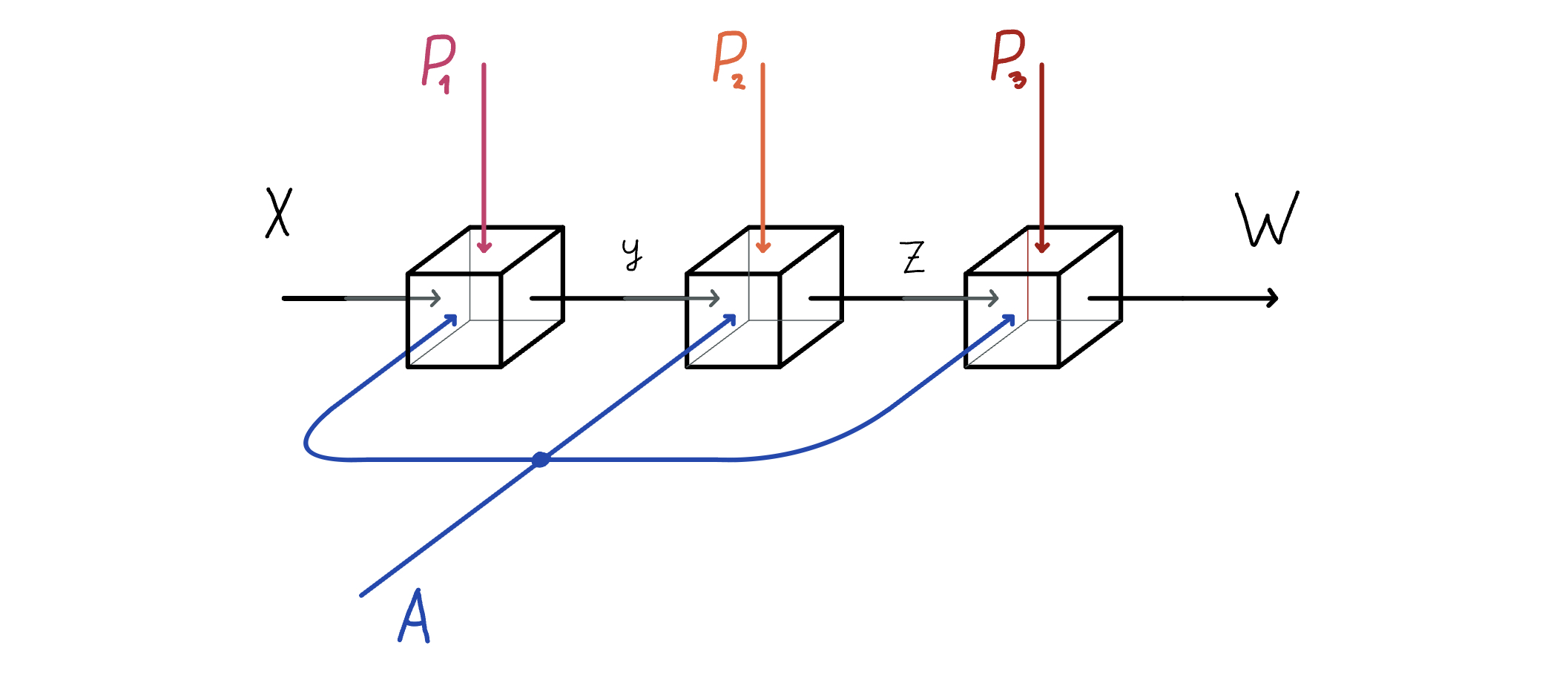}
  \caption{Composition of three layers in a graph convolutional neural network.
    On the vertical axis, we see three parameter spaces: $P_1$, $P_2$, and $P_3$
    of each layer. On the horizontal axis, we see the adjacency matrix $A$ as a
    ``parameter'' to each layer. By composing  new layers on the right the number
    of parameters increases, but the adjacency matrix is just copied.}
  \label{fig:graph_convolutional}
\end{figure}

The global context of $\CoKl(- \times \R^{n \times n})$ ensures that the same adjacency matrix used in one layer is used throughout the whole graph neural network.
In \cite{gavranovic_graph_2022} it has been shown that $n$-node convolutional graph neural networks are morphisms in $\Para(\CoKl(- \times \R^{n \times n}))$, and that the base $\CoKl(- \times \R^{n \times n})$ is a generalised reverse derivative category (\cref{ex:cokleisli_of_grdc}).
Despite this kind of a formulation having convolutional neural networks (\cite{bengio_convolutional_1995}) as a special case, in many ways its categorical form is limited.
Graphs are encoded as an adjacency matrix and their geometry is not explicitly accounted for in category-theoretic terms.
Furthermore, the implemented multiplication by an adjacency matrix implements only a special form of \emph{message passing} (\cite{velickovic_message_2022}).
It is a special case because only nodes in this graph are assumed to have features attached to them, and edge-level and graph-level features are ignored.

General message passing equips graphs with node, edge and graph-level features, describing structured ways information is propagated (\cite[Eq. 8, 9, 10]{velickovic_everything_2023} and \cite{dudzik_graph_2022}).
We do not study this further except to say that message passing (specifically, \cite[Eq. 10]{velickovic_everything_2023}) can be formulated as a lens whose forward part is tasked with broadcasting the information out of a node, and the backward pass is tasked with aggregating received information from other nodes.

We also mention the concept of equivariance.
It can be thought of as a formal notion of consistency under transformation. 
For instance, convolutional neural networks the above example models are translation equivariant, meaning they allow us to state how translations of features in the input correspond to a translation of the features of the output.
In other words, in the task of image segmentation we get the same result if we a) translate an object in the original image, and then apply the convolutional neural network, or b) apply the convolutional neural network, and then translate the generated mask.
But translations are not the only kind of transformation.
Transformations can be equivariant with respect to a group of rotations, reflections, scaling, and many others.
Equivariance with respect to an arbitrary group has been studied in \cite{cohen_group_2016}, and in more general terms in \cite{harris_characterizing_2019, de_haan_natural_2020, bronstein_geometric_2021}.

Lastly, graph neural networks have recently related to the concept of dynamic programming through the language of category theory (\cite{dudzik_graph_2022}).
As they use the formalism of polynomial functors whose morphisms are dependent lenses, we believe this is an exciting research direction that could place them on even more solid categorical foundations.

\begin{cthelp}
  There is a well-understood way of generalising the category $\CoKl(- \times \R^{n \times n})$ to a fibred setting, analogously how $\CoKl(- \times X)$ can be embedded into the slice category $\cC/X$.
  This allows the dimensionality of vector space of features to depend on the node it is fibred over.
  Likewise, the abstract machinery of the Grothendieck construction here yields a particular instantiation of message-passing.\footnote{This came up in a conversation with Matteo Capucci.}
  Can this be used as a generalisation of the above formulation of graph convolutional neural networks?

  Lastly, \cite{bronstein_geometric_2021, cohen_group_2016} host numerous examples of group equivariant neural networks described as homomorphisms of group actions.
  All of these arise as morphisms of monad algebras for the group action monad.
  By generalising these homomorphisms to account for possibly different monads on the domain and codomain, we obtain group invariant neural networks.
  This suggests a host of other examples, and a generalised theory of (co)equivariance as that of (co)monad/endofunctor algebra homomorphisms.
  Such a formulation further suggests connections to structurally (co)recursive neural network formulations in \cref{subsec:recursive_encoding_decoding} where they are also formulated as (co)algebra homomorphisms.
  Can this theory pave way for a theory of architectures of neural networks?
  We believe so.
\end{cthelp}

\section{Generative Adversarial Networks}
\label{sec:gan}

Generative Adversarial Networks (\cite{goodfellow_generative_2014}), or GANs are an important architecture that lies in the centre of the intersection of deep learning and game theory.
Unlike the architectures described above, GANs do not arise as an instantiation of the learning framework in a different kind of a category.
Instead, a GAN is a system of two neural networks trained with ``competing'' optimisers.
One neural network is called \emph{the generator} whose optimiser is, as usual, tasked with moving in the direction of the negative gradient of the loss.
However, the other network --- called \emph{the discriminator} --- has an optimiser which is tasked with moving in the \emph{positive}, i.e.\ ascending direction of the gradient of the total loss --- maximising the loss.
The actual networks are wired in such a way (\cref{fig:gan_box}) where the discriminator effectively serves as a loss function to the generator, i.e.\ being the generator's only source of information on how to update.
Dually, taking the vantage point of the discriminator, the generator serves as an ever changing source of training data.

\begin{figure}[H]
  \centering
  \includegraphics[width=0.9\textwidth]{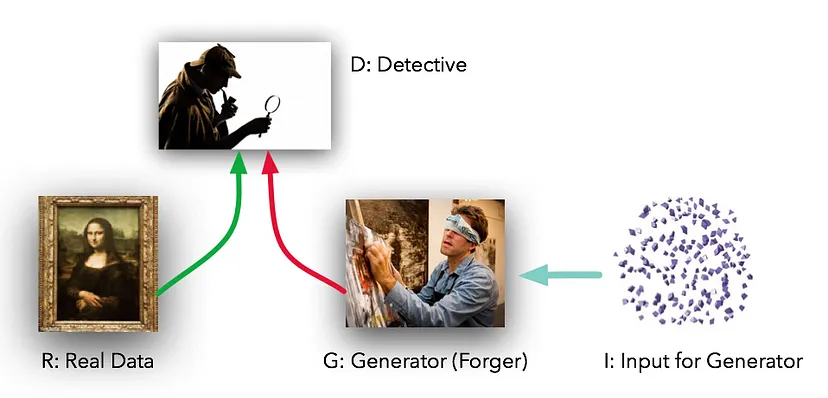}
  \caption{GANs can intuitively be understood as a cat-and-mouse game.
Think of the generator as a forger producing counterfeit paintings from a particular artist, and the discriminator as the detective distinguishing between real and fake artwork.
They both start without knowing how real artwork looks like, and slowly improve as they interact.
For instance, the detective might first see a real painting.
If they misclassify it as fake, they receive a training signal instructing them how to classifying paintings similar to it as real next time.
An analogous story works if they see a forgery.
But in this case, the forger also receives a training signal, and learns how to imrpove \emph{so as to fool the detective} better next time.
This causes a feedback loop where both of them continue improving, enabling the forger to create plausible looking artwork. Figure from \protect\footnotemark.}
  \label{fig:gan_forger_detective}
\end{figure}

\footnotetext{\url{https://medium.com/@devnag/generative-adversarial-networks-gans-in-50-lines-of-code-pytorch-e81b79659e3f}}

GANs have been a popular architecture that has received wide attention and has been applied in numerous domains \cite{pan_recent_2019}.
Following our previous category-theoretic work on GANs in \cite[Sec. 4.1]{capucci_towards_2022}, we present their novel definition below.\footnote{Noting that their forward part can be stated in any cartesian category $\cC$.}

\begin{definition}[GAN]
  \label{def:gan}
  Fix three objects $Z, X$ and $L$ in $\cC$ (respectively called ``the latent space'', ``the data space'' and ``the payoff space'').
  Then given two parametric morphisms
  \[
    (P, g) : \Para(\cC)(Z, X) \quad \text{and} \quad (Q, d) : \Para(X, L)
  \]
  a \textbf{generative adversarial network} is a morphism $(P \times Q, \GAN_{g, d}) : \Para(Z \times X, L \times L)$ where $\GAN_{g, d}$ is defined as the composite
  \[
    \GAN_{g, d} \coloneqq \boxed{Z \times X \xrightarrow{g \compPara{\times} \id_X} X \times X \xrightarrow{(d \compPara{\times} d)^{\Delta_Q}} L \times L}
  \]
\end{definition}

Its string diagram representation is shown in \cref{fig:gan_box} where we see that a GAN consists of two parallel tracks.
We will see in \cref{ex:gan_gda_dotproduct} how the first one will be used to process latent vectors, and the second one to process samples from a chosen dataset.
Despite the fact that there are two boxes labeled $d$, they are weight tied (\cref{def:weight_tying}), making them behave like a singular unit.

\begin{figure}[H]
  \scaletikzfig[0.6][1.1]{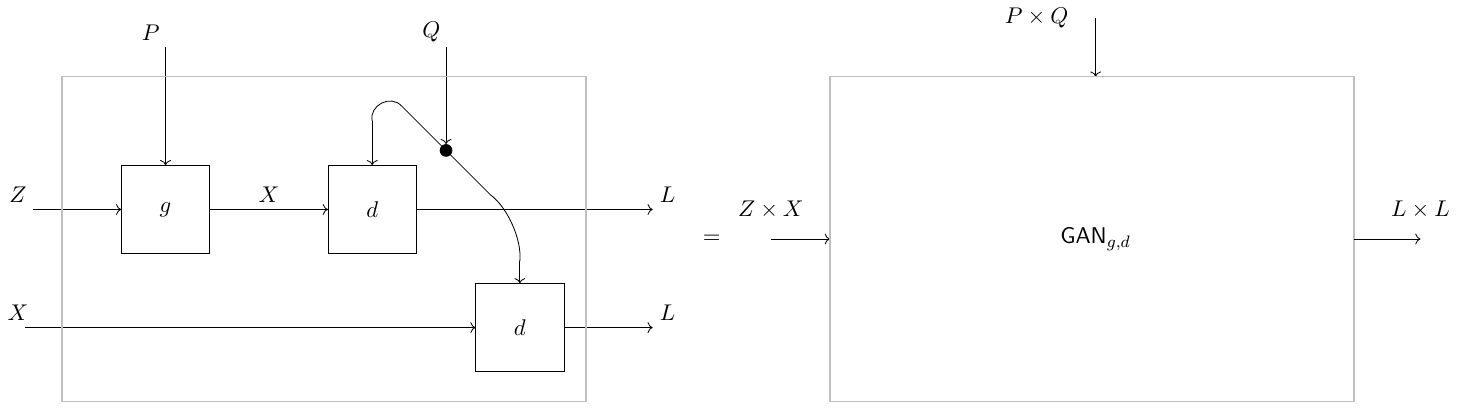}
  \caption{A generative adversarial network.}
  \label{fig:gan_box}
\end{figure}

We can easily state what the reverse derivative of $\GAN_{g, d}$ is in terms of its components:
\begin{align}
  \label{ex:gan_reverse_derivative}
  \begin{split}
    R[\GAN_{g, d}](z, x_r, p, q, \alpha_g, \alpha_r) = (z',x_r',p',q_g' + q_r') \quad \text{where} \quad %
      (x_g', q_g') &= R[d](g(z, p), q, \alpha_g)\\
      (x_r', q_r') &= R[d](x_r, q, \alpha_r)\\
      (z', p') &= R[g](z, p, x_g')
    \end{split}
\end{align}

The pair $(\GAN_{g, d}, R[\GAN_{g, d}])$ yields a parametric lens of type $\diset{Z \times X}{Z' \times X'} \to \diset{L \times L}{L' \times L'}$ (\cref{fig:gan_differentiated}), which we interpret as follows.

\begin{figure}[h]
  \scaletikzfig[0.6][1.2]{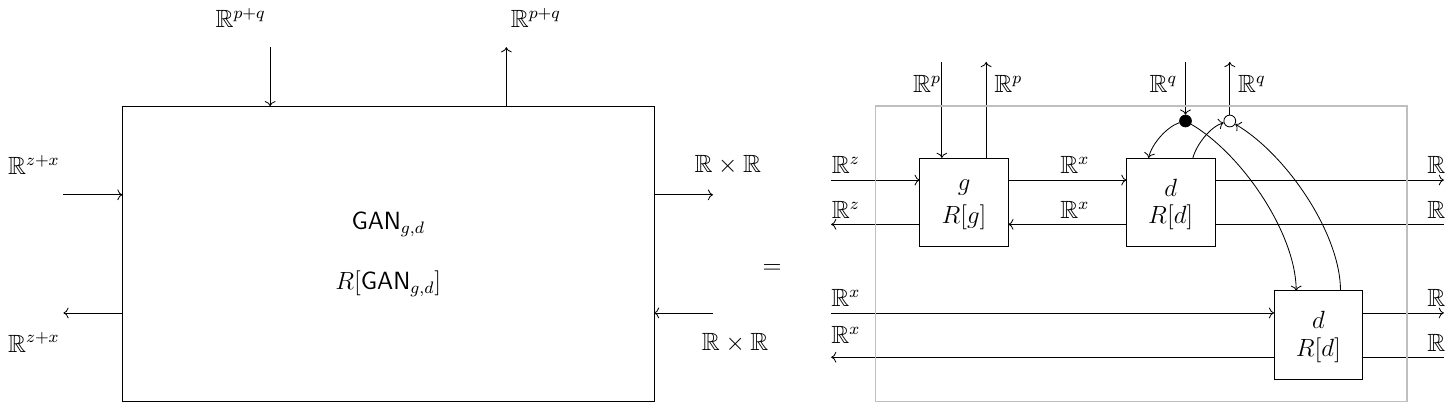}
  \caption{A generative adversarial network under the image of $\Para(\RC)$.}
  \label{fig:gan_differentiated}
\end{figure}

It consumes two pieces of data, ``a latent vector'' $z : Z$, a ``real'' sample from the dataset $x_r : X$, in addition to the parameter $p : P$ for the generator and a parameter $q : Q$ for the discrmiminator.
What happens then are two independent evaluations done by the discriminator.
The first one uses generator's attempt of producing a sample from the dataset (the latent vector which was fed into it, producing $g(z, p) : X$) as input to the discriminator, producing a payoff $d((g, z, p), q) : L$ for this particular sample.
The second one uses the actual sample from the dataset $x_r$, producing the payoff $d(x_r, q) : L$.
We will see in \cref{ex:gan_gda_dotproduct} how the learning context this GAN is trained in produces dynamics which push the generator towards generating real-looking samples, and the discriminator towards discriminating between real and generated samples.

There are various other generalisations of GANs (\cite{hindupur_gan_2023}) out of which we point out CycleGAN (\cite{zhu_unpaired_2017}).
This is an architecture that combines autoencoders and generative adversarial networks in a non-trivial manner using the concept of ``cycle-consistencies'', a loss function enforcing high-level composition invariants of neural networks. 
In (\cite{gavranovic_learning_2020}) it has been argued that this system can be seen as one arising out of a presentation of a category through generators and relations, where each relation gives rise to a corresponding cycle-consistency loss.

\begin{cthelp}
  \label{gan_future_work}
An avenue for future work is GANs in the setting of a base category with coproducts. %
In such a setting we'd have the coproduct injection $\nabla_X : X + X \to X$ at our disposal.
Intuitively, $\nabla_X$ consumes either a datapoint created by the generator, or a datapoint from the training data, and erases its label before passing it on to the discriminator (\cref{fig:gan2}).

\begin{figure}[H]
  \scaletikzfig[0.8]{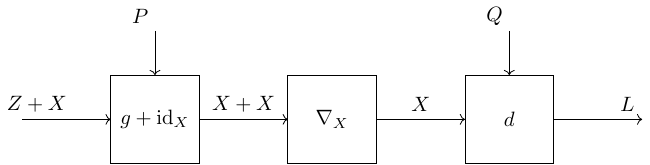}
  \caption{A generative adversarial network defined using coproducts.}
  \label{fig:gan2}
\end{figure}

This provides a more natural interpretation of GANs as it does not necessitate the processing of both $Z$ and $X$ at the same time (unlike \cref{def:gan}), and additionally simplifies the treatment of loss functions (see discussion after (\cref{ex:gan_gda_dotproduct}))
Further work needs to be done providing categorical semantics of differentiation of this architecture in terms of lenses, as lenses do not have all coproducts even if their base does (\cref{subsec:dependent_optics}).
Another avenue for work is formalising GAN-specific optimisers such as \emph{competitive gradient descent} (\cite{schaefer_competitive_2019}) which we mention in more detail in \cref{can_we_compose_optimisers}.
\end{cthelp}

\section{Transformers}
\label{sec:transformers}

Transformers are one of the most influential architectures in deep learning \cite{vaswani_attention_2017, tay_efficient_2022,phuong_formal_2022}.
They're a recent model, originally proposed as a component of a sequence-to-sequence model for machine translation in the paper \emph{Attention is All you Need} (\cite{vaswani_attention_2017}).
They have since been used to achieve the state-of-the-art performance on various tasks in natural language processing (\cite{brown_language_2020}), computer vision (\cite{dosovitskiy_image_2021}), and audio processing (\cite{chen_neural_2018, dong_speech-transformer_2018}), most notably being used as a central piece of the large language model ChatGPT (\cite{openai_introducing_2022}).

One of the most interesting properties of transformers is their ability to learn during the inference stage.
This property --- called \emph{in-context learning} (\cite{brown_language_2020, von_oswald_transformers_2023}) --- is a kind of learning that happens on the \emph{forward pass}, without any parameter updates. Learning instructions (possibly with input-output examples) are encoded as part of the input prompt which the already trained transformers then consumes.
Afterwards, the transformer can be prompted with previously unseen examples to which it in in many cases successfully generalises by producing the desired output as a response.\footnote{The manner by which they learn in-context was shown to be, surprisingly, \emph{gradient descent} (\cite{von_oswald_transformers_2023})!}
In-context learning additionally blurred the lines between training and test stages, motivating research on the meta-learning ability of transformers (\cite{mishra_simple_2018,melo_transformers_2022}).

So, what is a Transformer?
Just like a recurrent neural network (\cref{subsec:backpropagation_through_time}), it is a model which a) consumes a sequence of inputs, b) can be repeated sequentially in layers, and c) consists of a modular component repeated across every time step.
But importantly, unlike in a recurrent neural network, in transformers the modular component does not have a concept of \emph{state} which is iteratively updated at every time step (\cref{fig:recurrent_twolayer}).
Likewise, unlike the ``clean`` design of a recurrent neural network, the transformer layer contains many other details that we remark on at the end of this subsection.

The aforementioned modular component is the \emph{attention} mechanism: the centerpiece of the Transformer.
This is a mechanism meant to mimic cognitive attention found in humans and animals: allocating more resources towards processing a particular part of the input, while ignoring the others.
Variants of attention have been present in architectures such as Neural Turing Machines (\cite{graves_neural_2014}) and Differentiable Neural Computer \cite{graves_hybrid_2016}, though they were not explicitly defined, nor a centerpiece of the paper.
We define the attention component below, and explain the manner by which it's embedded within the transformer architecture.

\begin{definition}[{Attention (\cite{vaswani_attention_2017})}]
  \label{def:attention}
  Fix $\seq, \val$ and $\key : \N$ denoting, respectively, the length of the sequence to be processed (whose elements are called \emph{tokens}), the dimensionality of feature vectors associated with each token, and the dimensionality of \emph{key}\footnote{One can think of this as an identifier of the feature, possibly even an encoding of its \emph{type}.} vectors.
  Then attention is the following morphism in $\Smooth$:
  \begin{align*}
    \Attend: \R^\key \times \R^{\seq \times \key} \times \R^{\seq \times \val} &\to \R^\val\\
    (Q, K, V) &\mapsto \Softargmax(\frac{QK^\top}{\sqrt{\key}})V
  \end{align*}
\end{definition}

We leave out a full description of its semantics to (\cite{vaswani_attention_2017,phuong_formal_2022}), instead here merely pointing out that the inputs $Q, K$ and $V$ are called \emph{query}, \emph{key}, and \emph{value} vectors, and that $Q$ represents the query of the token to the rest of the sequence which the $\Attend$ morphism evaluates. %

Here $\Attend$ is a computation done \emph{for each token} in the sequence.
The manner by which this is extended to the entire sequence involves an additional step of \emph{reuse of the $K$ and $V$ matrices}.
Formally, we can do this by treating $\Attend$ as a morphism in $\CoKl(- \times \R^{\seq \times \key} \times \R^{\seq \times \val})(\R^\key, \R^\val)$ and taking its parallel product $\seq$ times inside this category.
This produces $\Attend^\seq : \CoKl(- \times \R^{\seq \times \key} \times \R^{\seq \times \val})(\R^{\seq \times \key}, \R^{\seq \times \val})$ whose type in $\Smooth$ is $\Attend^\seq : \R^{\seq \times \key} \times \R^{\seq \times \key} \times \R^{\seq \times \val} \to \R^{\seq \times \val}$, and implementation is
\[
  (\{Q_i\}_i^{\seq}, K, V) \mapsto \{\Attend(Q_i, K, V)\}_i^{\seq}
\]

Interestingly, this is not a morphism in $\Para(\Smooth)$: there are no parameters here!
If there are no parameters, how does the self-attention layer learn?
The idea is simple: there is a fully-connected layer precomposed with attention that does the learning.\footnote{Of course, we first need to embed attention into $\Para(\Smooth)$ via the functor $\cC \to \Para(\cC)$ (\cref{lemma:c_embeds_into_para}), which treats it a trivially parametric morphism.}
This fully-connected layer learns how to produce the correct query, key, and value information that then allows the entire transformer architecture to learn how to self-attend.

This is a very rough story.
The actual transformer architecture contains multiple self-attention layers, each with multiple \emph{attention heads} (meaning a token can produce multiple queries), residual connections, and various forms of normalisation (see \cref{rem:normalisation}).
Furthermore, its details often vary from implementation from implementation.
For more detail, we point to \cite{beynon_rainbow_2023,tay_efficient_2022,phuong_formal_2022}.

\begin{remark}[Normalisation]
  \label{rem:normalisation}
Normalisation is another active area of study in deep learning. It refers to the transformation of input data, activations or weights in a particular way to be on a similar scale, helping stabilies and speed up learning. They can be performed across the batch dimension (\cite{ioffe_batch_2015}), across the layer dimension (\cite{ba_layer_2016}), or across weights (\cite{salimans_weight_2016}).
The Transformer architecture in particular uses layer normalisation.
\end{remark}

\begin{cthelp}
  Very little is known about the categorical semantics of transformers.
  Does any of the versions of transformers --- potentially with some ad-hoc details removed --- possess a universal property?
  What is the categorical semantics of in-context learning, and is it related in any way to the microcosm principle in category theory?
  Is the fact that vision transformers \emph{learn to be equivariant} (\cite{gruver_lie_2023}) related in any way to parametric algebras from \cref{subsec:recursive_encoding_decoding} whose homomorphisms are conjectured to model generalised notion of equivariance?
\end{cthelp}

\section{Loss functions}
\label{sec:loss_functions}

Loss functions are a central component of deep learning systems.
The loss function consumes the output of our model and quantifies how well its prediction match the actual data.
By differentiating the composite (\cref{fig:para-nn-loss-losslens}) we can provide a signal for the learning process to follow.

\begin{figure}[H]
  \scaletikzfig[0.8]{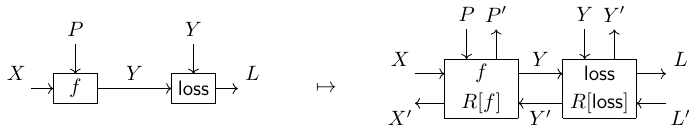}
  \caption{Loss function composed with a model.}
  \label{fig:para-nn-loss-losslens}
\end{figure}

In standard learning in $\Smooth$ the loss function is viewed as a map of type $\mo \times \mo \to \R$.
Here $\mo$ is the output type of the model, and $\R$ is \emph{a payoff} (or \emph{loss}) object.
In different kinds of learning settings there will be a different kind of a payoff object which quantifies how well the model is faring --- for instance, in categories like $\PolyZ$ the payoff object will be $\Z$.\footnote{At the present moment the choice of the payoff object is still done in an ad-hoc way, often as the base field of the underlying vector space. It is not what properties it needs to satisfy.}
We thus abstract away this payoff object and denote it with $L$.
Furthermore, in our setup it will be natural to view the loss map as a parametric map $Y \to L$ whose parameter space is $Y$.
Thus the loss function on $Y$ is here simply a morphism in $\Para(\cC)(Y, L)$.

We now give a series of examples.
In all of them $\groundtruth$ will represent the ground \textbf{t}ruth, $\predicted$ model's \textbf{p}rediction, and $\loss$ the underlying implementation of this loss function.

\begin{example}[Mean squared error]
  \label{ex:mean_squared_error}
  One of the simplest and most common loss functions is mean squared error defined in $\Smooth$.
  Given an object $\R^\mol$ mean squared error is a parametric morphism $(\R^\mol, \loss) : \Para(\Smooth)(\R^\mol, \R)$ where
  \[
    \loss(\predicted, \groundtruth) = \frac{1}{2} \sum^y_{i = 1}((\predicted)_i - (\groundtruth)_i)^2
  \]
  Its reverse derivative is $R[e](\predicted, \groundtruth, \alpha) =  (\alpha(\predicted - \groundtruth), \alpha(\groundtruth - \predicted))$.
\end{example}

\begin{example}[Boolean error]
  \label{ex:xor_error}
  In $\PolyZ$ the loss function on $\Z^\mol$ that is implicitly used in \cite{wilson_reverse_2021} is given by the parametric morphism $(\Z^\mol, \loss) : \Para(\PolyZ)(\Z^\mol, \Z)$ where
  \[
    \loss(\predicted, \groundtruth) = \predicted + \groundtruth
  \]
  Note that $+$ in $\Z_2$ is given by $\XOR$.
  Its reverse derivative, following \cref{fig:sum_derivative}, unpacks to $R[e](\predicted, \groundtruth, \alpha) = (\alpha, \alpha)$.
\end{example}

\begin{example}[Softargmax cross-entropy]
  \label{ex:softargmax_cross_entropy}
  One of the canonical functions used in classification is softargmax cross-entropy (often called softmax cross-entropy).
  Given an output object $\R^\mol$, it is a parametric map $(\R^\mol, \loss) : \Para(\Smooth)(\R^\mol, \R)$ where
  \[
    \loss(\predicted, \groundtruth) = \frac{1}{2} \sum^y_{i = 1}\Softargmax((\groundtruth)_i)((\predicted)_i - \log \Softargmax((\predicted)_i))
  \]
  and where $\Softargmax$ is defined in example \cref{ex:activation_functions}.\footnote{Observe that in \cite[Ex. 8]{cruttwell_categorical_2022} the formula is slightly different because of the assumption that $\groundtruth$ is a probability distribution.
    Our approach is type-safe, making it appear distinct from those in the literature.}
\end{example}

There are some stranger examples as well!
Unbeknownst to most, the loss function that is at the heart of \emph{Deep Dreaming} (\cref{subsec:deep_dreaming}) is the one given by the dot product.
\begin{example}[Dot product]
  \label{ex:dot_product}
  In deep dreaming we often want to focus on a particular element $i : \predicted$ of the network output $\predicted : R^\mol$ and ignore all the others.
  This is done by supplying a one-hot vector $\groundtruth$ as the ground truth to this loss function.
  The dot product then performs masking of all elements of $\predicted$, except the one at which one-hot vector $\groundtruth$ was active at.
  More precisely, given an output type $\R^\mol$, the dot product is a parametric map $(\R^\mol, \loss) : \Para(\Smooth)(\R^\mol, \R)$ where
  \[
    \loss(\predicted, \groundtruth) = \predicted \cdot \groundtruth
  \]
  Its reverse derivative is $R[e](\predicted, \groundtruth, \alpha) = (\alpha \groundtruth, \alpha \predicted)$.
\end{example}

\begin{cthelp}
  Very little is known about the categorical semantics of loss functions.
  Do any of them --- especially the ones involving cross-entropy --- have a universal property?
  What is the minimal structure needed to express each one?
  Likewise, what is the minimal structure needed of the payoff object $L$ for learning?
\end{cthelp}

	\chapter{Supervised Learning}
\label{ch:supervised_learning}

\epigraph{The question of whether a computer can think is no more interesting than the question of whether a submarine can swim.}{Edsger W. Dijkstra}

\newthought{We have done a lot of things so far.}
We explained how to understand parts of neural networks categorically: how to model their weights, backpropagation, how to model various architectures, all in ways that are somehow vastly general --- for instance not tied to euclidean spaces, nor restricting ourselves to an implementation thereof --- while still capturing the essence of backpropagation, architecture, loss functions and so on.
What remains is to describe how the puzzle pieces fit together.

In \cref{fig:para-nn-loss-losslens} we have seen how to compose the model and the loss together.
By applying the general machinery of \cref{ch:para_optic}, specifically \cref{prop:para_morphism}, we can take the underlying $\LensA$-coalgebra, i.e.\ a functor $\RC : \cC \to \LensA(\cC)$ and apply $\Para$ to it, yielding a pseudofunctor
\[
  \Para(\RC) : \Para(\cC) \to \Para(\LensA(\cC))
\]

which takes a parametric map and augments it with its backward derivative.
We note that, despite our approach being completely architecture agnostic, it is still a precise specification of the derivative computation.
For instance, if our base category $\cC \coloneqq \Smooth$ with its usual derivative structure, we get standard backpropagation in the usual sense.
For $\cC \coloneqq \CoKl(\R^{n \times n} \times -)$ and the induced differential structure from $\Smooth$ we model backpropagation of neural networks with a shared context, such as graph convolutional neural networks (\cref{subsec:graph_neural_networks}).
For $\cC \coloneqq \St_0(\cC)$ and its induced differential structure we model backpropagation on recurrent neural networks (\cref{subsec:backpropagation_through_time}).
Of course, the base category can be changed, and we could have studied $\St_o(\PolyZ)$ for instance --- backpropagation of recurrent neural networks in the setting of boolean circuits.
And so on.
After differentiating, in any of these cases we will end up with a parametric lens $\diset{\mi}{\mi '} \to \diset{\lo}{\lo '}$ like the one below.

\begin{figure}[h]
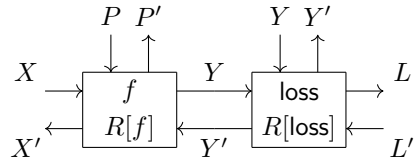

  \scaletikzfig{paralens-loss-function-with-model-2}
  \caption{Model composed with a loss function, augmented with the reverse derivative.}
  \label{fig:model_loss_with_derivative}
\end{figure}

This lens has a forward part $f  \compPara{\comp} \loss : \mi \times P \times \mo \to \lo$ which consumes an input datapoint, a parameter and a label and produces a measure of how well the neural network did on this example.
Its backward pass (which we will denote by $\update$ in this chapter), is a morphism of type $\mi \times P \times \mo \times \lo' \to \mi' \times P' \times \mo'$.
It consumes an input datapoint, a parameter, a label and a learning rate.
What it then does is, via the composition rules of parametric maps (\cref{def:para}) and  lenses (\cref{def:lens_composition_identity}), is produce gradients of the loss with respect to: a) ground truth label (usually ignored in practice), parameter (used to update the network), and input (usually ignored in practice, unless one is doing deep dreaming (\cref{subsec:deep_dreaming})).
More precisely,

\begin{align*}
  \update(x, p, \groundtruth, \alpha) = (x', p', \groundtruth') \quad \text{where} \quad \quad \quad \predicted &= f(x, p)\\
  (\predicted ', \groundtruth ')&=R[\loss](\predicted, \groundtruth, \alpha)\\
  (x', p') &=R[f](x, p, \predicted ')
\end{align*}

In the next section we describe how to ``close off'' this $\update$ map from the left side, the right side, and even from the top.

\begin{contributions}
  Almost the entirety of this chapter is novel contribution, centering around the application of the base change of $\Para$ to the reverse derivative functor $\cC \to \LensA(\cC)$.
  Novel contribution is the proof that $\Para(\cC)$ has quasi-initial objects (\cref{box:para_initial}), optimiser composition (\cref{can_we_compose_optimisers}), and the framing of GANs as a supervised learning system (\cref{ex:gan_gda_dotproduct}).
  Most of this was previously published in ``Categorical Foundations of Gradient-Based Learning'' (\cite{cruttwell_categorical_2022}).
  For a more nuanced discussion of originality, see \cref{sec:supervised_learning_in_the_literature}.
\end{contributions}

\begin{epistemicstatus}
  The fact that the resulting theory of supervised learning takes on a rather simple form makes me confident that the contents of this chapter are on the right track: there are no major issues as I see it.
  The categorical formalism between some smaller components has directions for improvement, though.
  For instance, the formulation of learning rates as parametric lenses is unsatisfactory because it is not in the image of a reverse derivative functor.
  It is plausible that better foundations of differentiation will help shed light on this issue, though a more conclusive answer might be obtained by providing a better theory of iteration of learning.
  Iteration of learning might be something that definitively informs whether the formulation of supervised learning in this chapter is correct, as meta-learning mentioned in \cref{box:optimisers_vs_neural_nets} involves unrolling an iterated learner which processes not just datapoints, but entire datasets in one update step.
  Likewise, I envision that the refinement of optimisers by the way of dependent types is another necessary theoretical step, as currently many artificial identifications are made in their definition.
\end{epistemicstatus}

\section{Corners, learning rates, and optimisers}

In order to perform learning we will need to supply datapoints, a learning rate, and importantly, an optimiser that will tell us how to update the parameter in light of gradient information.

\subsection{Left side: corners}
\label{subsec:left_side_corners}

From the left side, we need \emph{a state}: a parametric lens of type $\diset{1}{1} \to \diset{\mi}{\mi '}$.
Interestingly, such a lens always exists!

\begin{mybox}[label=box:para_initial]{ForestGreen}{Categorical aside: $\Para[false](\cC)$ has a quasi-initial object.}
  \begin{proposition}
    \label{prop:quasi_initial}
    Let $(\cC, \otimes, I)$ be a monoidal category.
    Then $I$ is a quasi-initial object (\cref{def:quasi_initial}) of $\Para(\cC)$, where for every object $X : \Para(\cC)$ the strict initial object of the hom-category $\Para(\cC)(I, X)$ is $(X, \lambda_X)$ where $\lambda_X : I \otimes X \to X$ is the left unitor of $\cC$.\footnote{This is something I learned from Dario Stein.}
  \end{proposition}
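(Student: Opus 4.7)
The plan is to proceed in three steps, verifying that $(X, \lambda_X)$ lives in the right hom-category, that it is initial there, and finally that this initiality is strict in the appropriate sense.

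First, I would check the typing: in $\Para(\cC)$ for the right self-action (\cref{ex:self_action}), by \cref{def:para} a parametric morphism $I \to X$ is precisely a pair $(P, f)$ with $P : \cC$ and $f : I \otimes P \to X$. Taking $P = X$ and $f = \lambda_X$, the left unitor of $\cC$, gives exactly such a datum, so $(X, \lambda_X) \in \Para(\cC)(I, X)$.

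Second, I would show that $(X, \lambda_X)$ is initial in the hom-category. Given any other object $(P, f)$ with $f : I \otimes P \to X$, by \cref{eq:para_triangle_reparam} a 2-cell $(X, \lambda_X) \Rightarrow (P, f)$ is a morphism $r : P \to X$ in $\cC$ satisfying $f = \lambda_X \circ (I \otimes r)$. The main move is naturality of $\lambda$, which rewrites the right-hand side as $r \circ \lambda_P$; since $\lambda_P$ is invertible, the equation is uniquely solved by $r \coloneqq f \circ \lambda_P^{-1}$. Existence then follows by substituting this $r$ back in and using naturality of $\lambda$ once more. This establishes initiality.

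Third, for the strictness clause I would unpack what ``strict initial'' demands for the hom-category. Reading it in the standard sense (every 2-cell \emph{into} $(X, \lambda_X)$ is invertible), a 2-cell $(P, f) \Rightarrow (X, \lambda_X)$ consists of $r : X \to P$ with $\lambda_X = f \circ (I \otimes r)$, and I would pair it with the canonical 2-cell $s : P \to X$ in the opposite direction (constructed in step two), then use naturality of $\lambda$ together with invertibility of $\lambda_X$ and $\lambda_P$ to show $s \circ r = \id_X$ and $r \circ s = \id_P$, making $r$ an iso in $\cC$ and hence the 2-cell invertible in $\Para(\cC)(I, X)$. The main obstacle I anticipate is this last step: the identity $r \circ s = \id_P$ does not follow purely formally from the source/target equations, and I expect that either (i) one of these equations must be upgraded (e.g.\ by also using the uniqueness of $s$ as a map out of the initial object) or (ii) the phrase ``strict initial'' is intended only in the weaker sense of ``initial on the nose in the hom-category'' (as opposed to bi-initial up to iso), in which case step two already suffices and no further work is required.
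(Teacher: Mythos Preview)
Your approach is essentially the same as the paper's: the paper's proof consists exactly of your step two, defining $r \coloneqq \lambda_P^{-1} \comp f$ and invoking naturality of $\lambda$ for both existence and uniqueness. Your option (ii) is the intended reading --- ``strict initial'' here just means an honest initial object in the 1-category $\Para(\cC)(I,X)$ (as required by the definition of quasi-initial, which only asks for a left adjoint to the terminal functor, i.e.\ an initial object), not the technical notion where incoming maps are isomorphisms --- so your step three is unnecessary, and indeed the obstruction you spotted there is real.
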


  \begin{proof}
    To prove that $(X, \lambda_X)$ is initial in $\Para(\cC)(I, X)$ we need to show that for any other $(P, f) : \Para(\cC)(I, X)$ there is a unique reparameterisation $r : (X, \lambda_X) \Rightarrow (P, f)$ i.e.\ that there exists a unique map $r : P \to X$ such that $\lambda_X^r = f$.
    This map is 
    \[r \coloneqq \boxed{P \xrightarrow{\lambda_P^{-1}} I \otimes P \xrightarrow{f} X}
    \]
    and reparameterisation condition can be shown using naturality of $\lambda$.
  \end{proof}
  Note that since $\Para(\cC)$ is a bicategory, this also means that given any $(Q, g) : \Para(\cC)(X, Y)$ there is a unique 2-cell $(Y, \lambda_Y) \Rightarrow ((X, \lambda_X)  \compPara{\comp} (Q, g))$ arising out of postcomposition of $(X, \lambda_X)$ with $(Q, g)$.
  Its underlying reparameterisation is a map of type  $X \otimes Q \to Y$ and as it turns out this map is precisely $g$.
  Dually, it can be shown that $\Copara(\cC)$ has a quasi-\emph{terminal} object.
  A variant of this was explored in \cite{huot_universal_2019}, in which the ``affine reflection'' is an instance of $\Copara$.

\end{mybox}

It is a parametric lens whose parameter space is that of the domain, and its implementation $\lambda_X$ does nothing --- it is isomorphic to the identity map on $X$.
In string diagram notation it resembles a ``corner'', as it twists the input coming from the top to the right side.
Like the ones before, this lens also arises as the image under the differentiation functor.
In this case, it is the image of the identity map (\cref{ex:identity_derivative}), drawn in \cref{fig:para_state_to_paraoptic_state} in its parametric form.

\begin{figure}[h]
  \scaletikzfig[0.8]{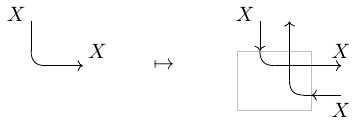}
  \caption{Quasi-initial object in $\Para(\cC)$ can be interpreted as a ``corner''.}
  \label{fig:para_state_to_paraoptic_state}
\end{figure}

Composing it with the rest of the system we obtain \cref{fig:paralens-model-loss-state}.

\begin{figure}[h]
  \scaletikzfig{paralens-model-loss-state}
  \caption{Model, loss function and a state, forming a parametric lens $\diset{1}{1} \to \diset{\lo}{\lo '}$.}
  \label{fig:paralens-model-loss-state}
\end{figure}

\subsection{Right side: learning rates}
\label{subsec:right_side_learning_rate}

From the right side, we need \emph{a costate}: a parametric lens of type $\diset{\lo}{\lo '} \to \diset{1}{1}$.
This is a lens that consumes the produced the output $\lo$ of the loss function, and ``turns it backward'' into a gradient $\lo '$ to be backpropagated.
We will see how this corresponds tightly to the concept of \emph{the learning rate}: a tuning parameter that determines the step size at each iteration while moving towards the optimum of the loss function.
In all the cases of interest it will be a non-parametric lens, hence the definition below.

\begin{minipage}{0.45\textwidth}
  \begin{definition}
    \label{def:learning_rate}
    A \emph{learning rate} on a payoff object $\lo : \cC$ is a morphism $\diset{\lo}{\lo '} \to \diset{1}{1}$ in $\Lens(\cC)$.
  \end{definition}
\end{minipage}
\begin{minipage}{0.45\textwidth}
  \begin{figure}[H]
    \scaletikzfig[1]{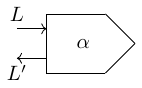}
    \caption{Learning rate as a lens with trivial parameters.}
    \label{fig:learning_rate}
  \end{figure}
\end{minipage}

Note that, in addition to its lack of parameters, there is no requirement that the backward pass of this lens is additive in the second component.
If it were additive, then there would exist only one learning rate: the one given in \cref{ex:delete_derivative} whose backward pass is constant at zero.
This is also the derivative of the forward pass, as it is the only plausible candidate.
But as we will see --- learning rate is hardly ever in the image of the differentiation functor $R$.

To see how this definition resembles a learning rate, it is useful to recall (\cref{box:scs_lens}) which tells us that a costate $\alpha : \diset{\lo}{\lo '} \to \diset{1}{1}$ in $\Lens(\cC)$ is isomorphic to a morphism $\alpha : \lo \to \lo '$ in $\cC$.
This is precisely how we will think of this lens from now on: as a morphism $\alpha : \lo  \to \lo '$ in $\cC$.

\begin{example}
  \label{ex:learning_rate_smooth_standard}
  In standard supervised learning in $\Smooth$ with payoff object $\R$ we fix the learning rate $\R \to \R$ as constant one at some $\alpha \sim 10^{-2}$.
\end{example}

\begin{example}
  \label{ex:learning_rate_poly_idenity}
  In $\PolyZ$ the relevant learning rate will $\alpha : \Z \to \Z$ is the one given by the identity map.
\end{example}

Other learning rate morphisms are possible as well.
One could fix some $\epsilon > 0$ and define the learning rate in $\Smooth$ by $\alpha(l) = \epsilon l$.
Such a learning rate would take into account how far away the network is from the desired goal and adjust the output accordingly.
Composing the learning rate with the model and loss in \cref{fig:model_loss_with_derivative} we obtain \cref{fig:paralens-model-loss-cap}.

\begin{figure}[h]
  \scaletikzfig{paralens-model-loss-cap}
  \caption{Model, loss function and a learning rate forming a parametric lens $\diset{\mi}{\mi '} \to \diset{1}{1}$.}
  \label{fig:paralens-model-loss-cap}
\end{figure}

\subsection{Top side: optimisers}
\label{sec:optimisers}

In addition to closing off this system from the left and the right side, we will also need to manipulate it from the top.
Recall what happens at the ports $\diset{P}{P'}$ on the top.
The parameter $p : P$ is supplied to the network, and the gradient $p' : P'$ is produced.
Once this happens, we are interested in computing a new parameter as a function of these two, often by moving a small step size in the direction $p'$.
This can succintly be captured as a lens $\diset{P}{P} \to \diset{P}{P'}$ which we think of being stacked on top of the model --- it is precisely a reparameterisation!
\footnote{Despite the fact that $P = P'$, here $P'$ informally denotes \emph{the tangent spoce} of $P$, to alleviate confusion.}
This is what we call a gradient update, and is in neural network terminology called \emph{an optimiser} (\cite{shiebler_compositionality_2023}).

In this section we explore how to model optimisers using lenses and reparameterisations, modelling both non-stateful and stateful variants (\cref{fig:model_and_optimiser}).
We begin by modelling gradient \emph{ascent} --- an update that can be captured in any cartesian left-additive category $\cC$.

\begin{figure}[h]
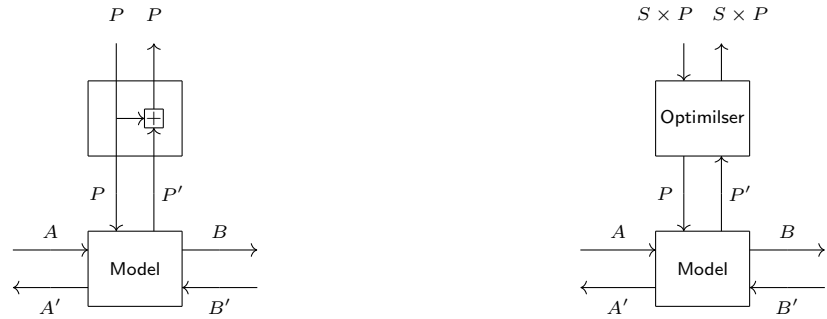

  \centering
  \begin{subfigure}{.5\textwidth}
    \centering
    {\scalefont{0.75}
      \tikzfig{model_and_optimiser}
    }
    
  \end{subfigure}%
  \begin{subfigure}{.5\textwidth}
    \centering
    {\scalefont{0.75}
      \tikzfig{model_and_stateful_optimiser}
    }

  \end{subfigure}
  \caption{Model reparameterised by basic gradient ascent (left) and a
    generic stateful optimiser (right).}
  \label{fig:model_and_optimiser}
\end{figure}

\vspace*{0.4cm}
\begin{minipage}{0.6\textwidth}
  \begin{definition}[Gradient ascent]
    \label{def:gradient_ascent}
    Let $\cC$ be a cartesian left-additive category.
    Gradient ascent on $P : \cC$ is a lens
    \[
      \diset{\id_P}{+_P} : \diset{P}{P} \to \diset{P}{P'}
    \] 
    where $+_P : P \times P' \to P$ is the monoid structure of $P$.\footnote{Here we ``tag'' the second argument of $+_P$ with a superscript, denoting that it should be thought of as a ``cotangent vector'', though for us really $P' = P$.
      For a type-safe(r) formulation see \cite{capucci_diegetic_2023}.}
  \end{definition}
\end{minipage}
\begin{minipage}{0.4\textwidth}
  \begin{figure}[H]
    \scaletikzfig[1.3]{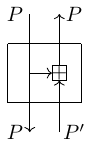}
    \caption{Gradient ascent}
    \label{fig:gradient_ascent}
  \end{figure}
\end{minipage}

\begin{example}[Gradient ascent in $\Smooth$]
  \label{ex:gradient_ascent_smooth}
In $\Smooth$ gradient ascent uses the monoid of $P$ which takes the cotangent vector $p'$ and adds it to the current point $p$.
\end{example}

\begin{example}[Gradient update in $\PolyZ$]
  \label{ex:gradient_ascent_polyz}
  In $\PolyZ$ gradient update will also performs ``addition'', but one implemented by $\XOR$, following the implementation in \cite{wilson_reverse_2021}.
\end{example}

On the other hand, the commonly used gradient \emph{descent} can only be captured in cartesian left-additive categories where the monoid structure on objects is additionally a group.\footnote{Since a homomorphism between groups needs to satisfy \emph{less} equations than a monoid homomorphism, this means that such an assumption is a fairly easy one to add!}

\begin{minipage}{0.6\textwidth}
  \begin{definition}[Gradient descent]
    \label{def:gradient_descent}
    Gradient descent on $P$ is a lens
    \[
      \diset{\id_P}{-_P} : \diset{P}{P} \to \diset{P}{P'}
    \]
    where $-_P:(p, p') = p - p'$.
  \end{definition}
\end{minipage}
\begin{minipage}{0.4\textwidth}
  \begin{figure}[H]
    \scaletikzfig[1.3]{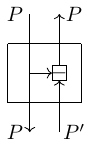}
    \caption{Gradient descent}
    \label{fig:gradient_descent}
  \end{figure}
\end{minipage}
\vspace*{0.3cm}

In $\Smooth$ this instantiates to usual gradient descent.
Gradient ascent in $\PolyZ$ is equal to gradient descent because $\XOR$ is its own inverse.
Intuitively in $\PolyZ$ there is always only one direction we can move (other than staying still): its flipping the bit.
Gradient descent and ascent are not usually seen as a lens --- but they fit precisely into this picture that we are creating.

Other variants of gradient descent also fit naturally into this framework by allowing for additional input/output data with P.
In particular, many of them keep track of the history of previous updates and use that to inform the next one.
This is easy to model in our setup: instead of asking for a lens $\diset{P}{P} \to \diset{P}{P'}$, we ask instead for a lens $\diset{S \times P}{S \times P} \to \diset{P}{P'}$ where S is some “state” object.
These turn out to be parametric lenses as well!

\begin{definition}[Stateful gradient update]
  \label{def:stateful_gradient_update}
  A stateful gradient update of $P$ consists of a choice of an object $S : \cC$ (the state object) and a lens
  \[
    \diset{S \times P}{S \times P} \to \diset{P}{P'}
  \]
\end{definition}

Following the idea of drawing parameters in a different dimension, adding another higher-order ``parameter'' implies that stateful parameter update requires a three-dimensional graphical language in order to properly separate the ``hyper-parameter''state object $S$ from the parameter object $S$.
This is a sensible direction to investigate (partially explored in \cite{gavranovic_meta-learning_2021}), it is not the one we take in this thesis.
Here we consider a stateful optimiser merely as just a lens whose domain is a product (\cref{fig:model_and_optimiser}, (right)).

Several well-known optimisers can be implemented this way.
All of these reside in $\Smooth$.

\begin{example}[{Momentum (\cite{polyak_methods_1964})}]
  \label{ex:momentum}
  In the momentum variant of gradient descent, we keep track of previous gradients used to update the parameter and use this information to inform how future updates should be computed.
  More precisely, set $S = P$ and fix $\gamma : \R$ (usually $\gamma > 0$).
  Then momentum is a lens

  \[
    \diset{\pi_P}{U'} : \diset{P \times P}{P \times P} \to \diset{P}{P'}
  \]
  with $U'(s, p, p') = (s', p + s')$ where $s' = - \gamma s + p'$.
  Note momentum recovers simple gradient update when $\gamma = 0$.
\end{example}

In both standard gradient ascent/descent and momentum our lens representation has a trivial forward part.
This raises the question of whether lenses really capture the essence of optimisers.
However, as soon as we move on to more complicated variants, having non-trivial forward part of the lens is important, and Nesterov momentum is a key example of this.

\begin{example}[{Nesterov momentum (\cite{nesterov_method_1993})}]
  \label{ex:nesterov_momentum}
  In Nesterov momentum we make a small modification to the forward pass of the previous example.
  While in momentum the forward pass simply discards the accumulated state, Nesterov momentum computes a ``lookahead'' value by tewaking the input parameter supplied to the network.
  More precisely, we again set $S = P$, fix $\gamma : \R$ (usually $\gamma > 0$) and define the \textbf{Nesterov momentum} lens
  \[
    \diset{U}{U'} : \diset{P \times P}{P \times P} \to \diset{P}{P'}
  \]
  where  $U(s, p) = p + \gamma s$ and $U'$ as in the previous example.
\end{example}

\begin{example}[{Adagrad (\cite{duchi_adaptive_2011})}]
  \label{ex:adagrad}
  Given any $\epsilon > 0$ and $\delta \sim 10^{-7}$ , Adagrad is given by $S = P$ and a lens whose forward part is defined as $(g, p) \mapsto p$.
  The backward part is $(g, p, p') \mapsto (g', p + \frac{\epsilon}{\delta + \sqrt{g'}} \odot p')$ where $g' = g + p' \odot p'$ and $\odot$ is the elementwise (Hadamard) product.
  Unlike with other optimization algorithms where the learning rate is the same
  for all parameters, Adagrad divides the learning rate of each individual
  parameter with the square root of the past accumulated gradients.
\end{example}

\begin{example}[{ADAM (\cite{kingma_adam_2015})}]
  \label{ex:adam}
  Adaptive Moment Estimation is another method that computes adaptive
  learning rates for each parameter by storing exponentially decaying average of past
  gradients ($m$) and past squared gradients ($v$).
  For fixed $\beta_1, \beta_2 \in [0, 1)$, $\epsilon > 0$, and $\delta \sim 10^{-8}$, Adam is given by $S = P \times P$ with the lens whose forward part is $(m, v, p) \mapsto p$ and whose backward part is
  $(m, v, p, p') = (\widehat{m}', \widehat{v}', p + \frac{\epsilon}{\delta + \sqrt{\widehat{v}'}} \odot \widehat{m}')$, where
  \begin{align*}
    m' &= \beta_1m + (1 - \beta_1)p'           & \widehat{m}' &= \frac{m'}{1 - \beta_1^t}\\
    v' &= \beta_2v + (1 - \beta_2)p'^2         & \widehat{v}' &= \frac{v'}{1 - \beta_2^t}
  \end{align*}
\end{example}

We note that these are just a select few out of many more (\cite{kashyap_survey_2022}) such as regularised gradient descent (\cite{botev_nesterovs_2017}) and Nesterov ADAM \cite{dozat_incorporating_2016}.
The last example is notable because it suggests that ``Nesterov'' is a general principle for modifying the forward passes of optimisers.

\begin{mybox}[label=box:optimisers_vs_neural_nets]{ForestGreen}{Both optimisers and neural networks are parametric lenses!}
  Stateful optimisers such as Nesterov momentum (\cref{ex:nesterov_momentum}) can be seen as parametric lenses, whose parameters are states. 
  Since we've seen that parameters of a parametric lens can be learned, is there any sense in which we can \emph{learn an optimiser}?
  The work of \cite{andrychowicz_learning_2016} --- appearing before any definitions of parametric lenses --- confirmed this.
  Inspired by the ability to learn any parametrised map, the authors parameterise a stateful optimiser, and \emph{learn} the gradient update function.
  Categorical foundations of such meta-learning is something we hope to thoroughly explore in future work.

  \scaletikzfig[0.8]{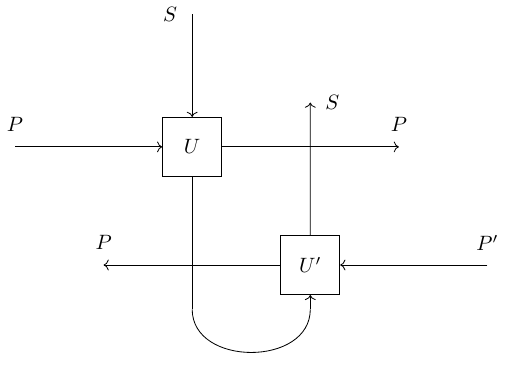}
  \captionof{figure}{A stateful optimiser (\cref{def:stateful_gradient_update}) can be seen as a parametric lens, hinting at the possibility that gradient updates can be learned. (\cite{andrychowicz_learning_2016})}
\end{mybox}

\subsubsection{Can we compose optimisers?}
\label{can_we_compose_optimisers}

Even though not explicitly acknowledged in the literature, optimisers can be composed, and this composition plays an important role in settings where deep learning intersects with multivariable optimisation.
In such settings we're interested in their \emph{parallel} composition, therefore giving a positive answer to the above question.\footnote{One might wonder whether optimisers can be composed in sequence as well. The apparent sequential composability of optimisers is unfortunately an artefact of our limited view without dependent types.}
Parallel composition of optimisers arises out of the fact that optimisers are lenses, and lenses have a monoidal product (\cref{sec:optic_monoidal}).
In such settings we might have an optimiser of two variables which descends in one direction, and ascends in the other one, for instance. 

\begin{definition}[Gradient descent-ascent (GDA)]
  \label{ex:gradient_descent_ascent}
  Given objects $P$ and $Q$, gradient descent-ascent on $P \times Q$ is a lens
  \[
    \diset{\id_{P \times Q}}{\gda} : \diset{P \times Q}{P \times Q} \to \diset{P \times Q}{P \times Q}
  \]
  where $\gda(p, q, p', q') = (p - p', q + q')$.
\end{definition}

In $\Smooth$ this gives an optimiser which descends on $P$ and ascends on $Q$.
In $\PolyZ$ this map ends up computing the same function on both parameter spaces.
This is something that ends up preventing us from modelling GANs in this setting (compare \cref{ex:gan_gda_dotproduct} where both positive and negative polarity of the optimiser map is needed).

When it comes to optimisers of two parameters, gradient descent-ascent is a particular type of an optimiser that is a product of two optimisers.
But not all optimisers can be factored in such a way, much like a general monoidal product doesn't necessarily have to be cartesian.
A good example of this is an optimiser on two parameters called \emph{competitive gradient descent} (\cite{schaefer_competitive_2019}).
We don't explicitly define or use it in this thesis, instead inviting the reader to the aforementioned reference for more information.

Of course, these were examples of composition of non-stateful optimisers.
For stateful ones the story becomes more complex, but appears to fully be captured by the already defined monoidal structure of $\Para$.
Fully defining and understanding the implications of these kinds of optimisers is an area of categorical machine learning ripe for research.

\section{Supervised learning}
\label{sec:supervised_learning}

We have now described all the main components of learning: neural networks, loss functions, learning rates and optimisers.
We are ready to put the pieces together and describe supervised learning.
Combining a model and a loss \cref{fig:model_loss_with_derivative} with the state on the left side \cref{fig:para_state_to_paraoptic_state} and costate on the right we obtain a ``scalar'', i.e.\ a closed system: a parametric lens $\diset{1}{1} \to \diset{1}{1}$, drawn in \cref{fig:learner_with_input}.

\begin{figure}[h]
  \scaletikzfig[1][0.75]{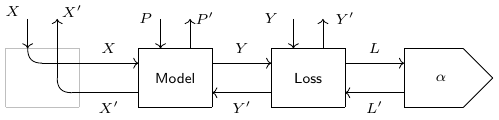}
  \caption{Closed parametric lens whose all inputs and outputs are now vertical wires.}
  \label{fig:learner_with_input}
\end{figure}

Following \cref{box:scs_para_optic} we can see that a scalar in $\Para(\Lens(\cC))$ is simply a costate in $\Lens(\cC)$ which is via \cref{box:scs_lens} isomorphic to a morphism in $\cC$.
What morphism is it?
It is one of type $\mi \times P \times \mo \to \mi' \times P' \times \mo'$ whose implementation we still suggestively call $\update$, as it is the backward map of this closed parametric lens.
That is, given a model with a forward pass $f : \mi \times P \to \mo$, a loss function $\loss : \mo \times \mo \to \lo$, a learning rate $\alpha : \lo \to \lo '$ the algebra of lens composition defines $\update$ as

\begin{align*}
  \update(x, p, \groundtruth) = (x', p', \groundtruth') \qquad \text{where} \quad \qquad \predicted  &= f(x, p) \\
  (\predicted', \groundtruth') &= R[\loss](\predicted, \groundtruth, \alpha(\loss(\predicted, \groundtruth))) \\
  (x', p') &= R[f](x, p, \predicted ')
\end{align*}

In the next two subsections we will see how this closed system can be reparameterised with an optimiser on the parameter ports to describe supervised learning of parameters, but also how it can be reparameterised with an optimiser on \emph{the input ports} describing \emph{deep dreaming}.

\subsection{Supervised learning of parameters given a loss}
\label{subsec:supervised_learning_parameters}

The most common type of learning performed on \cref{fig:learner_with_input} is supervised learning of parameters.
This is done by reparameterising it in the following manner.
The parameter ports $\diset{P}{P'}$ are reparameterised by one of the (potentially stateful) optimisers described in \cref{sec:optimisers}, while the backward wires $\mi '$ of inputs and $\mo '$ of labels are discarded.
This finally gives us a complete picture of the system which performs one step of supervised learning of parameters (\cref{fig:full_supervised_learning})

\begin{figure}[h]
  \scaletikzfig[1][0.75]{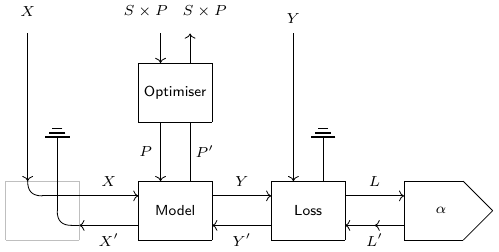}
  \caption{One step of supervised learning of parameters as a closed parametric lens. An animation of this supervised learning is available at this \href{https://giphy.com/gifs/xXl4CSLvGsd7fmeTOJ}{hyperlink}.}
  \label{fig:full_supervised_learning}
\end{figure}

Fixing a particular optimiser $\diset{U}{U*} : \diset{S \times P}{S \times P} \to \diset{P}{P'}$ we again unpack the entire construction.
By analogous arguments as before it now reduces to a map of type $\mi \times S \times P \times \mo \to S \times P$ and unpacks to:
\begin{align}
  \label{ex:supervised_learning_parameters_lens}
  \begin{split}
  \update(x, s, p, \groundtruth) = U^*(s, p, \overline{p}') \qquad \text{where} \quad \qquad \overline{p}  &= U(s, p) \\
  \predicted &= f(x, \overline{p}) \\
  (\predicted ' , \groundtruth ') &= R[\loss](\predicted, \groundtruth, \alpha(\loss(\predicted, \groundtruth))) \\
  (x', \overline{p}') &= R[f](x, \overline{p}, \predicted ')
  \end{split}
\end{align}

While this formulation might seem daunting, we note that it just explicitly specifies the entire computation performed by this supervised learning system.\footnote{It is notable that lenses forming the $\update$ in  \cref{ex:supervised_learning_parameters_lens} were composed as optics. See \cref{subsec:operational}.}
The variable $\overline{p}$ represents the parameter supplied to the network by the stateful gradient update rule (in many cases this is equal to $p$); $\predicted$ represents the prediction of the network (constrast this with $\groundtruth$ which represents the ground truth from the dataset).
Variables with a tick $'$ represent changes: $\predicted '$ and $\groundtruth '$ are the changes on predictions and ground truth label respectively, while $\mil '$ and $p'$ are changes on the inputs and the parameters.
Note that this entire update arises automatically out of the rule for lens composition: all we did was supply concrete lenses.

We justify and illustrate our approach on a series of case studies drawn from the machine learning literature, showing how in each case the parameters of our framework (in particular, loss functions and gradient updates) instantiate to familiar concepts.
This presentation has the advantage of treating all these case studies unformly in terms of our basic constructs, highlighting their similarities and differences.
We start in $\Smooth$ and fix some parametric map $(\R^p, f) : \Para(\Smooth)(\R^\mil, \R^\mol)$ and the constant learning rate $\alpha : \R$ (\cref{def:learning_rate}).
We then vary the loss function and gradient update, seeing how the update map above reduces to many of the known cases.

\begin{example}[Quadratic loss, basic gradient descent]
  \label{ex:quadratic_gradient_descent}
  Fix the quadratic error (\cref{ex:mean_squared_error}) as the loss map, and gradient descent (\cref{def:gradient_descent}) as the optimiser.
  Then the aforementioned $\update$ map simplifies.
  Since there is no state, its type reduces to $\mi \times P \times \mo \to P$, and its implementation to
  \[
    \update(x, p, \groundtruth) = p - p' \qquad \text{where} \quad \qquad (x', p') = R[f](x, p, \alpha \cdot (f(x, p) - \groundtruth))
  \]
  Note that $\alpha$ here is simply a constant, and due to the linearity of the reverse derivative we can slide the $\alpha$ from the costate into the gradient descent lens.
  Rewriting this update, and performing this sliding we obtain a closed form update step, resembling the way gradient descent with quadratic loss is usually formulated.
  \[
    \update(x, p, \groundtruth) = p - \alpha \cdot (R[f](x, p, f(x, p) - \groundtruth)) \comp \pi_1
  \]
\end{example}

\begin{figure}[h]
  \scaletikzfig[1][0.75]{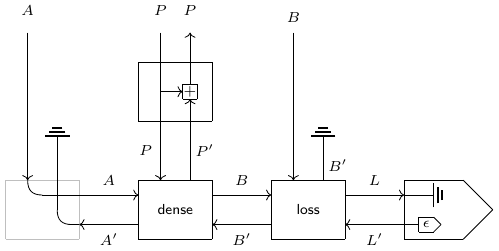}
  \caption{Closed parametric lens whose parameters are being learned.}
  \label{fig:full_supervised_learning_descent}
\end{figure}

For each parametric map $(\R^p, f) : \Para(\Smooth)(\R^\mil, \R^\mol)$ this example gives us a variety of \emph{regression} algorithms solved iteratively by gradient descent.
If the map $f$ is linear and $\mol = 1$, we recover simple linear regression with gradient descent.
If the codomain $\mol$ is multi-dimensional, i.e.\ we are predicting multiple scalars, then we recover multivariate linear regression.
Of course, by changing the underlying parametric map we can model arbitrarily complex neural network architectures.

\begin{example}[Softargmax cross-entropy, basic gradient descent]
  Fix softargmax cross-entropy (\cref{ex:softargmax_cross_entropy}) as the loss map and basic gradient descent (\cref{def:gradient_descent}) as the optimiser.
  Again the put map simplifies.
  The type reduces to $\mi \times P \times \mo \to P$ and the implementation to
  \[
    \update(x, p, \groundtruth) = p - p' \qquad \text{where} \quad \qquad (x', p') = R[f](x, p, \alpha \cdot (\Softargmax(f(x, p) - \groundtruth))
  \]
  The same rewriting performed on the previous example can be done here.
\end{example}

\begin{example}[Mean squared-error, Nesterov momentum]
  Fix the quadratic error (\cref{ex:mean_squared_error}) as the loss map, and Nesterov momentum (\cref{ex:nesterov_momentum}) the optimiser.
  This time the $\update$ map does not have simplified type, as it is still $\mi \times S \times P \times \mo \to S \times P$.
  Its implementation reduces to
  \begin{align*}
    \update(x, s, p, \groundtruth) = (s', p + s') \qquad \text{where} \quad \qquad \overline{p} &= p + \gamma s\\
    (x', \overline{p}') &= R[f](x, \overline{p}, \alpha \dot (f(x, \overline{p}) - \groundtruth))\\
    s' &= -\gamma s + \overline{p}'
  \end{align*}
\end{example}

This example with Nesterov momentum differs in two key points from all the other ones: i) the optimiser is stateful, and ii) its forward map is not trivial.
While many other optimisers are stateful, the non-triviality of the forward part here showcases the importance of lenses.
They allow us to make precise the notion of computing a ``lookahead'' value for Nesterov momentum, something that is in practice usually handled in ad-hoc ways.
Here the algebra of lens composition handles it naturally by using the forward map, a seemingly trivial, unused piece of data for previous optimisers.

Many kinds of systems that are traditionally considered unsupervised can be recast to their supervised form.
One example is GANs (\cref{sec:gan}).
By choosing $\GAN_{g, d}$ as the parametric map representing our supervised learning model, we can differentiate it as in \cref{fig:gan_differentiated}, and, with the appropriate choice of a loss function, produce the learning system in the literature called \emph{Wasserstein} GAN (\cite{arjovsky_wasserstein_2017}).

\begin{example}[GANs, Dot product, GDA]
  \label{ex:gan_gda_dotproduct}
  Fix $\GAN_{g, d}$ as the parametric map (\cref{def:gan}), gradient descent-ascent (\cref{ex:gradient_descent_ascent}) as the optimiser and dot product (\cref{ex:dot_product}) as the loss function.
  Then the update becomes a map of type $Z \times X \times P \times Q \times L \times L \to P \times Q$ and its implementation reduces to
  \begin{align*}
    \update(z, x, p, q, \groundtruth) = (p - p', q + q') \qquad \text{where} \quad \qquad (z', x', p', q') &= R[\GAN_{g, d}](z, x, p, q, \alpha \cdot \groundtruth)
  \end{align*}
  We can further unpack the label $\groundtruth = ({\groundtruth}_g, {\groundtruth}_r)$\footnote{Whose interpretation here is more of a ``mask'', see \cref{subsec:deep_dreaming}.} and via \cref{ex:gan_reverse_derivative} express the update in terms of reverse derivatives of $g$ and $d$:
  \begin{align*}
    \update(z, x_r, p, q, {\groundtruth}_g, {\groundtruth}_r) = (p - p', q + q_g' + q_r') \qquad \text{where} \quad \qquad (x_g', q_g') &= R[d](g(z, p), q, \alpha \cdot {\groundtruth}_g)\\
       (z', p') &= R[g](z, p, x_g')\\
       (x_r', q_r') &= R[d](x_r, q, \alpha \cdot {\groundtruth}_r)
  \end{align*}
  This brings us to the last step, where by linearity of the backward pass we can extract $\alpha$ and components of $\groundtruth$ out:
  \begin{align*}
    \update(z, x_r, p, q, {\groundtruth}_g, {\groundtruth}_r) = (p - \alpha {\groundtruth}_g p', q + \alpha({\groundtruth}_g q_g' + {\groundtruth}_r q_r')) \qquad \text{where} \quad (x_g', q_g') &= R[d](g(z, p), q, 1)\\
    (z', p') &= R[g](z, p, x_g')\\
    (x_r', q_r') &= R[d](x_r, q, 1)
  \end{align*}
\end{example}

The ultimate representation is a form in which it makes it possible to see how the update recovers that of Wasserstein GANs.
The last missing piece is to note that the supervision labels $y_t$ here are effectively ``masks''.
Just like in standard supervised learning an input-output pair $(x_i, y_i)$ consisted of an input value and a corresponding label which guided the direction in which the output $f(x_i, p)$ should've been improved, here the situation is the same.
Given any latent vector $z$ its corresponding ``label'' is the learning signal ${\groundtruth}_g = 1$ which does not change anything in the update, effectively signaling to the generator's and discriminator's optimisers that they should descend (minimizing the assigned loss, making the image more realistic next time), and respectively ascend (maximizing the loss, becoming better at detecting when its input  is a sample generated by the generator).
On the other hand, given any real sample $x_r$ its corresponding ``label'' is the learning signal ${\groundtruth}_r = -1$ which signals to the discriminator's optimiser that it should do the opposite of what it usually does; it should descend, causing it to assign a lower loss value actual samples from the dataset.
In other words, the input-output pairs are here always of the form $((z, x)_i, (1, -1)_i)$, making this in many ways GANs a \emph{constantly} supervised model.
Nonetheless, these different ``forces'' that pull the discriminator in different directions depending on the source of the input, coupled with the ever-changing generated inputs make GANs have intrinsically complex dynamics that are still being studied.

  The fact that we were able to encode Wasserstein GAN in this form in our framework is a consequence of its simple formulation of its loss function, which is effectively given by subtraction \cite[Theorem 3]{arjovsky_wasserstein_2017}.
  There are many things here left to future work.
  These include studying GANs outside of $\Smooth$, studying GANs with different loss functions (such as the original paper (\cite{goodfellow_generative_2014}) which uses JS-divergence), and studying GANs with different optimisers (such as \emph{competitive gradient descent} (\cite{schaefer_competitive_2019})).
  A lot of these are predicated on a categorical framework that includes dependent types in its formalism (see the end of \cref{sec:gan}), and an integration of categorical frameworks of differentiation with those of probability, which would allow a refinement of the output type of GANs (and many other neural networks) into the type $D(n)$ if a probability distribution on $n : \N$ outputs.

We finish off these examples by moving to a different base category $\PolyZ$.
This example shows that our framework describes learning in not just continuous, but discrete settings too.
Again, we fix a parametric map $(\Z^p, f) : \PolyZ(\Z^\mil, \Z^\mol)$ but this time we fix the identity learning rate (\cref{ex:learning_rate_poly_idenity}) instead of a constant one.

\begin{example}[Basic update in Boolean circuits]
  \label{ex:boolean_circuits_learning}
  Fix $\XOR$ as the loss map (\cref{ex:xor_error}), and gradient ascent (\cref{ex:gradient_ascent_polyz}) as the optimiser.
  The type of the update map again simplifies to $\mi \times P \times \mo \to P$.
  Its implementation reduces to
  \[
    \update(x, p, \groundtruth) = p + p' \qquad \text{where} \quad \qquad (x', p') = R[f](x, p, f(x, p) + \groundtruth)
  \]
\end{example}

\subsection{Deep Dreaming: Supervised Learning of Inputs}
\label{subsec:deep_dreaming}

We have seen that reparameterising the parameter port with gradient descent allows us to capture supervised learning of parameters.
In this subsection we describe how reparameterising the \emph{input port} provides us with a way to update the input datapoint --- usually an image, used to elicit a particular intepretation. (\cref{fig:dogception})
This is the idea behind the technique of deep dreaming, appearing in the literature in many forms (\cite{dosovitskiy_inverting_2016, mahendran_understanding_2015, nguyen_deep_2015, simonyan_deep_2014}).

\begin{figure}[h]
  \scaletikzfig[1][0.75]{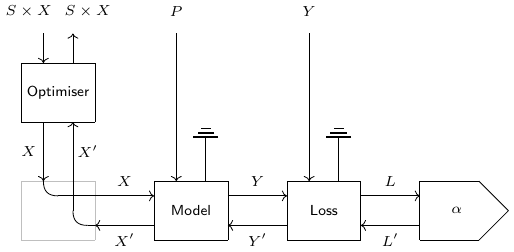}
  \caption{Deep dreaming: supervised learning of inputs.}
  \label{fig:full_deep_dreaming}
\end{figure}

Deep dreaming is a technique which uses the parameters $p$ of some trained
classifier network to iteratively dream up, or amplify some features of a class $\mol_i$ on a chosen input $\mil$.
For example, if we start with an image of a landscape $\mil_0$, a label $\mol_i$ of a ``cat'' and a parameter $p$ of a sufficiently well-trained classifier, we can start performing ``learning'' as usual: computing the predicted class for the landscape $\mil_0$ for the network with parameters $p$, and then computing the distance between the prediction and our label of a cat $\mol_i$.
When performing backpropagation, the respective changes computed for each layer tell us how the activations of that layer should have been changed to be more ``cat'' like.
This includes the first (input) layer of the landscape $\mil_0$.
Usually, we discard these changes and apply gradient update only to the parameters.
In deep dreaming we \textit{discard the parameter gradients} and apply gradient update to the input (\cref{fig:full_supervised_learning}).
Gradient update here takes these changes and computes a new image $\mil_1$ which
is the same image of the landscape, but changed slightly so to look more like
whatever the network thinks a cat looks like.
This is the essence of deep dreaming, where iteration of this process allows networks to dream up features and shapes on a particular chosen image \cite{mordvintsev_inceptionism_2015}. 

\begin{figure}[h]
  \centering
  \includegraphics[width=.7\textwidth]{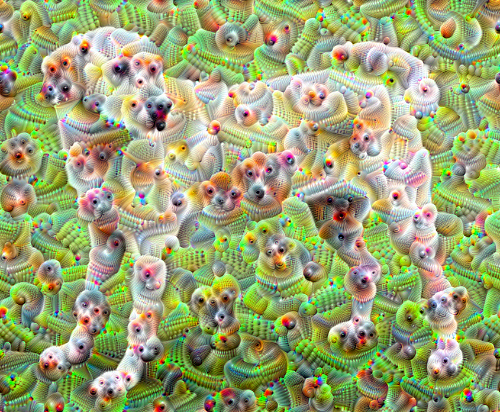}
  \caption{An image of a dog with features enhanced through the process of deep dreaming. Image taken from an \href{https://www.tensorflow.org/tutorials/generative/deepdream}{official TensorFlow tutorial}.}
  \label{fig:dogception}
\end{figure}

Just like in the previous subsection, we can write this deep dreaming system as a map in $\Para(\Lens(\cC))$ of type $\diset{1}{1} \to \diset{1}{1}$, and in an analogous way show it reduces to a map of type $S \times \mi \times P \times \mo \to S \times \mi$ whose implementation is
\begin{align*}
  \update(s, x, p, \mol_i) = U^*(s, x, \overline{x}') \qquad \text{where} \quad \qquad \overline{x}  &= U(s, x) \\
  \predicted &= f(\overline{x}, p) \\
  (\predicted ' , \mol_i ') &= R[\loss](\predicted, \mol_i , \alpha(\loss(\predicted, \mol_i))) \\
  (\overline{x}', p') &= R[f](\overline{x}, p, \predicted ')
\end{align*}

We note that deep dreaming is usually presented without any loss function as a
maximisation of a particular activation in the last layer of the network output
\cite[Sec. 2]{simonyan_deep_2014}.
As it is a maximisation, it is performed using gradient ascent, as opposed to gradient descent.
However, this is just a special case of our framework where the loss function is the dot product (\cref{ex:dot_product}).
The choice of the particular activation is encoded as a one-hot vector, and the loss function in that case essentially masks the network output, leaving active only the particular chosen activation.
We explicitly unpack this in the following example inside the base category $\Smooth$, where again we fix a parametric map $(\R^p, f) : \Para(\Smooth)(\R^\mil, \R^\mol)$.

\begin{example}[Deep dreaming, dot product, basic gradient update]
  Fix the dot product loss (\cref{ex:dot_product}), and gradient ascent (\cref{ex:gradient_ascent_smooth}) as the optimiser.
  Then the above $\update$ map simplifies to $\mi \times P \times \mo \to \mi$ as there is no state.
  Its implementation reduces to
  \[
    \update(x, p, y_i) = x + x'  \qquad \text{where} \quad \qquad (x', p') = R[f](x, p, \alpha \cdot y_i)
  \]
  Following \cref{ex:quadratic_gradient_descent}, this update can be rewritten as
  \[
    \update(x, p, y_i) = x + \alpha \cdot (R[f](x, p, y_i) \comp \pi_1)
  \]
  making a few things apparent.
  This update does not depend on the prediction $f(x, p)$: no matter what the network has predicted, the goal is always to maximize particular activations.
  Which activations?
  The ones chosen by $y_i$.
  When $y_i$ is a one-hot vector, this picks out the activation of just one class to maximize, which is often done in practice.
\end{example}

While we present only the most basic image, there is plenty of room left for exploration.
The work of \cite[Section 2.]{simonyan_deep_2014} adds an extra regularization term to the image.
In general, the neural network $f$ is sometimes changed to copy a number of internal activations which are then exposed on the output layer.
Maximizing all these activations often produces more visually appealing results.
In the literature we did not find an example which uses the softmax-cross
entropy (\cref{ex:softargmax_cross_entropy}) as a loss function in deep dreaming, which seems like the more natural choice in this setting.
Furthermore, while deep dreaming commonly uses basic gradient descent, there is nothing
preventing the use of any of the optimiser lenses
discussed in the previous section, or even doing deep dreaming in the context of
Boolean circuits.
Lastly, learning iteration which was described in at the end
of previous subsection can be modelled here in an analogous way.

\section{Supervised learning in the literature}
\label{sec:supervised_learning_in_the_literature}

Categorical models of supervised learning in the literature are scarce.
The seminal and by far the most important paper to mention is ``Backprop as Functor'' \cite{fong_backprop_2021}.

This was the first paper to explicitly tackle many of the neural network components using category theory at the same time: parameterisation, backpropagation, loss functions, and gradient descent.
It was also the first paper to use string diagrams to depict the information flow in a composite neural network.
Being seminal, it spawned a lot of subsequent research, and demonstrated viability of category theory as a modelling tool for deep learning in a manner that was not considered before.

To study parameterisation the authors defined the $\Para$ construction, which is to me the first known name of this construction.
They also defined an analogue of $\Para(\Lens(\Set))$ --- a category they called $\Learn$.
Both of these were defined as categories and not bicategories.\footnote{Though, it was acknowledged that the bicategorical setting is more natural}
Their $\Para$ definition is specialised to the self-cartesian action of $\Smooth$, and arises out of the quotienting of the bicategory by isomorphism classes (\cref{box:from2cattocat}).
The category $\Learn$ (whose morphisms they call \emph{learners}) could be seen as a special case of $\Para(\Lens(\Set))$, also quotiented out by isomorphism classes.
But there is an important distinction between $\enrichedbasechange{\Iso}(\Para(\Lens(\Set)))$ and $\Learn$: the latter arises out of a quotient by a different kind of 2-cells than those of $\Para(\Lens(\Set))$.
The 2-cells in $\Para(\Lens(\Set))$ are lenses, but those in $\Learn$ are \emph{charts} (\cref{ex:slice_charts_lenses}).

They defined backpropagation as a functor $L : \Para \to \Learn$ (\cite[Theorem III.2]{fong_backprop_2021}), and used it to model one update step of supervised learning.
However, the definition of $L$ contains a number of problems.
Firstly, it is not well-defined (see \cref{rem:backprop_not_functor}). 
Intuitively, since they do not deal with explicit 2-cells (but equivalence classes thereof) it was not noticed that this functor is not well-defined on these equivalence classes.
This is precisely because the 2-cells in $\Learn$ are not those that arise out of 2-cells of $\Para$.
Compare this with the formulation in this thesis, where reverse differential structure is a functor $\RC : \cC \to \LensA(\cC)$, and its parametric variant arises by application of $\Para$ to it (\cref{prop:para_morphism}), ensuring well-definedness by construction.

This definition is also tied to the setting of Euclidean spaces, as they do not use any of the existing categorical frameworks for differentiation such as tangent or differential categories (\cref{sec:backprop_literature}).
Since we do, we can thus go well beyond them in terms of examples --- their example of smooth functions is just one of the examples we cover in this thesis.

Thirdly, their functor does not augment the forward map with just its reverse derivative, but also with the loss map and the optimiser as well.
The functoriality condition of this functor then adds extra conditions on the partial derivatives of the loss function: it's required to be invertible in the 2\textsuperscript{nd} variable.
This constraint was not justified, nor it is a constraint that appears in deep learning practice.
For instance, it precludes usage or the L1 loss or the Hinge loss as loss functions.
When it comes to the optimiser, only simple gradient descent (\cref{def:gradient_descent}) is considered, and it is not clear whether abstracting away the optimiser would add additional conditions on its backward pass.

Related work to this is \emph{Lenses and Learners} (\cite{fong_lenses_2019}) which shows that learners can be seen as asymmetric lenses.
This arises because every parametric morphism $f : A \times P \to B$ can be seen as a span whose left leg is a projection (\cref{eq:para_span}).
\begin{equation}
  \label{eq:para_span}
  \begin{tikzcddiag}[ampersand replacement=\&]
      \& {A \times P} \\
      A \&\& B
      \arrow["f", from=1-2, to=2-3]
      \arrow["{\pi_A}"', from=1-2, to=2-1]
    \end{tikzcddiag}
\end{equation}
We belive studying $\Para$ in its dependent variant (\cref{sec:act_para_literature}) might shed light on the connection of this work to this thesis.
Lenses and Learners are related to the formalisation of supervised learning in terms of Delta lenses (\cref{sec:bidirectionality_literature}) that appeared around the same time  (\cite{diskin_general_2020}) whose relationship to the work in this thesis is not clear.

The work that provides a prescient categorical characterisation of gradient based learning, solving many of the issues outlined above is \emph{Dioptics} \cite{dalrymple_dioptics_2019}.
Despite containing only sketches and conjectures, in hindsight many of them seem to be the right ones.
In addition to formulations of differentiation with covectors described in \cref{sec:backprop_literature}, this is the first place I have personally seen $\Para$ written using two dimensional notation and the three-legged shape of neural networks. %
It is also the first work to provide a full picture of supervised learning (\cite[Slide 28]{dalrymple_presentation_2019}) with concepts of neural network, optimiser, and the loss function conceptually separated on two axes, each of them bidirectional. Dioptics did not consider the loss function as a parametric map, but did break down the optimiser box into two components: the one given by the inner product on the tangent vector space, and the one given by integrating the produced tangent vectors along a geodesic and evaluating it at $1$.
Like this thesis, it also assumed a non-dependently typed setting (i.e.\ trivialisable bundles), but on the other hand, its main definition \cite[Def.\ 3.0.1]{dalrymple_dioptics_2019} is given in terms of a coend that trivialises in many cases of interest (see \cref{box:from2cattocat}).
The author does acknowledges that the coend definition may not be the right abstraction.
As in Backprop as Functor, the analogous functor to $\Para \to \Learn$ is also defined by hand \cite[Conjecture 4.2.1]{dalrymple_dioptics_2019} and not factored through the $\Para$ construction as we do in this thesis (\cref{prop:para_morphism}).

	\part{Conclusions}

\chapter{To boldly go}
\label{ch:to_boldly_go}

\epigraph{Hofstadter's Law: It always takes longer than you expect, even when you take into account Hofstadter's Law.}{Douglas Hoftstadter}

\newthought{At the end of spring of 2018,} after I pointed out some of the issues of ``Backprop as Functor'' (\cite{fong_backprop_2021}) to one of its coauthors David Spivak (articulated in a much less clear manner than in \cref{sec:supervised_learning_in_the_literature}), he acknowledged them and enthusiastically told me ``This might be a good summer project to work on.''.
In my infinite wisdom I thought ``There is no way this takes that long''.
Four years later, here I am, writing the conclusion of my PhD thesis on this matter.

It has been a fascinating ride.
In this thesis I've described what I believe to be the start of a good foundation for studying deep learning, and beyond.
I've taken great care to distill as many conceptual moving parts as I could, but some things are still stuck together.
Organised from the most immediate to the most ambitious, the directions for further research are outlined below.

\begin{itemize}
    \item \textbf{Locally graded categories.} I expect locally graded categories to subsume actegories as a foundation of parameterisation.
    They unify both actegories and enriched categories, providing a general foundation that captures both internal and external parameterisation.
    This consequently informs the chapter on bidirectionality, as well as the chapter on differentiation\footnote{I note a resemblance of hom-sets of diffeological spaces (mentioned in \cref{sec:backprop_literature} of the chapter on backpropagation) and hom-objects of locally graded categories.}.
    Some preliminary work on this topic is in \cref{sec:locally_graded_categories,sec:locally_indexed_categories}. 
    Locally graded categories give rise to a significantly different way of thinking about some of the content in this thesis, and I'm excited to explore it.
    \item \textbf{Complete characterisation of differentiation.} This thesis contains three assumptions in modelling derivatives.
        \begin{itemize}
            \item I assume that the type of the backward pass (i.e.\ the tangent space of a particular point) of some space is isomorphic to the space itself (i.e.\ $X \cong T_x(X)$, for all $x:X$). This is the case for euclidean spaces and rings of polynomials, but not true for manifolds and many other interesting structures.
            \item I assume that the cotangent vector space (i.e.\ the space of linear functionals) at a point is isomorphic to the tangent space at that point (i.e.\ $T_x(X) \cong T_x^*(X)$ for all $x:X$). This is the case if our vector space comes equipped with a non-degenerate bilinear form, but is an assumption that prevents us form establishing a formal connection to settings where this isomorphism can't even be stated: for instance that of game theory.
            \item I assume the base category has \emph{products}.
            This models models only \emph{deterministic} systems, and precludes us from dealing with settings where we do not have at our disposal the ability to copy and delete information such as probabilistic or quantum machine learning.
        \end{itemize}
    Relaxing the first two assumptions leads to the work in \cite{capucci_diegetic_2023}\footnote{Though, in its current form at the expense of an operational view on differentiation.}, and points to a generalised theory of ``differentiation'' which exhibits fundamental connections to game theory.
    Relaxing the last assumption leads to the work of \cite{toumi_diagrammatic_2021} describing current categorical approaches to quantum machine learning.
    I hypothesise that all three generalisations in tandem can pave the way to a useful foundation of generalised theory of differentiation which covers a a wide-variety of learning schemes --- from reinforcement learning, game theory, over bayesian learning to even the lesser known automata learning (\cite{jacobs_automata_2014, urbat_automata_2020}).
    \item \textbf{Meta-learning}. I believe that findings from \cref{sec:optimisers} about the resemblance of the shape of optimisers and neural networks themselves might inform future work on meta-learning \cite{andrychowicz_learning_2016, hospedales_meta-learning_2022, finn_model-agnostic_2017}.  In \cite{gavranovic_meta-learning_2021} a question is posed: how much of the theoretical foundation of parametric optics can be used to formalise \emph{learning to learn by gradient descent by gradient descent} (\cite{andrychowicz_learning_2016})?
    This necessitates more precise categorical semantics of optimisation, and we believe that following up on progress in \cite{shiebler_kan_2022, hinze_kan_2012} can make that happen.
    \item \textbf{Taking dependent types seriously.} The \emph{types of outputs} of many processes we study are dependent on the \emph{values of their inputs}.
    Dependent type theory is the formal theory that studies this fundamental, and incredibly expressive idea.
    It allows us to treat \emph{proofs} about our systems in the same manner we treat the systems themselves: as data which can be plugged into functions as inputs, or produced by functions as outputs.\footnote{A particular reason why I find dependent types useful is because they incentivise us to be cognisant of the arrow of time, necessitating the tracking of the chain of causal dependencies. This sometimes --- almost if by magic --- prevents unwanted states or consequences \emph{from even being expressible}.}
    This paves the way for the construction of neural reasoning systems which learn to output not just arrays of numbers, but formal \emph{proofs}. This enables more rigor and reliability, consequently unlocking the usage of deep learning in new markets and sectors where this reliability is crucial.
    Most of the concepts in this thesis can be made \emph{dependent}, and I expect the complete theory of categorical deep learning to become far simpler by doing so.
    \item \textbf{New regulatory paradigms.} Current regulatory practices for AI are mostly \emph{retroactive}: they are focused on explaining decisions a model has already made.
    This is because there is no formal language to express the kinds of reasoning we'd proactively want to uphold a model to. %
    For instance, most unifying approaches to the theory of architectures of deep learning --- such as Geometric Deep Learning (\cite{bronstein_geometric_2021}) --- give us a mathematical theory of equivariance and invariance of deep learning models.
    This theory enables regulatory agencies to construct falsifiable tests for invariance to protected classes such as race and gender in large-scale deployed systems.
    But geometric deep learning inherently deals only with explainability and formal guarantees of models with respect to simple kinds of transformations: those given given by the algebraic structure of a group, such as translation, rotation, and scaling.
    Because of the group constraints, all of these transformations are required to be invertible, meaning geometric deep learning cannot handle recursion, transition functions of automata/dynamical systems, aggregation steps of a dynamic programming algorithm, or general programs which write to or read from memory.
    Geometric deep learning thus deals with transformations which are inherently \emph{geometric}, as opposed to \emph{algebraic} in nature.
    As outlined in \cref{subsec:recursive_encoding_decoding} and my forthcoming work, category theory provides us with a theory of generalised equivariance, and the theory of initial algebras with formal specification of what it means to reason \emph{inductively}.
    In the long run I envision this enabling the construction of falsifiable tests of deep learning models with respect to more than just invertible actions such as translations, but also with respect to \emph{the kinds of reasoning} these models can use when coming to their conclusions.
    What if we could explicitly decide whether a deep learning model gets to use particular kinds of inductive arguments, or particular kinds of logics in its reasoning?
    Can we end the era of unstructured models by making structured as capable as the unstructured ones, but verifiably so?
\end{itemize}

The last direction is the one I am the most excited about.
Category theory has given me and many other people a unique vantage point to understand the world through: a language so robust and compositional, that the distinction between different scientific fields almost becomes erased.
Can we imbue our deep learning models with these insights, and enable them to model the world using categories?
This, in many ways, is the north star of deep learning: construction of models that reason in structured ways.
What better way to do that than to use the theory of structure: category theory?

  \appendix
	\chapter{Monoidal and enriched categories}

\section*{Monoidal categories}
\label{appendix:monoidal_categories}

In this thesis, monoidal, braided monoidal, and symmetric monoidal categories play a central role.

\begin{definition}[{Monoidal category (compare \cite[Def.\ 12.1]{johnson_2-dimensional_2021})}]
  \label{def:monoidal_category}
  Let $\cM$ be a category.
  We call $\cM$ a \newdef{monoidal category} if it comes equipped with
  \begin{itemize}
  \item A functor $\otimes : \cM \times \cM \to \cM$ called \emph{the monoidal product};
  \item An object $I : \cM$ called \emph{the monoidal unit};
  \end{itemize}
  and following three natural isomorphisms, called the \emph{left unitor}, \emph{right unitor} and \emph{associator}, respectively:
  \begin{equation}
    \label{eq:monoidal_isos}
\begin{tikzcddiag}[ampersand replacement=\&]
	\cM \&\& {\cM \times \cM} \&\& \cM \&\& {\cM \times \cM \times \cM} \&\& {\cM \times \cM} \\
	\\
	\&\& \cM \&\&\&\& {\cM \times \cM} \&\& \cM
	\arrow["\otimes", from=1-9, to=3-9]
	\arrow["\otimes"', from=3-7, to=3-9]
	\arrow["{\otimes \times \cM}"', from=1-7, to=3-7]
	\arrow["{\cM \times \otimes}", from=1-7, to=1-9]
	\arrow["\alpha", shorten <=15pt, shorten >=15pt, Rightarrow, 2tail reversed, from=1-9, to=3-7]
	\arrow[""{name=0, anchor=center, inner sep=0}, Rightarrow, no head, from=3-3, to=1-5]
	\arrow["{\langle \cM, I \rangle}"', from=1-5, to=1-3]
	\arrow[""{name=1, anchor=center, inner sep=0}, Rightarrow, no head, from=3-3, to=1-1]
	\arrow["{\langle I, \cM \rangle}", from=1-1, to=1-3]
	\arrow["\otimes"', from=1-3, to=3-3]
	\arrow["\rho", shorten >=6pt, Rightarrow, from=1-3, to=0]
	\arrow["\lambda"', shorten >=6pt, Rightarrow, from=1-3, to=1]
\end{tikzcddiag}
\end{equation}
whose components at each $C, D, E : \cM$ are explicitly the maps
\[
  \lambda_C : I \otimes C \xrightarrow{\cong} C \quad \text{,} \quad \rho_C : C \otimes I \xrightarrow{\cong} C \quad \text{and} \quad \alpha_{C, D, E} : (C \otimes D) \otimes E \xrightarrow{\cong} C \otimes (D \otimes E)
\]
satisfying the following coherence laws.
\begin{itemize}
\item \textbf{The pentagon identity.} This law has to hold for all $C, D, E, F : \cM$.
  It tells us that, starting with all-left-association $((C \otimes D) \otimes E) \otimes F$, all the different ways of rebracketing to all-right association $C \otimes (D \otimes (E \otimes F))$ are the same.
  \Cref{eq:monoidal_pentagonator} expresses this as a commutative diagram.
  \begin{equation}
    \label{eq:monoidal_pentagonator}
\begin{tikzcddiag}[ampersand replacement=\&]
	\& {((C \otimes D) \otimes E) \otimes F} \\
	{(C \otimes (D \otimes E)) \otimes F} \&\& {(C \otimes D) \otimes (E \otimes F)} \\
	\\
	{C \otimes ((D \otimes E) \otimes F)} \&\& {C \otimes (D \otimes (E \otimes F))}
	\arrow["{\alpha_{C, D, E} \otimes F}"'{pos=0.6}, from=1-2, to=2-1]
	\arrow["{\alpha_{C, D \otimes E, F}}"', from=2-1, to=4-1]
	\arrow["{C \otimes \alpha_{D, E, F}}"', from=4-1, to=4-3]
	\arrow["{ \alpha_{C \otimes D, E, F}}"{pos=0.7}, from=1-2, to=2-3]
	\arrow["{\alpha_{C, D, E \otimes F}}", from=2-3, to=4-3]
\end{tikzcddiag}
\end{equation}
\item \textbf{Triangle identity.} These have to hold for all $A, B : \cM$.
  Analogously to above, \cref{eq:mon_triangle} tells us that any two ways of getting rid of $I$ in $(A \otimes I) \otimes B$ are the same.
  \begin{equation}
    \label{eq:mon_triangle}
    \begin{tikzcddiag}[ampersand replacement=\&]
        {(A \otimes I) \otimes B} \&\& {A \otimes (I \otimes B)} \\
        \& {A \otimes B}
        \arrow["{\alpha_{A, I, B}}", from=1-1, to=1-3]
        \arrow["{A \otimes \lambda_B}", from=1-3, to=2-2]
        \arrow["{\rho_A \otimes B}"', from=1-1, to=2-2]
      \end{tikzcddiag}
  \end{equation}
\end{itemize}
\end{definition}

This definition, especially the coherence conditions might seem a bit contrived.
We note that they arises out of the categorification\footnote{Taken to mean \emph{vertical} categorification} of the concept of a monoid in the monoidal category $(\Set, \times, 1)$ of sets and functions to a \emph{pseudomonoid} (\cite[Sec. 3]{day_monoidal_1997}) in the monoidal 2-category $(\Cat, \times, \CatTerm)$ of categories and functors.
This pseudomonoid comes equipped with pasting diagrams that have a geometric interpretation, which give rise to the above pentagonator and triangle identities.
This will be important when we study braided and symmetric monoidal categories, as their coherences can analogously be justified.

\begin{notation}
  We often refer to the monoidal category $(\cM, \otimes, I, \alpha, \lambda, \rho)$ as simply $(\cM, \otimes, I)$, or even just $\cM$, when the monoidal structure is clear from context.
\end{notation}

Morphisms of monoidal categories are called lax monoidal functors.

\begin{definition}[{Lax monoidal functor (\cite[Def 1.2.14]{johnson_2-dimensional_2021})}]
  \label{def:lax_monoidal_functor}
  Let $(\cM, \otimes, I)$ and $(\cN, \otimes', I')$ be monoidal categories.
  A functor $F : \cM \to \cN$ is said to be \newdef{lax monoidal} if it comes equipped with:
  \begin{itemize}
  \item A natural transformation
    \begin{tikzcddiag}[ampersand replacement=\&]
        {\cM \times \cM} \& {\cN \times \cN} \\
        \cM \& \cN
        \arrow["\otimes"', from=1-1, to=2-1]
        \arrow["F"', from=2-1, to=2-2]
        \arrow["{\otimes'}", from=1-2, to=2-2]
        \arrow["{F \times F}", from=1-1, to=1-2]
        \arrow["\mu", shorten <=6pt, shorten >=6pt, Rightarrow, from=1-2, to=2-1]
      \end{tikzcddiag}
    whose component at every $X, Y : \cM$ is a morphism $\mu_{X, Y} : F(X) \otimes' F(Y) \to F(X \otimes Y)$ in $\cN$;
  \item A natural transformation
    \begin{tikzcddiag}[ampersand replacement=\&]
        \CatTerm \\
        \cM \& \cN
        \arrow["I"', from=1-1, to=2-1]
        \arrow[""{name=0, anchor=center, inner sep=0}, "{I'}", from=1-1, to=2-2]
        \arrow["F"', from=2-1, to=2-2]
        \arrow["\epsilon", shorten <=3pt, shorten >=3pt, Rightarrow, from=0, to=2-1]
    \end{tikzcddiag}
    which to the unique object $\bullet$ of $\CatTerm$ assigns a morphism $\epsilon_{\bullet} : I' \to F(I)$ in $\cN$
  \end{itemize}
  such that the following coherence laws are satisfied.
  \begin{itemize}
  \item \textbf{Associativity.} This diagram has to commute for every $X, Y, Z : \cM$.
\[\begin{tikzcddiag}[ampersand replacement=\&]
	{(F(X) \otimes' F(Y)) \otimes' F(Z)} \&\& {F(X \otimes Y) \otimes' F(Z)} \&\& {F((X \otimes Y) \otimes Z)} \\
	\\
	{F(X) \otimes' (F(Y) \otimes' F(Z))} \&\& {F(X) \otimes' F(Y \otimes Z)} \&\& {F(X \otimes (Y \otimes Z))}
	\arrow["{\alpha'_{F(X), F(Y), F(Z)}}"', from=1-1, to=3-1]
	\arrow["{\mu_{X, Y} \otimes' F(Z)}", from=1-1, to=1-3]
	\arrow["{\mu_{X \otimes Y, Z}}", from=1-3, to=1-5]
	\arrow["{F(\alpha_{X, Y, Z})}", from=1-5, to=3-5]
	\arrow["{F(X) \otimes' \mu_{Y, Z}}"', from=3-1, to=3-3]
	\arrow["{\mu_{X, Y \otimes Z}}"', from=3-3, to=3-5]
\end{tikzcddiag}\]
\item \textbf{Left and right unitality.} These diagrams have to commute for every $X : \cM$.
\[\begin{tikzcddiag}[ampersand replacement=\&]
	{I' \otimes' F(X)} \&\& {F(X)} \&\&\& {F(X) \otimes' I'} \&\& {F(X)} \\
	{F(I) \otimes' F(X)} \&\& {F(I \otimes X)} \&\&\& {F(X) \otimes' F(I)} \&\& {F(X \otimes I)}
	\arrow["{\lambda'_{F(X)}}", from=1-1, to=1-3]
	\arrow["{\epsilon \otimes' F(X)}"', from=1-1, to=2-1]
	\arrow["{\mu_{I, X}}"', from=2-1, to=2-3]
	\arrow["{F(\lambda_X)}"', from=2-3, to=1-3]
	\arrow["{\rho'_{F(X)}}", from=1-6, to=1-8]
	\arrow["{F(X) \otimes' \epsilon}"', from=1-6, to=2-6]
	\arrow["{\mu_{X, I}}"', from=2-6, to=2-8]
	\arrow["{F(\rho_{X})}"', from=2-8, to=1-8]
\end{tikzcddiag}\]
\end{itemize}
  If $\mu$ and $\epsilon$ are natural isomorphisms, we call $F$ a \newdef{strong monoidal functor}.
  If they are identities, we call it a \newdef{strict monoidal functor}.
\end{definition}

Monoidal categories and strong monoidal functors form a category $\MonCat$.
Next, we define the \emph{reverse} of a monoidal category, and show how it can be used to define the notion of a \emph{braided monoidal category}.

\begin{definition}[Reversed monoidal category, {\cite[Example 1.2.9.]{johnson_2-dimensional_2021}}]
  \label{def:reversed_monoidal_category}
  Given a category $\cM$ with a monoidal structure $(\otimes, I, \alpha, \lambda, \rho)$, we define the \newdef{reversed monoidal structure} on $\cM$ as the tuple $(\otimes^{\rev}, I, \alpha^{-1}, \rho, \lambda)$ where
  $\otimes^{\rev} \coloneqq \boxed{\cM \times \cM \xrightarrow{\swap_{\cM, \cM}} \cM \times \cM \xrightarrow{\otimes} \cM}$.\footnote{Here $\swap_{\cC, \cD} : \cC \times \cD \to \cD \times \cC$ is a functor that swaps the order of the categories in a product.}
  We often label the entirety of this structure just as $\cM^{\rev}$.
\end{definition}

\begin{definition}[{Braided monoidal category (compare \cite[Def.\ 1.2.34]{johnson_2-dimensional_2021})}]
  \label{def:braided_monoidal_category}
  Let $(\cM, \otimes, I, \alpha, \lambda, \rho)$ be a monoidal category.
  We call it a \newdef{braided monoidal category} if it comes equipped with a natural isomorphism
  \[
    \begin{tikzcddiag}[ampersand replacement=\&]
      {\cM \times \cM} \& \cM
      \arrow[""{name=0, anchor=center, inner sep=0}, "\otimes", curve={height=-12pt}, from=1-1, to=1-2]
      \arrow[""{name=1, anchor=center, inner sep=0}, "{\otimes^{\rev}}"', curve={height=12pt}, from=1-1, to=1-2]
      \arrow["\beta", shorten <=3pt, shorten >=3pt, Rightarrow, 2tail reversed, from=0, to=1]
    \end{tikzcddiag}
  \]
  whose component at every $X, Y : \cM$ is explicitly the map
  \[
    \beta_{X, Y} : X \otimes Y \to Y \otimes X
  \]
  making the following two diagrams (called the \emph{hexagon equations}) commute:

  \begin{equation}
    \label{eq:braided_hexagon}
\begin{tikzcddiag}[ampersand replacement=\&,column sep=tiny]
	\& {(A \otimes B) \otimes C} \&\&\& {A \otimes (B \otimes C)} \\
	{A \otimes (B \otimes C)} \&\& {C \otimes (A \otimes B)} \& {(A \otimes B) \otimes C} \&\& {(B \otimes C) \otimes A} \\
	\\
	{A \otimes (C \otimes B)} \&\& {(C \otimes A) \otimes B} \& {(B \otimes A) \otimes C} \&\& {B \otimes (C \otimes A)} \\
	\& {(A \otimes C) \otimes B} \&\&\& {B \otimes (A \otimes C)}
	\arrow["{\beta_{A \otimes B, C}}", from=1-2, to=2-3]
	\arrow["{\alpha_{A, B, C}}"', from=1-2, to=2-1]
	\arrow["{A \otimes \beta_{B, C}}"', from=2-1, to=4-1]
	\arrow["{\alpha^{-1}_{A, C, B}}"', from=4-1, to=5-2]
	\arrow["{\beta_{A, C} \otimes B}"', from=5-2, to=4-3]
	\arrow["{\alpha_{C, A, B}}"', from=4-3, to=2-3]
	\arrow["{\beta_{A, B \otimes C}}", from=1-5, to=2-6]
	\arrow["{\alpha^{-1}_{A, B, C}}"', from=1-5, to=2-4]
	\arrow["{\beta_{A, B} \otimes C}"', from=2-4, to=4-4]
	\arrow["{\alpha_{B, A, C}}"', from=4-4, to=5-5]
	\arrow["{B \otimes \beta_{A, C}}"', from=5-5, to=4-6]
	\arrow["{\alpha^{-1}_{B, C, A}}"', from=4-6, to=2-6]
\end{tikzcddiag}
\end{equation}
\end{definition}

The hexagon equations tell us that braiding a monoidal product of objects is the same as braiding the objects individually one by one.

We have seen that monoidal categories are pseudomonoids in $(\Cat, \times, \CatTerm)$.
Interestingly, it turns out that pseudomonoids in $(\MonCat, \times, \CatTerm)$ are precisely braided monoidal categories!
For a full formal account of this deep fact about monoidal categories we refer the interested reader to \cite{joyal_braided_1986}, here just presenting the high-level intuition.

Since a monoidal category $(\cM, \otimes, I)$ is an object of $\MonCat$, a pseudomonoid structure on it necessitates the provision of strong monoidal functors $\cM \times \cM \to \cM$ and $\CatTerm \to \cM$ (and the corresponding unitors and associators, which are required to be monoidal natural transformations).
Since $\cM$ is monoidal, we have suitable candidates for both: functors $\otimes$ and $\name{I}$.
The latter can routinely be checked to already be strong monoidal, but there are some subtleties with the former.
Inducing strong monoidal structure on $\otimes$ necessitates the provision of the laxator and unitor natural isomorphisms.
The unitor can also trivially be provided, but we need to take special care with the laxator of $\otimes$.
This is a natural transformation that is in the aforementioned reference and \cite[p. 12]{kelly_basic_1982} called the  ``middle-four interchange''.
It contains the data needed to define braiding (\cite[Prop. 3]{joyal_braided_1986}), and conversely, given a braided monoidal category, one can define the corresponding middle-four interchange \cite[Prop. 2]{joyal_braided_1986}.
In the newer literature (such as \cite[p. 12]{cruttwell_monoidal_2022}), the ``middle-four interchange'' is often just referred to as the interchanger, which is the terminology we adopt in this thesis.

\begin{definition}[{Interchanger (compare \cite[p. 12]{cruttwell_monoidal_2022} and \cite[p. 12]{kelly_basic_1982})}]
  \label{def:interchanger}
  In any braided monoidal category $(\cM, \otimes, I)$, for all $A, B, C, D : \cM$ we can form \newdef{the interchanger}, a natural isomorphism $\interchanger_{A, B, C, D} : A \otimes B \otimes C \otimes D \to A \otimes C \otimes B \otimes D$ defined as the composite
  \[
   \interchanger_{A, B, C, D} \coloneqq \boxed{A \otimes B \otimes C \otimes D \xrightarrow{A \otimes \beta_{B, C} \otimes D} A \otimes C \otimes B \otimes D}
 \]
 Here we have omitted the associators, trusting it is clear how to formally account for them.\footnote{Otherwise, see \cite[Prop. 2]{joyal_braided_1986}}
 If all the underlying objects are given by the same object $A$, we will write $\interchanger_A$ for this interchanger.
\end{definition}

Here the interchanger is not to be confused with the interchange law, a property that holds in any monoidal category (not necessarily braided).
When the monoidal categories are braided, it is possible for a morphism between them to additionally preserve this braided structure.
These are called braided monoidal functors, and are relevant to us for studying reparameterisations of monoidal actegories, crucial in defining weighted optics (see \cref{subsec:a_piece_of_notation,def:weighted_optic}).

\begin{definition}[{Braided monoidal functor (compare \cite[Def.\ 1.2.30]{johnson_2-dimensional_2021})}]
  \label{def:braided_monoidal_functor}
  Let $(\cM, \otimes, I)$ and $(\cN, \otimes', I')$ be braided monoidal categories.
  A lax monoidal functor $(F, \phi, \epsilon) : \cM \to \cN$ is \newdef{braided monoidal} if for all $X, Y : \cM$ following diagram commutes:
\begin{equation}
  \label{eq:braided_monoidal_functor}
  \begin{tikzcddiag}[ampersand replacement=\&]
      {F(X) \otimes'F(Y)} \&\& {F(Y ) \otimes' F(X)} \\
      \\
      {F(X \otimes Y)} \&\& {F(Y \otimes X)}
      \arrow["{\phi^{\mathcal{M}}_{X, Y}}"', from=1-1, to=3-1]
      \arrow["{F(\beta^{\mathcal{M}}_{X, Y})}"', from=3-1, to=3-3]
      \arrow["{\mu^{\mathcal{N}}_{Y, X}}", from=1-3, to=3-3]
      \arrow["{\beta^{\mathcal{N}}_{F(X), F(Y)}}", from=1-1, to=1-3]
    \end{tikzcddiag}
\end{equation}
We call the braided monoidal functor \newdef{strong} (resp. \newdef{strict}) if the lax monoidal functor $(F, \phi, \epsilon)$ is strong (resp. strict).
\end{definition}

Braided monoidal categories and strong braided monoidal functors also form a category.
We can now continue the pattern and ask: if pseudomonoids in $(\Cat, \times, \CatTerm)$ are monoidal categories, pseudomonoids in $(\MonCat, \times, \CatTerm)$ are braided monoidal categories, then what are pseudomonoids in this newly defined category of braided monoidal categories and strong braided monoidal functors?

The answer is \emph{symmetric monoidal categories}.
We give an explicit definition.
\begin{definition}[{Symmetric monoidal category (compare \cite[Def.\ 1.2.24]{johnson_2-dimensional_2021})}]
  \label{def:symmetric_monoidal_category}
  Let $(\cM, \otimes, I, \beta)$ be a braided monoidal category.
  We call it a \newdef{symmetric monoidal category} if braiding twice is the same as not doing anything at all; that is, if the following diagram commutes:
  \[\begin{tikzcddiag}[ampersand replacement=\&]
      {A \otimes B} \&\& {A \otimes B} \\
      \& {B \otimes A}
      \arrow["{\beta_{A, B}}"', from=1-1, to=2-2]
      \arrow["{\beta_{B, A}}"', from=2-2, to=1-3]
      \arrow[Rightarrow, no head, from=1-1, to=1-3]
    \end{tikzcddiag}\]
\end{definition}

When $\cM$ is symmetric, then either one of the hexagon identities is redundant (\cite[p. 2]{joyal_braided_1986}).
\begin{remark}
  \label{rem:affine_traversals}
While many braided monoidal categories are in practice symmetric, this is not always the case.
A notable exception is the monoidal product used to define a class of optics called \emph{affine traversals} (see \cite[Prop. 5.2.7]{capucci_actegories_2023} and \cite[Prop. 3.25]{clarke_profunctor_2022}).
\end{remark}

Symmetric monoidal functors, i.e.\ morphisms between symmetric monoidal categories are, interestingly, still just braided monoidal functors, and no extra condition is needed here.

And lastly, if we try iterating the previous process, and unpack what pseudomonoids in the 2-category of symmetric monoidal categories and symmetric monoidal functors is, we would get what we had already: a symmetric monoidal category.
This curiousity is a part of the general periodic table of higher categories, and we refer the reader to \cite[Sec. 2.1]{baez_lectures_2010}.

\section*{Enriched categories}
\label{appendix:enriched_categories}

Monoidal categories allow us to define the notion of an \emph{enriched} category, a generalisation of a category where morphisms between objects need not form a set.

\begin{definition}[{Enriched category (compare \cite[Sec. 1.2]{kelly_basic_1982} or \cite[Def. 1.3.1]{johnson_2-dimensional_2021})}]
  \label{def:enriched_category}
  Let ${(\cV, \otimes, I, \alpha, \lambda, \rho)}$  be a monoidal category.
  Then a \newdef{$\cV$-enriched category} $\cC$ consists of the following data.
  \begin{itemize}
  \item A set $\Ob[false](\cC)$ of \emph{objects}. We will often denote this set with just $\cC$, trusting the meaning is clear from the context;
  \item For every $A, B : \cC$ an object of $\cV$ denoted as $\cC(A, B)$. We think of it as the \emph{hom-object}, or \emph{object of morphisms} from $A$ to $B$;
  \item For every $A : \cC$ a morphism of type $I \to \cC(A, A)$ in $\cV$ denoted as $\name{\id_A}$ and called the \emph{name}\footnote{It's called the \emph{name} of identity because $\name{\id_A}$ is not the identity morphism, but instead it ``picks it out'', or ``names it'' in $\cC(A, A)$.} of the identity morphism on $A$;
  \item For every $A, B, C : \cC$, a morphism of type $\cC(A, B) \otimes \cC(B, C) \to \cC(A, C)$ in $\cV$ denoted as $\comp_{A, B, C}$ and called \emph{the composition}.
  \end{itemize}
  This data has to satisfy the following coherence conditions expressed as commutative diagrams in $\cV$.
  The first one (\cref{eq:enriched_associativity}) which is defined for all $A, B, C, D : \cC$ ensures the composition is associative.
  The second one (\cref{eq:enriched_unitality}), defined for all $A, B : \cC$ ensures that it is unital from both left and right side.
  \begin{equation}
    \label{eq:enriched_associativity}
\begin{tikzcddiag}[ampersand replacement=\&,column sep=scriptsize]
	{(\cC(A, B) \otimes \cC(B, C)) \otimes \cC(C, D)} \&\&\& {\cC(A, C) \otimes \cC(C, D)} \&\& {\cC(A, D)} \\
	\\
	{\cC(A, B) \otimes (\cC(B, C) \otimes \cC(C, D))} \&\&\& {\cC(A, B) \otimes \cC(B, D)} \&\& {\cC(A, D)}
	\arrow["{\alpha_{\cC(A, B), \cC(B, C), \cC(C, D)}}"', from=1-1, to=3-1]
	\arrow["{\comp_{A, B, C} \otimes \cC(C, D)}", from=1-1, to=1-4]
	\arrow["{\cC(A, B) \otimes \comp_{B, C, D}}"', from=3-1, to=3-4]
	\arrow["{\comp_{A, C, D}}", from=1-4, to=1-6]
	\arrow["{\comp_{A, B, D}}"', from=3-4, to=3-6]
	\arrow[Rightarrow, no head, from=1-6, to=3-6]
\end{tikzcddiag}
\end{equation}

\begin{equation}
  \label{eq:enriched_unitality}
\begin{tikzcddiag}
	{I \otimes \cC(A, B)} &&&&& {\cC(A,B) \otimes I} \\
	\\
	{\cC(A, A) \otimes \cC(A, B)} && {\cC(A, B)} & {\cC(A, B)} && {\cC(A, B) \otimes \cC(B, B)}
	\arrow["{\cC(A, B) \otimes \name{\id_B}}", from=1-6, to=3-6]
	\arrow["{\comp_{A, A, B}}"', from=3-1, to=3-3]
	\arrow["{\lambda_{\cC(A, B)}}", from=1-1, to=3-3]
	\arrow["{\name{\id_A} \otimes \cC(A, B)}"', from=1-1, to=3-1]
	\arrow["{\rho_{\cC(A, B)}}"', from=1-6, to=3-4]
	\arrow["{\comp_{A, B, B}}", from=3-6, to=3-4]
\end{tikzcddiag}
\end{equation}

\end{definition}

\begin{notation}
  In the nomenclature ``$\cV$-enriched category $\cC$''(or simply a ``$\cV$-category $\cC$'') we trust that the monoidal structure of $\cV$ is clear from the context.
  When multiple enriched categories are in scope, we might superscript the composition ($\comp$) and identity maps ($\id$) with the respective category to differentiate between them.
\end{notation}

In place of functors, we have \emph{enriched} functors.
\begin{definition}[{Enriched functor (compare \cite[Sec. 1.2]{kelly_basic_1982} or \cite[Def. 1.3.5]{johnson_2-dimensional_2021})}]
  \label{def:enriched_functor}
  Let $(\cV, \otimes, I)$ be a monoidal category.
  Let $\cC, \cD$ be $\cV$-enriched categories.
  A \newdef{$\cV$-enriched functor} (or just a \newdef{$\cV$-functor}) $F : \cC \to \cD$ consists of the following data.
  \begin{itemize}
  \item A function $F : \Ob[false](\cC) \to \Ob[false](\cD)$ between the underlying sets of objects\footnote{We are overloading the notation $F$ here, trusting it is clear from the context what $F$ is applied to.};
  \item For every $A, B : \cC$ a morphism in $\cV$ of type $\cC(A, B) \to \cD(F(A), F(B))$, often denoted as $F_{A, B}$ or, when it is clear from the context, just $F$;
  \end{itemize}
  This data has to, for every $A, B, C : \cC$ satisfy the following coherence laws ensuring $F$ is coherent with composition and identities, respectively.
\[\begin{tikzcddiag}[ampersand replacement=\&]
	{\cC(A, B) \otimes \cC(B, C)} \&\& {\cC(A, C)} \& I \\
	\\
	{\cD(F(A), F(B)) \otimes \cD(F(B), F(C))} \&\& {\cD(F(A), F(C))} \& {\cC(A, A)} \& {\cD(F(A), F(A))}
	\arrow["{\comp^{\cC}_{A, B,C}}", from=1-1, to=1-3]
	\arrow["{F_{A, B} \otimes F_{B, C}}"', from=1-1, to=3-1]
	\arrow["{\comp^{\cD}_{F(A), F(B), F(C)}}"', from=3-1, to=3-3]
	\arrow["{F_{A, C}}", from=1-3, to=3-3]
	\arrow["{\name{\id^{\cC}_A}}"', from=1-4, to=3-4]
	\arrow["{\name{\id^{\cD}_{F(A)}}}", from=1-4, to=3-5]
	\arrow["{F_{A, A}}"', from=3-4, to=3-5]
\end{tikzcddiag}\]
\end{definition}

\begin{definition}[{Category of $\cV$-enriched categories (compare \cite[Sec. 1.2]{kelly_basic_1982})}]
Given any monoidal category $(\cV, \otimes, I)$, $\cV$-enriched categories and $\cV$-enriched functors form a category $\cV\dsh\Cat$.
\end{definition}

In this thesis there are two pertinent bases of enrichment.
For the base of enrichment $(\Set, \times, 1)$ we obtain $\Set\text{-}\Cat$, the category of $\Set$-enriched categories and $\Set$-functors: these are precisely just categories and functors!
That is, $\Cat \coloneqq \Set\text{-}\Set$\footnote{We remind the reader that we completely ignore universe levels here.}.
Analogously, for the base of enrichment $(\Cat, \times, \CatTerm)$ we obtain $\Cat\text{-}\Cat$, the category of $\Cat$-enriched categories and $\Cat$-functors: these are precisely 2-categories and 2-functors.

\begin{definition}[{Enriched base change (compare \cite[Sec. 1]{kelly_basic_1982})}]
  \label{def:enriched_base_change}
  Let $(\cV, \otimes, I)$  and $(\cW, \oplus, J)$ be two monoidal categories.
  Then a lax monoidal functor
  \[
    (F, \mu, \epsilon) : (\cV, \otimes, I) \to (\cW, \oplus, J)
  \]
  induces the \newdef{enriched base change}, an assignment which sends a $\cV$-enriched category $\cC$ to the $\cW$-enriched category $\enrichedbasechange{F}(\cC)$ whose data is defined as follows:
  \begin{itemize}
  \item Its objects are those of $\cC$;
  \item Its hom-object is for every $A, B : \enrichedbasechange{F}(\cC)$ defined as $\enrichedbasechange{F}(\cC)(A, B) \coloneqq F(\cC(A, B))$;
  \item The identity at $A : \enrichedbasechange{F}(\cC)$ is the composite $J \xrightarrow{\epsilon} F(I) \xrightarrow{F(\id_A)} F(\cC(A, A)) = \enrichedbasechange{F}(\cC)(A, A)$;
  \item For every $A, B, C : \enrichedbasechange{F}(\cD)$ the composition morphism $\comp^{\enrichedbasechange{F}(\cC)}_{A, B, C}$ is defined as the composite below, expressed as the commutativity of the diagram.
\[\begin{tikzcddiag}[ampersand replacement=\&]
	{\enrichedbasechange{F}(\cC)(A, B) \otimes \enrichedbasechange{F}(\cC)(B, C)} \&\&\&\& {\enrichedbasechange{F}(\cC)(A, C)} \\
	\\
	{F(\cC(A, B)) \otimes F(\cC(B, C))} \&\& {F(\cC(A, B) \otimes \cC(B, C))} \&\& {F(\cC(A, C))}
	\arrow[Rightarrow, no head, from=1-1, to=3-1]
	\arrow["{\mu_{\cC(A, B), \cC(B, C)}}"', from=3-1, to=3-3]
	\arrow["{F(\comp^{\cC}_{A, B, C})}"', from=3-3, to=3-5]
	\arrow[Rightarrow, no head, from=3-5, to=1-5]
	\arrow["{\comp^{\enrichedbasechange{F}(\cC)}_{A, B, C}}", from=1-1, to=1-5]
\end{tikzcddiag}\]

  \end{itemize}

  The assignment $\enrichedbasechange{F}$ forms a functor
  \[
    \enrichedbasechange{F} : \cV\dsh\Cat \to \cW\dsh\Cat
  \]
  whose details we do not unpack here.
\end{definition}

\section*{Actegories}
\label{appendix:actegories}

\begin{definition}[{Coherence laws for a right actegory (compare \cite[Def.\ 3.1.1]{capucci_actegories_2023})}]
  \label{def:act_coherence}
  Coherence laws for a $\cM$-actegory $(\cC, \act, \mu, \eta)$ are analogous to the coherence laws of a monoidal category (\cref{def:monoidal_category}).
  In other words, we have two laws:
  \begin{itemize}
    \item \textbf{Pentagonator.} This law has to hold for all $M, N, P : \cM$ and $C : \cC$.
      It tells us that, starting with $C \act (M \otimes (N \otimes P))$, any two ways of acting on $C$ give us the same result.
      We can either apply actions of $\mu$ in the only way possible on this, or alpply the associator and \emph{then} apply the actions of $\mu$ in the only way possible.
      This is what the diagram \cref{eq:act_pentagonator} describes: a condition that has to hold.\footnote{Its name ``pentagonator'' arises out of the squashing of the righmost identity into a single object. While this seems justified in the monoidal case, the added distinction between the associator and multiplicator in an actegory in my opinion suggests a different conceptualisation of the diagram.}
\begin{equation}
    \label{eq:act_pentagonator}
\begin{tikzcddiag}[ampersand replacement=\&]
	{C \act (M \otimes (N \otimes P))} \&\& {(C \act M) \act (N \otimes P)} \&\& {((C \act M) \act N) \act P} \\
	\\
	{C \act ((M \otimes N) \otimes P))} \&\& {(C \act (M \otimes N)) \act P} \&\& {((C \act M) \act N) \act P}
	\arrow["{\mu_{C, M, N \otimes P}}", from=1-1, to=1-3]
	\arrow["{\mu_{C \act M, N, P}}", from=1-3, to=1-5]
	\arrow["{C \act \alpha^{-1}_{M, N, P}}"', from=1-1, to=3-1]
	\arrow["{\mu_{C, M \otimes N, P}}"', from=3-1, to=3-3]
	\arrow["{\mu_{C, M, N} \act P}"', from=3-3, to=3-5]
	\arrow[Rightarrow, no head, from=1-5, to=3-5]
\end{tikzcddiag}
\end{equation}
  \item \textbf{Unitors.} These laws have to hold for all $C : \cC$ and $M : \cM$.
    The diagrams in \cref{eq:act_unitors} tell us that any two ways of getting rid of $I$ in $\cC \act (M \otimes I)$ (resp.\ $\cC \act (I \otimes M)$) are equal.
    That is, we get the same result as if we a) apply the right unitor $\rho_M$ (resp.\ left unitor $\lambda_M$) of the monoidal category, or b) apply the multiplicator of the actegory, and then the unitor of the actegory.
    See also \cite[Remark 3.1.2]{capucci_actegories_2023}.
    \begin{equation}
      \label{eq:act_unitors}
      \begin{tikzcddiag}
      	{C \act (M \otimes I)} &&&&& {C \act (I \otimes M)} \\
      	\\
      	{(C \act M) \act I} && {C \act M} & {C \act M} && {(C \act I) \act M}
      	\arrow["{\mu_{C, M, I}}"', from=1-1, to=3-1]
      	\arrow["{C \act \rho_M}", from=1-1, to=3-3]
      	\arrow["{\eta_{C \act M}}"', from=3-1, to=3-3]
      	\arrow["{C \act \lambda_M}"', from=1-6, to=3-4]
      	\arrow["{\mu_{C, I, M}}", from=1-6, to=3-6]
      	\arrow["{\eta_C \act M}", from=3-6, to=3-4]
      \end{tikzcddiag}
    \end{equation}
\end{itemize}
  
\end{definition}

\BraidedRightToLeft*
\begin{proof}
Let $\beta : \otimes \Rightarrow \otimes^{\rev}$ be the braiding of $(\cM, \otimes, I)$.
Then given a right $(\cM, \otimes, I)$-actegory $(\cC, \act, \eta, \mu)$ we can define a right $(\cM, \otimes^{\rev}, I)$ actegory (i.e.\ a left $(\cM, \otimes, I)$-actegory) with the same base $\cC$, whose structure maps are $(\actfw, \eta, \mu^{\rev[false]})$.
That is, the action functor and the unitor are the same as the starting actegory, because they can be defined without a reference to the underlying monoidal product of the acting actegory.
The only difference is in the multiplicator $\mu^{\rev[false]}$ which is defined by the following pasting diagram.
  \begin{equation}
    \label{eq:act_right_to_left}
\begin{tikzcddiag}[ampersand replacement=\&]
	{\cC \times \cM \times \cM} \&\& {\cC \times \cM} \\
	\\
	{\cC \times \cM} \&\& \cC
	\arrow["\act", from=1-3, to=3-3]
	\arrow["\act"', from=3-1, to=3-3]
	\arrow["{\act \times \cM}", from=1-1, to=1-3]
	\arrow["\mu", shorten <=15pt, shorten >=15pt, Rightarrow, 2tail reversed, from=1-3, to=3-1]
	\arrow[""{name=0, anchor=center, inner sep=0}, "{\cC \times \otimes}", curve={height=-18pt}, from=1-1, to=3-1]
	\arrow[""{name=1, anchor=center, inner sep=0}, "{\cC \times \otimes^{\rev}}"', curve={height=18pt}, from=1-1, to=3-1]
	\arrow["{\cC \times \beta}", shorten <=7pt, shorten >=7pt, Rightarrow, 2tail reversed, from=1, to=0]
\end{tikzcddiag}
\end{equation}

Componentwise, for $C : \cC$ and $M, N : \cM$ it gives us a natural isomorphism
\[
  \mu^{\rev[false]}_{C, M, N} \coloneqq \boxed{C \act (M \otimes^{\rev} N) \xrightarrow{C \times \beta_{M, N}} C \act (M \otimes N) \xrightarrow{\mu_{C, M, N}} (C \act M) \act N}
\]

To show that $(\cC, \act, \eta, \mu^{\rev[false]})$ is indeed an actegory we have to show that the appropriate coherence conditions (\cref{def:act_coherence}) are satisfied.
This follows routinely by unpacking the corresponding pasting diagrams and applying the naturality of $\beta$.

\end{proof}

\chapter{Locally graded and indexed categories}

\section*{Locally graded categories}
\label{sec:locally_graded_categories}

A common generaliation of enriched categories and actegories are \emph{locally graded} categories, originally defined in \cite{wood_indicial_1976,johnstone_v-indexed_1978}.\footnote{See also \cite[Lemma 12]{garner_embedding_2018}, \cite[Def. 5.1.9]{gowers_crossroads_2020}, \cite[Prop. 2]{mellies_parametric_2022}, and \cite[Def.\ 9]{mcdermott_flexibly_2022}}
They are categories enriched in $\internalHom{\Cat}{\cM^{\op}}{\Set}$, for a given monoidal category $\cM$.
They capture both internal parameterisation (one given by actegories) and external parameterisation (one given by enriched categories).

Analogously to actegories, they have a corresponding $\Para$ construction which constructs a bicategory of parametric morphisms.
We start by defining the monoidal structure of the category $\internalHom{\Cat}{\cM^{\op}}{\Set}$ which is needed to talk about enrichment in it.
It is given by \emph{Day Convolution}.

\begin{definition}[{Day convolution (compare \cite[Prop. 6.2.1]{loregian_coend_2021}, \cite[Def. 11]{garner_embedding_2018})}]
  Let $(\cM, \otimes, I)$ be a monoidal category.
  Then the functor category $\internalHom{\Cat}{\cM}{\Set}$ can be equipped with a monoidal structure given by \newdef{Day convolution}.
  It is defined by the following data:
  \begin{itemize}
  \item The monoidal product $\otimes_{D}$ as
    \begin{align*}
      \internalHom{\Cat}{\cM}{\Set} \times  \internalHom{\Cat}{\cM}{\Set} &\to \internalHom{\Cat}{\cM}{\Set}\\
      (F, G) &\mapsto \int^{(M, N) : \cM \times \cM} F(M) \times G(N) \times \cM(M \otimes N, -)
    \end{align*}
  \item The monoidal unit as $\cM(I, - ) : \cM \to \Set$
  \end{itemize}
  Associators and unitors are defined in \cite[Prop 6.2.1]{loregian_coend_2021}.
\end{definition}

\begin{remark}
  To understand why this is called Day \emph{convolution}, it is helpful to consider the case where $\cM$ is a discrete monoidal category, i.e.\ a monoid.
  In such a case the coend reduces to a coproduct, and morphisms in $\cM$ to equalites.
  In other words, we have
  \begin{align*}
    (F \otimes_D G)(X) &= \int^{M, N} F(M) \times G(N) \times \cM(M \otimes N, X)\\
                       &= \sum_{\substack{M, N : \cM \\ M \otimes N = X}} F(M) \times G(N)
  \end{align*}
  Furthermore specialising $\cM$ to, for instance, the group $(\Z, +, 0)$ of integers with addition allows us to write the above as
  \[
     (f \otimes_D g)(x) = \sum_{m : \Z} f(m) \times g(x - m)
    \]
  making the connection with classical convolution formula more apparent (we have taken the liberty of turning variable names lowercase).
\end{remark}

To better understand this monoidal structure, we study one if its interesting properties: monoids in this monoidal category are lax monoidal functors.
And to understand that, we will first study the set of natural transformations from $\cM(I, -)$ to any other functor $F$ (\cref{prop:day_conv_maps_from_unit}) and the set of natural transformations $F \otimes_D G \Rightarrow H$ (for any three functors $F, G, H$) (\cref{prop:day_conv_maps_monoid_like}), showing both of these admit a natural reduction.

\begin{proposition}
  \label{prop:day_conv_maps_from_unit}
  Let $(\cM, \otimes, I)$ be a monoidal category, and $F : \cM \to \Set$ a functor.
  Then
  \[
    \internalHom{\Cat}{\cM}{\Set}(\cM(I, -), F) \cong F(I)
  \]
\end{proposition}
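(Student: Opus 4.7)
The plan is to recognise this statement as a direct instance of the Yoneda lemma. The functor $\cM(I, -) : \cM \to \Set$ is precisely the representable functor at the object $I : \cM$, and the Yoneda lemma (which the excerpt already invokes as a standard tool, see \cref{prop:yoneda}) tells us that for any functor $F : \cM \to \Set$ the set of natural transformations from the representable at $I$ into $F$ is in natural bijection with the set $F(I)$.

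Concretely, first I would write out the left-hand side as the end
\[
\internalHom{\Cat}{\cM}{\Set}(\cM(I, -), F) = \int_{M : \cM} \Set(\cM(I, M), F(M))
\]
and then apply Yoneda to collapse this end to $F(I)$. The bijection sends a natural transformation $\alpha : \cM(I, -) \Rightarrow F$ to the element $\alpha_I(\id_I) : F(I)$, with inverse sending $x : F(I)$ to the natural transformation whose component at $M$ is the function $\cM(I, M) \to F(M)$ mapping $f \mapsto F(f)(x)$.

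There is no serious obstacle: the whole content is Yoneda, and naturality of the isomorphism in $F$ follows from naturality of the Yoneda isomorphism. The only thing worth double-checking is a notational one, namely that the convention used in the excerpt writes Day convolution with $\cM$ (rather than $\cM^{\op}$) as the domain of the presheaves, so the representable ``at $I$'' is the covariant hom $\cM(I, -)$ and the monoidal unit for Day convolution is indeed this functor, making the proposition read as expected.
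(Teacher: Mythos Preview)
Your proposal is correct and matches the paper's approach exactly: the paper's proof is the one-line ``Follows by unpacking the set of natural transformations as an end, and applying Yoneda,'' which is precisely what you spell out. Your additional unpacking of the explicit bijection is a welcome elaboration but not a departure.
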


\begin{proof}
  Follows by unpacking the set of natural transformations as an end, and applying Yoneda. 
\end{proof}

Concretely, this means that every element $i : F(I)$ can be identified with the natural transformation $F(-)(i) : \internalHom{\Set}{\cM(I, -)}{F}$, i.e.\ an assignment $F(-)(i) : \cM(I, X) \to F(X)$ for every $X : \cM$, and vice versa.

\begin{proposition}
  \label{prop:day_conv_maps_monoid_like}
  Let $(\cM, \otimes, I)$ be a monoidal category.
  Then
  \[
    \internalHom{\Cat}{\cM}{\Set}(F \otimes_D G, H) \cong \int_{(M, N): \cM \times \cM}\Set(F(M) \times G(N), H(M \otimes N))
  \]
\end{proposition}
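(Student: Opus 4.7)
The plan is to reduce the left-hand side to the right-hand side by unpacking all the (co)ends, moving hom-functors across them using their (co)continuity properties, and applying the Yoneda lemma to eliminate the representable factor $\cM(M \otimes N, -)$ that appears in the definition of Day convolution.

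First I would rewrite the set of natural transformations on the left as an end, giving
\[
\internalHom{\Cat}{\cM}{\Set}(F \otimes_D G, H) \cong \int_{X : \cM} \Set((F \otimes_D G)(X), H(X)).
\]
Next I would substitute the definition of Day convolution so that the integrand becomes
\[
\Set\Bigl(\int^{(M,N)} F(M) \times G(N) \times \cM(M \otimes N, X),\ H(X)\Bigr).
\]
At this step I would use the fact that $\Set(-, H(X))$ is continuous (it takes colimits in its first argument to limits), so this coend pulls out as an end, giving $\int_{(M,N)} \Set(F(M) \times G(N) \times \cM(M \otimes N, X), H(X))$. Then Fubini for ends lets me swap the order of $\int_X$ and $\int_{(M,N)}$.

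The key step is then the Yoneda reduction. After swapping, the inner expression in $X$ is $\int_X \Set(F(M) \times G(N) \times \cM(M \otimes N, X), H(X))$. Currying the set-valued hom (using $\Set(A \times B, C) \cong \Set(A, \Set(B, C))$) converts this into $\Set(F(M) \times G(N), \int_X \Set(\cM(M \otimes N, X), H(X)))$. The inner end is precisely the Yoneda end $\internalHom{\Cat}{\cM}{\Set}(\cM(M \otimes N, -), H)$, which by the Yoneda lemma (\cref{prop:yoneda}) is isomorphic to $H(M \otimes N)$. Substituting this back yields exactly $\int_{(M,N)} \Set(F(M) \times G(N), H(M \otimes N))$, as required.

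I expect the main obstacle to be purely bookkeeping: tracking the variances of the (co)ends so that the cocontinuity of $\Set(-, H(X))$ is applied correctly, and justifying the use of Fubini. None of these steps are deep, but the manipulation chains several coend-calculus moves together, and the reader needs to be comfortable with the identification $\Set(A \times B, C) \cong \Set(A, \Set(B, C))$ being natural enough to commute with the remaining end. Once those moves are in place, the Yoneda lemma cleanly finishes the computation; naturality in $F$, $G$, $H$ of the resulting isomorphism is automatic from the naturality of each step.
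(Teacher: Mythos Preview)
Your proposal is correct and follows essentially the same route as the paper: write the natural-transformation set as an end, unfold Day convolution, use cocontinuity of $\Set(-,H(P))$ to turn the inner coend into an outer end, curry via the tensor--hom adjunction, and finish with Yoneda to collapse the representable factor $\cM(M\otimes N,-)$. The only cosmetic difference is that the paper merges the ends over $P$ and $(M,N)$ into a single end over $(P,M,N)$ rather than invoking Fubini explicitly, but this is the same manipulation.
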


\begin{proof}
  Follows by the coend calculus below.
  \begin{alignat*}{2}
    & &&\internalHom{\Cat}{\cM}{\Set}(F \otimes_D G, H)\\
    &\text{(Natural transformation as end)} &\cong&\int_{P : \cM}\Set((F \otimes_D G)(P), H(P))\\
    &\text{(Day convolution)} &\cong&\int_{P : \cM}\Set(\int^{(M, N) : \cM \times \cM} \cM(M \otimes N, P) \times F(M) \times G(N), H(P))\\
    & \text{(Cocontinuity)}  &\cong& \int_{(P, M, N): \cM \times \cM \times \cM}\Set(\cM(M \otimes N, P) \times F(M) \times G(N), H(P))\\
    & \text{(Tensor-hom adjunction)}  &\cong& \int_{(P, M, N): \cM \times \cM \times \cM}\Set(\cM(M \otimes N, P), \internalHom{\Set}{F(M) \times G(N)}{H(P)})\\
    & \text{(Yoneda)}  &\cong& \int_{(M, N): \cM \times \cM}\Set(F(M) \times G(N), H(M \otimes N))
  \end{alignat*}
\end{proof}

\begin{proposition}[Lax monoidal functors as monoids]
  There is a 1-1 correspondence between monoids in $(\internalHom{\Cat}{\cM}{\Set}, \otimes_D, \cM(I, -))$ and lax monoidal functors of type $(\cM, \otimes, I) \to (\Set, \times, 1)$.
\end{proposition}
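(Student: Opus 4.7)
The plan is to use the two preceding propositions to set up a bijection on the defining data of each side, and then verify that the axioms transport across this bijection.

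First, I would exhibit the correspondence on data. A monoid structure on $F : \cM \to \Set$ in the Day convolution monoidal category consists of a unit $\eta : \cM(I, -) \Rightarrow F$ and a multiplication $\mu : F \otimes_D F \Rightarrow F$. By \cref{prop:day_conv_maps_from_unit}, the unit $\eta$ is in natural bijection with an element of $F(I)$, equivalently a map $\epsilon : 1 \to F(I)$ in $\Set$, which is precisely the unitor of a lax monoidal functor $F : (\cM, \otimes, I) \to (\Set, \times, 1)$. By \cref{prop:day_conv_maps_monoid_like} (applied with $G = F$ and $H = F$), the multiplication $\mu$ is in natural bijection with an end-indexed family $\phi_{M, N} : F(M) \times F(N) \to F(M \otimes N)$, i.e.\ a natural transformation $\phi : F(-) \times F(=) \Rightarrow F(- \otimes =)$, which is precisely the laxator of a lax monoidal functor.

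Next, I would verify that the monoid axioms translate into the lax monoidal coherence axioms. For unitality, the left unit law of a monoid in Day convolution asserts the commutativity of a triangle involving $\eta \otimes_D F$ and the left unitor of $\otimes_D$; unpacking this through the bijection of \cref{prop:day_conv_maps_from_unit} yields, pointwise at $M : \cM$, exactly the left unitality square of a lax monoidal functor relating $\epsilon \times F(M)$, $\phi_{I, M}$, $F(\lambda_M)$, and $\lambda'_{F(M)}$ (where $\lambda'$ is the left unitor of $(\Set, \times, 1)$). The right unit law is symmetric. For associativity, the monoid law involving $\mu \otimes_D F$ and $F \otimes_D \mu$ becomes, after unpacking the coend in the Day convolution and using the bijection of \cref{prop:day_conv_maps_monoid_like}, the pentagon relating $\phi_{M, N} \times F(P)$, $\phi_{M \otimes N, P}$, $F(\alpha_{M, N, P})$, $F(M) \times \phi_{N, P}$, and $\phi_{M, N \otimes P}$, which is exactly the associativity axiom of a lax monoidal functor.

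Finally, I would check naturality in $F$ (i.e.\ that monoid homomorphisms in Day convolution correspond to monoidal natural transformations), though this is not strictly required by the statement; the bijection on objects is enough.

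The main obstacle is the associativity step: one must carefully unwind the Day convolution coend in $F \otimes_D F \otimes_D F$, identify the two composite natural transformations $\mu \circ (\mu \otimes_D F)$ and $\mu \circ (F \otimes_D \mu)$ under the end presentation of \cref{prop:day_conv_maps_monoid_like}, and check that equality of these natural transformations reduces pointwise to the lax monoidal pentagon (modulo the bookkeeping associators of $\otimes_D$, which themselves implicitly encode the associator of $\cM$). This is conceptually clean but notationally heavy, and is where a careful coend calculation is needed.
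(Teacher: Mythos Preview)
Your proposal is correct and follows essentially the same approach as the paper: use \cref{prop:day_conv_maps_from_unit} and \cref{prop:day_conv_maps_monoid_like} to set up the bijection on the unit/unitor and multiplication/laxator data, then transport the axioms. The paper's proof is in fact terser than yours---it simply notes that via these two propositions ``it becomes straightforward to translate these representations one to another, and confirm that they are indeed well defined, and inverses''---so your explicit outline of how unitality and associativity unwind is more detailed than what the paper provides, but entirely in the same spirit.
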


\begin{proof}
  A lax monoidal functor consists of a functor $F : \cM \to \Set$ and two natural transformations $\mu : \int_{(M, N) : \cM \times \cM}\Set(F(M) \times F(N), F(M \otimes N))$ and $\epsilon : \int_{\bullet : \CatTerm}\Set(1, F(I))$.
  A monoid in $\internalHom{\Cat}{\cM}{\Set}$ consists of a functor $F$ and two natural transformations $\mu : F \otimes_D F \Rightarrow F$ and $\epsilon : \cM(I, -) \Rightarrow F$.
  Via \cref{prop:day_conv_maps_monoid_like,prop:day_conv_maps_from_unit} it becomes straightforward to translate these representations one to another, and confirm that they are indeed well defined, and inverses.
\end{proof}

Since this is a monoidal category, we can use it as the base of enrichment.

\begin{definition}[{Locally graded category (\cite[Def.\ 9]{mcdermott_flexibly_2022})}]
  Let $(\cM, \otimes, I)$ be a monoidal category.
  Then we call a $(\internalHom{\Cat}{\cM^{\op}}{\Set}, \otimes_D, \cM^{\op}(I, -))$-enriched category a \newdef{locally $\cM$-graded} category.
\end{definition}

Let us unpack, following \cref{def:enriched_category} what such an enriched category (call it $\cC$) consists of.
\begin{itemize}
\item \textbf{Objects.} Like any enriched category, it consists of a set of objects.
\item \textbf{Morphisms.} Given any two objects $A, B$ of the hom-object $\cC(A, B)$ is a \emph{functor} of type $\cM^{\op} \to \Set$.
  Since a functor does not have elements, this $\internalHom{\Cat}{\cM^{\op}}{\Set}$-enriched category $\cC$ does not have morphisms in the conventional sense.
  But there are ``graded morphisms'', i.e.\ for each object $M : \cM^{\op}$ a set of morphisms $\cC(A, B)(M)$;%
\item \textbf{Identity.} To talk about identities, we make use of \cref{prop:day_conv_maps_from_unit} which tells us that the set of possible identity natural transformations $\cM^{\op}(I, -) \Rightarrow \cC(A, A)$ is isomorphic to the set $\cC(A, A)(I)$.
  Thus we refer to the identity morphism $\id_A$ as an element of $\cC(A, A)(I)$;
\item \textbf{Composition.} Analogous story follows here. Via \cref{prop:day_conv_maps_monoid_like} the set of possible composition natural transformations $\cC(A, B) \otimes_D \cC(B, C) \Rightarrow \cC(A, C)$ is isomorphic to the set $\int_{(M, N): \cM^{\op} \times \cM^{\op}}\Set(\cC(A, B)(M) \times \cC(B, C)(N), \cC(A, C)(M \otimes N))$.
  Thus we refer to the composition morphism $\comp_{A, B, C}$ as a natural assignment of a function
  \[
    \cC(A, B)(M) \times \cC(B, C)(N) \to \cC(A, C)(M \otimes N)
  \]
  to every $A, B, C : \cC$ and $M, N : \cM^{\op}$.
\end{itemize}

  We can now see how locally graded categories are a generalisation of actegories and enriched categories.

\begin{example}[Actegories are locally graded categories]
  \label{ex:actegories_are_locally_graded_categories}
  Let $(\cC, \act)$ be a $\cM$-actegory.
  Then we can construct a locally $\cM$-graded category $\cC'$ with the following data.
  \begin{itemize}
  \item Its objects are same as that of $\cC$;
  \item For any $A, B : \cC'$, the hom functor $\cC'(A, B)$ is the composite $\cM^{\op} \xrightarrow{A \act -} \cC^{\op} \xrightarrow{\cC(-, B)} \Set$;
  \item For every $A : \cC'$ the set of potential identity morphisms $\cC'(A, A)(I)$ reduces to the set $\cC(A \act I, A)$ for which there is a clear candidate: the actegory unitor $\eta^{-1}_A : \cC(A \act I, A) = \cC'(A, A)(I)$;
  \item For every $A, B, C : \cC'$ the set of potential composition natural transformations reduces to the set $\int_{(M, N) : \cM^{\op} \times \cM^{\op}}\Set(\cC(A \act M, B) \times \cC(B \act N, C) , \cC(A \act (M \otimes N), C))$ for which there is a clear candidate: a function 
    \begin{align*}
      \cC(A \act M, B)) \times \cC(B \act N, C)) &\to \cC(A \act (M \otimes N), C))\\
      (f,   g) &\mapsto \boxed{A \act (M \otimes N) \xrightarrow{\mu_{C, M, N}} (A \act M) \act N \xrightarrow{f \act N} B \act N  \xrightarrow{g} C}
    \end{align*}
    defined naturally for each $(M, N) : \cM^{\op} \times \cM^{\op}$.
    Note that this is precisely the $\Para$ composition --- describing the manner by which two parametric morphisms are composed.
  \end{itemize}
\end{example}

\begin{remark}
  The unitor and the multiplicator of the actegory --- which are isomorphisms --- are used to define the identity and composition of a locally graded category.
  But note that only one direction of this isomorphism is used to define a locally graded category.
  This suggests that lax actegories too embed into locally graded categories (see \cite[Def. 5.1.9]{gowers_crossroads_2020}).
\end{remark}

Actegories, seen as locally graded categories come with an additional property: they have copowers by all representable functors.
As it also turns out, every locally graded category with copowers by all representables can be turned into an actegory.

\begin{proposition}[{\cite[Prop. 13]{garner_embedding_2018}}]
  \label{prop:copower_representable}
  Let $\cM$ be a symmetric monoidal category.
  Then there is a 1-1 correspondence between $\cM$-actegories and locally $\cM$-graded categories admitting powers by representables.
\end{proposition}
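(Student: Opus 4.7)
The plan is to exhibit a pair of constructions in both directions and show they are mutually inverse, with the forward direction already essentially handled by Example~\ref{ex:actegories_are_locally_graded_categories}. First I would verify that the locally graded category $\cC'$ built from a $\cM$-actegory $(\cC, \act)$ admits copowers (taking the convention of copowers from the preceding paragraph) by all representables. Concretely, for any $A : \cC$ and $M : \cM$ the object $A \act M$ should satisfy a natural isomorphism
\[
  \cC'(A \act M, B)(N) \cong \cC'(A, B)(M \otimes N)
\]
for all $B : \cC$ and $N : \cM$. Unpacking using $\cC'(X, Y)(P) = \cC(X \act P, Y)$, this reduces to the instance $\cC((A \act M) \act N, B) \cong \cC(A \act (M \otimes N), B)$ of the multiplicator $\mu$ (composed with hom), which is immediate from the actegory axioms and Yoneda.

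The harder direction is constructing, from a locally $\cM$-graded category $\cD$ with the assumed copowers, a $\cM$-actegory $(\cC, \act)$. The plan is: take $\cC$ to have the same objects as $\cD$ with hom-sets $\cC(A, B) \coloneqq \cD(A, B)(I)$, using the identity-at-$I$ and the composition-at-$(I, I)$ (which lands in $I \otimes I \cong I$) to give $\cC$ the structure of an ordinary category. Then define $A \act M$ as the object representing the copower by $\cM(M, -)$ (or, dually, $\cM(-, M)$ depending on variance); functoriality in both arguments then follows from universal property arguments, as do the unitor $\eta_A : A \act I \xrightarrow{\cong} A$ (from the copower at the unit reducing to $A$ itself) and the multiplicator $\mu_{A,M,N} : A \act (M \otimes N) \xrightarrow{\cong} (A \act M) \act N$ (from iterating copowers). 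The pentagon and unit coherence axioms then follow from the associativity and unitality of composition in the enriched category $\cD$, which translate directly across the adjunction defining copowers.

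The main obstacle, I expect, is twofold. First, one has to be scrupulous about variances in the representable $\cM^{\op} \to \Set$ used to define copowers here, since the acting category appears in $\internalHom{\Cat}{\cM^{\op}}{\Set}$; the symmetry assumption on $\cM$ is used to flip between left and right conventions so that Day convolution aligns correctly with the monoidal product $\otimes$ inside the copower formula. Second, the coherence diagrams~\eqref{eq:act_pentagonator} and~\eqref{eq:act_unitors} for the resulting actegory require chaining together multiple instances of the copower isomorphism and then using the enriched associativity/unit axioms of $\cD$; this is conceptually clean but notationally dense, and is the step where most care is required.

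Finally, I would verify that the two constructions are mutually inverse. Starting from an actegory, building $\cC'$, and extracting the underlying category and action from the copowers returns the original action up to canonical isomorphism (the action is literally how copowers by representables were defined). Conversely, starting from a locally graded $\cD$ with copowers by representables and extracting an actegory $(\cC, \act)$ recovers $\cD$ via the formula $\cD(A, B)(M) \cong \cD(A \act M, B)(I) = \cC(A \act M, B)$, where the first isomorphism is the defining property of the copower and the second is the definition of $\cC$. Naturality and compatibility with composition and identities follow by the same universal property arguments, yielding the desired bijective correspondence.
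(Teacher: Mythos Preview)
The paper does not give its own proof of this proposition: it is stated with a citation to \cite[Prop.~13]{garner_embedding_2018} and left at that. The only part the paper does spell out is the forward construction in Example~\ref{ex:actegories_are_locally_graded_categories}, which builds the locally $\cM$-graded category $\cC'$ from an actegory $(\cC,\act)$; the remaining work --- verifying copowers by representables, the reverse construction, and the round-trip --- is deferred to Garner.

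Your plan is sound and is essentially the standard argument one would expect (and, as far as I can tell, the one in the cited reference). A couple of minor remarks. First, you are right to flag the power/copower discrepancy: the surrounding text says ``copowers'' while the proposition says ``powers'', and copowers is what is meant. Second, in the forward direction you might want to make explicit that the copower-by-$\cM(-,M)$ universal property in a $[\cM^{\op},\Set]$-enriched category unpacks, via the Day-convolution internal hom, to exactly the isomorphism $\cC'(A\act M,B)(N)\cong\cC'(A,B)(M\otimes N)$ you wrote down --- this is the step where the symmetry of $\cM$ quietly enters, since one needs the Day internal hom $[\cM(-,M),F](N)\cong F(M\otimes N)$ with the tensor factors in the right order. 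Beyond that, the reverse construction (underlying category at grade $I$, action via the copower object, structure maps from universal properties) and the two round-trips are exactly as you describe.
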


\begin{example}[Enriched categories are locally graded categories]
  \label{ex:enriched_categories_are_locally_graded_categories}
  Let $\cC$ be a $\cM$-category.
  Then we can construct a locally $\cM$-graded category $\cC'$ with the following data.
  \begin{itemize}
  \item Its objects are same as that of $\cC$;
  \item For any $A, B : \cC'$, the hom functor $\cC'(A, B)$ is the composite $\cM^{\op} \xrightarrow{\cM(-, \cC(A, B))} \Set$;
  \item For every $A : \cC'$ the set of identity morphisms $\cC'(A, A)(I)$ reduces to the set $\cM(I, \cC(A, A))$ for which there is a clear candidate: the name of the identity map $\name{\id_A} : \cM(I, \cC(A, A))$ of $\cC$;
  \item For every $A, B, C : \cC'$ set of potential composition natural transformations reduces to the set $\int_{(M, N) : \cM^{\op} \times \cM^{\op}}\Set(\cM(M, \cC(A, B)) \times \cM(N, \cC(B, C)), \cM(M \otimes N, \cC(A, C))$ for which there is a clear candidate: a function 
    \begin{align*}
      \cM(M, \cC(A, B)) \times \cM(N, \cC(B, C)) &\to \cM(M \otimes N, \cC(A, C))\\
      (f,   g) &\mapsto \boxed{M \otimes N \xrightarrow{f \otimes g} \cC(A, B) \otimes \cC(B, C) \xrightarrow{\comp^{\cC}_{A, B, C}} \cC(A, C)}
    \end{align*}
    defined naturally for each $(M, N) : \cM^{\op} \times \cM^{\op}$.
    This too is $\Para$ composition --- but for externally parametric morphisms.
  \end{itemize}
\end{example}

We do not fully prove it here, but both examples above assemble, respectively, into functors ${\cM\dsh\Act \to \internalHom{\Cat}{\cM}{\Set}\dsh\Cat}$, and ${\cM\dsh\Cat \to \internalHom{\Cat}{\cM}{\Set}\dsh\Cat}$.

These two embeddings show that locally graded categories can model both actegories and enriched categories.
What remains now is to define, analogously to $\Para$, a construction that consumes a locally graded category and produces a bicategory of parametric morphisms.
We will see that to be the case, and, interestingly, we will see that this produces a 2-category, as opposed to a bicategory.

This will be done by taking the base of change along the category of elements functor.
Recall that the category of elements is a functor $\El : \internalHom{\Cat}{\cM}{\Set} \to \Cat$ defined for any category $\cM$.
If the domain is equipped with the structure of Day convolution, and codomain with the cartesian product, we can show that this functor is lax monoidal.

\begin{proposition}[{$\El$ is a lax monoidal functor}]
  Let $(\cM, \otimes, I)$ be a monoidal category.
  Then the functor $\El : \internalHom{\Cat}{\cM}{\Set} \to \Cat$ becomes a lax monoidal functor
  \[
    (\El, \mu, \epsilon) : (\internalHom{\Cat}{\cM}{\Set}, \otimes_D, \cM(I, -)) \to (\Cat, \times, \CatTerm)
  \]
  where
  \begin{itemize}
  \item The laxator $\mu_{F, G} : \El(F) \times \El(G) \to \El(F \otimes_D G)$ is given by
    \[
      ((M, M_F), (N, N_G)) \mapsto (M \otimes N, ((M, N), (M_F, N_G, \id_{M \otimes N})))
    \]
  \item The unit $\epsilon : \CatTerm \to \El(\cM(I, -))$ picks out $(I, \id_I : \cM(I, I))$.
  \end{itemize}
\end{proposition}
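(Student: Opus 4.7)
The plan is to verify the three layers of structure needed for a lax monoidal functor: functoriality of the laxator components $\mu_{F,G}$, naturality of $\mu$ and $\epsilon$ in their arguments, and the three coherence axioms (associativity, left unitality, right unitality) from \cref{def:lax_monoidal_functor}. Since the statement specifies $\mu_{F,G}$ and $\epsilon$ only on objects, the first task is to extend them to morphisms. For $\mu_{F,G}$, a morphism $((M,M_F), (N,N_G)) \to ((M',M'_F), (N',N'_G))$ in $\El(F) \times \El(G)$ consists of a pair $(f : M \to M', g : N \to N')$ with $F(f)(M_F) = M'_F$ and $G(g)(N_G) = N'_G$; I would send this to the morphism $f \otimes g : M \otimes N \to M' \otimes N'$ in $\cM$, together with the witness that it preserves the coend element, which follows because the element $((M,N),(M_F,N_G,\id_{M\otimes N}))$ is sent by $(F \otimes_D G)(f \otimes g)$ to an element equivalent under the coend relation to $((M',N'),(M'_F,N'_G,\id_{M'\otimes N'}))$, using the defining quotient of the coend (postcomposing the identity with $f \otimes g$ equals precomposing the identity with $f \otimes g$).

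Next, I would verify functoriality of $\mu_{F,G}$ (preservation of identities and composition, both of which follow from functoriality of $\otimes$), and naturality of $\mu$ in $F$ and $G$, which unpacks to checking that for a natural transformation $\alpha : F \Rightarrow F'$ the induced functors $\El(\alpha) : \El(F) \to \El(F')$ commute with the laxators; this is straightforward since the laxator only manipulates the $\cM$-components, and $\El(\alpha)$ acts trivially on those.

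The three coherence diagrams then become the main work. For associativity, both composites ${\mu_{F \otimes_D G, H} \circ (\mu_{F,G} \times \El(H))}$ and $\mu_{F, G \otimes_D H} \circ (\El(F) \times \mu_{G,H})$ send $((M,M_F),(N,N_G),(P,P_H))$ to an object of $\El(F \otimes_D G \otimes_D H)$ whose underlying $\cM$-object is respectively $(M \otimes N) \otimes P$ and $M \otimes (N \otimes P)$. After applying the associator $\alpha_{M,N,P}$ of $\cM$ and unwinding the definition of Day convolution as an iterated coend, the two representatives become equal under the coend equivalence, using associativity of $\otimes$ and the ``fusion'' of the identity morphisms. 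The unitality axioms reduce similarly to the unitor laws of $\cM$: the unit $\epsilon$ picks out $(I, \id_I : \cM(I,I))$, and composing with $\mu$ yields $(I \otimes M, ((I, M), (\id_I, M_F, \id_{I \otimes M})))$, which is coend-equivalent to $(M, M_F)$ via the left unitor $\lambda_M$.

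The main obstacle is a bookkeeping one: every equation lives up to the coend relation defining $\otimes_D$, so the coherence diagrams do not commute strictly on representatives but rather modulo the congruence generated by postcomposing/precomposing the structure morphism $r : M \otimes N \to X$ with maps in $\cM$. The key lemma driving every verification is that in a coend of the form $\int^{M,N} F(M) \times G(N) \times \cM(M \otimes N, -)$, the elements $(M, N, M_F, N_G, r)$ and $(M', N', F(f)(M_F), G(g)(N_G), r \circ (f \otimes g)^{-1})$ (when such composites make sense) are identified; every axiom then reduces to coherence in $\cM$ itself, and no genuinely new computation is needed beyond the pentagon and triangle of the base monoidal category.
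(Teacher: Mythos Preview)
The paper does not actually give a proof of this proposition: it states the laxator and unit on objects and then immediately moves on to using $\El$ for enriched base change. Your proposal is therefore already more detailed than what the thesis provides, and your overall strategy---extend $\mu_{F,G}$ to morphisms via $f \otimes g$, check naturality in $F$ and $G$, then reduce the three coherence diagrams to the pentagon and triangle of $\cM$ through the coend relation---is the correct one.

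One point worth tightening: you describe the coherence diagrams as commuting ``modulo the congruence generated by'' the coend relation, but for a lax monoidal functor into $(\Cat, \times, \CatTerm)$ these diagrams must commute on the nose as functors. The coend quotient does make the fibre elements literally equal, so that part is fine; the place where strict commutativity can fail is the \emph{base} component, since the two paths around the associativity hexagon produce objects over $(M \otimes N) \otimes P$ and $M \otimes (N \otimes P)$ respectively, and $\El$ of the Day-convolution associator does not change the base. This is harmless if $\cM$ is taken to be strict (which is the tacit assumption behind the paper's claim that base change along $\El$ yields a $2$-category rather than a bicategory), but you should make that assumption explicit or else note that in the non-strict case one only gets coherence up to the canonical isomorphism supplied by $\alpha_{M,N,P}$.
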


This means that we $(\El, \mu, \epsilon)$ is a suitable functor we can perform base change along.
In doing so, we obtain a 2-functor $\enrichedbasechange{(\El, \mu, \epsilon)} : \internalHom{\Cat}{\cM}{\Set}\dsh\Cat \to \TwoCat$.
In the following example, we study its action on objects, which results in a locally graded version of $\Para$.

\begin{example}
  Let $\cC$ be a $\internalHom{\Cat}{\cM}{\Set}$-enriched category.
  Then the action of $\enrichedbasechange{(\El, \mu, \epsilon)}$ on $\cC$ results in a 2-category with the following data.
  \begin{itemize}
  \item \textbf{Objects} are the same as those of $\cC$;
  \item \textbf{Morphisms.}  A morphism $A \to B$ consists of an object $M : \cM$ and an element $f : \cC(A, B)(M)$.
  \item \textbf{2-morphisms.} A map $(M, f) \Rightarrow (N, g)$ is given by $r : M \to N$ such that $\cC(A, B)(r)(g) = f$;
  \item \textbf{Identity morphism.} For every object $A$ the identity morphism is given by the composite
    \[
      \CatTerm \xrightarrow{\epsilon} \El(\cM^{\op}(I, -)) \xrightarrow{\El(\id_A)} \El(\cC(A, A))
    \]
    We note that precomposing $\El(\id_A)$ with $\epsilon$ amounts exactly to applying Yoneda lemma (\cref{prop:yoneda}), and selecting the identity morphism $\id_A$ ---  already considered to be an element of $\cC(A, A)(I)$.
    More precisely, the name of the identity $\epsilon \comp \El(\id_A) : 1 \to \El(\cC(A, A))$ picks out $(I : \cM^{\op}, \id_A : \cC(A, A)(I))$;
  \item \textbf{Morphism composition.} For every $A, B , C : \cC$ the composition morphism is given by enriched base change which unpacks to a composite functor of type
    \begin{align*}
      \El(\cC(A, B)) \times \El(\cC(B, C)) &\xrightarrow{\mu_{\cC(A, B), \cC(B, C)}} \El(\cC(A, B) \otimes_D \cC(B, C)) &\xrightarrow{\El(\comp^{\cC}_{A, B, C})} \El(\cC(A, C))\\
      ((M, f), (N, g)) &\mapsto (M \otimes N, ((M, N), (f, g, \id_{M \otimes N}))) &\mapsto (M \otimes N, f \comp^{\cC}_{A, B, C} g)
    \end{align*}
  \end{itemize}
\end{example}

It can be shown that in both cases of an actegory and an enriched category seen as a locally graded category, applying the base change along the category of elements produces the analogous 2-category to $\Para(\cC)$.

\section*{Locally indexed categories}
\label{sec:locally_indexed_categories}

In the previous section we equipped the category $\internalHom{\Cat}{\cM}{\Set}$ with the Day convolution monoidal product.
This required $\cM$ itself to be a monoidal category. 
But there is another monoidal product we can equip $\internalHom{\Cat}{\cM}{\Set}$ with that does not necessitate any monoidal structure on $\cM$.

\begin{definition}[Pointwise product]
  Let $\cM$ be a category.
  Then the category $\internalHom{\Cat}{\cM}{\Set}$ can be given cartesian monoidal structure where
  \begin{itemize}
  \item Its cartesian monoidal product $\times$ is defined as
    \begin{align*}
      \internalHom{\Cat}{\cM}{\Set} \times \internalHom{\Cat}{\cM}{\Set} &\to \internalHom{\Cat}{\cM}{\Set}\\
      (F, G) &\mapsto \boxed{\cM \xrightarrow{\Delta_{\cM}} \cM \times \cM \xrightarrow{F \times G} \Set \times \Set \xrightarrow{\times} \Set}
    \end{align*}
  \item The unit is $\const{1} : \cM \to \Set$
  \end{itemize}
  Associators and unitors can be defined routinely from the pointwise associators and unitors of $(\Set, \times, 1)$.
  It can be checked that this monoidal product is indeed cartesian, following similarly from the cartesian structure of $\Set$.
\end{definition}

When the category $\cM$ is cartesian, then we can also define the Day convolution product on $\internalHom{\Cat}{\cM}{\Set}$, and these monoidal products turn out to be equivalent.

\begin{definition}[{Locally indexed category, compare \cite[Def.\ 75]{levy_call-by-push-value_2013}}]
  Let $\cM$ be a category.
  Then a we call a category enriched in $(\internalHom{\Cat}{\cM}{\Set}, \times, \const{1})$ a \newdef{locally $\cM$-indexed} category.
\end{definition}

Let us unpack, following \cref{def:enriched_category} what such an enriched category (call it $\cC$) consists of.
Like a locally graded category, it consists of a set of objects, and its hom-object $\cC(A, B)$ is a functor of type $\cM^{\op} \to \Set$.
This means that a locally indexed category also does not have morphisms in the conventional sense, but instead has ``graded'' (or more appropriately, ``indexed'') morphisms, i.e.\ a set of morphisms for every object $M : \cM^{\op}$.
Where locally indexed categories differ from graded ones is in identities and composition.
\begin{itemize}
\item \textbf{Identity.} The identity on an object $A$ is a natural transformation $\id_A : \const 1 \Rightarrow \cC(A, A)$.
  To every object $M : \cM^{\op}$ it assigns the map of type $\cC(A, A)(M)$ which we suggestively call \emph{projection} and label as $\pi_A$;
\item \textbf{Composition.} The composition is a natural transformation $\comp_{A, B, C} : \cC(A, B)(-) \times \cC(B, C)(-) \Rightarrow \cC(A, C)(-)$.
  To every $M : \cM$ it assigns a map
    \begin{align*}
      \comp_{A, B, C}^M : \cC(A, B)(M) \times \cC(B, C)(M) &\to \cC(A, C)(M)
    \end{align*}
    which composes two morphisms of the same grade.
\end{itemize}

If locally graded categories are analogous to $\Para(\cC$, then locally indexed categories are analogous to $\CoKl(X \times -)$.
Locally indexed categories define a composition of morphisms only if they are of the same grade, unlike locally graded categories.
Likewise, note how locally indexed categories define the identity map for every object of $\cM$, while locally graded categories only defined it for the monoidal unit $I$.

What follows is a locally indexed variant of the $\CoKl(X \times -)$ construction from \cref{def:cokl}.

\begin{example}[{compare \cite[Def.\ 77]{levy_call-by-push-value_2013}}]
  Let $\cC$ be a cartesian category.
  Then we can form a locally $\cC$-indexed category $\cC$ with the following data.
  \begin{itemize}
  \item Its objects are the same as those of $\cC$;
  \item Given any $A, B : \cC$ the hom functor is $\cC(A, B) \coloneqq \boxed{\cC^{\op} \xrightarrow{A \times -} \cC^{\op} \xrightarrow{\cC(-, B)} \Set}$;
  \item For every $A$ the identity maps unpacks to a natural transformation $\id_A : \const{1} \Rightarrow \cC(A \times -, B)$ whose component at every $M : \cM^{\op}$ is a morphism $\name{\pi_A} : 1 \to \cC(A \times M, A)$ picking out the projection morphism $\pi_A : A \times M \to A$;
  \item For every $A, B, C : \cC$ and $M : \cM^{\op}$ composition unpacks to a natural transformation $\cC(A \times -, B) \times \cC(B \times -, C) \Rightarrow \cC(A \times -, C)$ which is for every $M : \cM^{\op}$ defined as the function
    \begin{align*}
      \cC(A \times -, B) \times \cC(B \times M, C) &\to \cC(A \times M, C)\\
         (f, g) \mapsto (f \compPara{\comp} g)^{\Delta_M}
    \end{align*}
  \end{itemize}
\end{example}

\chapter{The Para construction}
\label{app:para}

\ParaBiCatToParaTwoCat*

\begin{proof}
  Showing that a bicategory is a 2-category reduces to showing that the unitors and associators are identity natural transformations.
  When it comes to the unitors, let's study what happens when we compose
  \[
    (I, \eta_A) : \Para_{\actfw}(\cC)(A, A) \quad \text{and} \quad (P, f) : \Para_{\actfw}(\cC)(A, B)
  \]
  where $\eta_A : A \actfw I \to A$ and $f : A \actfw P \to B$.
  The resulting composite (which we call $g$) is
  \[
    A \actfw (I \otimes P) \xrightarrow{\mu_{A, I, P}} (A \act I) \act P \xrightarrow{\eta_A \actfw P} A \actfw P \xrightarrow{f} B
  \]
  In order to show that the left unitor is the identity natural isomorphism, we need to show that the reparameterisation $(I \otimes P, g) \Rightarrow (P, f)$ is identity.
  This is witnessed by the identity morphism $\lambda^{-1}_P : P \to I \otimes P$ (as $\cM$ is strict) for which the reparameterisation condition reduces to showing the diagram \cref{eq:para_bicat_unitor} commutes.
  And this diagram is precisely the left unitor coherence condition of an actegory (\cref{eq:act_unitors}).
  Right unitor follows analogously.
  \begin{equation}
    \label{eq:para_bicat_unitor}
    \begin{tikzcddiag}[ampersand replacement=\&]
    	\&\& {A \actfw (I \otimes P)} \\
    	\\
    	{A \actfw P} \&\& {(A \actfw I) \actfw P}
    	\arrow["{\mu_{A, I, P}}", from=1-3, to=3-3]
    	\arrow["{A \actfw \lambda_P}"', from=1-3, to=3-1]
    	\arrow["{\eta_A \actfw P}", from=3-3, to=3-1]
    \end{tikzcddiag}
  \end{equation}
  For the associator, given
  \[
    (P, f) : \Para_{\actfw}(\cC)(A, B) \quad \text{and} \quad (Q, g) : \Para_{\actfw}(\cC)(B, C) \quad \text{and} \quad (R, h) : \Para_{\actfw}(\cC)(C, D) 
  \]
  we need to show that two different ways of composing these morphisms
  \[
    ((P \otimes Q) \otimes R, (f  \compPara{\comp} g)  \compPara{\comp} h) \quad \text{and} \quad (P \otimes (Q \otimes R), f  \compPara{\comp} (g  \compPara{\comp} h))
  \]
  are equal on the nose.
  Since $\cM$ is strict it is straightforward to provide the underlying identity map.
  All that remains is to check the reparameterisation condition, which reduces to the diagram \cref{eq:para_bicat_associator}.
  
  \begin{equation}
    \label{eq:para_bicat_associator}
\begin{tikzcddiag}[ampersand replacement=\&]
	{A \act ((P \otimes Q) \otimes R)} \&\& {(A \act (P \otimes Q)) \act R} \\
	\\
	{A \act (P \otimes (Q \otimes R))} \& {(A \act P) \act (Q \otimes R)} \& {((A \act P) \act Q) \act R} \\
	\\
	\& {B \act (Q \otimes R)} \& {(B \act Q) \act R}
	\arrow["{\mu_{A, P \otimes Q, R}}", from=1-1, to=1-3]
	\arrow["{\mu_{A, P, Q} \act R}", from=1-3, to=3-3]
	\arrow["{ A \act \alpha_{P, Q, R}}"', from=1-1, to=3-1]
	\arrow["{\mu_{A, P, Q \otimes R}}"', from=3-1, to=3-2]
	\arrow["{\mu_{B, Q, R}}"', from=5-2, to=5-3]
	\arrow["{(f \act Q) \act R}", from=3-3, to=5-3]
	\arrow["{\mu_{A \act P, Q, R}}"', from=3-2, to=3-3]
	\arrow["{f \act (Q \otimes R)}", from=3-2, to=5-2]
\end{tikzcddiag}
\end{equation}

  The top square commutes because of the actegory pentagonator \cref{eq:act_pentagonator}, and the bottom square commutes because of naturality of $\mu$.
\end{proof}

\ParaMonoidal*

\begin{proof}
  \label{proof:para_monoidal_bicat}
  The monoidal bicategory structure (\cite[Def.\ 12.1.2]{johnson_2-dimensional_2021}) of $\Para_{\actfw}(\cC)$ consists of the following data.
  
  \begin{itemize}
  \item The monoidal product pseudofunctor $\compPara{\boxtimes} : \Para_{\actfw}(\cC) \times \Para_{\actfw}(\cC) \to \Para_{\actfw}(\cC)$ defined by
    \begin{itemize}
    \item An action on objects mapping $X, Y$ to $X \boxtimes Y$;
    \item For every $A, B, C, D : \Para_{\actfw}(\cC)$ a local functor
      \[
        \Para_{\actfw}(\cC)(A, B) \times \Para_{\actfw}(\cC)(C, D) \to  \Para_{\actfw}(\cC)(A \boxtimes C, B \boxtimes D)
      \]
      mapping objects i.e.\ parametric morphisms $(P, f : A \act P \to B)$ an $(Q, g : C \act Q \to D)$ to $(P \otimes Q, f \compPara{\boxtimes} g)$ where
      \[
        f \compPara{\boxtimes} g \coloneqq \boxtimes{(A \boxtimes C) \act (P \otimes Q) \xrightarrow{\interchangeract_{A, C, P, Q}} (A \act P) \boxtimes (B \act Q) \xrightarrow{f \boxtimes g} B \boxtimes D}
      \]
      and morphisms i.e.\ reparameterisations $r : (P, f) \Rightarrow (Q, g)$ and $s : (R, h) \Rightarrow (S, i)$ to $r \otimes s$.
    \item A natural isomorphism $\mu^{\compPara{\boxtimes}}$  %
      \begin{equation*}
        \label{eq:para_bicat_laxator}
        \begin{tikzcddiag}[ampersand replacement=\&]
          {\mathbf{P}(A, B) \times \mathbf{P}(X, Y) \times \mathbf{P}(B, C) \times \mathbf{P}(Y, Z)} \&\& {\mathbf{P}(A, C) \times \mathbf{P}(X, Z)} \\
          \\
          {\mathbf{P}(A \boxtimes X, B \boxtimes Z) \times \mathbf{P}(B \boxtimes Y, C \boxtimes Z)} \&\& {\mathbf{P}(A \boxtimes X, C \boxtimes Z)}
          \arrow["\comp", from=1-1, to=1-3]
          \arrow["{\compPara{\boxtimes}_{A, C, X, Z}}", from=1-3, to=3-3]
          \arrow["{\compPara{\boxtimes}_{A, B, X, Z} \times \compPara{\boxtimes}_{B, C, Y, Z}}"', from=1-1, to=3-1]
          \arrow["\comp"', from=3-1, to=3-3]
          \arrow["{\mu^{\compPara{\boxtimes}}}"{description}, Rightarrow, from=3-1, to=1-3]
        \end{tikzcddiag}
\end{equation*}

      This natural isomorphism (where here $\mathbf{P}$ stands for $\Para_{\actfw}(\cC)$) is a weakened version of the interchange law (\cref{eq:interchange})).
      Componentwise it describes the relation between different orders of composition of parametric maps
      \begin{align*}
        (P, f) &: \Para_{\actfw}(A, B) \quad \text{and} \quad  (R, h) : \Para_{\actfw}(B, C)\\
        (Q, g) &: \Para_{\actfw}(X, Y) \quad \text{and} \quad  (S, i) : \Para_{\actfw}(Y, Z)
      \end{align*}
      We can either a) compose the maps in parallel, and then the results sequentially, or b) compose the maps sequentially, and then the results in parallel.
      These two options yield:
      \[
        (f \compPara{\boxtimes} g)  \compPara{\comp} (h \compPara{\boxtimes} i) \quad \text{and} \quad (f  \compPara{\comp} h) \compPara{\boxtimes} (g  \compPara{\comp} i)
      \]
      whose parameter spaces are $(P \otimes Q) \otimes (R \otimes S)$ and $(P \otimes R) \otimes (Q \otimes S)$, respectively.
      The invertible 2-cell $\mu^{\compPara{\boxtimes}}$ ensures these are isomorphic, and is witnessed by the reparamaterisation of type $(P \otimes R) \otimes (Q \otimes S) \to (P \otimes Q) \otimes (R \otimes S)$ given by the interchanger $\interchanger_{P, R, Q, S}$ (\cref{def:interchanger}).
    \item A natural isomorphism $\epsilon^{\compPara{\boxtimes}}$ which relates the product of two identity maps $(I, \eta_A)$ on $A$ and $(I, \eta_B)$ on $B$ with the identity $(I, \eta_{A \boxtimes B})$ on the product $A \boxtimes B$.
      The former unpacks to a parametric map
      \[
        \eta_A \compPara{\boxtimes} \eta_B \coloneqq \boxtimes{(A \boxtimes B) \actfw (I \otimes I) \xrightarrow{\interchangeract_{A, B, I, I}} (A \actfw I) \boxtimes (B \actfw I) \xrightarrow{f \boxtimes g} A \boxtimes B}
      \]
      It is routine to show that this is isomorphic to $(I, \eta_{A \boxtimes B})$ witnessed by the reparameterisation of either the left or the right unitor of $I$.
    \end{itemize}
  \item Identity pseudofunctor $\CatTerm \xrightarrow{\compPara{J}} \Para_{\actfw}(\cC)$ defined by:
    \begin{itemize}
    \item A choice of $J : \Para(\cC)$, its identity object;
    \item A morphism $(I, \eta_J) : \Para_{\actfw}(\cC)(J, J)$ where $\eta_J : J \act I \to I$;
    \item An invertible 2-cell $\eta_J  \compPara{\comp} \eta_J \Rightarrow \eta_J$ in $\Para_{\actfw}(\cC)(J, J)$.
      Its domain can be unpacked to
      \[
        J \actfw (I \otimes I) \xrightarrow{\mu_{J, I, I}} (J \actfw I) \actfw I \xrightarrow{\eta_J} J \actfw I \xrightarrow{\eta_J} J.
      \]
      and analogously to above it can be reparameterised by the isomorphism of type $I \otimes I \to I$ given by either the left or the right unitor of $I$, yielding $\eta_J$.
    \item Invertible 2-cell $\id_J \Rightarrow (I, \eta_J)$.
      As these are equal on the nose, this 2-cell is identity.\footnote{This is why in the definition of a monoidal actegory the opunitor of $\actfw$ was redundant (\cref{def:monoidal_actegory}).}
    \end{itemize}
  \item The associator and the left and right unitor adjoint equivalences, all defined by the images of the associator, unitor and pentagonator isomorphisms of the monoidal category $\cC$ under the embedding of $\cC$ into $\Para_{\actfw}(\cC)$ (\cref{lemma:c_embeds_into_para}).
  \item The pentagonator and 2-unitor invertible 2-cells, all defined by iterated application of the isomorphism $\lambda_I : I \otimes I \to I$.
  They establish a coherence coherence of the associators and the unitors which in our case all have trivial parameters.
\end{itemize}

Lastly, it remains to prove this data satisfies the appropriate coherence conditions \cite[11.2.14, 11.2.16, and 11.2.17]{johnson_2-dimensional_2021}.
We note that all of the coherence conditions establish an equality between pasting diagrams involving only associators/unitors, pentagonators/2-unitors, and the unitor $\epsilon^{\compPara{\boxtimes}}$.
As in our case all of them are identity reparameterisations, it is easy to see that all of the required diagrams are equal.
This concludes the proof that $\Para_{\act}(\cC)$ is a monoidal bicategory.
\end{proof}

\ParaMonoidalTwoCat*
\begin{proof}
  To show $\Para_{\act}(\cC)$ is a monoidal 2-category we additionally need $\compPara{\boxtimes}$ and $\compPara{J}$ defining the monoidal bicategory structure to be strict 2-functors.
  It is easy to see that strictness of $\cM$ implies strictness of $\compPara{J}$, as its natural isomorphisms are defined via the unitors of $\cM$.
  When it comes to $\compPara{\boxtimes}$, only one of the natural isomorphisms becomes identity when $\cM$ is strict: $\epsilon^{\compPara{\boxtimes}}$.
  The other natural isomorphism $\mu^{\compPara{\boxtimes}}$ is defined using the braiding of $\cM$ which is not identity when $\cM$ is strict --- only when $\cM$ is \emph{commutative monoidal}.
  It is routine to check that making $\cM$ commutative monoidal makes $\mu^{\compPara{\boxtimes}}$ identity, $\compPara{\boxtimes}$ a strict 2-functor, and $\Para_{\act}(\cC)$ a monoidal 2-category.\footnote{This was first pointed out to me by Dan Shiebler.}
  It can also be seen that if $\Para_{\act}(\cC)$ is a monoidal 2-category, then $\compPara{\boxtimes}$ must be a strict 2-functor, $\mu^{\compPara{\boxtimes}}$ identity, meaning the braiding of $\cM$ is strict, i.e.\ $\cM$ is commutative monoidal.
\end{proof}

\ParaMonCat*
\begin{proof}
  Following the reasoning in the proof of \cref{prop:para_monoidal_twocat} --- we see that strictification of the monoidal structure boils down to ensuring $\compPara{\boxtimes}$ is a strict 2-functor, which consequently boils down to the requirement that the interchange law holds strictly, i.e.\ such that either of the ways of composing the parametric maps yield the same result.
  Since we are dealing with equivalence classes, ``same'' means up to a corresponding reparameterisation.
  This is something that holds for $\conncomp, \Core$ and $\Epi$ as they can all use the braiding $\beta_{Q, R}$ of $\cM$ to define their quotient (which is an isomorphism, and thus an epimorphism).
  On the other hand, this does not hold for $\Ob$: it is easy to see that the objects $(P \otimes Q) \otimes (R \otimes S)$ and $(P \otimes R) \otimes (Q \otimes S)$ fall into different equivalence classes.
\end{proof}

\ParaFunctorial*
\begin{proof}
As this functor is only \emph{pseudo}, we have to define the respective composition and identities 2-cells, and show that they are invertible.
For composition, starting with two composable 1-cells in $\Para_{\act}(\cC)$
  $$
  A \xrightarrow{\quad (P, f)\quad} B \xrightarrow{\quad (Q, g)\quad} C
  $$
  i.e.\ with
  \begin{equation*}
    \begin{aligned}
      &P : \cM \\
      &A \act P \xrightarrow{f} B
    \end{aligned}
    \qquad\text{and}\qquad
    \begin{aligned}
      &Q : \cM \\
      &B \act Q \xrightarrow{g} C
    \end{aligned}  
  \end{equation*}
  there are two ways to map them to $\Para_{\actc}(\cD)$ using $\Para((R, R^\sharp, \ell))$.
  We can either map them individually and then compose them, or first compose them, and them map the result.
  In the first scenario, mapping them individually yields
  \[
    R^\sharp(A) \actc R(P) \xrightarrow{R^\sharp(f)} R^\sharp(B) \quad \text{and} \quad R^\sharp(B) \actc R(Q) \xrightarrow{R_{\ell}^\sharp(g)} R^\sharp(C).
  \]
  When composed, these two maps yield a morphism in $\Para_{\boxtimes}(\cD)$
  \[
    R^\sharp(A) \boxtimes (R(P) \otimes R(Q)) \xrightarrow{\mu^{\boxtimes}_{R^\sharp(A), R(P), R(Q)}} (R^\sharp(A) \boxtimes R(P)) \boxtimes R(Q) \xrightarrow{R_{\ell}^\sharp(f) \boxtimes R(Q)} R^\sharp(B) \boxtimes R(Q) \xrightarrow{R_{\ell}^\sharp(g)} R^\sharp(C)
  \]
  whose parameter space is $R(P) \otimes R(Q)$.\footnote{We're overloading notation by using $\otimes$ for the monoidal structure of $\cM$ and $\cN$.}
  In the second scenario, first composing them yields
  \[
    A \act (P \otimes Q) \xrightarrow{\mu^{\act}_{A, P, Q}} (A \act P) \act Q \xrightarrow{f \act Q} B \act Q \xrightarrow{g} C
  \]
  and the image of this composite under the mapping yields the following morphism in $\Para_{\boxtimes}(\cD)$:
  \[
    R^\sharp(A) \boxtimes (R(P \otimes Q)) \xrightarrow{R_{\ell}^\sharp(\mu^{\act}_{A, P, Q} \comp (f \act Q) \comp g)} R^\sharp(C)
  \]
  As these two morphisms have different parameter objects they clearly aren't strictly equal.
  Though there is an invertible 2-cell connecting them --- $R(P \otimes Q) \to R(P) \otimes R(Q)$ --- the one given by the oplaxator $\mu^{-1}_{P, Q}$.
  Similar arguments can be given for the unitor of this lax functor which is given by the opunitor of the same monoidal functor $R$.\footnote{Note that for a lax actegory the induced functor would be a lax functor instead of a pseudofunctor.}
  Showing composition and identities are preserved up to a coherent isomorphism is routine.
\end{proof}

\begin{proposition}
  Let $\cC$ be a cartesian category, treated as a locally discrete 2-category.
  Then we can define an identity-on-objects oplax functor $(S, \mu, \epsilon)$
\[\begin{tikzcddiag}
	{\cC } && {\Copara(\cC)} \\
	A && A \\
	\\
	B && B
	\arrow["S", from=1-1, to=1-3]
	\arrow["f"', from=2-1, to=4-1]
	\arrow["{\graph(f)}", from=2-3, to=4-3]
	\arrow[maps to, from=2-1, to=2-3]
	\arrow[maps to, from=4-1, to=4-3]
\end{tikzcddiag}\]

As it is only oplax, its identities and composition are preserved only up to a coherent 2-cell.
  \begin{itemize}
  \item \textbf{Identity.} For every $A : \cC$ we have the opunitor: a natural transformation $\epsilon_A : S(\id^{\cC}_A) \Rightarrow \id^{\Copara(\cC)}_A$.
    It unpacks to a reparameterisaton of type $(A, \graph(f)) \to (1, \lambda_A)$, and is given by the terminal map $\terminal_A : A \to 1$;
  \item \textbf{Composition.} For every $f : A \to B$ and $g : B \to C$ in $\cC$ the oplaxator is a natural transformation $\mu_{f, g} : S(f \comp g) \Rightarrow S(f) \comp S(g)$ which unpacks to a reparameterisation of type $(A, \graph(f \comp g)) \Rightarrow (A \times B, \graph(f) \compCopara{\comp} \graph(g))$.
    This reparameteristaion is given by the map $\graph(f) : A \to A \times B$.\footnote{There is also a reparameteristaion $\pi_A : A \times B \to A$ going the other way, but it is not the inverse of $\graph(f)$. It is only a one-sided inverse, i.e.\ $\graph(f)$ is a section of $\pi_A$ but not vice versa.}
  \end{itemize}
  It is routine to check that the oplaxator and opunitor satisfy the constraints of an oplax functor.
\end{proposition}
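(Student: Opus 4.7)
The plan is to verify three things in order: (i) the action of $S$ on morphisms lands in $\Copara(\cC)$ and the proposed opunitor and oplaxator are legitimate reparameterisations (that is, the triangles of \cref{eq:copara_triangle_reparam} commute in $\cC$), (ii) the opunitor and oplaxator are natural, and (iii) they satisfy the three coherence axioms of an oplax functor (one associativity pentagon and two unit triangles). Since $\cC$ is locally discrete, naturality in step (ii) is automatic: there are no non-identity 2-cells in $\cC$ to respect. Item (i) reduces to elementary calculations with the canonical comonoid $(A, \Delta_A, \terminal_A)$. For the opunitor, the required equation $(\terminal_A \times A) \comp \graph(\id_A) = \lambda_A^{-1}$ is exactly the counit law, since $\graph(\id_A) = \Delta_A$. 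For the oplaxator we must check that $(\graph(f) \times C) \comp \graph(f \comp g)$ equals $\graph(f) \compCopara{\comp} \graph(g)$; unfolding $\compCopara{\comp}$ using the actegory multiplicator (which is the cartesian associator), both sides reduce to the map $\langle\langle \id_A, f\rangle, f \comp g\rangle : A \to (A \times B) \times C$, so the reparameterisation condition follows.

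For (iii), I would address the associativity pentagon first. Given composable $f : A \to B$, $g : B \to C$, $h : C \to D$, both ways of applying the oplaxators produce a reparameterisation $A \to (A \times B) \times C$ lifting $S(f \comp g \comp h)$ to $S(f) \compCopara{\comp} S(g) \compCopara{\comp} S(h)$. Unpacking the horizontal composition of 2-cells in $\Copara(\cC)$ using the interchanger and the actegory multiplicator, one route computes $(\graph(f) \times C) \comp \graph(f \comp g)$, while the other computes the composite $\alpha^{-1} \comp (A \times \graph(g)) \comp \graph(f)$; both evaluate to the single map $\langle\langle\id_A, f\rangle, f \comp g\rangle$, using only associativity and naturality of $\Delta$ in $\cC$. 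The two unit triangles reduce similarly: horizontally composing $\mu_{f, \id_B}$ (respectively $\mu_{\id_A, f}$) with $\epsilon_B$ (respectively $\epsilon_A$) produces a reparameterisation $A \to A$ whose underlying map is $(A \times \terminal_B) \comp \graph(f) = \id_A$ (respectively $(\terminal_A \times B) \comp \graph(f) = f$ after accounting for unitors), which again comes down to the counit law of the comonoid.

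The main obstacle is not any single calculation but the bookkeeping: one must carefully track how the actegory multiplicator $\mu$, the horizontal composition of 2-cells in the bicategory $\Copara(\cC)$, and the coparameter objects combine when stacking reparameterisations. Once one is careful to express every composite in the normal form $\langle\ldots\rangle : A \to (A \times B) \times C$ dictated by the comonoid structure of $\cC$, the coherence conditions become straightforward consequences of the comonoid laws, and the proof is indeed, as claimed, routine.
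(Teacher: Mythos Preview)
Your proposal is correct and in fact carries out in detail what the paper leaves implicit: the paper states the data of $S$, $\epsilon$, and $\mu$ within the proposition itself and then simply asserts that checking the oplax functor coherence is routine, without writing down the verifications you outline. Your unpacking of the reparameterisation conditions and coherence axioms as comonoid identities is exactly the intended ``routine'' check, so there is nothing to compare---you have supplied the omitted details rather than taken a different route.
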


We note that there is also a \emph{laxator} for $S$ which is inverse to $\mu$, but there's no unitor.

\begin{proposition}
  Let $\cC$ be a cartesian category, treated as a locally discrete 2-category.
  Then we can define an identity-on-objects strict functor $P : \Copara(\cC) \to \cC$ mapping every morphism $(M, f : A \to M \times B)$ to $f \comp \pi_B$, and every 2-cell to the only possible choice: the identity 2-cells in $\cC$.
  It's routine to check that this is indeed a strict 2-functor.
\end{proposition}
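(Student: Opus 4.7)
The plan is to verify the four standard components of a strict 2-functor: action on objects, 1-morphisms, 2-morphisms, and then strict preservation of identities, composition, and the coherence constraints. Since $\cC$ is locally discrete, the 2-dimensional structure essentially collapses, so the crux is showing $P$ is well-defined on reparameterisation classes and that it preserves composition \emph{on the nose}.

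First I would lay out the definition: on objects, $P$ is identity; on a 1-cell $(M, f : A \to M \times B)$, set $P(f) = f \comp \pi_B : A \to B$; and on a 2-cell $r : (M, f) \Rightarrow (N, g)$ I would send $r$ to $\id_{P(f)}$, which forces me to check $P(f) = P(g)$. This is the first key step: by the reparameterisation condition \eqref{eq:copara_triangle_reparam}, $g = f \comp (r \times B)$, so
\[
P(g) = f \comp (r \times B) \comp \pi_B = f \comp \pi_B = P(f),
\]
where I use the universal property of the product, namely $(r \times B) \comp \pi_B = \pi_B$. This makes the assignment well-defined.

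Next I would verify strict preservation of identities and composition. The identity 1-cell on $A$ in $\Copara(\cC)$ is $(1, \lambda_A^{-1})$ (taking $I = 1$ in the cartesian self-action), and $\lambda_A^{-1} \comp \pi_A = \id_A$, giving $P(\id_A) = \id_A$. For composition of $(M, f) : A \to B$ and $(N, g) : B \to C$, I would unpack
\[
f \compCopara{\comp} g = \bigl(A \xrightarrow{f} M \times B \xrightarrow{M \times g} M \times (N \times C) \xrightarrow{\mu^{-1}_{M, N, C}} (M \times N) \times C\bigr),
\]
and observe that $\mu^{-1}_{M, N, C} \comp \pi_C : M \times (N \times C) \to C$ is simply $\pi_{N \times C} \comp \pi_C$. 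Hence
\[
P(f \compCopara{\comp} g) = f \comp (M \times g) \comp \pi_{N \times C} \comp \pi_C = f \comp \pi_B \comp g \comp \pi_C = P(f) \comp P(g),
\]
using naturality of the product projection (i.e.\ $(M \times g) \comp \pi_{N \times C} = \pi_B \comp g$). This is the main computational step; the only obstacle is bookkeeping with the associator $\mu$ and projections, but no nontrivial manipulation is required because projections absorb all the product structure cleanly.

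Finally, since $\cC$ is locally discrete, horizontal and vertical composition of 2-cells are both trivial: every 2-cell maps to an identity in $\cC$, so the 2-functoriality axioms reduce to the identities $\id \comp \id = \id$ and $\id \boxtimes \id = \id$. Likewise, the unitor and associator of $\Copara(\cC)$ map to identities in $\cC$ (after composing with projections, all associator/unitor data trivialises), so $P$ strictly preserves the bicategorical structure. This confirms $P$ is a strict 2-functor, and the main subtlety throughout is simply ensuring that the universal property of projections collapses the extra data carried by coparameters.
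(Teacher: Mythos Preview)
Your proof is correct and follows precisely the direct verification that the paper has in mind; the paper itself leaves this entirely unargued, stating only ``It's routine to check that this is indeed a strict 2-functor.'' Your explicit check that $P(f)=P(g)$ whenever a reparameterisation exists, together with the projection calculations for identities and composition, is exactly what that routine check amounts to.
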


\chapter[Category of elements and the Grothendieck construction]{Category of elements and the Grothendieck construction}
\chaptermark{Category of elements and the Groth. construction}

\section*{Category of elements}

An essential construction in category theory and in the formulation of weighted $\CoparaW{W}{}$ is the \emph{category of elements}.

\begin{definition}[Category of elements]
  \label{def:cat_of_elem}
  Let $F : \cC \to \Set$ be a functor.
  The \newdef{category of elements} of $F$ --- which we denote by $\El(F)$ --- is a category with the following data:
  \begin{itemize}
  \item \textbf{Objects.} An object is a pair $\diset{C}{c'}$, where $C$ is an object of $\cC$ and $c'$ is an element of $F(C)$;
  \item \textbf{Morphisms.} A morphism $\diset{C}{c'} \to \diset{D}{d'}$ consists of a morphism $f : C \to D$ in $\cC$ such that $F(f)(c') = d'$. Alternatively, 
    \[
      \El(F)\OpticHom{C}{c'}{D}{d'} \coloneqq \sum_{f : \cC(C, D)}F(D)(F(f)(c'), d')
    \]
    Note that since $F(D)$ is a set, there is only one possible inhabitant of $F(D)(F(f)(c'), d')$.
    This allows us to present the set of all morphisms in $\El(F)$ in a potentially surprising form, as the set $\cC(C, D) \times F(D)$ (see \cite[p. 4]{moser_internal_2023});
  \item \textbf{Identity.} Identity morphisms are identities in $\cC$;
  \item \textbf{Composition.} Composition is given by composition of morphisms in $\cC$.
  \end{itemize}
  We often call $\cC$ \emph{the base} of the category of elements, and sets $F(c)$ its fibers.
  Each category of elements has an associated projection $\pi_F : \El(F) \to \cC$ which forgets the data associated to $F$.
\end{definition}

\begin{example}[Twisted arrow category]
  \label{ex:twisted_arrow}
  The category of elements of the hom-functor of a category called the \emph{twisted arrow category}, and given some category $\cC$ it is denoted by $\tw(\cC)$.
  \[
    \tw(\cC) \coloneqq \El(\Hom_\cC)
  \]
  An object in $\tw(\cC)$ is a map $f : A \to B$, and a morphism $f \to g$ in this category is a way of factorizing $g$ through $f$.
  Following \cref{def:cat_of_elem}, this category comes equipped with the projection $\pi_{\cC} : \tw(\cC) \to \cC^{\op} \times \cC$.
\end{example}

Given a category $\cC$ and a functor $F : \cC \to \Set$, we have defined the category $\El(F)$.
As $F$ is merely an object of $\internalHom{\Cat}{\cC}{\Set}$ we might wonder whether $\El$ can naturally be defined with respect to its morphisms.
The answer yes.

\begin{definition}
  Let $\cC$ be a category, $F, G : \cC \to \Set$ two functors, and $\alpha : F \Rightarrow G$ a natural transformation between them.
  Then we can define a functor
  \[
    \El(\alpha) : \El(F) \to \El(G)
  \]
  as follows.
  \begin{itemize}
  \item \textbf{Objects.} It maps every $\diset{X}{x'} : \El(F)$ to $\diset{X}{\alpha_X(x')}$ i.e.\ it keeps base points the same;
  \item \textbf{Morphisms.} It maps $\diset{f}{f'} : \diset{X}{x'} \to \diset{Y}{y'}$ ($f'$ is here explicit notation for the proof $f' : F(f)(x') = y'$) to $\diset{f}{\alpha_f \comp a_Y(f')} : \diset{X}{\alpha_X(x')} \to \diset{Y}{\alpha_y(y')}$, where the second component is the composite equality
    \[
      G(f)(\alpha_X(x')) \stackrel{\alpha_f}{=} \alpha_Y(F(f)(x')) \stackrel{\alpha_Y(f')}{=} \alpha_Y(y')
    \]
    and $\alpha_f : \alpha_X \comp G(f) = F(f) \comp \alpha_Y$ is the naturality condition of $\alpha$. 
  \end{itemize}
  Note that the action of this functor is identity on the base category, both on object and morphism level.
\end{definition}

What is the intuitive explanation of this definition?
Recall: to each component $X$ in $\cC$ it assigns a morphism $\alpha_X : F(X) \to G(X)$.
We can think of the results of the action of the functor $\El$ on an object ---  a category of elements --- as a pair consisting of a choice of a component of this natural transformation (say, $X$) and an input $x : F(X)$ to the morphism $\alpha_X$.
Then the action of $\El$ on morphisms simply keeps the component $X$ the same, while applying $\alpha_X$ to the input $x$.

\subsection*{Monoidal structure of the category of elements}

Categories of elements are the acting categories of actegories in the definition of weighted $\Copara$.
This makes us interested in conditions under which they are monoidal, and braided monoidal, as well as conditions under which the associated projection preserves the (braided) monoidal structure.

\begin{proposition}[Monoidal structure of the category of elements]
  \label{prop:monoidal_structure_category_of_elements}
  Let $\cC$ have monoidal structure $(\otimes, I)$ and let $F : \cC \to \Set$ have lax monoidal structure $(\phi, \epsilon)$ where $\phi_{X, Y} : F(X) \times F(Y) \to F(X \otimes Y)$ and $\epsilon : 1 \to F(I)$.
  Then $\El(F)$ is a monoidal category.
  Its data is as follows:
  \begin{itemize}
  \item The monoidal product of $\diset{C}{c'}$ and $\diset{D}{d'}$ is $\diset{C \otimes D}{\phi_{c, d}(c', d')}$;
  \item The monoidal product of morphisms is given by that of $\cC$;
  \item The monoidal unit is $\diset{I}{\epsilon(\bullet)}$, where $\bullet : 1$ is the unique element of monoidal unit $1$ in in $\Set$;
  \item Associators and unitors are those of $\cC$
  \end{itemize}
  Furthermore, the functor $\pi_F : \El(F) \to \cC$ is strict monoidal.

  If $\cC$ is a braided monoidal category and $(F, \phi, \epsilon)$ is a braided monoidal functor, then $\El(F)$ becomes a braided monoidal category, and the projection $\pi_F : \El(F) \to \cC$ becomes a strict braided monoidal functor.
\end{proposition}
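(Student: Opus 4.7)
The plan is to build the monoidal structure on $\El(F)$ by transporting the structure from $\cC$ along the projection $\pi_F$, using the laxator $\phi$ and unitor $\epsilon$ of $F$ precisely where we need to specify second components. First I would verify that the proposed tensor on objects extends to a functor $\El(F) \times \El(F) \to \El(F)$: given morphisms $f:\diset{C}{c'} \to \diset{C''}{c''}$ and $g:\diset{D}{d'} \to \diset{D''}{d''}$, the candidate $f \otimes g$ is a morphism of the required type in $\El(F)$ iff $F(f \otimes g)(\phi_{C,D}(c',d')) = \phi_{C'',D''}(c'',d'')$, which is exactly the naturality square of $\phi$ applied to $(f,g)$ together with the defining equations $F(f)(c') = c''$ and $F(g)(d') = d''$. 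Functoriality in $(f,g)$ follows from that of $\otimes$ in $\cC$ since morphism data in $\El(F)$ is inherited from $\cC$.

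Next I would lift the structure isomorphisms. For the associator, the candidate morphism is $\alpha_{C,D,E}:(C\otimes D)\otimes E \to C\otimes(D\otimes E)$ from $\cC$, now regarded as a morphism in $\El(F)$ between $\diset{(C\otimes D)\otimes E}{\phi_{C\otimes D, E}(\phi_{C,D}(c',d'),e')}$ and $\diset{C\otimes(D\otimes E)}{\phi_{C, D\otimes E}(c',\phi_{D,E}(d',e'))}$. Checking this is a morphism of $\El(F)$ is precisely the associativity coherence axiom of the lax monoidal functor $(F,\phi,\epsilon)$; the left and right unitors lift in the same way using the two unitality axioms. Since all morphism data in $\El(F)$ coincides with that in $\cC$, naturality of these families, the pentagon identity, and the triangle identity are automatic from the corresponding facts in $\cC$. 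Strict monoidality of $\pi_F$ is then immediate from the definitions, because $\pi_F$ forgets the second component, sending $\phi_{C,D}(c',d')$ and $\epsilon(\bullet)$ to nothing while preserving $\otimes$, $I$, $\alpha$, $\lambda$, $\rho$ on the nose.

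For the braided case I would take the braiding in $\El(F)$ to be $\beta_{C,D}$ from $\cC$, viewed as a morphism $\diset{C\otimes D}{\phi_{C,D}(c',d')} \to \diset{D\otimes C}{\phi_{D,C}(d',c')}$. Well-definedness as a morphism of $\El(F)$ is the statement $F(\beta_{C,D})(\phi_{C,D}(c',d')) = \phi_{D,C}(d',c')$, which is exactly the braided lax monoidal coherence (\cref{eq:braided_monoidal_functor}) evaluated at $(c',d')$. The two hexagon identities in $\El(F)$ then reduce to the hexagons in $\cC$ by the same faithful-on-morphisms argument as before, and $\pi_F$ is strict braided monoidal for free.

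The main obstacle, and genuinely the only non-bookkeeping step, is the verification that $\alpha$, $\lambda$, $\rho$, and (in the braided case) $\beta$ each lift to morphisms in $\El(F)$: each such lifting is an equation of elements of a set in the image of $F$, and each is precisely one of the coherence axioms of a (braided) lax monoidal functor. Once these four equalities are in hand, every remaining coherence diagram commutes in $\El(F)$ simply because its underlying diagram commutes in $\cC$ and morphisms in $\El(F)$ are determined by their images under $\pi_F$.
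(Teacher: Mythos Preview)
Your proof is correct and complete. The paper itself does not supply a proof of this proposition; it simply states the data and moves on, so there is nothing to compare against beyond noting that your argument is exactly the standard verification one would expect: naturality of $\phi$ gives functoriality of the tensor, the lax monoidal coherence axioms of $(F,\phi,\epsilon)$ are precisely what make $\alpha$, $\lambda$, $\rho$ (and $\beta$ in the braided case) lift to $\El(F)$, and faithfulness of $\pi_F$ on morphisms reduces all remaining coherence diagrams to those in $\cC$.
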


Previously we gave the example of the category of elements of the hom-functor, i.e.\ the twisted arrow category (\cref{ex:twisted_arrow}).
In order for this category to be a monoidal category, the hom-functor of a category needs to be lax monoidal.
And that indeed happens if the base category is monoidal.

\begin{proposition}
  \label{prop:monoidal_cat_hom_functor}
  Let $(\cM, \otimes, I)$ be a monoidal category.
  Then $\Hom_{\cM} : (\diset{\cM^{\op}}{\cM}, \diset{\otimes}{\otimes}, \diset{I}{I}) \to (\Set,
  \times, 1)$ can be equipped with the lax monoidal structure $(\phi, \epsilon)$ where
  \begin{itemize}
  \item The laxator $\phi_{(M, M'), (N, N')} : \cM(M, M') \times \cM(N, N') \to \cM(M \otimes N, M' \otimes N')$ tensors the maps together, and is formally given by the action of $\otimes$ on morphisms.
  \item The unit $\epsilon : 1 \to \cM(I, I)$ picks out the identity morphism on $I$.
  \end{itemize}
  Additionally, if $\cM$ is braided, then $(\Hom_{\cM}, \phi, \epsilon)$ is too. 
\end{proposition}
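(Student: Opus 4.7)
The plan is to verify the data and axioms directly, leveraging the functoriality and coherence of $\otimes$ on $\cM$. First I would check that $\phi$ is a well-defined natural transformation of the right type. Given the monoidal structure on $\cM^{\op} \times \cM$ whose product sends a pair of pairs $((M,M'),(N,N'))$ to $(M \otimes N, M' \otimes N')$, the laxator component is exactly the action of $\otimes : \cM \times \cM \to \cM$ on morphisms, sending $(f : M \to M', g : N \to N')$ to $f \otimes g : M \otimes N \to M' \otimes N'$. Naturality in each of the four variables then follows from bifunctoriality of $\otimes$. Analogously, $\epsilon$ picks out $\id_I \in \cM(I,I)$ and its naturality is trivial since $\CatTerm$ has only the identity morphism.

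Next I would verify the three coherence axioms of \cref{def:lax_monoidal_functor}. Unpacked, each of these becomes an equation between two morphisms in $\cM$ obtained from a common composable sequence of morphisms $(f,g,h)$ (for associativity) or a single morphism $f$ (for left/right unitality). Associativity reduces to showing that $(f \otimes g) \otimes h$ and $f \otimes (g \otimes h)$ are intertwined by the associator $\alpha$ of $\cM$, which is naturality of $\alpha$ applied to the triple $(f,g,h)$. The two unitality axioms reduce similarly to naturality of $\lambda$ and $\rho$ applied to $f$, using that $\epsilon(\bullet) = \id_I$ and that $\id_I \otimes f$ (resp.\ $f \otimes \id_I$) equals $\lambda_{M'} \comp f \comp \lambda_M^{-1}$ (resp.\ the analogous equation with $\rho$) by monoidal coherence.

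For the braided case, I would verify diagram \cref{eq:braided_monoidal_functor}. Unpacked at a pair $(f : M \to M', g : N \to N')$ in $\cM^{\op} \times \cM$, it demands the equation $\beta_{M', N'} \comp (f \otimes g) = (g \otimes f) \comp \beta_{M, N}$ in $\cM$. This is precisely the naturality square of the braiding $\beta : \otimes \Rightarrow \otimes^{\rev}$ applied to the morphism $(f,g)$ in $\cM \times \cM$. The braided structure on the domain $\cM^{\op} \times \cM$ itself is the product braiding, and the identification of its action with the ambient braiding on $\cM$ via $\phi$ is by construction.

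The main obstacle, such as it is, is purely bookkeeping: tracking the two halves of $\cM^{\op} \times \cM$ (where the ``source/target'' roles of objects reverse in the $\op$ factor), so that the typed statement of each axiom lives in the right hom-set of $\cM$ after applying $\Hom_{\cM}$. Once the types are set up correctly, every axiom is a naturality square for one of the coherence isomorphisms of $\cM$, so no new calculations are needed beyond those already part of the monoidal (resp.\ braided monoidal) category structure of $\cM$.
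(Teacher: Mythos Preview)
The paper does not actually supply a proof of this proposition; it states the data $(\phi,\epsilon)$ and moves directly to the corollary that follows. Your verification is correct and is exactly the standard direct argument: bifunctoriality of $\otimes$ gives naturality of $\phi$, and each lax monoidal coherence axiom unwinds to the naturality square for the corresponding structural isomorphism of $\cM$ (the associator for associativity, the unitors for left/right unitality, and the braiding for the braided axiom). There is nothing further to compare against.
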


\begin{corollary}
  \label{corr:hom_functor_projection_braided}
  If $\cC$ is a monoidal category, then so is $\tw(\cC)$, and the functor $\pi_\Hom : \tw(\cC) \to \cC^{\op} \times \cC$ is strict monoidal.
  Additionally, if $\cC$ is braided monoidal, then so is $\tw(\cC)$, and the projection is too.
\end{corollary}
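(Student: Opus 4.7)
The plan is to invoke the two immediately preceding propositions in sequence, since this corollary is essentially the composition of their conclusions specialised along the identification in \cref{ex:twisted_arrow}. First I would note that by definition $\tw(\cC) = \El(\Hom_{\cC})$, so the statement is a statement about the category of elements of the hom-functor. The base category in this application of \cref{prop:monoidal_structure_category_of_elements} is not $\cC$ itself but rather $\cC^{\op} \times \cC$, equipped with the product of the reversed and original monoidal structures of $\cC$; I would briefly verify this is a monoidal category (it is, since the product of monoidal categories is monoidal, and $\cC^{\op}$ inherits monoidal structure from $\cC$).

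Next I would apply \cref{prop:monoidal_cat_hom_functor}, which provides the lax monoidal structure $(\phi, \epsilon)$ on $\Hom_{\cC} : \cC^{\op} \times \cC \to \Set$ whenever $\cC$ is monoidal. Plugging this lax monoidal functor into \cref{prop:monoidal_structure_category_of_elements} yields immediately that $\El(\Hom_{\cC}) = \tw(\cC)$ carries a monoidal structure, and that the projection $\pi_{\Hom} : \tw(\cC) \to \cC^{\op} \times \cC$ is strict monoidal. For the braided statement, I would invoke the braided extensions of the same two propositions: if $\cC$ is braided, then by \cref{prop:monoidal_cat_hom_functor} so is $(\Hom_{\cC}, \phi, \epsilon)$ as a braided monoidal functor, and then the braided clause of \cref{prop:monoidal_structure_category_of_elements} gives the braided monoidal structure on $\tw(\cC)$ together with strict braided monoidality of $\pi_{\Hom}$.

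There is no real obstacle here, since the substantive content is already contained in the two propositions; the only thing to keep track of is the bookkeeping of variances, namely that the domain of $\Hom_{\cC}$ is $\cC^{\op} \times \cC$ and that the monoidal product on objects of $\tw(\cC)$ unpacks to sending $(f : A \to A', g : B \to B')$ to $f \otimes g : A \otimes B \to A' \otimes B'$, with the laxator $\phi$ of \cref{prop:monoidal_cat_hom_functor} precisely implementing this via functoriality of $\otimes$. The unit of $\tw(\cC)$ is $\id_I$, picked out by $\epsilon$, and the associators and unitors are inherited from $\cC^{\op} \times \cC$, making the projection strict monoidal on the nose.
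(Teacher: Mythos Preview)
Your proposal is correct and is precisely the intended argument: the paper presents this as an immediate corollary of the two preceding propositions, with no separate proof, and your unpacking of how \cref{prop:monoidal_cat_hom_functor} feeds into \cref{prop:monoidal_structure_category_of_elements} via the identification $\tw(\cC) = \El(\Hom_{\cC})$ from \cref{ex:twisted_arrow} is exactly right.
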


\section*{Grothendieck construction}

A category of elements is defined for a functor $F : \cC \to \Set$.
But often, the codomain of $F$ is instead $\Cat$.
Here are some examples.

\begin{example}[{\cite[Sec. 3.2]{spivak_generalized_2022}}]
  Let $\cC$ be a category with products.
  Then we can form the functor
  \[
\CoKl(= \times -) : \cC^{\op} \to \Cat
    \]
  
  mapping an object $X$ to the coKleisli category $\CoKl(X \times -)$, and a morphism $f : X \to Y$ to the functor $\CoKl(f \times -) : \CoKl(Y \times -) \to \CoKl(X \times -)$ given by reparameterisation.
  Furthermore, because $\CoKl(X \times -)$ is always a cartesian monoidal category (\cref{def:cokl_is_cartesian_left_additive}) we can form a restriction of this functor to commutative monoids
  \[
\asc{\CoKl(= \times -)} : \cC^{\op} \to \Cat
    \]
  mapping an object $X$ to $\asc{\CoKl(X \times -)}$, and analogously for reparameterisation.
\end{example}

This example is a subfunctor of the following example of the slice functor --- which adds an extra layer of dependence on objects of the base category $\cC$.

\begin{example}[{\cite[Ex. 3.5]{spivak_generalized_2022}}]
  Let $\cC$ be a category with pullbacks.
  Then we can form the contravariant slice functor
  \[
\cC/- : \cC^{\op} \to \Cat
  \]
  mapping an object $X$ to the slice category $\cC / X$ and a morphism $f : X \to Y$ to the functor $f^* : \cC / Y \to \cC / X$ defined by pullback.
  Analogously to the example above, because $\cC/X$ is always a cartesian monoidal category, we can form a restriction of this functor to commutative monoids
  \[
\asc{\cC/-} : \cC^{\op} \to \Cat
  \]
  which maps an object $X$ to $\asc{\cC / X}$, and analogously for morphisms.
\end{example}

When the base $\cC$ is $\Set$, there is an functor isomorphic to the one above which gives us the indexed perspective on dependence.

\begin{example}[Families]
  \label{ex:fam_functor}
  For any category $\cC$ we can form the following functor
  \[\begin{tikzcddiag}
	  {\Set^{\op}} && \Cat \\
	  X && { \internalHom{\Cat}{\discr(X)}{\cC}} \\
	  \\
	  Y && { \internalHom{\Cat}{\discr(Y)}{\cC}}
	  \arrow["{\Fam(\cC)}", from=1-1, to=1-3]
	  \arrow[maps to, from=2-1, to=2-3]
	  \arrow["f"', from=2-1, to=4-1]
	  \arrow["{\internalHom{\Cat}{\discr(f)}{\cC}}"', from=4-3, to=2-3]
	  \arrow[maps to, from=4-1, to=4-3]
  \end{tikzcddiag}\]
  whose action on objects is given by a category whose objects are families of objects in $\cC$, and action on morphisms is given by precomposition with $\discr(f)$.
  Just like with examples above, it's possible to restrict this functor to commutative monoids.
\end{example}

In such cases, we can talk about a higher-dimensional generalisation of the category of elements: the Grothendieck construction.
It comes in two variants.

\begin{definition}[(Covariant) Grothendieck construction {(compare \cite[Def.\ 3.1]{spivak_generalized_2022} and \cite[Def.\ A.4]{sati_entanglement_2023})}]
  \label{def:grothendieck_construction}
  Let $\cC$ be a category.
  Let $(F, \mu, \epsilon) : \cC \to \Cat$ be an oplax functor.
  Then the Grothendieck construction of $F$ is a category we denote by $\Gr(F)$ (note that arrow goes clockwise), defined by the following data.
  \begin{itemize}
  \item \textbf{Objects.} The set of objects is $\sum_{X : \cC}F(X)$. We denote a particular object as $\diset{X}{X'}$ where $(X : \cC$ and $X' : F(X))$;
  \item \textbf{Morphisms.} For every two objects $\diset{X}{X'}$ and $\diset{Y}{Y'}$ the hom-set is
    \[
      \Gr(F)\OpticHom{X}{X'}{Y}{Y'} \coloneqq \sum_{f : \cC(X, Y)}F(Y)(F(f)(X'), Y')
    \]
  \item \textbf{Identity.} For any $\diset{X}{X'}$ the identity morphism is the pair $\diset{\id_X}{{\epsilon_X}_{X'}}$;
  \item \textbf{Composition.} The composition of morphisms
    $$
    \diset{X}{X'} \xrightarrow{\enskip \diset{f}{f'}\enskip} \diset{Y}{Y'} \xrightarrow{\enskip \diset{g}{g'}\enskip} \diset{Y}{Y'}
    $$
    i.e.\ of
    \begin{equation*}
      \begin{aligned}
        &f : X \to Y\\
        &f' : F(f)(X') \to Y'
      \end{aligned}
      \qquad\qquad\text{and}\qquad\qquad
      \begin{aligned}
        &g : Y \to Z\\
        &g' : F(g)(Y') \to Z'
      \end{aligned}  
    \end{equation*}
    is the pair $\diset{f \comp g}{{\mu_{f, g}}_{X'} \comp F(g)(f') \comp g'}$ where
    \begin{align*}
      &X \xrightarrow{f} Y \xrightarrow{g} Z&\\
      &F(f \comp g)(X') \xrightarrow{{\mu_{f, g}}_{X'}} (F(f) \comp F(g))(X') = F(g)(F(f)(X')) \xrightarrow{F(g)(f')} F(g)(Y') \xrightarrow{g'} Z'
    \end{align*}
  \end{itemize}
\end{definition}

\begin{definition}[(Contravariant) Grothendieck construction {(compare \cite[Def.\ 3.1]{spivak_generalized_2022} and \cite[Def.\ 10.1.2]{johnson_2-dimensional_2021})}]
  \label{def:grothendieck_construction}
  Let $\cC$ be a category.
  Let $(F, \mu, \epsilon) : \cC^{\op} \to \Cat$ be a lax functor.
  Then the contravariant Grothendieck construction of $F$ is a category we denote by $\GrC(F)$ (note that arrow goes counter-clockwise), defined by the following data.
  \begin{itemize}
  \item \textbf{Objects and identities.} Same as in $\Gr(\cC)$;
  \item \textbf{Morphisms.} For every two objects $\diset{X}{X'}$ and $\diset{Y}{Y'}$ the hom-set is
    \[
      \GrC(F)\OpticHom{X}{X'}{Y}{Y'} \coloneqq \sum_{f : \cC(X, Y)}F(X)(X', F(f)(Y'))
    \]
  \item \textbf{Composition.} The composition of morphisms
    $$
    \diset{X}{X'} \xrightarrow{\enskip \diset{f}{f'}\enskip} \diset{Y}{Y'} \xrightarrow{\enskip \diset{g}{g'}\enskip} \diset{Y}{Y'}
    $$
    i.e.\ of
    \begin{equation*}
      \begin{aligned}
        &f : X \to Y\\
        &f' : X' \to F(f)(Y')
      \end{aligned}
      \qquad\qquad\text{and}\qquad\qquad
      \begin{aligned}
        &g : Y \to Z\\
        &g : Y' \to F(g)(Z')
      \end{aligned}  
    \end{equation*}
    is the pair $\diset{f \comp g}{f' \comp F(f)(g') \comp {\mu_{f, g}}_{Z'}}$ where
    \begin{align*}
      &X \xrightarrow{f} Y \xrightarrow{g} Z&\\
      &X \xrightarrow{f'} F(f)(Y') \xrightarrow{F(f)(g')} F(f)(F(g)(Z')) = (F(f) \comp F(g))(Z') \xrightarrow{{\mu_{f, g}}_{Z'}} F(f \comp g)(Z')
    \end{align*}
  \end{itemize}
  It is tedious, but routine to check that this is indeed a category (\cite[Lemma 10.1.8]{johnson_2-dimensional_2021}).
\end{definition}

To establish a relationship between these two definitions, we first need to formalise $(-)^{\op}$ as a functor.

\begin{lemma}[Op functor]
  \label{lemma:op_functor}
  The $\op$ construction taking a category to its opposite forms a 2-functor
  $(-)^{\op} : \Cat^\co \to \Cat$.
  Notably, this functor is \emph{not} contravariant, i.e.\ it does not flip the direction of 1-cells.
  It is, however, contravariant on 2-cells, hence the superscript $^\co$.
\end{lemma}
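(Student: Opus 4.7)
The plan is to unpack what a 2-functor $(-)^{\op} : \Cat^{\co} \to \Cat$ must provide, exhibit the data at each level, and then verify functoriality. First I would fix notation: on objects the construction sends a category $\cC$ to $\cC^{\op}$, which has the same objects but with $\cC^{\op}(A,B) \coloneqq \cC(B,A)$, composition reversed, and identities preserved. This is well-known to be a category, so the action on 0-cells is immediate.

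Next I would define the action on 1-cells. Given a functor $F : \cC \to \cD$, I define $F^{\op} : \cC^{\op} \to \cD^{\op}$ by the same assignment of objects and the same assignment of morphisms (which is well-typed because $\cC^{\op}(A,B) = \cC(B,A)$ and $\cD^{\op}(F(A), F(B)) = \cD(F(B), F(A))$). Preservation of identities and composition for $F^{\op}$ follows directly from the corresponding equations for $F$, since reversing composition on both sides produces a matching pair of equations. Crucially, this action does \emph{not} flip the source and target of a 1-cell, explaining why the 2-functor is covariant on 1-cells.

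For the action on 2-cells, given $\alpha : F \Rightarrow G$ with components $\alpha_X : F(X) \to G(X)$ in $\cD$, I define $\alpha^{\op} : G^{\op} \Rightarrow F^{\op}$ by $(\alpha^{\op})_X \coloneqq \alpha_X$, now regarded as a morphism $G^{\op}(X) \to F^{\op}(X)$ in $\cD^{\op}$. The naturality square for $\alpha^{\op}$ is obtained by reading the naturality square for $\alpha$ backwards, which is exactly what reversing all arrows in $\cD$ accomplishes. This direction-reversal on 2-cells is precisely what the superscript $^{\co}$ on $\Cat^{\co}$ records, so that $(-)^{\op}$ becomes a covariant assignment between the given 2-categories.

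The remaining task is to verify the 2-functor axioms: preservation of identity 2-cells and of both vertical and horizontal composition of natural transformations, as well as preservation of identity and composition of 1-cells \emph{strictly}. Each of these reduces to noting that a pointwise definition, combined with the reversal of arrows in the codomain, converts the defining equation in $\Cat^{\co}$ into the corresponding equation in $\Cat$ (with vertical composition in $\Cat^{\co}$ being the opposite order of vertical composition in $\Cat$, matching the reversal on 2-cells). The only mildly subtle point, and the one I would be most careful about, is checking horizontal composition: one needs to confirm that Godement's interchange for $\alpha^{\op} \ast \beta^{\op}$ in $\Cat$ matches $(\beta \ast \alpha)^{\op}$ coming from $\Cat^{\co}$, which is a direct diagram chase using the definitions above. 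All the verifications are routine pointwise calculations, with no analytic or size-theoretic obstruction.
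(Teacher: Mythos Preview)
Your proposal is correct and follows the standard verification of this well-known fact. The paper does not actually supply a proof of this lemma; it is stated as a known result and used immediately afterwards. Your unpacking of the data at each level and the checks of functoriality are exactly what a full proof would contain, so there is nothing to compare against on the paper's side.
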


\begin{proposition}[{(compare \cite[Prop. 3.2]{spivak_generalized_2022})}]
  In the following we fix $\cC$ to be a category.\\
  Let $F : \cC^{\op} \to \Cat$ be an oplax functor.
  Then there is an isomorphism of categories
  \[
    \Gr(\cC^{\op} \xrightarrow{F} \Cat)^{\op} \quad \text{and} \quad \GrC(\cC^{\co\op} \xrightarrow{F^\co} \Cat^\co \xrightarrow{(-)^{\op}} \Cat)
  \]
  Dually, let $G : \cC^{\op} \to \Cat$ be a lax functor.
  Then there is an isomorphism of categories
  \[
    \Gr(\cC^{\co\op} \xrightarrow{G^{\co}} \Cat^{\co} \xrightarrow{(-)^{\op}} \Cat) \quad \text{and} \quad  \GrC(\cC^{\op} \xrightarrow{G} \Cat)
  \]
  Technically, the type of $F^\co$ and $G^\co$ is $\cC^{\co\op} \to \Cat^\co$, so we can remove the $^\co$ superscript from $\cC$.
  Compare with \cite[Prop. 3.2]{spivak_generalized_2022} which assumes these are strict 1-functors, and thus a) removes the $^\co$ superscript entirely, and b) constructs only the top isomorphism.
\end{proposition}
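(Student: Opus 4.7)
The plan is to exhibit an identity-on-objects bijective correspondence between the hom-sets, and then verify it respects identities and composition. Throughout, the core observation is that $\Gr(F)^{\op}$ builds composition out of the oplax structure $(\mu^F, \epsilon^F)$ of $F$, while $\GrC(H)$ with $H := F^{\co} \comp (-)^{\op}$ builds composition out of its \emph{lax} structure $(\mu^H, \epsilon^H)$, and the latter is obtained from the former by applying $(-)^{\op} : \Cat^{\co} \to \Cat$ from \cref{lemma:op_functor} pointwise to the structure 2-cells.

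First I would check well-typedness. Given $F : \cC^{\op} \to \Cat$ oplax, the $\co$-dual $F^{\co}$ has type $\cC^{\co\op} \to \Cat^{\co}$ and is \emph{lax} (since $\co$ reverses 2-cells), so composing with the 2-functor $(-)^{\op} : \Cat^{\co} \to \Cat$ yields a lax functor $H : \cC^{\op} \to \Cat$, making $\GrC(H)$ well-defined. Concretely, $H(X) = F(X)^{\op}$ and $H(f) = F(f)^{\op}$ for every $f : \cC^{\op}(X,Y)$.

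Second I would define $\Phi : \Gr(F)^{\op} \to \GrC(H)$ as the identity on objects. On morphisms $\diset{X}{X'} \to \diset{Y}{Y'}$, I would unfold: a morphism in $\Gr(F)^{\op}$ is a morphism $\diset{Y}{Y'} \to \diset{X}{X'}$ in $\Gr(F)$, hence a pair $(f : \cC^{\op}(Y,X),\ f' : F(X)(F(f)(Y'), X'))$; since $\cC^{\op}(Y,X) = \cC(X,Y)$ and $F(X)(F(f)(Y'), X') = F(X)^{\op}(X', F(f)(Y')) = H(X)(X', H(f)(Y'))$, this is exactly a morphism in $\GrC(H)\!\OpticHom{X}{X'}{Y}{Y'}$. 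So $\Phi$ is already a bijection on hom-sets; the task reduces to checking it is a functor.

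Third, for identities: the identity at $\diset{X}{X'}$ in $\Gr(F)$ uses the opunitor $\epsilon^F_X$ of the oplax $F$ to supply the required 2-cell, while the identity in $\GrC(H)$ uses the unitor $\epsilon^H_X$ of the lax $H$. Since $\epsilon^H_X = (\epsilon^F_X)^{\op}$ under the identification of $H(X) = F(X)^{\op}$, these agree under $\Phi$. For composition: composing in $\Gr(F)^{\op}$ of $(f,f')$ and $(g,g')$ amounts to composing in $\Gr(F)$ in the reverse order, which produces $(f \comp g,\ f' \comp F(f)(g') \comp \mu^F_{g,f}(Z'))$ with $\mu^F_{g,f} : F(f \comp g) \Rightarrow F(f)\comp F(g)$ the oplaxator. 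Composing the images in $\GrC(H)$ produces $(f \comp g,\ f' \comp H(f)(g') \comp \mu^H_{f,g}(Z'))$ with $\mu^H_{f,g} : H(f)\comp H(g) \Rightarrow H(f\comp g)$ the laxator. Since $H(f)\comp H(g) = (F(f) \comp F(g))^{\op}$ as functors on $F(X)^{\op}$, and $\mu^H = (\mu^F)^{\op}$ by construction of $H$, these two formulas literally coincide under $\Phi$. The main obstacle here, which I would handle carefully, is tracking the three interleaved variances ($\cC \leftrightarrow \cC^{\op}$, $F$'s (op)lax direction, and $(-)^{\op}$ on the fiber categories) so that the types of $\mu^F_{g,f}$ and $\mu^H_{f,g}$ really line up without a sign error.

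Finally, the second isomorphism follows by the same argument applied to the dual setup: if $G : \cC^{\op} \to \Cat$ is lax, then $G^{\co}$ is oplax, so $\Gr$ of $G^{\co} \comp (-)^{\op}$ makes sense; and the explicit hom-set computation and identity/composition check proceed symmetrically, with the roles of $\mu^G,\epsilon^G$ and the opstructures swapped. Equivalently, one may derive it from the first isomorphism by applying $(-)^{\op}$ on both sides and using that $(-)^{\op}$ is involutive on $\Cat$, together with the fact that $(F^{\co})^{\co} = F$ for an oplax/lax functor.
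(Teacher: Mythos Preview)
The paper does not actually provide a proof of this proposition: it is stated and immediately followed by the remark ``In the definition below we refer to the right-hand side of these isomorphisms,'' with the next definition (\cref{def:f_charts_lenses}) then proceeding. So there is no proof in the paper to compare against.

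Your approach is correct and is the natural one: exhibit an identity-on-objects bijection on hom-sets and verify that identities and composition are preserved, the latter reducing to the observation that the lax structure of $H = F^{\co}\comp(-)^{\op}$ is componentwise the oplax structure of $F$ passed through $(-)^{\op}$. Your hom-set computation matches the paper's definitions of $\Gr$ and $\GrC$ on the nose, and your handling of the dual statement is fine. One small point to tighten when you write this up: in your displayed formula for the $\Gr(F)^{\op}$ composite you write the fibre component as $f' \comp F(f)(g') \comp \mu^F_{g,f}(Z')$; this is correct only if the composition symbol is being read in $F(X)^{\op}$ rather than in $F(X)$ (in $F(X)$ itself the order is reversed, matching the paper's $\Gr$-composition $(\mu_{g,f})_{Z'}\comp F(f)(g')\comp f'$). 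Since you explicitly flag the variance bookkeeping as the main obstacle, just make sure the final write-up is consistent about which category each $\comp$ lives in.
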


In the definition below we refer to the right-hand side of these isomorphisms.
Equivalently we could have referred to categories on the left-hand side.

\begin{definition}[{$F$-charts (\cite[Def.\ 3.5.0.6 and 3.5.0.8]{myers_categorical_2022}) and $F$-lenses (\cite[Def.\ 3.3]{spivak_generalized_2022})}]
  \label{def:f_charts_lenses}
  Let $\cC$ be a category.
  Let $F : \cC^{\op} \to \Cat$ be\\
  \\
  \begin{minipage}{0.33\textwidth}
    \begin{center}
      a lax functor
    \end{center}
  \end{minipage}%
  \hfill
  \begin{minipage}{0.33\textwidth}
    \begin{center}
      (resp.)
    \end{center}
  \end{minipage}%
  \hfill 
  \begin{minipage}{0.33\textwidth}
    \begin{center}
      an oplax functor
    \end{center}
  \end{minipage}
  \\
  \\
  Then the category of \\
  \\
  \begin{minipage}{0.33\textwidth}
    \begin{center}
      \newdef{$F$-charts}
    \end{center}
  \end{minipage}%
  \hfill
  \begin{minipage}{0.33\textwidth}
    \begin{center}
      (resp.)
    \end{center}
  \end{minipage}%
  \hfill 
  \begin{minipage}{0.33\textwidth}
    \begin{center}
      \newdef{$F$-lenses}
    \end{center}
  \end{minipage}
  \begin{center}
   is the contravariant Grothendieck construction of
  \end{center}
  \begin{minipage}{0.33\textwidth}
  \[
    \cC^{\op} \xrightarrow{F} \Cat
  \]
  \end{minipage}%
  \hfill
  \begin{minipage}{0.33\textwidth}
    \begin{center}
      (resp.)
    \end{center}
  \end{minipage}%
  \hfill 
  \begin{minipage}{0.33\textwidth}
  \[
    \cC^{\op} \xrightarrow{F^\co} \Cat^\co \xrightarrow{(-)^{\op}} \Cat
  \]
  \end{minipage}
  \\
\end{definition}

In all of the examples below, the distinction between lax and oplax dissapears: all the functors are strict (1-)functors.

\begin{example}[{$\CoKl(= \times -)$-charts and $\CoKl(= \times -)$-lenses}]
 \label{ex:cokl_charts_lenses}
 Denoted respectively as $\Chart(\cC)$ and $\Lens(\cC)$, these are categories whose objects are pairs $\diset{X}{X'}$ where $X, X': \cC$.
 Where they differ is in morphisms.

  \begin{minipage}{0.45\textwidth}
   A $\CoKl(= \times -)$-chart of type $\diset{X}{X'} \to \diset{Y}{Y'}$ consists of a pair $\diset{f}{f'}$ where
   \begin{itemize}
    \item $f : \cC(X, Y)$;
    \item $f' : \cC(X \times X', Y')$.
   \end{itemize}
  \end{minipage}
  \hfill %
  \begin{minipage}{0.45\textwidth}
   A $\CoKl(= \times -)$-lens of type $\diset{X}{X'} \to \diset{Y}{Y'}$ consists of a pair $\diset{f}{f'}$ 
   \begin{itemize}
    \item $f : \cC(X, Y)$;
    \item $f' : \cC(X \times Y', X')$.
   \end{itemize}
  \end{minipage}
  \newline
  \\
  The category of $\CoKl(= \times -)$-charts is referred to in \cite[Def. 1.3.1]{jacobs_categorical_1998} as the \emph{simple fibration on} $\cC$.
  If we restrict to $\asc{\CoKl(= \times -)}$-charts and $\asc{\CoKl(= \times -)}$-lenses, we obtain analogous categories to above (denoted as $\Chart_A(\cC)$ and $\LensA(\cC)$), except the backward objects $X'$ and $Y'$ are now commutative monoids, and backward maps $f'$ are now additive in the second variable.
\end{example}

\begin{example}[{$(\cC/-)$-charts and $(\cC/-)$-lenses}]
 \label{ex:slice_charts_lenses}
 Denoted as $\DChart(\cC)$ and $\DLens(\cC)$, these are categories whose objects are pairs $\diset{X}{p}$ where $X : \cC$ and $p : X' \to X$.
 Where they differ is, like before, in morphisms.

  \begin{minipage}{0.45\textwidth}
   A $(\cC/-)$-chart of type $\diset{X}{p} \to \diset{Y}{q}$ consists of a pair $\diset{f}{f'}$ where
   \begin{itemize}
    \item $f : \cC(X, Y)$;
    \item $f' : \cC/X(X', X \times_Y Y')$
   \end{itemize}
   such that the diagram below commutes:

  \[\begin{tikzcddiag}
  	{X'} & {X \times_YY'} & {Y'} \\
  	X & X & Y
  	\arrow["f"', from=2-2, to=2-3]
  	\arrow["q", from=1-3, to=2-3]
  	\arrow["{\pi_{Y'}}", from=1-2, to=1-3]
  	\arrow["\lrcorner"{anchor=center, pos=0.125}, draw=none, from=1-2, to=2-3]
  	\arrow[Rightarrow, no head, from=2-1, to=2-2]
  	\arrow["p"', from=1-1, to=2-1]
  	\arrow["{\pi_X}", from=1-2, to=2-2]
  	\arrow["{f'}", from=1-1, to=1-2]
  \end{tikzcddiag}\]
  \end{minipage}
  \hfill %
  \begin{minipage}{0.45\textwidth}
   A $(\cC/-)$-lens of type $\diset{X}{p} \to \diset{Y}{q}$ consists of a pair $\diset{f}{f'}$ 
   \begin{itemize}
    \item $f : \cC(X, Y)$;
    \item $f' : \cC/X(X \times_Y Y', X')$
   \end{itemize}
   
   such that the diagram below commutes:
  \[\begin{tikzcddiag}
  	{X'} & {X \times_YY'} & {Y'} \\
  	X & X & Y
  	\arrow["f"', from=2-2, to=2-3]
  	\arrow["q", from=1-3, to=2-3]
  	\arrow["{\pi_{Y'}}", from=1-2, to=1-3]
  	\arrow["\lrcorner"{anchor=center, pos=0.125}, draw=none, from=1-2, to=2-3]
  	\arrow[Rightarrow, no head, from=2-1, to=2-2]
  	\arrow["p"', from=1-1, to=2-1]
  	\arrow["{\pi_X}", from=1-2, to=2-2]
  	\arrow["{f'}"', from=1-2, to=1-1]
  \end{tikzcddiag}\]
  \end{minipage}
  \newline
  \\
Analogously to the example above, we can restrict to $\asc{\cC/-}$-charts and $\asc{\cC/-}$-lenses (denoting them as $\DChartA(\cC)$ and $\DLensA(\cC)$) obtaining analogous categories except the backward object $p : X' \to X$ is fibrewise a commutative monoid, and the backward map $f'$ is fibrewise additive.
\end{example}

When $\cC$ is $\Set$ there is an isomorphic, indexed formulation of dependent lenses using the $\Fam$ construction, relying on the fact that $\Fam(\Set)(X) \cong \Set / X$.\footnote{Dually, we can also say that slice functor is a generalisation of the indexed world to an arbitrary category; it necessitates that the contravariant slice functor can be coherently defined on the category $\cC$ which we want to replace $\Set$ by, which happens if $\cC$ has all pullbacks.}

\begin{example}[{$\Fam(\cC)$-charts and $\Fam(\cC)$-lenses}]
 \label{ex:fam_charts_lenses}
 Often denoted also as $\DChart(\cC)$ and $\DLens(\cC)$, these are categories whose objects are pairs $\diset{X}{X'}$ where $X : \Set$ and $X' : \discr(X) \to \Set$.
 Where they differ is in morphisms.
 \newline

  \begin{minipage}{0.45\textwidth}
   A $\Fam(\cC)$-chart of type $\diset{X}{X'} \to \diset{Y}{Y'}$ is a lax triangle
  \[\begin{tikzcddiag}
	  X && Y \\
	  & \cC
	  \arrow["f", from=1-1, to=1-3]
	  \arrow[""{name=0, anchor=center, inner sep=0}, "{Y'}", from=1-3, to=2-2]
	  \arrow[""{name=1, anchor=center, inner sep=0}, "{X'}"', from=1-1, to=2-2]
	  \arrow["{f'}", shift left, shorten <=10pt, shorten >=10pt, Rightarrow, from=1, to=0]
  \end{tikzcddiag}\]
 consisting of:
\begin{itemize}
  \item A function $f : X \to Y$;
  \item A (vacuously) natural transformation $f' : X' \Rightarrow F(f)(Y')$, i.e.\ for each $x : X$ a morphism $X'(x) \to F(f)(Y')(x)$ in $\cC$.
\end{itemize}
  \end{minipage}
  \hfill %
  \begin{minipage}{0.45\textwidth}
   A $\Fam(\cC)$-lens of type $\diset{X}{X'} \to \diset{Y}{Y'}$ is a lax triangle
  \[\begin{tikzcddiag}[ampersand replacement=\&]
      X \&\& Y \\
      \& \cC
      \arrow["f", from=1-1, to=1-3]
      \arrow[""{name=0, anchor=center, inner sep=0}, "{Y'}", from=1-3, to=2-2]
      \arrow[""{name=1, anchor=center, inner sep=0}, "{X'}"', from=1-1, to=2-2]
      \arrow["{f'}"', shift right, shorten <=10pt, shorten >=10pt, Rightarrow, from=0, to=1]
    \end{tikzcddiag}\]
 consisting of:
  \begin{itemize}
    \item A function $f : X \to Y$;
    \item A (vacuously) natural transformation $f' : F(f)(Y') \Rightarrow X'$, i.e.\ for each $x : X$ a morphism $F(f)(Y')(x) \to X'(x)$ in $\cC$.
  \end{itemize}
  \end{minipage}
  \newline
  \\
  Restriction to commutative monoids (as in previous examples) gives us analogous categories, except that the backward object $X' : \discr(X) \to \Set$ is now an indexed commutative monoid, and the backward map $f'$ is additive for each $x : X$.
\end{example}

\begin{notation}
  \label{not:poly}
  When the base category is $\Set$, then $\Fam(\Set)$-lenses are isomorphic to $(\cC/-)$-lenses, and notation sometimes used for this category is also $\PolySpivak$ (following \cite{niu_polynomial_2023}).
\end{notation}

These examples show us the power of Grothendieck construction as applied to a variety of functors.
In \cref{prop:para_bicategorical_grothendieck} we remarked that Grothendieck construction can be defined for bicategories.
We do not unpack the construction here, rather only mention that its details can be found in (\cite{bakovic_grothendieck_2009}).

\section*{Cartesian left-additive structure}
\label{appendix:cart_left_additive}

\begin{proposition}[$\CoKl(X \times - )$ is cartesian left-additive]
  \label{def:cokl_is_cartesian_left_additive}
  Let $\cC$ be a cartesian left-additive category, and let $X : \cC$.
  Then the category $\CoKl(X \times -)$ is cartesian left-additive.
\end{proposition}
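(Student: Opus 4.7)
The plan is to inherit both the cartesian structure and the per-object commutative monoid structure from $\cC$, transporting them along the identity-on-objects embedding $\cC \to \CoKl(X \times -)$ of \cref{lemma:cokl_to_para}'s sibling construction, and then verify the compatibility axioms from \cref{def:clac} hold automatically because every new piece of data factors through the projection that discards $X$.

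First I would establish that $\CoKl(X \times -)$ is cartesian. The product of $A$ and $B$ is $A \times B$, and the projections are the $\cC$-projections $\pi_A : A \times B \to A$ and $\pi_B : A \times B \to B$ viewed as coKleisli morphisms by precomposing with $\pi_2 : X \times (A \times B) \to A \times B$ (i.e.\ discarding the context). Given two coKleisli morphisms $f : X \times Z \to A$ and $g : X \times Z \to B$ the universal pairing is just $\langle f, g \rangle : X \times Z \to A \times B$ in $\cC$; uniqueness and the factorisation equations follow from the universal property of $\times$ in $\cC$ together with naturality of $\pi_A, \pi_B$. The terminal object is the terminal object $1$ of $\cC$, with the unique map picked out by $!_{X \times A}$.

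Second I would equip each object $A$ with a commutative monoid $(+_A^{\CoKl}, 0_A^{\CoKl})$ obtained by precomposing the $\cC$-monoid structure with the second projection that discards $X$: concretely, $+_A^{\CoKl} \coloneqq \pi_2 \comp +_A : X \times (A \times A) \to A$ and $0_A^{\CoKl} \coloneqq \pi_2 \comp 0_A : X \times 1 \to A$. Commutativity, associativity and unitality of this monoid in $\CoKl(X \times -)$ reduce, after unfolding coKleisli composition (which copies $X$ and feeds it to each factor), to the corresponding diagrams for $(+_A, 0_A)$ in $\cC$, using the fact that $X$ gets discarded at the end and so the intermediate copying is inert.

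Third I would check the two compatibility squares of \cref{def:clac}. Because both $+_{A \times B}^{\CoKl}$ and $+_A^{\CoKl} \times +_B^{\CoKl}$ are obtained by first discarding $X$ and then applying the corresponding $\cC$-maps, each square in $\CoKl(X \times -)$ collapses (after unfolding coKleisli composition and using naturality of $\pi_2 : X \times - \Rightarrow -$ together with the comonoid laws on $X$) to the analogous square for the cartesian left-additive structure of $\cC$, which commutes by hypothesis; similarly for the unit square.

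The only step requiring care — and the likely main obstacle — is bookkeeping around coKleisli composition when checking the compatibility squares: we must verify that the copy-and-discard pattern of $X$ inserted by $\CoKl$ composition does not interfere with the left-additive axioms. This is not mathematically deep, but it is where dropping the assumption that morphisms in $\cC$ preserve the monoid structure (as they need not, in a left-additive category) forces us to route the $X$-wire through $\pi_2$ before the monoid multiplication or unit, so that only the additive maps $+_A$ and $0_A$ — which by definition interact well with $\Delta$ and $!$ — ever touch the sum.
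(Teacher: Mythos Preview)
Your proposal is correct and follows essentially the same strategy as the paper: transport both the cartesian and the commutative-monoid structure along the identity-on-objects embedding $\cC \to \CoKl(X \times -)$, then check that the compatibility axioms of \cref{def:clac} descend from those in $\cC$. The only presentational difference is that the paper first defines a monoidal product on $\CoKl(X \times -)$ (via $(f \compPara{\times} g)^{\Delta_X}$) and then shows it is cartesian by exhibiting natural comonoids on every object using the Fox characterisation, whereas you go straight to the universal property of the product by writing down projections and pairings; both routes are standard and equivalent, and your more explicit bookkeeping around the copy-and-discard of $X$ is if anything a bit more informative than the paper's ``routine'' dismissal.
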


\begin{proof}
  We need to prove 1) that $\CoKl(X \times -)$ is cartesian monoidal, and 2) that it is additionally left-additive.
  \begin{enumerate}
    \item  The monoidal product on objects $A, B : \CoKl(X \times -)$ is given by their underlying product in $\cC$.
  The monoidal product of $f : \CoKl(X \times -)(A, B)$ and $g : \CoKl(X \times -)(C, D)$ is given by the composite $(f \compPara{\times} g)^{\Delta_X}$ which by notation in \cref{def:para} unpacks to the composite
  \[
    X \times (A \times C) \xrightarrow{\Delta_X \times (A \times C)} (X \times X) \times (A \times C) \cong (X \times A) \times (X \times C) \xrightarrow{f \times g} B \times D
  \]
  To show that this is a cartesian monoidal product, we need to equip every object with a commutative comonoid.
  This can be done by using the commutative comonoids in $\cC$ and the canonical embedding $\cC \to \CoKl(X \times -)$.
  It is straightforward to show that such comonoids are natural.
  \item We need to give a coherent choice of monoids to every object $A : \CoKl(X \times -)$.
  This is simply given by the embedding $\cC \to \CoKl(X \times -)$.
  It is routine to show that this satisfies the two commutative diagrams in \cref{def:clac}.
  \end{enumerate}
\end{proof}

\begin{lemma}
  \label{lemma:cla_preserves_additive}
  Let $F : \cC \to \cD$ be a morphism in $\CLACat$, i.e.\ a cartesian left-additive functor.
  Let $f : A \to B$ be an additive morphism in $\cC$. Then $F(f) : F(A) \to F(B)$ is also additive.
\end{lemma}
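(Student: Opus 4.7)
The plan is to apply $F$ to the two commutative diagrams witnessing additivity of $f$ from \cref{def:additive_morphism}, and then transport the resulting identities along the structure isomorphisms coming from the fact that $F$ is cartesian left-additive. Concretely, additivity of $f$ says that
\[
  (f \times f) \comp +_B = +_A \comp f \quad\text{and}\quad 0_B = 0_A \comp f.
\]
Applying $F$ functorially gives
\[
  F(f \times f) \comp F(+_B) = F(+_A) \comp F(f) \quad\text{and}\quad F(0_B) = F(0_A) \comp F(f).
\]

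Next, I would invoke the two pieces of structure a cartesian left-additive functor carries. Since $F$ preserves products, there is a coherent (natural) isomorphism $F(A \times A) \cong F(A) \times F(A)$, and under this identification $F(f \times f)$ corresponds to $F(f) \times F(f)$. Since $F$ additionally preserves the commutative monoid on each object (the content of being \emph{left-additive}; \cite[Def.\ 1.3.1]{blute_cartesian_2009}), the comparison isomorphisms fit into commutative squares identifying $F(+_A)$ with $+_{F(A)}$ and $F(0_A)$ with $0_{F(A)}$, and likewise for $B$. Substituting these identifications into the two equations above yields exactly
\[
  (F(f) \times F(f)) \comp +_{F(B)} = +_{F(A)} \comp F(f) \quad\text{and}\quad 0_{F(B)} = 0_{F(A)} \comp F(f),
\]
which are the two diagrams of \cref{def:additive_morphism} for $F(f)$.

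The only non-routine point I anticipate is bookkeeping with the product-preserving comparison isomorphism: one must check that it is coherent with both the sum map $+$ and the zero map $0$, so that the substitutions above are legitimate. This is exactly what being cartesian left-additive encodes, so the verification is a diagram chase pasting the naturality square of the product comparison against the preservation squares for $+$ and $0$; I would carry it out once for $+$ and observe that the case of $0$ is entirely analogous.
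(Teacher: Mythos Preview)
Your proposal is correct and follows essentially the same approach as the paper: apply $F$ to the additivity diagrams and use that $F$ preserves both products and the commutative monoid structure. If anything, your write-up is more explicit than the paper's, which compresses the argument to ``functors preserve commutative diagrams and $F$ is product-preserving'' without separately naming the monoid-preservation step that you correctly highlight.
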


\begin{proof}
  To prove additivity of $F(f) : F(A) \to F(B)$, we need to prove it preserves the monoid structure.
  Since preservation of monoid structure is defined using two commutative diagrams (\cref{def:additive_morphism}), this follows from the fact that functors preserve commutative diagrams and the fact that $F$ is a product-preserving functor (allowing us to replace $F(A \times A)$ with $F(A) \times F(A)$).
\end{proof}

\LensFunctorCartLeftAdd*
\begin{proof}
  We need to prove that $\LensA(F)$ is a cartesian left-additive functor.
  To prove it is a functor, we need to:
  \begin{itemize}
  \item Define its action on objects and morphisms. We've done this in \cref{prop:lens_functor_cart_left_add} itself;
  \item Prove additivity of $\overline{f^\sharp}$. This follows from \cref{lemma:cla_preserves_additive};
  \item Prove identities are preserved.
    The identity $\diset{\id_A}{\pi_2} : \diset{A}{A'} \to \diset{A}{A'}$ in the domain gets mapped to $\diset{F(\id_A)}{F(\pi_2)}$.
    By preservation of identities and products of $F$ this is equal to the identity map on $\diset{F(A)}{F(A')}$.
  \item Prove composition is preserved. This can be be verified by routine.
  \end{itemize}
  To prove that it is additionally cartesian, we need to show that the image of every comonoid $(\diset{A}{A'}, !_{\diset{A}{A'}}, \Delta_{\diset{A}{A'}})$ is also a comonoid, and that all maps preserve comonoids.
     We can understand the first part in terms of actions on the counit and comultiplication of the comonoid.
    \begin{itemize}
    \item \textbf{Counit.} The action on the counit unpacks to the pair $\diset{F(!_{\diset{A}{A'}})}{F(!_{A \times 1} \comp 0_A)}$.
      By preservation of terminal and additive maps of $F$ this morphism is equal to the counit of $\diset{F(A)}{F(A')}$.
    \item \textbf{Comultiplication.} The action on the comultiplication unpacks to $\diset{(F(\Delta_A)}{F(\pi_{2, 3} \comp +_A)}$.
      By $F$'s preservation of products and additive morphisms this morphism is equal to the comultiplication of $\diset{F(A)}{F(A')}$.
    \end{itemize}
    It is routine to show it obey the corresponding laws and form a comonoid.
    
    For the second part we need to show that the image of every lens $\diset{f}{f^\sharp} : \diset{A}{A'} \to \diset{B}{B'}$ preserves these comonoids.
    For the forward part this is true because $F$ preserves products.
    For the backwards part this is true because $F$ is left-additive.
    
    Lastly, we need to prove that this functor is additionally left-additive.
    This means that it preserves the monoid $(\diset{A}{A'}, 0_{\diset{A}{A'}}, +_{\diset{A}{A'}})$ of every object.
    We unpack the action of $\LensA(F)$ on the unit $0_{\diset{A}{A'}}$ and multiplication $+_{\diset{A}{A'}}$ below.
    \begin{itemize}
    \item \textbf{Unit.} The action on the unit unpacks to the pair $\diset{F(0_A)}{F(!_{1 \times A})}$.
      By preservation of additive and terminal maps of $F$ this morphism is equal to the unit of $\diset{F(A)}{F(A)}$;
    \item \textbf{Multiplication.} The action on the multiplication unpacks to the pair $\diset{F(+_A)}{F(\pi_3 \comp \Delta_A)}$.
      By preservation of coadditive maps and products of $F$ this morphism is equal to  the multiplication of $\diset{F(A)}{F(A')}$.
    \end{itemize}
    Seeing as these monoids in the codomain are of the same form as those in the domain, it is routine to show that they obey the monoid laws.
  This concludes the proof that $\LensA(F)$ is a cartesian left-additive functor.
\end{proof}

\begin{definition}[{compare \cite[Def.\ 13]{cockett_reverse_2020}}]
  \label{def:crdc}
  A cartesian left-additive category $\cC$ is called a \newdef{cartesian reverse derivative category} if it comes equipped with a reverse differential combinator
\[
  R : \cC(A, B) \to \cC(A \times B, A)
\]
such that the following seven axioms hold expressed as commutative diagrams:
\begin{enumerate}
\item \textbf{Additivity of reverse differentiation.}\\
  For every $f, g : A \to B$ it is required that
  \[
    R[f + g] = R[f] + R[g] \quad \text{and} \quad R[0] = 0
  \]
  It is useful unpack the $+$ notation for morphisms (\cref{rem:why_left_additive}) in terms of commutative diagrams below
  
  \begin{minipage}{.5\textwidth}
    \centering
    \[\begin{tikzcddiag}
        {A \times B} && A \\
        \\
        {A \times B \times A \times B} && {A \times A}
        \arrow["{R[\Delta_A \comp (f \times g) \comp +_B]}", from=1-1, to=1-3]
        \arrow["{\Delta_{A \times B}}"', from=1-1, to=3-1]
        \arrow["{R[f] \times R[g]}"', from=3-1, to=3-3]
        \arrow["{+_A}"', from=3-3, to=1-3]
      \end{tikzcddiag}\]
  \end{minipage}%
  \begin{minipage}{0.5\textwidth}
    \centering
    \[\begin{tikzcddiag}[ampersand replacement=\&]
        {A \times B} \&\& A \\
        \\
        1
        \arrow["{R[!_A \comp 0_A]}", from=1-1, to=1-3]
        \arrow["{0_A}"', from=3-1, to=1-3]
        \arrow["{!_{A \times B}}"', from=1-1, to=3-1]
      \end{tikzcddiag}\]
  \end{minipage}
\item \textbf{Additivity of reverse derivative in the second component.}\\
  For every $f : A \to B$ it is required that
  \[
    \langle a, b + c \rangle R[f] = \langle a, b \rangle R[f] + \langle a, c \rangle R[f] \quad \text{and} \quad \langle a, 0 \rangle R[f] = 0
  \]
  Again, we find it instructive to unpack the underlying diagrams:
  
  \begin{minipage}{.5\textwidth}
    \centering
    \[\begin{tikzcddiag}
    	{A \times B \times B} && {A \times B} \\
    	{A \times A \times B \times B} && A \\
    	{A \times B \times A \times B} && {A \times A}
    	\arrow["{\Delta_A \times B \times B}"', from=1-1, to=2-1]
    	\arrow["{\interchanger_{A, A, B, B}}"', from=2-1, to=3-1]
    	\arrow["{R[f] \times R[f]}"', from=3-1, to=3-3]
    	\arrow["{+_A}"', from=3-3, to=2-3]
    	\arrow["{A \times +_B}", from=1-1, to=1-3]
    	\arrow["{R[f]}", from=1-3, to=2-3]
    \end{tikzcddiag}\]
  \end{minipage}%
  \begin{minipage}{0.5\textwidth}
    \centering
    \[\begin{tikzcddiag}
        {A \times 1} && 1 \\
        \\
        {A \times B} && A
        \arrow["{A \times 0_B}"', from=1-1, to=3-1]
        \arrow["{R[f]}"', from=3-1, to=3-3]
        \arrow["{!_{A \times 1}}", from=1-1, to=1-3]
        \arrow["{0_A}", from=1-3, to=3-3]
      \end{tikzcddiag}\]
  \end{minipage}
  
\item \textbf{Coherence with identities and projections.}\\
  For any two objects $A, B : \cC$ it is required that the following diagrams commute:
  
  \begin{minipage}{.3\textwidth}
    \centering
    \[\begin{tikzcddiag}[ampersand replacement=\&]
        {A \times A} \&\& A \\
        \\
        A
        \arrow["{R[\id_A]}", from=1-1, to=1-3]
        \arrow["{\pi_2}"', from=1-1, to=3-1]
        \arrow[Rightarrow, no head, from=3-1, to=1-3]
      \end{tikzcddiag}\]
  \end{minipage}%
  \begin{minipage}{0.5\textwidth}
    \centering
    \[\begin{tikzcddiag}[ampersand replacement=\&]
        {(A \times B) \times A} \&\& {A \times B} \&\& {(A \times B) \times B} \\
        \\
        A \&\&\&\& B
        \arrow["{\pi_2}"', from=1-1, to=3-1]
        \arrow["{i_1}"', from=3-1, to=1-3]
        \arrow["{R[\pi_1]}", from=1-1, to=1-3]
        \arrow["{\pi_2}", from=1-5, to=3-5]
        \arrow["{i_2}", from=3-5, to=1-3]
        \arrow["{R[\pi_2]}"', from=1-5, to=1-3]
      \end{tikzcddiag}\]
  \end{minipage}
\item \textbf{Coherence with pairings.}\\
  For every $f : A \to B$ and $g : A \to C$ it is required that the following diagram commutes:
  \[\begin{tikzcddiag}
  	{A \times B \times C} && A \\
  	{A \times A \times B \times C} \\
  	{A \times B \times A \times C} && {A \times A}
  	\arrow["{R[\langle f, g\rangle]}", from=1-1, to=1-3]
  	\arrow["{\Delta_A \times B \times C}"', from=1-1, to=2-1]
  	\arrow["{\interchanger_{A, A, B, C}}"', from=2-1, to=3-1]
  	\arrow["{R[f] \times R[g]}"', from=3-1, to=3-3]
  	\arrow["{+_A}"', from=3-3, to=1-3]
  \end{tikzcddiag}\]
  and for every $A : \cC$ it is required that the following diagram commmutes:
  \[\begin{tikzcddiag}[ampersand replacement=\&]
      {A \times 1} \&\& 1 \\
      \\
      A
      \arrow["{R[\terminal_A]}"', from=1-1, to=3-1]
      \arrow["{\terminal_{A \times 1}}", from=1-1, to=1-3]
      \arrow["{0_A}", from=1-3, to=3-1]
    \end{tikzcddiag}\]
\item \textbf{Reverse chain rule.}\\
  For all  composable pairs $f : A \to B$ and $g : B \to C$ the following diagram must commute:
  
  \[\begin{tikzcddiag}[ampersand replacement=\&]
      {A \times C} \&\& A \\
      \\
      {A \times B \times C} \&\& {A \times B}
      \arrow["{R[f \comp g]}", from=1-1, to=1-3]
      \arrow["{A \times R[g]}"', from=3-1, to=3-3]
      \arrow["{R[f]}"', from=3-3, to=1-3]
      \arrow["{\graph(f) \times C}"', from=1-1, to=3-1]
    \end{tikzcddiag}\]
\item \textbf{Linearity of the reverse derivative in its second component.} \\
  For every $f : A \to B$ the following diagram must commute:
  
  \[\begin{tikzcddiag}
      {A \times B} && {A \times 1 \times B} && {A \times B \times A \times A \times B} \\
      \\
      B &&&& {A \times B \times A}
      \arrow["{R[f]}"', from=1-1, to=3-1]
      \arrow["\cong", from=1-1, to=1-3]
      \arrow["{A \times 0_{B \times A \times A} \times B}", from=1-3, to=1-5]
      \arrow["{R^3[f]}", from=1-5, to=3-5]
      \arrow["{\pi_2}", from=3-5, to=3-1]
    \end{tikzcddiag}\]
\item \textbf{Symmetry of mixed partial derivatives.}\\
  Given the following compact, but unilluminating definition of $h$
  \[
    h \coloneqq (i_1 \times 1_{A \times A}) \comp R[R[(i_1 \times A) \comp R[R[f]] \comp \pi_2]] \comp \pi_2
  \]
  the symmetry of mixed partial derivatives can be simply stated as the requirement that the following diagram commutes:
  \[\begin{tikzcddiag}[ampersand replacement=\&]
      {A \times A \times A \times A} \& {A \times A \times A \times A} \\
      \& A
      \arrow["{c_A}", from=1-1, to=1-2]
      \arrow["h"', from=1-1, to=2-2]
      \arrow["h", from=1-2, to=2-2]
    \end{tikzcddiag}\]
  where $c_A$ is the interchanger (\cref{def:interchanger}).
\end{enumerate}

Note that RD6 and RD7 are independent from the others (\cref{rem:crdc_independent}).
\end{definition}

\chapter{Miscellaneous}

\begin{definition}[Initial object]
  An object $I$ in a category $\cC$ is called \newdef{initial} if for every $X : \cC$ it naturally holds that $\cC(I, X) \cong 1$, meaning that there is only one map of type $I \to X$.
  We often label this map as $i_X$.
\end{definition}

The naturality condition is superfluous, which can be seen in the following corollary.

\begin{corollary}
  If a category $\cC$ has an initial object $I$, then for every $f : X \to Y$ the triangle
\begin{tikzcddiag}[ampersand replacement=\&]
    {I } \& X \\
    \& Y
    \arrow["{i_X}", from=1-1, to=1-2]
    \arrow["{i_y}"', from=1-1, to=2-2]
    \arrow["f", from=1-2, to=2-2]
  \end{tikzcddiag}
  commutes.
\end{corollary}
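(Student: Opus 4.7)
The plan is to exploit the uniqueness clause in the definition of an initial object directly. By definition, $I$ being initial means $\cC(I, X) \cong 1$ naturally in $X$, so in particular the hom-set $\cC(I, Y)$ has exactly one element. I would first observe that both $i_X \comp f$ and $i_Y$ are morphisms of type $I \to Y$ in $\cC$. Since there can be at most one such morphism, these two must coincide, which is exactly the claim that the triangle commutes.

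Concretely, the single step of the argument is: apply the universal property of $I$ at the object $Y$ to conclude $|\cC(I, Y)| = 1$; then observe that $i_X \comp f, i_Y \in \cC(I, Y)$, hence $i_X \comp f = i_Y$.

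There is no real obstacle here — the statement is essentially a rewording of the definition, which is why the excerpt notes that the naturality condition in the definition of initiality is superfluous. The whole content of the corollary is that the uniqueness clause, applied pointwise at each codomain $Y$, already forces commutativity of any triangle built from the unique maps out of $I$, so no separate naturality hypothesis is required.
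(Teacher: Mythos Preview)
Your proposal is correct and matches the paper's proof essentially verbatim: both argue that since $\cC(I, Y) \cong 1$, the composite $i_X \comp f$ and $i_Y$ are elements of a one-element set and hence equal.
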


\begin{proof}
  Since it holds that $\cC(I, Y) \cong 1$, this means that there is only one morphism that can be chosen as the composite $i_X \comp f$ --- precisely $i_Y$.
\end{proof}

\begin{definition}[{Quasi-initial object}]
  \label{def:quasi_initial}
  An object $I$ in a bicategory $\cB$ is \newdef{quasi-initial}\footnote{This is mostly referred to as a quasi-initial object in \cite{nlab_authors_quasi-limit_2023}, though it does not appear that these concepts have been studied sufficiently for consistent modern terminology to be adopted.} if for every object $X$ there is a laxly natural adjunction $R : \cB(I, X) \leftrightarrows \CatTerm : L$ where $\CatTerm$ is the terminal category in $\Cat$.
\end{definition}

Since we are in $\Cat$ the functor $R$ is the terminal functor.
And a choice of a left adjoint to it is equivalent to the choice of an initial object in the domain category $\cB(I, X)$.
We will suggestively label this object as $i_X$, and unique maps from it to any other $g : I \to X$ as $i_g$.
The lax naturality condition is superfluous, which can be seen in the following corollary.

\begin{corollary}
  If a bicategory $\cB$ has a quasi-initial object $I$, then for every $f : X \to Y$ the triangle
\begin{tikzcddiag}[ampersand replacement=\&]
    I \&\& X \\
    \\
    \&\& Y
    \arrow["f", from=1-3, to=3-3]
    \arrow["{i_X}", from=1-1, to=1-3]
    \arrow[""{name=0, anchor=center, inner sep=0}, "{i_Y}"', from=1-1, to=3-3]
    \arrow["{i_{i_X \comp f}}", shorten <=6pt, shorten >=6pt, Rightarrow, from=0, to=1-3]
  \end{tikzcddiag}
  laxly commutes up to the unique 2-cell $\i_{i_X \comp f}$.
\end{corollary}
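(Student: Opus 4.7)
The plan is to mimic the $1$-categorical argument, exploiting initiality of $i_Y$ in the hom-category $\cB(I, Y)$. First I would observe that post-composition with $f : X \to Y$ is a functor $f_* : \cB(I, X) \to \cB(I, Y)$ (this is just the horizontal composition structure of the bicategory $\cB$). Applying $f_*$ to the distinguished object $i_X : I \to X$ produces the $1$-cell $i_X \comp f : I \to Y$, which is simply some object of $\cB(I, Y)$.

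Next I would invoke the quasi-initiality of $I$ at the object $Y$: this tells us that $\cB(I, Y)$ has an initial object, namely $i_Y$. By the universal property of this initial object, there is a unique $2$-cell $i_{i_X \comp f} : i_Y \Rightarrow i_X \comp f$ in $\cB(I, Y)$. Unfolding what a $2$-cell in the hom-category $\cB(I, Y)$ is, this is precisely a $2$-cell filling the stated triangle, and uniqueness gives the ``unique 2-cell'' claim in the corollary.

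There is no real obstacle here beyond making sure the naming convention is consistent with the definition: the notation $i_g$ was introduced for the unique map from $i_Y$ to an arbitrary $g : I \to Y$, so instantiating $g := i_X \comp f$ yields the $2$-cell denoted $i_{i_X \comp f}$ in the diagram. The slight subtlety, which is why the lax-naturality hypothesis of \cref{def:quasi_initial} is not needed for this particular statement, is that we are only asserting the \emph{existence} of a canonical filler for each triangle; we are not claiming coherence between these fillers as $f$ varies, which is exactly what lax naturality of the adjunction would encode. Hence the proof is a one-line appeal to initiality and no further coherence data is required.
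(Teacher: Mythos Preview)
Your proposal is correct and takes essentially the same approach as the paper: both proofs simply observe that $i_X \comp f$ is an object of $\cB(I, Y)$ and invoke initiality of $i_Y$ in that hom-category to obtain the unique 2-cell $i_Y \Rightarrow i_X \comp f$. Your version is more explicit about the post-composition functor and about why lax naturality is not needed, but the argument is identical in substance.
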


\begin{proof}
Since $i_Y$ is initial in $\cB(I, Y)$, there is only one map of type $i_Y \Rightarrow i_X \comp f$ which we have already labeled as $i_{i_X \comp f}$ since we know that it exists and is uniquely defined.
\end{proof}

\begin{proposition}
  \label{prop:colim_terminal}
  Let $F : \cC \to \cD$ be a functor.
  If $\cC$ has a terminal object $1$, then $\colim F \cong F(1)$.
  Dually, if $\cC$ has an initial object $0$, then $\lim F \cong F(0)$
\end{proposition}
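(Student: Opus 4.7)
The plan is to prove the terminal case directly by exhibiting $F(1)$ together with canonical maps as the colimiting cocone; the initial case then follows by dualising.

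First I would define the cocone structure on $F(1)$: for each $c : \cC$, let $\terminal_c : c \to 1$ denote the unique morphism into the terminal object, and take $F(\terminal_c) : F(c) \to F(1)$ as the leg of the cocone at $c$. Compatibility with morphisms $f : c \to c'$ amounts to checking $F(\terminal_c) = F(f) \comp F(\terminal_{c'})$, which follows from the uniqueness identity $\terminal_c = f \comp \terminal_{c'}$ and functoriality of $F$.

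Next I would verify universality. Given any other cocone $(X, \{\alpha_c : F(c) \to X\}_{c : \cC})$, the candidate mediating morphism is $\alpha_1 : F(1) \to X$. The cocone condition applied to $\terminal_c : c \to 1$ immediately gives $\alpha_1 \comp F(\terminal_c) = \alpha_c$, so $\alpha_1$ is a valid factorisation. Uniqueness is then read off from the leg at $c = 1$: since $\terminal_1 = \id_1$, any factorisation $u : F(1) \to X$ must satisfy $u = u \comp F(\id_1) = u \comp F(\terminal_1) = \alpha_1$. Together this establishes $\colim F \cong F(1)$ via the canonical maps.

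For the dual, I would apply the above to $F^{\op} : \cC^{\op} \to \cD^{\op}$, using that $0$ is terminal in $\cC^{\op}$ and that $\colim$ in $\cD^{\op}$ corresponds to $\lim$ in $\cD$. No step looks genuinely hard --- the only thing to be careful about is getting the direction of the legs right, since it is precisely the fact that $1$ is terminal (maps point \emph{into} $1$) that makes $F(1)$ a cocone vertex rather than a cone vertex. Conceptually, the entire statement is the observation that the inclusion $\CatTerm \hookrightarrow \cC$ picking out $1$ is a final functor, so colimits along $\cC$ reduce to evaluation at $1$.
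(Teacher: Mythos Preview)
Your proof is correct; this is exactly the standard argument via the universal property of the colimit, and your closing remark that the inclusion $\CatTerm \hookrightarrow \cC$ picking out $1$ is final is the conceptual summary. The paper itself states this proposition without proof, treating it as a well-known fact (it only appends the remark that the converse need not hold), so there is nothing to compare against. One minor point: you mix composition conventions, writing $F(f) \comp F(\terminal_{c'})$ in diagrammatic order but $\alpha_1 \comp F(\terminal_c)$ in applicative order; since the paper uses $\comp$ diagrammatically throughout, the latter should read $F(\terminal_c) \comp \alpha_1 = \alpha_c$.
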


Note that the converse of the above propositions doesn't necessarily hold.

\begin{proposition}[{Colimit as the set of connected components of the category of elements }]
  \label{prop:colim_conn_comp_of_el}
  Let $F : \cC \to \Set$  be a functor.
  Then
  \[
    \colim F \cong \conncomp(\El(F))
  \]
\end{proposition}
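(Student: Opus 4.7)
My plan is to verify the isomorphism via universal properties, showing both sides corepresent the same functor $\Set \to \Set$, and then invoking Yoneda. Concretely, I will show that for every set $X$, the set of cocones on $F$ with apex $X$ is naturally (in $X$) in bijection with the set $\Set(\conncomp(\El(F)), X)$.

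First, unpacking definitions: a cocone on $F : \cC \to \Set$ with apex $X$ consists of a family $\{\alpha_c : F(c) \to X\}_{c : \cC}$ such that $\alpha_{c'} \comp F(f) = \alpha_c$ for every $f : c \to c'$ in $\cC$. I will show such data is exactly a functor $\alpha^\flat : \El(F) \to \discr(X)$, where $\discr(X)$ is the discrete category on $X$. Given a cocone, define $\alpha^\flat\diset{c}{x} \coloneqq \alpha_c(x)$; the naturality condition on the cocone becomes exactly the condition that $\alpha^\flat$ sends morphisms of $\El(F)$ (which by definition of the category of elements are pairs $(f, x)$ with $F(f)(x) = x'$) to identities in $\discr(X)$, i.e.\ that $\alpha^\flat$ is a well-defined functor. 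Conversely, any such functor reassembles into a cocone by restricting to the fibres.

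Second, I will apply the standard adjunction $\conncomp \dashv \discr : \Set \to \Cat$ from \cref{prop:adjoint_quadruple}, which yields a natural bijection
\[
  \Cat(\El(F), \discr(X)) \;\cong\; \Set(\conncomp(\El(F)), X).
\]
Composing this with the bijection established in the previous paragraph gives, naturally in $X$, a bijection between cocones on $F$ with apex $X$ and functions $\conncomp(\El(F)) \to X$. Since the set of cocones with apex $X$ is by definition $\Set(\colim F, X)$, Yoneda implies $\colim F \cong \conncomp(\El(F))$.

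I expect no serious obstacle: the only thing to check carefully is that the identification of cocones with functors into $\discr(X)$ is genuinely a bijection (both directions), and that the resulting composite bijection is natural in $X$. Both are routine diagram chases, so the proof should fit in a few lines once the universal-property viewpoint is set up.
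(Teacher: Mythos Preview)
Your argument is correct. The paper's own ``proof'' of this proposition is merely a pointer to the literature (Kelly \cite{kelly_basic_1982} and Fong--Spivak \cite{fong_invitation_2019}), so in a sense you have supplied more than the paper does. The route you take --- identifying cocones on $F$ with functors $\El(F)\to\discr(X)$ and then invoking the adjunction $\conncomp\dashv\discr$ from \cref{prop:adjoint_quadruple} --- is the clean conceptual argument. By contrast, the reference \cite[Theorem~6.37]{fong_invitation_2019} that the paper points to proceeds by writing down the explicit coequaliser formula for $\Set$-colimits and then checking by hand that the resulting equivalence relation on $\coprod_c F(c)$ coincides with ``lies in the same connected component of $\El(F)$''. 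Your adjunction-based proof buys you naturality and the Yoneda step for free, whereas the explicit-formula approach makes the actual quotient visible but requires a direct verification. Both are standard; yours is arguably cleaner and fits well with the paper's existing \cref{prop:adjoint_quadruple}.
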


\begin{proof}
  See \cite[(3.35)]{kelly_basic_1982} or \cite[Theorem 6.37.]{fong_invitation_2019}.
  The latter reference unpacks the above isomorphism as the well-known formula for computing colimits in $\Set$.
  We leave it to the reader to check that the equivalence relation described therein is the one arising as the image of $\conncomp$.
\end{proof}

\begin{proposition}[{From $\Set$-weighted colimits to colimits (compare \cite[(3.34)]{kelly_basic_1982}), \cite[Prop. 4.1.13 WC3]{loregian_coend_2021}}]
  \label{prop:from_set_weighted_colimits_to_colimits}
  Let $F : \cC \to \cD$ and $W : \cC^{\op} \to \Set$ be two functors.
  Then
  \begin{equation}
    \label{eq:weighted_colim_to_colim}
   \colim^W F \cong \colim (\El(W)^{\op} \xrightarrow{\pi^{\op}_W} \cC \xrightarrow{F} \cD) 
  \end{equation}
\end{proposition}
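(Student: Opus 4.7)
The plan is to use coend calculus to rewrite both sides and then appeal to a Fubini-style decomposition for coends over Grothendieck constructions. First, by \cref{prop:weight_colimit_as_coend} the left-hand side unpacks as
\[
  \colim^W F \;\cong\; \int^{C : \cC} W(C) \bullet F(C),
\]
where $\bullet$ denotes the copower (in $\Set$, the coproduct of $|W(C)|$ copies of $F(C)$). Second, every ordinary conical colimit is a weighted colimit for the constant weight at the terminal set, so
\[
  \colim\bigl(\El(W)^{\op} \xrightarrow{\pi_W^{\op}} \cC \xrightarrow{F} \cD\bigr) \;\cong\; \int^{(C,w) : \El(W)^{\op}} F(C),
\]
noting that $F \circ \pi_W^{\op}$ sends $(C,w)$ to $F(C)$ and is therefore constant along the fibers of $\pi_W^{\op} : \El(W)^{\op} \to \cC$.

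The core of the argument is then to establish the isomorphism
\[
  \int^{(C,w) : \El(W)^{\op}} F(C) \;\cong\; \int^{C : \cC} \int^{w \in W(C)} F(C),
\]
which is a Fubini-type theorem for coends over the Grothendieck construction. Once we have this, the inner coend is taken over the discrete category $W(C)$ (the fiber of $\pi_W^{\op}$ over $C$), so it reduces to a coproduct, and this coproduct is exactly the copower $W(C) \bullet F(C)$. Combining these two steps closes the chain of isomorphisms between the two colimits.

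I expect the Fubini step to be the main obstacle, since the rest is bookkeeping. My preferred way to argue it is by a direct check of the universal property: a cocone out of $F \circ \pi_W^{\op}$ consists, for each object $(C,w) \in \El(W)^{\op}$, of a map $\alpha_{C,w} : F(C) \to X$, natural in the morphisms $(f : C \to C', W(f)(w') = w)$ of $\El(W)^{\op}$. The naturality along the vertical morphisms (those with $f = \id_C$, of which there may be none except identities) is automatic because $F \circ \pi_W^{\op}$ is constant on each fiber, so such cocones are equivalent to families $\{\alpha_{C,w} : F(C) \to X\}_{w \in W(C)}$ natural only in $C$. By the universal property of copowers, giving such a family is the same as giving a single map $W(C) \bullet F(C) \to X$, natural in $C$, which is precisely a cocone on $W(-) \bullet F(-)$. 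Tracking this bijection is routine and yields the claimed isomorphism. Alternatively, one could invoke the general fact that $\pi_W^{\op}$ is a final (cofinal) functor on the relevant slice, but the direct cocone argument is the most transparent.
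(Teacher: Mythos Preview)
Your argument is correct, but it takes a different route than the paper. The paper's proof is a three-line duality trick: it passes from $\colim^W F$ to the weighted limit $\limm^W F^{\op}$ in $\cD^{\op}$, invokes the already-known limit version of this statement (\cite[Prop.\ 4.1.11]{loregian_coend_2021}), and then dualises back. All the actual content is delegated to the cited reference.

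Your approach instead unfolds the mechanism directly: you express both sides as coends via \cref{prop:weight_colimit_as_coend}, and then match up cowedges out of $W(-)\bullet F(-)$ with cocones under $F\circ\pi_W^{\op}$. The key observation --- that the fibers of $\pi_W^{\op}$ are discrete, so a cocone on $\El(W)^{\op}$ is exactly a family $\{\alpha_{C,w}\}$ satisfying the cowedge condition for $\int^C W(C)\bullet F(C)$ --- is correct and is really the heart of the matter. Framing this as a ``Fubini'' step is slightly more elaborate than necessary (the cocone/cowedge bijection already does all the work), but it is not wrong. The paper's proof is shorter; yours is more self-contained and makes explicit why the category of elements appears.
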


\begin{proof}
  \begin{alignat*}{2}
    & && \colim^W F\\
    &\text{(Duality of limits and colimits)}   &\cong& \limm^{W} F^{\op}\\
    &\text{(\cite[Prop. 4.1.11)]{loregian_coend_2021})}   &\cong& \limm (\El(W) \xrightarrow{\pi_W} \cC^{\op} \xrightarrow{F^{\op}} \cD^{\op}) \\
    &\text{(Duality of limits and colimits)}   &\cong& \colim (\El(W)^{\op} \xrightarrow{\pi^{\op}_W} \cC \xrightarrow{F} \cD) \tag*{\qedhere}
  \end{alignat*}
\end{proof}

Observe that if the domain of our functor is $\cC^{\op} \times \cC$, then there is always a choice of a canonical weight $\Hom_{\cC} : \cC^{\op} \times \cC \to \Set$.
As a matter of fact, the colimit of some $F : \cC \times \cC^{\op} \to \cD$ weighted by $\Hom_{\cC}$ is so ubiquitous that is has a special name: it's called the \emph{coend} of $F$.

This gives us two further characterisations of weighted colimits of functors of
type $\cC \times \cC^{\op} \to \cD$.

\begin{proposition}[{Weighted colimit as coend (compare \cite[Remark 4.1.13.1]{loregian_coend_2021})}]
  \label{prop:weight_colimit_as_coend}
  Let $F : \cC \to \cD$ and $W : \cC^{\op} \to \Set$ be two functors.
  Then
  \[
    \colim^W F \cong \int^{C : \cC} F(C) \actfw W(C)
  \]
  holds assuming $\cD$ admits the respective coend (which happens when $\cD$ is cocomplete), where $\actfw : \cD \times \Set \to \cD$ is the respective $\Set$-copower of $\cD$ arising from its coproduct structure (see \cite[Ex. 3.2.6]{capucci_actegories_2023}).
\end{proposition}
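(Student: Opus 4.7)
The plan is to verify the defining universal property of the weighted colimit: for any $d : \cD$ we want a natural isomorphism
\[
\cD\!\left(\int^{C : \cC} F(C) \actfw W(C),\, d\right) \;\cong\; [\cC^{\op}, \Set]\bigl(W,\, \cD(F-,d)\bigr),
\]
and then invoke the Yoneda-style uniqueness of representing objects to conclude that the proposed coend is indeed $\colim^W F$. So the body of the argument is a standard three-step chain of natural isomorphisms in coend calculus.

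First I would apply $\cD(-,d)$ to the coend. Because $\cD(-,d) : \cD^{\op} \to \Set$ is continuous and coends are colimits, it sends coends in the first variable to ends:
\[
\cD\!\left(\int^{C} F(C) \actfw W(C),\, d\right) \;\cong\; \int_{C} \cD\bigl(F(C) \actfw W(C),\, d\bigr).
\]
Next I would invoke the defining adjunction of the $\Set$-copower $\actfw$, namely $\cD(X \actfw S, d) \cong \Set(S, \cD(X,d))$, pointwise inside the end, yielding
\[
\int_{C} \cD\bigl(F(C) \actfw W(C),\, d\bigr) \;\cong\; \int_{C} \Set\bigl(W(C),\, \cD(F(C), d)\bigr).
\]
Finally, the standard end-formula for natural transformations between $\Set$-valued functors on $\cC^{\op}$ rewrites the right-hand side as $[\cC^{\op}, \Set](W, \cD(F-, d))$, completing the chain.

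I would then observe that the composite isomorphism is natural in $d$ (each individual step is, since the hom, copower adjunction, and end formula are natural), so by the Yoneda lemma the coend represents the same functor $d \mapsto [\cC^{\op}, \Set](W, \cD(F-,d))$ as $\colim^W F$, giving the required isomorphism. I do not expect any serious obstacle: the copower $F(C) \actfw W(C)$ is well-defined because cocompleteness of $\cD$ provides arbitrary $\Set$-indexed coproducts (concretely $F(C) \actfw W(C) = \coprod_{W(C)} F(C)$), and the three building-block isomorphisms are entirely formal. The only piece requiring any care is checking that the copower adjunction slots inside the end correctly — that is, that the implicit (co)dinaturality in $C$ in all three steps matches up — but this is the usual Fubini-style bookkeeping for ends and coends.
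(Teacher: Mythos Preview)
Your argument is correct and is exactly the standard coend-calculus verification of the universal property of the weighted colimit. The paper itself does not supply a proof of this proposition; it simply states the result and points to \cite[Remark~4.1.13.1]{loregian_coend_2021}, where precisely the chain of isomorphisms you wrote down (hom out of a coend is an end, copower adjunction, end formula for natural transformations) is recorded. So you have filled in what the paper leaves to the reference, and there is nothing to compare.
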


In many of our use cases (such as \cref{eq:weighted_optic_coend_form}) the domain of $F$ will be the product $\cM \times \cM^{\op}$, and codomain $\Set$, in which case the copower reduces to the cartesian product of $\Set$.

\begin{lemma}[Coend over a set]
  \label{lemma:coend_over_a_set}
  Let $X$ be a set, and let $F : \discr(X)^{\op} \times \discr(X) \to \Set$ be a functor.\footnote{Here the $^{\op}$ is unnecessary (as $\discr(X)$ does not have any non-identity morphisms) but we have left it in nonetheless.}
  Then
  \[
    \int^{A : \discr{X}} F(A, A) \cong \sum_{A : \discr(X)}F(A, A)
  \]
  
\end{lemma}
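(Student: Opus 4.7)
The plan is to unpack the definition of the coend as a coequalizer and observe that, over a discrete category, the coend conditions reduce to vacuous identifications. Recall that for a functor $F : \cC^{\op} \times \cC \to \Set$, the coend $\int^{A : \cC} F(A, A)$ is computed as the coequalizer
\[
  \coprod_{(f : A \to B) : \cC} F(B, A) \rightrightarrows \coprod_{A : \cC} F(A, A) \to \int^{A : \cC} F(A, A)
\]
where the two parallel maps are induced by $F(f, A) : F(B, A) \to F(A, A)$ and $F(B, f) : F(B, A) \to F(B, B)$, respectively.

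The first step is to instantiate this definition for $\cC = \discr(X)$. Since $\discr(X)$ has only identity morphisms, the indexing set of the left coproduct reduces to $X$ itself (each $A : X$ contributes exactly one morphism $\id_A : A \to A$), and the summand at $\id_A$ is $F(A, A)$. Thus both parallel maps become endomorphisms of $\coprod_{A : X} F(A, A)$, and by functoriality of $F$, each is componentwise equal to $F(\id_A, \id_A) = \id_{F(A, A)}$.

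The second step is to compute the coequalizer of two identical identity maps, which is trivially the codomain itself. This yields
\[
  \int^{A : \discr(X)} F(A, A) \;\cong\; \coprod_{A : \discr(X)} F(A, A) \;=\; \sum_{A : \discr(X)} F(A, A)
\]
as required. There is no genuine obstacle in this proof: the only subtlety is being careful that the indexing of the left coproduct in the coend coequalizer collapses to the object set of $\discr(X)$, but this is immediate from the fact that every morphism in $\discr(X)$ is an identity. Alternatively, one could derive this as a special case of \cref{prop:from_set_weighted_colimits_to_colimits} or \cref{prop:colim_conn_comp_of_el}, since the category of elements of a functor out of a discrete category is itself discrete, making its set of connected components equal to its set of objects.
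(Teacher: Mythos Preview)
Your proof is correct. The paper states this lemma without proof, so there is no approach to compare against; your coequalizer argument is the standard and direct way to see it, and your alternative via \cref{prop:colim_conn_comp_of_el} is also valid since the category of elements over a discrete base is itself discrete.
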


We unpack the Yoneda lemma, which comes in a total of four variants.\footnote{To really understand Yoneda lemma, one needs to understand how it is related to all the other concepts in category theory.}

\begin{proposition}[{Yoneda Lemma (\cite[Prop. 2.2.1]{loregian_coend_2021})}]
  \label{prop:yoneda}
  Let $F : \cC \to \Set$ be a functor (respectively $G : \cC^{\op} \to \Set$).
  Then the following isomorphisms hold, natural in $A$:
  \[
    F(A) \cong \int_{X:\cC}\internalHom{\Set}{\cC(A, X)}{F(X)} \quad \text{(and, respectively)} \quad G(A) \cong \int_{X : \cC^{\op}}\internalHom{\Set}{\cC(X, A)}{G(X)}
  \]
\end{proposition}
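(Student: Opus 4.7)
The plan is to reduce the statement to the classical (unenriched) Yoneda lemma and to verify the two key ingredients: that the end in question computes the set of natural transformations, and that this set is in bijection with $F(A)$.

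First I would invoke the standard characterization of ends over $\Set$: for any pair of functors $H, K : \cC^{\op} \times \cC \to \Set$, the end $\int_{X:\cC}\internalHom{\Set}{H(X,X)}{K(X,X)}$ computes the set of dinatural transformations from $H$ to $K$. Specialising to $H(X,Y) = \cC(A,Y)$ (viewed as a functor in the second variable only, constant in the first) and $K(X,Y) = F(Y)$, this identifies
\[
\int_{X:\cC}\internalHom{\Set}{\cC(A,X)}{F(X)} \;\cong\; \mathrm{Nat}(\cC(A,-),\,F).
\]
This is the usual coend-calculus fact that an end over a hom-functor into a covariant functor is precisely the set of natural transformations; I would cite \cite[Ex. 1.2.4 or Sec. 1.4]{loregian_coend_2021} for the detailed verification that wedges out of a constant are dinatural families indexed over $X$, and that for a representable $\cC(A,-)$ dinaturality collapses to ordinary naturality.

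Next I would build the bijection $F(A) \cong \mathrm{Nat}(\cC(A,-),\,F)$ explicitly, constructing the forward map $\Phi : F(A) \to \mathrm{Nat}(\cC(A,-),F)$ sending $a \in F(A)$ to the natural family $\Phi(a)_X(f) \coloneqq F(f)(a)$ for $f : A \to X$, and the backward map $\Psi(\alpha) \coloneqq \alpha_A(\id_A)$. To check $\Psi \comp \Phi = \id$ I would use $F(\id_A) = \id_{F(A)}$. To check $\Phi \comp \Psi = \id$, given a natural transformation $\alpha$ and $f : A \to X$, naturality of $\alpha$ at $f$ yields the commutative square
\[
\begin{tikzcddiag}[ampersand replacement=\&]
  \cC(A,A) \& \cC(A,X) \\
  F(A) \& F(X)
  \arrow["\cC(A,f)", from=1-1, to=1-2]
  \arrow["\alpha_A"', from=1-1, to=2-1]
  \arrow["\alpha_X", from=1-2, to=2-2]
  \arrow["F(f)"', from=2-1, to=2-2]
\end{tikzcddiag}
\]
and evaluating at $\id_A \in \cC(A,A)$ gives $\alpha_X(f) = F(f)(\alpha_A(\id_A)) = \Phi(\Psi(\alpha))_X(f)$.

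For naturality in $A$, I would observe that both sides are functorial in $A : \cC^{\op}$: $F$ is functorial in $A$ by applying $F$ to $\cC$-morphisms, and the end is functorial in $A$ by precomposition with $\cC(-,X)$. Checking that $\Phi$ and $\Psi$ commute with the action of a morphism $h : A' \to A$ is a routine diagram chase using the definition of $\Phi(a)$ as $f \mapsto F(f)(a)$. Finally, for the contravariant version, I would not redo the proof: applying the covariant statement to the functor $G : (\cC^{\op})^{\op} \to \Set$ (which is literally the same data as $G : \cC^{\op} \to \Set$ since $(\cC^{\op})^{\op} = \cC$, but with the indexing variable read in $\cC^{\op}$) and using $\cC^{\op}(A,X) = \cC(X,A)$ immediately yields the second isomorphism. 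The main obstacle is not the bijection itself, which is textbook, but pinning down that the end form of the hom-set of natural transformations specialises correctly; once that identification is in hand, the rest is formal.
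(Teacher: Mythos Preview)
Your proposal is correct and follows essentially the same approach as the paper: both identify the end with the set of natural transformations $\mathrm{Nat}(\cC(A,-),F)$ and then construct the explicit mutually inverse maps $a \mapsto (f \mapsto F(f)(a))$ and $\alpha \mapsto \alpha_A(\id_A)$, verifying the inverse conditions via functoriality of $F$ and naturality of $\alpha$. You are slightly more explicit than the paper about the end-equals-Nat step and about naturality in $A$, but the core argument is identical.
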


\begin{proposition}[{coYoneda (\cite[Prop. 2.2.1]{loregian_coend_2021})}]
  \label{prop:coyoneda}
  Let $F : \cC \to \Set$ be a functor (respectively $G : \cC^{\op} \to \Set$).
  Then the following isomorphisms hold, natural in $A$:
  \[
    \int^{X : \cC} F(X) \times \cC(X, A) \cong F(A) \quad \text{(and, respectively)} \quad \int^{X : \cC^{\op}} \cC(A, X) \times G(X) \cong G(A)
  \]
\end{proposition}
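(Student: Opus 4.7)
The plan is to prove the first isomorphism by testing against an arbitrary set $Y : \Set$ and reducing to the (ordinary) Yoneda lemma already stated as \cref{prop:yoneda}; the dual form follows by replacing $\cC$ with $\cC^{\op}$. Throughout I keep track of naturality in the free variable $A$ so that the final equivalence really is a natural isomorphism of functors $\cC \to \Set$.

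First I would compute the represented functor of the coend. For any $Y : \Set$, the universal property of coends exchanges $\Set\bigl(\int^{X} F(X) \times \cC(X,A),\, Y\bigr)$ with the end $\int_{X} \Set\bigl(F(X) \times \cC(X,A),\, Y\bigr)$ (this is the contravariant hom turning colimits into limits). Next, by the cartesian closed structure of $\Set$ I would currying to obtain $\int_{X} \Set\bigl(\cC(X,A),\, \Set(F(X),Y)\bigr)$, taking care to curry the $\cC(X,A)$ slot (so that the bound variable $X$ appears in the domain of the inner hom, matching the shape needed for Yoneda).

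Second, I would invoke \cref{prop:yoneda} with the functor $\Set(F(-),Y) : \cC \to \Set$ in the variable $X$, which collapses the end to $\Set(F(A),Y)$. The composite chain of isomorphisms is natural in $Y$ (each step being natural) and also natural in $A$ (the $A$ only occurs in the $\cC(X,A)$ slot throughout, and none of the manipulations touch it). Since we now have a natural isomorphism $\Set\bigl(\int^{X} F(X) \times \cC(X,A),\, Y\bigr) \cong \Set(F(A),Y)$ in $Y$, the ordinary Yoneda lemma for $\Set$ gives the desired isomorphism $\int^{X} F(X) \times \cC(X,A) \cong F(A)$, naturally in $A$. The contravariant version is obtained by running the same argument on $G : \cC^{\op} \to \Set$, i.e.\ replacing $\cC$ with $\cC^{\op}$ in the above (so $\cC(X,A)$ becomes $\cC(A,X)$).

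The main obstacle is bookkeeping of naturality rather than any substantive content: one must check that the currying step picks the correct slot so that Yoneda applies with $X$ as the running variable and $A$ as the free one, and that the resulting isomorphism really is natural in $A$ (not merely in $Y$). A secondary concern is size: the coend and end must exist in $\Set$, which is fine provided $\cC$ is small or locally small in the appropriate sense, but this should be spelled out. Once the naturality and size issues are dispatched, the proof reduces to a routine chain of canonical isomorphisms, each justified by a result already available in the excerpt.
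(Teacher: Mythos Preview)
Your argument is correct and is the standard coend-calculus derivation. One small slip to fix: the functor $\Set(F(-),Y)$ is not of type $\cC \to \Set$ but $\cC^{\op} \to \Set$, since $F$ is covariant and $\Set(-,Y)$ is contravariant. Accordingly, the instance of \cref{prop:yoneda} you need at that step is the \emph{right} (contravariant) variant with $G = \Set(F(-),Y)$; with that correction the chain of isomorphisms goes through exactly as written.

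The paper takes a different route. The proof block following \cref{prop:coyoneda} in fact only proves \cref{prop:yoneda} (left variant), by writing down the two maps explicitly at the level of elements --- $a \mapsto F(-)(a)$ one way and $\alpha \mapsto \alpha_A(\id_A)$ the other --- and verifying they are mutually inverse using naturality of $\alpha$. It then asserts that ``the rest follows analogously'', leaving coYoneda implicit. Your approach is therefore more explicit for the statement actually asked, and has the conceptual benefit of \emph{deriving} coYoneda from Yoneda via representability rather than re-running an elementwise construction on the coend. The trade-off is that you invoke three auxiliary facts (cocontinuity of $\Set(-,Y)$, cartesian closure of $\Set$, and full faithfulness of the Yoneda embedding), whereas the ``analogous'' direct argument would simply exhibit the map $F(A) \to \int^{X} F(X)\times\cC(X,A)$ sending $a$ to the equivalence class of $(a,\id_A)$ and check it is a two-sided inverse to the map induced by the action of $F$ on morphisms.
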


\begin{proof}
  We prove \cref{prop:yoneda} (left). The rest follows analogously. %
  \begin{itemize}
  \item \textbf{Left to right.}  
    Given an element $a : F(A)$ we need to produce a natural transformation $\cC(A, -) \Rightarrow F$, i.e.\ for every $X : \cC$ a function $\cC(A, X) \to F(X)$ natural in $X$.
    This function consumes a $f : A \to X$ and is required to produce an element of $F(X)$.
    We do the \emph{only thing we can}: apply $F$ to $f$, obtaining $F(f) : F(A) \to F(X)$ and then use $a : F(A)$ as its input, obtaining $F(f)(a)$.
    It's straightforward to check that this naturally defined.
  \item \textbf{Right to left.} 
    Going the other way, given a natural transformation $\alpha : \cC(A, -) \Rightarrow F$, to obtain an element of $F(A)$ we simply again do the only thing we can: look at the component $\alpha_A : \cC(A, A) \to F(A)$ and use $\id_A : A \to A$ as its input.
  \item \textbf{Isomorphism.}
    To confirm that this is indeed an isomorphism, we need to check that these are inverses.
    From one side it needs to hold that $(F(\id_A)(a))_A = a$, which is true since $F(\id_A) = \id_{F(A)}$.
    From the other side it needs to hold that $F(-)(\alpha_A(\id_A)) = \alpha$. At each component $X : \cC$ and every map $f : A \to X$ it needs to hold that  $F(f)(\alpha_A(\id_A))_X = \alpha_X(f)$.
    This follows from naturality of $\alpha$ (\cref{eq:yoneda_naturality}).
    \begin{equation}
      \label{eq:yoneda_naturality}
\begin{tikzcddiag}[ampersand replacement=\&]
	{\cC(A, A)} \&\& {F(A)} \&\& {\id_A} \&\& {\alpha_A(\id_A)} \\
	\\
	{\cC(A, X)} \&\& {F(X)} \&\& f \&\& {\alpha_X(f)= F(f)(\alpha_A(\id_A))}
	\arrow["{\alpha_A}", from=1-1, to=1-3]
	\arrow["{\cC(A, f)}"', from=1-1, to=3-1]
	\arrow["{\alpha_X}"', from=3-1, to=3-3]
	\arrow["{F(f)}", from=1-3, to=3-3]
	\arrow[maps to, from=1-5, to=1-7]
	\arrow[maps to, from=1-5, to=3-5]
	\arrow[maps to, from=1-7, to=3-7]
	\arrow[maps to, from=3-5, to=3-7]
\end{tikzcddiag}
    \end{equation}
  \end{itemize}
\end{proof}

The following lemma tells us when a profunctor is \emph{representable}, i.e.\ can be described as a composition of a hom-functor and a functor.

\begin{lemma}[{Representable profunctor, \cite[Sec. 5.2]{loregian_coend_2021}}]
  \label{lemma:profunctor_representable}
  A profunctor $P : \cM^{\op} \times \cM' \to \Set$  is \newdef{representable} by $i : \cM \to \cM'$ (resp. \newdef{corepresentable} by $j : \cM' \to \cM$) if the following isomorphism holds
  \[
    P \cong \cM'(i(-), -) \quad \text{(resp.)} \quad P \cong \cM(-, j(-))
  \]
  in the functor category $\internalHom{\Cat}{\cM^{\op} \times \cM'}{\Set}$.
\end{lemma}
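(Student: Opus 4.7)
The plan is to begin by observing that the statement, despite being labelled a lemma, is definitional in character: it introduces the terms \emph{representable} and \emph{corepresentable} by declaring when the naming applies to a profunctor. Consequently there is no theorem content to establish beyond a well-posedness check, namely verifying that each of the candidate right-hand sides $\cM'(i(-), -)$ and $\cM(-, j(-))$ is an object of $\internalHom{\Cat}{\cM^{\op} \times \cM'}{\Set}$, so that the displayed isomorphism is one that can meaningfully hold in that functor category.

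For the representable case, I would exhibit $\cM'(i(-), -)$ explicitly as the composite
\[
\cM^{\op} \times \cM' \xrightarrow{i^{\op} \times \cM'} {\cM'}^{\op} \times \cM' \xrightarrow{\Hom_{\cM'}} \Set,
\]
where $i^{\op}$ is obtained from the functoriality of $(-)^{\op}$ recorded in \cref{lemma:op_functor}, and $\Hom_{\cM'}$ is the standard two-variable hom-functor contravariant in its first argument. Composition of functors then yields an object of the required functor category. For the corepresentable case, the analogous composite
\[
\cM^{\op} \times \cM' \xrightarrow{\cM^{\op} \times j} \cM^{\op} \times \cM \xrightarrow{\Hom_{\cM}} \Set
\]
witnesses that $\cM(-, j(-))$ is also a profunctor of the stated shape.

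The only point requiring care is bookkeeping of the variance in each argument, since the Hom-functors are contravariant in their first slot and covariant in their second; this is precisely what $i^{\op}$ and the placement of $j$ in the second argument absorb. With both composites confirmed to be well-typed, the displayed isomorphism becomes a bona fide condition that can be imposed on a given profunctor $P$, and asserting it holds is what earns $P$ the name \emph{representable} (resp.~\emph{corepresentable}), witnessed by the functor $i$ (resp.~$j$). There is no obstacle of substance here; the statement's role is to fix nomenclature that will be invoked elsewhere (for instance in \cref{prop:weighted_to_mixed_representable}).
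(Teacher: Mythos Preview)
Your reading is correct: the paper gives no proof for this lemma because it is purely definitional, introducing the terms \emph{representable} and \emph{corepresentable} via the $\newdef$ macro without any accompanying proof environment. Your additional well-posedness check (exhibiting $\cM'(i(-),-)$ and $\cM(-,j(-))$ as explicit composites landing in the right functor category) goes slightly beyond what the paper does, but is harmless and arguably clarifying.
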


\begin{definition}
  \label{def:graph}
  Let $\cC$ be a cartesian category.
  Then the \newdef{graph} of any morphism $f : A \to B$ in $\cC$ is the morphism
  \[
    \graph(f) \coloneqq \boxed{A \xrightarrow{\Delta_A} A \times A \xrightarrow{A \times f} A \times B}
  \]
\end{definition}

It is called the ``graph'' of f because its image is a set of pairs $(a, f(a))$ which we can think of as points in a coordinate plane to be graphed.

Another important concept in this thesis is the notion of a coalgebra over a copointed endofunctor.
It arises as a mid-point between the notions of a coalgebra over an endofunctor and coalgebra over a comonad.

\begin{definition}[Coalgebra over a copointed endofunctor]
  \label{def:coalgebra_copointed_endofunctor}
  Let $F : \cC \to \cC$ be endofunctor which is copointed, i.e.\ is equipped with a natural transformation $\epsilon : F \Rightarrow \id_{\cC}$.
  Then \newdef{a coalgebra over $(F, \epsilon)$} consists of a pair $(A, f)$, where $A : \cC$ and $f : A \to F(A)$ is a morphism in $\cC$ such that the diagram
  \begin{tikzcddiag}
      {F(A)} & A \\
      A
      \arrow["f", from=2-1, to=1-1]
      \arrow[Rightarrow, no head, from=2-1, to=1-2]
      \arrow["{\epsilon_A}", from=1-1, to=1-2]
    \end{tikzcddiag} commutes.
\end{definition}

\begin{proposition}[Adjoint quadruple]
  \label{prop:adjoint_quadruple}
  Between categories $\Cat$ and $\Set$ there exists a quadruple of functors, all adjoint one to the next:

\[\begin{tikzcddiag}[ampersand replacement=\&]
	\Cat \&\&\&\& \Set
	\arrow[""{name=0, anchor=center, inner sep=0}, "\discr", curve={height=-12pt}, from=1-5, to=1-1]
	\arrow[""{name=1, anchor=center, inner sep=0}, "\conncomp"', shift right=3, curve={height=30pt}, from=1-1, to=1-5]
	\arrow[""{name=2, anchor=center, inner sep=0}, "\codiscr"', shift right=3, curve={height=30pt}, from=1-5, to=1-1]
	\arrow[""{name=3, anchor=center, inner sep=0}, "\Ob", curve={height=-12pt}, from=1-1, to=1-5]
	\arrow["\dashv"{anchor=center, rotate=89}, draw=none, from=1, to=0]
	\arrow["\dashv"{anchor=center, rotate=90}, draw=none, from=0, to=3]
	\arrow["\dashv"{anchor=center, rotate=91}, draw=none, from=3, to=2]
\end{tikzcddiag}\]

More specifically, they are:
\begin{itemize}
  \item $\codiscr : \Set \to \Cat$, the functor that sends a set $X$ to a category whose objects are elements of $X$ such that there exists a unique morphisms between every two objects\footnote{Often also called a \emph{chaotic}, or \emph{indiscrete} category};
  \item $\Ob : \Cat \to \Set$, the left adjoint of $\codiscr$ that sends a category to the set of its underlying objects;
  \item $\discr : \Set \to \Cat$, the left adjoint of $\Ob$ that sends a set $X$ to a category whose objects are elements of $X$ and whose morphisms are only identity morphisms;
  \item $\conncomp : \Cat \to \Set$, the left adjoint of $\discr$ that sends a category to the set of its connected components.
\end{itemize}
\end{proposition}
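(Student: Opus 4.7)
The plan is to establish each adjunction separately by exhibiting a natural bijection of hom-sets, since all four functors identify on objects with constructions that have very rigid universal behaviour. The proof essentially reduces to three routine verifications; the fourth adjunction is then obtained by duality or stated directly.

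First I would handle the middle adjunction $\discr \dashv \Ob$. Given a set $X$ and a category $\cC$, a functor $F : \discr(X) \to \cC$ is \emph{completely determined} by its action on objects: $\discr(X)$ has only identity morphisms, and any functor must send these to identities in $\cC$, which is automatic. Hence the assignment $F \mapsto \Ob(F)$ gives a bijection $\Cat(\discr(X), \cC) \cong \Set(X, \Ob(\cC))$, and naturality in both variables is immediate since precomposition with a function of sets and postcomposition with a functor both act by their underlying maps on objects.

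Next I would handle $\Ob \dashv \codiscr$ dually. A functor $G : \cC \to \codiscr(X)$ is again determined by its action on objects: between any two objects of $\codiscr(X)$ there is a \emph{unique} morphism, so the assignment of $G$ on morphisms is forced, and functoriality holds trivially for the same reason. This yields $\Cat(\cC, \codiscr(X)) \cong \Set(\Ob(\cC), X)$, natural in both arguments.

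Finally, for $\conncomp \dashv \discr$, I would first recall that $\conncomp(\cC)$ is the quotient of $\Ob(\cC)$ by the equivalence relation generated by $A \sim B$ whenever there exists a morphism $A \to B$; the projection $q_{\cC} : \Ob(\cC) \to \conncomp(\cC)$ is the unit. Given a functor $F : \cC \to \discr(X)$, every morphism in $\cC$ is sent to an identity in $\discr(X)$, so $\Ob(F)$ is constant on connected components and factors uniquely through $q_{\cC}$ as a function $\conncomp(\cC) \to X$. Conversely, any function $h : \conncomp(\cC) \to X$ defines a functor $h \circ q_{\cC} : \cC \to \discr(X)$ on objects, with morphisms forced to identities. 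This gives the bijection $\Cat(\cC, \discr(X)) \cong \Set(\conncomp(\cC), X)$, again natural. The main (minor) obstacle is checking well-definedness of $\conncomp$ as a functor and the naturality of the bijection with respect to functors $\cC \to \cC'$, which follows because functors preserve the generating relation of the equivalence. The resulting chain $\conncomp \dashv \discr \dashv \Ob \dashv \codiscr$ is exactly the adjoint quadruple claimed.
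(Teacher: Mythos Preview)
Your proof is correct. The paper does not actually provide a proof of this proposition; it is stated as a well-known fact and left without justification, so your hom-set bijection argument supplies more detail than the paper itself.
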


\begin{definition}[Strong endofunctor]
  \label{def:strong_endofunctor}
  Let $(\cC, \otimes, I)$ be a monoidal category.
  Then an endofunctor $F : \cC \to \cC$ is called \newdef{(right) strong} if it comes equipped with a natural transformation

\[\begin{tikzcddiag}
	{\cC \times \cC} & {\cC \times \cC} \\
	\cC & \cC
	\arrow["{F \times \cC}", from=1-1, to=1-2]
	\arrow["{ \otimes}", from=1-2, to=2-2]
	\arrow["F"', from=2-1, to=2-2]
	\arrow["\otimes"', from=1-1, to=2-1]
	\arrow["s"', shorten <=9pt, shorten >=9pt, Rightarrow, from=1-2, to=2-1]
\end{tikzcddiag}\]
  whose component at each $X, Y : \cC$ is explicitly the map $s_{X, Y} : F(X) \otimes Y \to F(X \otimes Y)$ making the following diagrams commute for all $X, Y, Z : \cC$:

\[\begin{tikzcddiag}[ampersand replacement=\&]
	{F(X) \otimes I} \&\&\& {(F(X) \otimes Y) \otimes Z} \& {F(X \otimes Y) \otimes Z} \& {F((X \otimes Y) \otimes Z)} \\
	\\
	{F(X \otimes I)} \&\& {F(X)} \& {F(X) \otimes (Y \otimes Z)} \&\& {F(X \otimes (Y \otimes Z))}
	\arrow["{s_{X, Y} \otimes Z}", from=1-4, to=1-5]
	\arrow["{s_{X \otimes Y, Z}}", from=1-5, to=1-6]
	\arrow["{\alpha_{F(X), Y, Z}}"', from=1-4, to=3-4]
	\arrow["{s_{X, (Y \otimes Z)}}"', from=3-4, to=3-6]
	\arrow["{F(\alpha_{X, Y, Z})}", from=1-6, to=3-6]
	\arrow["{s_{X, I}}"', from=1-1, to=3-1]
	\arrow["{F(\rho_{X})}"', from=3-1, to=3-3]
	\arrow["{\rho_{F(X)}}", from=1-1, to=3-3]
\end{tikzcddiag}\]
\end{definition}

\begin{definition}[Monoidal closed category]
  \label{def:monoidal_closed}
  Let $(\cC, \otimes, I)$ be a monoidal category.
  We call it a \emph{monoidal closed category} if, for every $C : \cC$ the functor $- \otimes C$ has a right adjoint denoted by $\internalHom{\cC}{C}{-}$ which forms the internal hom functor $\internalHom{\cC}{-}{-} : \cC^{\op} \times \cC \to \cC$ such that there is isomorphism
  \[
    \cC(X \otimes Y, Z) \cong \cC(X, \internalHom{\cC}{Y}{Z})
  \]
  natural in $X, Y, Z$.
\end{definition}

\begin{remark}
  \label{rem:eval}
Notably, in any monoidal closed category for each $X, Y: \cC$ there exists the map $\eval_{X, Y} : \internalHom{\cC}{X}{Y} \otimes X \to Y$ called the \newdef{evaluation map}.
In $\Set$, $\eval$ consumes a function $f : X \to Y$, an element $x : X$ and produces $f(x) : Y$.
\end{remark}

\begin{remark}[{The functor $L_{\epsilon, e}$ in \cite[Theorem III.2]{fong_backprop_2021} not well-defined}]
  \label{rem:backprop_not_functor}
The functor $L_{\epsilon, e} : \Para \to \Learn$ is not well-defined with respect to the morphism equivalence classes.\footnote{This counterexample was pointed out to me by Geoffrey Cruttwell.}\textsuperscript{,}\footnote{Here we are using $\Para$ as defined in \cite[Def.\ III.1]{fong_backprop_2021}, which is isomorphic to $\enrichedbasechange{\Iso}(\Para(\Smooth))$ using the definitions of this thesis.}
That is, there are parametric maps where $(P, f) \sim (P', f')$ in $\Para$, but $L_{\epsilon, e}((P, f)) \not\sim L_{\epsilon, e}((P', f'))$ in $\Learn$.
For a specific example, consider two $\R$-parametric morphisms of type $\R \to \R$:
\begin{itemize}
  \item $f(a, p) = 3p + a$
  \item $f'(a, p) = p + a$;
\end{itemize}

They are equivalent in $\Para$, as witnessed by the invertible reparameterisation $\alpha : \R \to \R \coloneqq x \mapsto 3x$.
To show that they are equivalent in $\Learn$ we need a reparameterisation $\alpha$ as before, but now additionally by \cite[p.\ 4, right column, top]{fong_backprop_2021} it is required that the following equation

\begin{equation}
  \label{eq:learner_update_equiv_rel}
U_{f'}(\alpha(p), a, b) = \alpha(U_f(p, a, b))
\end{equation}

holds for all $p, a, b : \R$. By setting $\epsilon = 1$ and $e$ to the mean squared error we can calculate
\begin{itemize}
  \item $U_f(p, a, b) = -8p - 3a + 3b$;
  \item $U_{f'}(p, a, b) = -a + b$;
\end{itemize}

Substituting $\alpha$ and the two results above in \cref{eq:learner_update_equiv_rel} yields $-24p -9a + 9b = -a + b$.
This clearly doesn't hold for all $p, a, b \in \R$.
\end{remark}

  \backmatter

	\bibliographystyle{alphaurl}
	\bibliography{references}

\end{document}